\def\fighomeCommunity{.}
\def\fighomeSaddle{.}
\def\fighomeConv{.}
\def\fighomeLT{.}
\def\fighomeBrain{.}
\def\fighomeTalk{.}
\newtheorem{theorem}{\textsc{Theorem}}[chapter]
\newtheorem{definition}{\textsc{Definition}}[chapter]
 \newtheorem{lemma}{\textsc{Lemma}}[chapter]
\newtheorem{property}{\textsc{Property}}[chapter]
\newtheorem*{remark}{Remark}
\newenvironment{myproof}{\noindent{\em Proof:} \hspace*{1em}}{
    \hspace*{\fill} $\Box$ }
\newenvironment{proof_of}[1]{\noindent {\em Proof of #1: }}{\hspace*{\fill} $\Box$ }
\newcommand{\bprf}{\begin{myproof}}
\newcommand{\eprf}{\end{myproof}}
\renewcommand{\paragraph}[1]{\textit{#1}}
\newcommand{\bp}{\begin{psfrags}}
\newcommand{\ep}{\end{psfrags}}
\newcommand{\bc}{\begin{center}}
\newcommand{\ec}{\end{center}}
\def\beq{\begin{equation}}
\def\eeq{\end{equation}\noindent}
\def\beqn{\begin{eqnarray}}
\def\eeqn{\end{eqnarray} \noindent}
\def\beqnn{\begin{eqnarray*}}
\def\eeqnn{\end{eqnarray*}  \noindent}
\def\bcase{\begin{numcases}}
\def\ecase{\end{numcases}   \noindent}
\def\nn{\nonumber}
\newcommand{\llvert}{\left\vert\vphantom{\frac{1}{1}}\right.}
\DeclareMathOperator{\diag}{diag}
\DeclareMathOperator\Diag{Diag}
\DeclareMathOperator{\degree}{Degree}
\def\Ebb{\mathbb{E}}
\newcommand\E{\mathbb{E}}
\newcommand*{\hermconj}{^{\mathsf{H}}}
\def\bfI{{\mathbf I}}
\def\Pbb{{\mathbb P}}
\def\bfU{{\mathbf U}}
\def\Rbb{\mathbb{R}}
\newcommand\R{\mathbb{R}}
\def\nx{n_{\mbox{\tiny\itshape \!X}}}
\def\viz{{viz.,\ \/}}
\def\tha{{\mbox{\tiny th}}}
\DeclareMathOperator{\poly}{poly}
\newcommand{\Conv}{\mathop{\scalebox{1.5}{\raisebox{-0.2ex}{$\ast$}}}}%
\newcommand{\Toep}{\mathsf{Cir}}
\newcommand{\CToep}{{\mathcal{F}}}
\newcommand{\CCir}{{\mathcal{F}}}
\newcommand{\DFT}{\mathsf{FFT}}
\newcommand\inner[1]{\ensuremath{\left<#1\right>}}
\def\nn{\nonumber}
\def\tcr{\textcolor{red}}
\def\tcb{\textcolor{blue}}
\definecolor{dkr}{rgb}{0.6,0.2,0.2}
\definecolor{dkg}{rgb}{0,0.5,0}
\definecolor{dkb}{rgb}{0.0,0.1,0.7}
\def\tcr{\textcolor{red}}
\def\tcb{\textcolor{blue}}
\def\tcdkb{\textcolor{dkb}}
\def\tcdkr{\textcolor{dkr}}
 \def\ourframework{{$\textsf{ConvDic}$$+$$\textsf{DeconvDec}$ }}
\def\ourdecode{$\textsf{DeconvDec}$}
\newcommand\mysvdeq{\stackrel{\mathclap{\scriptsize\mbox{svd}}}{=}}
\newcommand{\ouralgorithm}{\mathsf{CT}}
\def\bfS{{\mathbf S}}
\def\bfC{{\mathbf Y}}
\newcommand{\block}{blk}
\newcommand{\blkdiag}{Blkdiag}
\newcommand{\flatten}{unfold}
\newcommand{\opt}{opt}
\newcommand{\Fbb}{\mathbb{F}}
\newcommand{\Cum}{C_3}
\newcommand{\newCum}{M}
\newcommand{\modeA}{\mathcal{F}}
\newcommand{\modeB}{\mathcal{G}}
\newcommand{\modeC}{\mathcal{H}}
\DeclareMathOperator{\Ber}{Ber}
\DeclareMathOperator{\community}{Com}
\newcommand\Dir{\operatorname{Dir}}
\DeclareMathOperator{\Pairs}{Pairs}
\newcommand\Poi{\operatorname{Poi}}
\DeclareMathOperator{\topic}{Top}
\newcommand{\Pvalue}{\mathsf{P_{\text{val}}}}
\newcommand{\Pvaluem}{\mathsf{\mathbf{P}_{\text{val}}}}
\newcommand{\N}{\mathbb{N}}
\newcommand\supp{\operatorname{supp}}
\newcommand{\name}{{strict saddle}}
\newcommand{\bigname}{{Strict saddle}}
\newcommand{\Hess}{\nabla^2}
\DeclareMathOperator{\Tr}{Tr}
\newcommand\bigO{O}
\newcommand\tlO{\tilde{\bigO}}
\newcommand{\tlOmega}{\tilde{\Omega}}
\newcommand{\nameCQ}{ $\alpha_c$-RLICQ }
\newcommand{\BigC}{\mathfrak{S}}
\newtheorem*{claim}{Remark}
\newcommand{\Nb}{\text{nbd}}
\newcommand{\dist}{\text{dist}}
\DeclareMathOperator{\Adj}{\mathcal{N}}
\newcommand{\RG}{\text{RG}}
\newcommand{\image}{I}
\newcommand{\filter}{F}
\newcommand{\map}{M}
\newcommand{\fro}{\mathsf{F}}
\def\tree{\mathcal{T}}
\newcommand{\mypubentry}[3]{
  \begin{tabular*}{1\textwidth}{@{\extracolsep{\fill}}p{5.6 in}r}
    \textbf{#1} & \textbf{#2} \\ 
    \multicolumn{2}{@{\extracolsep{\fill}}p{.95\textwidth}}{\emph{#3}}\vspace{6pt} \\
  \end{tabular*}
}
\newcommand{\mysoftentry}[3]{
  \begin{tabular*}{1\textwidth}{@{\extracolsep{\fill}}lr}
    \textbf{#1} & \textcolor{blue}{#2} \\
    \multicolumn{2}{@{\extracolsep{\fill}}p{.95\textwidth}}
    {\emph{#3}}\vspace{-6pt} \\
  \end{tabular*}
}
\begin{document}

\preliminarypages

\chapter{Introduction}

There has been tremendous excitement about machine learning and artificial intelligence over the last few years. 
We are now able to do automated classification of images, where there are a predefined set of image categories. 
Due to the enormous amount of available labeled data, and powerful computation resources, we can train massive neural networks and obtain features for classification in domains such as image classification, speech recognition, and text understanding. 
However, all these tasks fall under what we call \emph{supervised learning}, where the training data provides label information.   
What if such labeled information about the categories is absent? 
Can we have automated discovery of the features and categories?

This problem is known as \emph{unsupervised learning}, and experts agree that it is one of the hardest problems in machine learning. Unsupervised learning is usually the foundation for the success of supervised learning in many real world problems, and it aims at summarizing key features in the data. Human beings are known to be good at unsupervised learning, as we accumulate ``general knowledge'' or ``common sense.''   But can we have ``intelligent'' machines that mimic such capabilities?

We live in a world with explosively growing data; as we receive more data, not only do we get more information but also are we confronted with more variables or ``unknowns''.
  In other words, as the data grows, the number of variables also grows, and this is known as the high-dimensional regime.
  Learning the \emph{data patterns} or the \emph{model} in high dimensions is extremely challenging due to curse of dimensionality.
However, the useful information that we need to gain an insightful understanding of the data usually hides in a low dimensional space. 
 Finding these hidden structures is computationally challenging since it is akin to finding ``a needle in a haystack".

The hidden structures in data can be efficiently expressed with the use of probabilistic latent variable models. 
The computational task of searching for hidden structures is then expressed as learning a probabilistic latent variable model. 
Once the model is learned, the hidden variables can be inferred based on the model parameters, as depicted in Figure~\ref{fig:unsupervisedlearning}.

There exit numerous popular approaches for probabilistic latent variable model learning algorithms, among which two families of approaches are particularly successful: randomized algorithms (such as MCMC) and deterministic algorithms (such as maximum likelihood based variational inference). However, randomized algorithms are typically slow due to the exponential mixing time. The deterministic maximum likelihood based estimators tend to be faster than randomized algorithms, but the likelihood function is often intractable. One solution is to substitute the likelihood objective with its approximation and search for the optima. However, local search methods are susceptible to spurious local optima as the surrogate likelihoods are usually non-convex.  

\begin{figure}[!htbp]
\includegraphics[width = \textwidth]{\fighomeTalk/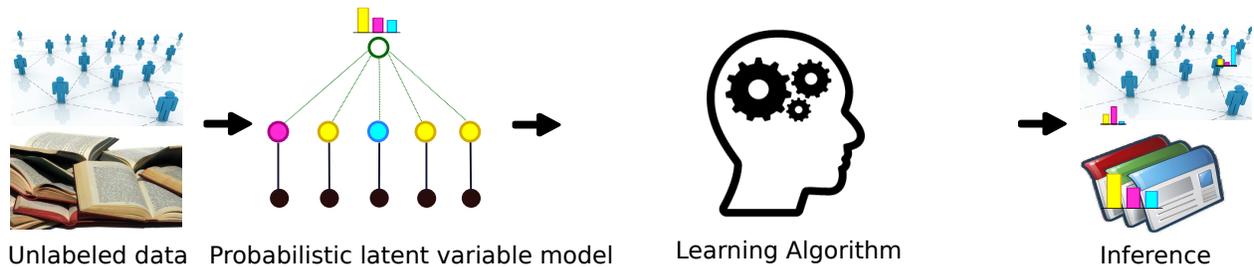}
\caption[Unsupervised learning general framework]{A general framework of unsupervised learning framework.}\label{fig:unsupervisedlearning}
\end{figure}

In this thesis, we analyze and deploy an alternative tensor decomposition framework for learning latent variable models. 
The basic paradigm of tensor decomposition framework dates back to 1894 when Pearson~\cite{pearson1894contributions} proposed the \emph{method of moments}, a classical parameter estimation technique using data statistics. 
The method of moments identifies the model whose parameters give rise to the observed aggregated statistics of the data (such as empirical moments)~\cite{anandkumar2014tensor}. 
Although matching the model parameters to the observed moments may involve solving computationally intractable systems of multivariate polynomial equations,  low-order moments (typically third or fourth order) completely characterize the distribution for many classes of latent variable models~\cite{cattell1944parallel, cardoso1991super,chang1996full,mossel2005learning,hsu2012spectral,anandkumar2012method,hsu2013learning}, and decomposition of the low-order statistics of the data (tensors)  reveals the consistent model parameters asymptotically.
Therefore, the inverse method of moments is solved efficiently with consistency guarantees (both in terms of computational and sample complexity), in contrast to the computationally prohibitive maximum likelihood estimators which require non-convex optimization and are subject to local optimality.

\section{Summary of Contributions}

\subsection{Globally Guaranteed Online Tensor Decomposition}
Learning latent variable models via method of moments involves a challenging non-convex optimization problem in the high-dimensional regime as tensor decomposition is NP-hard in general. 
We identify {\em \name} property for non-convex problem that allows for efficient optimization. 
Using this property, we show that from an {\em arbitrary} starting point,  noisy stochastic gradient descent converges to a local minimum in a polynomial number of iterations. 
To the best of our knowledge, this is the first work that gives {\em global} convergence guarantees for stochastic gradient descent on non-convex functions with exponentially many local minima and saddle points.  
Our analysis is applied to orthogonal tensor decomposition, and we propose a new optimization formulation for the tensor decomposition problem that has \name~property. 
As a result, we get the first online algorithm for orthogonal tensor decomposition with global convergence guarantee~\cite{GeHuangJinYuan:COLT15}.
By employing this algorithm, we obtain an efficient unsupervised learning algorithm for a wide class of latent variable models.

\subsection{Deployment of Scalable Tensor Decomposition Framework}
Tensor decomposition framework is tailored for automated categorization of documents (that is finding the hidden topics of articles) and prediction of social actors' common interests or communities (using the connectivity graph) in social networks efficiently, see Figure~\ref{fig:TDversatile_LDACommunity}. 
Compared to the state of the art variational inference, which optimizes the lower bound on the likelihood, our results are surprisingly accurate and much faster~\cite{huang2014distributed,huang2014online}. For instance, we implemented our tensor decomposition on spark to learn topics in the PubMed data, which consists of 8 million documents and 700 million words. Tensor method achieves much more accurate results (better likelihood) compared to variational inference although we never compute or optimize over the likelihood function. Furthermore, tensor method requires much less computation time and is at least an order of magnitude faster. 

 Another comparison is carried out on graph data to evaluate the performance of discovering hidden communities. On the Facebook friendship network, yelp bipartite review graph and DBLP co-authorship system,  tensor decomposition framework continues to be both accuracy and fast compared to the state-of-the-art variational methods~\cite{huang2014online}. 
\begin{figure}[H]
\begin{minipage}{0.5\textwidth}
\includegraphics[width=\textwidth]{\fighomeTalk/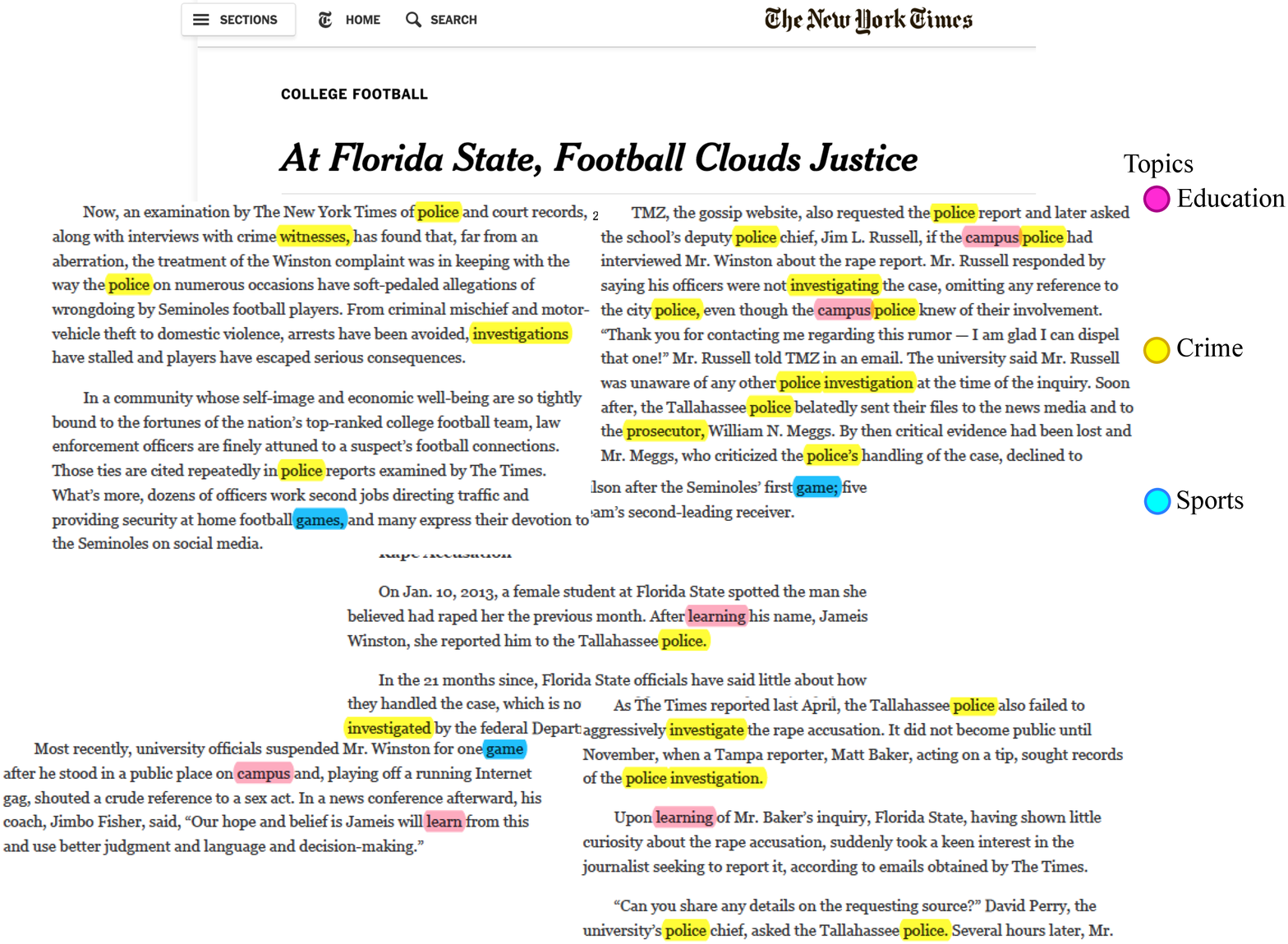}
\end{minipage}
\hfil
\begin{minipage}{0.45\textwidth}
\includegraphics[width=\textwidth]{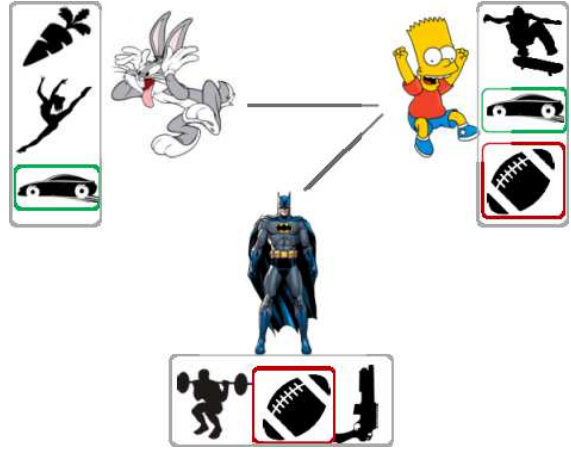}
\end{minipage}
\caption[Tensor decomposition framework is versatile]{Tensor decomposition framework is versatile. (a) Automated hidden topic discovery. (b) Scalable community membership detection via connectivity graph. }\label{fig:TDversatile_LDACommunity}
\end{figure}

\begin{figure}[H]
\begin{minipage}{0.5\textwidth}
\centering{\bp
\psfrag{Perplexity}[Bl]{\tiny{Perplexity}}
\psfrag{Tensor}[Bl]{\tiny{Tensor}}
\psfrag{Variational}[Bl]{\tiny{Variational}}
\includegraphics[width=0.9\textwidth]{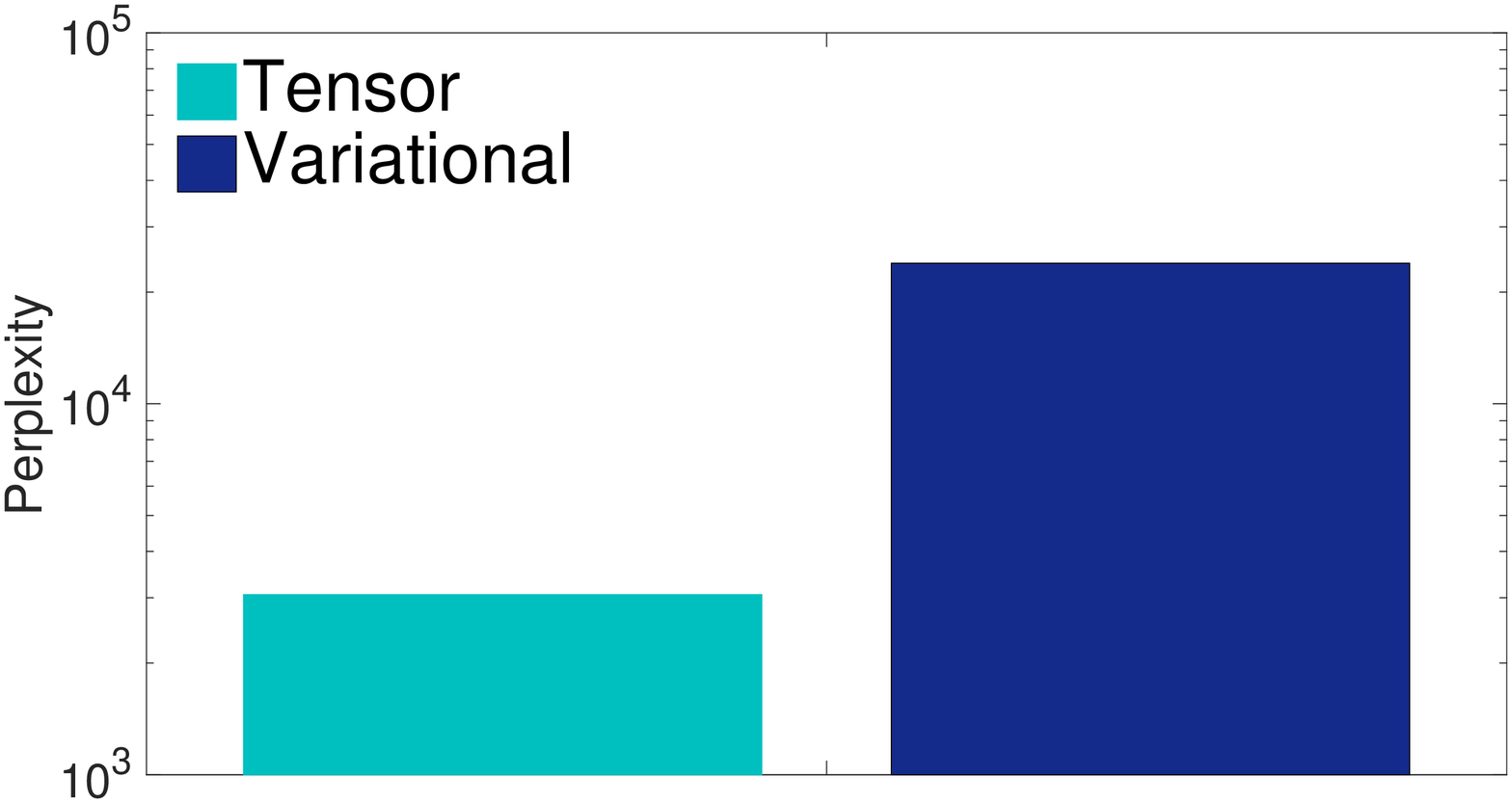}
\ep}
\end{minipage}
\hfil
\begin{minipage}{0.5\textwidth}
\centering{
\psfrag{Running Time (seconds)}[Bl]{\tiny{Running Time (s)}}
\psfrag{Tensor}[Bl]{\tiny{Tensor}}
\psfrag{Variational}[Bl]{\tiny{Variational}}
\includegraphics[width=0.9\textwidth]{\fighomeTalk/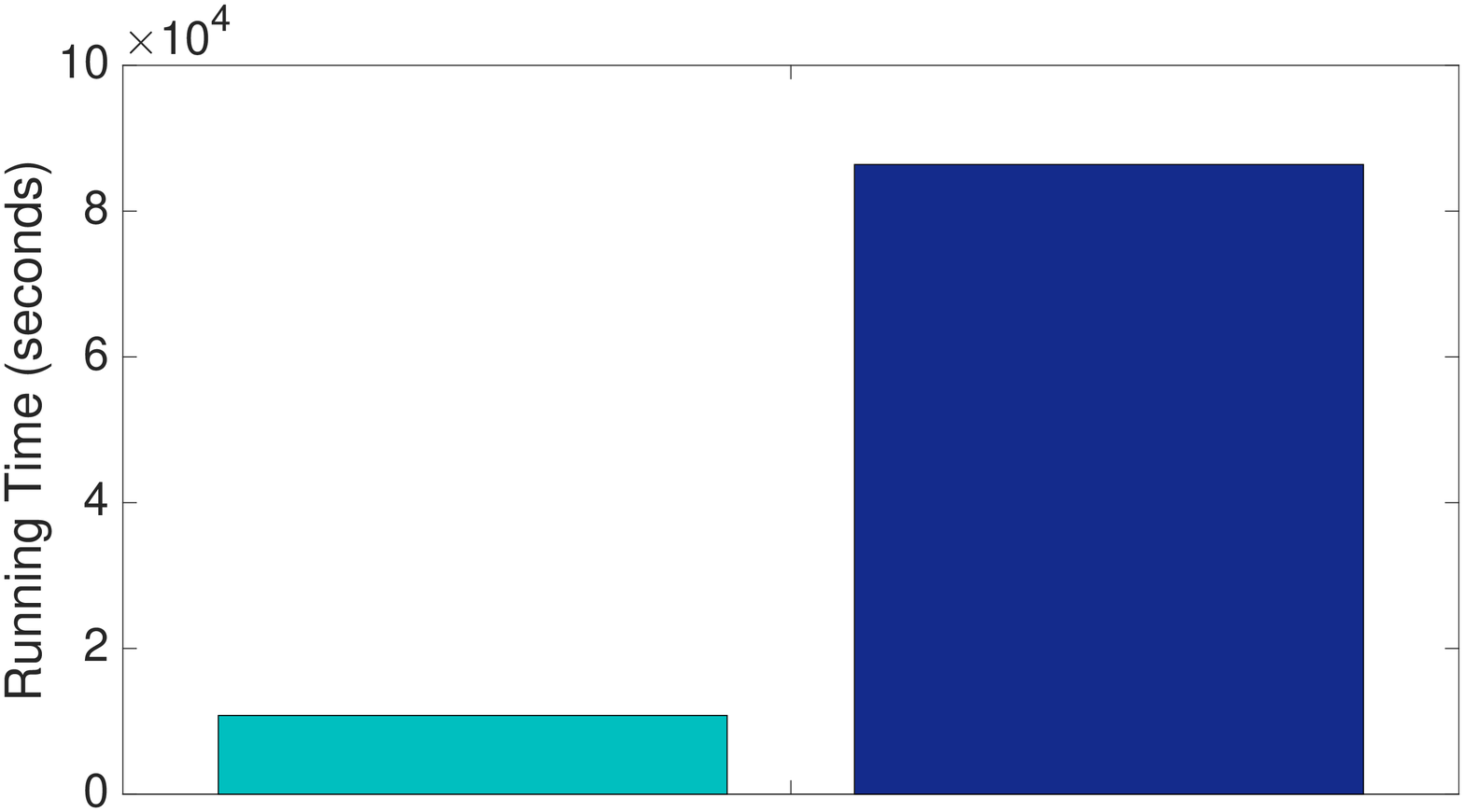}
}\end{minipage}
\caption[Tensor decomposition vs variational inference on PubMed]{Tensor decomposition framework vs variational inference on PubMed. }
\end{figure}

\begin{figure}[H]
\begin{minipage}{0.5\textwidth}
\bc\psfrag{Variational}[Bl]{\tiny Variational}
\psfrag{Tensor}[Bl]{\tiny Tensor}
\psfrag{datasets}[Bl]{\tiny  }
\psfrag{Error / group}[Bl]{\tiny Error /group}
\psfrag{FB}[Bl]{\tiny FB}
\psfrag{YP}[Bl]{\tiny  YP}
\psfrag{DBLP sub}[Bl]{\tiny  DBLPsub}
\psfrag{DBLP}[Bl]{\tiny DBLP}
\includegraphics[width=0.9\textwidth]{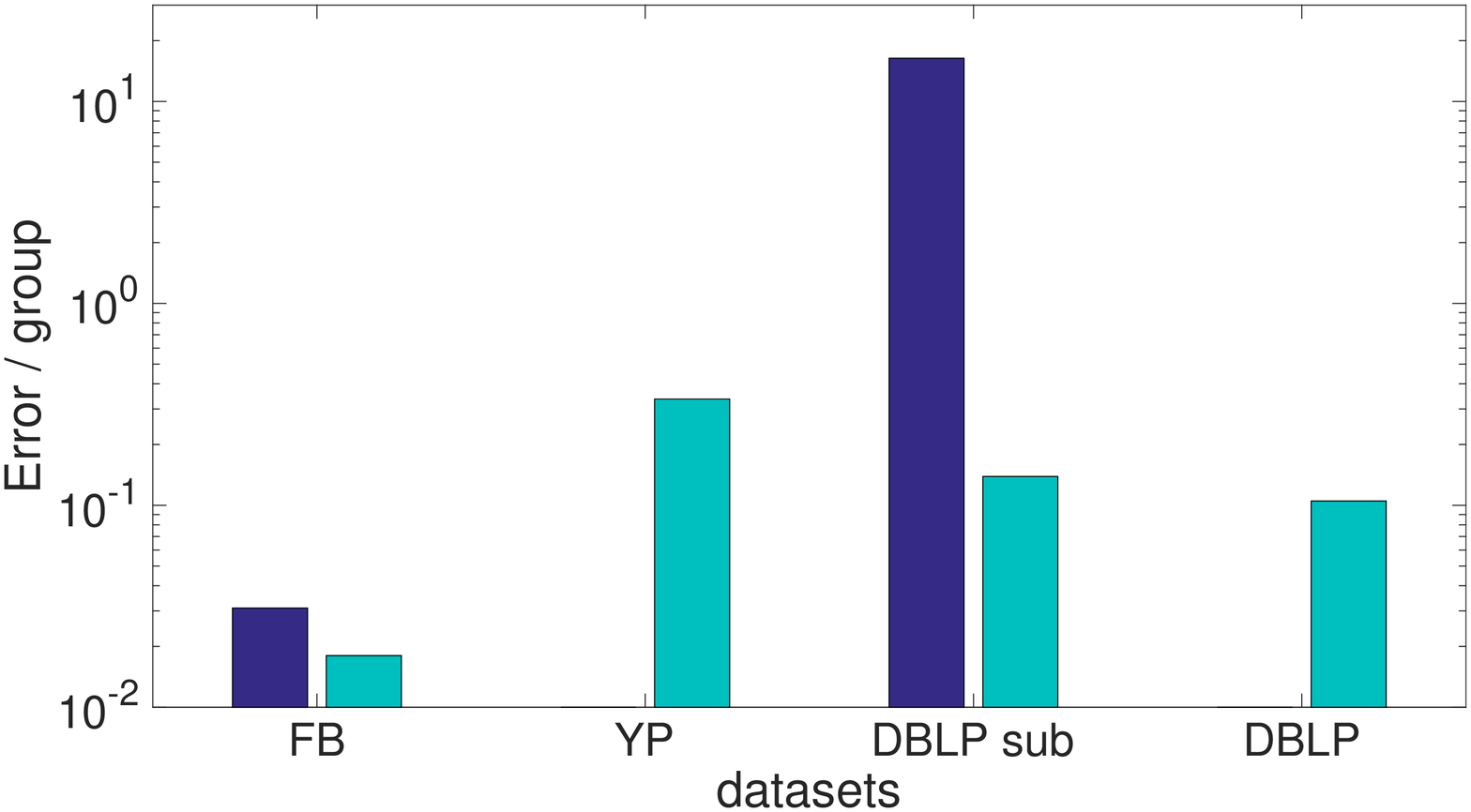}
\ec\end{minipage}
\hfil
\begin{minipage}{0.5\textwidth}
\bc\psfrag{Variational}[Bl]{\tiny Variational}
\psfrag{Tensor}[Bl]{\tiny Tensor}
\psfrag{datasets}[Bl]{\tiny  }
\psfrag{Running Times (seconds)}[Bl]{\tiny Running Times (s)}
\psfrag{FB}[Bl]{\tiny FB}
\psfrag{YP}[Bl]{\tiny  YP}
\psfrag{DBLP sub}[Bl]{\tiny  DBLPsub}
\psfrag{DBLP}[Bl]{\tiny DBLP}
\includegraphics[width=0.9\textwidth]{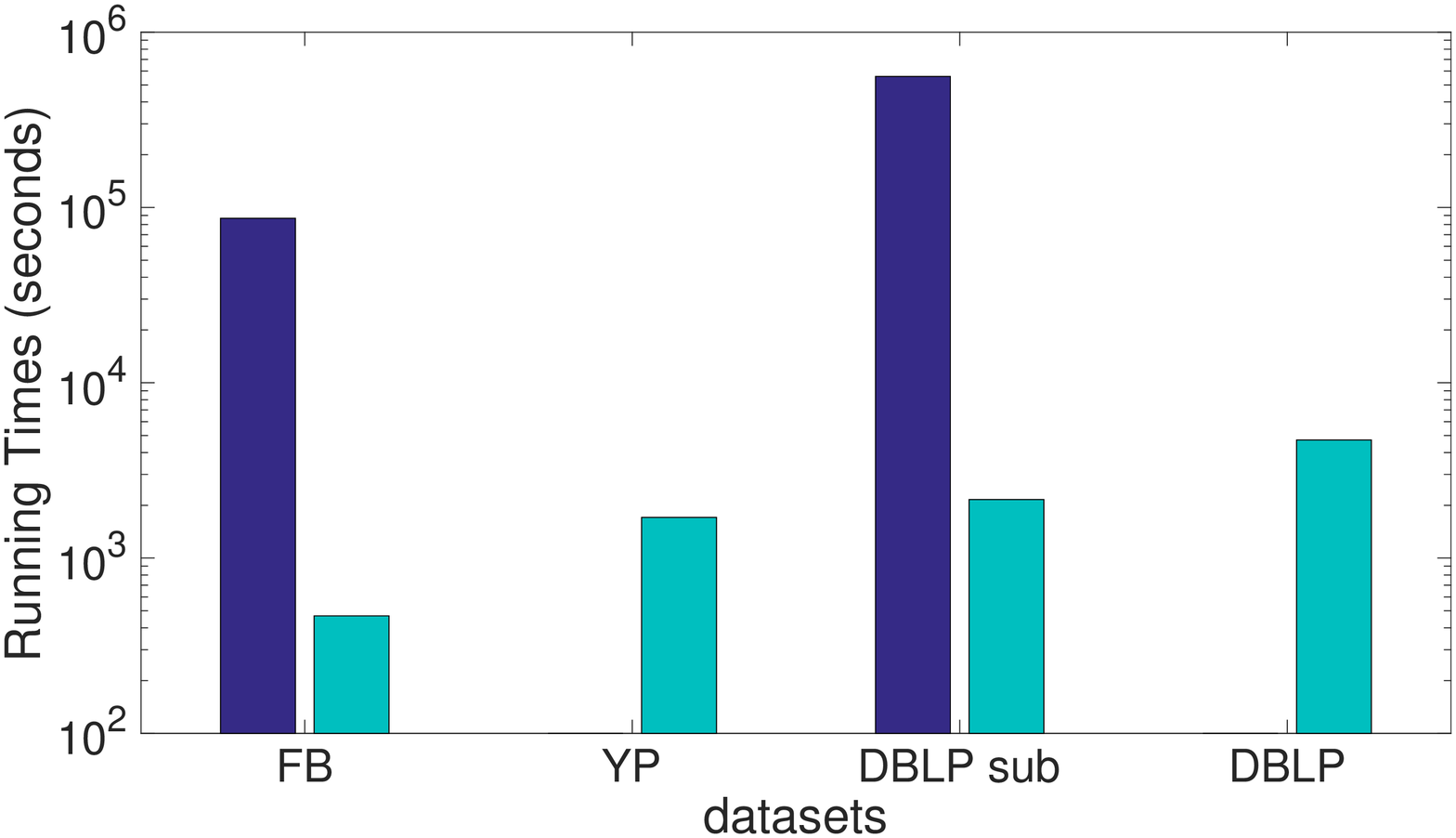}
\ec
\end{minipage}
\caption[Tensor decomposition vs variational inference on social networks]{Tensor decomposition framework vs variational inference on Facebook, Yelp and DBLP. }
\end{figure}

\subsection{Learning Invariant Models Using Convolutional Tensor Decomposition}
Tensor methods can also be extended to solving the problem of learning shift invariant dictionary elements. The data is modeled as linear combinations of filters/templates convolved with activation maps. The filters are shift invariant dictionary elements due to the convolution. 
A tensor decomposition algorithm with additional shift invariance constraints on the factors is introduced, and it converges to models with better reconstruction error and is much faster, compared to the popular alternating minimization heuristic, where the filters and activation maps are alternately updated. 

This convolutional tensor decomposition framework successfully solves challenging natural language processing tasks such as learning phrase templates and extracting word-sequence embeddings, as in Figure~\ref{fig:embedding}. 
Convolutional tensor decomposition learns a good set of filters/templates~\cite{huang15convolutional} and discriminative features (such as word-sequence embeddings) which yield successful automated understanding and classification of word-sequences. 

\begin{figure}[H]
\begin{minipage}{0.4\textwidth}
\centering{
\includegraphics[height=1.2in]{\fighomeTalk/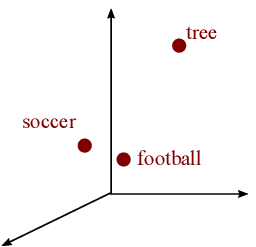}
\\ \tcdkr{Word Embedding}
}
\end{minipage}
\hfil
\begin{minipage}{0.6\textwidth}
\centering{
\includegraphics[height=1.2in]{\fighomeTalk/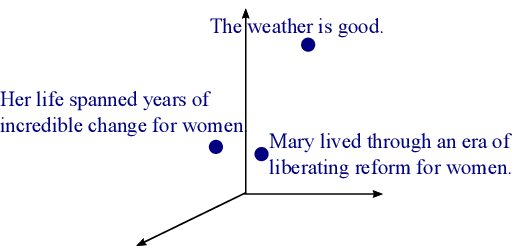}
\\ \tcdkb{Word Sequence Embedding}
}
\end{minipage}
\caption[Word embedding and sentence embedding]{Word embedding and sentence embedding. Word embeddings are vector representations of words, such that words with similar semantic meanings are closer in the vector space. Therefore, a machine can ``comprehend'' the words. Similarly, a more challenging task is to extract word sequence embeddings, where sentences or arbitrary length word-sequences that share semantic and syntactic properties are mapped to similar vector representations.}\label{fig:embedding}
\end{figure}

\subsection{Learning Latent Tree Models Using Hierarchical Tensor Decomposition}

Tensor decomposition framework is also extended to learning models with hierarchy. 
This thesis presents an integrated approach to structure and parameter estimation in latent tree models.
The proposed algorithm automatically learns the latent variables and their locations and achieves consistent structure estimation with logarithmic computational complexity. 
Meanwhile, the inverse method of moments is carried out on smartly selected local neighborhoods with linear computational complexity. 
A rigorous proof of the global consistency of the structure and parameter estimation under the ``divide-and-conquer'' framework is presented. 
The consistency guarantees apply to
a broad class of linear multivariate latent tree models including discrete distributions, continuous multivariate distributions (e.g. Gaussian), and mixed distributions such as Gaussian mixtures~\cite{huang2014scalable}.
This model class is much more general than discrete models, prevalent in most of the previous works on latent tree models~\cite{mossel2005learning,mossel2007distorted,erdos1999few,anandkumar2013learning}.

\begin{figure}[H]
\bc
\includegraphics[width=0.7\textwidth]{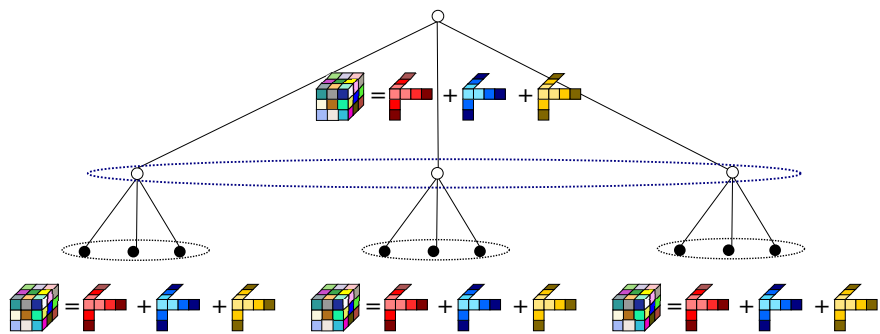}
\ec
\caption{Hierarchical tensor decomposition.}
\end{figure}

This efficient approach is shown to be useful in healthcare analytics~\cite{huang2014scalable}, where we account for the co-occurrence of diseases on individuals and learn a clinical meaningful human disease hierarchy, using big electronic hospital records which involve millions of patients, hundreds of millions diagnostic events, and tens of thousands of diseases. The learned hierarchy on human diseases is clinically meaningful and can help doctors prevent potential diseases according to partial information on patients' health condition. 

\subsection{Discovering Neuronal Cell Types Using Spectral Methods}
The above advances in unsupervised learning have rich applications in neuroscience.
Using spectral decomposition framework, we analyze challenging tasks. For instance, cataloging neuronal cell types in the brain, which has been the number one goal of the brain initiative and modern neuroscience.
It is an extremely challenging task partly due to the petabyte-scale size brain-wide single-cell resolution \emph{in situ hybridization} imagery.  Previous methods average over image intensity in local voxels for a rough estimation of gene expression levels. The success of these methods rely on a precise neuron level image alignment across different brains, which is computationally prohibitive. 

\begin{figure}[H]
\subfloat[]{\begin{minipage}{0.5\textwidth}
\includegraphics[width=\textwidth]{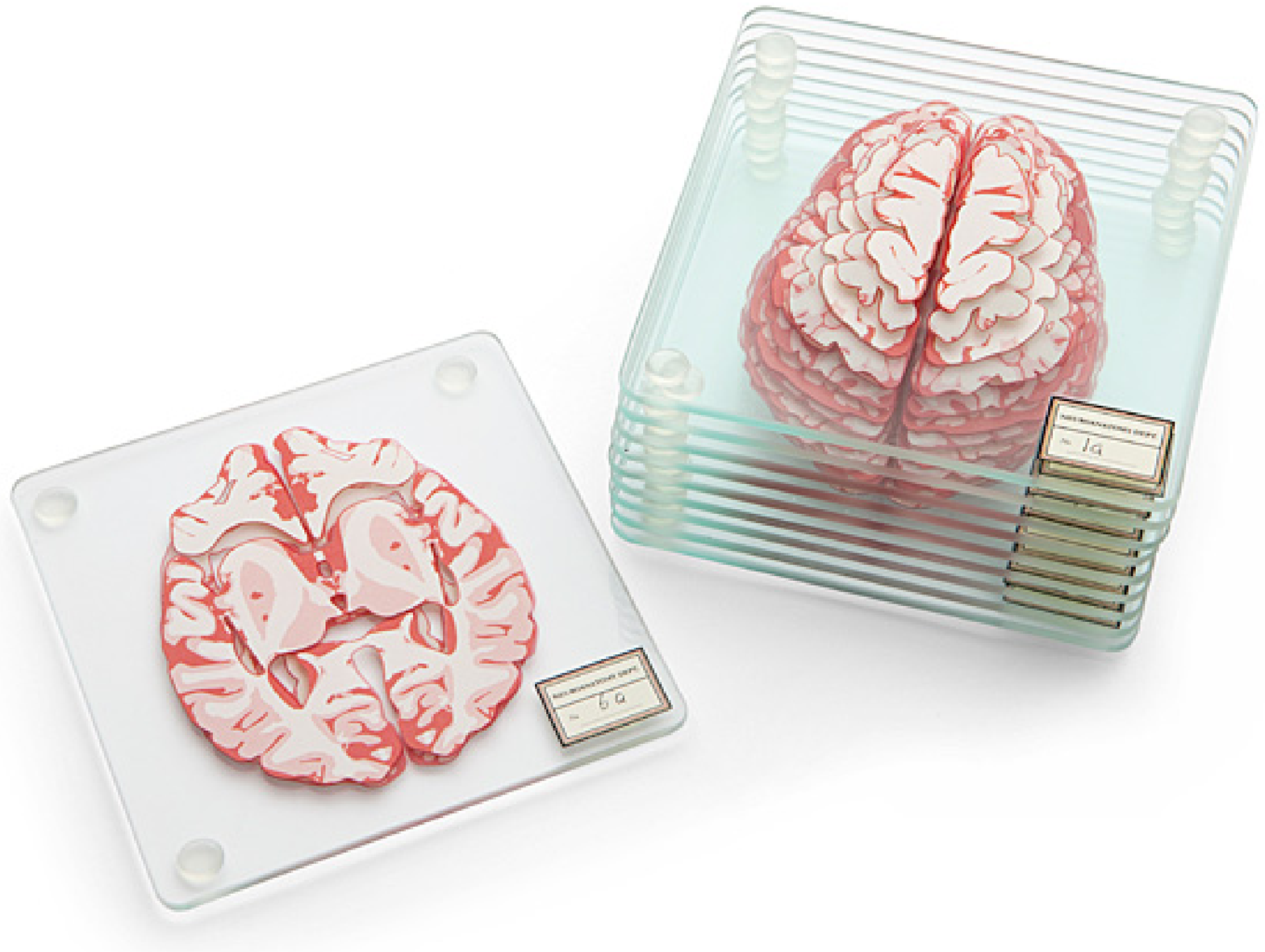}\end{minipage}}
\hfil
\subfloat[]{\begin{minipage}{0.4\textwidth}
\includegraphics[width=\textwidth]{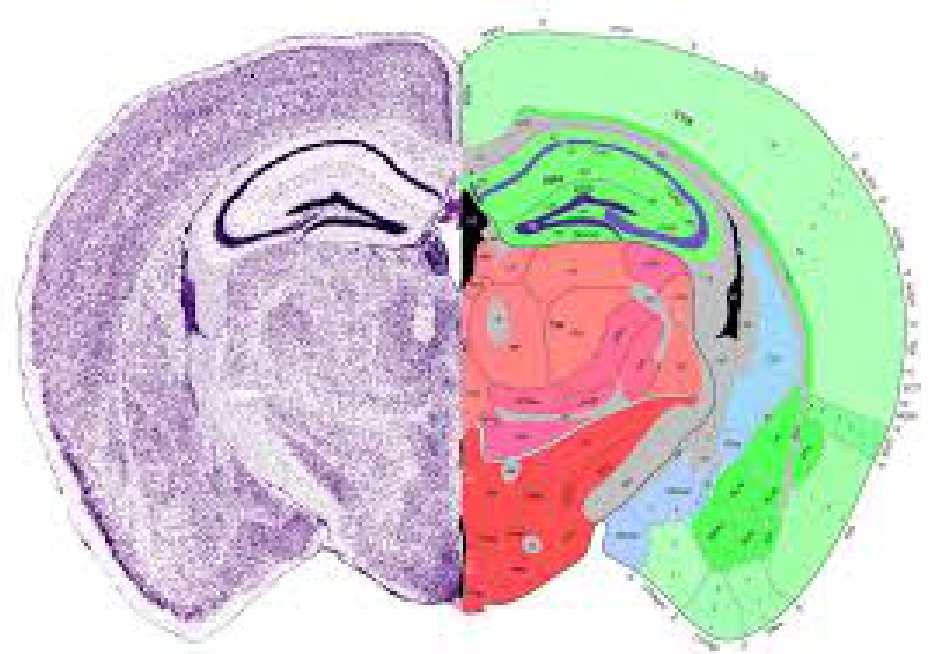}\end{minipage}}
\caption{Examples of brain slices.}
\end{figure}

In this thesis,  we resolve the above problem using a spatial point process mixture model. We measure the spatial distribution of neurons labeled in the ISH image for each gene and model it as a spatial point process mixture, whose mixture weights are given by the cell types which express that gene. By fitting a point process mixture model jointly to the ISH images, we infer both the spatial point process distribution for each cell type and their gene expression profile. We validate our predictions of cell type-specific gene expression profiles using single cell RNA sequencing data, recently published for the mouse somatosensory cortex. Jointly with the gene expression profiles, cell features such as cell size, orientation, intensity and local density level are inferred per cell type. 
Compared with the state-of-the-art approaches, our method~\cite{huang2016discovering}  yields lower/better perplexity scores. In addition, 8 cell types are detected and their cell features are estimated.

\section{Tensor Preliminaries}
\paragraph{What is a tensor? } A $p^\tha$ order tensor is a $p$-dimensional array. We will use $4^\tha$ order tensor as an example.
If $T\in \R^{d^4}$ is a $4^\tha$ order tensor, we use $T_{i_1,i_2,i_3,i_4} (i_1,...,i_4\in [d])$ to denote its $(i_1,i_2,i_3,i_4)^\tha$ entry.

Tensors can be constructed from tensor products. We use $(u\otimes v)$ to denote a $2$nd order tensor where $(u\otimes v)_{i,j} = u_iv_j$. This generalizes to higher order and we use $u^{\otimes 4}$ to denote the $4^\tha$ order tensor
$$
[u^{\otimes 4}]_{i_1,i_2,i_3,i_4} = u_{i_1}u_{i_2}u_{i_3}u_{i_4}.
$$ 
We say a $4^\tha$ order tensor $T\in \R^{d^4}$ has an {\em orthogonal decomposition} if it can be written as
\begin{equation}
T = \sum_{i=1}^d a_i^{\otimes 4}, \label{eq:orthodecomp}
\end{equation}
where $a_i$'s are orthonormal vectors that satisfy $\|a_i\| = 1$ and $a_i^T a_j = 0$ for $i\ne j$. We call the vectors $a_i$'s the components of this decomposition. Such a decomposition is unique up to permutation of $a_i$'s and sign-flips.

A tensor also defines a multilinear form (just as a matrix defines a bilinear form), for a $p^\tha$ order tensor $T\in \R^{d^p}$ and matrices $M_i\in \R^{d\times n_i}, i\in[p]$, we define
$$
[T(M_1,M_2,...,M_p)]_{i_1,i_2,...,i_p} = \sum_{j_1,j_2,...,j_p\in[d]} T_{j_1,j_2,...,j_p} \prod_{t\in[p]} M_t[j_t,i_t].
$$
That is, the result of the multilinear form $T(M_1,M_2,...,M_p)$ is another tensor in $\R^{n_1\times n_2\times \cdots \times n_p}$. We will most often use vectors or identity matrices in the multilinear form. In particular, for a $4^\tha$ order tensor $T\in \R^{d^4}$ we know $T(I,u,u,u)$ is a vector and $T(I,I,u,u)$ is a matrix. In particular, if $T$ has the orthogonal decomposition in (\ref{eq:orthodecomp}), we know $T(I,u,u,u) = \sum_{i=1}^d (u^T a_i)^3 a_i$ and $T(I,I,u,u) = \sum_{i=1}^d (u^Ta_i)^2 a_ia_i^T$.

\paragraph{Why are tensors powerful? } Let us start with the simple matrix decomposition, where the goal is to discover the orthogonal eigenvectors of a matrix. However, it is known that if the eigenvalues of the matrix are equal to each other, one can not uniquely identify the eigenvectors. For instance, an identity matrix can be decomposed as the set of basis vector $e_1$ and $e_2$, as well as $u_1$ and $u_2$, who are 45 degree rotated $e_1$ and $e_2$:  
\begin{equation*}
\left[\begin{tabular}{cc}1 & 0\\0 &1\end{tabular}\right] = \tcr{e_1e_1^\top} + \tcr{e_2 e_2^\top} =\tcb{u_1u_1^\top} + \tcb{u_2 u_2^\top}.
\end{equation*}

\begin{figure}[H]
\bc
\bp
\psfrag{e1}[Bl]{\tcr{$e_1$}}
\psfrag{e2}[Bl]{\tcr{$e_2$}}
\psfrag{u1=(r2,r1)}[Bl]{\tcb{$u_1=[\frac{\sqrt{2}}{2},\frac{-\sqrt{2}}{2}]$}}
\psfrag{u2=(r1,r2)}[Bl]{\tcb{$u_2=[\frac{\sqrt{2}}{2},\frac{\sqrt{2}}{2}]$}}
\includegraphics[width=0.3\textwidth]{\fighomeTalk/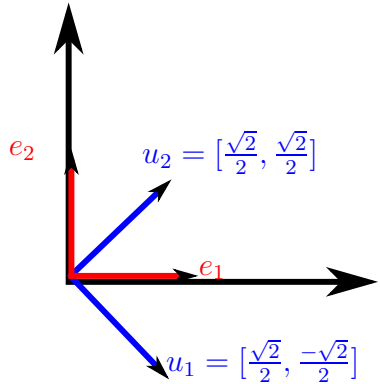}
\ep
\ec
\caption{Orthogonal matrix decomposition is not unique without eigenvalue gap.}
\end{figure}

However, in tensors, there exists a unique decomposition even without eigenvalue gap. Let a third order tensor (a cube) be decomposed as a linear combination of 2 rank-1 tensors as in red and blue, see Figure~\ref{fig:uniquetensordecomp}a. The eigenvectors of the tensor are this red vector and this blue vector who are orthogonal to each other, and the eigenvalues of the tensor are equal. Consider taking a slice of the tensor, which yields matrix. This matrix shares the same eigenvectors with the tensor, but the eigenvalues of this matrix will be different depending on the direction of the slice. Therefore, the slice of tensor has eigenvalue gap. And thus we are able to identify the eigenvectors for the tensor uniquely.  Since higher order tensors have additional dimensions and contains more information, it is more powerful than second-order matrices. 
\begin{figure}[H]
\subfloat[]{
\begin{minipage}{0.3\textwidth}
\fbox{
\includegraphics[width=\textwidth]{\fighomeTalk/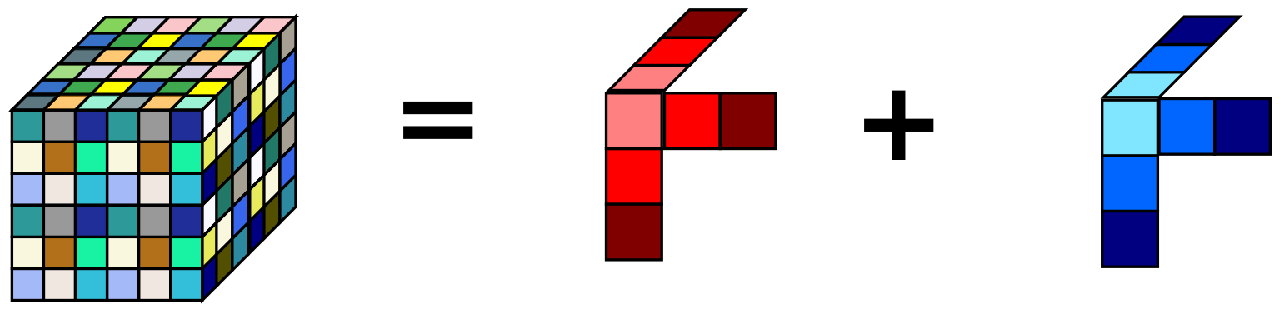}}
\end{minipage}}
\hfil
\subfloat[]{
\begin{minipage}{0.3\textwidth}
\fbox{
\includegraphics[width=\textwidth]{\fighomeTalk/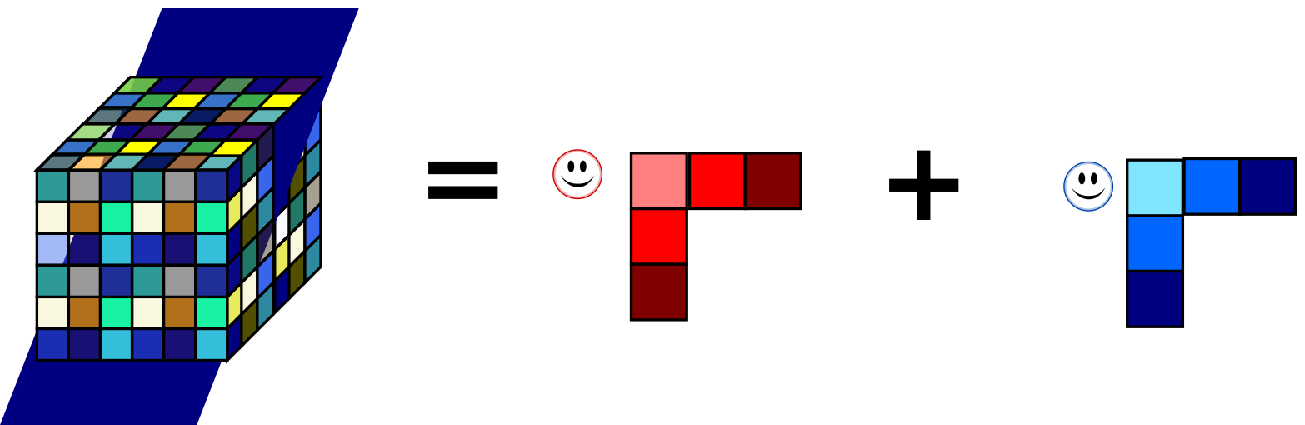}}
\end{minipage}}
\hfil
\subfloat[]{
\begin{minipage}{0.3\textwidth}
\fbox{
\includegraphics[width=\textwidth]{\fighomeTalk/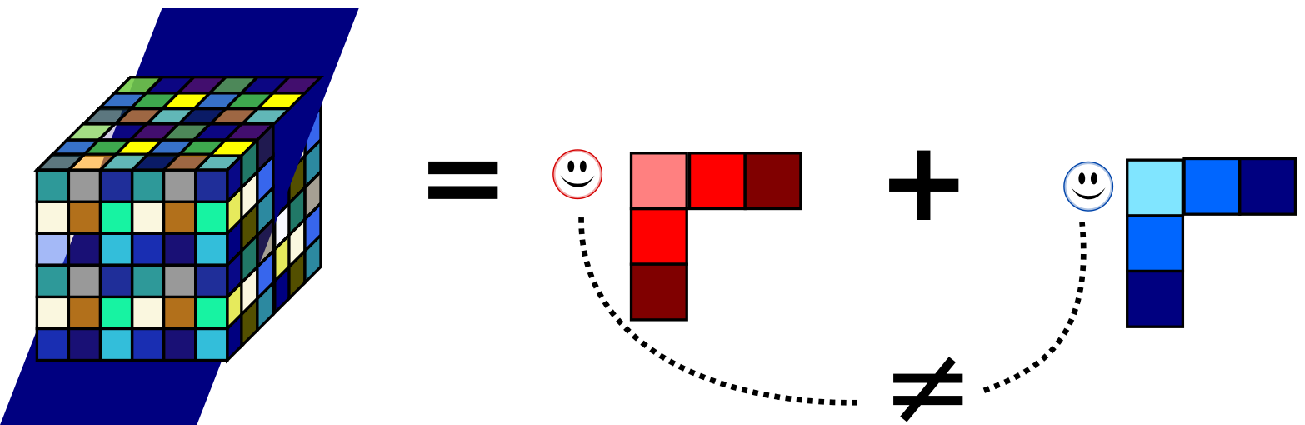}}
\end{minipage}}
\caption[Orthogonal tensor decomposition is unique with or without eigenvalue gap.]{Orthogonal tensor decomposition is unique with or without eigenvalue gap. (a) A third order tensor equals to a linear combination of rank 1 tensors, where each rank 1 tensor is a third order tensor product of the tensor's eigenvector.  (b) A slice of the tensor results in a matrix. The matrix shares the same set of eigenvectors with the original tensor, with a different scaling factor, i.e., different eigenvalues. (c) Tensor eigenvectors are uniquely identified when there is a eigenvalue gap in the slice.}\label{fig:uniquetensordecomp}
\end{figure}

\paragraph{Orthogonal tensor decomposition } Given a tensor $T$ with an orthogonal decomposition, the orthogonal tensor decomposition problem asks to find the individual components $a_1,...,a_d$. This is a central problem in learning many latent variable models, including Hidden Markov Model, multi-view models, topic models, mixture of Gaussians and Independent Component Analysis (ICA). See the discussion and citations in \cite{JMLR:v15:anandkumar14b}. Orthogonal tensor decomposition problem can be solved by many algorithms even when the input is a noisy estimation $\tilde{T} \approx T$ ~\cite{harshman1970foundations,kolda2001orthogonal,JMLR:v15:anandkumar14b}. 
 In practice this approach has been successfully applied to ICA~\cite{comon2002tensor}, topic models~\cite{zou2013contrastive} and community detection~\cite{huang2013fast}.

 \section{Background and Related Works}
\subsection{Online Stochastic Gradient for Tensor Decomposition}

Stochastic gradient descent is one of the basic algorithms in optimization. It is often used to solve the following stochastic optimization problem
\begin{equation}
w = \arg\min_{w\in \R^d} f(w), \textrm{~where~} f(w) = \E_{x\sim \mathcal{D}}[\phi(w,x)]
\label{eq:opt}
\end{equation}
Here $x$ is a data point that comes from some unknown distribution $\mathcal{D}$, and $\phi$ is a {\em loss function} that is defined for a pair $(x,w)$ of sample and parameters. We hope to minimize the expected loss $\E[\phi(w,x)]$.

When the function $f(w)$ is convex, convergence of stochastic gradient descent is well-understood \\ \cite{shalev2009stochastic, ICML2012Rakhlin_261}. However, the stochastic gradient descent is not only limited to convex functions. Especially, in the context of neural networks, the stochastic gradient descent is known as the ``backpropagation'' algorithm~\cite{rumelhart1988learning}, and has been the main algorithm that underlies the success of deep learning~\cite{bengio2009learning}. However, the guarantees in the convex setting do not transfer to the non-convex settings.

Optimizing a non-convex function is NP-hard in general. The difficulty comes from two aspects. First, the function may have many local minima, and it can be hard to find the best one (global minimum) among them. Second, even finding a local minimum can be hard as there can be many saddle points which have $0$-gradient but are not local minima\footnote{See Section~\ref{sec:sgd} for the definition of saddle points.}. In the most general case, there is no known algorithm that guarantees to find a local minimum in a polynomial number of steps. The discrete analog (finding a local minimum in domains like $\{0,1\}^n$) has been studied in complexity theory and is PLS-complete~\cite{johnson1988easy}.

In many cases, especially in those related to deep neural networks~\cite{dauphin2014identifying}\\ \cite{choromanska2014loss}, the main bottleneck in optimization is not due to local minima, but the existence of many saddle points. Gradient-based algorithms are in particular susceptible to saddle point problems as they only rely on the gradient information. The saddle point problem is alleviated for second-order methods that also rely on the Hessian information~\cite{dauphin2014identifying}.

However, using Hessian information usually increases the memory requirement and computation time per iteration. As a result,  many applications still use stochastic gradient and empirically get reasonable results. In this paper we investigate why stochastic gradient methods can be effective even in presence of saddle point, in particular, we answer the following question:

\medskip
{\noindent \textbf{Question:}} Given a non-convex function $f$ with many saddle points, what properties of $f$ will guarantee stochastic gradient descent to converge to a local minimum efficiently?
\medskip

We identify a property of non-convex functions which we call {\em \name}. Intuitively, it guarantees local progress if we have access to the Hessian information. Surprisingly we show that, with only first order (gradient) information, the stochastic gradient escape from the saddle points efficiently. 
We provide a framework for analyzing stochastic gradient in both unconstrained and equality-constrained case using this property.

We apply our framework to {\em orthogonal tensor decomposition}, which is a core problem in learning many latent variable models. 
The tensor decomposition problem is inherently susceptible to the saddle point issues, as the problem asks to find $d$ different components and any permutation of the true components yields a valid solution. Such symmetry creates exponentially many local minima and saddle points in the optimization problem.  Using our new analysis of stochastic gradient, we give the first online algorithm for orthogonal tensor decomposition with global convergence guarantee. This is a key step towards making tensor decomposition algorithms more scalable.

\paragraph{Relaxed notions of convexity} In optimization theory and economics, there are extensive works on understanding functions that behave similarly to convex functions (and in particular can be optimized efficiently). Such notions involve pseudo-convexity~\cite{mangasarian1965pseudo}, quasi-convexity~\cite{kiwiel2001convergence}, invexity\cite{hanson1999invexity} and their variants. More recently there are also works that consider classes that admit more efficient optimization procedures like RSC (restricted strong convexity)~\cite{agarwal2010fast}. Although these classes involve functions that are non-convex, the function (or at least the function restricted to the region of analysis) still has a unique stationary point that is the desired local/global minimum. Therefore, these works cannot be used to prove global convergence for problems like tensor decomposition, where there are exponentially many local minima and saddle points by the symmetry of the problem.

\paragraph{Second-order algorithms} The most popular second-order method is the Newton's method. Although Newton's method converges fast near a local minimum, its global convergence properties are less understood in the more general case. For non-convex functions,~\cite{frieze1996learning} gave a concrete example where second-order method converges to the desired local minimum in a polynomial number of steps (interestingly the function of interest is trying to find one component in a $4^\tha$ order orthogonal tensor, which is a simpler case of our application). As Newton's method often converges also to saddle points, to avoid this behavior, different trusted-region algorithms are applied~\cite{dauphin2014identifying}.

\paragraph{Stochastic gradient and symmetry} The tensor decomposition problem we consider in this paper has the following symmetry: the solution is a set of $d$ vectors $v_1,...,v_d$. If $(v_1,v_2,...,v_d)$ is a solution, then for any permutation $\pi$ and any sign flips $\kappa \in \{\pm 1\}^d$, $(.., \kappa_i v_{\pi(i)}, ...)$ is also a valid solution. In general, symmetry is known to generate saddle points, and variants of gradient descent often perform reasonably in these cases (see~\cite{saad1995line},~\cite{rattray1998natural},~\cite{inoue2003line}). The settings in these work are different from ours, and none of them give bounds on number of steps required for convergence.

Many other problems have the same symmetric structure as the tensor decomposition problem, including the sparse coding problem~\cite{olshausen1997sparse} and many deep learning applications~\cite{bengio2009learning}. In these problems, the goal is to learn multiple ``features'' where the solution is invariant under permutation. Note that there are many recent papers on iterative/gradient-based algorithms for problems related to matrix factorization~\cite{jain2013low,saxe2013exact}. These problems
often have very different symmetry, as if $Y = AX$ then for any invertible matrix $R$ we know $Y = (AR)(R^{-1} X)$. In this case, all the equivalent solutions are in a connected low dimensional manifold, and there need not be saddle points between them.

 \subsection{Applying Online Tensor Methods for Learning Latent Variable Models}

The spectral or moment-based approach involves decomposition of certain empirical moment tensors, estimated from observed data to obtain the parameters of the proposed probabilistic model. Unsupervised learning for a wide range of latent variable models can be carried out efficiently via tensor-based techniques with low sample and computational complexities~\cite{AGHKT12}. In contrast, usual methods employed in practice such as expectation maximization (EM) and variational Bayes do not have such consistency guarantees. While the previous works~\cite{AnandkumarEtal:community12COLT} focused on theoretical guarantees, in chapter~\ref{chapter:online} of this thesis, we focus on the implementation of the tensor methods, study its performance on several datasets.

We introduce an online tensor decomposition based approach for two latent variable modeling problems namely, (1) community detection,  in which we learn the latent communities that the social actors in social networks belong to, and (2) topic modeling, in which we infer hidden topics of text articles. We consider decomposition of moment tensors using stochastic gradient descent. We conduct   optimization of multilinear operations in SGD and avoid directly forming the tensors, to save computational and storage costs. 
We present optimized algorithm in two platforms.  Our GPU-based implementation exploits the parallelism of SIMD architectures to allow for maximum speed-up by a careful optimization of storage and data transfer, whereas our CPU-based implementation uses efficient sparse matrix computations and is suitable for large sparse data sets. For the community detection problem, we demonstrate accuracy and computational efficiency on Facebook, Yelp, and DBLP data sets, and for the topic modeling problem, we also demonstrate good performance on the New York Times data set. 
We compare our results to the state-of-the-art algorithms such as the variational method and report a gain of accuracy and a gain of several orders of magnitude in the execution time.

Chapter~\ref{chapter:online} builds on the recent works of Anandkumar et al~\cite{AGHKT12,AnandkumarEtal:community12COLT} which establishes the correctness of tensor-based approaches for learning MMSB~\cite{ABFX08} models and other latent variable models.
While, the earlier works provided a theoretical analysis of the method, the current paper considers a careful implementation of the method. Moreover, there are a number of algorithmic improvements in this thesis. For instance, while \cite{AGHKT12,AnandkumarEtal:community12COLT} consider tensor power iterations, based on batch data and deflations performed serially, here, we adopt a  stochastic gradient descent approach for tensor decomposition, which provides the flexibility to trade-off  sub-sampling with accuracy. Moreover, we use randomized methods for dimensionality reduction in the preprocessing stage of our method which enables us to scale our method to graphs with millions of nodes.

There are other known methods for learning the stochastic block model based on techniques such as spectral clustering~\cite{McSherry01} and convex optimization~\cite{chen2012clustering}. However, these methods are not applicable for learning overlapping communities.
We note that learning the mixed membership model can be reduced to a matrix factorization problem~\cite{Zhang:2012:OCD:2339530.2339629}. While collaborative filtering techniques  such as~\cite{mnih2007probabilistic,salakhutdinov2008bayesian}
focus on matrix factorization and the prediction accuracy of recommendations on an unseen test set, we recover the underlying latent communities, which helps with the interpretability, and the statistical model can be employed for other tasks.

Although there have been other fast implementations for community detection before~\cite{soman2011fast,lancichinetti2009community}, these methods are not statistical and do not yield descriptive statistics such as bridging nodes~\cite{nepusz2008fuzzy}, and cannot perform predictive tasks such as link classification which are the main strengths of the MMSB model. With the implementation of our tensor-based approach, we record huge speed-ups compared to existing approaches for learning the MMSB model.

To the best of our knowledge, while stochastic methods for matrix decomposition have been considered earlier~\cite{oja1985stochastic,6483308}, this is the first work incorporating stochastic optimization for tensor decomposition, and paves the way for further investigation on many theoretical and practical issues.
We also note that we never explicitly form or store the subgraph count tensor, of size $O(n^3)$ where $n$ is the number of nodes, in our implementation, but directly manipulate the neighborhood vectors to obtain tensor decompositions through stochastic updates. This is a crucial departure from other works on tensor decompositions on GPUs~\cite{ballard2011efficiently,schatz2013exploiting}, where the tensor needs to be stored and manipulated directly.
 
 \subsection{Dictionary Learning through Convolutional Tensor  Decomposition}
 Feature or representation learning forms a cornerstone of modern machine learning. Representing the data in the relevant feature space is critical to obtaining good performance in challenging machine learning tasks in speech, computer vision and natural language processing.  A popular representation learning framework is based on dictionary learning. Here, the input data is modeled as a linear combination of dictionary elements. However, this model fails to incorporate natural domain-specific invariances such as shift invariance and results in highly redundant dictionary elements, which makes inference in these models expensive.

These shortcomings can be remedied by incorporating invariances   into the dictionary model, and  such models are known as convolutional models.  Convolutional models are ubiquitous in machine learning for image, speech and sentence representations~\cite{zeiler2010deconvolutional,kavukcuoglu2010learning,bristow2013fast},
and in neuroscience for modeling neural spike trains~\cite{olshausen2002sparse,ekanadham2011blind}. Deep convolutional neural networks are a multi-layer extension of these models with non-linear activations. Such models have revolutionized performance in image, speech and natural language processing~\cite{zeiler2010deconvolutional,kalchbrenner2014convolutional}. 
The convolutional dictionary learning model posits that the input signal  $x$ is generated as   a linear combination of convolutions of unknown dictionary elements or {\em  filters} $f_1^*, \ldots f_L^*$ and unknown  {\em activation maps} $w_1^*, \ldots w_L^*$:
\beq\label{eqn:sparsedef} x = \sum\limits_{i\in [L]}  f_i^* \Conv w_i^*,\eeq where $[L]:=1,\ldots, L$. The vector $w_i^*$ denotes the activations at locations, where the corresponding filter $f_i^*$ is active.

In order to learn the model in \eqref{eqn:sparsedef}, usually a square loss reconstruction criterion is employed: \beq\label{eqn:alt-min}\min_{f_i,w_i: \|f_i\|=1} \|x - \sum\limits_{i\in [L]}  f_i \Conv w_i\|^2 .\eeq The constraints $(\|f_i\|=1)$  are enforced, since otherwise, the scaling can be exchanged between the filters $f_i$ and the activation maps $w_i$. Also,  an additional regularization term (for example an $\ell_1$ term on the $w_i'$s) is usually added to the above objective to promote sparsity on $w_i$.

A popular heuristic for solving \eqref{eqn:alt-min}   is based on  alternating minimization~\cite{bristow2014optimization}, where the filters $f_i$ are optimized, while keeping the activations $w_i$ fixed, and vice versa.  Each alternating update can be solved efficiently (since it is linear in each of the variables). However,  the method is computationally expensive in the large sample setting since each iteration requires a pass over all the samples, and in modern machine learning applications, the number of samples can run into billions. Moreover,  alternating minimization has multiple spurious local optima, and reaching the global optimum of~\eqref{eqn:alt-min} is NP-hard in general. This problem is severely amplified in the convolutional setting due to additional symmetries, compared to the usual dictionary learning setting (without the convolutional operation). Due to shift invariance of the convolutional operator, shifting a filter $f_i$ by some amount, and applying a corresponding negative shift on the activation $w_i$ leaves the objective in \eqref{eqn:alt-min} unchanged. Can we design alternative methods for convolutional dictionary learning that are scalable to huge datasets?

The special case of \eqref{eqn:sparsedef} with one filter $(L=1)$ is a well studied problem, and is referred to as {\em blind deconvolution}~\cite{hyvarinen2004independent}.  In general, this problem is not identifiable, i.e. multiple equivalent solutions can exist~\cite{choudhary2014sparse}. It has been documented that in many cases alternating minimization produces trivial solutions, where the filter $f=x$ is the signal itself and the activation is the identity function~\cite{levin2009understanding}. Therefore, alternative techniques have been proposed, such as
convex programs, based on nuclear norm minimization~\cite{ahmed2014blind} and  imposing hierarchical Bayesian priors for activation maps~\cite{wipf2013revisiting}. 
However, there is no analysis for settings with more than one filter.  Incorporating Bayesian priors has shown to reduce the number of local optima, but not eliminate them~\cite{wipf2013revisiting,krishnan2013blind}. Moreover, Bayesian techniques are in general more expensive than alternating minimization.

The extension of blind deconvolution to multiple filters is known as convolutive blind source separation or convolutive independent component analysis (ICA)~\cite{hyvarinen2004independent}. Previous methods directly reformulate convolutive ICA as an ICA model, without incorporating the shift constraints. Moreover, reformulation leads to an increase in the number of hidden sources from $L$ to $nL$ in the new model, where $n$ is the input dimension, which is harder to separate and computationally more expensive. Other methods are based on performing ICA in the Fourier domain, but the downside is that the new mixing matrix depends on the angular frequency, and leads to permutation and sign indeterminacies of the sources across frequencies.
Complicated interpolation methods~\cite{hyvarinen2004independent} overcome these indeterminacies. In contrast, our method avoids all these issues. We do not perform Fourier transform on the input. Instead, we employ FFTs at different iterations of our method to estimate the filters efficiently.

The dictionary learning problem without convolution has received much attention. Recent results show that simple iterative methods can learn the globally optimal solution~\cite{AnandkumarEtal:COLT14,arora2013new}. Also, tensor decomposition methods provably learn the model, when the activations are independently drawn (the ICA model)~\cite{anandkumar2014tensor} or are sparse (the sparse coding model)~\cite{anandkumar2014provable}. 
In this work, we extend the tensor decomposition methods to efficiently incorporate the shift invariance constraints imposed by the convolution operator. This framework is applied to word-sequence embedding learning in natural language processing.

We have recently witnessed the tremendous success of word embeddings or word vector representations in natural language processing. This involves mapping words to vector representations such that words which share similar semantic or syntactic meanings are close to one another in the vector space~\cite{bengio2006neural,collobert2008unified,collobert2011natural,mikolov2013efficient,pennington2014glove}. 
Word embeddings have attained state-of-the-art performance in tasks such as part-of-speech (POS) tagging, chunking, named entity recognition (NER), and semantic role labeling.
Despite this impressive performance,   word embeddings do not suffice for more advanced tasks which require context-aware information or word orders, e.g. paraphrase detection, sentiment analysis, plagiarism detection, information retrieval and machine translation.  Therefore, extracting word-sequence vector representations is crucial for expanding the realm of automated text understanding.

Previous works on word-sequence embeddings are based on a variety of mechanisms. A popular method is to learn the composition operators in sequences~\cite{mitchell2010composition,yu2015learning}. The complexity of the compositionality  varies widely: from simple operations such as addition~\cite{mitchell2010composition,yu2015learning} to complicated recursive neural networks~\cite{socher2011parsing,socher2013recursive,belanger2015linear}, convolutional neural networks~\cite{kalchbrenner2014convolutional,kalchbrenner2014convolutional},  long short-term memory (LSTM) recurrent neural networks~\cite{tai2015improved}, or combinations of these architectures~\cite{wieting2015towards}.  All  these methods produce sentence representations that depend on  a supervised task, and the class labels are back-propagated to update the composition weights~\cite{DBLP:conf/acl/KalchbrennerGB14}.

Since the above methods rely heavily on the downstream task and the domain of the training samples, they can hardly be used as universal embeddings across domains,  and require intensive pre-training and hyper-parameter tuning.  The state-of-the-art unsupervised framework is  Skip-thought~\cite{kiros2015skip}, based on an objective function that abstracts the skip-gram model to the sentence level, and encodes a sentence to predict the sentences around it.  However, the skip-thought model requires a large corpus of contiguous text, such as the book corpus with more than 74 million sentences. 
Can we instead efficiently learn sentence embeddings using small amounts of samples without supervision/labels or annotated features(such as parse trees)? Also, can the sentence embeddings be context-aware, can handle variable lengths, and is not limited to specific domains?

We propose an unsupervised \ourframework  framework that satisfies all the above constraints. It is composed of two phases, a \emph{comprehension phase} which summarizes template phrases using \emph{convolutional dictionary} elements, followed by a \emph{feature-extraction phase} which extracts   activations  using \emph{deconvolutional decoding}. We propose a novel learning algorithm for the comprehension phase based on convolutional tensor decomposition.  Note that in the \emph{comprehension phase}, phrase templates are learned over fixed length small patches (patch length is equal to phrase template length), whereas entire word-sequence is decoded to get the final word-sequence embedding in the \emph{feature-extraction phase}.
 
We employ our sentence embeddings in the tasks of sentiment classification, semantic textual similarity estimation, and paraphrase detection over eight datasets from various domains. These are challenging tasks since they require a contextual understanding of text relationships rather than bags of words. We learn the embeddings from scratch without using any auxiliary information. While previous works use information such as parse trees, Wordnet or pre-train on a much larger corpus, we train from scratch on small amounts of text and obtain competitive results, which are close or even better than the state-of-the-art. 

This is due to the combination of efficient modeling and learning approaches in our work. The convolutional model incorporates word orders and phrase representations, and our tensor decomposition algorithm can efficiently learn a set of parameters (phrase templates) for the convolutional model.

 \subsection[Latent Tree Model Learning via Hierarchical Tensor Decomposition]{Latent Tree Model Learning through Hierarchical Tensor Decomposition}
Latent variable graphical models span flat models and hierarchical models, see Figure~\ref{fig:flathierarchical} for a flat multi-view model and a hierarchical model.   Latent tree graphical models are a popular class of latent variable models, where a probability distribution involving observed and hidden variables are Markovian on a tree.
Due to the fact that structure of (observable and hidden) variable interactions are approximated as a tree, inference on latent trees can be carried out exactly through a simple belief propagation~\cite{pearl1988probabilistic}.
Therefore, latent tree graphical models present a good trade-off between model accuracy and computational complexity.
They are applicable in many domains, where it is natural to expect hierarchical or
sequential relationships among the variables (through a hidden-Markov model).
For instance,  latent tree models have been employed for phylogenetic reconstruction~\cite{Durbin:book}, object recognition~\cite{choi2012context}, ~\cite{choi2012context2} and human pose estimation~\cite{wang2013beyond}.

\begin{figure}[!htbp]
\bc
\subfloat[Multi-view]{
\begin{minipage}{0.3\textwidth}
\psfrag{h}[]{}\psfrag{x1}[]{}\psfrag{x2}[]{}\psfrag{x3}[]{}\psfrag{x4}[]{}\psfrag{x5}[]{}
\includegraphics[width=\textwidth]{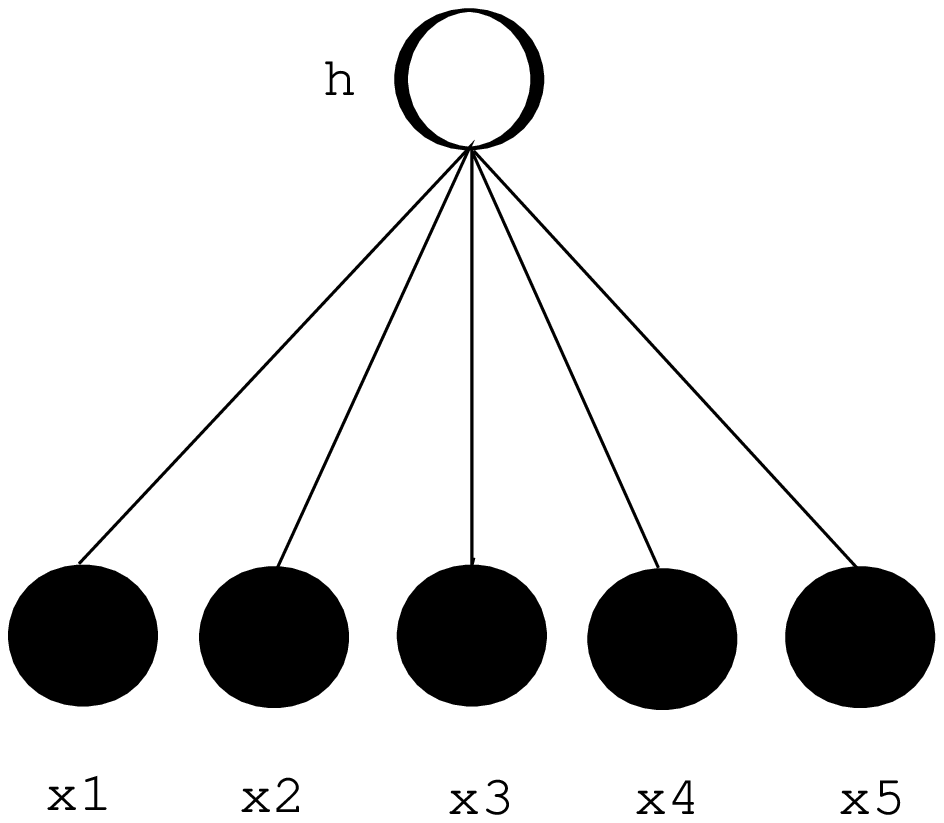}
\end{minipage}}
\hfil
\subfloat[Hierarchical tree]{
\begin{minipage}{0.3\textwidth}
\psfrag{h}[]{}\psfrag{h1}[]{}\psfrag{h2}[]{}\psfrag{x1}[]{}\psfrag{x2}[]{}\psfrag{x3}[]{}\psfrag{x4}[]{}\psfrag{x5}[]{}
\includegraphics[width=\textwidth]{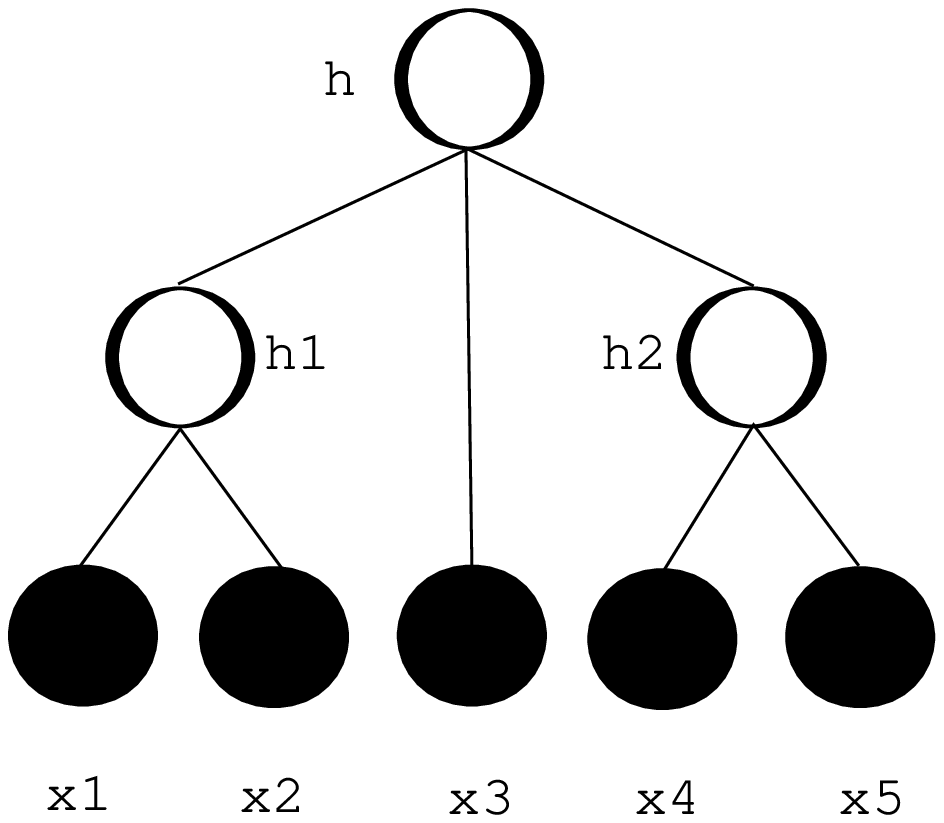}
\end{minipage}}
\ec
\caption[Flat multi-view vs hierarchical latent variable graphical model]{Flat multi-view latent variable graphical model vs hierarchical latent variable graphical model.}\label{fig:flathierarchical}
\end{figure}
The task of learning a latent tree model consists of two parts: learning the tree structure, and learning the parameters of the tree.
There exist many challenges which {prohibit} efficient or guaranteed learning of the latent tree graphical model, which will be addressed in this thesis:
\begin{compactenum}
\item The location and the number of latent variables are hidden, and the marginalized graph over
the observable variables no longer conforms to a tree structure.
\item Structure learning algorithms are typically of computational complexity polynomial with $p$ (number of variables) as discussed in~\cite{anandkumar2011spectral, choi2011learning}. These methods are serial in nature and therefore are not scalable for large $p$.
\item Parameter estimation in latent tree models is typically carried out through Expectation Maximization (EM) or other local search heuristics~\cite{choi2011learning}.
These methods have no consistency guarantees, suffer from the problem of local optima and are not easily parallelizable.
\item Typically structure learning and parameter estimation are carried out one after another.
\end{compactenum}

There has been widespread interest in developing distributed learning techniques, e.g.,  the recent works of~\cite{smola2010architecture} and~\cite{2013arXiv1312.7869W}.
These works consider parameter estimation via likelihood-based optimizations such as Gibbs sampling, while our method involves more challenging tasks where both the structure and the parameters are estimated.
Simple methods such as local neighborhood selection through $\ell_1$-regularization~\cite{Mei06} or local conditional independence testing~\cite{AnandkumarTanWillsky:Ising11} can be parallelized, but these methods do not incorporate hidden variables.
Finally, note that the latent tree models provide a statistical description, in addition to revealing the hierarchy.
In contrast, hierarchical clustering techniques are not based on a statistical model~\cite{krishnamurthy2012efficient} and cannot provide valuable information such as the level of correlation between observed and hidden variables.

\section{Thesis Structure}

 In my thesis, I will first prove that simple noisy gradient descent on a carefully selected objective function yields global convergence guarantee in chapter~\ref{chapter:saddle}. Based on the theoretical guarantees, I will show how to make tensor decomposition highly scalable, highly parallel in chapter~\ref{chapter:online}.  Furthermore, I extend the framework to learn dictionary or templates with additional constraints such as shift invariance in image or text dictionary learning using convolutional dictionary tensor decomposition in chapter~\ref{chapter:convolutional}.   I do not limit myself to shallow models where observations are conditional independent on the hidden dimension. On the contrary,  I extend the \emph{multi-view} tensor decomposition framework to a hierarchical tensor decomposition framework to analyze data with complicated hierarchical structure. A latent tree model is therefore proposed in chapter~\ref{chapter:tree}, where latent variable graphical model structure learning technique is combined with hierarchical tensor decomposition for a  consistent learning of the hierarchical model structure and parameter. Finally, I conclude my thesis with a challenging but important task in chapter~\ref{chapter:brain}, discovering cell types in the brain. This work brings together the techniques used in all previous chapters, such as image processing to extract cells and cell features from brain slices, learning a point process admixture model. 

\chapter{Online Stochastic Gradient for Tensor Decomposition}\label{chapter:saddle}
It is established in the previous work~\cite{JMLR:v15:anandkumar14b} that a wide class of latent variable graphical models can be learned through tensor decomposition, and model parameters are obtained by decomposing higher order data aggregates or modified data moments. Therefore, learning latent variable graphical model is reduced to tensor decomposition problem. Tensor decomposition is a non-convex optimization problem, and it is known that non-convex optimization problem is NP hard in general. Now the question is: could we use efficient methods such as stochastic gradient descent to reach local optima for a class of function under mild conditions? Could we fit tensor decomposition problem into the class of function?

We analyze stochastic gradient descent for optimizing non-convex functions. In many cases for non-convex functions the goal is to find a reasonable local minimum, and the main concern is that gradient updates are trapped in {\em saddle points}. In this chapter we identify {\em \name} property for non-convex problem that allows for efficient optimization. Using this property we show that from an {\em arbitrary} starting point,  stochastic gradient descent converges to a local minimum in a polynomial number of iterations. To the best of our knowledge this is the first work that gives {\em global} convergence guarantees for stochastic gradient descent on non-convex functions with exponentially many local minima and saddle points. 

Our analysis can be applied to orthogonal tensor decomposition, which is widely used in learning a rich class of  latent variable models. We propose a new optimization formulation for the tensor decomposition problem that has \name~property. As a result we get the first online algorithm for orthogonal tensor decomposition with global convergence guarantee.


\paragraph{\bigname~functions}

Given a function $f(w)$ that is twice differentiable, we call $w$ a stationary point if $\nabla f(w) = 0$. A stationary point can either be a local minimum, a local maximum or a saddle point. 
We identify an interesting class of non-convex functions which we call \name. For these functions the Hessian of every saddle point has a negative eigenvalue. 
In particular, this means that local second-order algorithms which are similar to the ones in \cite{dauphin2014identifying} can always make some progress. 

It may seem counter-intuitive why stochastic gradient can work in these cases: in particular if we run the basic gradient descent starting from a stationary point then it will not move. However, we show that the saddle points are not stable and that the randomness in stochastic gradient helps the algorithm to escape from the saddle points.
%
%
%
%

\begin{theorem}[informal] Suppose $f(w)$ is \name~(see Definition~\ref{def:robustcondition}), Noisy Gradient Descent (Algorithm~\ref{algo:sgdwn}) outputs a point that is close to a local minimum in polynomial number of steps.
\end{theorem}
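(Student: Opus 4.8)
The plan is to exploit the three-way case analysis built into the \name~property: by Definition~\ref{def:robustcondition}, at every point $w$ either the gradient is large, or the Hessian has an eigenvalue below some threshold $-\gamma<0$, or $w$ lies in a neighborhood of a local minimum on which $f$ is locally strongly convex. I would partition the trajectory of Noisy Gradient Descent (Algorithm~\ref{algo:sgdwn}) according to which of these three cases the current iterate $w_t$ falls into, and then argue that the algorithm cannot spend more than a polynomial number of steps outside the third (near-minimum) region.

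First, in the large-gradient region I would invoke the textbook descent lemma: for a small enough step size $\eta$, a single noisy step decreases the objective in expectation by $\Omega(\eta\|\nabla f(w_t)\|^2)=\Omega(\eta\epsilon^2)$, since the injected noise $\xi_t$ is zero-mean and bounded and hence contributes only a lower-order $O(\eta^2)$ term. Because $f$ is bounded below, the total number of steps spent in this region is at most $O(1/(\eta\epsilon^2))$ times the range of $f$.

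The crux — and the step I expect to be the main obstacle — is the negative-curvature region near a saddle point, where plain gradient descent stalls and the entire role of the noise is to destabilize the saddle. I would recenter coordinates at the saddle, let $v$ be a unit eigenvector of the Hessian with eigenvalue $\le-\gamma$, and track the projected displacement $z_t=\langle v, w_t-w_0\rangle$. Along $v$ the linearized dynamics read approximately $z_{t+1}\approx(1+\eta\gamma)z_t-\eta\langle v,\xi_t\rangle$, so the noise is amplified geometrically at rate $(1+\eta\gamma)>1$. The technical work is to show that within $O\big(\tfrac{1}{\eta\gamma}\log(1/\eta)\big)$ steps this component grows large enough that $f$ decreases by a fixed amount in expectation, while simultaneously controlling (i) the Taylor remainder, so the quadratic approximation remains valid over the whole window, which requires a Lipschitz-Hessian bound and a careful confinement-radius argument, and (ii) the martingale formed by the accumulated noise, which must be handled by a concentration inequality so that the escape succeeds with high probability. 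Coupling this expected-value drift with the probabilistic escape guarantee is the delicate part.

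Finally, in the strongly-convex neighborhood of a local minimum $w^*$ I would show that the iterates behave like SGD on a strongly convex objective: the expected squared distance to $w^*$ contracts geometrically up to a noise floor of order $O(\eta)$, so once the trajectory enters this region it stays $O(\sqrt{\eta})$-close to $w^*$. Combining the three pieces through a potential/supermartingale argument on the sequence $f(w_t)$ — which strictly decreases in expectation throughout the first two regions and is bounded below — forces the iterate into the third region after polynomially many steps, i.e.\ close to a local minimum; tuning $\eta$ against the desired accuracy then closes the proof.
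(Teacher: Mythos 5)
Your proposal matches the paper's proof essentially step for step: the same three-case decomposition drawn from the strict saddle definition (large gradient, negative curvature, locally strongly convex neighborhood), the same descent-lemma bound giving an expected per-step decrease in the first case, the same noise-amplification-along-negative-curvature escape argument with Lipschitz-Hessian Taylor control and martingale concentration in the second, the same contraction-to-a-noise-floor argument near a local minimum, and the same bounded-function potential argument to stitch the cases together over polynomially many steps. The only cosmetic difference is that the paper organizes the saddle escape as a coupling between the true iterates and SGD run on the local quadratic approximation of $f$, rather than tracking the scalar projection onto a single negative eigendirection, but this is the same mechanism facing the same technical obstacles you correctly identify.
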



\paragraph{Online tensor decomposition} Requiring all saddle points to have a negative eigenvalue may seem strong, but it already allows non-trivial applications to natural non-convex optimization problems. 
As an example, we consider the orthogonal tensor decomposition problem. This problem is the key step in spectral learning for many latent variable models.

We design a new objective function for tensor decomposition that is \name. 
\begin{theorem}
Given random variables $X$ such that $T = \E[g(X)] \in \R^{d^4}$ is an orthogonal $4$-th order tensor, there is an objective function $f(w) = \E[\phi(w,X)]$ $w\in \R^{d\times d}$ such that every local minimum of $f(w)$ corresponds to a valid decomposition of $T$. Further, function $f$ is \name.
\end{theorem}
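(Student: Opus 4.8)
The plan is to construct $f$ directly from the variational characterization of the orthonormal components. Writing the rows of $w$ as $u_1,\dots,u_d\in\R^d$ and recalling that $T=\sum_{k=1}^d a_k^{\otimes 4}$ with $\{a_k\}$ orthonormal, I would take the per-sample loss $\phi(w,X)=\sum_{i\neq j} g(X)(u_i,u_i,u_j,u_j)$, so that
\begin{equation}
f(w)=\E[\phi(w,X)]=\sum_{i\neq j} T(u_i,u_i,u_j,u_j)=\sum_{i\neq j}\sum_{k=1}^d \langle u_i,a_k\rangle^2\langle u_j,a_k\rangle^2,
\end{equation}
optimized subject to the equality constraints $\norm{u_i}^2=1$ for each $i$, i.e.\ in the equality-constrained setting of our framework. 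Since every term is nonnegative, $f\ge 0$, and $f(w)=0$ exactly when for each basis direction $a_k$ at most one row has nonzero correlation with it; together with the unit-norm constraints and the fact that there are $d$ rows and $d$ basis directions, this forces the supports (in the $a$-basis) to be disjoint singletons, hence each $u_i=\pm a_{\pi(i)}$ for a permutation $\pi$. So the global minimizers are precisely the valid decompositions up to sign and permutation, matching the symmetry discussed above.

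Next I would pass to clean coordinates: since $\{a_k\}$ is an orthonormal basis I rotate so that $a_k=e_k$ and set $c_{ik}=\langle u_i,a_k\rangle$, turning $f$ into the quartic form $\sum_{i\neq j}\sum_k c_{ik}^2 c_{jk}^2$ on the product of unit spheres $\{\sum_k c_{ik}^2=1\}$. On this manifold I would compute the projected (Riemannian) gradient and Hessian, introducing one Lagrange multiplier per row for the constraint. The first task is to enumerate the constrained stationary points; the second is to show that every stationary point other than a signed permutation matrix is a strict saddle, i.e.\ its projected Hessian has an eigenvalue bounded away from $0$ in the negative direction, while near each signed permutation matrix the projected Hessian is positive definite on the tangent space.

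The key step is exhibiting negative curvature at the spurious stationary points. At any constrained stationary point that is not a signed permutation, two rows $u_i,u_j$ must share positive mass on a common basis direction $a_k$ (the analogue of the $45^\circ$ configuration in the $2\times 2$ example, where the two rows split their mass equally across $e_1$ and $e_2$ and $f$ sits at a strict saddle between the value-$0$ minima and the higher-value aligned configurations). I would build an explicit tangent vector that rotates the shared mass between those directions and verify it decreases $f$ to second order, producing the negative eigenvalue with an explicit constant depending only on $d$. Complementarily, for points bounded away from all stationary points I would lower-bound the projected gradient norm, and in a neighborhood of each valid decomposition I would establish strong convexity of the restriction of $f$ to the tangent space. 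Assembling these three regimes yields the quantitative $(\alpha,\gamma,\epsilon,\delta)$ form required by Definition~\ref{def:robustcondition}, which simultaneously shows that $f$ is \name{} and that its only local minima are valid decompositions.

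The main obstacle is the \emph{uniformity} of these estimates rather than the pointwise classification: the \name{} property demands that the negative-curvature and gradient lower bounds hold with constants independent of the location on the manifold, so the delicate part is controlling the degree-$4$ form and its Hessian simultaneously over the entire sphere product, including the ``boundary'' regions where a row is nearly but not exactly aligned with a basis direction, and ruling out spurious local minima for general $d$ rather than leaning on the low-dimensional intuition.
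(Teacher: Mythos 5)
Your proposal is correct and follows essentially the same route as the paper's own proof: the identical objective $\sum_{i\neq j}T(u_i,u_i,u_j,u_j)$ with unit-norm equality constraints, the rotation to the basis $\{a_k\}$, and a three-regime analysis (large projected gradient, negative curvature from a tangent vector rotating mass shared by two rows on a common coordinate, and local strong convexity near signed permutation matrices) yielding the quantitative $(\alpha,\gamma,\epsilon,\delta)$ constants. The paper's Appendix carries this out exactly as you outline, handling the uniformity concern you raise by working with the sets of coordinates of magnitude above a threshold $\epsilon_0$ rather than exact supports, which makes the case analysis robust over the whole constraint manifold.
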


Combining this new objective with our framework for optimizing \name functions,  we get the first online algorithm for orthogonal tensor decomposition with global convergence guarantee.

\section{Preliminaries}

The stochastic gradient aims to solve the stochastic optimization problem (\ref{eq:opt}), which we restate here:
$$
w = \arg\min_{w\in \R^d} f(w), \textrm{~where~} f(w) = \E_{x\sim \mathcal{D}}[\phi(w,x)].
$$
Recall $\phi(w,x)$ denotes the loss function evaluated for sample $x$ at point $w$.
The algorithm follows a stochastic gradient 
\begin{equation}
w_{t+1} = w_{t} - \eta \nabla_{w_t} \phi(w_t,x_t),
\end{equation}
where $x_t$ is a random sample drawn from distribution $\mathcal{D}$ and $\eta$ is the {\em learning rate}.

In the more general setting, stochastic gradient descent can be viewed as optimizing an arbitrary function $f(w)$ given a stochastic gradient oracle.

\begin{definition}
For a function $f(w):\R^d \to \R$, a function $SG(w)$ that maps a variable to a random vector in $\R^d$ is a stochastic gradient oracle if $\E[SG(w)] = \nabla f(w)$ and $\|SG(w) - \nabla f(w)\| \le Q$.
\end{definition}

In this case the update step of the algorithm becomes $w_{t+1} = w_t - \eta SG(w_t)$.

\paragraph{Smoothness and Strong Convexity } Traditional analysis for stochastic gradient often assumes the function is smooth and strongly convex. A function is $\beta$-smooth if for any two points $w_1,w_2$,
\begin{equation}
\|\nabla f(w_1) - \nabla f(w_2)\| \le \beta\|w_1-w_2\|.
\end{equation}
When $f$ is twice differentiable this is equivalent to assuming that the spectral norm of the Hessian matrix is bounded by $\beta$.
We say a function is $\alpha$-strongly convex if the Hessian at any point has smallest eigenvalue at least $\alpha$ ($\lambda_{min}(\Hess f(w)) \ge \alpha$).

Using these two properties, previous work~\cite{ICML2012Rakhlin_261} shows that stochastic gradient converges at a rate of $1/t$. In this thesis we consider non-convex functions, which can still be $\beta$-smooth but cannot be strongly convex.

\paragraph{Smoothness of Hessians }  
It is  common to assume the Hessian of the function $f$ to be smooth.  We say a function $f(w)$ has $\rho$-Lipschitz Hessian if for any two points $w_1,w_2$ we have
\begin{equation}
\|\Hess f(w_1) - \Hess f(w_2)\| \le \rho \|w_1-w_2\|.
\end{equation}
This is a third order condition that is 
true if the third order derivative exists and is bounded.


\section{Stochastic Gradient Descent for \bigname~Function}

\label{sec:sgd}

In this section we discuss the properties of saddle points, and show if all the saddle points are well-behaved then stochastic gradient descent finds a local minimum for a non-convex function in polynomial time.

\paragraph{Notation} Throughout the chapter we use $[d]$ to denote set $\{1,2,...,d\}$. We use $\|\cdot\|$ to denote the $\ell_2$ norm of vectors and spectral norm of matrices. For a matrix we use $\lambda_{min}$ to denote its smallest eigenvalue. For a function $f:\R^d\to \R$,  $\nabla f$ and $\nabla^2 f$ denote its gradient vector and Hessian matrix.

\subsection{\bigname~Property}
\label{subsec:strictsaddleproperty}
For a twice differentiable function $f(w)$, we call a point {\em stationary point} if its gradient is equal to $0$. Stationary points could be local minima, local maxima or saddle points. By local optimality conditions~\cite{wright1999numerical}, in many cases we can tell what type a point $w$ is by looking at its Hessian: if $\Hess f(w)$ is positive definite then $w$ is a local minimum; if $\Hess f(w)$ is negative definite then $w$ is a local maximum; if $\Hess f(w)$ has both positive and negative eigenvalues then $w$ is a saddle point. These criteria do not cover all the cases as there could be degenerate scenarios: $\Hess f(w)$ can be positive semidefinite with an eigenvalue equal to 0, in which case the point could be a local minimum or a saddle point.

If a function does not have these degenerate cases, then we say the function is \name:

\begin{definition}
A twice differentiable function $f(w)$ is {\em \name}, if all its local minima have $\Hess f(w) \succ 0$ and all its other stationary points satisfy $\lambda_{min} (\Hess f(w)) < 0$.
\end{definition}

Intuitively, if we are not at a stationary point, then we can always follow the gradient and reduce the value of the function. If we are at a saddle point, we need to consider a second order Taylor expansion:
$$
f(w+\Delta w) \approx w + (\Delta w)^T \Hess f(w) (\Delta w) + O(\|\Delta w\|^3).
$$
Since the \name~property guarantees $\Hess f(w)$ to have a negative eigenvalue, there is always a point that is near $w$ and has strictly smaller function value. It is possible to make local improvements as long as we have access to second order information. However it is not clear whether the more efficient stochastic gradient updates can work in this setting. 

To make sure the local improvements are significant, we use a robust version of the \name~property:

\begin{definition}
\label{def:robustcondition}
A twice differentiable function $f(w)$ is $(\alpha, \gamma, \epsilon, \delta)$-{\em\name}, if for any point $w$ at least one of the following is true
\begin{enumerate}
\item $\|\nabla f(w)\| \ge \epsilon$.
\item $\lambda_{min}(\Hess f(w)) \le -\gamma$.
\item There is a local minimum $w^\star$ such that $\|w-w^\star\| \le \delta$, and the function $f(w')$ restricted to $2\delta$ neighborhood of $w^\star$ ($\|w'-w^\star\|\le 2\delta$) is $\alpha$-strongly convex.
\end{enumerate}
\end{definition}

Intuitively, this condition says for any point whose gradient is small, it is either close to a robust local minimum, or is a saddle point (or local maximum) with a significant negative eigenvalue. 


\begin{algorithm}[ht]
 \caption{Noisy Stochastic Gradient}
 \label{algo:sgdwn}
 \begin{algorithmic}[1]
 \REQUIRE Stochastic gradient oracle $SG(w)$, initial point $w_0$, desired accuracy $\kappa$.
  \ENSURE $w_t$ that is close to some local minimum $w^\star$.
  \STATE Choose $\eta = \min\{\tilde{O}(\kappa^2 / \log (1/\kappa)), \eta_{\max}\}$
 	\FOR{$t = 0$ to $\tilde{O}(1/\eta^2)$}
 	\STATE Sample noise $n$ uniformly from unit sphere.
	\STATE $w_{t+1} \leftarrow w_{t} - \eta (SG(w) + n)$
 		\ENDFOR
 \end{algorithmic}
 \end{algorithm}

We purpose a simple variant of stochastic gradient algorithm, where the only difference to the traditional algorithm is we add an extra noise term to the updates. The main benefit of this additional noise is that we can guarantee there is noise in every direction, which allows the algorithm to effectively explore the local neighborhood around saddle points. If the noise from stochastic gradient oracle already has nonnegligible variance in every direction, our analysis also applies without adding additional noise. We show noise can help the algorithm escape from saddle points and optimize \name~functions.

\begin{theorem} [Main Theorem]\label{thm:sgdmain}
Suppose a function $f(w):\R^d\to \R$ that is $(\alpha, \gamma, \epsilon, \delta)$-\name, and has a stochastic gradient oracle with radius at most $Q$. Further, suppose the function is bounded by $|f(w)| \le B$, is $\beta$-smooth and has $\rho$-Lipschitz Hessian. 
Then there exists a threshold $\eta_{\max} = \tilde{\Theta}(1)$, so that for any $\zeta>0$, and
for any $\eta \le \eta_{\max} / \max\{1, \log(1/\zeta)\}$, with probability at least $1-\zeta$ in $t = \tlO(\eta^{-2}\log (1/\zeta))$ iterations, Algorithm~\ref{algo:sgdwn} (Noisy Gradient Descent) outputs a point $w_t$ that is $\tlO(\sqrt{\eta\log(1/\eta\zeta)})$-close to some local minimum $w^\star$.
\end{theorem}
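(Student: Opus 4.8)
The plan is to use the robust \name~property (Definition~\ref{def:robustcondition}) to split $\R^d$ into three overlapping regimes and to treat the objective value $f(w_t)$ as a potential. In the two ``non-convergent'' regimes (large gradient, or near a saddle with a strongly negative curvature direction) I will show that the noisy iterate drives $f$ down by a quantifiable amount, while in the third regime (inside the strongly convex basin of a local minimum $w^\star$) I will show that the iterate localizes and stays. Since $|f|\le B$, the total achievable decrease is at most $2B$, so only boundedly many ``productive'' steps/epochs can occur; this forces the iterate into the third regime, where it is pinned within $\tlO(\sqrt{\eta})$ of $w^\star$. Throughout, the key observation is that the extra spherical noise $n$ is mean-zero, so $\E[SG(w_t)+n\mid w_t]=\nabla f(w_t)$ and $\norm{SG(w_t)+n}\le Q+1$; the noisy update is thus an unbiased, bounded stochastic gradient step.

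\emph{Regime 1 (large gradient, $\norm{\nabla f(w_t)}\ge\eps$).} Here I would apply $\beta$-smoothness to the one-step update to obtain $\E[f(w_{t+1})\mid w_t]\le f(w_t)-\eta\norm{\nabla f(w_t)}^2+\tfrac{\beta\eta^2}{2}(Q+1)^2$, giving an expected decrease of $\Omega(\eta\eps^2)$ once $\eta$ is below a threshold of order $\eps^2/(\beta(Q+1)^2)$. To convert this expected drift into a high-probability statement over the whole run, I would package $f(w_t)$ together with its accumulated drift as a supermartingale whose increments are bounded by $O(\eta(Q+1)\cdot\text{(gradient bound)})$ and apply an Azuma-type inequality.

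\emph{Regime 2 (saddle escape)} is the crux and the main obstacle. Since the gradient is small, first-order descent gives nothing, and I must exploit $\lambdamin(\Hess f(w))\le-\gamma$. I would center the analysis at the (near-)saddle $w_0$, approximate $f$ by its second-order Taylor expansion, and control the cubic remainder using the $\rho$-Lipschitz Hessian assumption, which is valid as long as the iterate stays in a fixed-radius neighborhood (a confinement estimate, using that each step moves only $O(\eta(Q+1))$). Projecting the dynamics onto the most negative eigendirection $v$ of $\Hess f(w_0)$ yields an unstable linear recursion of the form $z_{t+1}\approx(1+\eta\gamma)z_t-\eta\xi_t$ with $z_t=\langle w_t-w_0,v\rangle$; crucially, the injected spherical noise guarantees a seed $\xi_t$ with nontrivial component (of order $1/\sqrt{d}$) along $v$. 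Tracking $\E[z_t^2]$ shows it grows geometrically and reaches order $\delta^2$ within $\tlO(1/(\eta\gamma))$ steps, by which time the iterate has moved $\Omega(\delta)$ along a descent direction and $f$ has dropped by a fixed amount. The delicate points are (i) keeping the quadratic model valid until escape, (ii) upgrading the expected escape to a high-probability escape via a martingale/coupling argument, and (iii) handling the entire negative eigenspace rather than a single eigenvector.

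\emph{Regime 3 (strongly convex basin) and assembly.} Once $w_t$ lies within $\delta$ of $w^\star$, strong convexity of $f$ on the $2\delta$-ball gives a contraction $\E[\norm{w_{t+1}-w^\star}^2\mid w_t]\le(1-\Omega(\eta\alpha))\norm{w_t-w^\star}^2+O(\eta^2(Q+1)^2)$; the extra $\delta$ of convexity acts as a buffer that, with high probability, confines the iterate to the basin, and the stationary noise floor is $\E\norm{w_t-w^\star}^2=O(\eta(Q+1)^2/\alpha)$, i.e.\ distance $\tlO(\sqrt{\eta})$. A sub-Gaussian tail bound on $\norm{w_t-w^\star}$ promotes this to the stated $\tlO(\sqrt{\eta\log(1/\eta\zeta)})$ accuracy. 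Finally I would assemble the three regimes: the $2B$ decrease budget caps the number of productive steps and saddle epochs, and a union bound over the horizon controls the total failure probability, fixing the threshold $\eta_{\max}=\tilde\Theta(1)$ and the requirement $\eta\le\eta_{\max}/\max\{1,\log(1/\zeta)\}$ and producing the claimed bound $t=\tlO(\eta^{-2}\log(1/\zeta))$.
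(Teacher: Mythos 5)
Your three-regime architecture is exactly the paper's own (smoothness-based descent for large gradients, noise-seeded escape from saddles via a local quadratic model, supermartingale confinement in the strongly convex basin, all assembled through a $2B$ decrease budget plus $\log(1/\zeta)$-fold probability amplification), but two concrete steps inside your crux regime would fail as sketched. First, your regime boundary is in the wrong place: you send every point with $\norm{\nabla f}<\eps$ and $\lambdamin(\Hess f)\le-\gamma$ into the saddle-escape analysis, but a gradient as large as the \emph{constant} $\eps$ destroys that analysis. Over the escape window of $\tlO(1/(\eta\gamma))$ steps, the deterministic drift $-\eta\sum_{\tau}(1-\eta\Hess f(w_0))^{\tau}\nabla f(w_0)$ along near-flat curvature directions alone contributes displacement of order $\eta T\eps=\tlO(\eps/\gamma)$, a constant, so the iterate exits any neighborhood in which the second-order Taylor model is accurate. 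The paper draws the boundary at $C\sqrt{\eta}$ instead: the smoothness argument of your Regime 1 works for \emph{any} gradient above $C\sqrt{\eta}$ (it only needs $\eta\norm{\nabla f}^2$ to dominate $\beta\eta^2(Q+1)^2$), so the saddle lemma only ever faces gradients of size $\tlO(\sqrt{\eta})$, whose accumulated drift stays inside the $\tlO(\sqrt{\eta})$ validity window. Second, your confinement estimate --- that the iterate stays in a fixed-radius neighborhood ``using that each step moves only $O(\eta(Q+1))$'' --- does not work: a per-step bound over $T=\tlO(1/(\eta\gamma))$ steps gives total movement $\tlO((Q+1)/\gamma)$, again a constant. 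Confinement has to come from the mean-zero structure of the noise: the displacement is drift plus a martingale whose typical size is $\eta\sqrt{T}=\tlO(\sqrt{\eta})$, and the paper establishes this with Azuma's inequality applied to the coupled quadratic sequence $\tilde w_t$, simultaneously controlling the coupling gap $\norm{w_t-\tilde w_t}\le\tlO(\eta\log^2(1/\eta))$ so that $f(w_t)\approx\tilde f(\tilde w_t)$.

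Relatedly, your escape target overreaches what this technique can deliver: you propose to track $\E[z_t^2]$ until it reaches order $\delta^2$, concluding an $\Omega(\delta)$ move and a ``fixed amount'' of decrease, but the quadratic model and the coupling are only valid within a $\tlO(\sqrt{\eta})$ ball, so no argument of this form can follow the iterate out to constant distance $\delta$. The paper instead stops each saddle episode at the time $T$ where the amplification sum $\sum_{\tau<T}(1+\eta\gamma_0)^{2\tau}$ reaches $\Theta(d/\eta\gamma_0)$, at which point $\E[f(w_{t+T})]\le f(w_t)-\tlOmega(\eta)$ while the displacement is still only $\tlO(\sqrt{\eta})$. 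This weaker per-episode decrease is all the budget argument needs --- decrease $\tlOmega(\eta)$ per $\tlO(1/\eta)$ steps against a budget of $2B$ --- and it is precisely what produces the claimed $\tlO(\eta^{-2}\log(1/\zeta))$ iteration count; with these two repairs (threshold at $C\sqrt{\eta}$, martingale-based confinement) your outline becomes the paper's proof.
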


Here (and throughout the rest of the chapter) $\tlO(\cdot)$ ($\tilde{\Omega},\tilde{\Theta}$) hides the factor that is polynomially dependent on all other parameters (including $Q$, $1/\alpha$, $1/\gamma$, $1/\epsilon$, $1/\delta$, $B$, $\beta$, $\rho$, and $d$), but independent of $\eta$ and $\zeta$. So it focuses on the dependency on $\eta$ and $\zeta$. 
Our proof technique can give explicit dependencies on these parameters however we hide these dependencies for simplicity of presentation. \footnote{
Currently, our number of iteration is a large polynomial in the dimension $d$. We have not tried to optimize the degree of this polynomial. Empirically the dependency on $d$ is much better, whether the dependency on $d$ can be improved to $\mbox{poly}\log d$ is left as an open problem.
}


\begin{remark} [Decreasing learning rate]
Often analysis of stochastic gradient descent uses decreasing learning rates and the algorithm converges to a local (or global) minimum. Since the function is strongly convex in the small region close to local minimum, we can use Theorem \ref{thm:sgdmain} to first find a point that is close to a local minimum, and then apply standard analysis of SGD in the strongly convex case (where we decrease the learning rate by $1/t$ and get $1/\sqrt{t}$ convergence in $\|w-w^\star\|$).
\end{remark}
%

In the next part we sketch the proof of the main theorem. Details are deferred to Appendix~\ref{sec:unconstrained}.

\subsection{Proof Sketch}

In order to prove Theorem~\ref{thm:sgdmain}, we analyze the three cases in Definition~\ref{def:robustcondition}. When the gradient is large, we show the function value decreases in one step (see Lemma~\ref{lem:gradient}); when the point is close to a local minimum, we show with high probability it cannot escape in the next polynomial number of iterations (see Lemma~\ref{lem:minimum}). 

\begin{lemma}[Gradient]
\label{lem:gradient}
Under the assumptions of Theorem~\ref{thm:sgdmain}, for any point with $\|\nabla f(w_t)\| $ $\ge$ $ C\sqrt{\eta}$ (where $C = \tilde{\Theta}(1)$) and $C\sqrt{\eta} \le \epsilon$, after one iteration we have $\E[f(w_{t+1})] \le f(w_t) - \tlOmega(\eta^2)$.
\end{lemma}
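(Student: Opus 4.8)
The plan is to prove a standard one-step descent inequality for the noisy update and show that, when the gradient is large, the first-order gain dominates the second-order error by $\tlOmega(\eta^2)$. Recall that the update is $w_{t+1} = w_t - \eta(SG(w_t) + n)$, where $n$ is drawn uniformly from the unit sphere (so $\E[n] = 0$ and $\|n\| = 1$) independently of the oracle, and $\E[SG(w_t)] = \nabla f(w_t)$ with $\|SG(w_t) - \nabla f(w_t)\| \le Q$. First I would invoke $\beta$-smoothness to write
\[
f(w_{t+1}) \le f(w_t) + \langle \nabla f(w_t), w_{t+1} - w_t \rangle + \frac{\beta}{2}\|w_{t+1} - w_t\|^2,
\]
and substitute $w_{t+1} - w_t = -\eta(SG(w_t) + n)$.

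Next I would take expectations over the independent randomness in $SG(w_t)$ and $n$. The cross terms vanish because both $n$ and $SG(w_t) - \nabla f(w_t)$ are mean zero, so the linear term contributes exactly $-\eta\|\nabla f(w_t)\|^2$. For the quadratic term, independence and $\|n\| = 1$ give $\E\|SG(w_t) + n\|^2 = \E\|SG(w_t)\|^2 + 1 \le (\|\nabla f(w_t)\| + Q)^2 + 1$, using the deterministic oracle bound $\|SG(w_t)\| \le \|\nabla f(w_t)\| + Q$. This yields
\[
\E[f(w_{t+1})] \le f(w_t) - \eta\|\nabla f(w_t)\|^2 + \frac{\beta\eta^2}{2}\bigl[(\|\nabla f(w_t)\| + Q)^2 + 1\bigr].
\]
The key auxiliary fact is an a priori upper bound on the gradient: since $|f| \le B$ and $f$ is $\beta$-smooth, the self-bounding inequality gives $\|\nabla f(w)\|^2 \le 2\beta(f(w) - \inf f) \le 4\beta B$, so $\|\nabla f(w_t)\| \le 2\sqrt{\beta B} = \tlO(1)$. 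Hence the bracketed error factor is $\tlO(1)$ and the entire error term is $\frac{\beta\eta^2}{2}\cdot \tlO(1) = \tlO(\eta^2)$. On the other hand, the hypothesis $\|\nabla f(w_t)\| \ge C\sqrt{\eta}$ makes the gain term satisfy $-\eta\|\nabla f(w_t)\|^2 \le -C^2\eta^2$. Combining, $\E[f(w_{t+1})] \le f(w_t) - C^2\eta^2 + \tlO(\eta^2)$, and choosing the constant $C = \tilde{\Theta}(1)$ large enough that $C^2$ strictly exceeds the ($\eta$-independent) constant hidden in the $\tlO(\eta^2)$ error yields $\E[f(w_{t+1})] \le f(w_t) - \tlOmega(\eta^2)$, as claimed. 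I would keep $\eta \le \eta_{\max}$ so that the step is small and the regime $C\sqrt{\eta} \le \epsilon$ is consistent.

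The computation is essentially routine; the only point that requires care is securing the a priori bound on $\|\nabla f(w_t)\|$ so that the second-order term is genuinely $\tlO(\eta^2)$ rather than scaling with a possibly large $\|\nabla f(w_t)\|^2$. An alternative that avoids invoking global boundedness is to retain the explicit $\|\nabla f(w_t)\|^2$ dependence and absorb the offending cross term $\beta\eta^2 \|\nabla f(w_t)\| Q$ into the gain by Young's inequality; for $\eta$ small the factor $(1 - \beta\eta/2)$ keeps $-\eta\|\nabla f(w_t)\|^2$ strictly negative and the conclusion follows identically. Either way, the main (mild) obstacle is the bookkeeping needed to confirm that the gain is at least of order $C^2\eta^2$ while the error stays second order in $\eta$.
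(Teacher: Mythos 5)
Your proof is correct and follows essentially the same route as the paper's: apply the $\beta$-smoothness descent inequality to one noisy step, use zero-mean noise to kill the cross terms so the linear part gives the gain $-\eta\|\nabla f(w_t)\|^2$, and observe that the threshold $\|\nabla f(w_t)\| \ge C\sqrt{\eta}$ with $C = \tilde{\Theta}(1)$ makes this gain dominate the $\tlO(\eta^2)$ second-order error. The only cosmetic difference is bookkeeping: the paper keeps the exact variance decomposition $\E\|\nabla f + \xi\|^2 = \|\nabla f\|^2 + \sigma^2 d$ and absorbs the $\tfrac{\beta\eta^2}{2}\|\nabla f\|^2$ piece into the linear gain via $\eta \le 1/\beta$ (yielding $C = \sqrt{2\sigma^2\beta d}$ explicitly), whereas you control the quadratic term with the oracle norm bound together with the a priori estimate $\|\nabla f\| \le 2\sqrt{\beta B}$ — both are valid and give the same conclusion.
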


The proof of this lemma is a simple application of the smoothness property.

\begin{lemma}[Local minimum]
\label{lem:minimum}
Under the assumptions of Theorem~\ref{thm:sgdmain}, for any point $w_t$ that is $\tlO(\sqrt{\eta}) < \delta$ close to local minimum $w^\star$, in $\tlO(\eta^{-2}\log (1/\zeta))$ number of steps all future $w_{t+i}$'s are $\tlO(\sqrt{\eta\log(1/\eta\zeta)})$-close with probability at least $1-\zeta/2$.
\end{lemma}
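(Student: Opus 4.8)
The plan is to run the entire argument inside the strongly convex neighborhood guaranteed by case~3 of the $(\alpha,\gamma,\epsilon,\delta)$-\name~property. Since $w_t$ starts within $\tlO(\sqrt{\eta})<\delta$ of $w^\star$ and $f$ restricted to the $2\delta$-ball around $w^\star$ is $\alpha$-strongly convex, I would show the iterates never leave a slightly larger ball $B(w^\star,R)$ with $R=\tlO(\sqrt{\eta\log(1/\eta\zeta)})<2\delta$, so that strong convexity may be invoked at every step that matters. The first step is a one-step drift computation for the squared distance $\alpha_t:=\|w_t-w^\star\|^2$. Expanding the update $w_{t+1}=w_t-\eta(SG(w_t)+n)$ and taking conditional expectation, using $\E[SG(w_t)]=\nabla f(w_t)$, symmetry of the unit-sphere noise so that $\E[n]=0$, strong convexity for the cross term $\langle w_t-w^\star,\nabla f(w_t)\rangle\ge\alpha\,\alpha_t$, and $\beta$-smoothness together with the oracle radius $Q$ and $\|n\|=1$ to bound the second-order term, I obtain a contraction-plus-noise recursion of the form $\E[\alpha_{t+1}\mid\mathcal{F}_t]\le(1-2\alpha\eta)\alpha_t+\eta^2\hat{Q}$ with $\hat{Q}=\tlO(1)$. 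This already pins the equilibrium radius at $\sqrt{\hat{Q}\eta/(2\alpha)}=\tlO(\sqrt{\eta})$, matching the starting scale.

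The core of the lemma, and the step I expect to be hardest, is upgrading this in-expectation bound to a uniform-over-$T$ high-probability confinement, where $T=\tlO(\eta^{-2}\log(1/\zeta))$. I would decompose $SG(w_t)+n=\nabla f(w_t)+\xi_t$ with $\xi_t$ mean zero and $\|\xi_t\|\le Q+1$, so that $\alpha_{t+1}\le(1-2\alpha\eta)\alpha_t-2\eta\langle w_t-w^\star,\xi_t\rangle+\eta^2\hat{Q}$. The term $-2\eta\langle w_t-w^\star,\xi_t\rangle$ is a martingale difference whose magnitude is $\tlO(\eta\sqrt{\alpha_t})$ and whose conditional variance is $\tlO(\eta^2\alpha_t)$; crucially both are \emph{state dependent}, shrinking as the iterate approaches $w^\star$. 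Introducing the stopping time $\tau$ equal to the first step at which the process exits $B(w^\star,R)$, I would build an exponential supermartingale on the stopped sequence $\alpha_{t\wedge\tau}$ (equivalently, apply a Freedman/Bernstein-type martingale tail bound). Because the variance scales with the current distance rather than the worst case, the accumulated fluctuation over $T$ steps remains at scale $\tlO(\sqrt{\eta\log(1/\eta\zeta)})$; the extra $\log(1/\eta\zeta)$ factor beyond the equilibrium $\sqrt{\eta}$ is precisely the price of a tail bound that must survive the union over polynomially many steps.

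Putting these together, the confinement bound gives $\Pr[\tau\le T]\le\zeta/2$, so with probability at least $1-\zeta/2$ the stopping time exceeds $T$, and on that event every $w_{t+i}$ with $i\le T$ stays $\tlO(\sqrt{\eta\log(1/\eta\zeta)})$-close to $w^\star$, which is the claim. The two points requiring the most care are: (i) verifying that $R<2\delta$ for $\eta$ small enough, so that strong convexity is legitimately available wherever the stopped process lives; and (ii) using a variance-adapted martingale inequality rather than a naive Azuma bound with worst-case increments $\tlO(\eta\sqrt{\delta})$, since the latter would inflate the confinement radius by a polynomial factor and destroy the $\sqrt{\eta}$ scaling that the statement demands.
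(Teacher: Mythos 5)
Your proposal matches the paper's proof in all essentials: the paper likewise derives the contraction recursion $\E[\|w_t-w^\star\|^2 \mid \mathfrak{F}_{t-1}] \le (1-\eta\alpha)\|w_{t-1}-w^\star\|^2 + \eta^2 d\sigma^2$ from local strong convexity plus smoothness, restricts the process to the event that all iterates so far stay within radius $\mu\sqrt{\eta\log(1/\eta\zeta)}$ (its indicator-weighted supermartingale $G_t 1_{\mathfrak{E}_{t-1}}$ is exactly your stopped process), and applies a martingale tail bound whose increments are at the ball-radius scale $\tlO(\mu\eta^{1.5}\log^{1/2}(1/\eta\zeta))$, finishing with a union bound over the $\tlO(\eta^{-2}\log(1/\zeta))$ steps. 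The only cosmetic difference is that the paper uses Azuma's inequality on this event-restricted process rather than a Freedman-type variance-adapted bound, which suffices precisely because, as you correctly note, the restriction already keeps the worst-case $\delta$-scale increments out of the concentration step.
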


The proof of this lemma is similar to the standard analysis \cite{ICML2012Rakhlin_261} of stochastic gradient descent in the smooth and strongly convex setting, except we only have local strong convexity. The proof appears in Appendix~\ref{sec:unconstrained}.


The hardest case is when the point is ``close'' to a saddle point: it has gradient smaller than $\epsilon$ and smallest eigenvalue of the Hessian bounded by $-\gamma$. In this case we show the noise in our algorithm helps the algorithm to escape:

\begin{lemma}[Saddle point]
\label{lem:saddle}
Under the assumptions of Theorem~\ref{thm:sgdmain}, for any point $w_t$ where $\|\nabla f(w_t)\| \le C\sqrt{\eta}$ (for the same $C$ as in Lemma~\ref{lem:gradient}), and $\lambda_{\min}(\Hess f(w_t)) \le -\gamma$, there is a number of steps $T$ that depends on $w_t$ such that $\E[f(w_{t+T})] \le f(w_t)-\tlOmega(\eta)$. The number of steps $T$ has a fixed upper bound $T_{max}$ that is independent of $w_t$ where $T \le T_{max} = \tilde{O}(1/\eta)$.
\end{lemma}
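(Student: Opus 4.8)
The plan is to analyze the dynamics in a small neighborhood of the saddle point through a second-order (quadratic) approximation of $f$, and to show that the injected noise, once amplified along the direction of negative curvature, drives the expected function value down by $\tlOmega(\eta)$. First I would recenter coordinates so that $w_t$ is the origin, write $H = \Hess f(w_t)$ with $\lambda_{\min}(H)\le-\gamma$ and corresponding unit eigenvector $v$, and collect all the randomness into a single term $\xi_i := \big(SG(w_i)-\nabla f(w_i)\big) + n_i$. This $\xi_i$ is zero-mean, bounded by $Q+1$, and—crucially—has covariance $\E[\xi_i\xi_i^\top] \succeq \tfrac1d I$, since the spherical noise contributes $\E[n_i n_i^\top] = \tfrac1d I$. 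This uniform lower bound $\sigma^2 \ge \tfrac1d$ on the noise variance in \emph{every} direction, in particular along $v$, is exactly the feature the traditional algorithm lacks and the reason the added noise lets the iterate escape.

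Next I would couple the true process with the linear dynamical system obtained by replacing $f$ with its second-order Taylor polynomial $\tilde f$ at $w_t$, namely
$$
w_{i+1} - w_t = (I - \eta H)(w_i - w_t) - \eta\big(\nabla f(w_t) + \xi_i\big).
$$
Solving this recursion expresses $w_i - w_t$ as a weighted sum of past noises with weights $(I-\eta H)^{m}$, so that
$$
\E\big[(w_i - w_t)^\top H (w_i - w_t)\big] = \underbrace{(\text{mean part})}_{\text{controlled since $\|\nabla f(w_t)\|\le C\sqrt\eta$}} + \Tr\!\big(H\,\mathrm{Cov}(w_i)\big).
$$
Diagonalizing in the eigenbasis of $H$, each eigenvalue $\lambda_k$ contributes $\eta^2\lambda_k\sigma_k^2\sum_{m}(1-\eta\lambda_k)^{2m}$ to the trace. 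Positive eigenvalues give bounded contributions summing to $\tlO(\eta)$, whereas the eigenvalue $\lambda_{\min}\le-\gamma$ yields $-\,\tfrac{\eta\sigma^2}{2}\,(1+\eta\gamma)^{2i}$, an exponentially growing negative term. Choosing $i = T \asymp \tfrac{\log d}{\eta\gamma} = \tlO(1/\eta)$ makes this term dominate all positive contributions, so that for the quadratic model $\E[\tilde f(w_T)] \le f(w_t) - \tlOmega(\eta)$, while simultaneously $\E\|w_T - w_t\|^2 = \tlO(\eta)$, i.e. the iterate typically stays within a ball of radius $r = \tlO(\sqrt\eta)$.

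Finally I would pass from the quadratic model back to $f$. Using the $\rho$-Lipschitz Hessian, the discrepancy between $f$ and $\tilde f$ is $O(\rho\|w-w_t\|^3) = \tlO(\rho\,\eta^{3/2})$ as long as the iterate stays in the radius-$r$ ball, which is $o(\eta)$ and hence negligible compared to the $\tlOmega(\eta)$ decrease. To make ``stays in the ball'' rigorous I would introduce the stopping time $\tau$ equal to the first step at which $\|w_{t+i}-w_t\|$ exceeds $r$, take $T$ to be this (data-dependent) stopping time capped at $T_{\max} = \tlO(1/\eta)$, and argue via a supermartingale / Markov-inequality bound that leaving the ball early is rare and, when it does happen, the displacement along $v$ is already large enough that the quadratic value has dropped. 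Combining the in-ball expected decrease with the confinement bound yields $\E[f(w_{t+T})] \le f(w_t) - \tlOmega(\eta)$ with $T\le T_{\max}=\tlO(1/\eta)$.

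I expect the main obstacle to be the tension in the choice of $T$: the negative-curvature direction needs roughly $1/(\eta\gamma)$ iterations (up to logs) to amplify the tiny $\Theta(1/d)$ noise into a macroscopic $\tlOmega(\eta)$ decrease, yet during those same iterations the iterate must not wander outside the radius-$r$ ball where the quadratic approximation—and therefore the entire trace computation—remains valid. Balancing these two competing requirements, and controlling both the accumulated coupling error and the early-escape probability through the stopping-time argument, is where the delicate work lies.
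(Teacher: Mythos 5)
Your proposal follows the paper's own route almost step for step: the paper also introduces the coupled sequence driven by the quadratic approximation $\tilde f(w) = f(w_t) + \nabla f(w_t)^T(w-w_t) + \frac12 (w-w_t)^T \Hess f(w_t)(w-w_t)$, solves the linear dynamics explicitly so that the iterate is a weighted sum of past noises, diagonalizes in the eigenbasis of $\Hess f(w_t)$ so that positive eigenvalues contribute at most $\frac{\eta\sigma^2}{2}$ each while the eigenvalue $\le -\gamma$ contributes an exponentially growing negative term, chooses $T$ so that $\sum_{\tau=0}^{T-1}(1+\eta\gamma_0)^{2\tau} \approx d/(\eta\gamma_0)$ (hence $T \le T_{\max} = \tilde{O}(\log d/\gamma\eta)$), and controls the mismatch with the true sequence using the $\rho$-Lipschitz Hessian together with martingale concentration. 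Your trace computation and the resulting $-\tlOmega(\eta)$ decrease for the quadratic model, as well as the $\tlO(\sqrt{\eta})$ confinement radius, are exactly the paper's Lemma on the coupled sequence.

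The one place where your plan would fail as written is the treatment of early escape. You propose to take $T$ to be the exit time of the radius-$r$ ball capped at $T_{\max}$, and to argue that when the iterate exits early, ``the displacement along $v$ is already large enough that the quadratic value has dropped.'' That claim is unjustified: the exit can be caused by accumulated noise along directions of \emph{positive} curvature, in which case the quadratic form (and hence $f$) may have \emph{increased} at the exit time, so no decrease is guaranteed at your stopping time. Moreover, making $T$ a stopping time makes it random, whereas the lemma (and its use in the main theorem, where epochs of length $T(w_{\tau_i})$ are concatenated) needs $T$ to be a deterministic function of $w_t$, fixed in advance through $\gamma_0 = |\lambda_{\min}(\Hess f(w_t))|$. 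The paper's resolution is to keep $T$ deterministic, prove via Hoeffding/Azuma supermartingale bounds that with probability $1-\tlO(\eta^2)$ both the coupled trajectory stays within $\tlO(\sqrt{\eta}\log(1/\eta))$ of $w_t$ and the two trajectories stay within $\tlO(\eta\log^2(1/\eta))$ of each other, and then absorb the complementary rare event using the boundedness assumption $|f| \le B$: its contribution to the expectation is at most $\tlO(B\eta^2)$, which is dominated by the $\tlOmega(\eta)$ decrease on the good event. Replacing your stopping-time fallback with this rare-event bookkeeping closes the gap, and the rest of your argument goes through.
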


Intuitively, at point $w_t$ there is a good direction that is hiding in the Hessian. The hope of the algorithm is that the additional (or inherent) noise in the update step makes a small step towards the correct direction, and then the gradient information will reinforce this small perturbation and the future updates will ``slide'' down the correct direction.

To make this more formal, we consider a coupled sequence of updates $\tilde{w}$ such that the function to minimize is just the local second order approximation $$\tilde{f}(w) = f(w_t) + \nabla f(w_t)^T (w-w_t) + \frac{1}{2}(w-w_t)^T\Hess f(w_t) (w-w_t).$$


	The dynamics of stochastic gradient descent for this quadratic function is easy to analyze as 
	$\tilde{w}_{t+i}$ can be calculated analytically. Indeed, we show the expectation of 
	$\tilde{f}(\tilde{w})$ will decrease. More concretely we show the point $\tilde{w}_{t+i}$ will move substantially in the negative curvature directions and remain close to $w_t$ in positive curvature directions.
	We then use the smoothness of the function to show that as long as the points did not go very far from $w_t$, the two update sequences $\tilde{w}$ and $w$ will remain close to each other, and thus $\tilde{f}(\tilde{w}_{t+i}) \approx f(w_{t+i})$. Finally we prove the future $w_{t+i}$'s (in the next $T$ steps) will remain close to $w_t$ with high probability by Martingale bounds. The detailed proof appears in Appendix~\ref{sec:unconstrained}.

With these three lemmas it is easy to prove the main theorem. Intuitively, as long as there is a small probability of being $\tlO(\sqrt{\eta})$-close to a local minimum, we can always apply Lemma~\ref{lem:gradient} or Lemma~\ref{lem:saddle} to make the expected function value decrease by $\tlOmega(\eta)$ in at most $\tlO(1/\eta)$ iterations, this cannot go on for more than $\tlO(1/\eta^2)$ iterations because in that case the expected function value will decrease by more than $2B$, but $\max f(x) - \min f(x) \le 2B$ by our assumption. Therefore in $\tlO(1/\eta^2)$ steps with at least constant probability $w_t$ will become $\tilde{O}(\sqrt{\eta})$-close to a local minimum. By Lemma~\ref{lem:minimum} we know once it is close it will almost always stay close, so after $q$ epochs of $\tilde{O}(1/\eta^2)$ iterations each, the probability of success will be $1-\exp(-\Omega(q))$. Taking $q = O(\log (1/\zeta))$ gives the result. More details appear in Appendix~\ref{sec:unconstrained}.
%
%
%
%
\subsection{Constrained Problems}
\label{sec:constrainedproblem}

In many cases, the problem we are facing are constrained optimization problems. In this part we briefly describe how to adapt the analysis to problems with equality constraints (which suffices for the tensor application). Dealing with general inequality constraint is left as future work.

For a constrained optimization problem:
\begin{align}
&\min_{w\in \R^d} \quad \quad  f(w) \\
&\text{s.t.} \quad \quad c_i(w) = 0, \quad \quad i\in[m]\nonumber
\end{align}
in general we need to consider the set of points in a low dimensional manifold that is defined by the constraints. In particular, in the algorithm after every step we need to project back to this manifold (see Algorithm~\ref{algo:psgdwn} where $\Pi_\mathcal{W}$ is the projection to this manifold).


\begin{algorithm}[ht]
 \caption{Projected Noisy Stochastic Gradient}
 \label{algo:psgdwn}
 \begin{algorithmic}[1]
 \REQUIRE Stochastic gradient oracle $SG(w)$, initial point $w_0$, desired accuracy $\kappa$.
  \ENSURE $w_t$ that is close to some local minimum $w^\star$.
  \STATE Choose $\eta = \min\{\tilde{O}(\kappa^2/\log (1/\kappa)), \eta_{\max}\}$
 	\FOR{$t = 0$ to $\tilde{O}(1/\eta^2)$}
 	\STATE Sample noise $n$ uniformly from unit sphere.
	\STATE $v_{t+1} \leftarrow w_{t} - \eta (SG(w) + n)$
	\STATE $w_{t+1} = \Pi_{\mathcal{W}}(v_{t+1})$
 		\ENDFOR
 \end{algorithmic}
 \end{algorithm}

For constrained optimization it is common to consider the Lagrangian:
\begin{equation}
\mathcal{L}(w, \lambda) =  f(w) - \sum_{i=1}^m \lambda_i c_i(w).
\end{equation}


Under common regularity conditions, it is possible to compute the value of the Lagrangian multipliers: $$\lambda^*(w)=\arg\min_{\lambda} \|\nabla_w \mathcal{L}(w, \lambda)\|.$$ We can also define the tangent space, which contains all directions that are orthogonal to all the gradients of the constraints: $\mathcal{T}(w) =\{v: \nabla c_i(w)^T v = 0; ~ i=1, \cdots, m \}$. In this case the corresponding gradient and Hessian we consider are the first-order and second-order partial derivative of Lagrangian $\mathcal{L}$
at point $(w, \lambda^*(w))$:
\begin{align}
&\chi(w) = \nabla_w \mathcal{L}(w, \lambda) |_{(w, \lambda^*(w))} =\nabla f(w) - \sum_{i=1}^m \lambda^*_i(w) \nabla c_i(w)  \\
&\mathfrak{M}(w) = \Hess_{ww} \mathcal{L}(w, \lambda) |_{(w, \lambda^*(w))} = \Hess f(w) - \sum_{i=1}^m \lambda^*_i(w) \nabla^2 c_i(w) 
\end{align}

We replace the gradient and Hessian with $\chi(w)$ and $\mathfrak{M}(w)$, and when computing eigenvectors of $\mathfrak{M}(w)$ we focus on its projection on the tangent space. In this way, we can get a similar definition for \name ~(see Appendix~\ref{sec:constrained}), and the following theorem.



\begin{theorem}(informal)\label{thm:constrainedinformal}
Under regularity conditions and smoothness conditions, if a constrained optimization problem satisfies \name~property, then for a small enough $\eta$, in $\tlO(\eta^{-2}\log 1/\zeta)$ iterations Projected Noisy Gradient Descent (Algorithm~\ref{algo:psgdwn}) outputs a point $w$ that is $\tlO(\sqrt{\eta}\log (1/\eta\zeta))$ close to a local minimum with probability at least $1-\zeta$.
\end{theorem}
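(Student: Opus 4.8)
The plan is to mirror the proof of the unconstrained main theorem (Theorem~\ref{thm:sgdmain}), replacing the Euclidean gradient $\nabla f$ and Hessian $\Hess f$ everywhere by their constrained counterparts $\chi(w)$ and $\mathfrak{M}(w)$, and restricting all second-order reasoning to the tangent space $\mathcal{T}(w)$. The conceptual point that makes this substitution legitimate is that, since every iterate is projected back onto the constraint manifold $\mathcal{W}$ by $\Pi_{\mathcal{W}}$, the only part of each update that survives to first order is its projection onto $\mathcal{T}(w)$: this tangential direction is exactly $-\eta\,\chi(w)$ plus the tangential part of the noise, and the effective curvature felt along the manifold is $\mathfrak{M}(w)$ restricted to $\mathcal{T}(w)$ rather than $\Hess f(w)$, because geodesics on $\mathcal{W}$ bend according to the constraint curvatures $\nabla^2 c_i$. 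First I would establish, under the \nameCQ regularity condition, that $\lambda^*(w)$ is well-defined and Lipschitz, that $\mathcal{W}$ is a smooth manifold with bounded second fundamental form, and that a single projected step incurs only a second-order correction, $\|\Pi_{\mathcal{W}}(w - \eta g) - (w - \eta\,\Pi_{\mathcal{T}(w)} g)\| = O(\eta^2)$ for bounded $g$. This ``projection is nearly tangential'' estimate is the workhorse that lets every later argument proceed as in the Euclidean case up to negligible $O(\eta^2)$ terms.

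Second, I would re-derive the three case lemmas using $\chi$ and $\mathfrak{M}|_{\mathcal{T}}$ in the robust \name~definition. The analogue of Lemma~\ref{lem:gradient} (large constrained gradient $\|\chi(w_t)\|\ge C\sqrt{\eta}$) again follows from smoothness: the tangential step decreases $f$ by $\tlOmega(\eta^2)$, while the projection correction and the noise contribute only lower-order terms. The analogue of Lemma~\ref{lem:minimum} uses that, near a local minimum, $f$ restricted to $\mathcal{W}$ is $\alpha$-strongly convex in tangent directions, so the standard strongly-convex SGD argument of~\cite{ICML2012Rakhlin_261} transfers to the manifold and the iterates remain $\tlO(\sqrt{\eta\log(1/\eta\zeta)})$-close with high probability. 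Both are routine once the projection estimate is in hand.

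The hard part will be the saddle-point escape lemma (the analogue of Lemma~\ref{lem:saddle}), where $\lambda_{\min}(\mathfrak{M}(w_t)|_{\mathcal{T}(w_t)})\le -\gamma$. As in the unconstrained proof I would couple the true projected iterates with a synthetic sequence evolving under the quadratic approximation of the Lagrangian restricted to the tangent space, and show the synthetic sequence moves substantially along the negative-curvature direction while staying bounded in the positive-curvature directions. Two difficulties are genuinely new here: (i) the noise $n$ is sampled isotropically in $\R^d$, so I must check that its projection onto $\mathcal{T}(w_t)$ still has nonnegligible variance in every tangent direction, in particular along the negative eigenvector of $\mathfrak{M}|_{\mathcal{T}}$; and (ii) the tangent space, the multipliers $\lambda^*(w)$, and hence $\mathfrak{M}$ all vary from point to point, so I must bound how far $\mathcal{T}(w_{t+i})$ and $\mathfrak{M}(w_{t+i})$ drift from $\mathcal{T}(w_t)$ and $\mathfrak{M}(w_t)$ over the escape window $T\le T_{\max}=\tilde{O}(1/\eta)$. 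Both are controlled by the Lipschitzness of $\lambda^*$ and of the projection, together with the Martingale concentration argument confining the iterates to an $\tlO(\sqrt{\eta})$ ball of $w_t$ for the duration, which keeps the frozen-coefficient quadratic model accurate up to the same $O(\eta^2)$-per-step error.

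Finally, I would assemble the three lemmas exactly as in the unconstrained case. A potential-decrease argument shows that as long as an iterate is not already $\tlO(\sqrt{\eta})$-close to a local minimum, Lemma~\ref{lem:gradient}'s or Lemma~\ref{lem:saddle}'s analogue forces $\E[f]$ to drop by $\tlOmega(\eta)$ within $\tlO(1/\eta)$ steps; this cannot persist beyond $\tlO(1/\eta^2)$ iterations since $\max f - \min f \le 2B$. Hence with constant probability some iterate lands $\tlO(\sqrt{\eta})$-close to a local minimum, and the local-minimum lemma plus repetition over $O(\log(1/\zeta))$ epochs boosts the success probability to $1-\zeta$, giving the claimed $\tlO(\eta^{-2}\log(1/\zeta))$ iteration count and $\tlO(\sqrt{\eta}\log(1/\eta\zeta))$ accuracy. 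I expect the manifold-drift control in step~(ii) of the saddle lemma to be the main obstacle, since it is the only place where the point-dependence of the constrained Hessian genuinely departs from the Euclidean analysis.
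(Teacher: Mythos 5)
Your proposal follows essentially the same route as the paper's own proof in Appendix~\ref{sec:constrained}: the paper likewise establishes, under \nameCQ, the Lipschitzness of $\lambda^*(w)$ and the key projection estimate (Lemma~\ref{lem::projection_distance} and Lemma~\ref{PSGD_equivalent}, showing PSGD equals a tangential step $-\eta(\chi(w)+P_{\mathcal{T}(w)}\xi)$ plus an $\tlO(\eta^2)$ correction), then re-derives the three case lemmas with $\chi$ and $\mathfrak{M}$ restricted to the tangent space, handles the saddle case by coupling with a quadratic sequence frozen on $\mathcal{T}(w_0)$ while explicitly bounding the tangent-space and multiplier drift terms, and assembles everything exactly as in the unconstrained argument. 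Both of the "genuinely new" difficulties you flag are precisely the ones the paper addresses (the projected noise term $P_{\mathcal{T}_0}\xi$ in Lemma~\ref{lem::case_Gaussian_constraint} and the drift terms in Lemma~\ref{lem::saddle_and_maximum_constraint}), so your plan is a faithful match.
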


Detailed discussions and formal version of this theorem are deferred to Appendix~\ref{sec:constrained}.



\section{Online Tensor Decomposition}\label{sec:tensors}

In this section we describe how to apply our stochastic gradient descent analysis to tensor decomposition problems. We first give a new formulation of tensor decomposition as an optimization problem, and show that it satisfies the \name~property. Then we explain how to compute stochastic gradient in a simple example of Independent Component Analysis (ICA)~\cite{icabook}.

\subsection{Optimization Problem for Tensor Decomposition}

Given a tensor $T\in \R^{d^4}$ that has an orthogonal decomposition
\begin{equation}
T = \sum_{i=1}^d a_i^{\otimes 4},
\end{equation}
where the components $a_i$'s are orthonormal vectors ($\|a_i\| = 1$, $a_i^Ta_j = 0$ for $i\ne j$), the goal of orthogonal tensor decomposition is to find the components $a_i$'s. This problem has inherent symmetry: for any permutation $\pi$ and any set of $\kappa_i\in \{\pm 1\},i\in[d]$, we know $u_i = \kappa_i a_{\pi(i)}$ is also a valid solution. This symmetry property makes the natural optimization problems non-convex.

In this section we will give a new formulation of orthogonal tensor decomposition as an optimization problem, and show that this new problem satisfies the \name~property. Previously, \cite{frieze1996learning} solves the problem of finding one component, with the following objective function
\begin{equation}\label{eq:findone}
\max\limits_{\|u\|^2 = 1}  \quad T(u,u,u,u).
\end{equation}
In Appendix \ref{sec:warmup}, as a warm-up example we show this function is indeed \name, and we can apply Theorem~\ref{thm:constrainedinformal} to prove global convergence of stochastic gradient descent algorithm.

It is possible to find all components of a tensor by iteratively finding one component, and do careful {\em deflation}, as described in \cite{JMLR:v15:anandkumar14b} or \cite{arora2012provable}. However, in practice the most popular approaches like Alternating Least Squares \cite{comon2009tensor} or FastICA \cite{hyvarinen1999fast} try to use a single optimization problem to find all the components. Empirically these algorithms are often more robust to noise and model misspecification.

The most straight-forward formulation of the problem aims to minimize the {\em reconstruction error}

\begin{equation}\label{eq:reconstruction}
\min\limits_{\forall i, \|u_i\|^2 = 1} \quad \| T - \sum_{i=1}^d u_i^{\otimes 4}\|_F^2.
\end{equation}
Here $\|\cdot \|_F$ is the Frobenius norm of the tensor which is equal to the $\ell_2$ norm when we view the tensor as a $d^4$ dimensional vector. However, it is not clear whether this function satisfies the \name~property, and empirically stochastic gradient descent is unstable for this objective.

We propose a new objective that aims to minimize the correlation between different components:
\begin{equation}\label{eq:hardprob}
\min\limits_{\forall i, \|u_i\|^2 = 1}  \quad \sum_{i\ne j} T(u_i,u_i,u_j,u_j),
\end{equation}
To understand this objective intuitively, we first expand vectors $u_k$ in the orthogonal basis formed by $\{a_i\}$'s. That is, we can write $u_k = \sum_{i=1}^{d}z_k(i) a_i$, where $z_k(i)$ are scalars that correspond to the coordinates in the $\{a_i\}$ basis. In this way we can rewrite $T(u_k,u_k,u_l,u_l) = \sum_{i=1}^{d} (z_k(i))^2 (z_l(i))^2$. From this form it is clear that the $T(u_k,u_k,u_l,u_l)$ is always nonnegative, and is equal to $0$  only when the support of $z_k$ and $z_l$ do not intersect. For the objective function, we know in order for it to be equal to 0 the $z$'s must have disjoint support. Therefore, we claim that $\{u_k\}, \forall k\in[d]$ is equivalent to $\{a_i\}, \forall i\in[d]$ up to permutation and sign flips when the global minimum (which is 0) is achieved. 

We further show that this optimization program satisfies the \name~property and all its local minima in fact achieves global minimum value. The proof is deferred to Appendix \ref{sec:hardcase}.
\begin{theorem}
The optimization problem (\ref{eq:hardprob}) is $(\alpha, \gamma, \epsilon,\delta)$-\name, for $\alpha = 1$ and $\gamma,\epsilon,\delta = 1/\mbox{poly}(d)$. Moreover, all its local minima have the form $u_i = \kappa_i a_{\pi(i)}$ for some $\kappa_i = \pm 1$ and permutation $\pi(i)$.
\end{theorem}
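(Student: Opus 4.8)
The plan is to reduce to a canonical coordinate system and then carry out an explicit first- and second-order analysis of the constrained problem using the projected gradient $\chi$ and projected Hessian $\mathfrak{M}$ from the constrained framework of Section~\ref{sec:constrainedproblem}. Since $\{a_i\}$ is orthonormal, an orthogonal change of basis leaves the objective, the norm constraints, and the claimed set of solutions invariant, so I may assume $a_i = e_i$ and $T = \sum_i e_i^{\otimes 4}$. Writing each column $z_k := u_k$, the objective becomes $f(Z) = \sum_{k\neq l}\sum_i z_k(i)^2 z_l(i)^2$ over the product-of-spheres manifold $\{\|z_k\|=1\}$. Every summand is nonnegative, so $f \ge 0$, and $f = 0$ forces $z_k$ and $z_l$ to have disjoint supports for all $k\neq l$; with $d$ unit-norm columns living in $\R^d$ this is only possible if each $z_k$ is a single signed basis vector and $k \mapsto \supp(z_k)$ is a bijection. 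Hence the global minima are exactly the points $u_i = \kappa_i a_{\pi(i)}$, which already identifies the candidate set; the remaining work is to show these are the only local minima and that every other stationary point is a strict saddle.

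Next I would compute derivatives. A direct calculation gives $\partial f/\partial z_k(i) = 4 z_k(i)(R_i - z_k(i)^2)$ with $R_i := \sum_l z_l(i)^2$, and the multiplier for the $k$-th sphere is $\lambda_k^\star = 2\sum_i z_k(i)^2(R_i - z_k(i)^2)$. The crucial structural fact is that the full Hessian of $f$ is block diagonal across the coordinates $i$: the block acting on the $i$-th row $x := (z_1(i),\dots,z_d(i))$ of $Z$ is $H^{(i)} = 4R_i I + 8xx^\top - 12\,\diag(x^2)$, so the projected Hessian is $\mathfrak{M}^{(i)} = H^{(i)} - 2\,\diag(\lambda^\star)$ restricted to the tangent space $\mathcal{T}(Z) = \{V : z_k^\top v_k = 0\ \forall k\}$. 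From $\chi(Z)=0$ one reads off the KKT conditions: for every pair $(i,k)$ either $z_k(i)=0$ or $R_i - z_k(i)^2 = \lambda_k^\star/2$. In particular $[\mathfrak{M}^{(i)}]_{kk} = 4(R_i - z_k(i)^2) - 2\lambda_k^\star$ vanishes on $\supp(z_k)$ and equals $4R_i - 2\lambda_k^\star$ off the support, while $[\mathfrak{M}^{(i)}]_{kl} = 8 z_k(i)z_l(i)$.

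I would then split the stationary points into two classes. If the supports of the columns are pairwise disjoint, the counting argument above forces a signed permutation, i.e.\ a global minimum; here $R_i \equiv 1$ and $\lambda_k^\star = 0$, so $H^{(i)} = 4I - 4 e_j e_j^\top$ with its lone zero eigenvalue along the on-support direction, which the tangent constraint $z_k^\top v_k = 0$ removes. A short computation then yields $V^\top \mathfrak{M} V = 4\|V\|^2$ on $\mathcal{T}(Z)$, giving local strong convexity (so $\alpha=1$ holds with room to spare). Otherwise some two columns $k,l$ overlap, $\supp(z_k)\cap\supp(z_l)\neq\emptyset$, and I would exhibit negative curvature using the coupling term $[\mathfrak{M}^{(i)}]_{kl}=8z_k(i)z_l(i)$: taking $V$ supported on columns $k,l$ with $v_k,v_l$ tangent and supported on the respective supports kills all diagonal contributions (they sit at on-support coordinates where the diagonal is $0$) and leaves $V^\top\mathfrak{M} V = 16\sum_{i\in\supp(z_k)\cap\supp(z_l)} z_k(i)z_l(i)\,v_k(i)v_l(i)$, whose sign I can choose to be negative whenever both overlapping columns are genuinely spread. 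The degenerate case where one overlapping column is already a pure signed basis vector is instead handled by the off-support single-column direction $v_l = e_i$, whose curvature $4R_i - 2\lambda_l^\star$ can be made negative using $\sum_i R_i = d$ to locate a coordinate with $R_i < \lambda_l^\star/2$.

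The main obstacle will be making this negative-curvature step uniform and quantitative: I must choose the overlapping/off-support test directions so that the resulting eigenvalue is bounded away from zero by an explicit $\gamma = 1/\poly(d)$ for \emph{every} non-solution stationary point, and then promote these pointwise statements to the robust $(\alpha,\gamma,\epsilon,\delta)$-\name~condition. For the latter I would use that $f$ is a fixed-degree polynomial on a compact manifold (hence $\chi$ and $\mathfrak{M}$ are Lipschitz), so that any point with $\|\chi(w)\| < \epsilon$ lies within $\delta$ of a stationary point; by continuity such a point inherits either the negative curvature $\lambda_{\min}(\mathfrak{M})\le -\gamma$ of a nearby saddle or the strong convexity of a nearby global minimum, while every remaining point satisfies $\|\chi(w)\|\ge\epsilon$. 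Choosing $\epsilon,\gamma,\delta = 1/\poly(d)$ consistent with the spectral gaps found above then yields the three cases of Definition~\ref{def:robustcondition} together with the conclusion that every local minimum has the form $u_i=\kappa_i a_{\pi(i)}$.
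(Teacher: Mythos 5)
Your reduction to coordinates, your formulas for $\chi$ and $\mathfrak{M}$ (block-diagonal across coordinates $i$, vanishing diagonal on supports, cross terms $8z_k(i)z_l(i)$), and your case split --- overlapping supports give cross-term negative curvature, the pure-basis-vector case gives a negative off-support diagonal via $\sum_i R_i = d$, and disjoint supports force a signed permutation at which $\mathfrak{M}$ is positive definite on the tangent space --- are exactly the skeleton of the paper's own proof (its Cases 1--3 and its local-minimum lemma). Your exact-stationary-point analysis is correct, it does establish the ``Moreover'' clause, and your off-support direction located through $\sum_i R_i = d$ is a slightly cleaner variant of the paper's Case 3.

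The genuine gap is the final step, where you promote this to the robust $(\alpha,\gamma,\epsilon,\delta)$-\name\ property with $\gamma,\epsilon,\delta = 1/\poly(d)$. Two problems arise. First, ``any point with $\|\chi(w)\|<\epsilon$ lies within $\delta$ of a stationary point'' does not follow from Lipschitzness of $\chi$ (Lipschitzness gives the converse implication); it follows from compactness, but only as a pure existence statement, and the resulting relation between $\epsilon$ and $\delta$ is a {\L}ojasiewicz-type constant for a degree-four polynomial system in $d^2$ variables --- nothing in your argument bounds it by $1/\poly(d)$. Second, even the uniform curvature level $\gamma$ over exact saddles is not delivered by your test directions: the curvature you exhibit is proportional to products of column entries on their exact supports, and at a stationary point an on-support entry satisfies $z_k(i)^2 = R_i - \lambda_k^{*}/2$, which can be arbitrarily close to zero; hence the infimum of your exhibited curvature over the non-minimum critical set is not $1/\poly(d)$ by this route. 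The paper sidesteps both issues by never passing through exact stationary points: it thresholds, defining $\BigC(u) = \{k : |u_k| > \epsilon_0\}$ with $\epsilon_0 = (10d)^{-6}$, and runs the entire case analysis directly at approximate stationary points ($\|\chi(U)\| \le \epsilon$) using only these ``large'' coordinates, tracking the $\poly(\epsilon_0)$ error terms explicitly. Deciding overlap versus disjointness on large coordinates is precisely what yields explicit constants ($\gamma = \epsilon_0^4/4$, $\delta = 2d\epsilon_0$, $\epsilon = 2\epsilon_0^6$) and makes every small-gradient point with disjoint large coordinates provably $\delta$-close to a signed permutation. To repair your proof you would need to redo your (otherwise correct) case analysis in this thresholded, approximate form, rather than appeal to continuity and compactness.
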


Note that we can also generalize this to handle 4th order tensors with different positive weights on the components, or other order tensors, see Appendix~\ref{app:tensorextension}.

\subsection{Implementing Stochastic Gradient Oracle}
\label{sec:icagrad}
To design an online algorithm based on objective function \eqref{eq:hardprob}, we need to give an implementation for the stochastic gradient oracle.

In applications, the tensor $T$ is oftentimes the expectation of multilinear operations of samples $g(x)$ over $x$ where $x$ is generated from some distribution $\mathcal{D}$. In other words, for any $x\sim \mathcal{D}$, the tensor is $T =\E[g(x)] $. Using the linearity of the multilinear map, we know $\E[g(x)] (u_i,u_i,u_j,u_j) = \E[g(x)(u_i,u_i,u_j,u_j)]$. Therefore we can define the loss function $\phi(u,x) = \sum_{i\ne j} g(x)(u_i,u_i,u_j,u_j)$, and the stochastic gradient oracle $SG(u) = \nabla_u \phi(u,x)$.

For concreteness, we look at a simple ICA example. In the simple setting we consider an unknown signal $x$ that is uniform\footnote{In general ICA the entries of $x$ are independent, non-Gaussian variables.} in $\{\pm 1\}^d$, and an unknown orthonormal linear transformation\footnote{In general (under-complete) ICA this could be an arbitrary linear transformation, however usually after the ``whitening'' step (see \cite{cardoso1989source}) the linear transformation becomes orthonormal.} $A$ ($AA^T = I$). The sample we observe is $y :=  Ax \in \R^d$. Using standard techniques (see \cite{cardoso1989source}), we know the $4$-th order cumulant of the observed sample is a tensor that has orthogonal decomposition. Here for simplicity we don't define 4-th order cumulant, instead we give the result directly.

Define tensor $Z\in \R^{d^4}$ as follows:
\begin{equation*}
\begin{array}{ll}
Z(i,i,i,i) =3, &   \forall i\in [d] \\
Z(i,i,j,j) = Z(i,j,i,j) = Z(i,j,j,i) = 1, &\forall i\ne j\in [d]\\
\end{array}
\end{equation*}
where all other entries of $Z$ are equal to $0$. 
The tensor $T$ can be written as a function of the auxiliary tensor $Z$ and multilinear form of the sample $y$.
\begin{lemma}\label{lm:constructTensorZ}
The expectation $\E[\frac{1}{2}(Z - y^{\otimes 4})] = \sum_{i=1}^d a_i^{\otimes 4}=T$, where $a_i$'s are columns of the unknown orthonormal matrix $A$.
\end{lemma}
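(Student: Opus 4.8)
The plan is to prove the identity as an equality of $4$th order tensors by directly computing $\E[y^{\otimes 4}]$ and comparing it with the fixed tensor $Z$. Since $Z$ is deterministic we have $\E[\frac12(Z - y^{\otimes 4})] = \frac12(Z - \E[y^{\otimes 4}])$, so the whole statement reduces to evaluating the single expectation $\E[y^{\otimes 4}]$ and rewriting $Z$ in the orthonormal basis $\{a_i\}$.

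First I would substitute $y = Ax = \sum_{k=1}^d x_k a_k$ and expand $y^{\otimes 4} = \sum_{k_1,\dots,k_4} x_{k_1}x_{k_2}x_{k_3}x_{k_4}\, a_{k_1}\otimes a_{k_2}\otimes a_{k_3}\otimes a_{k_4}$. Taking expectations and using that the entries $x_k$ are independent and uniform on $\{\pm1\}$ (so $\E[x_k]=\E[x_k^3]=0$ and $\E[x_k^2]=\E[x_k^4]=1$), only index tuples in which every value occurs an even number of times survive. These are exactly the tuples with all four indices equal (contributing $\sum_i a_i^{\otimes 4}$, each with weight $\E[x_i^4]=1$) and those that split into two distinct pairs under one of the three pairings of the four slots (each surviving with weight $\E[x_i^2 x_j^2]=1$). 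This yields
\begin{equation*}
\E[y^{\otimes 4}] = \sum_i a_i^{\otimes 4} + \sum_{i\ne j}\bigl(a_i\otimes a_i\otimes a_j\otimes a_j + a_i\otimes a_j\otimes a_i\otimes a_j + a_i\otimes a_j\otimes a_j\otimes a_i\bigr).
\end{equation*}

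Next I would rewrite $Z$ in the same basis. Reading off its definition, its entries are $Z_{i_1i_2i_3i_4} = \delta_{i_1i_2}\delta_{i_3i_4} + \delta_{i_1i_3}\delta_{i_2i_4} + \delta_{i_1i_4}\delta_{i_2i_3}$ (the diagonal gives $1+1+1=3$, the three listed off-diagonal patterns each give $1$, and all other entries $0$). Using orthonormality $AA^\top = I$, that is $\delta_{pq} = \sum_i (a_i)_p (a_i)_q$, each product of two Kronecker deltas becomes a double sum over the $a_i$, giving $Z = \sum_{i,j}\bigl(a_i\otimes a_i\otimes a_j\otimes a_j + a_i\otimes a_j\otimes a_i\otimes a_j + a_i\otimes a_j\otimes a_j\otimes a_i\bigr)$, where now the sum runs over all $i,j$. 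Splitting off the diagonal $i=j$ (where all three pairing tensors collapse to $a_i^{\otimes 4}$) gives $Z = 3\sum_i a_i^{\otimes4} + \sum_{i\ne j}(\dots)$ with precisely the same cross terms as above.

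Finally I would subtract the two expressions: the $i\ne j$ cross terms cancel identically, leaving $Z - \E[y^{\otimes 4}] = 2\sum_i a_i^{\otimes 4}$, hence $\frac12(Z - \E[y^{\otimes 4}]) = \sum_i a_i^{\otimes 4} = T$, as claimed. The only genuine work is the bookkeeping in the first step: correctly identifying which Rademacher fourth moments are nonzero and matching the three slot-pairings appearing in $\E[y^{\otimes 4}]$ with the three Kronecker-delta terms of $Z$. Equivalently, and more conceptually, one may recognize $Z$ as the sum of the three second-moment pairing products in the Isserlis/cumulant formula together with $\E[yy^\top]=I$, so that $\frac12(Z-\E[y^{\otimes4}]) = -\frac12\,\mathrm{Cum}_4(y)$; since the excess kurtosis of a $\pm1$ variable is $-2$, the fourth cumulant tensor equals $-2\sum_i a_i^{\otimes4}$, which gives the same conclusion.
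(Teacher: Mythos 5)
Your proof is correct. The paper itself gives no actual argument for this lemma — it simply states ``This lemma is easy to verify, and is closely related to cumulants'' — so your direct computation (Rademacher fourth moments picking out the all-equal and pair-pair index patterns, rewriting $Z$ via $\delta_{pq}=\sum_i (a_i)_p(a_i)_q$, and cancelling the cross terms) supplies exactly the verification the paper omits, and your closing remark identifying $\frac12(Z-\E[y^{\otimes 4}])$ with $-\frac12\,\mathrm{Cum}_4(y)$, where the excess kurtosis $-2$ of a $\pm1$ variable and multilinearity of cumulants give $\sum_i a_i^{\otimes 4}$, is precisely the cumulant connection the paper alludes to.
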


This lemma is easy to verify, and is closely related to cumulants~\cite{cardoso1989source}.  Recall that $\phi(u,y)$ denotes the loss (objective) function evaluated at sample $y$ for point $u$. Let $\phi(u,y) = \sum_{i\ne j} \frac{1}{2}(Z - y^{\otimes 4})(u_i,u_i,u_j,u_j)$. By Lemma~\ref{lm:constructTensorZ}, we know that  $\E[\phi(u,y)]$ is equal to the objective function as in Equation~\eqref{eq:hardprob}.
Therefore we rewrite objective (\ref{eq:hardprob}) as the following stochastic optimization problem
\begin{equation*}
\min\limits_{\forall i, \|u_i\|^2 = 1} \quad  \E[\phi(u,y)] ,~\text{where}~ \phi(u,y) = \sum_{i\ne j} \frac{1}{2}(Z - y^{\otimes 4})(u_i,u_i,u_j,u_j)
\end{equation*}
The stochastic gradient oracle is then 
\begin{equation}\label{eq:icasg}
\nabla_{u_i} \phi(u,y) = \sum\limits_{j\neq i}\left(\left\langle u_j ,u_j \right\rangle u_i  + 2 \left\langle u_i ,u_j \right\rangle u_j  - \left\langle u_j , y\right\rangle^2 \left\langle u_i ,y\right\rangle y \right).
\end{equation}
Notice that computing this stochastic gradient does not require constructing the $4$-th order tensor $T - y^{\otimes 4}$. In particular, this stochastic gradient can be computed very efficiently: 
\begin{claim}
The stochastic gradient (\ref{eq:icasg}) can be computed for all $u_i$'s in $O(d^3)$ time for one sample or $O(d^3+d^2k)$ for average of $k$ samples. 
\end{claim}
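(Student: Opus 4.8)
The plan is to organize the $d$ vectors into the matrix $U = [u_1 \mid \cdots \mid u_d] \in \R^{d \times d}$ and to split the gradient \eqref{eq:icasg} into its three natural summands, bounding the cost of computing each one simultaneously for all $i \in [d]$. The central manipulation is to replace each constrained sum $\sum_{j\neq i}$ by the full sum $\sum_j$ minus the diagonal $j=i$ term, so that the quantities shared across all indices $i$ are precomputed only once. Precomputing the squared norms $\|u_j\|^2$ and their total $S = \sum_j \|u_j\|^2$ costs $O(d^2)$, after which the first summand $(S - \|u_i\|^2)\,u_i$ is available in $O(d)$ per index, i.e.\ $O(d^2)$ overall.

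For the second summand I would use the identity $\sum_j \langle u_i, u_j\rangle u_j = (U U^\t) u_i$, so that $2\sum_{j\neq i}\langle u_i, u_j\rangle u_j = 2\bigl[(UU^\t U)_{:,i} - \|u_i\|^2 u_i\bigr]$. Forming $M = UU^\t$ and then the product $MU$ requires two $d\times d$ matrix multiplications, i.e.\ $O(d^3)$ time, after which each diagonal correction is $O(d)$ per index. For the third summand I would precompute the scalars $c_j = \langle u_j, y\rangle$ for all $j$ in $O(d^2)$ and the total $P = \sum_j c_j^2$ in $O(d)$; then $\sum_{j\neq i}\langle u_j, y\rangle^2 \langle u_i, y\rangle\, y = c_i(P - c_i^2)\,y$ is obtained in $O(d)$ per index, hence $O(d^2)$ overall. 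Adding the three contributions, the single-sample cost is dominated by the matrix-multiplication step and is therefore $O(d^3)$, with no $4$-th order tensor ever materialized.

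The key observation for the $k$-sample bound is that the first two summands do not depend on the sample $y$ at all: they are functions of $U$ alone. Hence for the average over $k$ samples I would compute these two contributions once in $O(d^3)$, and recompute only the third (sample-dependent) summand per sample; by the analysis above each such recomputation costs $O(d^2)$ (the scalars $c_j$, their squared sum $P$, and the $O(d)$-per-index scaling of $y$). Accumulating the running average of the third summand over the $k$ samples therefore adds $O(d^2 k)$, for a total of $O(d^3 + d^2 k)$. I expect the only delicate point to be the bookkeeping of the $\sum_{j\neq i}$ versus $\sum_j$ correction together with the isolation of the entire $y$-dependence into a single summand; once these are in place the bound follows immediately.
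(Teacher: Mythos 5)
Your proposal is correct and follows essentially the same approach as the paper's proof: the first two (sample-independent) terms are computed once in $O(d^3)$, and the sample-dependent third term is handled via the precomputed inner products $\langle u_j, y\rangle$ at $O(d^2)$ cost per sample. Your write-up simply makes explicit the details (the $UU^\t U$ matrix-product formulation and the diagonal-correction bookkeeping) that the paper's terse proof leaves implicit.
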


\begin{proof} The proof is straight forward as the first two terms  on the right hand side take $O(d^3)$ and is shared by all samples. The third term can be efficiently computed once the inner-products between all the $y$'s and all the $u_i$'s are computed (which takes $O(kd^2)$ time).
\end{proof}

\section{Experiments}\label{sec:experi}

We run simulations for Projected Noisy Gradient Descent (Algorithm~\ref{algo:psgdwn}) applied to orthogonal tensor decomposition.
The results show that the algorithm converges from random initial points efficiently (as predicted by the theorems), and our new formulation (\ref{eq:hardprob}) performs better than reconstruction error (\ref{eq:reconstruction}) based formulation.

\paragraph{Settings} We set dimension $d = 10$, the input tensor $T$ is a random tensor in $\R^{10^4}$ that has orthogonal decomposition (\ref{eq:orthodecomp}). The step size is chosen carefully for respective objective functions. The performance is measured by normalized reconstruction error $\mathcal{E} =\left({\|T - \sum_{i=1}^{d} u_i ^{\otimes4}\|_F^2}\right)/{\| T\|_F^2}$. 

\paragraph{Samples and stochastic gradients}
We use two ways to generate samples and compute stochastic gradients. In the first case we generate  sample $x$ by setting it equivalent to $d^{\frac{1}{4}} a_i$ with probability $1/d$. It is easy to see that $\E[x^{\otimes 4}] = T$. This is a very simple way of generating samples, and we use it as a sanity check for the objective functions.

In the second case we consider the ICA example introduced in Section~\ref{sec:icagrad}, and use Equation (\ref{eq:icasg}) to compute a stochastic gradient. In this case the stochastic gradient has a large variance, so we use mini-batch of size 100 to reduce the variance.

\paragraph{Comparison of objective functions} We use the simple way of generating samples for our new objective function (\ref{eq:hardprob}) and reconstruction error objective (\ref{eq:reconstruction}). The result is shown in Figure~\ref{fig:obj}. Our new objective function is empirically more stable (always converges within 10000 iterations); the reconstruction error do not always converge within the same number of iterations and often exhibits long periods with small improvement (which is likely to be caused by saddle points that do not have a significant negative eigenvalue).

\paragraph{Simple ICA example} As shown in Figure~\ref{fig:ICA}, our new algorithm also works in the ICA setting. When the learning rate is constant the error stays at a fixed small value. When we decrease the learning rate the error converges to 0.


%

\begin{figure}[!htb]
\begin{minipage}{0.5\textwidth}
\subfloat[New Objective (\ref{eq:hardprob})]
{\psfrag{reconstruction error}[Bc]{\scriptsize  Reconstruction Error}\psfrag{iter}[c]{\scriptsize  Iteration}\includegraphics[width=\textwidth]{\fighomeSaddle/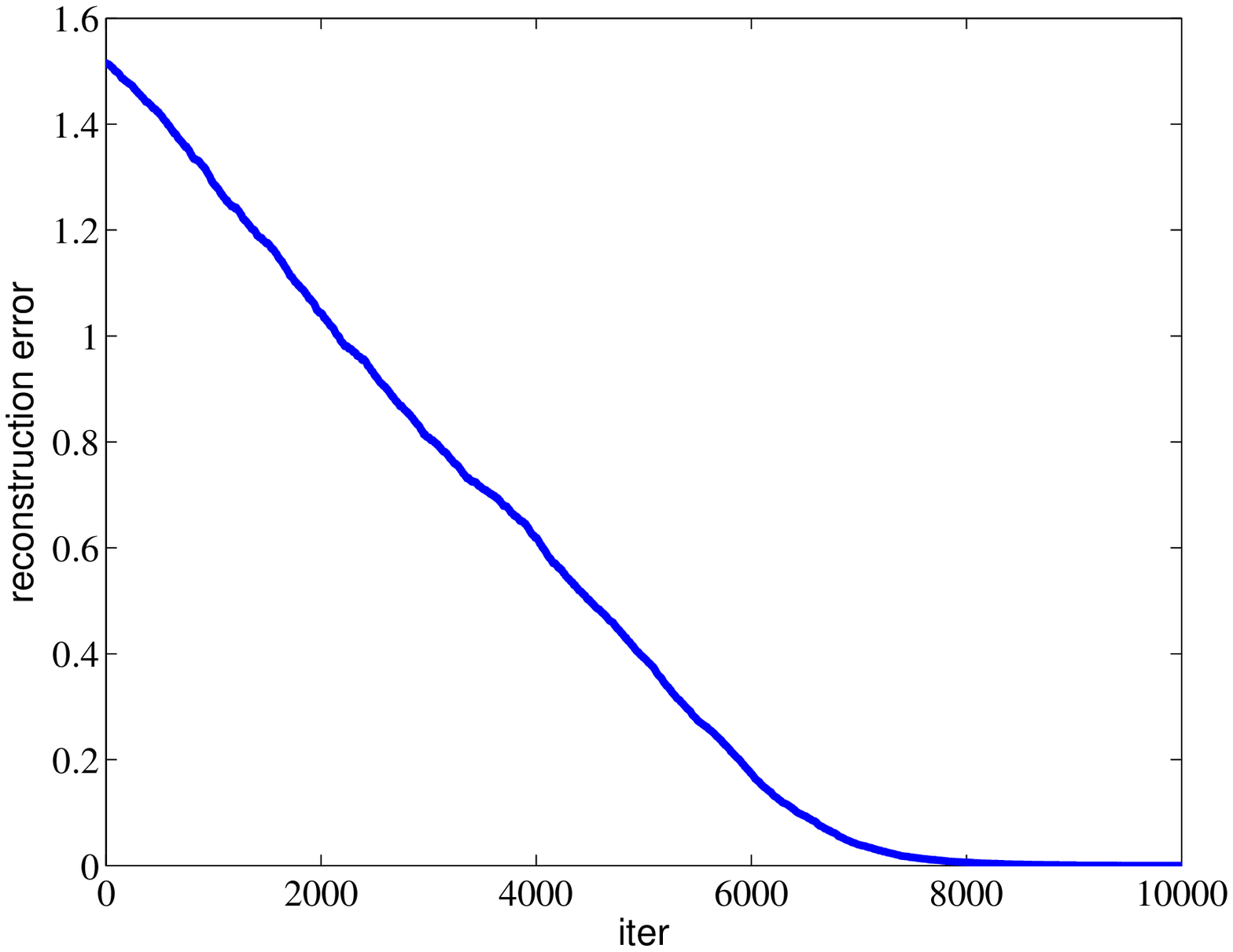}}
\end{minipage}\label{fig:NG1}
\hfil
\begin{minipage}{0.5\textwidth}
\subfloat[Reconstruction Error Objective (\ref{eq:reconstruction})]{\psfrag{reconstruction error}[Bc]{\scriptsize  Reconstruction Error}\psfrag{iter}[c]{\scriptsize  Iteration}\includegraphics[width=\textwidth]{\fighomeSaddle/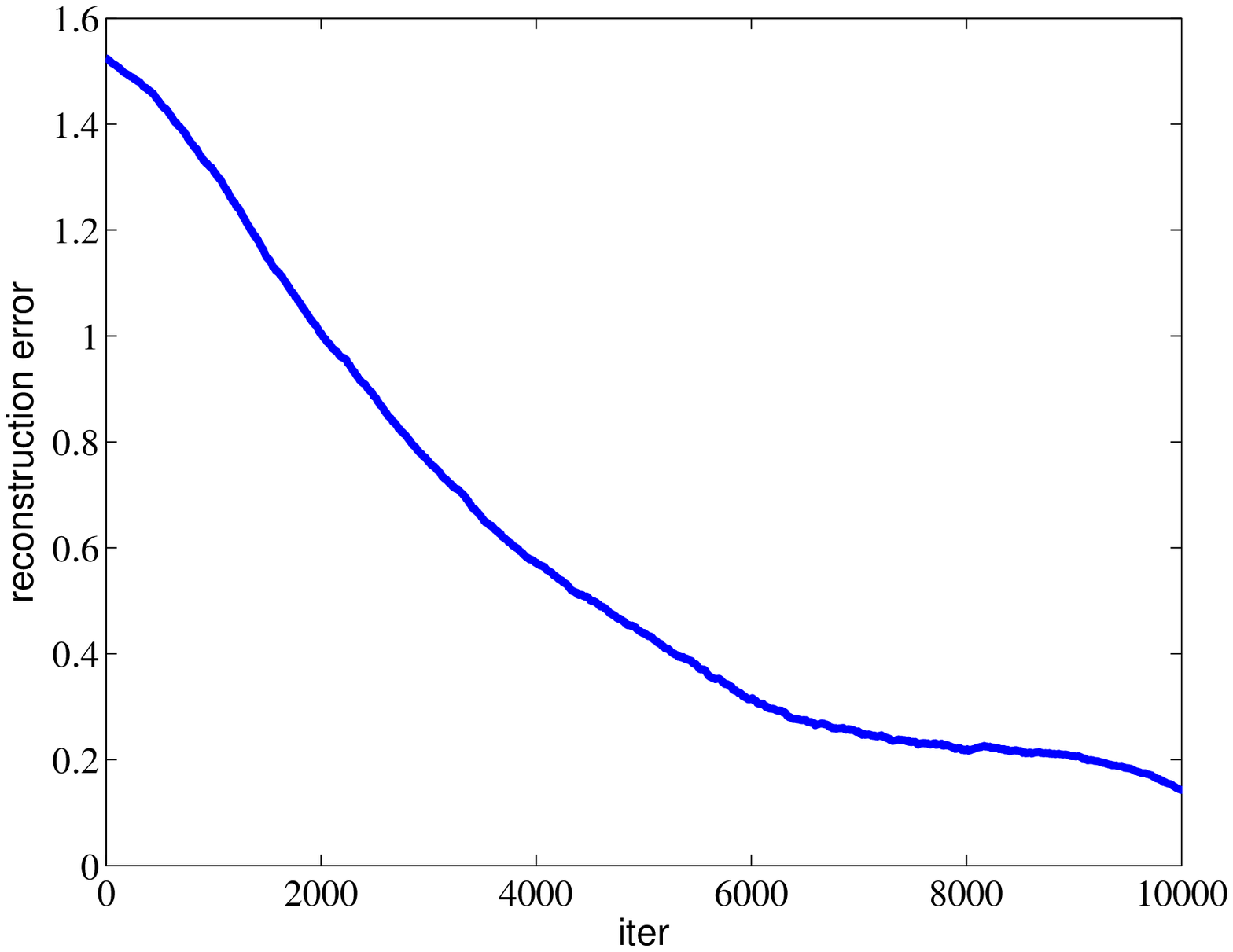}}
\end{minipage}\label{fig:IO1}
\caption{Comparison of different objective functions}\label{fig:obj}
\end{figure}

\begin{figure}[!htb]
\begin{minipage}{0.5\textwidth}
\subfloat[Constant Learning Rate $\eta$]{\psfrag{reconstruction error}[Bc]{\scriptsize  Reconstruction Error}\psfrag{iter}[c]{\scriptsize  Iteration}\includegraphics[width=\textwidth]{\fighomeSaddle/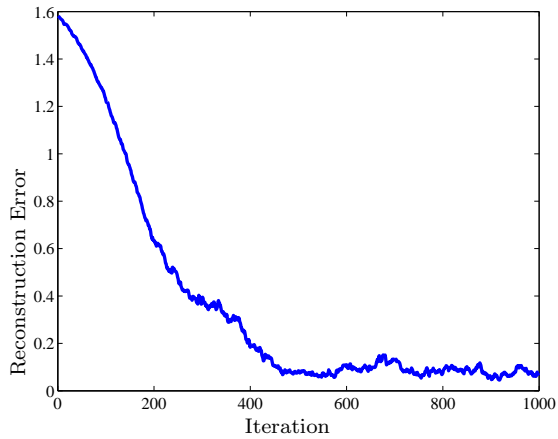}}
\end{minipage}\label{fig:ICA1}
\hfil
\begin{minipage}{0.5\textwidth}
\subfloat[Learning Rate $\eta/t$ (in $\log$ scale)]{\psfrag{reconstruction error}[Bc]{\scriptsize  Reconstruction Error}\psfrag{iter}[c]{\scriptsize  Iteration}\includegraphics[width=\textwidth]{\fighomeSaddle/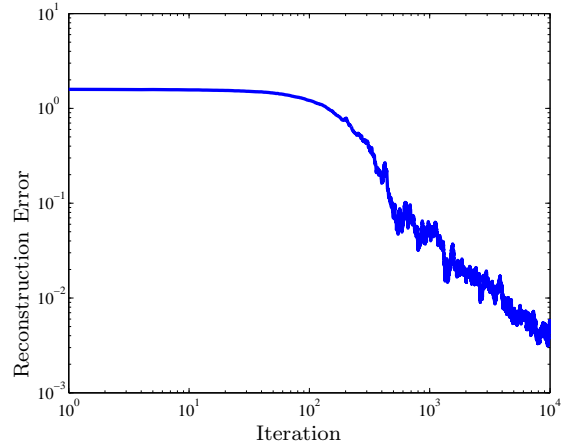}}
\end{minipage}\label{fig:ICA2}
\caption{Comparison of different objective functions}\label{fig:ICA}
\end{figure}


\section{Conclusion}
In this chapter we identify the \name~property and show stochastic gradient descent converges to a local minimum under this assumption. This leads to new online algorithm for orthogonal tensor decomposition. We hope this is a first step towards understanding stochastic gradient for more classes of non-convex functions. We believe \name~property can be extended to handle more functions, especially those functions that have similar symmetry properties.

\chapter{Applying Online Tensor Methods for Learning Latent Variable Models}\label{chapter:online}

In Chapter~\ref{chapter:saddle}, we have established a guaranteed online stochastic gradient descent algorithm for tensor decomposition. Theoretically, it is solid and well justified. We will now fill in the gap of theoretical findings and practical applications by applying the algorithm to real world problems. 

We consider two problems: (1) community detection (wherein we compute the  decomposition of a tensor which relates to the count of $3$-stars in a graph) and (2) topic modeling (wherein we consider the tensor related to co-occurrence of triplets of words in documents); decomposition of the these tensors allows us to learn the hidden communities and topics from observed data.

\paragraph{Community detection: } We recover hidden communities in several real datasets with high accuracy. When ground-truth communities are available, we propose a new error score based on the hypothesis testing methodology involving $p$-values and false discovery rates~\cite{strimmer2008fdrtool} to validate our results. 
The use of $p$-values eliminates the need to carefully tune the number of communities output by our algorithm, and hence, we obtain a flexible trade-off between the fraction of communities recovered and their estimation accuracy.  
We find that our method has very good accuracy on a range of network datasets: Facebook, Yelp and DBLP. We summarize the datasets used in this chapter in Table~\ref{tab:data_info}. To get an idea of our running times, let us consider the larger DBLP collaborative data set for a moment. It consists of $16$ million edges, one million nodes and $250$ communities. We obtain an error of $10\%$ and the method runs in about two minutes, excluding the $80$ minutes taken to read the edge data from files stored on the hard disk and converting it to sparse matrix format.

Compared to the state-of-the-art method for learning MMSB models using the stochastic variational inference algorithm of~\cite{gopalan2012scalable}, we obtain several orders of magnitude speed-up in the running time on multiple real datasets. This is because our method consists of efficient matrix operations which are \emph{embarrassingly parallel}. Matrix operations are   carried out in the sparse format which is efficient especially for social network settings involving large sparse graphs. Moreover, our code is flexible to run on a range of graphs such as directed, undirected and bipartite graphs, while the code of~\cite{gopalan2012scalable} is designed for homophilic networks, and cannot  handle bipartite graphs in its present format. Note that bipartite networks occur in the recommendation setting such as the Yelp data set. Additionally, the variational implementation in~\cite{gopalan2012scalable} assumes a homogeneous connectivity model, where any pair of communities connect with the same probability and the probability of intra-community connectivity is also fixed. Our framework does not suffer from this restriction. We also provide arguments to show that the Normalized Mutual Information (NMI) and other scores, previously used for evaluating the recovery of overlapping community, can underestimate the errors.

\paragraph{Topic modeling: }We also employ the tensor method for topic-modeling, and there are many similarities between the topic and community settings. For instance, each document has  multiple topics, while in the network setting, each node has membership in multiple communities. The words in a document are generated based on the latent topics in the document, and similarly, edges are generated based on the community memberships of the node pairs. The tensor method is even faster for topic modeling, since the word vocabulary size is typically much smaller than the size of real-world networks.   We learn interesting hidden topics in New York Times corpus from UCI bag-of-words data set\footnote{\url{https://archive.ics.uci.edu/ml/datasets/Bag+of+Words}} with around $100,000$ words and $300,000$ documents in about two minutes. We present the important words for recovered topics, as well as interpret ``bridging'' words, which occur in many topics. 

\paragraph{Implementations: }We present two implementations, \viz a GPU-based implementation which exploits the parallelism of SIMD architectures and a CPU-based implementation for larger datasets, where the GPU memory does not suffice. We discuss various aspects   involved such as implicit manipulation of tensors since explicitly forming tensors would be unwieldy for large networks,   optimizing for communication bottlenecks in a parallel deployment, the need for sparse matrix and vector operations since real world networks tend to be sparse, and a careful statistical approach to validating the results, when ground truth is available.

\section{Tensor Forms for Topic and Community  Models}
\label{sec:sysmodel}
In this section, we briefly recap the topic and community models, as well as the tensor forms for their exact moments, derived in~\cite{AGHKT12,AnandkumarEtal:community12COLT}.

\subsection{Topic Modeling}
In  topic modeling, a document is viewed as a bag of words. Each document has a latent set of topics, and $h=(h_1,h_2,\ldots,h_k)$ represents the proportions of $k$ topics in a given document.  Given the topics $h$, the words are independently drawn and are exchangeable, and hence, the term ``bag of words'' model. We represent the
words in the document by $d$-dimensional random vectors $x_1, x_2, \ldots x_l \in \mathbb{R}^d$, where $x_i$ are coordinate basis vectors in $\mathbb{R}^d$ and $d$ is the size of the word vocabulary. Conditioned on $h$, the words in a document satisfy $\Ebb[x_i|h]=\mu h$, where $\mu : = [\mu_1,\ldots,\mu_k]$ is the topic-word matrix. And thus $\mu_j $ is the topic vector satisfying $\mu_j = \Pr\left(x_i \vert h_j\right)$,  $\forall j\in[k]$.
Under the Latent Dirichlet Allocation (LDA) topic model~\cite{blei2012probabilistic}, $h$ is drawn from a Dirichlet distribution with concentration parameter vector $\alpha = [\alpha_1,\ldots,\alpha_k]$. In other words, for each document $u$,  $h_u \stackrel{iid}{\sim}\Dir(\alpha),\ \forall u\in [n]$ with parameter vector $\alpha \in \R_{+}^k$. We define the Dirichlet concentration (mixing) parameter
\[
\alpha_0:=\sum_{i\in[k]}{\alpha_i}.
\]
  The Dirichlet distribution allows us to specify the extent of overlap among the topics by controlling for sparsity in topic density function. A larger $\alpha_0$ results in   more overlapped (mixed) topics. A special case of $\alpha_0=0$ is the single topic model.

Due to exchangeability, the order of the words does not matter, and it suffices to consider the frequency vector for each document, which counts the number of occurrences of each word in a document. Let $c_t:= (c_{1,t}, c_{2,t},\ldots, c_{d,t})\in\Rbb^{d}$ denote the frequency vector for $t^{\tha}$ document, and let $n$ be the number of documents.

We consider the first three order empirical moments,  given by
{\small
\begin{align}
\label{eq:1moment_topic}
M_1^{\topic} &: = \frac{1}{n} \sum\limits_{t=1}^{n} c_t\\
\label{eq:2moment_topic}
M_2^{\topic} &:=
\frac{\alpha_0+1}{n} \sum\limits_{t=1}^{n}{\left(c_t\otimes c_t - \diag\left(c_t\right)\right)} - {\alpha_0}M_1^{\topic}\otimes M_1^{\topic}\\
\label{eq:3moment_topic}
M_3^{\topic} & :=
 \frac{(\alpha_0+1)(\alpha_0+2)}{2n}\sum\limits_{t=1}^{n}\left[  c_t\otimes c_t\otimes c_t - \sum\limits_{i=1}^{d} \sum\limits_{j=1}^{d}c_{i,t}c_{j,t}(e_i\otimes e_i\otimes e_j) \right. \nonumber \\
& \left. - \sum\limits_{i=1}^{d} \sum\limits_{j=1}^{d}c_{i,t}c_{j,t}(e_i\otimes e_j\otimes e_i) - \sum\limits_{i=1}^{d} \sum\limits_{j=1}^{d} c_{i,t}c_{j,t}(e_i\otimes e_j\otimes e_j) + 2 \sum\limits_{i=1}^{d} c_{i,t}(e_i\otimes e_i\otimes e_i) \right] \nonumber\\
& -\frac{\alpha_0(\alpha_0+1)}{2n} \sum\limits_{t=1}^{n}\left[   \sum\limits_{i=1}^{d} c_{i,t}(e_i\otimes e_i\otimes M_1^{\topic}) +   \sum\limits_{i=1}^{d} c_{i,t}(e_i \otimes M_1^{\topic}\otimes e_i)  \right.\nonumber\\
&\left.   +   \sum\limits_{i=1}^{d} c_{i,t}(M_1^{\topic} \otimes e_i \otimes e_i)    \right]
+ {\alpha_0^2}M_1^{\topic} \otimes M_1^{\topic} \otimes M_1^{\topic}.
\end{align}
}
We recall Theorem 3.5 of~\cite{AGHKT12}:
\begin{lemma}\label{lemma:topic}
The exact moments can be factorized as
\begin{align}
\label{eq:1Emoment_topic}
\Ebb[M_1^{\topic}] & = \sum\limits_{i=1}^{k} \frac{\alpha_i}{\alpha_0}\mu_i\\
\label{eq:2Emoment_topic}
\Ebb[M_2^{\topic}] & = \sum\limits_{i=1}^{k} \frac{\alpha_i}{\alpha_0}\mu_i \otimes \mu_i\\
\label{eq:3Emoment_topic}
\Ebb[M_3^{\topic}] & =\sum\limits_{i=1}^{k}  \frac{\alpha_i}{\alpha_0}\mu_i\otimes \mu_i \otimes \mu_i.
\end{align}

where $\mu = [\mu_1,\ldots,\mu_k]$ and $\mu_i = \Pr\left(x_t \vert h=i\right)$, $\forall t\in[l]$. In other words, $\mu$ is the topic-word matrix.
\end{lemma}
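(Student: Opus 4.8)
The plan is to establish all three factorizations by a direct method-of-moments computation, resting on two pillars: the conditional independence of the word tokens given the topic mixture $h$, and the closed-form raw (rising-factorial) moments of the Dirichlet distribution. Because the $n$ documents are drawn i.i.d., each empirical average is an unbiased estimator of the corresponding single-document population quantity (products of separate averages, such as $M_1^{\topic}\otimes M_1^{\topic}$, factor through independence across documents), so it suffices to verify the stated identities for the population moments of one document. I would first record the building blocks: writing a word token as a one-hot vector $x$, exchangeability and the model give $\E[x\mid h]=\mu h=\sum_j h_j\mu_j$, and for \emph{distinct} tokens $x,x',x''$ conditional independence yields $\E[x\otimes x'\mid h]=(\mu h)^{\otimes 2}$ and $\E[x\otimes x'\otimes x''\mid h]=(\mu h)^{\otimes 3}$. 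The Dirichlet moments I will use are $\E[h_i]=\alpha_i/\alpha_0$, $\E[h_ih_j]=\alpha_i(\alpha_j+\delta_{ij})/(\alpha_0(\alpha_0+1))$, and the analogous third-order formula.

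The first identity is immediate: $\E[M_1^{\topic}]=\mu\,\E[h]=\sum_i(\alpha_i/\alpha_0)\mu_i$. For the second, the key observation is that a one-hot vector satisfies $x\otimes x=\diag(x)$, so within a document $\sum_{\ell}x_\ell\otimes x_\ell=\diag(c)$ and hence $c\otimes c-\diag(c)=\sum_{\ell\neq\ell'}x_\ell\otimes x_{\ell'}$ is exactly the sum over \emph{ordered pairs of distinct tokens}; this is precisely why the diagonal is subtracted. Taking conditional expectation gives a multiple of $(\mu h)^{\otimes 2}$, and averaging over $h$ with $\E[h_ih_j]$ yields
\begin{equation*}
\E_h[(\mu h)^{\otimes 2}]=\tfrac{1}{\alpha_0(\alpha_0+1)}\Big(\alpha_0^2\,\bar\mu^{\otimes 2}+\textstyle\sum_i\alpha_i\,\mu_i^{\otimes 2}\Big),
\end{equation*}
where $\bar\mu:=\sum_i(\alpha_i/\alpha_0)\mu_i$. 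The prefactor $(\alpha_0+1)$ together with the subtraction of $\alpha_0\,M_1^{\topic}\otimes M_1^{\topic}$ (whose population expectation is $\bar\mu^{\otimes 2}$) is engineered to cancel exactly the rank-one contamination $\propto\bar\mu^{\otimes 2}$, leaving $\sum_i(\alpha_i/\alpha_0)\mu_i^{\otimes 2}$.

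The third identity is the crux and where the real work lies. Using $x\otimes x=\diag(x)$ repeatedly, the bracketed combination of triple products and the $e_i\otimes e_i\otimes e_j$-type terms in the definition of $M_3^{\topic}$ is constructed so that its conditional expectation isolates the sum over triples of \emph{distinct} tokens, i.e.\ a multiple of $(\mu h)^{\otimes 3}$. Averaging over $h$ and expanding $\E[h_ih_jh_l]$ according to the coincidence pattern of $(i,j,l)$ splits $\E_h[(\mu h)^{\otimes 3}]$ into four families: the fully-distinct part proportional to $\bar\mu^{\otimes 3}$, three symmetrized ``two-equal'' parts of the form $\bar\mu\otimes\big(\sum_i(\alpha_i/\alpha_0)\mu_i^{\otimes 2}\big)$ (one for each slot placement), and the fully-equal part $\sum_i\mu_i^{\otimes 3}$. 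The remaining terms in the definition — the three $M_1^{\topic}$-contraction families and the final $\alpha_0^2\,\bar\mu^{\otimes 3}$ term, with their $\alpha_0(\alpha_0+1)/2$ and $(\alpha_0+1)(\alpha_0+2)/2$ coefficients — are precisely the corrections needed to annihilate all of these lower-order contributions.

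The main obstacle is this last cancellation: one must match each correction term in the definition of $M_3^{\topic}$ against the corresponding family in the Dirichlet expansion, keeping careful track of the combinatorial multiplicities from coincident tokens, the three-fold symmetrizations over tensor slots, and the rising-factorial coefficients $\alpha_0(\alpha_0+1)(\alpha_0+2)$. This is a finite but delicate bookkeeping exercise; once every contamination term is verified to cancel, only $\sum_i(\alpha_i/\alpha_0)\mu_i^{\otimes 3}$ survives, establishing the claim. A minor additional point to address is the normalization by document length (the token counts introduce combinatorial factors such as $L(L-1)$), which I would handle by the same distinct-tuple counting used throughout so that the defined moments remain unbiased for the factorized tensors.
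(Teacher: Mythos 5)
First, context for the comparison: the paper does not actually prove this lemma; it is recalled verbatim as Theorem 3.5 of \cite{AGHKT12}. So your proposal is effectively being measured against that standard derivation, and your route (one-hot identities to isolate distinct-token tuples, conditional independence to reduce everything to powers of $\mu h$, Dirichlet rising-factorial moments, then cancellation of the low-rank contamination) is exactly the route taken there. Your treatment of $M_1^{\topic}$ and $M_2^{\topic}$ is correct and essentially complete: $c\otimes c-\diag(c)=\sum_{\ell\neq\ell'}x_\ell\otimes x_{\ell'}$, and with $\bar\mu:=\sum_i(\alpha_i/\alpha_0)\mu_i$ one has $\E_h[(\mu h)^{\otimes 2}]=\tfrac{1}{\alpha_0(\alpha_0+1)}\bigl(\alpha_0^2\bar\mu^{\otimes 2}+\sum_i\alpha_i\mu_i^{\otimes 2}\bigr)$, so the stated prefactors kill the $\bar\mu^{\otimes 2}$ term. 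Two caveats, one of which you only half-acknowledge: the count sums must be normalized by $l(l-1)$, resp.\ $l(l-1)(l-2)$, to be unbiased for anything; and your claim that products of empirical averages "factor through independence" is not exact, since $\E[M_1^{\topic}\otimes M_1^{\topic}]$ contains $t=t'$ diagonal terms giving an $O(1/n)$ bias, so the identities are population/asymptotic statements rather than exact finite-$n$ ones.

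The genuine gap is in the third-order step, which you defer as "delicate bookkeeping" and simply assert will close. That cancellation is the entire content of the lemma, and if you carry it out against the definition of $M_3^{\topic}$ as printed in the paper, it fails. The Dirichlet expansion of $\E_h[(\mu h)^{\otimes 3}]$ produces cross terms $\sum_i\alpha_i\,\mu_i\otimes\mu_i\otimes\bar\mu$ (plus the two slot permutations); to cancel these, the correction terms must be empirical surrogates of $\E[x_1\otimes x_2\otimes M_1^{\topic}]$ with $x_1,x_2$ \emph{distinct} tokens, i.e.\ $(c_t\otimes c_t-\diag(c_t))\otimes M_1^{\topic}$ suitably normalized, whose expectation is proportional to $\bigl(\alpha_0^2\bar\mu^{\otimes 2}+\sum_i\alpha_i\mu_i^{\otimes 2}\bigr)\otimes\bar\mu$. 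The printed corrections are instead $\sum_i c_{i,t}\,e_i\otimes e_i\otimes M_1^{\topic}=\diag(c_t)\otimes M_1^{\topic}$, a coincident-token object whose expectation is $l\,\diag(\bar\mu)\otimes\bar\mu$. A diagonal matrix $\diag(\bar\mu)$ cannot cancel the non-diagonal rank-$k$ term $\sum_i\tfrac{\alpha_i}{\alpha_0}\mu_i\otimes\mu_i$ in general, so no choice of scalar coefficients rescues the identity. To complete the proof you must either work with the population form of \cite{AGHKT12} — corrections $\E[x_1\otimes x_2\otimes M_1^{\topic}]$ and its permutations, for which your Dirichlet computation does annihilate everything, with coefficients $\tfrac{(\alpha_0+1)(\alpha_0+2)}{2}$, $-\tfrac{\alpha_0(\alpha_0+1)}{2}$, $+\alpha_0^2$ exactly as claimed — or explicitly repair the count-based definition to use distinct-pair corrections. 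As written, your proposal would only discover this obstruction at the final step, and the assertion that the printed terms "are precisely the corrections needed" is false.
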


From the Lemma~\ref{lemma:topic}, we observe that the first three moments of a LDA topic model have a simple form involving the topic-word matrix $\mu$ and Dirichlet parameters $\alpha_i$. In~\cite{AGHKT12}, it is shown that these parameters  can be recovered under a weak non-degeneracy assumption. We will employ tensor decomposition techniques to learn the parameters.

\subsection{Mixed Membership Model}
In the mixed membership stochastic block model (MMSB), introduced by~\cite{ABFX08}, the edges in a social network are related to the hidden communities of the nodes.
A batch tensor decomposition technique for learning MMSB was derived in~\cite{AnandkumarEtal:community12COLT}.

Let $n$ denote the number of nodes, $k$ the number of communities and $G\in \mathbb{R}^{n \times n}$ the adjacency matrix of the graph.  Each node $i\in [n]$ has an associated community membership vector $\pi_i \in \Rbb^k$, which is a latent variable, and the vectors are contained in a simplex, i.e., \[\sum_{i\in [k]} \pi_u(i)=1, \ \forall u\in [n]\]
where the notation $[n]$ denotes the set $\{ 1, \ldots, n \}$.  Membership vectors are sampled from the Dirichlet distribution $\pi _u \stackrel{iid}{\sim} \Dir(\alpha), \ \forall u\in [n]$ with parameter vector $\alpha \in \R_{+}^k$ where $\alpha_0:=\sum_{i\in[k]}{\alpha_i}$.
As in the topic modeling setting, the Dirichlet distribution allows us to specify the extent of overlap among the communities by controlling for sparsity in community membership vectors. A larger $\alpha_0$ results in   more overlapped (mixed) memberships. A special case of $\alpha_0=0$ is the stochastic block model~\cite{AnandkumarEtal:community12COLT}.

The \emph{community connectivity matrix} is denoted by $P\in [0,1]^{k \times k}$ where $P(a,b)$ measures the connectivity between communities $a$ and $b$, $\forall a,b \in [k]$.  We model the adjacency matrix entries as either of the two settings given below:

\paragraph{Bernoulli model: }This models a network with unweighted edges. It is used for Facebook and DBLP datasets in Section~\ref{sec:results} in our experiments.
\[
G_{ij} \stackrel{iid}{\sim} \Ber (\pi_i ^\top P \pi_j),\, \  \forall i,j\in[n] .
\]

\paragraph{Poisson model~\cite{karrer2011stochastic}: }This models  a network with weighted edges. It is used for the Yelp data set in Section~\ref{sec:results} to incorporate the review ratings. \[
G_{ij} \stackrel{iid}{\sim} \Poi (\pi_i ^\top P \pi_j),\,\ \forall i,j\in[n].\]

The tensor decomposition approach involves up to third order moments, computed from the observed network. In order to compute the moments, we  partition the  nodes randomly into sets $X,A,B,C$. Let $F_A := \Pi_A^\top P^\top$, $F_B := \Pi_B^\top P^\top$, $F_C := \Pi_C^\top P^\top$ (where $P$ is the community connectivity matrix and $\Pi$ is the membership matrix) and $\hat{\alpha}:=\left( \frac{\alpha_1}{\alpha_0},\ldots,\frac{\alpha_k}{\alpha_0} \right)$ denote the normalized Dirichlet concentration parameter.  We define pairs over $Y_1$ and $Y_2$ as $\Pairs(Y_1,Y_2): = G_{X,Y_1}^\top \otimes G_{X,Y_2}^\top$.  Define the following matrices
\begin{align}\label{eq:transitionMat}
Z_B & := \Pairs\left(A,C\right) \left(\Pairs\left(B,C\right)\right)^\dag,\\
Z_C & := \Pairs\left(A,B\right) \left(\Pairs\left(C,B\right)\right)^\dag.
\end{align}

We consider the first three empirical moments, given by
\begin{align}
\label{eq:1moment_graph}
{M_{1}}^{\community} & : = \frac{1}{\nx}\sum\limits_{x\in X} G_{x,A}^\top \\
\label{eq:2moment_graph}
{M_2}^{\community} &: = \frac{\alpha_0 + 1}{\nx} \sum\limits_{x\in X} Z_C G_{x,C}^\top G_{x,B} Z_B^\top -  {\alpha_0}{M_{1}}^{\community} {{M_{1}}^{\community}}^\top \\
\label{eq:3moment_graph}
{M_3}^{\community}  & :=  \frac{(\alpha_0 + 1)(\alpha_0 + 2)}{2\nx}\sum_{x\in X}G^\top_{x,A}\otimes Z_B G^\top_{x,B}\otimes Z_C G^\top_{x,C}\nonumber\\
& + \alpha_0^2 {M_{1}}^{\community}  \otimes {M_{1}}^{\community}  \otimes {M_{1}}^{\community}  \nonumber \\
& - \frac{\alpha_0 (\alpha_0 + 1)}{2 \nx} \sum_{x \in X}\left( G_{x,A}^\top \otimes Z_B G_{x,B}^\top \otimes {M_{1}}^{\community}  +  G_{x,A}^\top  \otimes {M_{1}}^{\community}  \otimes Z_C G_{x,C}^\top \right. \nonumber\\
&\left.+{M_{1}}^{\community} \otimes Z_B G_{x,B}^\top \otimes Z_C G_{x,C}^\top \right)
\end{align}

We now recap Proposition 2.2 of~\cite{AnandkumarEtal:community12} which provides the form of these moments under expectation.

\begin{lemma}
The exact moments can be factorized as
\begin{align}
\label{eqn:single}
\mathbb{E} [{M_1}^{\community} | \Pi_A, \Pi_B, \Pi_C] & :=  \sum_{i\in [k]} \hat{\alpha}_i (F_A)_i\\
\label{eqn:pair}
\mathbb{E} [{M_2}^{\community} | \Pi_A, \Pi_B, \Pi_C] & :=  \sum_{i\in [k]} \hat{\alpha}_i (F_A)_i \otimes (F_A)_i \\
\label{eqn:triples}
\mathbb{E} [{M_3}^{\community} | \Pi_A, \Pi_B, \Pi_C] & := \sum_{i\in [k]} \hat{\alpha}_i (F_A)_i \otimes (F_A)_i \otimes (F_A)_i
\end{align}
where $\otimes$ denotes the {\em Kronecker product} and $(F_A)_i$ corresponds to the $i^{th}$ column of $F_A$.
\end{lemma}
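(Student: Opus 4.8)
The plan is to exploit the conditional-independence structure that the random partition $X,A,B,C$ induces, together with the linearity of the Bernoulli/Poisson edge expectations. The starting observation is that, conditioned on a single node $x\in X$ with membership $\pi_x$ and on the membership matrices $\Pi_A,\Pi_B,\Pi_C$, the three neighborhood vectors $G_{x,A}^\top, G_{x,B}^\top, G_{x,C}^\top$ are mutually independent (edges are drawn independently), and each has a linear conditional mean $\E[G_{x,A}^\top\mid \pi_x,\Pi_A]=\Pi_A^\top P^\top \pi_x = F_A\pi_x$, and likewise $F_B\pi_x$ and $F_C\pi_x$. Thus the three views share the common latent factor $\pi_x$, which is exactly the multi-view / exchangeable structure underlying the topic-model moments in Lemma~\ref{lemma:topic}.

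First I would establish the alignment property of the bridge matrices: in the population sense $Z_B F_B = F_A$ and $Z_C F_C = F_A$. This follows because $\E[\Pairs(A,C)\mid\Pi] = F_A\,\E[\pi_x\pi_x^\top]\,F_C^\top$ (up to the averaging over $x\in X$) and $\E[\Pairs(B,C)\mid\Pi] = F_B\,\E[\pi_x\pi_x^\top]\,F_C^\top$, using independence of the $A$- and $C$-views (resp.\ the $B$- and $C$-views). Under the non-degeneracy assumption that $F_A,F_B,F_C$ and $\E[\pi\pi^\top]$ have full column rank $k$, taking the pseudo-inverse in $Z_B=\Pairs(A,C)\Pairs(B,C)^\dag$ cancels the common factor $\E[\pi\pi^\top]F_C^\top$ and yields $Z_B F_B=F_A$; symmetrically $Z_C F_C=F_A$. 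The role of $Z_B,Z_C$ is therefore to re-express the $B$- and $C$-views in the coordinate system of the $A$-view.

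Next I would assemble the three moments. For $M_1^{\community}$ the result is immediate: $\E[M_1^{\community}\mid\Pi]=\frac{1}{\nx}\sum_{x\in X}F_A\,\E[\pi_x]=F_A\hat\alpha=\sum_i\hat\alpha_i (F_A)_i$, since $\E[\pi_x]=\alpha/\alpha_0=\hat\alpha$ for $\pi_x\sim\Dir(\alpha)$. For $M_2^{\community}$ and $M_3^{\community}$ I would condition on $\pi_x$, use view-independence to factor the conditional expectation, and then apply the alignment identities: the raw term $Z_C G_{x,C}^\top G_{x,B}Z_B^\top$ has conditional mean $Z_C F_C\,\pi_x\pi_x^\top\,F_B^\top Z_B^\top = F_A\,\pi_x\pi_x^\top\,F_A^\top$, and the raw third-order term collapses to $(F_A\pi_x)^{\otimes 3}$. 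Averaging over $\pi_x\sim\Dir(\alpha)$ then introduces the Dirichlet moments $\E[\pi\pi^\top]=\frac{1}{\alpha_0+1}(\diag(\hat\alpha)+\alpha_0\hat\alpha\hat\alpha^\top)$ and its third-order analogue. The final bookkeeping is to verify that the explicit prefactors $(\alpha_0+1)$ and $(\alpha_0+1)(\alpha_0+2)/2$, together with the subtracted lower-order terms built from $M_1^{\community}$, are precisely the combination that annihilates the $\hat\alpha\hat\alpha^\top$ and mixed contributions, leaving the clean diagonal form $F_A\diag(\hat\alpha)F_A^\top=\sum_i\hat\alpha_i (F_A)_i\otimes(F_A)_i$ and its third-order counterpart.

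I expect the main obstacle to be the alignment step rather than the final arithmetic. The identities $Z_B F_B=F_A$ and $Z_C F_C=F_A$ hold cleanly only for the population versions of the $\Pairs$ matrices and their pseudo-inverses; justifying them requires the full-rank (non-degeneracy) conditions on $P$ and on the membership matrices, and care in arguing that the pseudo-inverse selects the correct inverse on the $k$-dimensional range. A secondary, purely mechanical difficulty is carrying the third-order Dirichlet moment through the tensor algebra and matching each correction term in the definition of $M_3^{\community}$; this is routine but error-prone, and is most cleanly handled by reducing it to the already-established topic-model computation of Lemma~\ref{lemma:topic} via the shared factor $\pi_x$.
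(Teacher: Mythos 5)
Your proposal is correct, and it is essentially the standard argument: the paper itself gives no proof of this lemma (it is recapped verbatim as Proposition 2.2 of the cited community-detection work), and that source proceeds exactly as you do — conditional independence of the three views given $\pi_x$, linear conditional means $F_A\pi_x$, $F_B\pi_x$, $F_C\pi_x$, the population alignment identities $Z_BF_B=F_A$ and $Z_CF_C=F_A$ obtained by cancelling the common factor $\E[\pi\pi^\top]F_C^\top$ under full-column-rank assumptions, and then the Dirichlet moment bookkeeping that the centering terms are designed to annihilate. Your one flagged caveat is the right one: the identities involving $Z_B,Z_C$ and the subtracted $M_1{M_1}^\top$ terms hold exactly only at the population level (the empirical $Z_B,Z_C$ are random), which is also how the cited proposition treats them.
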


We observe that the moment forms above for the MMSB model have a similar form as the moments of the topic model in the previous section. Thus, we can employ a unified framework for both topic and community modeling  involving decomposition of the third order moment tensors $M_3^{\topic}$ and $M_3^{\community}$. Second order moments $M_2^{\topic}$ and $M_2^{\community}$ are used for \emph{preprocessing} of the data (i.e., whitening, which is introduced in detail in Section~\ref{sec:DRandWhite}). For the sake of the simplicity of the notation, in the rest of the chapter, we will use $M_2$ to denote empirical second order moments for both $M_2^{\topic}$ in topic modeling setting, and $M_2^{\community}$ in the mixed membership model setting. Similarly, we will use $M_3$ to denote empirical third order moments for both $M_3^{\topic}$ and $M_3^{\community}$.

\section{Learning using Third Order Moment}

Our learning algorithm uses up to the third-order moment to estimate the topic word matrix $\mu$ or the community membership matrix $\Pi$. First, we obtain co-occurrence of triplet words or subgraph counts (implicitly).
 Then, we perform preprocessing using second order moment $M_2$. Then we perform tensor decomposition efficiently using {\em stochastic gradient descent}~\cite{kushner2003stochastic} on $M_3$. We note that, in our implementation of the algorithm on the Graphics Processing Unit (GPU), linear algebraic operations are extremely fast.
 We also implement our algorithm on the CPU for large datasets which exceed the memory capacity of GPU and use sparse matrix operations which results in large gains in terms of both the memory and the running time requirements. The overall approach is summarized in Algorithm~\ref{alg:otmllvm}.
\begin{algorithm}
\begin{algorithmic}[1]
\REQUIRE Observed data: social network graph or document samples.
\ENSURE Learned latent variable model and infer hidden attributes.
\STATE Estimate the third order moments tensor $M_3$ (implicitly). The tensor is not formed explicitly as we break down the tensor operations into vector and matrix operations.
\STATE Whiten the data, via SVD of $M_2$, to reduce dimensionality via symmetrization and orthogonalization.  The third order moments $M_3$ are whitened as $\mathcal{T}$.
\STATE Use stochastic gradient descent to estimate spectrum of whitened (implicit) tensor $\mathcal{T}$.
\STATE Apply post-processing to obtain the topic-word matrix or the community memberships.
\STATE If ground truth is known, validate the results using various evaluation measures.
\end{algorithmic}
\caption[Moment-based spectral learning of latent variable models]{Overall approach for learning latent variable models via a moment-based approach.}
\label{alg:otmllvm}
\end{algorithm}

\subsection{Dimensionality Reduction and Whitening}\label{sec:DRandWhite}

Whitening step utilizes linear algebraic manipulations to make the tensor symmetric and orthogonal (in expectation).  Moreover, it leads to dimensionality reduction since it (implicitly) reduces tensor $M_3$ of size $O(n^3)$ to a tensor of size $k^3$, where $k$ is the number of communities. Typically we have $k \ll n$. The whitening step also converts the tensor $M_3$ to a symmetric orthogonal tensor. The whitening matrix $W\in \Rbb^{n_A \times k}$ satisfies $W^\top M_2 W = I$. The idea is that if the bilinear projection of the second order moment onto $W$ results in the identity matrix, then a trilinear projection of the third order moment onto $W$ would result in an orthogonal tensor. We  use multilinear operations  to get an orthogonal   tensor $\mathcal{T} :=M_3(W,W,W)$.

The whitening matrix $W$ is computed via truncated $k-$svd of the second order moments.
\begin{equation*}
W = U_{M_2} \Sigma_{M_2}^{-1/2},
\end{equation*}
where $U_{M_2}$ and $\Sigma_{M_2}=\diag(\sigma_{M_2,1},\ldots,\sigma_{M_2,k})$ are the top $k$ singular vectors and singular values of $M_2$ respectively.
We then perform multilinear transformations on the triplet data using the whitening matrix. The whitened data is thus
\begin{align*}
y^t_A  &: = \left<W, {c^t}\right>,\\
 y^t_B &:= \left<W, c^t  \right>,\\
y^t_C &: = \left<W, c^t\right>,
\end{align*} for the topic modeling, where $t$ denotes the index of the documents. Note that $y^t_A$, $y^t_B$ and $y^t_C$ $\in \Rbb^{k}$. Implicitly, the whitened tensor is $\mathcal{T} = \frac{1}{\nx} \sum\limits_{t\in X} y^t_A \otimes y^t_B \otimes y^t_C$ and is a $k\times k \times k$ dimension tensor.
Since  $k \ll n$, the dimensionality reduction is crucial for our speedup.

\subsection{Stochastic Tensor Gradient Descent}
\label{sec:sto_ten_grad_des}

In  \cite{AnandkumarEtal:community12COLT} and \cite{AGHKT12}, the power method with deflation is used for tensor decomposition where the eigenvectors are recovered by iterating over multiple loops in a serial manner. Furthermore, batch data is used in their iterative power method which makes that algorithm slower than its stochastic counterpart.  In addition to implementing a stochastic spectral optimization algorithm, we achieve further speed-up by efficiently parallelizing the stochastic updates.  

Let $\mathbf{v}=[v_1|v_2|\ldots|v_k]$ be the true eigenvectors. Denote the cardinality of the sample set as $\nx$, i.e., $\nx:=|X|$.  Now that we have  the whitened tensor, we propose the \emph{Stochastic Tensor Gradient Descent} (STGD) algorithm for  tensor decomposition.
Consider the tensor $\mathcal{T} \in \R^{k \times k \times k}$ using whitened samples, i.e.,
\begin{align*}
\mathcal{T} & = \frac{1}{\nx}\sum_{t\in X}{\mathcal{T}^t} = \frac{(\alpha_0+1)(\alpha_0+2)}{2\nx}\sum_{t \in X} y^t_A \otimes y^t_B \otimes y^t_C \\
&- \frac{\alpha_0(\alpha_0+1)}{2\nx}\sum_{t\in X} \left[ y^t_A\otimes y^t_B \otimes\bar{y}_C + y^t_A\otimes \bar{y}_B \otimes y^t_C + \bar{y}_A\otimes y^t_B \otimes y^t_C\right] + \alpha_0^2 \bar{y}_A \otimes \bar{y}_B \otimes \bar{y}_C,
\end{align*} where $t\in X$ and denotes the index of the online data and $\bar{y}_A$,  $\bar{y}_B$, and $\bar{y}_C$ denote the mean of the whitened data.
Our goal is to find a symmetric CP decomposition of the whitened tensor, and this will be extensively discussed in the next chapter.

After learning the decomposition of the third order moment, we perform post-processing to estimate $\widehat{\Pi}$.

\subsection{Post-processing}\label{sec:post_process}

Eigenvalues $\Lambda:=[\lambda_1,\lambda_2,\ldots, \lambda_k]$  are estimated as the norm of the eigenvectors $\lambda_i = {\lVert{\phi_i}\rVert}^3$.
\begin{lemma}\label{lemma:postprocessing}
After we obtain $\Lambda$ and $\Phi$, the estimate for the topic-word matrix is given by
\[
\hat{\mu} = {W^\top}^\dag \Phi,
\]
and in the community setting,   the community membership matrix is given by
\[
\hat{\Pi}_{A^c} = 
 \diag (\gamma)^{1/3}\diag(\Lambda)^{-1} \Phi^\top\hat{
W}^\top G_{A,A^c}.
\]
 where $A^c : = X \cup B \cup C$. Similarly, we estimate $\hat{\Pi}_A$ by exchanging the roles of $X$ and $A$. Next, we obtain the Dirichlet distribution parameters
\begin{equation*}
\hat{\alpha_i} =  \gamma^2\lambda_i^{-2},  
\forall i \in [k].
\end{equation*}\end{lemma}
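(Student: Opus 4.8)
The plan is to treat the topic and community cases uniformly, since Lemma~\ref{lemma:topic} and the MMSB moment factorization both yield the common form $\mathbb{E}[M_2]=\sum_{i=1}^{k}\omega_i\,m_i m_i^\top$ and $\mathbb{E}[M_3]=\sum_{i=1}^{k}\omega_i\,m_i^{\otimes 3}$, where $\omega_i=\alpha_i/\alpha_0=\hat{\alpha}_i$ and $m_i=\mu_i$ in the topic setting, $m_i=(F_A)_i$ in the community setting. Everything then reduces to tracking how the whitening map $W$ (which by construction satisfies $W^\top\mathbb{E}[M_2]W=I_k$) acts on the components $m_i$, reading off the orthogonal decomposition of $\mathcal{T}=\mathbb{E}[M_3](W,W,W)$, and inverting the whitening. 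I would carry this out at the level of exact moments; the statement for the empirical estimates $\hat{W},\Phi,\Lambda$ then follows by the usual matrix-perturbation argument for whitening and pseudo-inverses.

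First I would set $v_i:=\sqrt{\omega_i}\,W^\top m_i$ and observe that $\sum_i v_i v_i^\top=W^\top\mathbb{E}[M_2]W=I_k$; since this writes the rank-$k$ identity as a sum of $k$ rank-one terms, the $v_i$ are orthonormal under the non-degeneracy assumption. Substituting $W^\top m_i=\omega_i^{-1/2}v_i$ into $\mathcal{T}=\sum_i\omega_i(W^\top m_i)^{\otimes 3}$ gives $\mathcal{T}=\sum_i\omega_i^{-1/2}v_i^{\otimes 3}$, an orthogonal decomposition with eigenvalue $\lambda_i=\omega_i^{-1/2}$. Folding the eigenvalue into the vector, the decomposition recovered by STGD is $\mathcal{T}=\sum_i\phi_i^{\otimes 3}$ with $\phi_i=\lambda_i^{1/3}v_i$, which is exactly consistent with the stated normalization $\lambda_i=\|\phi_i\|^3$. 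Inverting $\lambda_i=\omega_i^{-1/2}$ gives $\omega_i=\lambda_i^{-2}$, i.e.\ $\alpha_i/\alpha_0=\lambda_i^{-2}$, so that after multiplication by the normalization factor one obtains precisely $\hat{\alpha}_i=\gamma^2\lambda_i^{-2}$.

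For the topic-word matrix I would apply $(W^\top)^\dag$ to each column $\phi_i$. Because $\mu_i$ lies in the range of $\mathbb{E}[M_2]$, hence in the column span of $W$, the identity $(W^\top)^\dag W^\top\mu_i=\mu_i$ holds, and therefore $(W^\top)^\dag\phi_i=\lambda_i^{1/3}\sqrt{\omega_i}\,\mu_i=\omega_i^{1/3}\mu_i$; the leftover positive scalar is removed by renormalizing each recovered column to a probability vector, which yields $\hat{\mu}=(W^\top)^\dag\Phi$. For the community memberships I would instead use the linear relation $\mathbb{E}[G_{A,A^c}]=F_A\,\Pi_{A^c}$. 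Writing $\Phi=W^\top F_A\,D$ with $D=\diag(\lambda_i^{1/3}\sqrt{\omega_i})$ and using $\Phi^\top\Phi=\diag(\lambda_i^{2/3})$ (from orthonormality of the $v_i$), a direct computation gives $\Phi^\top\hat{W}^\top G_{A,A^c}\approx\diag(\lambda_i^{4/3})\,\Pi_{A^c}$; premultiplying by $\diag(\Lambda)^{-1}$ leaves $\diag(\lambda_i^{1/3})\Pi_{A^c}$, and the factor $\diag(\gamma)^{1/3}$ is precisely the diagonal correction (with $\gamma$ fixed by the requirement that the recovered columns be valid membership vectors on the simplex) that cancels the remaining $\lambda_i$-powers and returns $\hat{\Pi}_{A^c}\approx\Pi_{A^c}$, with $\hat{\Pi}_A$ obtained by swapping the roles of $X$ and $A$.

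The main obstacle I expect is purely the bookkeeping of the fractional powers $\omega_i^{\pm 1/2},\omega_i^{\pm 1/3}$ as they pass through the whitening, the eigenvalue folding $\phi_i=\lambda_i^{1/3}v_i$, and the un-whitening, together with pinning down the single normalization $\gamma$ that must simultaneously restore the pseudo-inverse output to genuine simplex-valued membership vectors and convert $\lambda_i^{-2}=\alpha_i/\alpha_0$ into $\hat{\alpha}_i$. The one genuinely delicate point is justifying that $(W^\top)^\dag$ acts as an exact inverse on the components, namely that each $m_i$ lies in $\operatorname{span}(W)$; this is an identity for the exact second moment but only approximate for the empirical $\hat{W}$, so the final statement is an approximate recovery guarantee whose error is controlled by the perturbation between $\hat{W}$ and the population whitening matrix and the accuracy of the tensor decomposition.
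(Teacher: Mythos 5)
Your proposal is correct and follows essentially the same route as the paper: both rest on the factorized moment forms $\mathbb{E}[M_2]=\sum_i\hat{\alpha}_i m_i m_i^\top$, $\mathbb{E}[M_3]=\sum_i\hat{\alpha}_i m_i^{\otimes 3}$, the whitening identity $W^\top \mathbb{E}[M_2] W=I_k$ making $v_i=\sqrt{\hat{\alpha}_i}\,W^\top m_i$ orthonormal, the resulting orthogonal decomposition $\mathcal{T}=\sum_i \hat{\alpha}_i^{-1/2} v_i^{\otimes 3}$ (hence $\hat{\alpha}_i\propto\lambda_i^{-2}$), and un-whitening via $(W^\top)^\dag$ for $\mu$ or via $\mathbb{E}[G_{A,A^c}]=F_A\Pi_{A^c}$ for the memberships. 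In fact your write-up is more complete than the paper's own correctness argument (which stops after exhibiting the whitened tensor form and never explicitly derives the three post-processing formulas), and your careful tracking of the leftover per-column scalings --- the $\omega_i^{1/3}$ on the topic columns and the $\diag(\lambda_i^{1/3})$ on the memberships, absorbed respectively by column normalization and by the paper's $\gamma$ factor --- makes precise exactly what the lemma's notation leaves implicit.
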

 where $\gamma^2$ is chosen such that we have normalization$
\sum_{i\in[k]}\hat{ \alpha}_i : =\sum_{i\in [k]}\frac{\alpha_i}{\alpha_0}=1.
$

Thus, we perform STGD method to estimate the eigenvectors and eigenvalues of the whitened tensor, and then use these to estimate the topic word matrix $\mu$ and community membership matrix $\widehat{\Pi}$ by thresholding.

\section{Implementation Details}

\subsection{Symmetrization Step to Compute $M_2$}Note that for the topic model, the second order moment $M_2$ can be computed easily from the word-frequency vector. On the other hand, for the community setting, computing $M_2$ requires additional linear algebraic operations. It requires computation of matrices $Z_B$ and $Z_C$ in equation~\eqref{eq:transitionMat}. This requires computation of pseudo-inverses of ``Pairs'' matrices.
Now, note that pseudo-inverse of $\left(\Pairs\left(B,C\right)\right)$ in Equation~\eqref{eq:transitionMat} can be computed using rank $k$-SVD:
\begin{align*}
& \text{k-SVD}\left(\Pairs\left(B,C\right)\right) = U_B(:,1:k) \Sigma_{BC}(1:k) V_C(:,1:k)^\top.
\end{align*}We exploit the low rank property to have efficient running times and storage. We first implement the k-SVD of Pairs, given by $G_{X,C}^\top G_{X,B} $. Then the order in which the matrix products are carried out plays a significant role in terms of both memory and speed. Note that  $Z_C$ involves the multiplication of a sequence of matrices of sizes $\mathbb{R}^{n_A\times n_B}$, $\mathbb{R}^{n_B\times k}$, $\mathbb{R}^{k\times k}$, $\mathbb{R}^{k\times n_C}$,  $G_{x,C}^\top G_{x,B} $ involves products of sizes $\mathbb{R}^{n_C\times k}$,  $\mathbb{R}^{k\times k}$, $\mathbb{R}^{k\times n_B}$, and $Z_B$ involving products of sizes $\mathbb{R}^{n_A\times n_C}$, $\mathbb{R}^{n_C\times k}$, $\mathbb{R}^{k\times k}$, $\mathbb{R}^{k\times n_B}$. While performing these products, we avoid products of sizes $\mathbb{R}^{O(n)\times O(n)}$ and $\mathbb{R}^{O(n)\times O(n)}$. This allows us to have efficient storage requirements. Such manipulations are represented in Figure~\ref{fig:dim_reduc}.

\begin{figure*}[h]
\centering
{\begin{minipage}{4in}
\centering
\def\svgwidth{\textwidth}
\begingroup%
  \makeatletter%
  \providecommand\color[2][]{%
    \errmessage{(Inkscape) Color is used for the text in Inkscape, but the package 'color.sty' is not loaded}%
    \renewcommand\color[2][]{}%
  }%
  \providecommand\transparent[1]{%
    \errmessage{(Inkscape) Transparency is used (non-zero) for the text in Inkscape, but the package 'transparent.sty' is not loaded}%
    \renewcommand\transparent[1]{}%
  }%
  \providecommand\rotatebox[2]{#2}%
  \ifx\svgwidth\undefined%
    \setlength{\unitlength}{747.20448486bp}%
    \ifx\svgscale\undefined%
      \relax%
    \else%
      \setlength{\unitlength}{\unitlength * \real{\svgscale}}%
    \fi%
  \else%
    \setlength{\unitlength}{\svgwidth}%
  \fi%
  \global\let\svgwidth\undefined%
  \global\let\svgscale\undefined%
  \makeatother%
  \begin{picture}(1,0.23765382)%
    \put(0,0){\includegraphics[width=\unitlength]{\fighomeCommunity/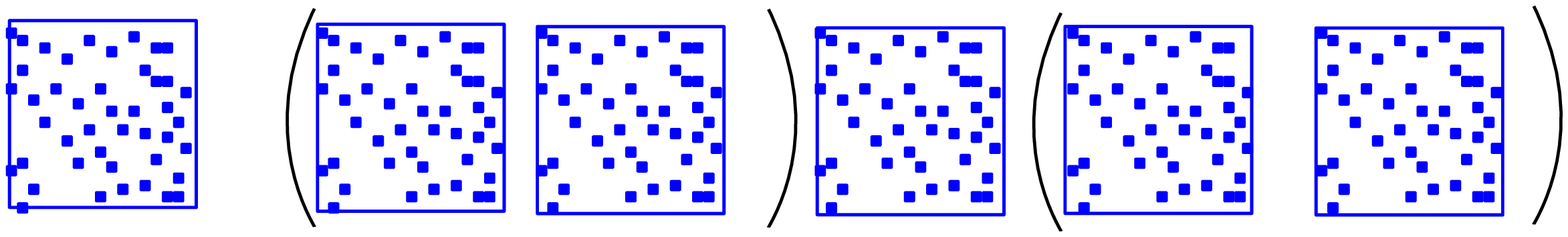}}%
    \put(0.15651648,0.12646764){\color[rgb]{0,0,0}\makebox(0,0)[lb]{\smash{$=$}}}%
    \put(0.46220125,0.18491793){\color[rgb]{0,0,0}\makebox(0,0)[lb]{\smash{$\dag$}}}%
    \put(0.9106669,0.18384786){\color[rgb]{0,0,0}\makebox(0,0)[lb]{\smash{$\top$}}}%
    \put(0.76865456,0.18491793){\color[rgb]{0,0,0}\makebox(0,0)[lb]{\smash{$\dag$}}}%
    \put(0.78514754,0.18384786){\color[rgb]{0,0,0}\makebox(0,0)[lb]{\smash{$\top$}}}%
    \put(-0.01606648,0.13703326){\color[rgb]{0,0,0}\makebox(0,0)[lb]{\smash{$\lvert A\rvert$}}}%
    \put(0.06959692,0.19913139){\color[rgb]{0,0,0}\makebox(0,0)[lb]{\smash{$\lvert A\rvert$}}}%
  \end{picture}%
\endgroup%
\end{minipage}}
\hfil
{\begin{minipage}{4in}
\centering
\def\svgwidth{\textwidth}
\begingroup%
  \makeatletter%
  \providecommand\color[2][]{%
    \errmessage{(Inkscape) Color is used for the text in Inkscape, but the package 'color.sty' is not loaded}%
    \renewcommand\color[2][]{}%
  }%
  \providecommand\transparent[1]{%
    \errmessage{(Inkscape) Transparency is used (non-zero) for the text in Inkscape, but the package 'transparent.sty' is not loaded}%
    \renewcommand\transparent[1]{}%
  }%
  \providecommand\rotatebox[2]{#2}%
  \ifx\svgwidth\undefined%
    \setlength{\unitlength}{720.00027734bp}%
    \ifx\svgscale\undefined%
      \relax%
    \else%
      \setlength{\unitlength}{\unitlength * \real{\svgscale}}%
    \fi%
  \else%
    \setlength{\unitlength}{\svgwidth}%
  \fi%
  \global\let\svgwidth\undefined%
  \global\let\svgscale\undefined%
  \makeatother%
  \begin{picture}(1,0.24667124)%
    \put(0,0){\includegraphics[width=\unitlength]{\fighomeCommunity/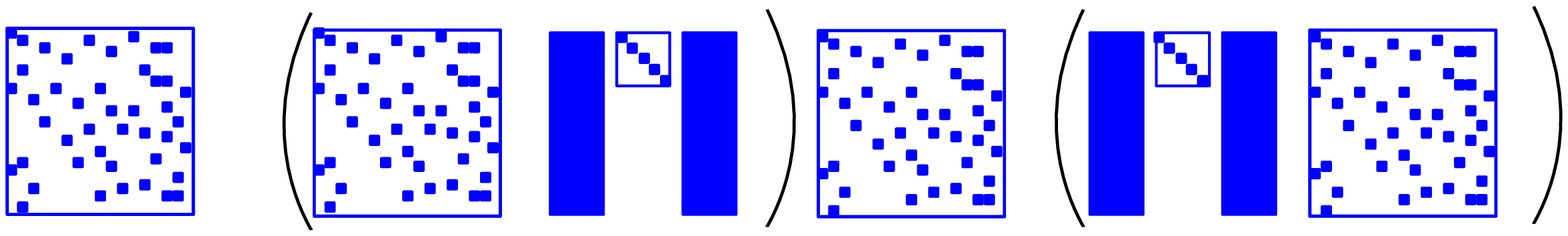}}%
    \put(0.14428852,0.12068267){\color[rgb]{0,0,0}\makebox(0,0)[lb]{\smash{$=$}}}%
    \put(0.45436066,0.1765092){\color[rgb]{0,0,0}\makebox(0,0)[lb]{\smash{$\top$}}}%
    \put(0.77777514,0.18445862){\color[rgb]{0,0,0}\makebox(0,0)[lb]{\smash{$\top$}}}%
    \put(0.91555287,0.18445862){\color[rgb]{0,0,0}\makebox(0,0)[lb]{\smash{$\top$}}}%
  \end{picture}%
\endgroup%
\end{minipage}}
\hfil
{\begin{minipage}{4in}
\centering
\def\svgwidth{\textwidth}
\begingroup%
  \makeatletter%
  \providecommand\color[2][]{%
    \errmessage{(Inkscape) Color is used for the text in Inkscape, but the package 'color.sty' is not loaded}%
    \renewcommand\color[2][]{}%
  }%
  \providecommand\transparent[1]{%
    \errmessage{(Inkscape) Transparency is used (non-zero) for the text in Inkscape, but the package 'transparent.sty' is not loaded}%
    \renewcommand\transparent[1]{}%
  }%
  \providecommand\rotatebox[2]{#2}%
  \ifx\svgwidth\undefined%
    \setlength{\unitlength}{720.00075303bp}%
    \ifx\svgscale\undefined%
      \relax%
    \else%
      \setlength{\unitlength}{\unitlength * \real{\svgscale}}%
    \fi%
  \else%
    \setlength{\unitlength}{\svgwidth}%
  \fi%
  \global\let\svgwidth\undefined%
  \global\let\svgscale\undefined%
  \makeatother%
  \begin{picture}(1,0.24663905)%
    \put(0,0){\includegraphics[width=\unitlength]{\fighomeCommunity/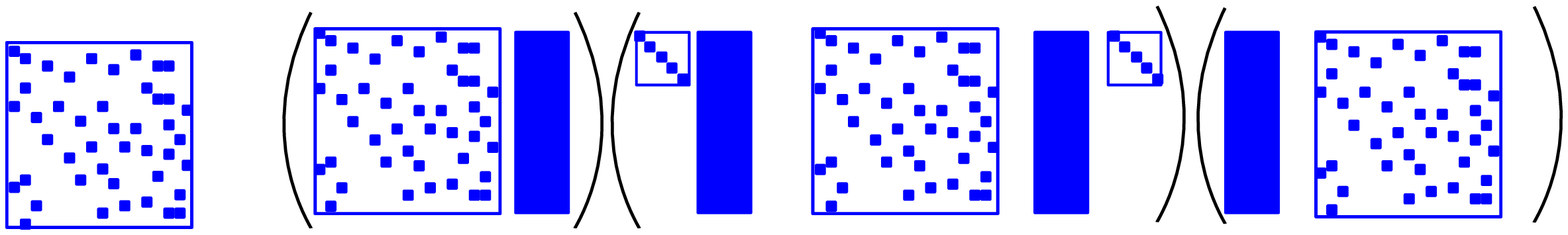}}%
    \put(0.14411109,0.13121129){\color[rgb]{0,0,0}\makebox(0,0)[lb]{\smash{$=$}}}%
    \put(0.47093097,0.19135582){\color[rgb]{0,0,0}\makebox(0,0)[lb]{\smash{$\top$}}}%
    \put(0.78426398,0.19580026){\color[rgb]{0,0,0}\makebox(0,0)[lb]{\smash{$\top$}}}%
    \put(0.91981939,0.19357804){\color[rgb]{0,0,0}\makebox(0,0)[lb]{\smash{$\top$}}}%
  \end{picture}%
\endgroup%
\end{minipage}}
\caption[Efficient computation in smart order]{By performing the matrix multiplications in an efficient order (Equation~\eqref{eq:2moment_graph}), we avoid products involving $O(n) \times O(n)$ objects. Instead, we use objects of size $O(n) \times k$ which improves the speed, since $k \ll n$. Equation~\eqref{eq:2moment_graph} is equivalent to $ M_2 = $ $\left(\Pairs_{A,B} \Pairs_{C,B}^\dag\right)$ $\Pairs_{C,B}$ $\left(\Pairs_{B,C}^\dag\right)^\top \Pairs_{A,C}^\top$  $ -\text{shift}$, where the $\text{shift}=\frac{\alpha_0}{\alpha_0+1} \left({M_1  }{M_1  }^\top- \diag\left({M_1  } {M_1  }^\top\right) \right)$. We do not explicitly calculate the pseudoinverse but maintain the low rank matrix decomposition form.}\label{fig:dim_reduc}
\end{figure*}

We then orthogonalize the third order moments to reduce the dimension of its modes to $k$.
We perform linear transformations on the data corresponding to the partitions $A$, $B$ and $C$ using the whitening matrix.
The whitened data is thus $y^t_A  : = \left<W, G^\top_{t,A}\right>$, $y^t_B := \left<W, Z_B G^\top_{t,B}\right>$, and
$y^t_C : = \left<W, Z_C G^\top_{t,C}\right>$, where $t\in X$ and denotes the index of the online data.
Since  $k \ll n$, the dimensionality reduction is crucial for our speedup.

\subsection{Efficient Randomized SVD Computations}\label{sec:Apdx_sparse}
When we consider very large-scale data, the whitening matrix is a bottleneck to handle when we aim for fast running times. We obtain the low rank approximation of matrices using random projections. In the CPU implementation, we use \emph{tall-thin SVD} (on a sparse matrix) via the Lanczos algorithm after the projection and in the GPU implementation, we use \emph{tall-thin QR}. 
We give the overview of these methods below. Again, we use graph community membership model without loss of generality.

\paragraph{Randomized low rank approximation: }
From~\cite{gittens2013revisiting}, for the $k$-rank positive semi-definite matrix ${M_2} \in \mathbb{R}^{n_A \times n_A}$ with $n_A \gg k$, we can perform random projection to reduce dimensionality. More precisely, if we have a random matrix $S\in \mathbb{R}^{n_A \times \tilde{k}}$ with unit norm (rotation matrix), we project $M_2 $ onto this random matrix to get $\mathbb{R}^{n\times \tilde{k}}$ tall-thin matrix. Note that we choose $\tilde{k}=2k$ in our implementation.  We will obtain lower dimension approximation of $M_2$ in $\mathbb{R}^{\tilde{k} \times \tilde{k}}$.
Here we emphasize that $S\in \mathbb{R}^{n\times \tilde{k}}$ is a random matrix for dense ${M_2} $. However for sparse ${M_2} $, $S\in \{0,1\}^{n\times \tilde{k}}$ is a column selection matrix with random sign for each entry.

After the projection, one approach we use is SVD on this tall-thin ($\mathbb{R}^{n\times \tilde{k}}$) matrix. Define $O:= {M_2}  S \in \mathbb{R}^{n \times \tilde{k}}$ and $\Omega := S^\top {M_2} S \in \mathbb{R}^{\tilde{k} \times \tilde{k}}$. A low rank approximation of ${M_2} $ is given by $O\Omega^\dag O^\top$~\cite{gittens2013revisiting}.
Recall that the definition of a whitening matrix $W$ is that $W^\top {M_2}  W = I$. We can obtain the whitening matrix of ${M_2} $ without directly doing a SVD on ${M_2}  \in \mathbb{R}^{n_A \times n_A}$.

\smallskip
\emph{Tall-thin SVD: }This is used in the CPU implementation.
The whitening matrix can be obtained by
\begin{equation}\label{eq:nystrom_whiten}
W \approx(O^\dag)^\top (\Omega^{\frac{1}{2}})^\top.
\end{equation}
The pseudo code for computing the whitening matrix $W$ using tall-thin SVD is given in Algorithm~\ref{alg:pinv}.
\begin{algorithm}
\caption[Randomized tall-thin SVD]{Randomized Tall-thin SVD}
\label{alg:pinv}
\begin{algorithmic}[1]
\REQUIRE Second moment matrix $M_2$.
\ENSURE Whitening matrix $W$.
\STATE Generate random matrix $S\in \mathbb{R}^{n\times \tilde{k}}$ if  $M_2 $ is dense.
\STATE Generate column selection matrix with random sign $S\in \{0,1\}^{n\times \tilde{k}}$ if $M_2 $ is sparse.
\STATE $O = M_2 S\in \mathbb{R}^{n \times \tilde{k}}$
\STATE $[U_O, L_O, V_O] =$SVD$(O)$
\STATE $\Omega = S^\top O\in \mathbb{R}^{\tilde{k} \times \tilde{k}}$
\STATE $[U_\Omega, L_\Omega, V_\Omega] = $SVD$(\Omega)$
\STATE $W = U_O L_O^{-1} V_O^\top V_\Omega L_\Omega^{\frac{1}{2}} U_\Omega^\top$
\end{algorithmic}
\end{algorithm}
Therefore, we only need to compute SVD of a tall-thin matrix $O\in\mathbb{R}^{n_A \times \tilde{k}}$.  Note that $\Omega \in \mathbb{R}^{\tilde{k} \times \tilde{k}}$, its square-root is easy to compute.
Similarly, pseudoinverses can also be obtained without directly doing SVD. For instance, the pseudoinverse of the $\Pairs\left(B,C\right)$ matrix is given by
\[
\left(\Pairs\left(B,C\right)\right)^\dag = (J^\dag)^\top \Psi J^\dag,
\]
where $\Psi = S^\top \left(\Pairs\left(B,C\right)\right) S$ and $J = \left(\Pairs\left(B,C\right)\right) S$.
The pseudo code for computing pseudoinverses is given in Algorithm~\ref{alg:ttsvd}.
\begin{algorithm}
\caption{Randomized Pseudoinverse}
\label{alg:ttsvd}
\begin{algorithmic}[1]
\REQUIRE Pairs matrix $\Pairs\left(B,C\right)$.
\ENSURE Pseudoinverse of the pairs matrix $\left(\Pairs\left(B,C\right)\right)^\dag$.
\STATE Generate random matrix $S\in \mathbb{R}^{n,k}$ if  $M_2 $ is dense.
\STATE Generate column selection matrix with random sign $S\in \{0,1\}^{n\times k}$ if $M_2 $ is sparse.
\STATE $J = \left(\Pairs\left(B,C\right)\right) S$
\STATE $\Psi = S^\top J$
\STATE $[U_J, L_J, V_J] =$SVD$(J)$
\STATE $\left(\Pairs\left(B,C\right)\right)^\dag = U_J L_J^{-1}V_J^\top \Psi V_J L_J^{-1}U_J^\top$
\end{algorithmic}
\end{algorithm}

The sparse representation of the data allows for scalability on a single machine to datasets having millions of nodes.
Although the GPU has SIMD architecture which makes parallelization efficient, it lacks advanced libraries with sparse SVD operations and out-of-GPU-core implementations. We therefore implement the sparse format on CPU for sparse datasets. We implement our algorithm using random projection for efficient dimensionality reduction~\cite{DBLP:journals/corr/abs-1207-6365} along with the sparse matrix operations available in the Eigen toolkit\footnote{\scriptsize{\url{http://eigen.tuxfamily.org/index.php?title=Main_Page}}}, and we use the SVDLIBC~\cite{svdlibc2002} library to compute sparse SVD via the Lanczos algorithm.
Theoretically, the Lanczos algorithm~\cite{zbMATH06159604} on a $n \times n$ matrix takes around $(2d+8)n$ flops for a single step where $d$ is the average number of non-zero entries per row.


\smallskip
\emph{Tall-thin QR: }
This is used in the GPU implementation due to the lack of library to do sparse tall-thin SVD. The difference is that we instead implement a tall-thin QR on $O$,  therefore the whitening matrix is obtained as
\[
W \approx Q (R^\dag)^\top (\Omega^{\frac{1}{2}})^\top.
 \]

The main bottleneck for our GPU implementation is device storage, since GPU memory is highly limited and not expandable. Random projections help in reducing the dimensionality from $O(n \times n)$ to $O(n \times k)$ and hence, this fits the data in the GPU memory better. Consequently, after the whitening step, we project the data into $k$-dimensional space. Therefore, the STGD step is dependent only on $k$, and hence can be fit in the GPU memory. So, the main bottleneck is computation of large SVDs. In order to support larger datasets such as the DBLP data set 
which exceed the GPU memory capacity, we extend our implementation with out-of-GPU-core matrix operations and the Nystrom method~\cite{gittens2013revisiting} for the whitening matrix computation and the pseudoinverse computation in the pre-processing module. 



\subsection{Stochastic Updates}\label{sec:stgd}

\begin{figure}
\centering{
\def\svgwidth{3.2in}
\begingroup%
  \makeatletter%
  \providecommand\color[2][]{%
    \errmessage{(Inkscape) Color is used for the text in Inkscape, but the package 'color.sty' is not loaded}%
    \renewcommand\color[2][]{}%
  }%
  \providecommand\transparent[1]{%
    \errmessage{(Inkscape) Transparency is used (non-zero) for the text in Inkscape, but the package 'transparent.sty' is not loaded}%
    \renewcommand\transparent[1]{}%
  }%
  \providecommand\rotatebox[2]{#2}%
  \ifx\svgwidth\undefined%
    \setlength{\unitlength}{784.8bp}%
    \ifx\svgscale\undefined%
      \relax%
    \else%
      \setlength{\unitlength}{\unitlength * \real{\svgscale}}%
    \fi%
  \else%
    \setlength{\unitlength}{\svgwidth}%
  \fi%
  \global\let\svgwidth\undefined%
  \global\let\svgscale\undefined%
  \makeatother%
  \begin{picture}(1,0.36799185)%
    \put(0,0){\includegraphics[width=\unitlength]{\fighomeCommunity/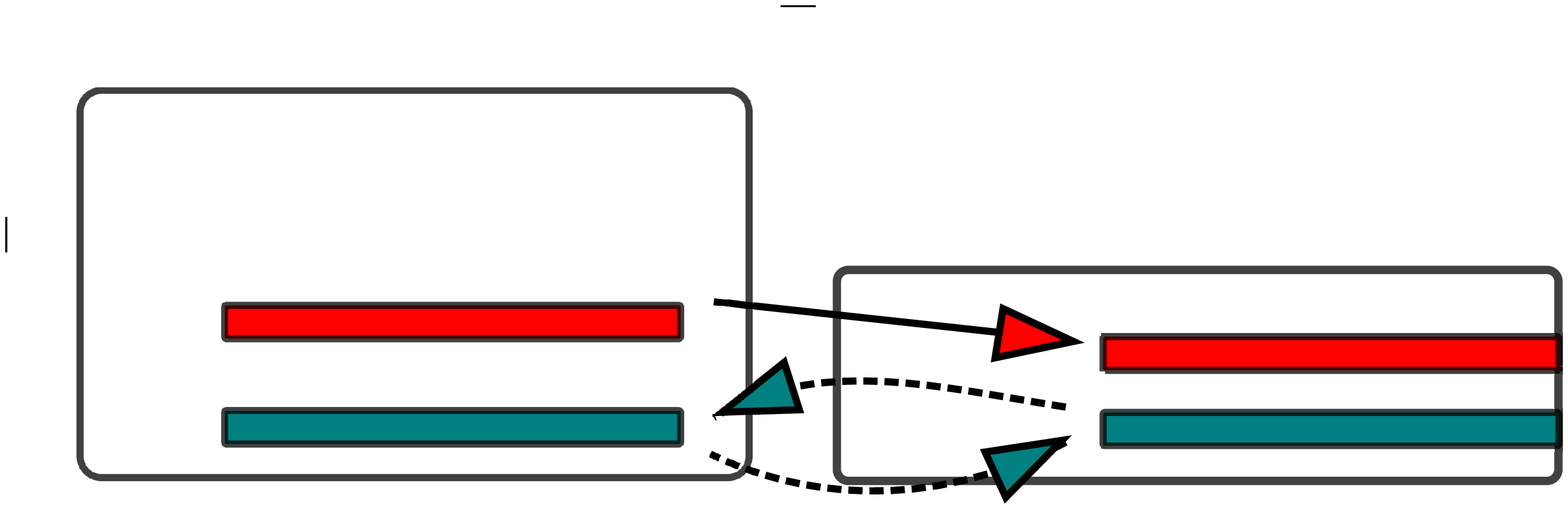}}%
    \put(0.04335005,0.11879389){\color[rgb]{0,0,0}\makebox(0,0)[lb]{\smash{$v_i^t$}}}%
    \put(0.04314793,0.21615305){\color[rgb]{0,0,0}\makebox(0,0)[lb]{\smash{$y_A^t$,$y_B^t$,$y_C^t$}}}%
    \put(0.18612624,0.33110272){\color[rgb]{0,0,0}\makebox(0,0)[lb]{\smash{CPU}}}%
    \put(0.7069755,0.22916592){\color[rgb]{0,0,0}\makebox(0,0)[lb]{\smash{GPU}}}%
    \put(0.33555723,0.02770536){\color[rgb]{0,0,0}\makebox(0,0)[lb]{\smash{Standard Interface}}}%
    \put(0.90796023,0.10734348){\color[rgb]{0,0,0}\makebox(0,0)[lb]{\smash{$v_i^t$}}}%
  \end{picture}%
\endgroup%
}

\centering{
\def\svgwidth{3.2in}
\begingroup%
  \makeatletter%
  \providecommand\color[2][]{%
    \errmessage{(Inkscape) Color is used for the text in Inkscape, but the package 'color.sty' is not loaded}%
    \renewcommand\color[2][]{}%
  }%
  \providecommand\transparent[1]{%
    \errmessage{(Inkscape) Transparency is used (non-zero) for the text in Inkscape, but the package 'transparent.sty' is not loaded}%
    \renewcommand\transparent[1]{}%
  }%
  \providecommand\rotatebox[2]{#2}%
  \ifx\svgwidth\undefined%
    \setlength{\unitlength}{784.8bp}%
    \ifx\svgscale\undefined%
      \relax%
    \else%
      \setlength{\unitlength}{\unitlength * \real{\svgscale}}%
    \fi%
  \else%
    \setlength{\unitlength}{\svgwidth}%
  \fi%
  \global\let\svgwidth\undefined%
  \global\let\svgscale\undefined%
  \makeatother%
  \begin{picture}(1,0.36799185)%
    \put(0,0){\includegraphics[width=\unitlength]{\fighomeCommunity/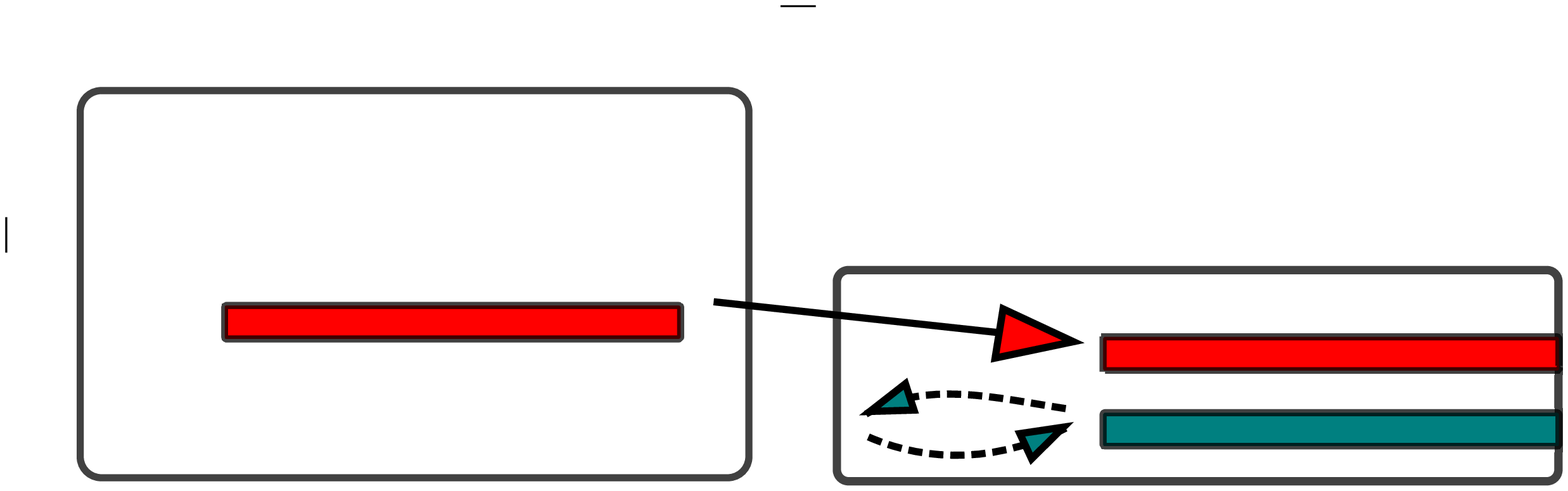}}%
    \put(0.04314793,0.21615305){\color[rgb]{0,0,0}\makebox(0,0)[lb]{\smash{$y_A^t$,$y_B^t$,$y_C^t$}}}%
    \put(0.18612624,0.33110272){\color[rgb]{0,0,0}\makebox(0,0)[lb]{\smash{CPU}}}%
    \put(0.7069755,0.22916592){\color[rgb]{0,0,0}\makebox(0,0)[lb]{\smash{GPU}}}%
    \put(0.33490021,0.02770536){\color[rgb]{0,0,0}\makebox(0,0)[lb]{\smash{Device Interface}}}%
    \put(0.90796023,0.10734348){\color[rgb]{0,0,0}\makebox(0,0)[lb]{\smash{$v_i^t$}}}%
  \end{picture}%
\endgroup%
}
\caption[Data transfer between CPU and GPU]{Data transfers in the standard and device interfaces of the GPU implementation.}
\label{fig:standard_device}
\end{figure}

STGD can potentially be the most computationally intensive task if carried out naively since the storage and manipulation of a $O(n^3)$-sized tensor makes the method not scalable. However we   overcome this problem since we never form the tensor explicitly; instead, we collapse the tensor modes implicitly. We gain large speed up by optimizing the implementation of STGD.
To implement the tensor operations efficiently we convert them into matrix and vector operations so that they are implemented using BLAS routines.  We obtain whitened   vectors $y_A, y_B$ and $y_C$ and manipulate these vectors efficiently to obtain tensor eigenvector updates  using the gradient scaled by a suitable learning rate.

\paragraph{Efficient STGD via stacked vector operations: }
We convert the BLAS II into BLAS III operations by stacking the vectors to form matrices, leading to more efficient operations.
Although the updating equation for the stochastic gradient update is presented serially, we can update the $k$ eigenvectors simultaneously in parallel. The basic idea is to stack the $k$ eigenvectors $\phi_i\in\mathbb{R}^k$ into a matrix $\mathbf{\Phi}$, then using the internal parallelism designed for BLAS III operations.

Overall, the STGD step involves $1+k+i(2+3k)$ BLAS II over $\mathbb{R}^k$ vectors, 7N BLAS III over $\mathbb{R}^{k\times k}$ matrices and 2 QR operations over $\mathbb{R}^{k \times k}$ matrices, where $i$ denotes the number of iterations. We  provide a count of BLAS operations for various steps in Table~\ref{tab:blas_count}.

\begin{table}[htbp]
\small
 \centering
   \begin{tabular}{@{} l|c|c|c|c|c @{}}
\hline
Module & BLAS I & BLAS II & BLAS III & SVD & QR\\
\hline
\hline
Pre & $0$ & $8$ & $19$ & $3$ & $0$\\
STGD & 0 & $Nk$ & $7N$ & $0$ & $2$\\
Post & $0$ & $0$ & $7$ & $0$ & $0$\\
\hline
   \end{tabular}
   \caption[Linear algebraic operation counts]{Linear algebraic operation counts: $N$ denotes the number of iterations for STGD and $k$, the number of communities. 
   }
   \label{tab:blas_count}
\end{table}

\paragraph{Reducing communication in GPU implementation: }
In STGD, note that the storage needed for the iterative part does not depend on the number of nodes in the data set, rather, it depends on the parameter $k$, i.e., the number of communities to be estimated, since whitening performed before STGD leads to dimensionality reduction.  This makes it suitable for storing the required buffers in the GPU memory, and using the CULA device interface for the BLAS operations. In Figure~\ref{fig:standard_device}, we illustrate the data transfer involved in the GPU standard and device interface codes. While the standard interface involves data transfer (including whitened neighborhood vectors and the eigenvectors) at each stochastic iteration between the CPU memory and the GPU memory, the device interface involves allocating and retaining the eigenvectors at each stochastic iteration which in turn speeds up the spectral estimation.

\begin{figure}
\centering
\psfrag{Number of communities}[l]{{Number of communities $k$}}
\psfrag{Time (in seconds) for 100 stochastic iterations}[l]{{ Running time(secs)}}
\psfrag{Scaling of the stochastic algorithm with the rank of the tensor}[c]{}
\psfrag{MATLAB Tensor Toolbox}[l]{{\textbf{MATLAB Tensor Toolbox}}}
\psfrag{CULA Standard Interface}[l]{{\textbf{CULA Standard Interface}}}
\psfrag{CULA Device Interface}[l]{{\textbf{CULA Device Interface}}}
\psfrag{Eigen Sparse}[l]{{\textbf{Eigen Sparse}}}
\includegraphics[width=\textwidth]{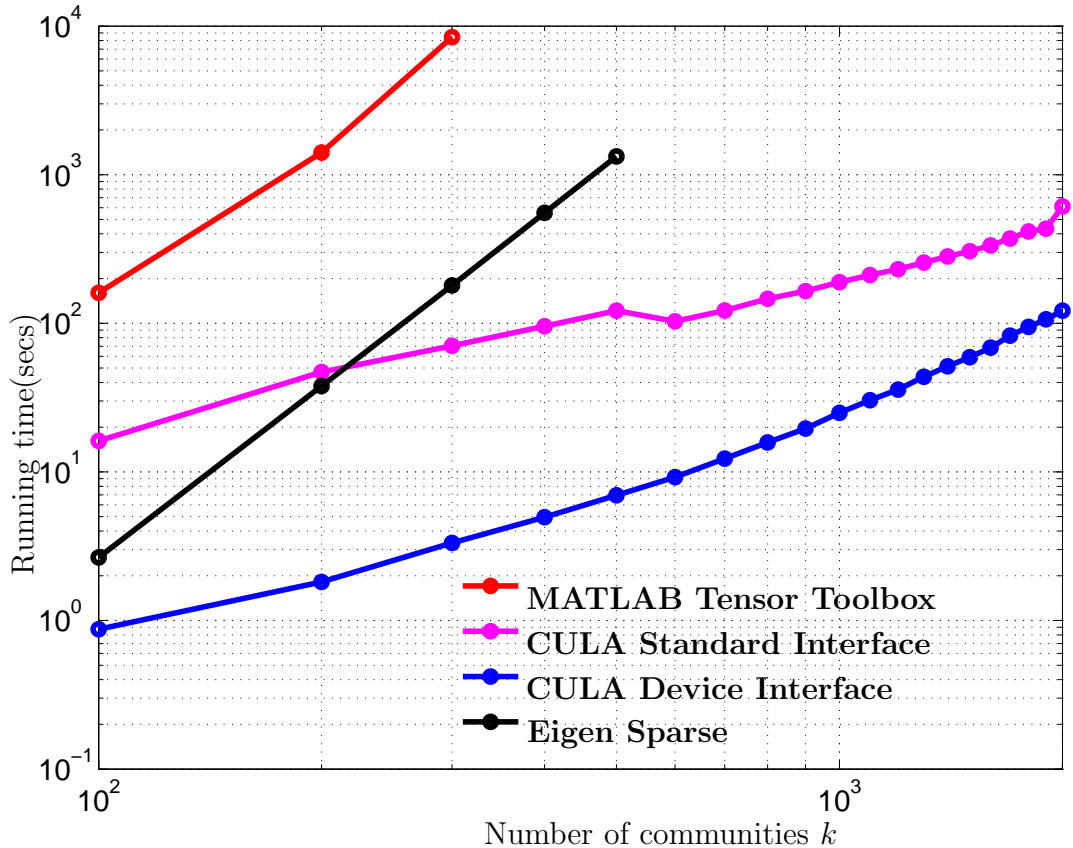}
\caption[STGD running time comparison]{Comparison of the running time for STGD under different $k$ for $100$ iterations.}
\label{fig:k_vs_t}
\end{figure}

We compare the running time of  the CULA device code  with the MATLAB code (using the tensor toolbox~\cite{TTB_Software}), CULA standard code and Eigen sparse code in Figure~\ref{fig:k_vs_t}. 
As expected, the GPU implementations of matrix operations are much faster and scale much better than the CPU implementations. Among the CPU codes, we notice that sparsity and optimization offered by the Eigen toolkit gives us huge gains.
We obtain orders of magnitude of speed up for the GPU device code as we place the buffers in the GPU memory and transfer minimal amount of data involving the whitened vectors only once at the beginning of each iteration.  
The running time for the CULA standard code is more than the device code because of the CPU-GPU data transfer overhead. For the same reason, the sparse CPU implementation, by avoiding the data transfer overhead, performs better than the GPU standard code for very small number of communities. We note that there is no performance degradation due to the parallelization of the matrix operations. After whitening, the STGD requires the most code design and optimization effort, and so we convert that into BLAS-like routines.

\subsection{Computational Complexity}

\begin{table}[htbp]
 \centering
   \begin{tabular}{@{} l|l|l@{}}
\hline
Module & Time & Space\\
\hline
\hline
Preprocessing (Matrix Multiply) & $O\left(\max(nsk/c, \log s)\right)$ & $O\left(\max(s^2,sk)\right)$\\
\hline
Preprocessing (CPU SVD) & $O\left(\max(nsk/c,\log s) + \max(k^2/c,k) \right)$ & $O(sk)$\\
\hline
Preprocessing (GPU QR) & $O\left( \max(sk^2/c, \log s) + \max(sk^2/c,\log k) \right)$ & $O(sk)$\\
\hline
Preprocessing(short-thin SVD) & $O\left(\max(k^3/c,\log k) +\max(k^2/c,k)\right)$ & $O(k^2)$\\
\hline
STGD & $O\left(\max(k^3/c, \log k)   \right)$ & $O(k^2)$\\
\hline
Post-processing & $O\left(\max(nsk/c, \log s)\right)$ & $O(nk)$\\
\hline
   \end{tabular}
   \caption[Time and space complexity]{The time and space complexity (number of compute cores required) of our algorithm. Note that $k \ll n$, $s$ is the average degree of a node (or equivalently, the average number of non-zeros per row/column in the adjacency sub-matrix); note that the STGD time is per iteration time. We denote the number of cores as $c$ - the time-space trade-off depends on this parameter.}
   \label{tab:complexity}
\end{table}

We partition the execution of our algorithm into three main modules namely, pre-processing, STGD and post-processing, whose various matrix operation counts  are listed above in Table~\ref{tab:blas_count}.

The theoretical asymptotic complexity of our method is summarized in Table~\ref{tab:complexity} and is best addressed by considering the parallel model of computation~\cite{jaja1992introduction}, i.e., wherein a number of processors or compute cores are operating on the data simultaneously in parallel. This is justified considering that we implement our method on GPUs and matrix products are embarrassingly parallel. Note that this is different from serial computational complexity. We now break down the entries in Table~\ref{tab:complexity}. First, we recall a basic lemma regarding the lower bound on the time complexity for parallel addition along with the required number of cores to achieve a speed-up.
\begin{lemma}
~\cite{jaja1992introduction}
\label{lem:add}
Addition of $s$ numbers in serial takes $O(s)$ time; with $\Omega(s / \log s )$ cores, this can be improved to $O( \log s)$ time in the best case.
\end{lemma}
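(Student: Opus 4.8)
The plan is to dispatch the serial claim immediately and then establish the parallel bound through two ingredients: a logarithmic-depth reduction tree together with an efficient processor schedule. The serial bound is trivial: summing $s$ numbers requires exactly $s-1$ binary additions performed one after another, so it takes $O(s)$ time.

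For the parallel bound, I would first exhibit a computation of depth $O(\log s)$ using the balanced binary (pairwise) reduction scheme. View the $s$ numbers as the leaves of a balanced binary tree. At the first level, group them into $\lceil s/2\rceil$ disjoint pairs and add each pair in parallel; then recurse on the resulting partial sums. After $D = \lceil \log_2 s\rceil$ levels a single value, the total sum, remains. The total number of additions across all levels (the \emph{work}) is $W = s-1$, while the critical-path length (the \emph{depth}) is $D = \lceil \log_2 s\rceil$.

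The naive schedule for this tree uses up to $\lceil s/2 \rceil$ cores at the bottom level, which exceeds the allowed budget of $\Omega(s/\log s)$ cores. To reconcile the two, I would invoke Brent's scheduling principle: a computation with work $W$ and depth $D$ can be executed on $p$ cores in time at most $\lceil W/p\rceil + D$. Indeed, if step $i$ of the depth-$D$ computation contains $W_i$ independent operations with $\sum_i W_i = W$, then distributing them evenly makes step $i$ cost $\lceil W_i/p \rceil \le W_i/p + 1$, and summing over the $D$ steps gives $W/p + D$. Substituting $W = s-1$, $D = \lceil \log_2 s\rceil$, and $p = \Theta(s/\log s)$ yields running time $O\big((s-1)\,\tfrac{\log s}{s}\big) + O(\log s) = O(\log s)$, which is the claimed bound.

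Finally, for the optimality implicit in ``best case,'' I would note a matching lower bound: since each addition has fan-in two, after $t$ parallel steps any register can depend on at most $2^t$ of the inputs, so computing the sum of all $s$ inputs forces $2^t \ge s$, i.e., $t \ge \log_2 s$; hence $\Omega(\log s)$ time is unavoidable and the reduction tree is asymptotically optimal. The main obstacle is the middle step: the binary tree reaches depth $O(\log s)$ only by spending $\Theta(s)$ cores at the leaves, so the crux is the processor-scheduling argument that redistributes the same $s-1$ operations onto the restricted budget of $\Theta(s/\log s)$ cores without inflating the time beyond $O(\log s)$.
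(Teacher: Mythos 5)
Your proof is correct and follows essentially the same route as the paper, which states this lemma as a cited fact from JaJa's textbook and sketches it in one line (``this speed-up is achieved by recursively adding pairs of numbers in parallel''), i.e., the same balanced binary reduction tree you construct. Your additional ingredients---Brent's scheduling principle to meet the restricted $\Omega(s/\log s)$ core budget, and the fan-in-$2$ lower bound showing $\Omega(\log s)$ time is unavoidable---are exactly the details the paper delegates to the reference, so they complete rather than deviate from its argument.
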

Essentially, this speed-up is achieved by recursively adding pairs of numbers in parallel. 
\begin{lemma}
~\cite{jaja1992introduction}
\label{lem:mat_mul}
Consider $M \in \mathbb{R}^{p \times q}$ and $N \in \mathbb{R}^{q \times r}$ with $s$ non-zeros per row/column. Naive serial matrix multiplication requires $O(psr)$ time; with $\Omega(psr / \log s)$ cores, this can be improved to $O( \log s)$ time in the best case.
\end{lemma}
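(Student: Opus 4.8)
The plan is to reduce the matrix product to a collection of independent inner products, and then invoke the parallel-addition result of Lemma~\ref{lem:add} on each. First I would establish the serial bound. The product $MN$ has $pr$ entries, with $(MN)_{ij} = \sum_{l=1}^q M_{il} N_{lj}$. Because $M$ has at most $s$ nonzeros per row (equivalently $N$ has at most $s$ nonzeros per column), the inner sum has at most $s$ nonzero summands, so each entry costs $O(s)$ multiply-add operations. Summing over all $pr$ entries yields the claimed $O(psr)$ serial running time.

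For the parallel bound, I would exhibit a schedule that uses $\Omega(psr/\log s)$ cores and runs in $O(\log s)$ time. Partition the cores into $pr$ groups, one per output entry, allocating $\Omega(s/\log s)$ cores to each group; the total is $pr\cdot\Omega(s/\log s) = \Omega(psr/\log s)$, as required. Within the group assigned to entry $(i,j)$, first compute the (at most $s$) nonzero products $M_{il}N_{lj}$: these are mutually independent and, with $\Omega(s/\log s)$ cores, can be formed in $O(\log s)$ time. Then apply Lemma~\ref{lem:add} to add these $s$ products, which again takes $O(\log s)$ time with $\Omega(s/\log s)$ cores. Since all $pr$ groups execute simultaneously and independently, the overall time is $O(\log s)$.

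The only real obstacle is the bookkeeping around the sparsity assumption and the idealized cost model. One must argue that the nonzero terms contributing to each output entry can be identified and routed to the corresponding core group without additional asymptotic overhead; in the parallel model of computation of~\cite{jaja1992introduction} this indexing and data movement is charged at unit cost, so it does not affect the stated bounds. With that convention the serial count and the parallel schedule combine to give the lemma, and the ``best case'' qualifier simply records that the $O(\log s)$ figure is the depth of the addition tree and presumes enough cores are genuinely available to realize the full parallelism.
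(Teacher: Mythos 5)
Your proposal is correct and takes essentially the same route as the paper: the paper's proof is the one-line observation that the lemma ``follows by simply parallelizing the sparse inner products and applying Lemma~\ref{lem:add} for the addition in the inner products,'' which is precisely the schedule you construct. Your write-up just makes explicit the per-entry core allocation, the depth of the multiplication/addition stages, and the unit-cost indexing convention that the paper leaves implicit.
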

Lemma~\ref{lem:mat_mul} follows by simply parallelizing the sparse inner products and applying Lemma~\ref{lem:add} for the addition in the inner products. Note that, this can be generalized to the fact that given $c$ cores, the multiplication can be performed in $O(\max(psr/c, \log s))$ running time.
\subsubsection{Pre-processing}

\paragraph{Random projection: }In preprocessing, given $c$ compute cores, we first do random projection using matrix multiplication. We multiply an $O(n) \times O(n)$ matrix $M_2$ with  an $O(n) \times O(k)$ random matrix $S$. Therefore, this requires $O(nsk)$ serial operations, where $s$ is the number of non-zero elements per row/column of $M_2$.  Using Lemma~\ref{lem:mat_mul}, given $c = \frac{nsk}{\log s}$ cores, we could achieve $O(\log s)$ computational complexity. However, the parallel computational complexity is not further reduced with more than $\frac{nsk}{\log s}$ cores.

After the multiplication, we use \emph{tall-thin SVD} for CPU implementation, and \emph{tall-thin QR} for GPU implementation.

\paragraph{Tall-thin SVD: }We perform Lanczos SVD on the tall-thin sparse $O(n)\times O(k)$ matrix, which involves a tri-diagonalization followed with the QR on the tri-diagonal matrix. Given $c = \frac{nsk}{\log s}$ cores, the computational complexity of the tri-diagonalization is $O(\log s)$. We then do QR on the tridiagonal matrix which is as cheap as $O(k^2)$ serially. Each orthogonalization requires $O(k)$ inner products of constant entry vectors, and there are $O(k)$ such orthogonalizations to be done. Therefore given $O(k)$ cores, the complexity is               $O(k)$.  More cores does not help since the degree of parallelism is $k$.

\paragraph{Tall-thin QR: }Alternatively, we perform QR in the GPU implementation which takes $O(sk^2)$. To arrive at the complexity of obtaining $Q$, we analyze the Gram-Schmidt orthonormalization procedure under sparsity and parallelism conditions. Consider a serial Gram-Schmidt on $k$ columns (which are $s$-dense) of $O(n) \times O(k)$ matrix. For each of the columns $2$ to $k$, we perform projection on the previously computed components and subtract it. Both inner product and subtraction operations are on the $s$-dense columns and there are $O(s)$ operations which are done $O(k^2)$ times serially. The last step is the normalization of $k$ $s$-dense vectors with is an $O(sk)$ operation. This leads to a serial complexity of $O(sk^2 + sk) = O(sk^2)$. Using this, we may obtain the parallel complexity in different regimes of the number of cores as follows.

\emph{Parallelism for inner products }: For each component $i$, we need $i-1$ projections on previous components which can be parallel.  Each projection involves scaling and inner product operations on a pair of $s$-dense vectors.  Using Lemma~\ref{lem:add},  projection for component $i$ can be performed in $O(\max(\frac{sk}{c}, \log s) )$ time.  $O(\log s)$ complexity is obtained using $O(sk/\log s)$ cores.

\emph{Parallelism for subtractions}:  For each component $i$, we need $i-1$ subtractions on a $s$-dense vector after the projection. Serially the subtraction requires $O(sk)$ operations, and this can be reduced to $O(\log k)$ with $O(sk/\log k)$ cores in the best case. The complexity is $O(\max(\frac{sk}{c}, \log k) )$.

Combing the inner products and subtractions, the complexity is $O\left(\max(\frac{sk}{c}, \log s)\right.$\\$\left. +\max(\frac{sk}{c}, \log k)\right)$ for component $i$. There are $k$ components in total, which can not be parallel. In total, the complexity for the parallel QR is $O\left(\max(\frac{sk^2}{c}, \log s) +\max(\frac{sk^2}{c}, \log k)\right)$.

\paragraph{Short-thin SVD: }SVD of the smaller $O( \Rbb^{k \times k})$ matrix  time requires $O(k^3)$ computations in serially. We note that this is the bottleneck for the computational complexity, but we emphasize that $k$ is sufficiently small in many applications. Furthermore, this $k^3$ complexity  can be reduced by using distributed SVD algorithms~e.g.~\cite{kannan2014principal,feldman2013turning}. An analysis with respect to Lanczos parallel SVD is similar with the discussion in the Tall-thin SVD paragraph. The complexity is $O(\max(k^3/c,\log k) + \max(k^2/c,k))$. In the best case, the complexity is reduced to $O(\log k + k)$.

The serial time complexity of SVD is $O(n^2 k)$ but with randomized dimensionality reduction~\cite{gittens2013revisiting} and parallelization~\cite{constantine2011tall}, this is significantly reduced.
\subsubsection{STGD}
In STGD, we perform implicit stochastic updates, consisting of a constant number of matrix-matrix and matrix-vector products, on the set of eigenvectors and whitened samples which is of size $k \times k$. When $c \in [1, k^3 / \log k]$, we obtain a running time of $O({k^3/c})$ for computing inner products in parallel with $c$ compute cores since each core can perform an inner product to compute an element in the resulting matrix independent of other cores in linear time. For $c \in (k^3 / \log k, \infty]$, using Lemma~\ref{lem:add}, we obtain a running time of $O(\log k)$. Note that the STGD time complexity is calculated per iteration.
\subsubsection{Post-processing}
Finally, post-processing consists of sparse matrix products as well. Similar to pre-processing, this consists of multiplications involving the sparse matrices. Given $s$ number of non-zeros per column of an $O(n) \times O(k)$ matrix, the effective number of elements reduces to $O(sk)$. Hence, given $c \in [1, nks / \log s]$ cores, we need $O({nsk/c})$ time to perform the inner products for each entry of the resultant matrix. For $c \in (nks / \log s, \infty]$, using Lemma~\ref{lem:add}, we obtain a running time of $O(\log s)$.

\bigskip
Note that $nk^2$ is the complexity of computing the exact SVD and we reduce it to $O(k)$ when there are sufficient cores available. This is meant for the setting where $k$ is small.
This $k^3$ complexity of SVD on $O(k\times k)$ matrix  can be reduced to $O(k)$  using distributed SVD algorithms~e.g.~\cite{kannan2014principal,feldman2013turning}.
We note that the variational inference algorithm complexity, by Gopalan and Blei~\cite{gopalan2013efficient},  is $O(mk)$ for each iteration, where $m$ denotes the number of edges in the graph, and  $n< m < n^2$.  In the regime that $n\gg k$, our algorithm is more efficient. Moreover, a big difference is in the scaling with respect to the size of the network and ease of parallelization of our method compared to variational one.

\section{Validation methods}\label{sec:val_meth}
\subsection{$P$-value Testing }\label{sec:apdx_pval}
\begin{figure}[hbtp]
   \centering
   \psfrag{Matched}[c]{ }
   \psfrag{\$\\Pi\_\{1\}\$}[c]{\textcolor[rgb]{1,1,1}{\scriptsize{ $\Pi_{1}$}}}
   \psfrag{\$\\Pi\_\{2\}\$}[c]{\textcolor[rgb]{1,1,1}{\scriptsize{ $\Pi_{2}$}}}
   \psfrag{\$\\Pi\_\{3\}\$}[c]{\textcolor[rgb]{1,1,1}{\scriptsize{ $\Pi_{3}$}}}
   \psfrag{\$\\Pi\_\{4\}\$}[c]{\textcolor[rgb]{1,1,1}{\scriptsize{ $\Pi_{4}$}}}
   \psfrag{\$\\hat\{\\Pi\}\_\{1\}\$}[l]{\textcolor[rgb]{1,1,1}{\scriptsize{$\widehat{\Pi}_{1}$}}}
   \psfrag{\$\\hat\{\\Pi\}\_\{2\}\$}[l]{\textcolor[rgb]{1,1,1}{\scriptsize{$\widehat{\Pi}_{2}$}}}
   \psfrag{\$\\hat\{\\Pi\}\_\{3\}\$}[l]{\textcolor[rgb]{1,1,1}{\scriptsize{$\widehat{\Pi}_{3}$}}}
   \psfrag{\$\\hat\{\\Pi\}\_\{4\}\$}[l]{\textcolor[rgb]{1,1,1}{\scriptsize{$\widehat{\Pi}_{4}$}}}
   \psfrag{\$\\hat\{\\Pi\}\_\{5\}\$}[l]{\textcolor[rgb]{1,1,1}{\scriptsize{$\widehat{\Pi}_{5}$}}}
   \psfrag{\$\\hat\{\\Pi\}\_\{6\}\$}[l]{\textcolor[rgb]{1,1,1}{\scriptsize{$\widehat{\Pi}_{6}$}}}
    \includegraphics[height=3in]{\fighomeCommunity/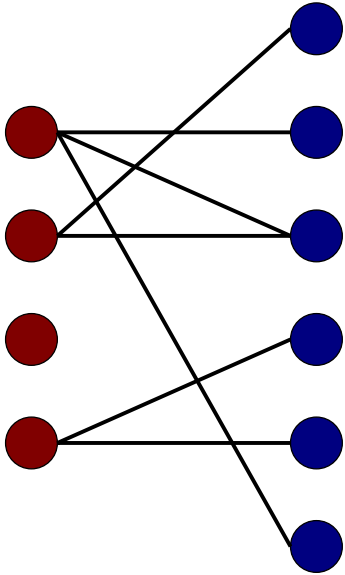}
   \caption[$P$-value matching]{Bipartite graph $G_{\{\Pvalue\}}$ induced by $p$-value testing. Edges represent statistically significant relationships between ground truth and estimated communities.}
      \label{fig:match}
\end{figure}
We recover the estimated community membership matrix $\widehat{\Pi}\in \mathbb{R}^{\widehat{k}\times n}$, where $\widehat{k}$ is the number of communities specified to our method. Recall that the true community membership matrix is $\Pi$, and we consider datasets where ground truth is available. Let $i$-th row of $\widehat{\Pi}$ be denoted by $\widehat{\Pi}_i$. Our community detection method is unsupervised, which inevitably results in row permutations between $\Pi$ and $\widehat{\Pi}$ and $\widehat{k}$ may not be the same as $k$. To validate the results, we need to find a good match between the rows of $\widehat{\Pi}$ and $\Pi$. We use the notion of $p$-values 
 to test for statistically significant dependencies among a set of random variables.
The $p$-value denotes the probability of not rejecting the null hypothesis that the random variables under consideration are independent and we use the  Student's\footnote{Note that Student's $t$-test is robust to the presence of unequal variances when the sample sizes of the two are equal which is true in our setting.} $t$-test statistic~\cite{fadem2012high} to compute the $p$-value. We use multiple hypothesis testing for different pairs of estimated and ground-truth communities $\widehat{\Pi}_i, \Pi_j$ and  adjust the $p$-values to ensure a small enough false discovery rate (FDR)~\cite{strimmer2008fdrtool}.

The test statistic used for the $p$-value testing of the estimated communities is
\[
T_{ij} : = \frac{\rho \left(\widehat{\Pi}_i,\Pi_j\right)\sqrt{n-2}}{\sqrt{1-\rho \left(\widehat{\Pi}_i,\Pi_j\right)^2}}.
\]
The right $p$-value is obtained via the probability of obtaining a value (say $t_{ij}$) greater than the test statistic $T_{ij}$, and it is defined as
\[
\Pvalue(\Pi_i , \widehat{\Pi}_j):= 1- \mathbb{P}\left( t_{ij} > T_{ij}\right).
\]
Note that $T_{ij}$ has Student's $t$-distribution with degree of freedom $n-2$ (i.e. $T_{ij}\sim t_{n-2} $).  Thus, we obtain the right $p$-value\footnote{The right $p$-value accounts for the fact that when two communities are anti-correlated they are not paired up. Hence note that in the special case of block model in which the estimated communities are just permuted version of the ground truth communities, the pairing results in a perfect matching accurately.}.

In this way, we compute the $\Pvaluem$ matrix as\[
\Pvaluem(i,j):=\Pvalue\left[\widehat{\Pi}_i,\Pi_j\right], \forall i\in [k] \text{ and } j \in [\widehat{k}].
\]

\subsection{Evaluation Metrics}\label{sec:defineourscores}
\paragraph{Recovery ratio: }Validating the results requires a matching of the true membership $\Pi$ with estimated membership $\widehat{\Pi}$.
Let $\Pvalue(\Pi_i , \widehat{\Pi}_j)$ denote the right $p$-value under the null  hypothesis that $\Pi_i$ and $\widehat{\Pi}_j$ are statistically independent. We use the $p$-value test to find out pairs $\Pi_i , \widehat{\Pi}_j$ which pass a specified $p$-value threshold, and we denote such pairs using  a bipartite graph $G_{\{\Pvalue\}}$. Thus, $G_{\{\Pvalue\}}$ is defined as
\[
G_{\{\Pvalue\}}:=\left(\left\{V^{(1)}_{\{\Pvalue\}},V^{(2)}_{\{\Pvalue\}}\right\}, E_{\{\Pvalue\}} \right),
\]
where the nodes in the two node sets are
\begin{align*}
&V^{(1)}_{\{\Pvalue\}}=\left\{\Pi_1,\ldots,\Pi_k\right\},\quad \\
&V^{(2)}_{\{\Pvalue\}}=\left\{\widehat{\Pi}_1,\ldots,\widehat{\Pi}_{\widehat{k}}\right\}
\end{align*}
 and the edges of $G_{\{\Pvalue\}}$ satisfy
\[(i,j)\in
E_{\{\Pvalue\}} \text{ s.t. } \Pvalue\left[\widehat{\Pi}_i,\Pi_j\right] \le 0.01
.
\]

A simple example is shown in Figure~\ref{fig:match}, in which $\Pi_{2}$  has statistically significant dependence with $\widehat{\Pi}_1$, i.e., the probability of not rejecting the null hypothesis is small (recall that null hypothesis is that they are independent). If no estimated membership vector has a significant overlap with $\Pi_3$, then $\Pi_3$ is not recovered. There can also be multiple pairings such as for $\Pi_{1}$ and $\{\widehat{\Pi}_2,\widehat{\Pi}_3,\widehat{\Pi}_6\}$. The $p$-value test between $\Pi_{1}$ and $\{\widehat{\Pi}_2,\widehat{\Pi}_3,\widehat{\Pi}_6\}$ indicates that probability of not rejecting the null hypothesis is small, i.e., they are independent. We use $0.01$ as the threshold. The same holds for $\Pi_{2}$ and $\{\widehat{\Pi}_1\}$ and for $\Pi_{4}$ and $\{\widehat{\Pi}_4,\widehat{\Pi}_5\}$. There can be a perfect one to one matching like for $\Pi_{2}$ and $\widehat{\Pi}_1$ as well as a multiple matching such as for $\Pi_{1}$ and $\{\widehat{\Pi}_2,\widehat{\Pi}_3,\widehat{\Pi}_6\}$. Or another multiple matching such as for $\{\Pi_{1},\Pi_{2}\} $ and $\widehat{\Pi}_3$.

Let $\degree_i$ denote the degree of ground truth community $i\in[k]$ in $G_{\{\Pvalue\}}$, we define the recovery ratio as follows.
\begin{definition}
The \emph{recovery ratio} is defined as
\begin{equation*}
\mathcal{R}:=\frac{1}{k} \sum\limits_{i}\mathbb{I}\left\{\degree_i >0 \right\}, \quad i\in[k]
  \end{equation*}
    where $\mathbb{I}(x)$ is the indicator function whose value equals one if $x$ is true.
\end{definition}
The perfect case is that all the memberships have at least one significant overlapping estimated membership, giving a recovery ratio of $100\%$.
\paragraph{Error function: }For performance analysis of our learning algorithm, we use an error function given as follows:
\begin{definition}
The average error function is defined as

\[
\mathcal{E}: =\frac{1}{k}\sum\limits_{
      (i,j)\in E_{\{\Pvalue\}}
}
 \left\{\frac{1}{n}\sum\limits_{x\in |X|} {\llvert \widehat{\Pi}_i (x)- \Pi_j(x) \llvert}\right\},
\] where $E_{\{\Pvalue\}}
$ denotes the set of edges based on thresholding of the $p$-values.
\end{definition}

The error function incorporates two aspects, namely the $l_1$ norm error between each estimated community  and the corresponding paired ground truth community, and the error induced by false pairings between the estimated and ground-truth communities through $p$-value testing. For the former $l_1$ norm error, we normalize with $n$ which is reasonable and results in the range of the error in $[0,1]$. For the latter, we define the average error function as the summation of all paired memberships errors divided by the true number of communities $k$. In this way we penalize falsely discovered pairings by summing them up. Our error function can be greater than 1 if there are too many falsely discovered pairings through $p$-value testing (which can be as large as $k\times \widehat{k}$).

\paragraph{Bridgeness: }
Bridgeness in overlapping communities is an interesting measure to evaluate.  A bridge is defined as a vertex that crosses structural holes between discrete groups of people and bridgeness analyzes the extent to which a given vertex is shared among different communities~\cite{nepusz2008fuzzy}. Formally, the bridgeness of a vertex $i$ is defined as 
\begin{equation}\label{eq:bridgeness}
b_i:=1-\sqrt{\frac{\widehat{k}}{\widehat{k}-1}\sum\limits_{j=1}^{\widehat{k}}{\left(\widehat{\Pi}_i(j)-\frac{1}{\widehat{k}}\right)}^2}.
\end{equation}
Note that centrality measures should be used in conjunction with bridge score to distinguish outliers from genuine bridge nodes~\cite{nepusz2008fuzzy}.
The \emph{degree-corrected bridgeness} is  used to evaluate our results and is defined as
\begin{equation}\label{eq:degreebridgeness}
\mathcal{B}_i:=D_ib_i,
\end{equation}
where $D_i$ is  degree of node $i$.

\section{Experimental Results}
\label{sec:results}


\paragraph{Results on Synthetic Datasets: }

We perform experiments for both the stochastic block model ($\alpha_0=0$) and the mixed membership model. For the mixed membership model, we set the concentration parameter $\alpha_0 =1$. We note that the error is around $8\% - 14\%$ and the running times are under a minute, when $n \leq 10000$ and $n \gg k$.

We observe that more samples result in a more accurate recovery of memberships which matches intuition and theory. Overall, our learning algorithm performs better in the stochastic block model case than in the mixed membership model case although we note that the accuracy is quite high for practical purposes. Theoretically, this is expected since smaller concentration parameter $\alpha_0$ is easier for our algorithm to learn~\cite{AnandkumarEtal:community12COLT}. Also, our algorithm is scalable to an order of magnitude larger in $n$ as illustrated by experiments on real-world large-scale datasets.

Note that we threshold the estimated memberships to clean the results. There is a tradeoff between match ratio and average error via different thresholds. In synthetic experiments, the tradeoff is not evident since a perfect matching is always present. However, we need to carefully handle this in experiments involving real data.

\paragraph{Results on Topic Modeling:} We perform experiments for the bag of words data set~\cite{Bache+Lichman:2013} for The New York Times. We set the concentration parameter to be $\alpha_0=1$ and observe top recovered words in numerous topics. The results are in Table~\ref{tab:Nytimes}. Many of the results are expected. For example, the top words in topic \# 11 are all related to some bad personality.

We also present the words with most spread membership, i.e., words that belong to many topics as in Table~\ref{tab:nytimesbridge}. As expected, we see minutes, consumer, human, member and so on. These words can appear in a lot of topics, and we expect them to connect topics.

\tiny
\begin{table}[htbp]
\scriptsize
   \centering
   \begin{tabular}{@{} l|lllll@{}} 
\hline
Topic \#&  & Top Words& & &  \\
\hline
\hline
1& prompting	&	complicated	&	eviscerated	&	predetermined	&	lap\\
	&	renegotiating	&	loose	&	entity	&	legalese	&	justice\\
	\hline	
2 & hamstrung	&	airbrushed	&	quasi	&	outsold	&	fargo	\\
&	ennobled	&	tantalize	&	irrelevance	&	noncontroversial	&	untalented	\\	
\hline
3 &scariest	&	pest	&	knowingly	&	causing	&	flub\\
	&	mesmerize	&	dawned	&	millennium	&	ecological	&	ecologist	\\
	\hline	
4 & reelection	&	quixotic	&	arthroscopic	&	versatility	&	commanded\\
	&	hyperextended	&	anus	&	precipitating	&	underhand	&	knee\\
	\hline
5 &believe	&	signing	&	ballcarrier	&	parallel	&	anomalies\\
	&	munching	&	prorated	&	unsettle	&	linebacking	&	bonus\\
	\hline
6 &gainfully	&	settles	&	narrator	&	considerable	&	articles\\
	&	narrative	&	rosier	&	deviating	&	protagonist	&	deductible\\
	\hline
7 &faithful	&	betcha	&	corrupted	&	inept	&	retrench \\
	&	martialed	&	winston	&	dowdy	&	islamic	&	corrupting	\\
	\hline
8 &capable	&	misdeed	&	dashboard	&	navigation	&	opportunistically\\
	&	aerodynamic	&	airbag	&	system	&	braking	&	mph\\
	\hline
9 &apostles	&	oracles	&	believer	&	deliberately	&	loafer	\\
&	gospel	&	apt	&	mobbed	&	manipulate	&	dialogue\\
\hline
10 & physique	&	jumping	&	visualizing	&	hedgehog	&	zeitgeist	\\
&	belonged	&	loo	&	mauling	&	postproduction	&	plunk\\
\hline
11 &smirky	&	silly	&	bad	&	natured	&	frat	\\
&	thoughtful	&	freaked	&	moron	&	obtuse	&	stink	\\
\hline
12 &offsetting	&	preparing	&	acknowledgment	&	agree	&	misstating\\
	&	litigator	&	prevented	&	revoked	&	preseason	&	entomology\\
	\hline
13 &undertaken	&	wilsonian	&	idealism	&	brethren	&	writeoff	\\
&	multipolar	&	hegemonist	&	multilateral	&	enlargement	&	mutating	\\	
\hline
14 & athletically	&	fictitious	&	myer	&	majorleaguebaseball	&	familiarizing\\
	&	resurrect	&	slug	&	backslide	&	superseding	&	artistically	\\
	\hline
15 & dialog	&	files	&	diabolical	&	lion	&	town	\\
&	password	&	list	&	swiss	&	coldblooded	&	outgained	\\
\hline
16 & recessed	&	phased	&	butyl	&	lowlight	&	balmy\\
	&	redlining	&	prescription	&	marched	&	mischaracterization	&	tertiary\\
	\hline
17 & sponsor	&	televise	&	sponsorship	&	festival	&	sullied\\
	&	ratification	&	insinuating	&	warhead	&	staged	&	reconstruct	\\
	\hline
18 &trespasses	&	buckle	&	divestment	&	schoolchild	&	refuel	\\
&	ineffectiveness	&	coexisted	&	repentance	&	divvying	&	overexposed	\\	
\hline
 \end{tabular}
   \caption[New York Times results: topics]{Top recovered topic groups from the New York Times dataset along with the words present in them.}
   \label{tab:Nytimes}
\end{table}
\normalsize

\begin{table}[htbp]
   \centering
   \begin{tabular}{@{} l @{}} 
\hline
Keywords \\
\hline
      \hline
   minutes, 
consumer, 
human, 
member, 
friend, 
program, 
board, 
cell, 
insurance, 
shot	\\
\hline
   \end{tabular}
   \caption[New York Times results: words]{The top ten words which occur in multiple contexts in the New York Times dataset.}
   \label{tab:nytimesbridge}
\end{table}

\paragraph{Results on Real-world Graph Datasets: }We describe the results on real datasets summarized in Table~\ref{tab:data_info} in detail below. The simulations are summarized in Table~\ref{table:businessresults}.

\begin{table}[htbp]
\small
   \centering
      \begin{tabular}{@{} l|l|l|l|l@{}} 
\hline
Statistics				& Facebook 	& Yelp  			& DBLP sub & DBLP \\
\hline
\hline
$\lvert E \rvert$		&	766,800	&672,515			&5,066,510 	& 16,221,000\\
$\lvert V\rvert$		&	18,163   	&10,010$+$28,588	& 116,317		& 1,054,066\\
GD 				& 0.004649 	& 0.000903 		& 0.000749 	& 0.000029\\
$k$				&	360		&159				&250 		& 6,003\\
AB                  		& 0.5379           &  0.4281      		& 0.3779 		& 0.2066\\
ADCB			&	47.01	&30.75			&48.41 		& 6.36\\
\hline
   \end{tabular}
   \caption[Datasets summary]{Summary of real datasets used in our thesis: $\lvert V\rvert$	is the number of nodes in the graph, $\lvert E \rvert$ is the number of edges, GD is the graph density given by $\frac{2\lvert E\lvert}{\lvert V\rvert\left(\lvert V\rvert-1\right)}$,  
   $k$ is the number of communities,
   AB is the average bridgeness and ADCB is the average degree-corrected bridgeness(explained in Section~\ref{sec:val_meth}).}
 \label{tab:data_info}
\end{table}

\begin{table}[h]
\centering
{\scriptsize
\begin{tabular}{@{} l|l|l|l|l|l|l @{}}
\hline
Data 	& Method 		& $\widehat{k}$ 	& Thre 	& $\mathcal{E}$ 	&$\mathcal{R} (\%)$		&  Time(s)  \\
\hline
 \hline
		&Ten(sparse) 		&$10$		&$0.10$  		& $0.063$     	& $13$   		&$35$  \\
		&Ten(sparse) 		&$100$		&$0.08$ 		&$0.024$ 		& $62$  		&$309$ \\
		&Ten(sparse) 		&$100$		&$0.05$ 		& $0.118$ 	& $95$ 		&$309$ \\
 		&Ten(dense)  		&$100$		&$0.100$  	& $0.012$     	& $39$ 		 &$190$  \\
		&Ten(dense)  	 	&$100$		&$0.070$ 		& $0.019$ 	& $100$ 	 	&$190$ \\
FB		& Variational 		&$100$ 		&--			& $0.070$     	& $100$ 	 	&$10,795$  \\
		& Ten(dense)  		&$500$		&$0.020$  	& $0.014$  	& $71$ 		&$468$  \\
		& Ten(dense) 		&$500$		&$0.015$  	& $0.018$ 	& $100$		&$468$ \\
		& Variational 			&$500$    		&--			& $0.031$  	& $100$	 	&$86,808$  \\

 \hline
		&Ten(sparse) 		&$10$    		&$0.10$  		&$0.271$     	&$43$ 		&$10$  \\
		&Ten(sparse) 		&$100$ 		&$0.08$ 		&$0.046$ 		&$86$ 		&$287$ \\
		&Ten(dense) 		&$100$    		& $0.100$  	& $0.023$     	&  $43$ 		&$1,127$  \\
YP		&Ten(dense) 		&$100$ 		&$0.090$ 		& $ 0.061$ 	& $80$ 		&$1,127$ \\
		&Ten(dense)		&$500$    		&$0.020$   	& $0.064$  	& $72$		&$1,706$  \\
		&Ten(dense)		&$500$ 		&$0.015$  	& $0.336$ 	& $100$		&$1,706$ \\
\hline
		&Ten(dense) 		&$100$ 		&$0.15$  		&$0.072$ 		&$36$ 		&$7,664$ \\
		&Ten(dense) 		&$100$    		&$0.09$  		&$0.260$     	&$80$ 		&$7,664$  \\
			&Variational				&$100$		&--			&$7.453$		&$99$		&$69,156$\\
DB sub		&Ten(dense)  		&$500$    		&$0.10$  		&$0.010$  	&$19$		&$10,157$  \\
			&Ten(dense) 		&$500$ 		&$0.04$   		&$0.139$ 		&$89$		&$10,157$ \\
			&Variational				&$500$		&--			&$16.38$		&$99$		&$558,723$\\
\hline
		&Ten(sparse)  		&$10$    		&$0.30$ 		&$0.103$  	&$73$     		&$4716$  \\
DB		&Ten(sparse) 		&$100$ 		&$0.08$  		&$0.003$   	&$57$ 		&$5407$ \\
		&Ten(sparse) 		&$100$ 		&$0.05$  		&$0.105$ 	       	&$95$ 		&$5407$ \\
\hline
   \end{tabular}
   }
\caption[Compare community detection results against variational method]{Yelp, Facebook and DBLP main quantitative evaluation of the tensor method versus the variational method:
 $\widehat{k}$ is the community number specified to our algorithm, Thre is the threshold for picking significant estimated membership entries. Refer to Table~\ref{tab:data_info} for statistics of the datasets.
 }
\label{table:businessresults}
\end{table}


The results are presented in Table~\ref{table:businessresults}. We note that our method, in both dense and sparse implementations, performs very well compared to the state-of-the-art variational method.
For the Yelp dataset, we have a bipartite graph where the business nodes are on one side and user nodes on the other and use the review stars as the edge weights.  
In this bipartite setting, 
the variational code provided by Gopalan et al~\cite{gopalan2012scalable} does not work on since it is not applicable to non-homophilic models. Our approach does not have this restriction. Note that we use our dense implementation on the GPU to run experiments with large number of communities $k$ as the device implementation is much faster in terms of running time of the STGD step.
On the other hand, the sparse implementation on CPU is fast and memory efficient in the case of sparse graphs with a small number of communities while the dense implementation on GPU is faster for denser graphs such as Facebook. Note that data reading time for DBLP is around 4700 seconds, which is not negligible as compared to other datasets (usually within a few seconds). Effectively, our algorithm, excluding the file I/O time, executes within two minutes for $k=10$ and within ten minutes for $k=100$.

\begin{figure}[h]
\psfrag{Number of categories}[l]{\tiny{Business Category ID}}
\psfrag{Number of business}[l]{\tiny{\# business $\ $}}
\psfrag{Distribution of Categories: }[c]{\scriptsize{ }}
 \includegraphics[width=0.49\columnwidth]{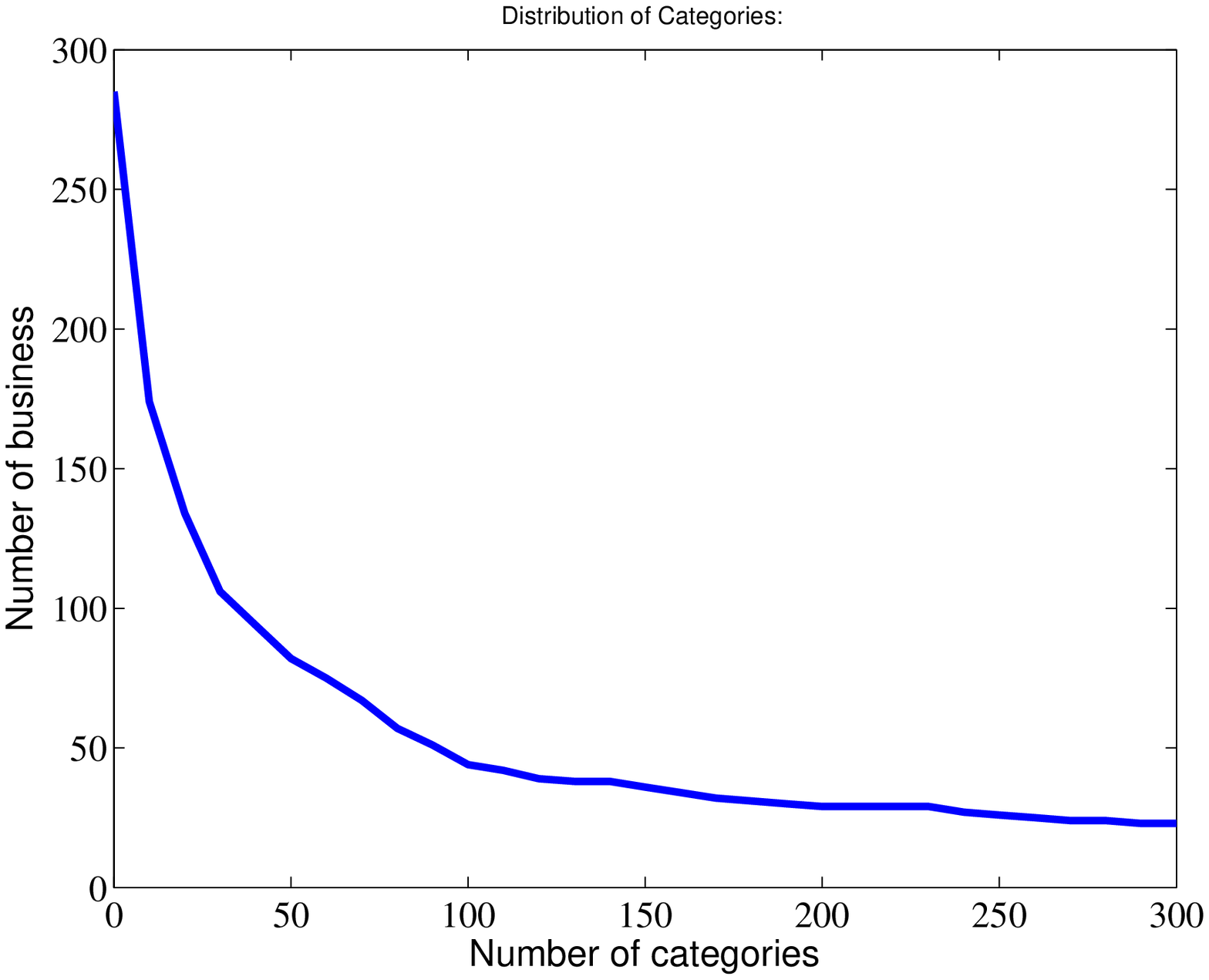}
\psfrag{match ratio}[l]{\tiny{Recovery Ratio}}
\psfrag{average error}[l]{\tiny{Average Error $\quad \quad $}}
\includegraphics[width=0.49\columnwidth]{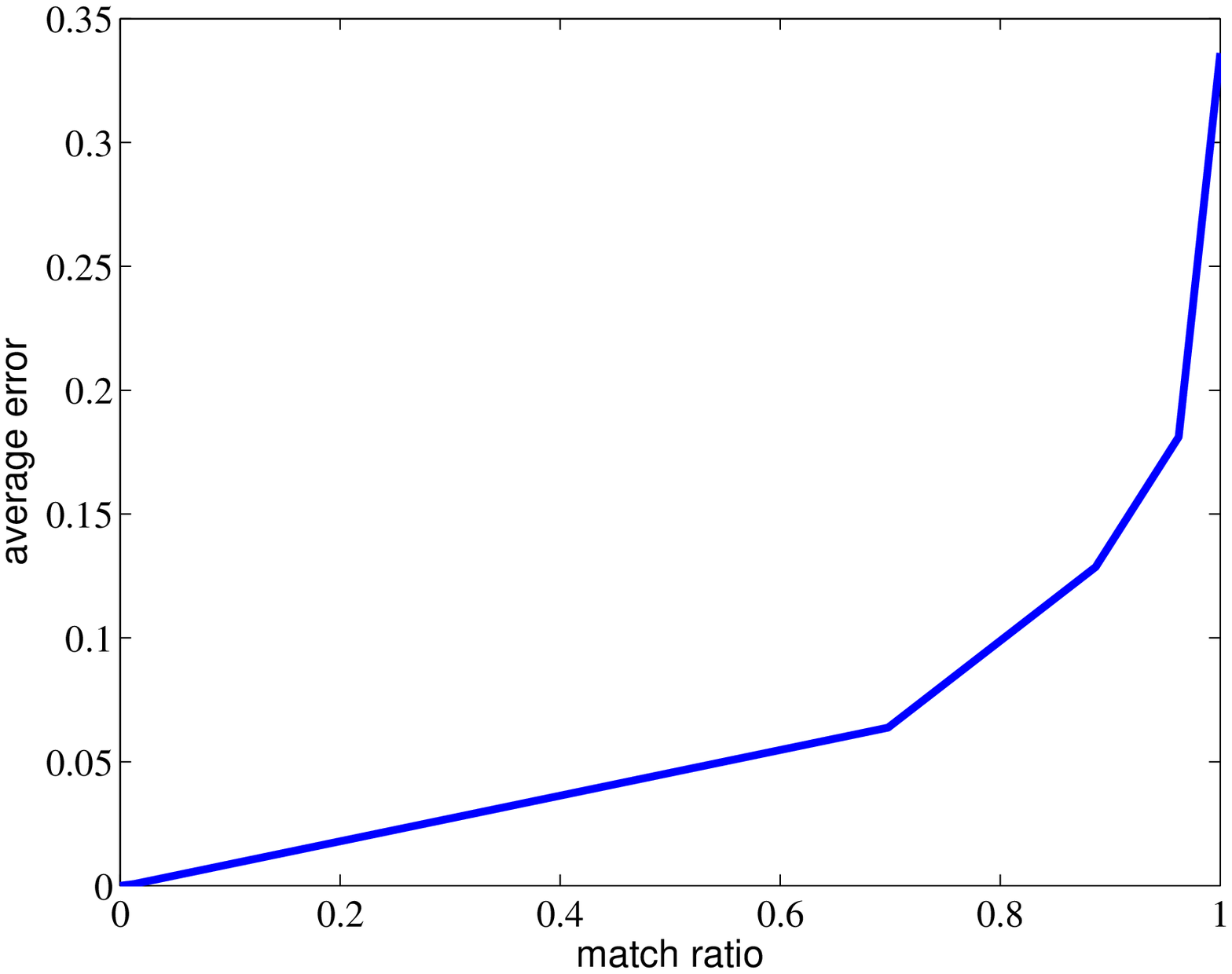}
\caption[Yelp result]{Distribution of business categories (left) and result tradeoff between recovery ratio and error for yelp (right).}
\label{fig:tradeoff}
\end{figure}

\paragraph{Interpretation on Yelp Dataset: }The ground truth on business attributes such as location and type of business are available (but not provided to our algorithm) and we provide the distribution in Figure~\ref{fig:tradeoff} on the left side. There is also a natural trade-off between recovery ratio and average error or between attempting to recover all the business communities and the accuracy of recovery. We can either recover top significant communities with high accuracy or recover more with lower accuracy. We demonstrate the trade-off in Figure~\ref{fig:tradeoff} on the right side.

We select the top ten categories recovered with the lowest error and report the business with  highest weights in $\widehat{\Pi}$. Among the matched communities, we find the business with the highest membership weight (Table~\ref{tab:topbusinesses}). We can see that most of the ``top'' recovered businesses are  rated   high. 
Many of the categories in the top ten list are restaurants as they have a large number of reviewers. Our method can recover restaurant category with high accuracy, and the specific  restaurant in the category is a popular result (with high number of stars). Also, our method can also recover many of the categories with low review counts accurately like hobby shops, yoga, churches, galleries and religious organizations which are the ``niche'' categories with a dedicated set of reviewers, who mostly do not review other categories. 

\begin{table}[htbp]
  \scriptsize
  \centering
   \begin{tabular}{@{} l|l|l|c|c|c @{}}
\hline
      Category & Business & Star(B) & Star(C) &
      RC(B) & RC(C)\\
\hline
      Latin American & Salvadoreno & $4.0$ & $3.94$	
      &$36$ 	&$93.8$\\
      Gluten Free & P.F. Chang's & $3.5$ & $3.72$		
      &$55$	&$50.6$\\
      Hobby Shops & Make Meaning & $4.5$ & $4.13$			
      &$14$	&$7.6$\\
      Mass Media & KJZZ $91.5$FM & $4.0$ & $3.63$			
      &$13$	&$5.6$\\
      Yoga & Sutra Midtown & $4.5$ & $4.55$					
      &$31$	&$12.6$\\
      Churches & St Andrew Church & $4.5$ & $4.52$
      &$3$	&$4.2$\\
      Art Galleries &Sette Lisa &$4.5$ & $4.48$         		
      &$4$		&$6.6$\\
      Libraries & Cholla Branch & $4.0$ & $4.00$			
      &$5$		&$11.2$\\
      Religious & St Andrew Church & $4.5$ &$4.40$	
      &$3$ &$4.2$\\
      Wickenburg & Taste of Caribbean & $4.0$ & $3.66$
      &$60$	& $6.7$\\
\hline
   \end{tabular}
   \caption[Membership recovery in Yelp review data]{Most accurately recovered categories and businesses with highest membership weights for the Yelp dataset. ``Star(B)'' denotes the review stars that the business receive and ``Star(C)'', the average review stars that businesses in that category receive.  ``RC(B)'' denotes the review counts for that business and ``RC(C)'' , the average review counts in that category. }
   \label{tab:topbusinesses}
\end{table}

Our algorithm can also recover the attributes of users. However, the ground truth available about users
is far more limited than businesses, and we only have information on gender, average review counts and average stars (we infer the gender of the users through their  names). Our algorithm can  recover all these attributes. We observe that gender is the hardest to recover while review counts is the easiest.
We see that the other user attributes recovered by our algorithm correspond to valuable user information such as their interests, location, age, lifestyle, etc. This is useful, for instance, for businesses studying the characteristics of their users, for delivering better personalized advertisements for users, and so on.

\paragraph{Facebook Dataset: }A snapshot of the Facebook network of UNC~\cite{facebook} is provided with user attributes.
The ground truth communities are based on user attributes given in the dataset which are not exposed to the algorithm. There are $360$ top communities with sufficient (at least 20) users. Our algorithm can recover these attributes with high accuracy compared with variational inference result~\cite{gopalan2012scalable}. 

We also obtain results for a range of values of $\alpha_0$ (Figure~\ref{Fig:alpha0s}). We observe that the recovery ratio improves with larger $\alpha_0$ since a larger $\alpha_0$ can recover overlapping communities more efficiently  while the error score remains relatively the same.


\begin{figure*}[h]
\small \centering
\psfrag{Recovery ratio vs threshold under alpha0s}[l]{\tiny{}}
\psfrag{Threshold}[c]{\small{Threshold}}
\psfrag{Rec. ratio}[c]{\small{Recovery ratio}}
\psfrag{alpha0.1}[l]{\small{$\alpha_0$:0.1}}
\psfrag{alpha0.2}[l]{\small{$\alpha_0$:0.2}}
\psfrag{alpha0.3}[l]{\small{$\alpha_0$:0.3}}
\psfrag{alpha0.4}[l]{\small{$\alpha_0$:0.4}}
\psfrag{alpha0.5}[l]{\small{$\alpha_0$:0.5}}
\psfrag{alpha0.6}[l]{\small{$\alpha_0$:0.6}}
\psfrag{alpha0.7}[l]{\small{$\alpha_0$:0.7}}
\psfrag{alpha0.8}[l]{\small{$\alpha_0$:0.8}}
\psfrag{alpha0.9}[l]{\small{$\alpha_0$:0.9}}
\includegraphics[width=0.49\textwidth]{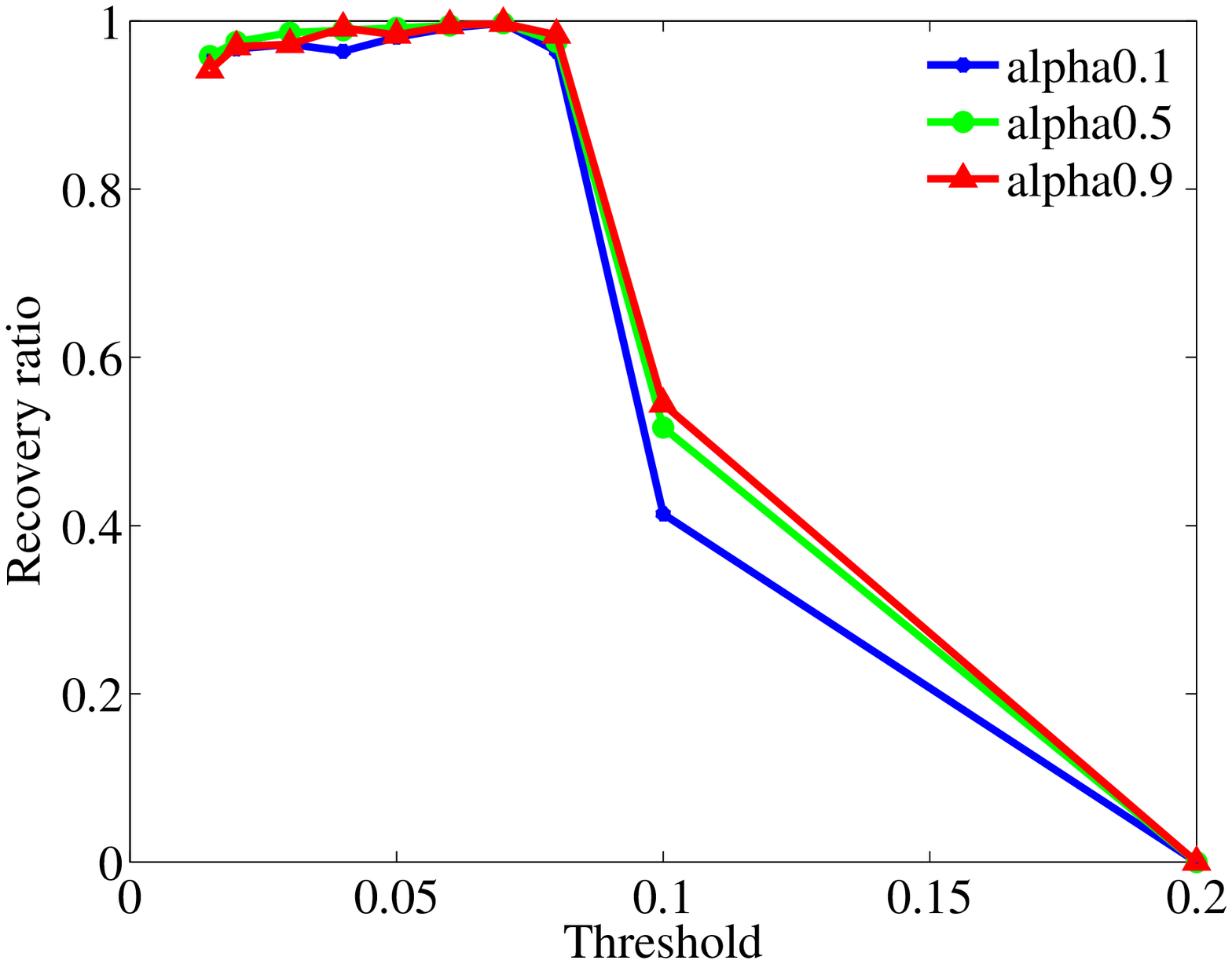}
\small \centering
\psfrag{Error vs threshold under alpha0s}[l]{\tiny{}}
\psfrag{Threshold}[c]{\small{Threshold}}
\psfrag{Error}[c]{\small{Error}}
\psfrag{alpha0.1}[l]{\small{$\alpha_0$:0.1}}
\psfrag{alpha0.2}[l]{\small{$\alpha_0$:0.2}}
\psfrag{alpha0.3}[l]{\small{$\alpha_0$:0.3}}
\psfrag{alpha0.4}[l]{\small{$\alpha_0$:0.4}}
\psfrag{alpha0.5}[l]{\small{$\alpha_0$:0.5}}
\psfrag{alpha0.6}[l]{\small{$\alpha_0$:0.6}}
\psfrag{alpha0.7}[l]{\small{$\alpha_0$:0.7}}
\psfrag{alpha0.8}[l]{\small{$\alpha_0$:0.8}}
\psfrag{alpha0.9}[l]{\small{$\alpha_0$:0.9}}
\includegraphics[width=0.49\textwidth]{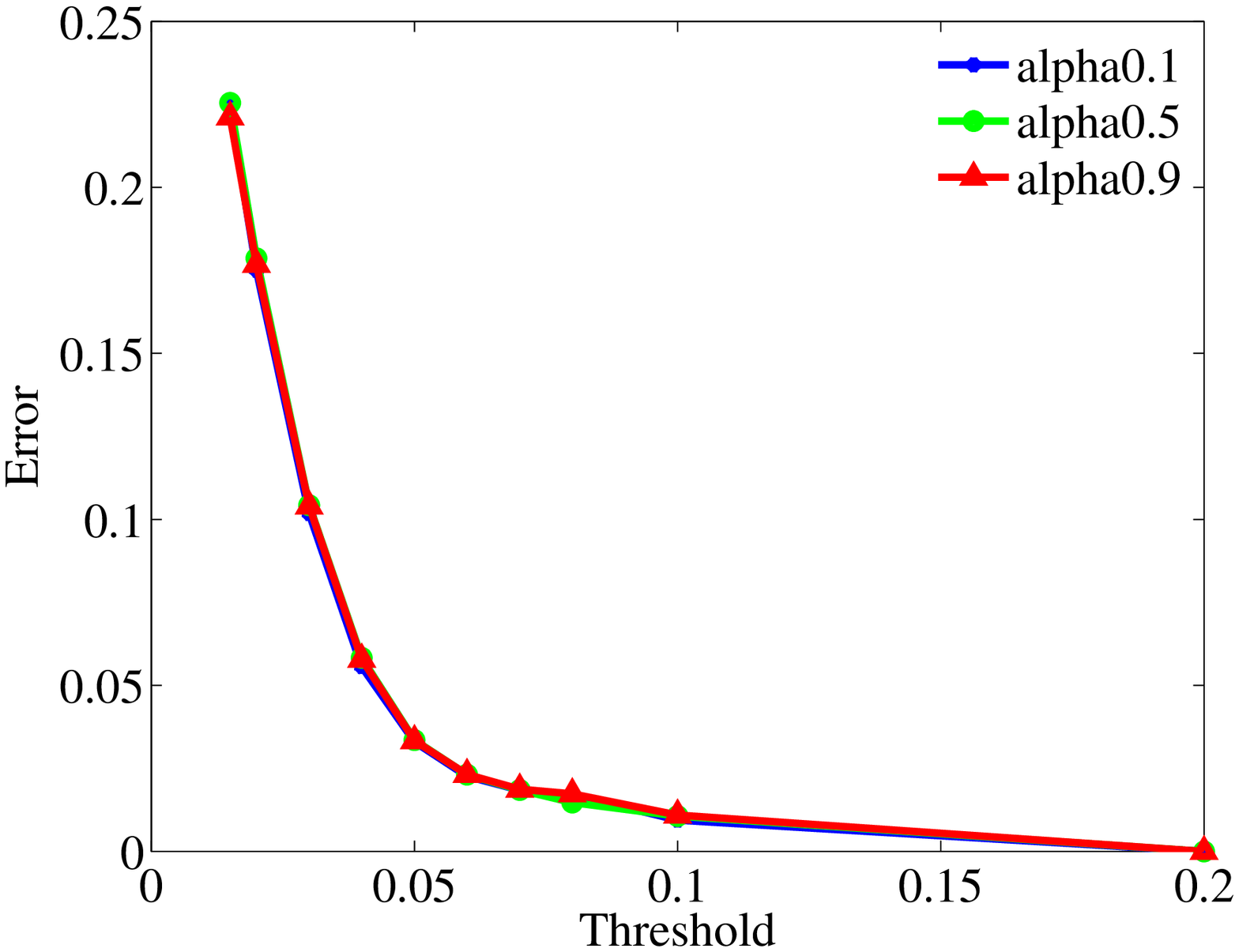}
\caption[Facebook result tunning]{Performance analysis of Facebook dataset under different settings of the concentration parameter ($\alpha_0$) for $\hat{k}=100$. 
}
\label{Fig:alpha0s}
\end{figure*}

For the Facebook dataset, the top ten communities recovered with  lowest error consist of certain high schools, second majors and dorms/houses.
We observe that high school attributes are easiest to recover and second major and dorm/house are reasonably easy to recover by looking  at the friendship relations in Facebook.  This is reasonable:  college students from the same high school have a high probability of being friends; so do colleges students from the same dorm. 

\paragraph{DBLP Dataset: }

The DBLP data contains bibliographic records\footnote{\url{http://dblp.uni-trier.de/xml/Dblp.xml} } with various publication venues, such as journals and conferences, which we model as communities. We then consider authors who have published at least one paper in a community (publication venue) as a member of it. Co-authorship is thus modeled as link in the graph in which authors are represented as nodes.  In this framework, we could recover the top authors in communities and bridging authors.

\section{Conclusion}
In this chapter, we presented a fast and unified moment-based framework for learning overlapping communities as well as topics in a corpus. There are several key insights involved. Firstly, our approach follows from a systematic and guaranteed learning procedure in contrast to several heuristic approaches which may not have strong statistical recovery guarantees. Secondly, though using a moment-based formulation may seem computationally expensive at first sight, implementing implicit ``tensor'' operations leads to significant speed-ups of the algorithm. Thirdly, employing randomized methods for spectral methods is promising in the computational domain, since the running time can then be significantly reduced.

This work paves the way for several interesting directions for further research. While our current deployment incorporates community detection in a single graph, extensions to multi-graphs and hypergraphs are possible in principle. A careful and efficient implementation for such settings will be useful in a number of applications.  It is natural to extend the deployment to even larger datasets by having cloud-based systems. The issue of efficient partitioning of data and reducing communication between the machines becomes significant there. Combining our approach with other simple community detection approaches to gain even more speedups can be explored.

\chapter{Dictionary Learning through Convolutional Tensor Decomposition}\label{chapter:convolutional}
In this chapter, we extend tensor decomposition framework to models with invariances, such as convolutional dictionary models.  Learning invariant dictionary elements is crucial to remove unnecessary model redundancy in a lot of settings. For instance, in image filter bank learning where image filters' activation locations in the image are ignored, in natural language process where the phrase templates are not distinguished by their location in the  sentence, and in neural  science where neural spikes consist of template spikes activated at different time.

We propose a tensor decomposition algorithm to solve this problem of learning shift invariant dictionary elements. 
Our tensor decomposition algorithm is based on the popular alternating least squares (ALS) method, but with additional shift invariance constraints on the factors. We demonstrate that each ALS update can be computed efficiently using simple operations such as fast Fourier transforms and matrix multiplications. Our algorithm converges to models with better reconstruction error and is much faster, compared to the popular alternating minimization heuristic, where the filters  and activation maps are alternately updated.


We propose a novel framework for  learning convolutional models through tensor decomposition.  We consider inverse method of moments to estimate the model parameters via decomposition of higher order (third or fourth order) moment tensors. When the inputs $x$ are generated from a convolutional model in \eqref{eqn:sparsedef}, with independent activation maps $w_i^*$, i.e. a convolutional ICA model, we show that the cumulant tensors have a CP decomposition, whose components correspond to filters and their {\em circulant} shifts.  We propose a novel method for tensor decomposition when such circulant   constraints  are imposed on the components of the tensor decomposition.

Our tensor decomposition method is a constrained form of the popular alternating least squares (ALS) method\footnote{The ALS method for tensor decomposition is not to be confused with the alternating minimization method for solving \eqref{eqn:alt-min}. While \eqref{eqn:alt-min} acts on data samples and alternates between updating filters and activation maps, tensor ALS operates on averaged moment tensors and alternates between different modes of the tensor decomposition.}. We show that the resulting optimization problem in each tensor ALS iteration can be solved in closed form, and uses simple operations such as Fast Fourier transforms (FFT) and matrix multiplications. These operations have a high degree of parallelism: for estimating $L$ filters, each of length $n$, we require $O(\log n +\log L)$ time and $O(L^2n^3)$ processors. Note that there is {\em no} dependence on the number of data samples $N$, since the empirical moment tensor can be computed in one data pass, and the ALS iterations only updates the filters. This is a huge saving in running time,  compared to  the alternate minimization method which requires a pass over data in each step to decode all the activation maps $w_i$. The  running time of alternating minimization is   $O(\max(\log n\log L, \log n \log N))$  per iteration with $O(\max(\frac{nNL}{\log N}, \frac{nNL}{\log L}))$ processors,  and when $N\gg Ln^2$, which is the typical scenario, our method is hugely advantageous.  Our method avoids decoding the activation maps in each iteration since they   are averaged out in the input moment tensor, on which the ALS method operates and  we only estimate the  filters $f_i$ in the learning step. In other words, the activation maps $w_i$'s are easily estimated using \eqref{eqn:alt-min} in one data pass after filter estimation. Thus, our method is highly parallel and scalable to huge datasets.


We carefully optimize computation and memory costs by exploiting tensor algebra and circulant structure, due to the shift invariance of the convolutional model. We implicitly carry out many of the operations and do not form large (circulant) matrices and minimize storage requirements.  Preliminary experiments further demonstrate superiority of our method compared to alternating minimization. Our algorithm converges accurately and much faster to the true underlying filters compared to alternating minimization. Moreover, it results in much lower reconstruction error, while alternating minimization tends to get stuck in spurious local optima. Our algorithm is also orders of magnitude faster than the alternating minimization. 

\section{Model and Formulation}\label{sec:model}
\paragraph{Notation } Let $[n]:=\{1,2,\ldots,n\}$. For a vector $v$,   denote the $i^{\tha}$ element as $v(i)$. For a matrix $M$,   denote the $i^{\tha}$  row as $M^i$ and   $j^{\tha}$  column as $M_j$.  For a tensor   $T\in\mathbb{R}^{n\times n\times n}$, its $(i_1,i_2,i_3)^{\tha}$ entry is denoted by $[T]_{i_1,i_2,i_3}$.
A \emph{column-stacked} matrix $M$ consisting of $M_i'$s  (with same number of rows) is $M := [M_1,M_2,\ldots,M_L]$. Similarly, a \emph{row-stacked} matrix $M$ from $M_i'$s (with same number of columns)  is $M:= [M_1;M_2;\ldots;M_L]$.

 \paragraph{Cyclic Convolution } The 1-dimensional (1-D) $n$-cyclic convolution $f\Conv w$ between vectors $f$ and $w$ is defined as $ \label{eqn:cyclic}v=f\Conv_n w, \ v(i) =\sum_{j\in [n]} f(j) w((i-j+1)\mod n).$ Note that the linear convolution is the combination without the modulo operation (i.e. cyclic shifts) above.  $n$-Cyclic convolution is equivalent to linear convolution, when $n$ is at least twice the support length of both $f$ and $w$~\cite{oppenheim1997signals}, which will be assumed. We drop the notation $n$ in $\Conv$ for convenience. Cyclic convolution in \eqref{eqn:cyclic} is equivalent to $f\Conv w = \Toep(f)\cdot w, $
and \begin{equation}\label{eq:circulant}
\Toep(f) : = \sum_{p}f(p) G_p\in \Rbb^{n\times n} ,\quad  \left(G_p\right)^i_j : = \delta\left\{\left((i-j)\mod n\right)=p-1\right\}, \quad \forall p\in[n].
\end{equation} 
defines a  circulant matrix.   A circulant matrix $\Toep(f)$   is characterized by the vector $f$, and each column corresponds to a cyclic shift of $f$.

\paragraph{Properties of circulant matrices } Let $F$ be  the discrete Fourier transform matrix  whose $(m,k)$-th entry is $F_{k}^m = \omega_n^{(m-1)(k-1)}$, $\forall m,k\in[n]$ where $\omega_n = \exp(-\frac{2\pi i}{n})$.
If $U : = \sqrt{n} F^{-1}$, $U$ is the set of eigenvectors for all $n \times n$  circulant matrices~\cite{gray2005toeplitz}. Let the  Discrete Fourier Transform  of a vector $f$ be $\DFT(f)$,  we express  the circulant matrix $\Toep(f)$ as
 \begin{equation}\label{eq:circulant_fft}
 \Toep(f) = U \Diag(F\cdot f) U\hermconj = U \Diag(\DFT(f)) U\hermconj.
 \end{equation}   This is an important property we use in algorithm optimization to improve computational efficiency.

 \paragraph{Column stacked circulant matrices } We will extensively use column stacked circulant matrices $\mathcal{F}:=[\Toep(f_1),\ldots, \Toep(f_L)]$, where $\Toep(f_j)$ is the circulant matrix corresponding to filter $f_j$.

\subsection{Convolutional Dictionary Learning/ICA Model}\label{subsec:ConvolutionalICAmodel}
We assume that the  input $x\in \mathbb{R}^n$ is generated as
\begin{equation}
x = \sum_{j\in [L]}   f_j^*\Conv w_j^*
	   = \sum_{j\in [L]} \Toep(f_j^*) w_j^*
	   = \CToep^*\cdot w^* ,\label{eqn:gen}
\end{equation}
where $\CToep^*:=[\Toep(f_1^*), \Toep(f_2^*),\ldots,\Toep(f_L^*)]$ is the concatenation or column stacked version  of circulant matrices and $w^*$ is the  \emph{row-stacked} vector $w^* : =[w_1^*;w_2^*;\ldots w_L^*]\in \Rbb^{nL}$.  Recall that $\Toep(f_l^*)$ is circulant matrix corresponding to filter $f_l^*$, as given by \eqref{eq:circulant_fft}. Note that although $\mathcal{F}^*$ is a $n$ by $nL$  matrix, there are only $nL$ free parameters. We never explicitly form the estimates $\mathcal{F}$ of $\mathcal{F}^*$, but instead use filter estimates  $f_l$'s to characterize $\mathcal{F}$.
In addition, we can handle additive Gaussian noise in \eqref{eqn:gen}, but do not incorporate it for simplicity.
\paragraph{Activation Maps:}For each observed sample $x$, the activation map $w_i^*$ in \eqref{eqn:gen} indicates the locations where each filter $f_i^*$ is active and $w^*$ is the  \emph{row-stacked} vector $w^* : =[w_1^*;w_2^*;\ldots w_L^*]$. We assume that the coordinates of $w^*$ are   drawn from some product distribution, i.e. different entries are independent of one another and we have the independent component analysis (ICA) model in \eqref{eqn:gen}. When the distribution encourages sparsity, e.g. Bernoulli-Gaussian, only a small subset of locations are active, and we have the {\em sparse coding} model in that case. We can also extend to dependent distributions such as Dirichlet for $w^*$, along the lines of~\cite{blei2003latent}, but limit ourselves to ICA model for simplicity.
\paragraph{Learning Problem:}Given access to $N$ i.i.d. samples,  $X:=[x^1,x^2,\ldots,x^N]\in \Rbb^{n\times N}$, generated according to the above model, we aim to estimate the true filters $f_i^*$, for $i \in [L]$. Once the filters are estimated, we can use standard decoding techniques, such as the square loss criterion in \eqref{eqn:alt-min} to learn the activation maps for the individual maps. We focus on developing a novel method for filter estimation in this chapter.

\section{Form of  Cumulant Moment Tensors}\label{sec:CumForm}

\paragraph{Tensor Preliminaries} We consider 3rd order tensors in this chapter but the analysis is easily extended to higher order tensors. For tensor $T\in\mathbb{R}^{n\times n\times n}$, its $(i_1,i_2,i_3)^\tha$ entry is denoted by $[T]_{i_1,i_2,i_3}, \forall i_1\in[n], i_2\in[n], i_3\in[n]$. 
A flattening or unfolding of tensor $T\in \Rbb$ is the column-stacked matrix of all its slices, given by $ \flatten(T):=[[T]_{:,:,1},[T]_{:,:,2},\ldots,[T]_{:,:,n}]\in\mathbb{R}^{n\times n^2}$.  Define the Khatri-Rao product for vectors $u\in\mathbb{R}^{a}$ and $v\in\mathbb{R}^b$ as a \emph{row-stacked} vector $[u\odot v] := [u(1)v; u(2) v; \ldots; u(a)v] \in \mathbb{R}^{ab}$. Khatri-Rao product is also defined for matrices with same columns. For $M \in \mathbb{R}^{a\times c}$ and $M^\prime\in \mathbb{R}^{b\times c}$,  $M \odot M^\prime : = [M_1\odot M_1^\prime, \ldots, M_c\odot M_c^\prime, ]\in \mathbb{R}^{ab\times c}$, where $M_i$ denotes the $i^{\tha}$ column of $M$.
\paragraph{Cumulant}The third order cumulant of a multivariate distribution   is a third order tensor, which uses (raw) moments up to third order. Let $\Cum\in \Rbb^{n \times n^2}$ denote the unfolded version of third order cumulant tensor, it is given by
\begin{equation}
\Cum : = \Ebb[x(x \odot x )^\top] - \flatten(Z) \label{eq:thirdCum_unfolding}
\end{equation}
where   $ [Z]_{a,b,c} := \Ebb[x_{a}]\Ebb[x_{b}x_{c}] +  \Ebb[x_{b}]\Ebb[x_{a}x_{c}] +  \Ebb[x_{c}]\Ebb[x_{a}x_{b}] -2 \Ebb[x_{a}]\Ebb[x_{b}]\Ebb[x_{c}], \ \forall a,b,c \in [n]. $

Under the convolution ICA model in Section~\ref{subsec:ConvolutionalICAmodel}, we show that the third order cumulant has a nice tensor form, as given below.

\begin{lemma}[Form of Cumulants]\label{lm:3orderMom}
The unfolded third order cumulant $\Cum$ in \eqref{eq:thirdCum_unfolding} has the following decomposition form
\begin{equation}\label{eqn:cumform}
\Cum = \sum_{j\in [nL]}\lambda_j^* \CToep^*_j (\CToep^*_j \odot\CToep^*_j)^\top={\mathcal{F}^*} \Lambda^* \left({\mathcal{F}^*}\odot {\mathcal{F}^*}\right)^\top, \quad \mbox{where }\Lambda^* := \Diag(\lambda_1^*,\lambda_2^*,\ldots,\lambda_{nL}^*)
\end{equation}
where $\CToep^*_j$ denotes the $j^{\tha}$ column of the
 \emph{column-stacked} circulant matrix  $\CToep^*$  and $\lambda^*_j$ is the third order cumulant corresponding to the (univariate) distribution of $w^*(j)$.
\end{lemma}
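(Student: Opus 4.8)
The plan is to recognize $\Cum$ as the unfolded third-order \emph{cumulant} tensor of $x$ and then exploit the two structural properties of cumulants---multilinearity and the vanishing of cross-cumulants under independence---that force the cumulant tensor of an ICA-type model to be diagonal in the source basis.

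First I would verify that the object defined in \eqref{eq:thirdCum_unfolding} is exactly the unfolding of the multivariate third-order cumulant of $x$. In unfolded coordinates the $(a,b,c)$ entry of $\Ebb[x(x\odot x)^\top]$ is the raw moment $\Ebb[x_a x_b x_c]$, while the $(a,b,c)$ entry of the subtracted tensor is precisely the lower-order correction
\[
[Z]_{a,b,c}=\Ebb[x_a]\Ebb[x_b x_c]+\Ebb[x_b]\Ebb[x_a x_c]+\Ebb[x_c]\Ebb[x_a x_b]-2\Ebb[x_a]\Ebb[x_b]\Ebb[x_c].
\]
The difference matches the standard definition of the entrywise third cumulant $\kappa(x_a,x_b,x_c)$, so $\Cum$ is a genuine cumulant and not merely a raw moment. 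Making this identification explicit is what the rest of the argument depends on.

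Next I would invoke the generative model $x=\CToep^* w^*$, so that each coordinate $x_a=\sum_{j\in[nL]}\CToep^*_{a,j}\,w^*(j)$ is a fixed linear combination of the independent entries of $w^*$. By multilinearity of the cumulant,
\[
\kappa(x_a,x_b,x_c)=\sum_{j,k,l}\CToep^*_{a,j}\,\CToep^*_{b,k}\,\CToep^*_{c,l}\;\kappa\bigl(w^*(j),w^*(k),w^*(l)\bigr).
\]
Because the ICA assumption makes the entries of $w^*$ mutually independent, every cross-cumulant with $(j,k,l)$ not all equal vanishes, while $\kappa(w^*(j),w^*(j),w^*(j))=\lambda^*_j$ by definition. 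Only the diagonal terms survive:
\[
\kappa(x_a,x_b,x_c)=\sum_{j\in[nL]}\lambda^*_j\,\CToep^*_{a,j}\,\CToep^*_{b,j}\,\CToep^*_{c,j},
\]
which is the entrywise statement that the cumulant tensor equals $\sum_j\lambda^*_j(\CToep^*_j)^{\otimes 3}$.

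Finally I would translate this rank-one decomposition into unfolded form. Using the stated unfolding convention and the Khatri--Rao definition, one checks slice-by-slice that $\flatten\bigl((\CToep^*_j)^{\otimes 3}\bigr)=\CToep^*_j(\CToep^*_j\odot\CToep^*_j)^\top$, which gives the first equality in \eqref{eqn:cumform}. Collecting the scalars into $\Lambda^*=\Diag(\lambda^*_1,\ldots,\lambda^*_{nL})$ and reading $\sum_j\lambda^*_j\CToep^*_j(\CToep^*_j\odot\CToep^*_j)^\top$ as a column-by-column expansion of the matrix product yields the compact form ${\mathcal{F}^*}\Lambda^*({\mathcal{F}^*}\odot{\mathcal{F}^*})^\top$. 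The main obstacle---indeed the only genuinely nontrivial step---is the diagonalization: establishing and correctly applying the vanishing of cross-cumulants under independence, and equivalently justifying why the correction $Z$ must be subtracted so that cumulant additivity over independent sources holds. Once that is in place, the Khatri--Rao bookkeeping in the unfolded coordinates is routine.
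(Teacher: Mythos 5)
Your proposal is correct, and its skeleton matches the paper's own proof: both arguments first identify $\Cum$ as the unfolded third-order cumulant by checking that the subtracted term $Z$ in \eqref{eq:thirdCum_unfolding} is exactly the lower-order correction in the standard moment--cumulant relation, and both finish with the same Khatri--Rao bookkeeping to pass from $\sum_{j}\lambda_j^*\,\CToep_j^*\otimes\CToep_j^*\otimes\CToep_j^*$ to the unfolded form $\mathcal{F}^*\Lambda^*\left(\mathcal{F}^*\odot\mathcal{F}^*\right)^\top$. Where you diverge is precisely the step you flag as the only nontrivial one: the paper does not prove the diagonalization at all, but cites Anandkumar et al.\ (2014) for the fact that in an ICA model the third cumulant of the observation decomposes as a multilinear transform of the diagonal source cumulant, i.e.\ $\Ebb[x\otimes x\otimes x]-Z=\sum_{j\in[nL]}\lambda_j^*\,\CToep_j^*\otimes\CToep_j^*\otimes\CToep_j^*$. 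You instead derive this from first principles: writing $x_a=\sum_{j}\CToep_j^*(a)\,w^*(j)$, expanding the joint cumulant $\kappa(x_a,x_b,x_c)$ by multilinearity, and using mutual independence of the entries of $w^*$ to annihilate every cross-cumulant, so that only the diagonal terms $\kappa\bigl(w^*(j),w^*(j),w^*(j)\bigr)=\lambda_j^*$ survive. This is exactly the argument underlying the cited result, so nothing is lost; what your version buys is a self-contained proof that makes explicit which properties of cumulants the lemma actually rests on (multilinearity and the vanishing of cross-cumulants under independence, which is also the correct justification for why the shift $Z$ must be subtracted), at the cost of being longer than the paper's citation-based derivation.
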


For example, if the $l^{\tha}$ activation is drawn  from a   Poisson distribution  with mean $\tilde{\lambda}$, we have that  $\lambda_l^*=\tilde{\lambda}$.
Note that if the third order cumulants  of the activations, i.e.  $\lambda_j^*$'s, are zero, we need to consider higher order cumulants. This holds for zero-mean activations and we need to use fourth order cumulant instead.  Our method extends in a straightforward manner for higher order cumulants.

\begin{figure}[htb]
\bc
{
\begin{minipage}{0.45\textwidth}
\tikzstyle{matrx}=[rectangle,
                                    thick,
                                    minimum size=1.0cm,
                                    draw=gray!80,
                                    fill=gray!10]
   \tikzstyle{matrx_begin}=[rectangle,
                                    thick,
                                    minimum size=0.8cm]                                 
\tikzstyle{backgroun}=[rectangle,
                                                fill=gray!10,
                                                inner sep=0cm]

\begin{tikzpicture}[>=latex,text height=1.0ex,text depth=0.1ex]
  \matrix[row sep=0cm,column sep=0.0001cm] {
        \node(F)[matrx_begin]{{ $\CCir =$}};&
        \node (F1) [matrx] {{ $\block_1(\mathcal{F})$}};       &
        \node (F3)   [matrx] {{ $\quad \ldots\quad$}};       &
        \node (FL) [matrx] {{ $\block_L(\mathcal{F})$}};   \\    
            };
\end{tikzpicture}
\end{minipage}
}
\hfil
{\begin{minipage}{0.45\textwidth}
\tikzstyle{matrx}=[rectangle,
                                    thick,
                                    minimum size=0.8cm,
                                    draw=gray!80,
                                    fill=gray!10]
   \tikzstyle{matrx_begin}=[rectangle,
                                    thick,
                                    minimum size=0.8cm]                                 
\tikzstyle{backgroun}=[rectangle,
                                                fill=gray!10,
                                                inner sep=0cm]
\begin{tikzpicture}[>=latex]
  \matrix[row sep=0.0000cm,column sep=0.000cm] {
        & \node (F11) [matrx] {{$\ \block_1^1(\mathbf{\Psi})$}};       &
        \node (F12)   [matrx] {{$\quad\  \ldots\quad$}};        &
        \node (F13) [matrx] {{$\block_L^1(\mathbf{\Psi})$}};       
        \\
        \node(F)[matrx_begin]{{$\mathbf{\Psi} =$}}; & 
         \node (F21)   [matrx] {{$ \ \quad\  \ldots\quad$}};       &
          \node (F22)   [matrx] {{$\quad  \ \ldots\quad$}};       &
           \node (F23)   [matrx] {{$\ \quad  \ldots\ \quad\!$}};        &
           \\
           & \node (F31) [matrx] {{$\block_1^L(\mathbf{\Psi})\ $}};       &
        \node (F32)   [matrx] {{$\quad\  \ldots\quad$}};       &
        \node (F33) [matrx] {{$\block_L^L(\mathbf{\Psi})$}};   
        \\
};
\end{tikzpicture}
\end{minipage}}
\ec
\caption[Block structure]{(a) Blocks of the column-stacked circulant matrix $ \CCir$. (b) Blocks of the row-and-column-stacked diagonal matrices $ \mathbf{\Psi}$. $\block_j^i(\mathbf{\Psi})$ is diagonal.}
\end{figure}
The decomposition form in \eqref{eqn:cumform} is known as the CANDECOMP/PARAFAC  (CP) decomposition form~\cite{anandkumar2014tensor} (the usual form has the decomposition of the tensor and not its unfolding, as above). We now attempt to recover the unknown filters $f_i^*$ through decomposition of the  third order cumulants $ \Cum$. This is formally stated below.

\paragraph{Objective Function: }
Our goal is to obtain filter estimates $f_i$'s which minimize   the Frobenius norm $\|\cdot\|_{\Fbb}$ of reconstruction of the   cumulant tensor $ \Cum$,
\begin{align}\nn&\min\limits_{\mathcal{F}}\quad \lVert \Cum - {\mathcal{F}} \Lambda \left({\mathcal{F}}\odot {\mathcal{F}}\right)^\top   \rVert^2_{F},
 \\ &\mbox{s.t. }   \block_l(\mathcal{F}) = U \Diag(\DFT(f_{l})) U\hermconj, \ \lVert f_l \rVert_2=1,\quad  \forall  l\in[L], \quad \Lambda=\Diag(\lambda).\label{eq:condition}
\end{align}
where $\block_l(\mathcal{F})$ denotes the $l^{\tha}$ circulant matrix   in $\CCir$. The conditions in \eqref{eq:condition} enforce $\block_l(\mathcal{F})$ to be circulant  and for the filters to be normalized. Recall that   $U$ denotes   the eigenvectors for circulant matrices. The rest of the chapter is devoted to devising efficient methods to solve \eqref{eq:condition}.

Throughout the chapter, we will use $\mathcal{F}_j$ to denote the $j^{\tha}$ column of $\mathcal{F}$, and $\block_l(\mathcal{F})$ to denote the $l^{\tha}$ circulant matrix block in $\mathcal{F}$. 
 Note that $\mathcal{F}\in \Rbb^{n\times nL}$, $\mathcal{F}_{j}\in \Rbb^{n}$ and $\block_l(\mathcal{F})\in\Rbb^{n\times n}$.
\section{Alternating Least Squares for Convolutional Tensor Decomposition}
To solve the non-convex optimization problem in \eqref{eq:condition}, we consider the alternating least squares (ALS) method with \emph{column stacked} circulant constraint. We first consider  the asymmetric relaxation of \eqref{eq:condition} and introduce separate variables $\modeA, \modeB$ and $\modeC$ for  filter estimates   along each of the modes to fit the third order cumulant  tensor $\Cum$.   We then perform alternating updates by fixing two of the modes and updating the third one.   
\begin{equation}\label{eqn:modeAopt}
\min\limits_{ {\modeA}} \quad  \lVert \Cum-  {\modeA} \Lambda \left({\modeC}\odot {\modeB}\right)^\top  \rVert^2_{F}\,\,
 \mbox{s.t. }   \block_l(\modeA)
= U \cdot\Diag(\DFT(f_{l}))\cdot U\hermconj, \ \lVert f_{l} \rVert_2^2=1,  \forall  l\in[L]
\end{equation} Similarly, $\modeB$ and $\modeC$ have the same column-stacked circulant matrix constraint and  are updated similarly in alternating steps. The diagonal matrix $\Lambda$ is updated through normalization.

We now introduce the \emph{Convolutional Tensor } ($\ouralgorithm$) Decomposition algorithm to efficiently solve~\eqref{eqn:modeAopt} in closed form, using simple operations such as matrix multiplications and fast Fourier Transform (FFT). We do not form matrices $\modeA, \modeB$ and $\modeC\in \Rbb^{n \times nL}$, which are large, but only update them using filter estimates $f_1, \ldots, f_L, g_1, \ldots, g_L, h_1, \ldots h_L$. Denote
\begin{equation}\label{eqn:newCum}
\newCum : =  \Cum (({\modeC}\odot {\modeB})^\top)^\dag,
\end{equation}  where $\dag$ denotes
pseudoinverse. Let $\block_l(\newCum)$ and $\block_l(\Lambda)$   denote the $l^{\tha}$  blocks of $\newCum$ and $\Lambda$. We have a closed form solution for filter update, once we have computed $M$, and we present the main result as follows.

\begin{theorem} \label{theorem:main}[Closed form updates] The optimal solution $f_l^{\opt}$ for \eqref{eqn:newCumopt} is given by
\begin{align}\label{eq:circulantProjection}
f_l^{\opt} (p) &= \frac{\sum\limits_{i,j\in[n]} \| \block_l(\newCum)_{j}\|^{-1} \cdot  \block_l(\newCum)_{j}^i\cdot I_{p-1}^{q }}{\sum\limits_{i,j\in [n]}   I_{p-1}^{q} }, &\forall p\in[n], q:=(i-j)\mod n.
\end{align} Further $\Lambda=\Diag(\lambda)$ is updated as
$\lambda(i) = \|\newCum_i\|$, for all  $i\in[nL]$. Note that $I_{p-1}^{q }$ denotes the $(q, (p-1))^\tha$ element of the identity matrix.
\end{theorem}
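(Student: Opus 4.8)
The plan is to reduce the structured least-squares update \eqref{eqn:modeAopt} to an unconstrained projection onto the set of column-stacked circulant matrices, and then to show that this projection has a closed form given by cyclic-diagonal averaging. First I would treat $\modeC\odot\modeB$ as fixed and apply the normal equations to \eqref{eqn:modeAopt}: writing $C:=(\modeC\odot\modeB)^\top$, the unconstrained minimizer of $\|\Cum-\modeA\Lambda C\|_F^2$ over the product $\modeA\Lambda$ is exactly $\Cum C^\dag=\newCum$ by definition \eqref{eqn:newCum}. Because the constraints (each $\block_l(\modeA)$ circulant, each $f_l$ unit-norm, $\Lambda$ diagonal) restrict only $\modeA$ and $\Lambda$ and not the row space spanned by $C$, the structured problem decouples, so that minimizing \eqref{eqn:modeAopt} over the feasible $(\modeA,\Lambda)$ is equivalent to the projection problem \eqref{eqn:newCumopt} of fitting $\modeA\Lambda$ to $\newCum$ in Frobenius norm.

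Next I would exploit the block and scaling structure. Since $\Lambda$ is diagonal, $(\modeA\Lambda)_j=\lambda(j)\,\modeA_j$, so the objective separates across columns $j$, and hence across the $L$ circulant blocks. The crucial observation is that the columns of any circulant block $\block_l(\modeA)=\Toep(f_l)$ are the cyclic shifts of $f_l$ and therefore all have norm $\|f_l\|_2=1$; thus every column of $\modeA$ is a unit vector. For a unit column $\modeA_j$, the optimal scale in $\min_{\lambda(j)}\|\newCum_j-\lambda(j)\modeA_j\|^2$ is $\lambda(j)=\langle\newCum_j,\modeA_j\rangle$, and substituting back reduces the per-column residual to $\|\newCum_j\|^2-\langle\newCum_j,\modeA_j\rangle^2$. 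This both yields the update $\lambda(i)=\|\newCum_i\|$ (attained when $\modeA_i$ aligns with $\newCum_i$) and shows that the remaining task is to maximize $\sum_j\langle\newCum_j/\|\newCum_j\|,\modeA_j\rangle$ over the unit-norm filters, i.e. to fit the circulant structure to the \emph{column-normalized} block $\block_l(\newCum)$.

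The last step is the circulant projection itself, and this is where the closed form emerges. Because $\modeA_j$ depends \emph{linearly} on $f_l$ through the cyclic-shift description \eqref{eq:circulant}, with the $(i,j)$ entry of $\Toep(f_l)$ equal to $f_l(q{+}1)$ for $q=(i-j)\bmod n$, the quantity $\sum_{i,j}\|\block_l(\newCum)_j\|^{-1}\block_l(\newCum)_j^i\,\modeA_j^i$ is a linear functional $\langle a,f_l\rangle$ whose $p$-th coordinate collects all entries lying on the $p$-th cyclic diagonal. Equivalently, fitting an unconstrained circulant matrix to the column-normalized data is the orthogonal projection onto the circulant subspace, which averages each matrix entry over its cyclic diagonal. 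Using the identity $\sum_{i,j\in[n]}I_{p-1}^{q}=n$ (each cyclic diagonal contains exactly $n$ entries) to write this average, I obtain precisely \eqref{eq:circulantProjection}, after which renormalizing recovers $\|f_l\|_2=1$.

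I expect the main obstacle to be rigorously justifying the decoupling in the first two paragraphs: namely that the joint, nonconvex minimization over the filter direction $f_l$ and the diagonal scalings $\lambda(j)$ genuinely separates into ``set $\lambda$ to the column norms'' and ``project the normalized data onto the circulant subspace,'' rather than demanding a coupled optimization. The key enabling fact is that the unit-norm filter constraint forces every column of each circulant block to share the same (unit) norm, which is exactly what makes scale and shape separable; I would need to verify that this separation is exact rather than approximate, and to confirm that the pseudoinverse reduction from \eqref{eqn:modeAopt} to \eqref{eqn:newCumopt} does not interact with the circulant constraint.
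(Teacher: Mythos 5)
Your proposal is correct and takes essentially the same route as the paper's proof: reduce \eqref{eqn:modeAopt} to the projection problem \eqref{eqn:newCumopt} via the pseudoinverse (the paper does this through an SVD of $\Lambda\left({\modeC}\odot{\modeB}\right)^\top$ where you use the normal equations), decouple over the $L$ circulant blocks and, with $\Lambda$ fixed to the column norms, solve the circulant fit by cyclic-diagonal averaging --- the step the paper delegates to the closest-circulant-matrix result it cites, and which you instead derive directly from the linearity of $f_l \mapsto \Toep(f_l)$ and the constancy of $\|\Toep(f_l)\|_F$ on the unit sphere. The decoupling concern you raise at the end is genuine but is shared by the paper's own argument (its claim that the Frobenius norm is invariant under multiplication by a full-rank diagonal matrix is not literally true), so your proof matches the paper's in both substance and level of rigor.
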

\paragraph{Proof Sketch:} Using the property of least squares,  the optimization problem in \eqref{eqn:modeAopt}
is equivalent to \begin{equation}\label{eqn:ls}
\min\limits_{ {\modeA}}    \lVert \Cum (\left({\modeC}\odot {\modeB}\right)^\top)^\dag \Lambda^\dag-  {\modeA}    \rVert^2_{F}\,\,
 \mbox{s.t. }   \block_l(\modeA)
= U \cdot\Diag(\DFT(f_{l}))\cdot U\hermconj, \ \lVert f_{l} \rVert_2^2=1,  \forall  l\in[L]\eeq
when  $(\modeC\odot \modeB)$ and $\Lambda$ are full column rank.
The full rank condition requires $nL<n^2$ or $L <n$,  and it is a reasonable assumption since otherwise the filter estimates are redundant. In practice, we can additionally regularize the update to ensure full rank condition is met. Since \eqref{eqn:ls} has block constraints, it can be broken down in to solving $L$ independent sub-problems
\begin{equation}\label{eqn:newCumopt}
  \min_{f_{l}}
\left\lVert
 \block_l(\newCum)\cdot \block_l(\Lambda)^\dag - U \cdot\Diag(\DFT(f_l))\cdot U\hermconj
  \right\rVert^2_{F} \\
\quad  s.t. \quad \lVert f_l \rVert_2^2=1, \forall l\in[L]
\end{equation} Our proof for the closed form solution is similar to the analysis in ~\cite{eberle2003finding}, where they proposed a closed form solution for finding  the closest circulant/toeplitz matrix. For a detailed proof of Theorem~\ref{theorem:main}, see Appendix~\ref{appdx:maintheorem}.\qed

Thus, the reformulated problem in \eqref{eqn:newCumopt} can be solved in closed form efficiently. A bulk of the computational effort will go into computing $\newCum$ in \eqref{eqn:newCum}.
Computation of $ \newCum$ requires $2L$ fast Fourier Transforms of length $n$ filters and simple matrix multiplications without explicitly forming ${\modeB}$ or ${\modeC}$. We make this concrete in the next section. The closed form update after getting $\newCum$ is highly parallel. With $O(n^2L/\log n)$ processors, it takes $O(\log n)$ time. 
\section{Algorithm Optimization to Reduce Memory and Computational Costs}
We now focus on estimating    $\,\newCum:=  \Cum (({\modeC}\odot {\modeB})^\top)^\dag$  in \eqref{eqn:newCum}. If done naively, this requires inverting $n^2 \times nL$ matrix and multiplication of $n \times n^2$ and $n^2 \times nL$ matrices with $O(n^6)$ time. However, forming and computing with these matrices is very expensive when $n$ (and $L$) are large. Instead, we utilize the properties of circulant matrices and the Khatri-Rao product $\odot$ to efficiently carry out these computations implicitly. We present our final result on computational complexity of the proposed method. Recall that $n$ is the filter size and $L$ is the number of filters.
\begin{lemma}\label{lm:complexity}[\textbf{Computational Complexity}]\label{lm:computationalComplexity}
With multi-threading,  the running time of our algorithm for $n$ dimensional input and $L$ number of filters is $O(\log n +\log L)$ per iteration using $O(L^2n^3)$ processors.
\end{lemma}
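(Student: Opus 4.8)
The plan is to bound the cost of a single ALS iteration by separately accounting for (i) the formation of $\newCum = \Cum((\modeC\odot\modeB)^\top)^\dag$ in \eqref{eqn:newCum}, which is the bulk of the work, and (ii) the closed-form filter update of Theorem~\ref{theorem:main}, and then showing each piece runs in $O(\log n + \log L)$ parallel time within the $O(L^2n^3)$ processor budget. Writing $B:=\modeC\odot\modeB\in\Rbb^{n^2\times nL}$ and using the full-column-rank condition $L<n$ already invoked after \eqref{eqn:ls}, I would first replace the pseudoinverse by the normal-equation form $((\modeC\odot\modeB)^\top)^\dag = B(B^\top B)^{-1}$, so that $\newCum = \Cum\, B\,(B^\top B)^{-1}$. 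The two structural identities that make this cheap are the Khatri--Rao identity $B^\top B=(\modeC^\top\modeC)\circ(\modeB^\top\modeB)$ (Hadamard product), and the fact that every block of $\modeB^\top\modeB$ and $\modeC^\top\modeC$ is a product $\Toep(\cdot)^\top\Toep(\cdot)$ of circulant matrices and hence circulant. Since the Hadamard product of two circulant matrices is again circulant, $B^\top B$ is a matrix whose $L\times L$ grid of $n\times n$ blocks are all circulant; by \eqref{eq:circulant_fft} each such block is determined by a single length-$n$ Fourier vector obtained from elementwise products of the filter spectra $\DFT(g_l),\DFT(h_l)$.

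Next I would fix the parallel primitives. A length-$n$ FFT runs in $O(\log n)$ time on $O(n)$ processors, and a reduction (sum) of $m$ numbers runs in $O(\log m)$ time; consequently any matrix product can be realized as independent inner products computed by parallel reductions. Computing the $2L$ filter spectra therefore costs $O(\log n)$ time on $O(Ln)$ processors, and assembling the $L^2$ circulant blocks of $B^\top B$ from these spectra is $O(1)$ time on $O(L^2n)$ processors (one elementwise product per block).

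The inversion of $B^\top B$ is handled by exploiting its block-circulant structure. Conjugating by $I_L\otimes U$, with $U$ as in \eqref{eq:circulant_fft}, simultaneously diagonalizes all blocks via FFT; a fixed permutation that regroups the $k$-th Fourier coordinate across all $L^2$ blocks then reduces $B^\top B$ to a block-diagonal matrix with $n$ independent dense blocks $D_1,\dots,D_n\in\mathbb{C}^{L\times L}$. Inverting these $n$ systems in parallel costs $O(\log L)$ time using $O(nL^3)$ processors, which stays within the $O(L^2n^3)$ budget since $L<n$, after which $(B^\top B)^{-1}$ is reassembled implicitly by inverse FFTs. The genuinely dominant step is the final product $(\Cum B)(B^\top B)^{-1}$: the factor $\Cum B\in\Rbb^{n\times nL}$ is obtained implicitly as a batch of tensor contractions $\Cum\cdot((\modeC)_j\odot(\modeB)_j)$, which for each filter index range over cyclic shifts and can thus be evaluated together by FFT rather than naively, and multiplying the resulting $n\times nL$ matrix by the $nL\times nL$ matrix $(B^\top B)^{-1}$ requires $O(n^3L^2)$ multiply--adds, done by parallel reduction over the $nL$-length contractions in $O(\log(nL))=O(\log n+\log L)$ time on $O(L^2n^3)$ processors. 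Finally, the closed form of Theorem~\ref{theorem:main}, being an elementwise circulant projection of $\block_l(\newCum)$, contributes only $O(\log n)$ time on $O(n^2L/\log n)$ processors. Summing these contributions gives $O(\log n+\log L)$ time and a peak of $O(L^2n^3)$ processors.

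The main obstacle I anticipate is controlling the processor count of $\Cum B$: a naive evaluation of its $n^2L$ entries, each a reduction over $n^2$ terms, would need $\Theta(n^4L)$ processors and break the budget whenever $n>L$. The crux is therefore to show that, because the columns of $B$ within each filter block are exactly the cyclic shifts $\Toep(g_l)_s\odot\Toep(h_l)_s$, the whole block of contractions is a cyclic correlation of $\Cum$ against the fixed pair $(g_l,h_l)$ and can be produced by a constant number of length-$n$ FFTs per block, keeping the work at $O(L^2n^3)$. Verifying this FFT reformulation of the contraction, and confirming that the block-circulant diagonalization commutes with the regrouping permutation as claimed, is where the careful tensor-algebra bookkeeping lies; once established, the time and processor bounds follow by adding the per-step costs above.
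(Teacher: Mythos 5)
Your proposal is correct and, for the two dominant steps, follows the same route as the paper: your normal-equation rewriting of the pseudoinverse is exactly the paper's Property 1 (Khatri--Rao identity), your observation that the blocks of $({\modeC}^\top{\modeC}) .\star ({\modeB}^\top {\modeB})$ are circulant and simultaneously diagonalized by the DFT is the paper's structural lemma leading to \eqref{eq:rowcolumnstackdecomp}, and your FFT/cyclic-correlation evaluation of $\Cum({\modeC}\odot{\modeB})$ with precomputed 2D spectra of the matricized cumulant rows is mathematically the same device as the paper's identity that $\Cum^m({\modeC}\odot{\modeB})$ is the diagonal of ${\modeC}^\top \Gamma^{(m)} {\modeB}$, evaluated in the Fourier domain with the once-precomputed $\Phi^{(m)}$ in \eqref{eqn:M} --- so your anticipated obstacle (avoiding the naive $\Theta(n^4L)$-processor contraction) is resolved exactly as in the paper. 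The one genuine divergence is the inversion of $\mathbf{\Psi}$: the paper (Appendix~\ref{appdx:parallelBlockInversion}) inverts the $L\times L$ grid of diagonal blocks by recursively applying the block-matrix-inversion theorem, an $L$-step recursion, whereas you conjugate by $I_L\otimes U$ and apply a perfect-shuffle permutation to decouple the problem into $n$ independent dense $L\times L$ systems. Your route is cleaner and structurally more parallel --- notably, the paper's recursion as written has $L$ sequential stages, which sits uneasily with its own claimed $O(\log n+\log L)$ depth --- but your claim that a dense $L\times L$ inverse can be computed in $O(\log L)$ parallel time is optimistic (standard parallel inversion, e.g.\ Csanky-type, needs $O(\log^2 L)$ depth); since the paper's own accounting of this sub-step is at least as loose, this does not put your proposal at a disadvantage, and both variants stay within the $O(L^2n^3)$ processor budget, with the final structured product $(\Cum B)(B^\top B)^{-1}$ supplying the dominant $O(L^2n^3)$ work in $O(\log n+\log L)$ depth in both treatments.
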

{Note that before the iterative updates, we   compute the third order cumulant\footnote{Instead of computing the cumulant tensor $\Cum$, a randomized sketch can be computed efficiently, following the recent work of~\cite{wang2015fast}, and the ALS updates can be performed efficiently without forming the cumulant tensor $\Cum$.} $\Cum$ once whose computational complexity is $O(\log N)$ with $\frac{N}{\log N}$ processors, where $N$ is the number of samples. However, this operation is not iterative. In contrast, alternating minimization (AM) requires pass over all the data samples in each iteration, while our algorithm requires only one pass of the data. 

The parallel computational complexity of AM is as follows.  In each iteration of AM, computing the derivative with respect to either filters or activation maps requires $NL$ number of FFTs (requires $O(NLn\log n)$ serial time), and the degrees of parallelism are $O(Nn\log L)$ and $O(Nn\log n)$ respectively. Therefore with multi-threading, the running time of AM is $ O(\max(\log n\log L, \log n \log N))$ per iteration using $O(\max(\frac{nNL}{\log N}, \frac{nNL}{\log L}))$ processors. Comparing with Lemma~\ref{lm:complexity}, we find that our algorithm is advantageous in the regime of $N \ge Ln^2$, which is the typical regime in applications.   }

Let us describe how we utilize various algebraic structures to obtain efficient computation.
\paragraph{Property 1}(Khatri-Rao product): $(({\modeC}\odot {\modeB})^\top)^\dag = ({\modeC}\odot{\modeB}) (({\modeC}^\top{\modeC}) .\star ({\modeB}^\top {\modeB}) )^\dag $, where $.\star$  denotes element-wise product.

\paragraph{Computational Goals: }Find   $(({\modeC}^\top{\modeC}) .\star ({\modeB}^\top {\modeB}) )^\dag$ first and multiply the result with $ \Cum({\modeC}\odot{\modeB})$ to find $\newCum$.

\smallskip
We now describe in detail how to carry out each of these steps.

\subsection{Challenge: Computing $(({\modeC}^\top{\modeC}) .\star ({\modeB}^\top {\modeB}) )^\dag$}

A naive implementation to find the matrix inversion  $(({\modeC}^\top{\modeC}) .\star ({\modeB}^\top {\modeB}))^\dag$ is 
very expensive. However, we incorporate the stacked circulant structure of $\modeB$ and $\modeC$ to reduce computation. Note that this is not completely straightforward since although $\modeB$ and $\modeC$ are column stacked circulant matrices, the resulting product whose inverse is required, is {\em not} circulant. Below, we show that however, it is partially circulant along different rows and columns.

\paragraph{ Property 2 } (Block circulant matrix): The matrix $({\modeC}^\top{\modeC}) .\star ({\modeB}^\top {\modeB})$ consists of row and column stacked circulant matrices.

We now make the above property precise by introducing some new notations. Define column stacked identity matrix $\mathbf{I} : = [I,\ldots,I]\in \mathbb{R}^{n\times nL}$, where $I$ is $n \times n $ identity matrix. Let $\bfU:=\blkdiag(U, U, \ldots U) \in \Rbb^{n L \times nL}$ be the block diagonal matrix with $U$ along the diagonal. The first thing to note is that $\modeB$ and $\modeC$, which are column stacked circulant matrices, can be written as \beq\label{eqn:conc} \modeB = \bfI \cdot \bfU \cdot \Diag(v) \cdot\bfU^{\hermconj}, \quad v:=[\DFT(g_1); \DFT(g_2);\ldots; \DFT(g_L)], \eeq where $g_1$, \ldots, $g_L$ are the filters corresponding to $\modeB$, and similarly for $\modeC$, where the diagonal matrix consists of FFT coefficients of the respective filters $h_1, \ldots, h_L$.

By appealing to the above form, we have the following result. We use the notation $\block_j^i(\mathbf{\Psi})$ for a matrix $\mathbf{\Psi}\in \Rbb^{nL\times nL}$ to denote $(i,j)^{\tha}$ block of size $n \times n$.

\begin{lemma}[Form of $({\modeC}^\top{\modeC}) .\star ({\modeB}^\top {\modeB}) $ ]We have
	 \begin{equation}\label{eq:rowcolumnstackdecomp}
	 (({\modeC}^\top{\modeC}) .\star ({\modeB}^\top {\modeB})  )^\dag = \mathbf{U} \cdot\mathbf{\Psi}^\dag\cdot \mathbf{U}\hermconj,
	 \end{equation}
	 where $\mathbf{\Psi}\in \Rbb^{nL \times nL}$ has $L$ by $L$ blocks, each block of size $n \times n$. Its $(j,l)^{\tha}$  block is given by
	 	\begin{equation}
	 	\block_{l}^j(\mathbf{\Psi})  = \Diag(\DFT(\gamma(g_{j},g_l).*\gamma(h_j,h_l)))\in \Rbb^{n \times n}
	 	\end{equation}
		where $\gamma(g_j,g_l):=\mathsf{reverse} (\mathsf{reverse}({g_j}) \Conv g_l)$ and $\gamma(h_j,h_l):=\mathsf{reverse} (\mathsf{reverse}({h_j}) \Conv h_l)$.
\end{lemma}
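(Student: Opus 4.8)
The plan is to exploit the fact that every $n\times n$ block appearing in $(\modeC^\top\modeC).\star(\modeB^\top\modeB)$ is circulant, to diagonalize all of these blocks simultaneously in the common Fourier eigenbasis $U$, and then to invert trivially using the unitarity of $U$. First I would reduce the statement to the level of a single block. Since $\modeB=[\block_1(\modeB),\ldots,\block_L(\modeB)]$ with $\block_j(\modeB)=\Toep(g_j)$ by \eqref{eqn:conc}, the Gram matrix $\modeB^\top\modeB$ is naturally partitioned into $L\times L$ blocks of size $n\times n$, its $(j,l)^{\tha}$ block being $\Toep(g_j)^\top\Toep(g_l)$; likewise $\modeC^\top\modeC$ has $(j,l)^{\tha}$ block $\Toep(h_j)^\top\Toep(h_l)$. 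Because the Hadamard product $.\star$ acts entrywise, it respects this block partition, so that the $(j,l)^{\tha}$ block of $(\modeC^\top\modeC).\star(\modeB^\top\modeB)$ equals $\big(\Toep(h_j)^\top\Toep(h_l)\big).\star\big(\Toep(g_j)^\top\Toep(g_l)\big)$. It therefore suffices to identify this single block.

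Next I would show that each factor is itself circulant, using two elementary identities for the $\Toep(\cdot)$ map. The transpose of a circulant matrix is circulant with the reversed generating vector, $\Toep(f)^\top=\Toep(\mathsf{reverse}(f))$, which is immediate from \eqref{eq:circulant} since $(\Toep(f))^i_j$ depends only on $(i-j)\bmod n$. The product of two circulants is circulant with the cyclic convolution of the generators, $\Toep(a)\Toep(b)=\Toep(a\Conv b)$, whose first column is $\Toep(a)\,b=a\Conv b$. Composing these gives $\Toep(g_j)^\top\Toep(g_l)=\Toep\big(\mathsf{reverse}(g_j)\Conv g_l\big)$, and carrying the remaining reversal through to the paper's generating-vector convention produces exactly $\gamma(g_j,g_l)=\mathsf{reverse}(\mathsf{reverse}(g_j)\Conv g_l)$; the analogous identity holds with $\gamma(h_j,h_l)$ for the $\modeC$ factor.

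I would then invoke the key algebraic observation that the Hadamard product of two circulant matrices is again circulant, with generating vector equal to the entrywise product of the two generators: indeed $(\Toep(a).\star\Toep(b))^i_j=a(((i-j)\bmod n)+1)\,b(((i-j)\bmod n)+1)$ again depends only on $(i-j)\bmod n$. Applying this to the two circulant factors from the previous step shows the $(j,l)^{\tha}$ block equals $\Toep\big(\gamma(g_j,g_l).*\gamma(h_j,h_l)\big)$. By the Fourier diagonalization \eqref{eq:circulant_fft}, $\Toep(c)=U\Diag(\DFT(c))U\hermconj$, this block is $U\,\Diag\big(\DFT(\gamma(g_j,g_l).*\gamma(h_j,h_l))\big)\,U\hermconj=U\,\block_l^j(\mathbf{\Psi})\,U\hermconj$. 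Reassembling all blocks and recalling $\mathbf{U}=\blkdiag(U,\ldots,U)$ gives $(\modeC^\top\modeC).\star(\modeB^\top\modeB)=\mathbf{U}\,\mathbf{\Psi}\,\mathbf{U}\hermconj$. Finally, since $U=\tfrac{1}{\sqrt n}F\hermconj$ is unitary, so is the block-diagonal $\mathbf{U}$ ($\mathbf{U}\hermconj\mathbf{U}=I$), and for any unitary $Q$ one has $(QAQ\hermconj)^\dag=QA^\dag Q\hermconj$; applying this with $Q=\mathbf{U}$ and $A=\mathbf{\Psi}$ yields \eqref{eq:rowcolumnstackdecomp}.

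I expect the main obstacle to be the careful bookkeeping of the reversals, namely tracking how the transpose ($\mathsf{reverse}$), the order of the cyclic convolution, and the Fourier diagonalization each contribute a reversal or conjugation, and how these combine to yield precisely the nested $\mathsf{reverse}$ in the definition of $\gamma$. The supporting algebraic facts themselves, that products, transposes, and Hadamard products of circulant matrices are circulant, and the unitary-conjugation rule for the pseudoinverse, are routine once stated, so the proof is essentially a matter of assembling them in the right order and being meticulous with index conventions.
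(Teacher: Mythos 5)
Your proposal is correct and follows essentially the same route the paper gestures at: observe that every $n\times n$ block of $({\modeC}^\top{\modeC}) .\star ({\modeB}^\top {\modeB})$ is circulant (transpose of a circulant is a circulant with reversed generator, products of circulants are circulants with convolved generators, and Hadamard products of circulants are circulants with entrywise-multiplied generators), diagonalize all blocks simultaneously via $\Toep(c) = U\Diag(\DFT(c))U\hermconj$ to get $\mathbf{U}\,\mathbf{\Psi}\,\mathbf{U}\hermconj$, and then use unitarity of $\mathbf{U}$ to pull the pseudoinverse inside. The only delicate point, which you correctly flag, is the reversal bookkeeping (ordinary versus circular reversal, and which of the $(j,l)$ or $(l,j)$ blocks carries $\gamma$ versus its flip); since the matrix is symmetric these conventions are consistent with the paper's nested-$\mathsf{reverse}$ definition of $\gamma$, so the argument stands.
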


Therefore, the inversion of $({\modeC}^\top{\modeC}) .\star ({\modeB}^\top {\modeB}) $ can be reduced to the inversion of row-and-column stacked set of  diagonal matrices which form $\mathbf{\Psi}$. Computing $\mathbf{\Psi}$  simply requires  FFT on all $2L$ filters $g_1, \ldots, g_L$ and $h_1,\ldots, h_L$, i.e. $2L$ FFTs, each on length $n$ vector.
We propose an efficient iterative algorithm to compute $\mathbf{\Psi}^\dag$ via block matrix inversion theorem\cite{golub2012matrix} in Appendix~\ref{appdx:parallelBlockInversion}. 
\subsection{Challenge: Computing $\newCum= \Cum({\modeC}\odot{\modeB})\cdot(({\modeC}^\top{\modeC}) .\star ({\modeB}^\top {\modeB}) )^\dag$}

 Now that we have computed $(({\modeC}^\top{\modeC}) .\star ({\modeB}^\top {\modeB})  )^\dag$ efficiently, we need to compute the resulting matrix with $ \Cum({\modeC}\odot{\modeB})$ to obtain $\newCum$. 
We observe that the $m^{\tha}$ row of the result $\newCum$ is given by
\begin{align}\label{eqn:M}
M^m= \sum_{j\in [nL]} {\mathbf{U}}^j \Diag\hermconj\left(z\right) \Phi^{(m)} \Diag \left(v\right) ({\mathbf{U}}^j)\hermconj {\mathbf{U}}^j  \mathbf{\Psi}^\dag {\mathbf{U}}\hermconj, \quad \forall m\in [nL],
\end{align}
where $v:=[\DFT(g_1);\ldots; \DFT(g_L)]$, $z:=[\DFT(h_1);\ldots; \DFT(h_L)]$ are concatenated FFT coefficients of the filters, and
\begin{align} \Phi^{(m)}&:={\mathbf{U}}\hermconj \mathbf{I}^\top  \Gamma^{(m)} \mathbf{I} {\mathbf{U}}, \quad 
[\Gamma^{(m)}]_j^i:= [\Cum]_{i+(j-1)n}^m, \quad \forall i,j,m\in[n] \end{align}
Note that $\Phi^{(m)}$ and $\Gamma^{(m)}$ are fixed for all iterations and need to be computed only once. Note that $\Gamma^{(m)}$ is the result of taking $m^{\tha}$ row of the cumulant unfolding $\Cum$ and matricizing it. Equation~\eqref{eqn:M} uses the property that $\Cum^m(\modeC\odot \modeB)$ is equal to the diagonal elments of $ \modeC^\top \Gamma^{(m)} \modeB$.

We now bound the cost for computing \eqref{eqn:M}. (1) Inverting $ \mathbf{\Psi}$  takes $O(\log L +\log n)$ time with $O(n^2L^2/(\log n + \log L))$ processors 
according to appendix~\ref{appdx:parallelBlockInversion}.
(2) Since $\Diag(v)$ and $\Diag(z)$ are diagonal and $\mathbf{\Psi}$ is a matrix with diagonal blocks, the overall matrix multiplication in equation~\eqref{eqn:M} takes $O(L^2n^2)$ time serially with $O(L^2n^2)$ degree of parallelism for each row. Therefore the overall serial computation cost is $O(L^2n^3)$ with $O(L^2n^3)$ degree of parallelism.  With multi-threading, the running time is $O(1)$ per iteration using $O(L^2n^3)$ processes. (3) $\DFT$ requires $O(n\log n)$ serial time, with $O(n)$ degree of parallelism. Therefore computing $2L$ $\DFT$'s takes $O(\log n)$ time with $O(Ln)$ processors. 

Combining the above discussion, it takes $O(\log L+\log n)$ time with $O(L^2n^3)$ processors.

\section{Experiments: Comparison  with Alternating Minimization}
We compare our convolutional tensor decomposition framework with solving equation~\eqref{eqn:alt-min} using alternating (between filters and activation map) minimization method where gradient descent is employed to update $f_i$ and $w_i$ alternatively.  The error comparison between our proposed convolutional tensor algorithm and the alternating minimization algorithm is in figure~\ref{fig:error}. We evaluate the errors for both algorithms by comparing the reconstruction of error and filter recovery error\footnote{Note that  circulant shifts of the filters result in the same reconstruction error, and we report the lowest error between the estimated filters and all circulant shifts of the ground-truth.}. Our algorithm converges much faster to the solution than the alternating minimization algorithm. In fact, alternating minimization  leads to spurious solution where the reconstruction error is significantly larger compared to the error achieved by the tensor method.  The error bump in the reconstruction error curve in figure~\ref{fig:error} for tensor method is due to the random initialization following deflation of one filter, and estimation of the second one.  The running time is also reported in figure~\ref{fig:runtime1} and~\ref{fig:runtime2} between our proposed convolutional tensor algorithm and the alternating minimization. Our algorithm is orders of magnitude faster than the alternating minimization. Both our algorithm and alternating minimization scale linearly with number of filters. However convolutional tensor algorithm is almost constant time  with respect to  the number of samples, whereas the alternating minimization scales linearly. This results in huge savings in running time for large datasets.
\begin{figure}[!htb]
\bc
\subfloat
{\begin{minipage}{0.32\textwidth}\bc
\psfrag{error}{\scriptsize{error}}
\psfrag{filter 1 2 reconstruction error}{}
\psfrag{iteration}{\scriptsize{iteration}}
\psfrag{tensor method reconstruction error}[Bl]{\scriptsize{Proposed $\mathsf{CT}$: Reconst}}
\psfrag{alternating minimization reconstruction error}[Bl]{\scriptsize{Baseline $\mathsf{AM}$: Reconst}}
\psfrag{tensor method for filter 1}[Bl]{\scriptsize{Proposed $\mathsf{CT}$: $f_1$}}
\psfrag{alternating minimization for filter 1}[Bl]{\scriptsize{Baseline $\mathsf{AM}$: $f_1$}}
\psfrag{tensor method for filter 2}[Bl]{\scriptsize{Proposed $\mathsf{CT}$: $f_2$}}
\psfrag{alternating minimization for filter 2}[Bl]{\scriptsize{Baseline $\mathsf{AM}$: $f_2$}}
\includegraphics[width=\textwidth,height=2in]{\fighomeConv/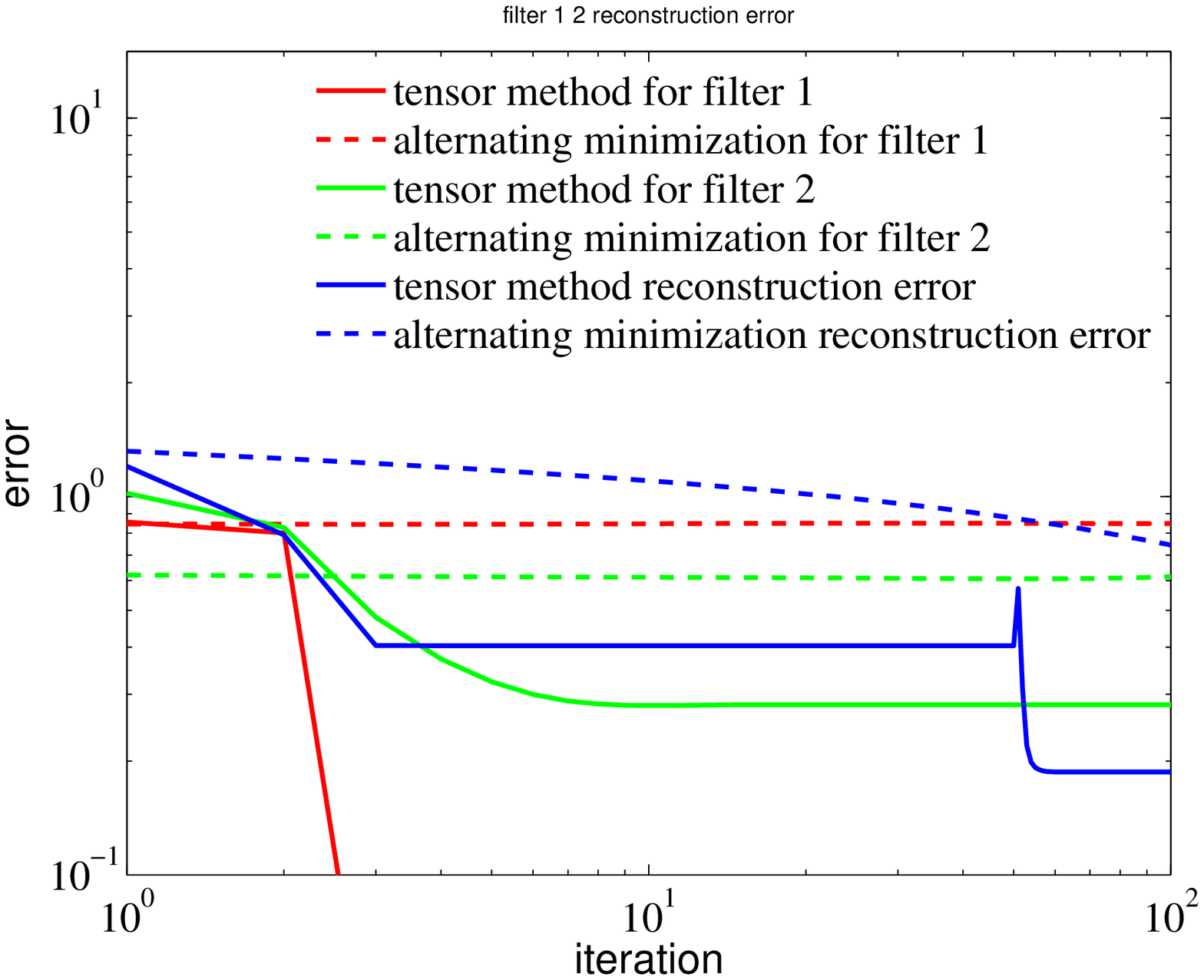}\label{fig:error}
\ec
\end{minipage}
}\hfil
\subfloat
{\begin{minipage}{0.32\textwidth}\bc 
\psfrag{Number of Filters L}[Bl]{\scriptsize{Number of Filters $L$}}
\psfrag{running time in seconds}[Bl]{\scriptsize{seconds}}
\psfrag{tensor method}[Bl]{\scriptsize{Proposed $\mathsf{CT}$}}
\psfrag{alternating minimization}[Bl]{\scriptsize{Baseline $\mathsf{AM}$}}
\includegraphics[width=\textwidth,height=2in]{\fighomeConv/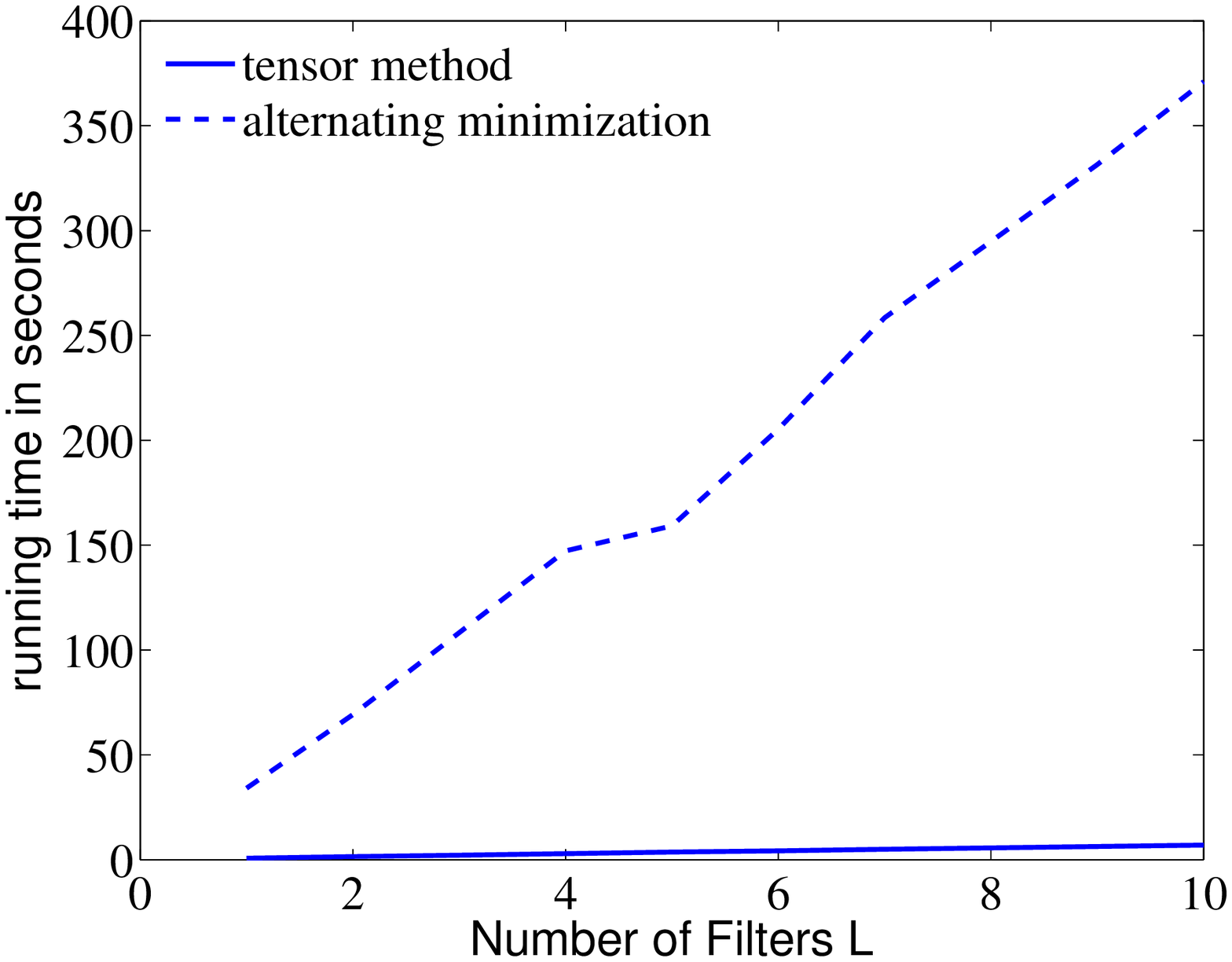}\label{fig:runtime1}\ec\end{minipage}}\hfil
\subfloat
{\begin{minipage}{0.32\textwidth}\bc  
\psfrag{Number of Samples N}[Bl]{\scriptsize{Number of Samples $N$}}
\psfrag{running time in seconds}[Bl]{\scriptsize{seconds}}
\psfrag{tensor method}[Bl]{\scriptsize{Proposed $\mathsf{CT}$}}
\psfrag{alternating minimization}[Bl]{\scriptsize{Baseline $\mathsf{AM}$}}
\includegraphics[width=\textwidth,height=2in]{\fighomeConv/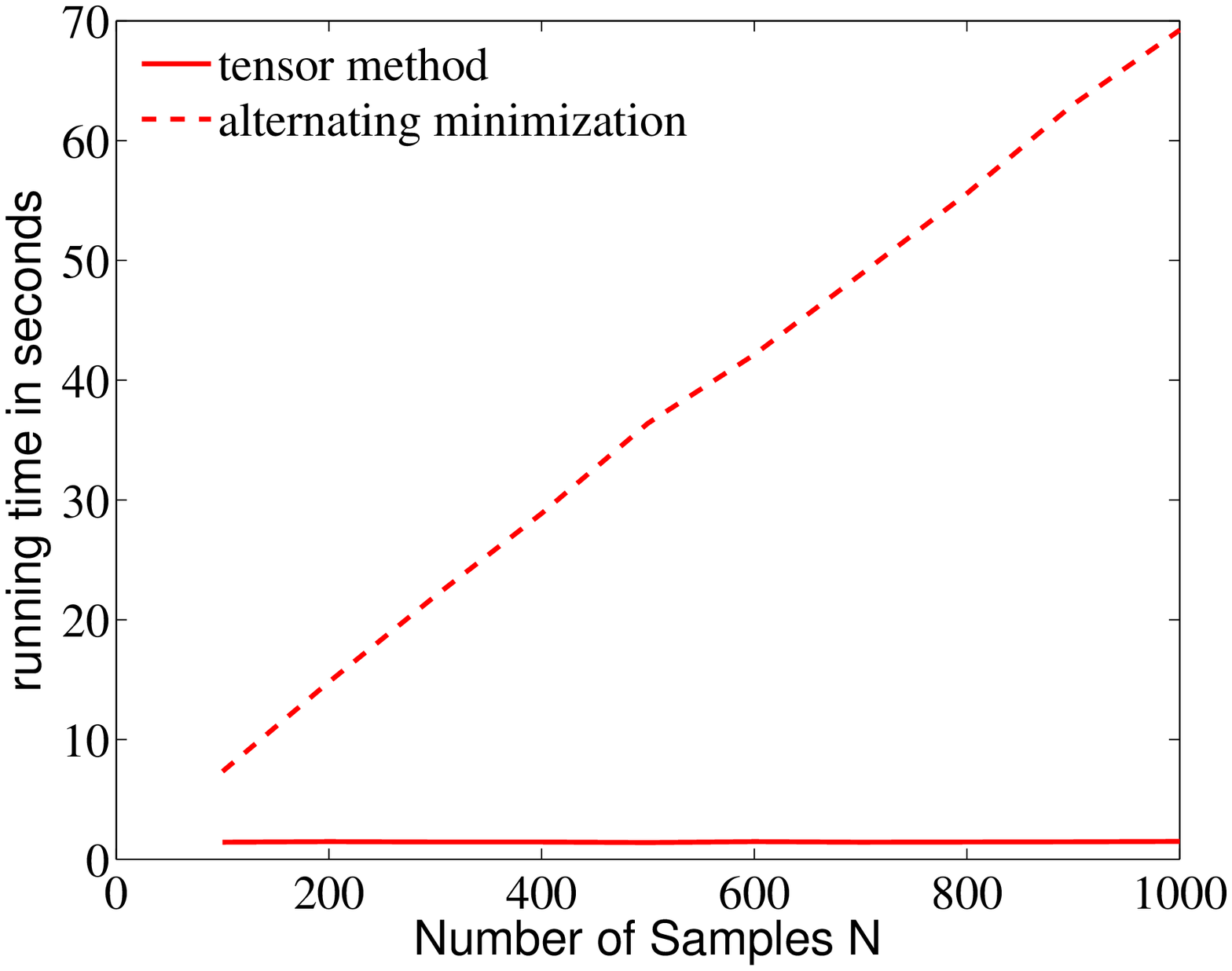}\label{fig:runtime2}\ec\end{minipage}}\\
		\caption[Error and running time comparison]{{ (a) Error comparison between our convolutional tensor method (proposed $\mathsf{CT}$) and the baseline alternate minimization method (baseline $\mathsf{AM}$). (b) Running time comparison between our proposed $\mathsf{CT}$ and the baseline $\mathsf{AM}$ method under varying $L$. (c) Running time comparison between $\mathsf{CT}$ and $\mathsf{AM}$ method under varying $N$.}}
		\ec
\end{figure}	 

\section{Application: Learning Word-sequence Embeddings}

\subsection{Word-Sequence Modeling and Formulation}\label{sec:model}
Our \ourframework framework focuses on a convolutional dictionary model to summarize phrase templates, and then decode word-sequence  signals to obtain the word-sequence embeddings. The first question is how to encode the word sequence into a signal, to be input to the convolutional model and we discuss that below.
\subsubsection{From raw text to signals}
\paragraph{Word encoding: } A word is represented as a \emph{one-hot encoding vector}, i.e. with vector $e_i\in \mathbb{R}^{d}$ whose $i^\tha$ entry is 1 and other entries are $0$, where $i$ is the index of the word in the dictionary. Alternatively, one could use the word2vec  embeddings instead of one-hot encodings. We then stack the one-hot encoding vectors of each sentence together to form a {\em encoding matrix}. The stacking order conforms the word-sequence order. 

\begin{figure}[htbp]
	\psfrag{d}[cc]{$d$}
	\psfrag{k}[cc]{$k$}
	\psfrag{N1}[cc]{$N_1$}
	\psfrag{N2}[cc]{$N_2$}
	\psfrag{N3}[cc]{$N_3$}
	\psfrag{svdeq}[lc]{$\mysvdeq$}\psfrag{equal}[lc]{$=$}\psfrag{eq}[cc]{$=$}
	\psfrag{C}[cc]{$\bfC$}
	\psfrag{U}[cc]{$U$}\psfrag{Ut}[cc]{$U^\top$}\psfrag{D}[cc]{$\Sigma$}\psfrag{Vt}[cc]{$V^\top$}
	\psfrag{Sseq1}[lc]{$\mathcal{S}_{\text{seq}_1}$}
	\psfrag{Sseq2}[cc]{$\mathcal{S}_{\text{seq}_2}$}
	\psfrag{Sseq3}[cc]{$\mathcal{S}_{\text{seq}_3}$}
	\psfrag{C1}[lc]{$\mathcal{Y}_1$}\psfrag{C2}[lc]{$\mathcal{Y}_2$}\psfrag{C3}[lc]{$\mathcal{Y}_3$}
	\psfrag{y11}[Bc]{\tiny{$y_1^{(1)}$}}
	\psfrag{y12}[Bc]{\tiny{$y_2^{(1)}$}}
	\psfrag{y13}[Bc]{\tiny{$y_3^{(1)}$}}
	\psfrag{y21}[Bc]{\tiny{$y_1^{(2)}$}}
	\psfrag{y22}[Bc]{\tiny{$y_2^{(2)}$}}
	\psfrag{y23}[Bc]{\tiny{$y_3^{(2)}$}}
	\includegraphics[width=\textwidth,height=2.5in]{\fighomeConv/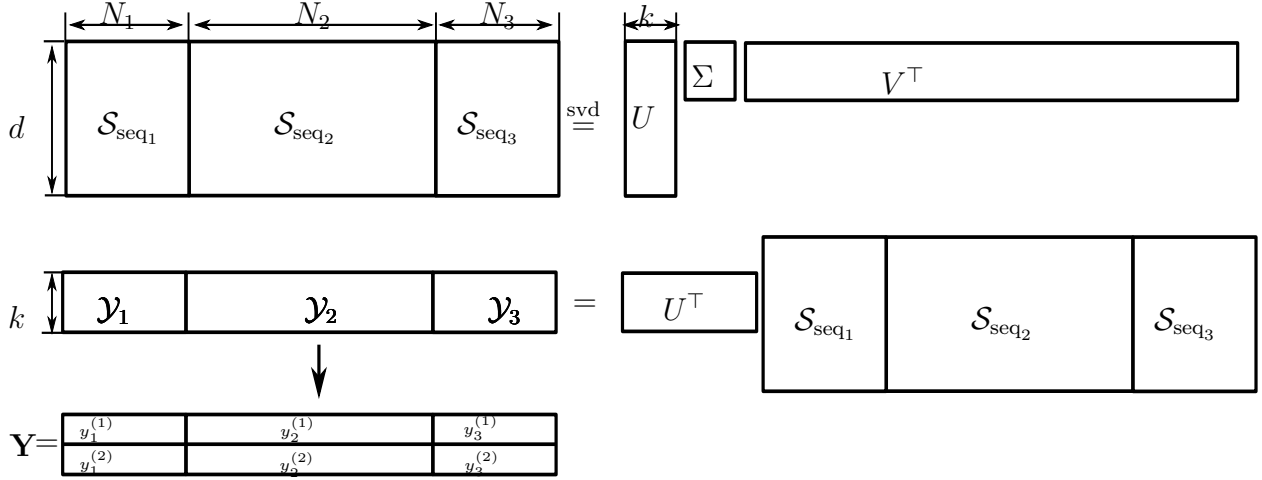}
	\caption[Principal component projection]{Principal component projection to obtain $ [\mathcal{Y}_1, \mathcal{Y}_2, \ldots, \mathcal{Y}_M]=  U^\top \bfS = U^\top[\mathcal{S}_{\text{seq}_1},\mathcal{S}_{\text{seq}_2}, \ldots, \mathcal{S}_{\text{seq}_M}] $ using $\mathcal{S}$. Note that $U$ is the top $k$ left eigenvectors of $\bfS$. }
\end{figure}

To be precise, let us consider sentenc  with  $N$ words. The \emph{encoding matrix} of this word-sequence $\mathcal{S}_{\text{seq}}$ is $\mathcal{S}_{\text{seq}}:= [s_{\text{word}_1},s_{\text{word}_2},\ldots,s_{\text{word}_N}]\in\mathbb{R}^{d\times N}$.

\paragraph{Principal components: } Now that we have encoded words in each sentence, we want to find a compact representation of them in terms of a dictionary model. However, the encoding matrices are too sparse to fit a convolutional model in the word space. Instead, we perform dimensionality reduction through PCA and carry out dictionary modeling in the projected space. 

Concretely,   we stack the encoding matrices side by side as $\bfS: = [\mathcal{S}_{\text{seq}_1},\mathcal{S}_{\text{seq}_2}, \ldots, \mathcal{S}_{\text{seq}_M}] \in \mathbb{R}^{d \times \left(\sum_{i=1}^{M}N_i\right)}$, assuming there are $M$ number of sentences  in the collection of varying lengths $N_1$, $N_2$ and so on. Let $U\in\mathbb{R}^{d \times  k}$ denote the top $k$ left eigenvectors of $\bfS$. We consider  $\mathcal{Y}_i:= U^\top \mathcal{S}_{\text{seq}_1} \in \mathbb{R}^{k \times N_i }$, for each sentence $i$.  We treat the rows of $\mathcal{Y}_i$ independently in parallel and fit convolutional model to each row. Denote $j^\tha$ row of $\mathcal{Y}_i$ as $y_i^{(j)}$,  and thus  $\mathcal{Y}_i = \left[\begin{tabular}{c}$y_i^{(1)}$\\ $\vdots$\\ $y_i^{(k)}$\end{tabular}\right]$.

 \begin{figure}[htbp]
 	\bc
 	\psfrag{Encoding}[lB]{$\mathcal{Y}_i$}
 	\psfrag{Activation}[]{}
 	\psfrag{y1}[cc]{$y_i^{(1)}$}
 	\psfrag{y2}[cc]{$y_i^{(2)}$}
 	\psfrag{y3}[cc]{$y_i^{(k)}$}
 	\psfrag{Maps}[cc]{\scriptsize{Activation Maps}}
 	\psfrag{Stack}[cc]{\scriptsize{stack}}\psfrag{Sequence}[]{}
 	\psfrag{coordinate1}[lc]{\scriptsize{Coordinate 1}}
 	\psfrag{coordinate2}[lc]{\scriptsize{Coordinate 2}}
 	\psfrag{coordinatek}[lc]{\scriptsize{Coordinate $k$}}
 	\psfrag{Embedding}[cc]{\scriptsize{$\ \ $ Word-sequence Embedding}}
 	\psfrag{Comprehension Phase}[lc]{\scriptsize{Comprehension Phase}}
 	\psfrag{Feature-extraction Phase}[lc]{\scriptsize{Feature-extraction Phase}}
 	\includegraphics[width = 0.9\textwidth]{\fighomeConv/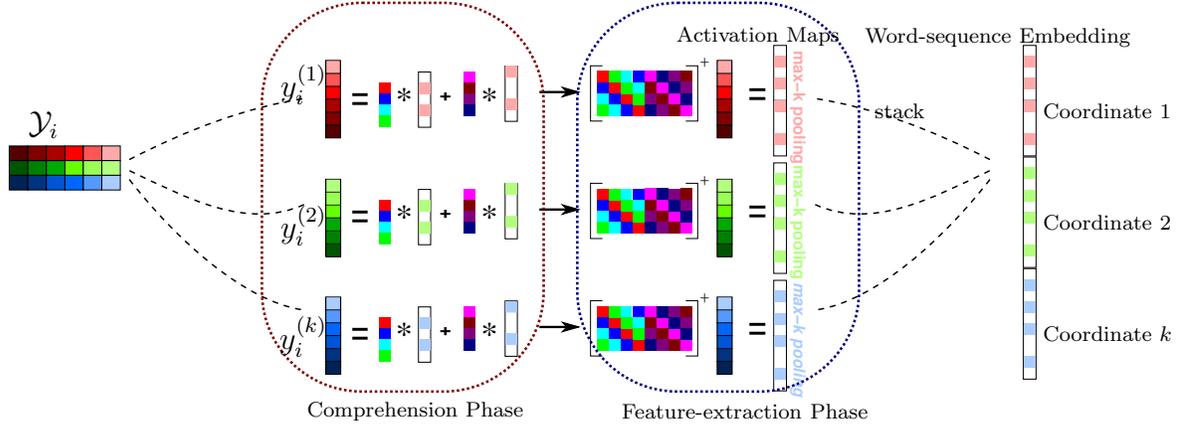}
 	\ec
 	\caption[Overview of our \ourframework framework]{Overview of our \ourframework framework for the $i^\tha$ word-sequence over $k$ coordinates. The Comprehension Phase learns phrase templates using tensor decomposition algorithm. The Feature-extraction Phase decodes activation maps using deconvolutional decoding algorithm. The activation maps are max-k pooled and stacked as the final word-sequence embedding.}\label{fig:wordsequenceembedding}
 \end{figure}

Each $y_i^{(j)}$ is generated through a convolutional dictionary model over phrase templates and activation maps. Our goal in the learning phase is to learn template phrases for the collection of  $[y_i^{(j)}]$ over all word-sequences  $\forall i\in[M]$ across all parallel directions $\forall j\in[k]$. We will state the learning problem formally in the next section. Since all the coordinates are independent and the phrase templates are learned in parallel over all the coordinates, we drop the index $j$ to denote a coordinate of the $i^\tha$ word sequence $y_i^{(j)}$. In the following subsection, a patch from $y_i^{(j)}$ will be denoted as $x$.
\subsubsection{Comprehension Phase -- Learning Phrase Templates}

\begin{figure}[htb]
	\subfloat
	{\begin{minipage}{\textwidth}
			\bc
			\bp
			\psfrag{=}[Bc]{$=$}
			\psfrag{*}[Bl]{$*$}
			\psfrag{X}[Bc]{\small $x$}
			\psfrag{sum}[Cl]{ $\sum$}
			\psfrag{f1}[Bl]{\small $f_1^*$}
			\psfrag{w2}[Bl]{\small $w_L^*$}
			\psfrag{f2}[Bl]{\small $f_L^*$}
			\psfrag{w1}[Bl]{\small $w_1^*$}
			\psfrag{circf}[Bc]{\small $\quad \mathcal{F}^*$}
			\psfrag{w}[Bl]{\small $w^*$}
			\psfrag{(a)}[Bl]{\small{\bf(a)} Convolutional model}
			\psfrag{(b)}[Bl]{\small{\bf(b)} Reformulated model}
			\includegraphics[height=1.5in]{\fighomeConv/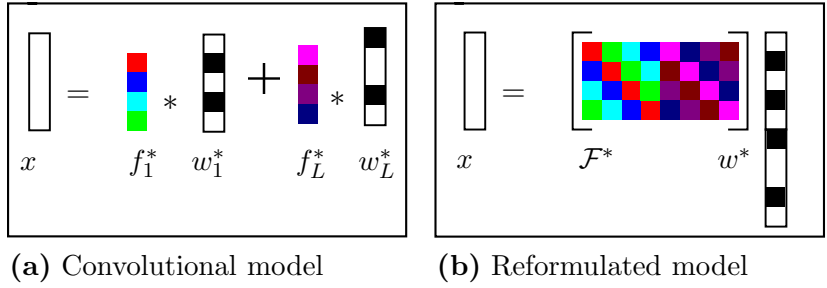}
			\ep
			\ec
		\end{minipage}
	}
\caption[Tensor decomposition for learning convolutional ICA models]{Convolutional tensor decomposition for learning convolutional ICA models~\cite{huang15convolutional}.(a) The convolutional generative model with template phrases. (b) Reformulated multiplicative model where $\CCir^*$ is column-stacked circulant matrix.}\label{fig:reform}
\end{figure}

\begin{figure}
			\bc
			\bp
			\psfrag{M3}[Bc]{$\Cum$}
			\psfrag{f1}[l]{$\lambda_1 (\mathcal{F}_1^*)^{\otimes 3}$}
			\psfrag{f2}[l]{$+\lambda_2 (\mathcal{F}_2^*)^{\otimes 3}$}
			\psfrag{sf1}[l]{ $\quad \  \ldots$}
			\psfrag{sf2}[l]{ }
			\includegraphics[width=\textwidth]{\fighomeConv/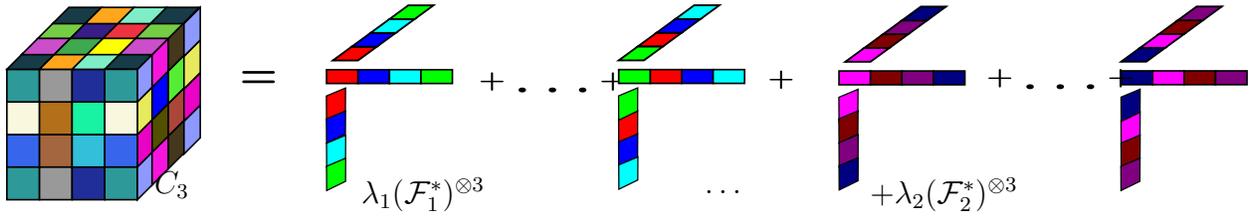}
			\ep
			\ec
			\caption[Third order cumulant]{The third order cumulant is decomposed superposition of third order outer product of template phrases and third order outer product of shifted template phrases.}\label{fig:reform}
\end{figure}

 A word sequence is composed of superposition of overlapping patches, therefore we are interested in learning a generative model over overlapping patches.  We can also view these patches as phrases. A length $n$ patch $x$ is generated as the superposition of $L$ phrase embeddings $f_l^*$ convolved at $L$ activation maps $w_l^*$, $\forall l\in [L]$. Due to the property of the convolution, the convolution is reformulated as the multiplication of $\CToep^*$ and $w^*$, where $\CToep^*:=[\Toep(f_1^*), \Toep(f_2^*),\ldots,\Toep(f_L^*)]$ is the concatenation of circulant matrices and $w^*$ is the  \emph{row-stacked} vector
 $w^* : =\left[
 \begin{tabular}{c}
 $w_1^*$\\
 $w_2^*$\\
$\vdots$\\
$w_L^*$
 \end{tabular}
 \right]\in \Rbb^{nL}$.
 To be precise, a patch
\begin{equation}
x = \sum_{l\in [L]}   f_l^*\Conv {w_l}^*
= \CToep^*\cdot w^* ,\label{eqn:gen}
\end{equation}
This is illustrated in Fig~\ref{fig:reform}(a). $\Toep(f_l^*)$ is circulant matrix corresponding to phrase template $f_l^*$, whose columns are shifted versions of $f_l^*$ as shown in Fig~\ref{fig:reform}(a). Note that although $\mathcal{F}^*$ is a $n$ by $nL$  matrix, there are only $nL$ free parameters.  Given access to the collection of word-sequence sample patches,  $X:=[x^1,x^2,\ldots]$, generated according to the above model, we aim to estimate the true template phrases $f_i^*$, for $i \in [L]$. 

 If the patches are in the same coordinate of the word sequence, these patches share a common set of phase templates, but their activation maps are different. The activation maps are the discriminative features that distinguish different patches.
 Once the template phrases are estimated, we can use standard decoding techniques, such as the square loss criterion in \eqref{eqn:alt-min} to learn the activation maps for the individual maps. 

\subsubsection{Feature-extraction Phase -- Word-sequence Embeddings}
 \paragraph{Activation maps in a coordination: } After learning a good set of phrase templates $\{f_1,\ldots, f_L\}$ and thus $\CToep$, we use the deconvolutional decoding (\ourdecode) to obtain the activation maps for the $j^\tha$ coordinate. For each observed coordinate of the word-sequence $y_i^{(j)}$, the activation map $w_l^*$ in \eqref{eqn:gen} indicates the locations where $i^\tha$ template phrase $f_l^*$ is activated and $w^*$ is the  \emph{row-stacked} vector $w^* : =[w_1^*;w_2^*;\ldots w_L^*]$. An estimation of $w^*$, $w^{(j)}_i$, is achieved as follows
 \beq
 w^{(j)}_i =   \CToep^\dag {y^{(j)}_i}^\top.
 \eeq
Note that the estimated phrase templates are zero padded to match the length of the word-sequence.

 We assume that the elements of $w^*$ are drawn from some product distribution, i.e. different entries are independent of one another, and we have the independent component analysis (ICA) model in \eqref{eqn:gen}. When the distribution encourages sparsity, e.g. Bernoulli-Gaussian, only a small subset of locations are active, and we have the {\em sparse coding} model in that case. We can also extend to dependent distributions such as Dirichlet for $w^*$, along the lines of~\cite{blei2003latent}, but limit ourselves to ICA model for simplicity. This activation map $w^{(j)}_i  \in \mathbb{R}^{N_i\cdot L}$ contains sequence embeddings from coordinate $j$ only, and will be used as one coordinate of our final word-sequence embeddings.

 \paragraph{Varying sentence length: } One difficulty in learning the template phrases using our convolutional tensor decomposition model is that different word-sequence has a different length $N_i$, therefore the activation maps are of varying length as well.  We resolved this problem by \emph{max-k pooling}. In other words, we extract most informative global discriminative features from the activation maps, as illustrated in Figure~\ref{fig:wordsequenceembedding}.
 Finally, we concatenate all the max-k pooled coordinate sequence embeddings as a long vector as the final word-sequence embedding.

 The overall framework flow is depicted in Fig~\ref{fig:wordsequenceembedding}.

\subsection{Evaluating Embeddings through Downstream Tasks}

We evaluate the quality of our word sequence embeddings using three challenging natural language process tasks: sentiment classification, paraphrase detection, and semantic textual similarity estimation. Eight datasets which cover various domains are used as shown in Table~\ref{tab:datasets}.
\begin{table}[htbp]
\small
\begin{tabular}{|l||l|c|c|l|}
\hline
\bf{Dataset} & \bf{Domain} & \bf{Label} & \bf{Label Distribution} & \bf{$M$} \\
\hline
Review & Moview Reviews &\{-1,1\}	& [0.49,0.51] 				&  64720 \\
\hline
SUBJ   & Obj/Subj comments 		   &\{-1,1\}  & [0.50,0.50]  & 1000 \\
\hline
MSRpara & news sources & \{-1,1\} & [0.33,0.67] & 5801$\times$2 \\
\hline
STS-MSRpar &  newswire& [0,5] & [0.00,0.02,0.10,0.24,0.47,0.17] &$1500\times$2 \\
\hline
STS-MSRvid & video caption  & [0,5] & [0.13,0.21,0.14,0.16,0.21,0.14] &1500$\times$2\\
\hline
STS-OnWN &  glosses& [0,5] & [0.01,0.02,0.04,0.12,0.35,0.47] & $750\times$2
\\
\hline
STS-SMTeuroparl & machine translation &[0,5] & [0.01,0.00,0.00,0.02,0.19,0.78] & $1193\times$2\\
\hline
STS-SMTnews & machine translation & [0,5] & [0.00,0.01,0.01,0.06,0.19,0.73] & $399\times$2
\\
\hline
\end{tabular}
\caption{Summary statistics of the datasets used. }
\label{tab:datasets}
\end{table}%

\normalsize

For all the datasets, we train a simple logistic regression model on the training samples and report test classification accuracy using a 10-fold cross validation.
 Sentiment analysis and paraphrase detection belong to binary classification tasks. In a binary classification task, either accuracy or F score is used as evaluate metric. Recall that F-score is the harmonic mean of precision and recall, i.e.,
 ${\displaystyle F = 2 \cdot {(\mathrm{precision} \cdot \mathrm{recall})}/{ \mathrm{precision} + \mathrm{recall}}}$. Precision is the number of true positives divided by the total number of elements labeled as belonging to the positive class, and recall is the number of true positives divided by the total number of elements that belong to the positive class. 

Our \ourframework learns word-sequence embeddings from scratch and requires no pre-training. When working on a new dataset from a new domain, we train fresh set of phrase templates as called domain phrase templates. Using these domain phrase templates, we decode activation maps and then form phrase-embeddings. Our approach is different from skip thoughts, where universal phrase embeddings are generated~\cite{kiros2015skip}. 

\subsubsection{Evaluation Task: Sentiment Classification}
Sentiment analysis is an important task in natural language process as automated labeling of word sequences into positive and negative opinions is used in various settings. We evaluate our sentence embeddings on two datasets from different domains, such as movie review and subjective and objective comments, as in Table~\ref{tab:datasets}. Using word-sequence embeddings combined with NB features, we obtain the state-of-the-art classification results for both these datasets as in Table~\ref{tab:sentimentClass_SICK}.

\begin{table}[htbp]
\begin{center}
\small{\begin{tabular}{|lll|}
\hline
Method & MR & SUBJ\\
\hline
NB-SVM~\cite{wang2012baselines}&79.4 & 93.2\\
MNB~\cite{wang2012baselines}& 79.0  & 93.6 \\
cBoW~\cite{zhao2015self} & 77.2  & 91.3 \\
\hline
GrConv~\cite{zhao2015self} & 76.3  & 89.5 \\
RNN~\cite{zhao2015self} & 77.2 &  93.7 \\
BRNN~\cite{zhao2015self} & 82.3 &  94.2 \\
CNN~\cite{kim2014convolutional} & 81.5  & 93.4 \\
AdaSent~\cite{zhao2015self} & 83.1  & 95.5 \\
\hline
Paragraph-vector~\cite{le2014distributed} &74.8  &90.5  \\
\hline
Skip-thought~\cite{kiros2015skip} & 75.5  &92.1\\
\hline
\textbf{\ourframework} &\textbf{78.9} &\textbf{92.4}\\
\hline
\end{tabular}}

\end{center}
\caption[Classification tasks]{Binary classification tasks:  sentiment analysis task of cataloging a word-sequence into two different categories. \small{Classification accuracies in percentage on standard benchmarks (movie review and subject dataset) are displayed. The first group contains results using bag-of-words models; the second group exhibits some supervised compositional models; the third group is paragraph vector; the fourth is the skip-thought result.
}}
\label{tab:sentimentClass_SICK}
\end{table}

\begin{table}[htbp]
\begin{center}
\small{\begin{tabular}{|lll|}
\hline
Method & 
 Outside Information \tablefootnote{The word similarities information they use are either trained in  Wikipedia (4.4 million articles in contrast to the 4076 sentences of paraphrase dataset we use) or from \emph{WordNet} with expert knowledge. }
& F score\\
\hline
Vector Similarity~\cite{mihalcea2006corpus} 
&  word similarity
& 0.75 \\
ESA~\cite{hassan2011measuring} 
& word semantic profiles 
& 0.79\\
LSA~\cite{hassan2011measuring} 
&  word semantic profiles
& 0.80\\
RMLMG~\cite{rus2008paraphrase}
& 
syntactic
 info
& 0.81\\
\textbf{\ourframework} 
& \textbf{none }
& \textbf{0.81}\\
Skip-thought~\cite{kiros2015skip} 
&train large book corpus &  0.82\\
\hline
\end{tabular}}
\end{center}
\caption[Paraphrase detection tasks]{Binary classification tasks: paraphrase detection task, which operates on pairs of word-sequences and decides on whether they are a paraphrase of each other or not. Comparison of  F-score with other unsupervised sentence paraphrase approaches.  Other methods use auxiliary information such as word similarities trained on Wikipedia or from \emph{WordNet}. In contrast, our algorithm learns sentence embeddings from scratch.
}
\label{tab:sentimentClass_SICK_r}
\end{table}

\subsubsection{Evaluation Task: Paraphrase Detection}
We consider the \emph{paraphrase detection} task on the Microsoft paraphrase corpus~\cite{quirk2004monolingual,dolan2004unsupervised}. 
We employ 4076 sentence pairs as training data to learn the sentence embeddings and regress on the ground truth binary labels with our learned sentence embeddings. The remaining test data is used to calculate classification error.  

As discussed in ~\cite{tai2015improved}, we combine the pair of sentence embeddings produced earlier $w_L$ and $w_R$, i.e.,  the embedding for the right and the left sentences. We generate features for classification using both the distance (absolute difference) and the product between the pair ($w_L$, $w_R$):  $[w_L \odot w_R, \lVert w_L - w_R\rVert]$, where $\odot$ denotes the element-wise multiplication.

In contrast to other unsupervised methods which are trained using outside information such as wordnet and parse trees, our unsupervised approach use \textbf{no} extra information, and still achieves comparable results with the state of art~\cite{paraphraseStateofArt} as in table~\ref{tab:sentimentClass_SICK_r}. We show some examples of paraphrase and non-paraphrase we identified.

\textbf{Paraphrase detected: } \emph{ (1) Amrozi accused his brother, whom he called "the witness", of deliberately distorting his evidence.	(2) Referring to him as only "the witness", Amrozi accused his brother of deliberately distorting his evidence.} The two sentences are the ``difficult sentence'' to show how our algorithm detect paraphrases since they are not simple switching of clauses, and the sentence structures differ quite significantly in the two sentences.

\textbf{Non-paraphrase detected :}  \emph{ (1) I never organised a youth camp for the diocese of Bendigo.	(2) I never attended a youth camp organised by that diocese.}
Similarly with non-paraphrase detection, the two sentences share common words such as youth camp and organized, but our method is able to successfully detect them as non-paraphrase.

\subsubsection{Evaluation Task: Semantic Textual Similarity Estimation}
For the Semantic Textual Similarity (STS) task, the goal is to predict a real-valued similarity score in a range $[1,K]$ given a sentence pair. We include datasets from STS task in various domains including news, image and video description, glosses from WordNet/OntoNotes, the output of machine translation systems with reference translation.

To frame semantic test similarity estimation task into the multi-class classification framework, the gold rating $\tau \in[K_1,K_2]$ is discretized as $p\in\Delta^{K_2-K_1}$ in the follow manner~\cite{tai2015improved}, $p_i =\lfloor \tau \rfloor - \tau +1$ if $i = \lfloor \tau \rfloor + 1 - K_1$,  $p_i = \tau- \lfloor \tau \rfloor$ if $i = \lfloor \tau \rfloor +2 - K_1$, and $p_i=0$ otherwise.
This reduces to finding a predicted $\hat{p}_{\theta}\in \Delta^{K_2-K_1}$ given model parameters $\theta$ to be closest to $p$ in terms of KL divergence~\cite{tai2015improved}. We use a logistic regression classifier to predict $\hat{p}_{\theta}$ and estimate $\hat{\tau}_{\theta} = [K_1,\ldots,K_2]\hat{p}$.

 Results on STS task datasets are illustrated in Table~\ref{tab:paraphrase_STS}.
  As in~\cite{wieting2015towards}, Pearson's r of the median, 75th percentile, and highest score from the official task rankings are showed. We then compare our method against the performance of  supervised models in ~\cite{wieting2015towards}: PARAGRAM-PHRASE (PP), projection (proj.), deep-averaging network (DAN), recurrent neural network (RNN) and LSTM;  as well as the state-of-the-art unsupervised model skip-thought vectors~\cite{kiros2015skip}.

 As we can see from the table, LST is performing poorly even though a back-propagation after seeing the training labelings is carried out for sequence embedding learning. Our method is an unsupervised approach as in skip-thought vectors. However, our algorithm doesn't output universal word-sequence embeddings across domains. We train a fresh model and a new set of domain phrase templates from scratch. Therefore our algorithm is performing better for these individual datasets on the STS task. 
  

\begin{table}[!htb]
%
\scriptsize{\begin{tabular}{|l||ccc||ccc||cc|}
\hline
 		& Supervised &$+$& Unsupervised & Supervised     & 	Methods     &               & Unsupervised & Methods \\
\hline
\hline
Dataset  	& 50\% & 75\% & Max &  DAN 		& RNN &  LSTM  & Skip-thought & \tiny{\textbf{\ourframework}} \\
\hline
\hline
MSRpar 	 &51.5& 57.6& 73.4 &40.3 &18.6 &9.3& 16.8 &\textbf{36.0}\\
MSRvid 	&75.5 &80.3& 88.0 &70.0 &66.5 &71.3 &41.7& \textbf{61.8}\\
SMT-eur	&44.4 &48.1& 56.7 &43.8 &40.9& 44.3 &35.2 &\textbf{37.5}\\
OnWN 	&60.8 &65.9& 72.7&65.9 &63.1& 56.4 &29.7 &\textbf{33.1}\\
SMT-news & 40.1 &45.4 &60.9& 60.0& 51.3& 51.0 &30.8 &\textbf{72.1}\\
\hline
\end{tabular}}

\caption[STS task]{{\small STS task results: Pearson's $r\times 100$ on MSRpar, MSRvid, OnWN, SMTeuroparl and SMTnews dataset.} The first three columns are official rankings reported in the STS2012 official website, so it combines both supervised and unsupervised methods. The second three columns are reported by~\cite{wieting2015towards}. Our comparison against the state-of-the-art unsupervised word-sequence embedding method is in the last two columns. }\label{tab:paraphrase_STS}
\end{table}

\section{Conclusion}
In this chapter, we proposed a novel tensor decomposition framework for learning convolutional dictionary models. Unlike the popular alternating minimization, our method avoids expensive decoding of activation maps in each step and can reach better solutions with faster run times. We derived efficient updates for   tensor decomposition based on modified  alternating least squares, and it consists of simple operations such as FFTs and matrix multiplications.  Our framework easily extends to convolutional models for higher dimensional signals (such as images), where the circulant matrix is replaced with block circulant matrices~\cite{gray2005toeplitz}. More generally, our framework can  handle general group structure, by replacing the FFT operation with the appropriate group FFT~\cite{kondor2008group}.  By combining the advantages of tensor methods with a general class of invariant representations, we thus have a powerful paradigm for learning efficient latent variable models and embeddings in a variety of domains.



\chapter{Latent Tree Model Learning through Hierarchical Tensor Decomposition}\label{chapter:tree}
In previous chapters, we introduced latent dirichlet allocation and its variations to model data with ``shallow'' structure, for instance, multi-view model.  However, real world data is usually generated through more complicated models such as a latent (hierarchical) tree graphical model. Latent tree graphical models characterize a probability distribution involving observed and hidden variables which are Markovian on a tree. Learning is challenging as the number of latent variables and the location of them are not observed. We present an integrated approach to structure and parameter estimation in latent tree graphical models, where some nodes are hidden.

\begin{figure}[!htbp]
\bc
\subfloat[Latent tree]{
\begin{minipage}{0.29\textwidth}
\psfrag{h}[]{}\psfrag{h1}[]{}\psfrag{h2}[]{}\psfrag{x1}[]{}\psfrag{x2}[]{}\psfrag{x3}[]{}\psfrag{x4}[]{}\psfrag{x5}[]{}
\includegraphics[width=\textwidth]{\fighomeTalk/hierarchy-new}
\end{minipage}
}
\hfil
\subfloat[Hierarchical tensor decomposition]{
\begin{minipage}{0.64\textwidth}
\psfrag{h}[]{}\psfrag{x1}[]{}\psfrag{x2}[]{}\psfrag{x3}[]{}\psfrag{x4}[]{}\psfrag{x5}[]{}
\includegraphics[width=\textwidth]{\fighomeTalk/flat2hier-tensorDecomp-2}
\end{minipage}
}\\
\ec
\caption[Latent tree and hierarchical tensor decomposition]{Learning hierarchical latent variable graphical model parameter using hierarchical tensor decomposition.}\label{fig:treehierarchical}
\end{figure}

We present an integrated approach to structure and parameter estimation in latent tree models.
Our method overcomes all the above shortcomings simultaneously.  First, it automatically learns the latent variables and their locations.
Second,  our method achieves consistent structure estimation with $\log(p)$ computational complexity with enough computational resources via ``divide-and-conquer'' manner. We also present a rigorous proof on the global consistency of the structure and parameter estimation under the ``divide-and-conquer'' framework. Our consistency guarantees are applicable to
a broad class of linear multivariate latent tree models including discrete distributions, continuous multivariate distributions (e.g. Gaussian), and mixed distributions such as Gaussian mixtures.
This model class is much more general than discrete models, prevalent in most of the previous works on latent tree models~\cite{mossel2005learning,mossel2007distorted,erdos1999few,anandkumar2013learning}.
Third, our algorithm considers the inverse method of moments, and estimates the model parameters via tensor decomposition with low perturbation guarantees.
Moreover, we carefully integrate structure learning with parameter estimation, based on tensor spectral decompositions~\cite{anandkumar2012tensor}.
Finally,  our approach  has a high degree of parallelism, and is {\em bulk asynchronous }parallel~\cite{gerbessiotis1994direct}. 

In addition to the aforementioned technical contributions, we showcase the impact of our work by applying it to two real datasets originating from the healthcare domain. The algorithm was used to discover hidden patterns, or concepts reflecting co-occurrences of particular diagnoses in patients in outpatient and intensive care settings. While such a task is currently done through manual analysis of the data, our method provides an automated method for the discovery of novel clinical concepts from high dimensional, multi-modal data. 

Our overall approach follows a ``divide-and-conquer'' strategy that learns models  over small groups of variables and iteratively merges into a global solution.   The structure learning involves  combinatorial operations such as minimum spanning tree construction and local recursive grouping; the parameter learning is based on the method of moments and on tensor decompositions. Our method  is guaranteed to correctly recover the unknown tree structure and the model parameters with low sample complexity for the class of linear multivariate latent tree models which includes discrete and Gaussian distributions, and Gaussian mixtures. Our bulk asynchronous parallel algorithm is implemented in parallel using the OpenMP framework and scales logarithmically with the number of variables and linearly with dimensionality of each variable.

Our experiments confirm a high degree of efficiency and accuracy on large datasets of electronic health records. We use latent tree model for discovering a hierarchy among diseases based on comorbidities exhibited in patients' health records, i.e. co-occurrences of diseases in patients.  
In particular, two large healthcare datasets of 30K and 1.6M patients are used to build the latent disease trees, where clinically meaningful disease clusters
are identified as shown in fig~\ref{Fig:tree_mimic2_1} and ~\ref{Fig:tree_mimic2_2}. The proposed algorithm also generates intuitive and clinically meaningful disease hierarchies.

\section{Latent Tree Graphical Model Preliminaries}
We denote $[n]:=\{ 1, \ldots, n \}$. Let $\mathcal{T}:= \left( \mathcal{V}, \mathcal{E}\right)$ denote an undirected tree with  vertex set $\mathcal{V}$ and  edge set $\mathcal{E}$.
The \emph{neighborhood} of a node $v_i$, $\Nb(v_i)$, is the set of nodes to which $v_i$ is directly connected on the tree.
Leaves which have a common neighboring node are known as \emph{siblings}, and the common node is referred to as their {\em parent}. Let $N$ denote the number of samples. An example of latent tree is depicted in Figure~\ref{Fig:StructureLearning}(a).

There are two types of variables on the nodes, namely, the observable variables, denoted by $\mathcal{X} := \left\{x_1,\ldots,x_p\right\}$ ($p := \vert \mathcal{X} \vert$), and  hidden variables, denoted by $\mathcal{H}:=\left\{h_1,\ldots,h_m\right\}$ ($m := \vert \mathcal{H} \vert$).
Let $\mathcal{Y} := \mathcal{X} \cup \mathcal{H}$ denote the complete set of variables and let $y_i$ denote the random variable at node $v_i\in \mathcal{V}$, and similarly let $y_A$ denote the set of random variables in set $A$.

A graphical model is defined as follows: given the neighborhood $\Nb(v_i)$ of any node $v_i \in \mathcal{V}$, the variable $y_i$ is  conditionally independent  of the rest of the variables in $\mathcal{V}$, i.e., $y_i \perp y_j | y_{\Nb(v_i)},\  \forall v_j\in \mathcal{V}\backslash \left\{v_i \cup \Nb(v_i)
\right\}$.

\paragraph{Linear Models } We consider the class of linear latent tree models. 
The observed variables $x_i$ are random vectors of length $d_i$,
 i.e., $x_i \in \Rbb^{d_i},\  \forall i\in [p]$ while the latent nodes  are $k$-state categorical variables, i.e., $h_i \in \{e_1, \ldots, e_k \}$, where $e_j\in \Rbb^k$ is the $j^{\tha}$ standard basis vector. Although $d_i$ can vary across variables, we use $d$ for notation simplicity. In other words, for notation simplicity, $x_i \in \Rbb^{d},\  \forall i\in [p]$ is equivalent to $x_i \in \Rbb^{d_i},\  \forall i\in [p]$. 
For any variable $y_i$ with neighboring hidden variable $ h_j$, we assume a linear relationship: 
\begin{equation} \Ebb[y_i |h_j] = A_{y_i\llvert h_j}  h_j,\end{equation}
where transition matrix $A_{y_i\llvert h_j} \in \mathbb{R}^{d\times k}$ is assumed to have full column rank, $\forall y_i,h_j\in \mathcal{V}$. 
This implies that $k\leq d$, which is natural if we want to enforce a parsimonious model for fitting the observed data.

For a pair of (observed or hidden)  variables $y_a$ and $y_b$, consider the \emph{pairwise correlation matrix}  $\mathbb{E}\left[y_a y_b^\top \right]$  where the expectation is over samples.
Since our model assumes that two observable variables interact through at least a hidden variable, we have 
\begin{equation}
\mathbb{E}[y_a y_b^\top] :=  \sum\limits_{e_i}\mathbb{E}[h_j=e_i]  A_{y_a\llvert h_j=e_i} A_{y_b\llvert h_j=e_i}^\top
\end{equation}
We see that $\mathbb{E}[y_a y_b^\top]$ is of rank $k$ since $A_{y_a\llvert h_j=e_i}$ or $A_{y_b\llvert h_j=e_i}$ is of rank $k$.

\section{Overview of Approach}\label{sec:overview}

\begin{figure*}[hbtp]
  \centering
\includegraphics[width=\textwidth]{\fighomeLT/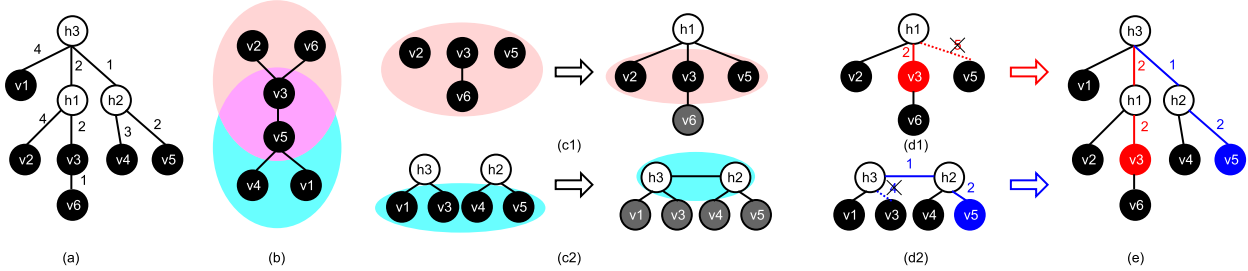}
\caption[Overall approach illustrated in a toy example]{\small \textbf{(a)} Ground truth latent tree to be estimated, numbers on edges are \emph{multivariate information distances}. \textbf{(b)} MST constructed using the \emph{multivariate information distances}. $v_3$ and $v_5$ are internal nodes (leaders). Note that \emph{multivariate information distances} are additive on latent tree, not on MST. \textbf{(c1)} LCR on $\Nb[v_3,\text{MST}]$ to get local structure $\mathcal{N}_3$. Pink shadow denotes the active set. Local parameter estimation is carried out over triplets with joint node, such as ($v_2$, $v_3$, $v_5$) with joint node $h_1$. \textbf{(c2)} LCR on $\Nb[v_5,\text{MST}]$ to get local structure $\mathcal{N}_5$. Cyan shadow denotes the active set. \textbf{(d1)}\textbf{(d2)} Merging local sub-trees. Path($v_3$,$v_5$; $\mathcal{N}_3$) and path($v_3$,$v_5$; $\mathcal{N}_5$) conflict.  \textbf{(e)} Final recovery. }
\label{Fig:StructureLearning}
\end{figure*}

The overall approach is depicted in Figure~\ref{Fig:StructureLearning}, where (a) and (b) show the data preprocessing step,  
 (c) - (e) illustrate the divide-and-conquer step for structure and parameter learning.

More specifically, we start with the parallel computation of 
pairwise \emph{multivariate information distances}. Information
distance roughly measures the extent of correlation between different pairs of observed variables and requires SVD computations in step (a). 
	Then in step (b) a Minimum Spanning Tree (MST) is constructed over observable variables in parallel~\cite{bader2006fast} using the \emph{multivariate information distance}. The local groups are also obtained through MST so that they are available for the structure and parameter learning
step that follows.

The structure and parameter learning is done jointly through a divide-and-conquer strategy. Step-(c) illustrates the divide step (or local learning), where local structure and parameter estimation is performed. 
It also performs the local merge to obtain group level structure and parameter estimates. 
 After the local structure and parameter learning is finished within the groups,
 we perform merge operations among groups, again guided by the Minimum Spanning Tree structure. For the structure estimation it consists of  a  union operation of sub-trees;  for the parameter estimation, it consists of  linear algebraic operations. Since our method is unsupervised, an alignment procedure of the hidden states is carried out which finalizes the global estimates of the tree structure and the parameters.

\section{Structure Learning}\label{sec:structure}
Structure learning in graphical models involves finding the underlying Markov graph, given the observed samples. For latent tree models, structure can be estimated via distance based methods. This involves computing certain {\em information} distances between any pair of observed variables,  and then finding a tree which fits the computed distances.

\textbf{ Multivariate information distances: }
We propose an additive distance for  multivariate linear latent tree models.
For a pair of (observed or hidden)  variables $y_a$ and $y_b$, consider the pairwise correlation matrix  $\mathbb{E}\left[y_a y_b^\top \right]$ (the expectation is over samples). 
Note that its rank is $k$, dimension of the hidden variables. 
\begin{definition}
\label{def:info_dist}
The multivariate information distance between nodes $i$ and $j$ is defined as
\begin{equation}
\label{eqn:info_dist}
\dist(v_a,v_b) := -\log \frac{\prod\limits_{i=1}^{k}\sigma_i\left(\mathbb{E}(y_a y_b^\top)\right)}{\sqrt{\det(\mathbb{E}(y_a y_a^\top)) \det(\mathbb{E}(y_b y_b^\top))}}
\end{equation}
where
$\{\sigma_1(\cdot),\ldots,\sigma_k(\cdot)\} $ are the top $k$ singular values.
\end{definition}
Note that definition~\ref{def:info_dist} suggests that this multivariate information distance allows heterogeneous settings where the dimensions of $y_a$ and $y_b$ are different (and $\geq k$).

For latent tree models, we can find information distances which are provably {\em additive} on the underlying tree  in expectation, i.e.  the expected distance between any two nodes in the tree is the sum of distances along the path between them. 
\begin{lemma}\label{lem:additive}
The multivariate information distance is additive on the tree $\mathcal{T}$, i.e., $\dist(v_a,v_c) $ $= \dist(v_a,v_b) + \dist(v_b,v_c)$, where $v_b$ is a node in the path from $v_a$ to $v_c$ and $v_a$,$v_b$,$v_c\in \mathcal{V}$.
\end{lemma}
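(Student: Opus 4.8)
The plan is to reduce the multivariate statement to the well-understood scalar (rank-one) case by exploiting the linear structure of the model. First I would establish the key algebraic fact underlying additivity: for three nodes $v_a$, $v_b$, $v_c$ with $v_b$ on the path from $v_a$ to $v_c$ in the tree, the conditional independence structure (the global Markov property stated in the preliminaries) implies that conditioning on $y_b$ separates $y_a$ from $y_c$. Using the linear model assumption $\Ebb[y_i|h_j] = A_{y_i|h_j} h_j$, I would show that the cross-correlation matrix \emph{factorizes} along the path. Concretely, the goal is to prove a matrix identity of the form
\begin{equation}
\Ebb[y_a y_c^\top] = \Ebb[y_a y_b^\top] \, \bigl(\Ebb[y_b y_b^\top]\bigr)^{-1} \, \Ebb[y_b y_c^\top],
\label{eqn:factorization_plan}
\end{equation}
which is the natural multivariate analogue of the scalar fact that correlations multiply along a Markov chain. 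This follows because, conditioned on the hidden state at $v_b$ (or on $y_b$ when $y_b$ is observed and full rank), the variables $y_a$ and $y_c$ are independent, so their joint correlation passes through $v_b$.

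Next I would take the product of the top $k$ singular values, $\prod_{i=1}^k \sigma_i(\cdot)$, of each side. The crucial observation is that for a rank-$k$ matrix $M$, the quantity $\prod_{i=1}^k \sigma_i(M)$ behaves like a determinant on the $k$-dimensional range, and hence is \emph{multiplicative} under the factorization in \eqref{eqn:factorization_plan}. Writing $\dist(v_a,v_c)$ from Definition~\ref{def:info_dist} and substituting the factorization, the numerator $\prod_i \sigma_i(\Ebb[y_a y_c^\top])$ splits as a product involving $\prod_i\sigma_i(\Ebb[y_a y_b^\top])$, $\prod_i\sigma_i(\Ebb[y_b y_c^\top])$, and a factor $\det(\Ebb[y_b y_b^\top])^{-1}$ (or $\prod_i\sigma_i$ thereof, since $\Ebb[y_b y_b^\top]$ is rank $k$ in the relevant subspace). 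Taking $-\log$ then converts the multiplicative relation into the additive one $\dist(v_a,v_c) = \dist(v_a,v_b) + \dist(v_b,v_c)$, with the self-correlation denominators $\det(\Ebb[y_a y_a^\top])$, $\det(\Ebb[y_c y_c^\top])$, and the matching $\det(\Ebb[y_b y_b^\top])$ terms telescoping precisely as needed. Since additivity over a single intermediate node extends to any path by induction, this suffices for the general statement.

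I would expect the main obstacle to be making the singular-value product $\prod_{i=1}^k \sigma_i$ rigorously multiplicative under the matrix factorization \eqref{eqn:factorization_plan}. For exact determinants of square matrices this is immediate from $\det(MN)=\det(M)\det(N)$, but here the correlation matrices are rectangular ($d_a \times d_b$, etc.) with rank exactly $k \le d$, so the identity must be phrased in terms of the generalized/pseudo-determinant on the $k$-dimensional ranges. The clean way to handle this is to restrict all operators to the relevant $k$-dimensional subspaces spanned by the columns of the transition matrices $A_{y_i|h_j}$ (which have full column rank $k$ by assumption), where each restricted map is an isomorphism and the determinant factorization applies verbatim; the product of top-$k$ singular values then equals the absolute determinant of this restricted map up to the orthonormal change of basis from the SVD. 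The second, more routine obstacle is verifying that the hidden-state weighting $\Ebb[h_j=e_i]$ and the normalization enter symmetrically so that the denominator terms cancel correctly; this is a bookkeeping computation once the factorization \eqref{eqn:factorization_plan} is in hand, and I would defer its detailed algebra rather than grind through it here.
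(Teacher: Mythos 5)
Your proposal matches the paper's own proof in all essentials: the paper likewise uses the tower property and the Markov structure to factor the cross-correlation through node $v_b$ (writing $\Ebb[y_a y_c^\top] = A\,\Ebb[y_b y_b^\top] B^\top$, which is exactly your identity $\Ebb[y_a y_c^\top] = \Ebb[y_a y_b^\top]\bigl(\Ebb[y_b y_b^\top]\bigr)^{-1}\Ebb[y_b y_c^\top]$), and it handles the rank-$k$ issue precisely as you suggest, by sandwiching each rectangular correlation matrix between its top-$k$ singular-vector matrices so that $\prod_{i=1}^k\sigma_i(\cdot)$ becomes a genuine $k\times k$ determinant that telescopes multiplicatively before taking $-\log$. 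The only cosmetic difference is that the paper argues directly for an adjacent triple and leaves the extension along a general path implicit, which your induction remark covers.
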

Refer to Appendix~\ref{apdx:additive} for proof. The empirical distances can be computed via rank-$k$ SVD of the empirical pairwise moment matrix $\hat{\Ebb}[y_a y_b^\top]$ 
Note that the distances for all the pairs can be computed in parallel.

\textbf{Formation of local groups via MST: }Once the empirical distances are computed, we construct a    Minimum Spanning Tree (MST), based on those distances. Note that the MST can be computed efficiently in parallel~\cite{vineet2009fast,website:Boruvka}. We now form groups of observed variables over which we carry out learning independently, without any coordination. These groups are obtained by the (closed) neigborhoods in the MST, i.e. an internal node and its one-hop neighbors form a group. The corresponding internal node is referred to as the {\em group leader}. See Figure~\ref{Fig:StructureLearning}(b).

\textbf{Local recursive grouping (LRG): }Once the groups are constructed via neighborhoods of MST,  we construct a sub-tree with hidden variables in each group (in parallel) using the recursive grouping introduced in~\cite{choi2011learning}.    
The recursive grouping uses the multivariate information distances and decides the locations and numbers of hidden nodes. 
It proceeds by deciding which nodes are siblings, which proceeds as follows: consider two observed nodes $v_i,v_j$ which are siblings on the tree with a common parent $v_l$, and consider any other observed node $v_a$. 
From additivity of the (expected) information distances, we have $\dist(v_i,v_a)= \dist(v_i,v_l) + \dist(v_l,v_a)$ and similarly for $\dist(v_j,v_a)$. 
Thus, we have $\Phi(v_i,v_j;v_a):=\dist(v_i,v_a)- \dist(v_j,v_a)= \dist(v_i,v_l) - \dist(v_j,v_l)$, which is independent of node $v_a$. 
Thus, comparing the quantity $\Phi(v_i,v_j;v_a)$ for different nodes $v_a$ allows us to conclude that $v_i$ and $v_j$ are siblings. 
Once the siblings are inferred, the hidden nodes are introduced, and the same procedure repeats to construct the higher layers. Note that whenever we introduce a new hidden node $h_{\text{new}}$ as a parent, we need to estimate multivariate information distance between $h_{\text{new}}$ and nodes in active set $\Omega$. This is discussed in~\cite{choi2011learning} with details.

We will describe the LRG in details with integrated parameters estimation in Procudure~\ref{algo:plrg} in Section~\ref{sec:merging}.
In the end, we obtain a sub-tree over the local group of variables. After this \emph{local recursive grouping test}, we store the neighborhood relationship for the leader $v_i$ using an adjacency list $\Adj_i$. We call the resultant local structure as \emph{latent sub-tree}.

\section{Parameter Estimation}\label{sec:parameter}
Along with the structure learning, we adopt a moment-based spectral learning technique for parameter estimation. This is a guaranteed and fast approach to recover parameters via moment matching for third order moments of the observed data. In contrast, traditional approaches such as Expectation Maximization (EM) suffer from spurious local optima and cannot provably recover the parameters.

\textbf{A latent tree with three leaves:} We first consider an example of three observable leaves $x_1,x_2, x_3$ (i.e., a triplet) with a common hidden parent $h$. We then clarify how this can be generalized to learn the parameters of the latent tree model.
Let $\otimes$ denote for the tensor  product. For example, if $x_1, x_2, x_3 \in \mathbb{R}^{d}$, we have $x_1 \otimes x_2\otimes x_3 \in \mathbb{R}^{d \times d\times d}$.
\begin{property}[Tensor decomposition for triplets]  For a linear latent tree model with three observed nodes $v_1,v_2,v_3$ with joint hidden node $h$, we have
\begin{equation}\mathbb{E}(x_1\otimes x_2 \otimes x_3) = \sum_{r=1}^{k} \Pbb[h=e_r] A_{x_1|h}^r \otimes A_{x_2|h}^r \otimes A_{x_3|h}^r,\end{equation}
where $A_{x_i|h}^r =\mathbb{E}(x_i|h=e_r)$, i.e., $r^{\text{th}}$ column of the transition matrices from $h$ to $x_i$.
The tensor decomposition method of~\cite{anandkumar2012tensor} provably recovers the parameters $A_{x_i|h}$, $\forall i\in [3]$, and $\Pbb[h]$.\end{property}\label{lem:para_est_def}

\textbf{Tensor decomposition for learning latent tree models: }We employ the above approach for learning latent tree model parameters as follows: for every triplet of variables $y_a$, $y_b$, and $y_c$ (hidden or observed), we consider the hidden variable $h_i$ which is the joining point of $y_a,y_b$ and $y_c$ on the tree. They form a {\em triplet} model, for which we employ the tensor decomposition procedure.   However, it is wasteful to do it over all the triplets in the latent tree. 

In the next section, we demonstrate how we efficiently estimate the parameters as we learn the structure, and minimize the tensor decompositions required for estimation. Issues such as alignment of hidden labels across different decompositions will also be addressed.

\section{Integrated Structure and Parameter Estimation}
\label{sec:merging}
So far, we described high-level procedures of structure estimation through local recursive grouping (LRG) and parameter estimation through tensor decomposition over triplets of variables, respectively. We now describe an integrated and efficient approach which brings all these ingredients together. In addition, we provide merging steps to obtain a global model, using the sub-trees and parameters learnt over local groups.

\subsection{Local Recursive Grouping with Tensor Decomposition}
Next we present an integrated procedure where the parameter estimation goes hand-in-hand with structure estimation.
Intuitively, we find efficient groups of triplets to carry out tensor decomposition simultaneously, as we estimate the structure through recursive grouping. In recursive grouping, pairs of nodes are recursively grouped as siblings or as parent-child. As this process continues, we carry out tensor decompositions whenever there are siblings present as triplets. If there are only a pair of siblings, we find an observed node with closest distance to the pair. Once the tensor decompositions are carried out on the observed nodes, we proceed to structure and parameter estimation of the  added  hidden variables. The samples of the hidden variables can be obtained via the posterior distribution, which is learnt earlier through tensor decomposition. This allows us to predict information distances and third order moments among the hidden variables as process continues. The full algorithm is given in Procedure~\ref{algo:plrg}. 

 \begin{algorithm}[h]
 \caption{LRG with Parameter Estimation}
 \label{algo:plrg}
 \begin{algorithmic}[1]
 \REQUIRE for each $v_i \in \mathcal{X}_\text{int}$, active set $\Omega := \Nb[v_i;\text{MST}]$.
  \ENSURE for each $v_i \in \mathcal{X}_\text{int}$, local sub-tree adjacency matrix $\Adj_i$, and $\mathbb{E}[y_a| y_b]$ for all $(v_a,v_b)\in \Adj_i$.
 	\STATE Active set $\Omega \leftarrow \Nb[v_i;\text{MST}]$
 	\WHILE{$\lvert \Omega\rvert> 2$}
 		\FORALL {$v_a,v_b\in \Omega$}
 		\IF {$\Phi (v_a, v_b; v_c) = \dist(v_a,v_b), \; \forall$ $v_c \in \Omega \backslash \{v_a, v_b\}$}
 		\STATE $v_a$ is a leaf node and $v_b$ is its parent,
 		\STATE Eliminate $v_a$ from $\Omega$.
 		\ENDIF
 		\IF{$-\dist(v_a,v_b) < \Phi (v_a, v_b; v_c) = \Phi (v_a, v_b; v_c^\prime) < \dist(v_a, v_b),\forall v_c, v_c^\prime \in \Omega \backslash \{v_a, v_b\}$}
 		\STATE $v_a$ and $v_b$ are siblings,eliminate $v_a$ and $v_b$ from $\Omega$, add $h_{\text{new}}$ to $\Omega$.
 		\STATE Introduce new hidden node $h_{\text{new}}$ as parent of $v_a$ and $v_b$.
 		\IF {more than 3 siblings under $h_{\text{new}}$}
 		\STATE find $v_c$ in siblings,
 		\ELSE
 		 \STATE find $v_c = \arg \min_{v_c\in \Omega} \dist(v_a,v_c)$.
 		\ENDIF
 		\STATE Estimate empirical third order moments $\widehat{\mathbb{E}}(y_a\otimes y_b\otimes y_c)$
 		\STATE Decompose $\widehat{\mathbb{E}}(y_a\otimes y_b\otimes y_c)$ to get $\Pr[h_{\text{new}}]$ and $\mathbb{E}(y_r|h_{\text{new}})$, $\forall r=\{a,b,c\}$.
 		\ENDIF
 		\ENDFOR
 		\ENDWHILE
 \end{algorithmic}
 \end{algorithm} 

The divide-and-conquer local spectral parameter estimation is superior compared to popular EM-based method~\cite{choi2011learning}, which is slow and prone to local optima.
More importantly, EM can only be applied on a stable structure since it is a global update procedure.
Our proposed spectral learning method, in contrast, is applied locally over small groups of variables, and is a guaranteed learning with sufficient number of samples~\cite{anandkumar2012tensor}. Moreover, since we integrate structure and parameter learning, we avoid recomputing the same quantities, e.g. SVD computations are required both for structure estimation (for computing distances) and parameter estimation (for whitening the tensor). Combining these operations results in huge computational savings (see Section~\ref{sec:complexity} for the exact computational complexity of our method).
\begin{algorithm}[hbtp]
\caption{\small Merging and Alignment Correction (MAC)}\label{algo:pmac}
\begin{algorithmic}[1]
\REQUIRE \emph{Latent sub-trees} $\Adj_i$ for all internal nodes $i$.
\ENSURE Global latent tree $T$ structure and parameters.
\FOR {$\Adj_i$ and $\Adj_j$ in all the sub-trees}
\IF{there are common nodes between $\Adj_i$ and $\Adj_j$ }
\STATE Find the shortest path path$(v_i,v_j;\Adj_i)$ between $v_i$ and $v_j$ on $\Adj_i$ and path$(v_i,v_j;\Adj_j)$ in $\Adj_j$;
\STATE Union the only conflicting path$(v_i,v_j;\Adj_i)$ and path$(v_i,v_j;\Adj_j)$ according to equation~\eqref{eq:unionpath} ;
\STATE Attach other nodes in $\Adj_i$ and $\Adj_j$ to the union path; 
\STATE Perform alignment correction as described in Procedure~\ref{algo:alignment}.
\ENDIF
\ENDFOR
\end{algorithmic}
\end{algorithm}

\subsection{Merging and Alignment Correction}\label{sec:align}
We have so far learnt sub-trees  and  parameters over local groups of variables, where the groups are determined by the neighborhoods of the MST. The challenge now is to combine them to obtain a globally consistent estimate.
There are non-trivial obstacles to achieving this:
first, the constructed local sub-trees span  overlapping groups of observed nodes,  and possess conflicting paths. Second, local parameters need to be re-aligned as we merge the subtrees to obtain globally consistent estimates due to the nature of unsupervised learning. To be precise,  different tensor decompositions lead to permutation of the hidden labels (i.e. columns of the transition matrices) across triplets.  Thus, we need to find the permutation matrix correcting the alignment of  hidden states of the transition matrices, so as to guarantee global consistency.

\paragraph{Structure Union:} We now describe the procedure to merge the local structures. We merge them in pairs to obtain the final global latent tree. Recall that $\Adj_i$ denotes a sub-tree constructed locally over a group, whose leader is node $v_i$. Consider a pair of subtrees
 $\Adj_i$ and $\Adj_j$, whose group leaders $v_i$ and $v_j$ are neighbors on the MST. Since $v_i$ and $v_j$ are neighbors, both the sub-trees contain them, and have different paths between them (with hidden variables added).  Moreover, note that this is the only conflicting path in the two subtrees. We now describe how we can resolve this: in $\Adj_i$, let $h_1^i$ be the neighboring hidden node for $v_i$ and $h_2^i$ be the neighbor of $v_j$. There could be more hidden nodes between $h_1^i$ and $h_2^i$. Similarly, in $\Adj_i$, let $h_1^j$ and $h_2^j$ be the corresponding nodes in $\Adj_j$.  The shortest path between $v_i$ and $v_j$ in the two sub-trees are given as follows:
\begin{align}
\text{path}(v_i,v_j;\Adj_i) & := [v_i- h_1^i- \ldots- h^i_{2}-v_j]\\
\text{path}(v_i,v_j;\Adj_j) & := [v_i- h_1^j- \ldots- h^j_{2}-v_j]
\end{align}
Then the union path is formed as follows: 
\begin{align}
\label{eq:unionpath}
\text{merge}& (\text{path}(v_i,v_j;\Adj_i) ,\text{path}(v_i,v_j;\Adj_j)) \nonumber\\
& : = [v_i- h_1^i- \ldots- h^i_{2} - h^j_{1} \ldots h_2^j - v_j]
\end{align}In other words, we retain the immediate hidden neighbor of each group leader, and break the paths on the other end. For example in Figure~\ref{Fig:StructureLearning}(d1,d2), we have the path $v_3-h_1-v_5$ in $\Adj_3$ and path $v_3-h_3-h_2-v_5$ in $\Adj_5$. The resulting path is $v_3-h_1-h_3-h_2-v_5$, as see in Figure~\ref{Fig:StructureLearning}(e).
After the union of the conflicting paths, the other nodes are attached to the resultant latent tree. We present the pseudo code in Procedure~\ref{algo:pmac} in Appendix~\ref{appen:alignment}.

\begin{algorithm}[hbtp]
\caption[Parameter Alignment Correction]{Parameter Alignment Correction \\ \textbf{(}$\mathbb{G}_r$ denotes reference group, $\mathbb{G}_o$ denotes the list of other groups, each group has a reference node denoted as $\mathcal{R}_l$, and the reference node in $\mathbb{G}_r$ is $\mathcal{R}_g$. The details on alignment at line 8 is in Appendix~\ref{appen:alignment}.\textbf{)}}\label{algo:alignment}
\begin{algorithmic}[1]
\REQUIRE Triplets and unaligned parameters estimated for these triplets, denoted as $\text{Trip}(y_i,y_j,y_k)$.
\ENSURE Aligned parameters for the entire latent tree $T$.
\STATE Select $\mathbb{G}_r$ which has \emph{sufficient children};
\STATE Select refer node $\mathcal{R}_g $ in $\mathbb{G}_r$;
\FORALL {a, b  in $\mathbb{G}_r$}
\STATE  Align $\text{Trip}_{\text{in}}(y_a,y_b, \mathcal{R}_g)$;
\ENDFOR
\FORALL {$i_g$ in $\mathbb{G}_o$}
	\STATE Select refer node  $\mathcal{R}_l $ in $\mathbb{G}_o$[$i_g$];
	\STATE Align $\text{Trip}_{\text{out}}(\mathcal{R}_g,  y_a, \mathcal{R}_l)$ and $\text{Trip}_{\text{out}}(\mathcal{R}_l, y_i, \mathcal{R}_g)$;
	\FORALL {i, j in $\mathbb{G}_o$[$i_g$]}
		\STATE Align $\text{Trip}(y_i,y_j,\mathcal{R}_l)$;
	\ENDFOR		
\ENDFOR
\end{algorithmic}
\end{algorithm}

\textbf{Parameter Alignment Correction:} As mentioned before, our parameter estimation is unsupervised, and therefore, columns of the estimated transition matrices may be permuted for different triplets over which tensor decomposition is carried out. Note that the parameter estimation within the triplet is automatically acquired through the tensor decomposition technique, so that the alignment issue only arises across triplets. We refer to this as the alignment issue and it is required at various levels.

 There are two types of triplets, namely, \emph{in-group} and \emph{out-group} triplets. A triplet of nodes $\text{Trip}(y_i,y_j,y_l)$ is said to be \emph{in-group} (denoted by  $\text{Trip}_{\text{in}}(y_i,y_j,y_l)$ ) if
 its containing nodes share a joint node $h_k$ and there are no other hidden nodes in path($y_i$, $h_k$), path($y_j$, $h_k$) or path($y_l$, $h_k$).
 Otherwise, this triplet is \emph{out-group} denoted by $\text{Trip}_{\text{out}}(y_i,y_j,y_l)$.
 We define a group as \emph{sufficient children} group if it contains at least three \emph{in-group} nodes.

Designing an \emph{in-group} alignment correction with \emph{sufficient children} is relatively simple: we achieve this by including a local reference node for all the \emph{in-group} triplets. Thus, all the triplets are aligned with the reference node.  The alignment correction is more challenging if lacking \emph{sufficient children}. We propose \emph{out-group} alignment to solve this problem. We first assign one group as a \emph{reference group}, and the \emph{local reference node} in that \emph{reference group} becomes the \emph{global reference node}. In this way, we align all recovered transition matrices in the same order of hidden states as in the reference node. Overall, we merge the local structures and align the parameters from LRG local sub-trees using Procedure~\ref{algo:pmac} and~\ref{algo:alignment}.

\section{Theoretical Gaurantees}\label{sec:complexity}

\textbf{Correctness of Proposed Parallel Algorithm: }We now provide the main result of this chapter on global consistency for our method, despite the high degree of parallelism.

\begin{theorem}\label{theorem:main_LT}
Given samples from an identifiable latent tree model, the proposed method consistently recovers the structure with $O(\log p)$ sample complexity and parameters with $O(\poly p)$ sample complexity.
\end{theorem}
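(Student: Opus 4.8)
The plan is to decouple the argument into a structure-recovery part and a parameter-recovery part, and to show that both follow from a single uniform concentration bound on the empirical pairwise statistics combined with the deterministic correctness of the combinatorial routines. First I would establish that the empirical multivariate information distances concentrate around their population values. By Definition~\ref{def:info_dist}, each $\dist(v_a,v_b)$ is a smooth function of the top-$k$ singular values of $\Ebb[y_a y_b^\top]$ and of the determinants of the marginal second moments. Using a matrix Bernstein / Weyl-type perturbation bound, an empirical moment matrix built from $N$ i.i.d. samples satisfies $\|\hat{\Ebb}[y_a y_b^\top] - \Ebb[y_a y_b^\top]\| \le \epsilon$ with probability $1-\delta$ once $N = \Omega(\epsilon^{-2}\log(1/\delta))$; since identifiability keeps the relevant singular values bounded away from $0$, the induced error on $\dist$ is $O(\epsilon)$. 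A union bound over all $O(p^2)$ pairs then yields $\max_{a,b}|\hat{\dist}(v_a,v_b)-\dist(v_a,v_b)| \le \tau$ with $N = O(\tau^{-2}\log p)$ samples, which is the source of the $O(\log p)$ structure sample complexity.

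Next I would argue that distance estimates accurate to threshold $\tau$ imply exact structure recovery. By Lemma~\ref{lem:additive} the population distances are additive on the tree, so the sibling and parent–child tests in Procedure~\ref{algo:plrg} — which compare $\Phi(v_a,v_b;v_c)=\dist(v_a,v_c)-\dist(v_b,v_c)$ against thresholds derived from $\dist(v_a,v_b)$ — return the correct topological relation whenever $\tau$ is below half the minimal gap between distinct additive relations, a positive quantity fixed by the minimum and maximum edge weights of the model. The MST construction and the merge of conflicting paths (Equation~\eqref{eq:unionpath}, Procedure~\ref{algo:pmac}) are deterministic given the recovered local sub-trees, so correct local tests propagate into a globally correct tree.

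For the parameters I would invoke the tensor decomposition perturbation guarantees of~\cite{anandkumar2012tensor}: each local triplet decomposition of $\hat{\Ebb}(y_a\otimes y_b\otimes y_c)$ recovers the conditional-mean columns $A_{x_i|h}$ and the prior $\Pr[h]$ up to error proportional to the empirical third-moment perturbation, provided the whitened tensor has a spectral gap guaranteed by the full column rank of the transition matrices. Because hidden-node statistics are reconstructed from previously learned posteriors and reused at higher levels of the hierarchy, the estimation error compounds across the $O(\log p)$ levels; tracking this recursion shows the accumulated error grows at most polynomially, so that $\poly(p)$ samples suffice to drive it below any target accuracy. Finally, the alignment correction (Procedure~\ref{algo:alignment}) is a purely combinatorial matching of permutation matrices that is exact once the per-triplet estimates are within a constant separation, and hence does not change the sample-complexity order.

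The hard part will be the parameter error-propagation analysis. Unlike structure recovery, where a single uniform distance threshold suffices, the parameter estimates feed back into the moments used at the next level, so I must control how the tensor-decomposition error, the whitening error, and the error in the imputed hidden-variable statistics interact and accumulate up the tree. Showing that this coupled recursion stays polynomial in $p$ — rather than blowing up geometrically with tree depth — is the technical crux, and would rely critically on the full-rank (identifiability) assumption to keep all condition numbers, and hence all error-amplification factors, uniformly bounded.
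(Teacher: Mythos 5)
Your high-level decomposition (uniform concentration of empirical information distances plus deterministic correctness of the combinatorial routines, then tensor-decomposition perturbation bounds for the parameters) matches the paper's strategy, and your union-bound concentration argument for the $O(\log p)$ structure sample complexity is essentially the content of the result the paper cites. However, there is a genuine gap at the heart of your structure argument: you claim that because the MST construction and the path merge in \eqref{eq:unionpath} are ``deterministic given the recovered local sub-trees,'' correct local tests propagate into a globally correct tree. Determinism does not give correctness. Neighboring sub-trees genuinely disagree on the path between their two group leaders (e.g., $v_3-h_1-v_5$ in one sub-tree versus $v_3-h_3-h_2-v_5$ in the other, in the paper's running example), and \eqref{eq:unionpath} resolves this conflict by a very specific rule: retain each leader's immediate hidden neighbor from its own sub-tree and concatenate. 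Nothing in your proposal explains why this particular resolution reproduces the true latent tree, nor why running recursive grouping on an MST neighborhood --- a strict subset of the observed variables --- discovers exactly the hidden nodes that ``belong'' to that leader, rather than an arbitrary contracted Steiner subtree that cannot be consistently patched with its neighbors.

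The paper closes this gap with the surrogate-node machinery (Definition~\ref{def:surrogate}, Properties~\ref{prop:MST1} and~\ref{prop:MST2}): contracting every hidden node to its surrogate (the observed node at minimal information distance) transforms the latent tree into the MST, so LRG on the neighborhood of $v_i$ is exactly reversing the contractions of the hidden nodes whose surrogate is $v_i$; the merge rule is correct precisely because it preserves these surrogate relationships; and global correctness is then established by induction, showing the parallel procedure outputs the same tree as the serial CLRG algorithm of Choi et al., whose consistency is known. You would need this invariant (or an equivalent local-to-global argument) to make your second paragraph rigorous. Your parameter-recovery outline is closer to the paper's --- both rest on tensor-decomposition perturbation guarantees and exactness of the permutation alignment --- though the paper sidesteps the multi-level error-propagation recursion you identify as the crux: all triplet estimations remain local (each triplet is anchored at a surrogate and its neighbors), so it suffices to invoke the perturbation bounds of the cited tensor-decomposition work directly rather than track compounding errors up the tree.
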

The proof sketch is in Appendix~\ref{appen:guarantee}.

\textbf{Computational Complexity: }
We recall some notations here: $d$ is the observable node dimension, $k$ is the hidden node dimension ($k \ll d$), $N$ is the number of samples, $p$ is the number of observable nodes, and $z$ is the number of non-zero elements in each sample.

Let $\Gamma$ denote the maximum size of the groups, over which we operate the local recursive grouping procedure.  Thus, $\Gamma$ affects the degree of parallelism for our method. Recall that it is given by the neighborhoods on MST, i.e., $\Gamma : = \max_{i} \lvert\Nb[i;\text{MST}]\rvert$.   Below, we provide a bound on $\Gamma$.
\begin{lemma}\label{lem:MST}
The maximum size of neighborhoods on MST, denoted as $\Gamma$, satisfies
\begin{equation}
\Gamma \le \Delta^{1+\frac{u_{d}}{l_{d}}\delta},
\end{equation}where $\delta := \max_{i} \{\min_{j} \{\text{path}(v_i,v_j;\mathcal{T}) \} \} $ is the effective depth, $\Delta$ is the maximum degree of $\mathcal{T}$, and the $u_{d}$ and $l_{d}$ are the upper and lower bound of information distances between neighbors on $\mathcal{T}$.
\end{lemma}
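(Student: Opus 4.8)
The goal is to bound $\Gamma := \max_{i} \lvert\Nb[i;\text{MST}]\rvert$, the maximum neighborhood size on the minimum spanning tree, in terms of structural parameters of the latent tree $\mathcal{T}$. The plan is to exploit the additivity of the multivariate information distance (Lemma~\ref{lem:additive}) together with the uniform upper and lower bounds $u_d, l_d$ on the edge distances of $\mathcal{T}$. The intuition is that two observed nodes are MST-neighbors only if they are ``close'' in information distance; since every tree-path of bounded graph-length has bounded distance, the number of observed nodes that can be mutually close to a fixed node is controlled by the branching factor $\Delta$ raised to the relevant path length.

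First I would set up the counting argument. Fix an internal observed node $v_i$ and consider any MST-neighbor $v_j$. A standard property of the MST built from additive tree distances is that $v_j$ can be an MST-neighbor of $v_i$ only if there is no intermediate observed node $v_m$ on the tree-path from $v_i$ to $v_j$ with both $\dist(v_i,v_m)$ and $\dist(v_m,v_j)$ strictly smaller than $\dist(v_i,v_j)$ (otherwise the MST would prefer the shorter edges, by the cycle/cut property). Using additivity, $\dist(v_i,v_j)$ equals the sum of edge distances along the tree-path, each lying in $[l_d, u_d]$. Hence if the tree-path from $v_i$ to $v_j$ has graph-length (number of edges) $r$, then $\dist(v_i,v_j) \ge r\, l_d$.

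Next I would bound the admissible path length $r$. The effective depth $\delta := \max_{i}\{\min_{j}\{\text{path}(v_i,v_j;\mathcal{T})\}\}$ guarantees that every node is within graph-distance $\delta$ of some observed leaf, so any maximal ``MST-valid'' path from $v_i$ cannot be too long before it passes an observed node that would interrupt the neighbor relation. Concretely, an MST-neighbor of $v_i$ must lie within tree-distance at most roughly $\delta \cdot \frac{u_d}{l_d}$ edges: if the path were longer, the uniform bounds would force an intermediate observed node closer in distance to both endpoints, violating the MST-neighbor condition. Then the number of nodes reachable within $R := 1 + \frac{u_d}{l_d}\delta$ edges in a tree of maximum degree $\Delta$ is at most $\Delta^{R}$, giving
\begin{equation}
\Gamma \le \Delta^{1+\frac{u_{d}}{l_{d}}\delta}.
\end{equation}

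The main obstacle I expect is making the ``MST-neighbor implies bounded tree-path'' step fully rigorous. The subtlety is that the MST is built over observed variables only, while the additive distances live on the full tree (with hidden nodes); so I must carefully translate the cut/cycle optimality property of Kruskal/Bor\r{u}vka into a statement about observed-node tree-paths, handling the interplay between the hidden internal structure and the observed-node distances. In particular, I would need to argue that along any sufficiently long tree-path there exists an intermediate \emph{observed} node (not merely a hidden one) whose presence breaks the MST edge — and this is exactly where the effective depth $\delta$ and the ratio $u_d/l_d$ enter, since $\delta$ controls how far one must travel to encounter an observed node, and $u_d/l_d$ converts graph-length bounds into distance bounds and back. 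Once that combinatorial lemma is in place, the degree-counting bound $\Delta^R$ is routine.
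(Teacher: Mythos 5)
Your overall strategy (MST cycle property plus ball counting in $\mathcal{T}$) is genuinely different from the paper's argument, but the step you yourself flag as the main obstacle is not merely hard to make rigorous --- it is false as stated, and this breaks the claimed exponent. An MST-neighbor of $v_i$ need \emph{not} lie within $1+\frac{u_d}{l_d}\delta$ edges of $v_i$. The reason is that an MST edge $(v_i,v_j)$ can traverse the hidden ``halos'' of \emph{both} endpoints: the tree-path runs from $v_i$ through hidden nodes whose closest observed node is $v_i$, crosses one edge, and then runs through hidden nodes whose closest observed node is $v_j$; each of these two stretches can contain up to $\frac{u_d}{l_d}\delta$ edges, so MST-neighbors can be up to $1+2\frac{u_d}{l_d}\delta$ edges apart. (Simplest illustration, ignoring the hidden-degree-three minimality condition, which can be restored with small gadgets: two observed nodes joined by a chain of $2k$ hidden nodes, so $\delta=k$, and the pair is forced to be an MST edge at tree-distance $2k+1=1+2\delta$.) Your own blocking-node justification exhibits the same factor of two: to produce an observed $v_m$ with $\dist(v_i,v_m)<\dist(v_i,v_j)$ \emph{and} $\dist(v_m,v_j)<\dist(v_i,v_j)$, you take $v_m$ within $\delta$ hops of a near-midpoint hidden node, which requires $\dist(v_i,v_j)/2 > u_d\delta$, i.e.\ path length greater than $2\frac{u_d}{l_d}\delta$. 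So the radius your argument can certify is $R=1+2\frac{u_d}{l_d}\delta$, and ball counting then yields only $\Gamma \le \Delta^{1+2\frac{u_d}{l_d}\delta}$, strictly weaker than the lemma.

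The argument the paper relies on avoids this loss by being asymmetric. It uses the surrogate machinery recalled in the appendix: $\text{Sg}(h)$ is the observed node closest to $h$ in information distance, and the MST is exactly the latent tree with every hidden node contracted onto its surrogate (the properties of surrogate-neighborhood preservation and surrogate consistency along paths). Consequently, the MST-neighbors of $v_i$ correspond to (a subset of) the edges of $\mathcal{T}$ leaving $\text{Sg}^{-1}(v_i)\cup\{v_i\}$, so only $v_i$'s own halo has to be counted. Any $h$ with $\text{Sg}(h)=v_i$ satisfies $\dist(h,v_i)\le u_d\delta$, since its nearest observed node is within $\delta$ hops each of weight at most $u_d$; as every edge has weight at least $l_d$, such $h$ lies within $\frac{u_d}{l_d}\delta$ edges of $v_i$, whence $|\text{Sg}^{-1}(v_i)| \le \Delta^{\frac{u_d}{l_d}\delta}$ by tree ball counting, and the number of outgoing edges is at most $\Delta\cdot\Delta^{\frac{u_d}{l_d}\delta} = \Delta^{1+\frac{u_d}{l_d}\delta}$. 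One pays $\Delta^{\frac{u_d}{l_d}\delta}$ for a single halo and one factor $\Delta$ for the crossing edges, and never pays for the neighbor's halo --- that is precisely where the exponent $1+\frac{u_d}{l_d}\delta$, rather than $1+2\frac{u_d}{l_d}\delta$, comes from. To repair your proof you would need to replace ball counting over observed nodes by this edge counting over the inverse surrogate set.
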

Thus, we see that for many natural cases, where the degree and the depth in the latent tree are bounded (e.g. the hidden Markov model), and the parameters are mostly homogeneous (i.e., $u_d/l_d$ is small), the group sizes are bounded, leading to a high degree of parallelism.

We summarize the computational complexity in Table~\ref{tab:computational_complexity}. Details can be found in Appendix~\ref{appen:compuComplex}. 
\begin{table}[htbp]
   \centering
\begin{tabular}{@{} l|l|l@{}}
\hline
Algorithm Steps & Time per worker & Degree of parallelism\\
\hline
\hline
Distance Est. &  $O( N z +  d + k^3 )$ & $O(p^2)$\\ 
MST & $O(\log p)$ & $O(p^2)$\\ 
LRG & $O(\Gamma^3)$ & $O(p/ \Gamma)$\\ 
Tensor Decomp. & $O(\Gamma k^3 + \Gamma d k^2)$ & $O(p/ \Gamma)$\\ 
Merging step& $O(d k^2)$ & $O(p/ \Gamma)$\\ 
\hline
\end{tabular}
\caption[Worst-case computational complexity of our algorithm]{ Worst-case computational complexity of our algorithm.  The total complexity is the product of the time per work and degree of parallelism.}
\label{tab:computational_complexity}
\end{table}

\section{Experiments}
\label{sec:implementation}
\textbf{Setup }Experiments are conducted on a server running the Red Hat Enterprise 6.6 with 64 AMD Opteron processors and 265 GBRAM. The program is written in C++, coupled with the multi-threading capabilities of the OpenMP environment~\cite{OMP} (version 1.8.1).
We use the Eigen toolkit\footnote{\scriptsize{\url{http://eigen.tuxfamily.org/index.php?title=Main_Page}}} where BLAS operations are incorporated. For SVDs
of large  matrices, we use randomized projection methods~\cite{gittens2013revisiting} as described in Appendix~\ref{apdx:svd}.

\textbf{Healthcare data analysis }The goal of our analysis is to discover a disease hierarchy based on their co-occurring relationships in the patient records. In general, longitudinal patient records store the diagnosed diseases on patients over time, where the diseases are encoded with International Classification of Diseases (ICD) code. 

\textbf{Data description } We used two large patient datasets of different sizes with respect to the number of samples, variables and dimensionality.

\emph{(1) MIMIC2:} The MIMIC2 dataset record disease history of 29,862 patients where a overall of 314,647 diagnostic events over time representing 5675 diseases are logged. We consider patients as samples and groups of diseases as variables. We analyze and compare the results by varying the group size (therefore varying $d$ and $p$).

\emph{(2) CMS:} The CMS dataset includes 1.6 million patients, for whom 15.8 million medical encounter events are logged. Across all events, 11,434 distinct diseases (represented by ICD codes) are logged.  We consider patients as samples and groups of diseases as variables. We consider specific diseases within each group as dimensions. We analyze and compare the results by varying the group size (therefore varying $d$ and $p$). 
While the MIMIC2 dataset and CMS dataset both contain logged diagnostic events, the larger volume of data in CMS provides an opportunity for testing the algorithm's scalability. We qualitatively evaluate biological implications on MIMIC2 and quantitatively evaluate algorithm performance and scalability on CMS.

To learn the disease hierarchy from data,  we also leverage some existing domain knowledge about diseases. In particular, we use an existing mapping between 
ICD codes and higher-level  Phenome-wide Association Study (PheWAS) codes~\cite{Denny:pheWAS}. We use (about 200) PheWAS codes as observed nodes and the observed node dimension is set to be binary ($d=2$) or the maximum number of ICD codes within a pheWAS code ($d=31$). 
The goal is to learn the latent nodes and the disease hierarchy and associated parameters from data.

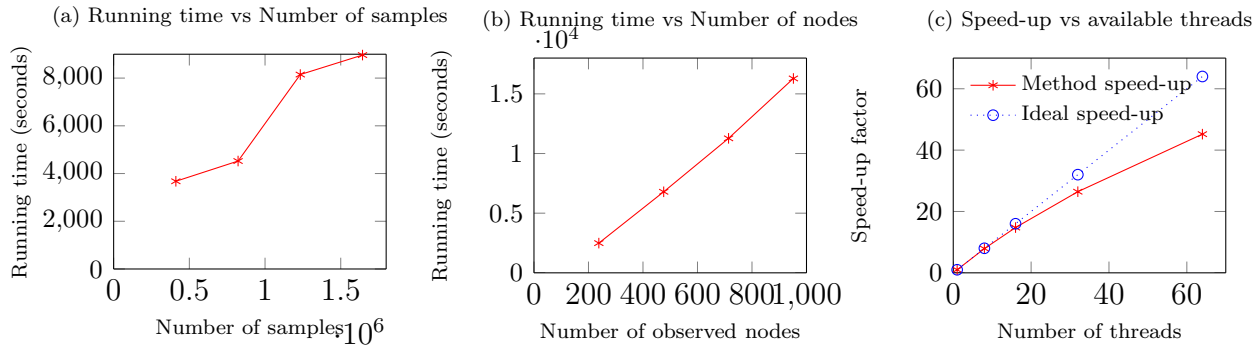
\begin{figure*}
\tikzstyle{every node}=[font=\scriptsize]
  \begin{minipage}[t]{0.3\textwidth}
%
%
\begin{tikzpicture}

\begin{axis}[%
width=1.42638in,
height=1.125in,
at={(0in,0in)},
scale only axis,
separate axis lines,
every outer x axis line/.append style={black},
every x tick label/.append style={font=\color{black}},
xmin=0,
xmax=1800000,
xlabel={Number of samples},
every outer y axis line/.append style={black},
every y tick label/.append style={font=\color{black}},
ymin=0,
ymax=9000,
ylabel={Running time (seconds)},
title={(a) Running time vs Number of samples},
yticklabel style={font=\footnotesize}
]
\addplot [color=red,solid,mark=asterisk,mark options={solid},forget plot]
  table[row sep=crcr]{%
410964	3676\\
821928	4524\\
1232892	8148\\
1643857	8965\\
};
\end{axis}
\end{tikzpicture}%
  \end{minipage}%
    \hspace{0.5cm}
  \begin{minipage}[t]{0.3\textwidth}
%
%
\begin{tikzpicture}

\begin{axis}[%
width=1.42638in,
height=1.125in,
at={(0in,0in)},
scale only axis,
separate axis lines,
every outer x axis line/.append style={black},
every x tick label/.append style={font=\color{black}},
xmin=0,
xmax=1000,
xlabel={Number of observed nodes},
every outer y axis line/.append style={black},
every y tick label/.append style={font=\color{black}},
ymin=0,
ymax=18000,
ylabel={Running time (seconds)},
title={(b) Running time vs Number of nodes},
yticklabel style={font=\footnotesize}
]
\addplot [color=red,solid,mark=asterisk,mark options={solid},forget plot]
  table[row sep=crcr]{%
238	2491\\
476	6792\\
714	11272\\
952	16291\\
};
\end{axis}
\end{tikzpicture}%
  \end{minipage}%
    \hspace{0.5cm}
    \begin{minipage}[t]{0.3\textwidth}
%
%
\begin{tikzpicture}

\begin{axis}[%
width=1.42638in,
height=1.125in,
at={(0in,0in)},
scale only axis,
separate axis lines,
every outer x axis line/.append style={black},
every x tick label/.append style={font=\color{black}},
xmin=0,
xmax=70,
xlabel={Number of threads},
every outer y axis line/.append style={black},
every y tick label/.append style={font=\color{black}},
ymin=0,
ymax=70,
ylabel={Speed-up factor},
title={(c) Speed-up vs available threads},
legend style={at={(0.03,0.97)},anchor=north west,legend cell align=left,align=left,fill=none,draw=none},
yticklabel style={font=\footnotesize}
]
\addplot [color=red,solid,mark=asterisk,mark options={solid}]
  table[row sep=crcr]{%
1	1\\
8	7.9\\
16	14.7565314862185\\
32	26.4474321331067\\
64	45.2037330515936\\
};
\addlegendentry{Method speed-up};

\addplot [color=blue,dotted,mark=o,mark options={solid}]
  table[row sep=crcr]{%
1	1\\
8	8\\
16	16\\
32	32\\
64	64\\
};
\addlegendentry{Ideal speed-up};

\end{axis}
\end{tikzpicture}%
  \end{minipage}%
\caption[Running time]{\textbf{(a)} CMS dataset sub-sampling w.r.t.~varying number
of samples. \textbf{(b)} MIMIC2 dataset sub-sampling w.r.t.~varying number of observed nodes.
Each one of the observed nodes is binary ($d = 2$).
\textbf{(c)} MIMIC2 dataset: Scaling w.r.t. varying computational power, establishing the
scalability of our method even in the large $p$ regime.
The number of observed nodes is $1083$ and each
one of them is binary ($p = 1083, d = 2$).}
\label{Fig: scale}
\end{figure*}

\subsection{Validation}
We conduct both quantitative and qualitative validation of the resulting disease hierarchy. 

\textbf{Quantitative Analysis } We first compare our resulting hierarchy with a ground truth tree based on medical knowledge\footnote{The ground truth tree is the PheWAS hierarchy provided in the clinical study~\cite{Denny:pheWAS}}.
The standard Robinson Foulds (RF) metric~\cite{robinson1981comparison}(between our estimated latent tree and the ground truth tree) is computed to evaluate the structure recovery in Table~\ref{tab:RFmetric}. The smaller the metric is, the better the recovered tree is.  We also compare our results with a baseline: the agglomerative clustering. 
The proposed method are slightly better than the baseline and the advantage is increased with more nodes. 
However, the proposed method provides an efficient probabilistic graphical model that can support general inference which is beyond the baseline.
\begin{table}[h]
\centering
\begin{tabular}{  c | c | c |c}
\hline
Data 		& $p$ 	&    RF(agglo.) 		& RF(proposed) \\
\hline\hline
MIMIC2 	& 	163 & 0.0061	&  0.0061\\
CMS   	&	168 & 0.0060	&  0.0059\\
MIMIC2 	& 	952 & 0.0060	&  0.0011 \\
\hline
\end{tabular}
\caption[Robinson Foulds (RF) metric ]{Robinson Foulds (RF) metric compared with the ``ground-truth'' tree for both MIMIC2 and CMS dataset. Our proposed results are better as we increase the number of nodes. } \label{tab:RFmetric}
\end{table}

\textbf{Qualitative analysis } The qualitative analysis is done by a senior MD-PhD student in our team.

{(a) Case d=2: } Here we report the results from the 2-dimensional case (i.e., observed variable is binary). 
\begin{figure}[htbp]
\includegraphics[width=\textwidth]{\fighomeLT/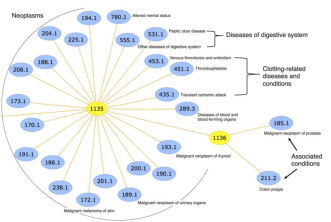}
\caption[Subtree 1 of estimated human disease hierarchy]{An example of two subtrees which represent groups of similar diseases which may commonly co-occur. Nodes colored yellow are latent nodes from learned subtrees.} \label{Fig:tree_mimic2_1}
\end{figure}
In figure \ref{Fig:tree_mimic2_1}, we show a portion of the learned tree using the MIMIC2 healthcare data. The yellow nodes are latent nodes from the learned subtrees while the blue nodes represent observed nodes(diagnosis codes) in the original dataset.  Diagnoses that are similar were generally grouped together. For example, many neoplastic diseases were grouped under the same latent node (node 1135). While some dissimilar diseases were grouped together, there usually exists a known or plausible association of the diseases in the clinical setting. For example, in figure \ref{Fig:tree_mimic2_1}, clotting-related diseases and altered mental status were grouped under the same latent node as several neoplasms. This may reflect the fact that altered mental status and clotting conditions such as thrombophlebitis can occur as complications of neoplastic diseases~\cite{Falanga:clotCancer}. The association of malignant neoplasms of prostate and colon polyps, two common cancers in males, is captured under latent node 1136~\cite{us2014united}. 

{(b) Case d =31: }
We also learn a tree from the MIMIC2 dataset, in which we grouped diseases into 163 pheWAS codes and up to 31 dimensions per variable. 
Figure \ref{Fig:tree_mimic2_2} shows a portion of the learned tree of four subtrees which all reflect similar diseases relating to trauma. A majority of the learned subtrees reflected clinically meaningful concepts, in that related and commonly co-occurring diseases tended to group together in the same subtrees or in nearby subtrees. 
\begin{figure}[]
\includegraphics[width=\textwidth]{\fighomeLT/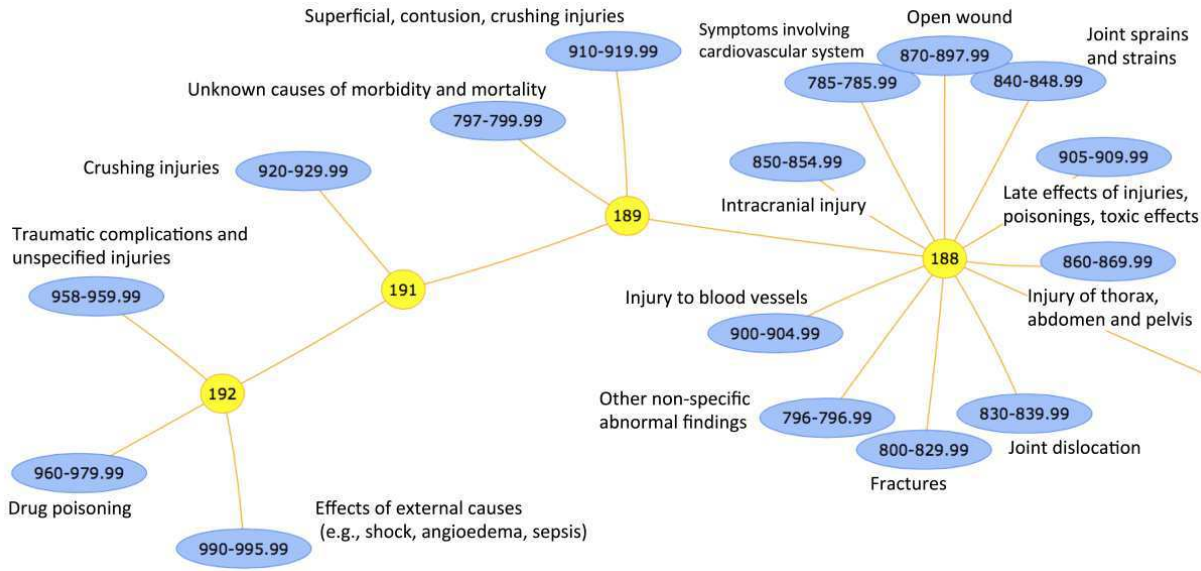}
\caption[Subtree 1 of estimated human disease hierarchy]{ An example of four subtrees which represent groups of similar diseases which may commonly co-occur. Most variables in this subtree are related to trauma.} \label{Fig:tree_mimic2_2}
\end{figure}
We also learn the disease tree from the larger CMS dataset, in which we group diseases into 168 variables and up to 31 dimensions per variable. Similar to the case from the MIMIC2 dataset, a majority of learned subtrees reflected clinically meaningful concepts. 

For both the MIMIC2 and CMS datasets, we performed a qualitative comparison of the resulting trees while varying the hidden dimension $k$ for the algorithm. The resulting trees for different values of $k$ did not exhibit significant differences. This implies that our algorithm is robust with different choices of hidden dimensions. The estimated model parameters are also robust  for different values of $k$ based on the results.

\textbf{Scalability } Our algorithm is scalable w.r.t.~varying characteristics
of the input data. First, it can handle
a large number of patients efficiently,
as shown in Figure~\ref{Fig: scale}(a). 
It has also a linear scaling behavior as we vary the number observed
nodes, as shown in Figure~\ref{Fig: scale}(b).
Furthermore, even in cases where the number of observed
variables is large, our method maintains an almost linear scale-up as we
vary the computational power available, as shown in Figure~\ref{Fig: scale}(c).
As such, by providing the respective resources, our algorithm is practical
under any variation of the input data characteristics.

\section{Conclusion}\label{section:Conclusion}

We present an integrated approach to structure and parameter estimation in latent tree models. 
Our method overcomes challenges such as uncertainty of location and number of hidden variables, problem of local optima with no consistency guarantees, difficulty in scalability with respect to number of variables. The proposed algorithm is ideal for parallel computing and highly scalable. We successfully applied the algorithm to a real application for disease hierarchy discovery using large patient data for 1.6m patients.


\chapter{Discovering Cell Types with Spatial Point Process Mixture Model}\label{chapter:brain}

Cataloging the neuronal cell types that comprise circuitry of individual brain regions is a major goal of modern neuroscience and the BRAIN initiative. 
Single-cell RNA sequencing can now be used to measure the gene expression profiles of individual neurons and to categorize neurons based on their gene expression profiles. While the single-cell techniques are extremely powerful and hold great promise, they are currently still labor intensive, have a high cost per cell, and, most importantly, do not provide information on spatial distribution of cell types in specific regions of the brain. We propose a complementary approach that uses computational methods to infer the cell types and their gene expression profiles through analysis of brain-wide single-cell resolution in situ hybridization (ISH) imagery contained in the Allen Brain Atlas (ABA). We measure the spatial distribution of neurons labeled in the ISH image for each gene and model it as a spatial point process mixture, whose mixture weights are given by the cell types which express that gene. By fitting a point process mixture model jointly to the ISH images, we infer both the spatial point process distribution for each cell type and their gene expression profile. We validate our predictions of cell type-specific gene expression profiles using single cell RNA sequencing data, recently published for the mouse somatosensory cortex. Jointly with the gene expression profiles, cell features such as cell size, orientation, intensity and local density level are inferred per cell type.  This work brings together the techniques used in all previous chapters, such as image processing to extract cells and cell features from brain slices, learning a point process admixture model. 
\section{Introduction}
	\subsection{Motivations and Goals}
	The human brain comprises about one hundred billion neurons and one trillion supporting glial cells. These cells are specialized into a surprising diversity of cell types. The retina alone boasts well over 50 cell types, and it is an active area of research to perform a census of the various neuronal cell types that comprise the central nervous system. Many criteria have been used to categorize neuronal cell types, from neuronal morphology and connectivity to their functional response properties. Neurons can also be categorized based on the proteins they make. Immunohistochemistry has been used with great success for many decades to differentiate excitatory neurons from inhibitory neurons by labeling for known proteins involved in the synthesis and regulation of glutamate and GABA, the primary excitatory and inhibitory neurotransmitters respectively.
	
	More recently, there has been an effort to systematically measure the complete transcriptome of single neurons. Single-cell RNA sequencing (RNA-Seq) is an extremely powerful technique that can quantitatively determine the expression level of every gene that is expressed in individual neurons. This so-called transcriptome or gene expression / transcription profile can then be used
to define cell types by clustering. A recent study produced the most comprehensive census of cell types to date in the mouse somatosensory cortex and hippocampus by performing single-cell RNA-Seq on over 3000 neurons~\cite{zeisel2015cell}. While this study is quite exciting, tyring to replicate it for all brain regions might well require the equivalent of a thousand such experiments.  Thus, it is likely that the unprecedented insights that RNA-Seq can provide will be slow to arrive. More importantly, single cell sequencing methods are not currently able to capture the precise three-dimensional location of the individual neurons.

	Here we propose a complementary approach that uses computational strategies to identify cell types and their spatial distribution by re-analysing data published by the Allen Institute for Brain Research.  The Allen Brain Atlas (ABA) contains cellular resolution brain-wide in-situ hybridization (ISH) images for 20,000 genes\footnote{ Although the Atlas contains ISH data for approximately 20,000 distinct mouse genes, we focus on the top 1743 reliable genes whose sagittal and coronal experiments are highly correlated.}. ISH is a histological technique that labels the mRNA in all cells expressing the corresponding gene in a manner roughly proportion to the gene expression level. An example of an ISH image can be seen in figure~\ref{fig:overview}(a).

	The ABA contains genome-wide and brain-wide ISH images of the adult mouse brain. These images were generated by slicing the brain into a series of \SI{25}{\micro\metre} thin sections and performing ISH. Image series of ISH performed for different genes come from different mouse brains, since ISH can only be performed for one gene at a time. The ISH image series for different genes were then computational aligned into a common reference brain coordinate system. Such data have been productively used to infer the average transcriptomes corresponding to different brain regions.

	It is commonly thought that the ABA cannot be used to infer the transcriptomes of individual cells in a given brain region since mouse brains cannot be aligned to the precision of a single cell. This is because there is individual variation in the precise number and location of neurons from brain to brain. However, we expect that the average number and spatial distribution of neurons from each cell type to be conserved from brain to brain, for a given brain area. More concretely, we might expect that parvalbumin-expressing (PV) inhibitory interneurons in layer 2/3 of the mouse somatosensory cortex comprise approximately 7\% of all neurons and have a conserved spatial and size distribution from brain to brain. We use this fact to derive a method for simultaneously inferring the cell types in a given brain region and their gene expression profiles from the ABA.
	
	We propose to model the spatial distribution of neurons in a brain as being generated by sampling from an unknown but consistent brain-region and cell-type dependent spatial point process distribution. And since each gene might only be expressed in a subset of cell types, an ISH image for a single gene can be thought of as a mixture of spatial point processes where the mixture weights represent the individual cell types expressing that gene. We infer cell types, their gene expression profiles and their spatial distribution by unmixing the spatial point processes corresponding to the ISH images for 1743 genes. This is in notable contrast to the information provided by single-cell RNA sequencing which can only measure the gene expression profile of individual cells to high accuracy but where, due to the destructive measurement process, all information about the spatial position and distribution of cell types is lost.

\subsection{Previous Work}
Allen Brain Atlas (ABA) \cite{lein2007genome} is a landmark study which mapped the gene expression of about 20,000 genes across the entire mouse brain. The ABA dataset consists of cellular high-resolution 2d imagery of \emph{in-situ} hybridized series of brain sections, digitally aligned to a common reference atlas. However, since the \emph{in-situ} images for each gene come from different mouse brains and since there is significant variability in the individual locations of labeled cells, it is not possible to register brain-wide gene expression at a resolution higher than about $250\mu m$. Therefore, the cellular resolution detail was down-sampled to construct a coarser 3d representation of the average gene expression level in $250\mu m\times 250\mu m\times 250\mu m$ voxels.

The coarse-resolution averaged gene expression representation has been widely used and analyzed to understand differences in gene expression at the level of brain region. Hawrylycz et al \cite{hawrylycz2011multi} analyzed the correlational structure of gene expression at this scale, across the entire mouse brain. However, due to the poor resolution of the average gene expression representation, it has proven challenging to use the ABA to discover the microstructure of gene expression within a brain region.  To address this issue from a complementary perspective, Grange et al~\cite{grange2014cell} used the gene expression profiles of 64 known cell-types, combined with linear unmixing to determine the spatial distribution of these known cell-types. However, such an approach can be confounded by the presence of cell-types whose expression profiles have yet to be characterized, and limited by the resolution of the averaged gene expression representation.

In contrast to previous approaches, we aim to solve the difficult problem of automatically discovering the gene expression profiles of cell-types within a brain region by analyzing the original cellular resolution ISH imagery. 
We propose to use the spatial distributions of labeled cells, and their shapes and sizes, which are a far richer representation than simply the average expression level in $250\mu m\times 250\mu m\times 250\mu m$ voxels. This spatial point process is then un-mixed to determine the gene expression profile of cell types.

Most previous work on unmixing point process mixtures adopted parametric generative models where the point process is limited to some distribution family such as Poisson or Gaussian~\cite{ji2009spatial,kottas2007bayesian}. However, since we are not interested in building a generative model of a point process, but rather care more about inferring the mixing proportions (gene expression profile), we take a simpler parameter-free approach. This approach models only the statistics of the point process, but is not a generative model, and so cannot be use to model individual points/cells.

\input{figure_flow}

\section[Modeling Cell-types Using Spatial Point Process Features]{Modeling the Spatial Distribution of Cell-types Using Spatial Point Process Features}
Most analyses of the ABA \emph{in situ} hybridization dataset have utilized a simple measure of average expression level in relatively large $250\mu m \times 250\mu m \times 250\mu m$ voxels of brain tissue. Due to the large volume over which the expression level is averaged, such a representation cannot distinguish between  large numbers of cells expressing small amounts of RNA vs. small numbers of cells expressing large amounts of RNA. All information about the spatial organization of labeled cells, their shapes, sizes and spatial density are lost and summarized by a single scalar number. Here, we describe a more sophisticated representation of the labeled cells in an ISH image based on marked spatial point processes.
\subsection{The Marked Spatial Point Process Representation of ISH Images}
Our approach requires processing the high-resolution ISH images to detect individual labeled cells and their visual characteristics. We developed a cell detection algorithm described in the Supplementary section. Our algorithm additionally also estimates the expression level of each detected cell, its shape, size and orientation. Figure~\ref{fig:overview}(a) and Figure~\ref{fig:overview}(b) illustrate the results of our cell detection algorithm.

Since cell-types differ not only in terms of gene expression pattern, but also display a diversity of shapes, sizes and spatial densities, we sought to characterize these properties. We measured: (1) {\bf cell size} $s=[r_1,r_2]$: the radius in two principal directions of an ellipse fit to each cell; (2) {\bf cell orientation} $o$: the orientation of the first principle axis of the ellipse; (3) {\bf gene intensity level} $p$: intensity of labeling of a cell relative to the image background; (4) {\bf spatial distribution} $c$: the number of cells within a local area centered around the cell, which can be regarded as a measure of the local cell density.

The collection of detected cells within an atlas-defined brain region, along with their features, constitutes a marked spatial point process. This point process is considered ``marked'', because each point is characterized by the shape, size, expression level and local density features, in addition to just their location in space.
\subsection[Representing Spatial Point Processes Using Joint Feature Histograms]{A Model-free Approach to Representing Spatial Point Processes Using Joint Feature Histograms}
The statistical modeling of repulsive spatial point processes such as those that arise in biology is non-trivial, and many generative models such as determinantal point processes~\cite{kulesza2012determinantal}and Matern point processes have high computational complexity. But since we are not interested in directly modeling the individual labeled cells, but instead in modeling only their aggregate spatial statistics, and in inferring their gene expression profiles, we can take a simpler approach.

We use a \emph{joint histogram} simple statistics of the collection of detected cells to characterize the underlying point process from which they are drawn. This is an empirical moment approach which side-steps the need to carefully define a generative point process distribution.

As we describe in the next section, we propose to model the point process measured from the ISH image for each gene as a mixture of point processes belonging to individual cell-types. For this, we use a linear mixing model, the Latent Dirichlet Allocation model. The use of this model is greatly simplified if we carefully choose our feature representation such that the linear mixture of point processes results in a linear mixture of histogram statistics. This is clearly the case for the features we have chosen. For instance, if we sample equally from two point process distributions $P_1$ and $P_2$ with average densities of $d_1$ and $d_2$, the addition of these two point processes $P = P_1 + P_2$ results in the addition of the two densities $d = d_1 + d_2$. This is not the case for second order features, such as the distances to the nearest neighbors, which would have a more nonlinear relationship.

In figure~\ref{fig:overview}(c), we display marginal histograms corresponding to the joint histogram for two genes, Pvalb and Rasgrf2, which are well-known markers for a specific class of inhibitory and excitatory cortical neuronal cell-types respectively. 

\section{Un-mixing Spatial Point Processes to Discover Cell-types}
\subsection{Generative Model: A Variation of Latent Dirichlet Allocation }
The spatial point process histogram representation of the ABA ISH dataset results, for each brain region, is an $N_F\times N_G$ matrix $[x^m_n]$, where $N_F$ is the total number of histogram bins (henceforward called the number of histogram features) \footnote{Note that there are two types of \emph{features} -- the features characterizing each detected cell, and the features characterizing the collection of detected cells that constitute a single sample from a spatial point process}, $N_G$ is the number of genes, and $x^m_n$ is the number of cells expressing gene $n$ in histogram bin $m$.

We model the gene-spatial histogram matrix $[x^m_n]$ by assuming it is generated by a Variation of Latent Dirichlet Allocation (vLDA)~\cite{blei2003latent} model of cell types. This matrix factorization based latent variable model assumes that the ISH histograms are generated from a small number of cell-types, $K$, and each cell-type $i$ is associated with a type-dependent spatial point process histogram $h_i$ and a gene expression profile $\beta_i$.

Our generative model for each histogram bin $m$ (characterizing a particular bin in the size/ orientation/ gene profile/ spatial distribution) is as follows: Let $L^m=\sum_n^{N_G} x^m_n$ be the detected number of cells in the joint histogram bin $m$. For each cell $l$ in this bin, its cell-type $t$ is sampled from the multinomial distribution $h^m$. And given the cell-type $t$ of cell $l$, the genes $n$ expressed by this cell are sampled from a multinomial distribution given by the type-dependent gene expression profile/distribution $\beta^t$. For a given gene $n$ and histogram bin $m$, this generative process determines the number of cells that would be detected $x^m_n$.

We further place a Dirichlet prior over $h^m \sim \Dir(\alpha)$, with the concentration parameter $\alpha$ which determines the prior probability over the number of cell-types present in a given histogram bin $m$. This prior represents our prior knowledge of how many cell-types express each gene, and also how well our feature representation separates cells of different types into different histogram bins. In principle, we could generalize this to be a gene-specific prior, if we had such information available. We could also use $\alpha$ to incorporate information about our prior knowledge over the distribution of cells from each cell-type, for instance that excitatory neurons greatly outnumber inhibitory neurons in a roughly $5:1$ ratio. 

We now describe how we estimate the model parameters -- the cell-type specific multinomial gene expression profile $\beta$ and the cell-type specific spatial point process histogram $h$ from the gene-specific spatial point process histograms measured from the ISH images.
\subsection{Estimating the Cell-type Dependent Gene Expression Profile $\beta$}
After testing several estimation methods for the parameters of our model, we found that non-negative matrix factorization (NMF) performed well in estimating the cell-type specific gene expression profiles $\beta$, see Figure~\ref{fig:synthetic}. We solve the following optimization problem:
\begin{equation}
\min_{\beta,h} \quad \sum^{N_F}_m \sum^{N_G}_n (x^m_n - \sum^K_t h^m_t \beta^t_n L^m)^2, \quad
s.t. \quad \beta^t_n\ge 0 , \; \sum^{N_G}_n \beta^t_n=1, \; h^m_t \ge 0, \; \sum_t^K h^m_t=1
\end{equation}
Here, the non-negativity and sum-to-one constraints on $h^m_t$ and $\beta^t_n$ ensure that $h$ and $\beta$ result in properly normalized multinomial distributions. While this estimation procedure results in joint estimates for $h$ and $\beta$, it does not enforce the Dirichlet prior over $h$. So we refine our NMF-derived estimates for $h$ using variational inference \cite{blei2003latent}.
\vspace{-0.5em}
\subsection{Estimating the Cell-type Dependent Spatial Point Process Histogram $h$}
\vspace{-0.5em}
We use a standard maximum likelihood estimation procedure for $h$ \cite{blei2003latent}. Iteratively, we refine the  inference of the cell type membership $h^m \in \Delta_{k}$ under each joint histogram feature $m$. We update $h^{m}_i$ until convergence~\cite{smola2010architecture}.
\begin{equation}\label{eq:inferrence}
h^{m}_i \leftarrow \frac{1}{L^{m} + \sum_t^K\alpha_t} \sum\limits_{n=1}^{N_G} x^m_n \frac{h^{m}_i \beta^i_n}{\sum\limits_{l=1}^{K} h^{m}_l \beta^l_n} +\alpha_i , \ \forall i\in[K], m\in[N_F]
\end{equation}
Recall that the Dirichlet prior $\alpha$ encodes the number of cell-types that we expect on average to express each gene. We set $\alpha$ to be a symmetric Dirichlet with $\alpha_1=\alpha_2=\ldots=\alpha_K$, and $\sum_t \alpha_t=0.01$ for all cell-types $t$. In practice, we observe that our estimates of $h$ are fairly insensitive to the specific choice for $\alpha$ as long as $\sum_t \alpha_t$ is small enough. The smaller $\alpha$ is, the fewer cell-types expressing a given gene we expect to observe in a single histogram bin.

\section{Results and Evaluation}
\subsection{Implementation Details}
We tested our proposed cell-type discovery algorithm using the high-resolution \emph{in situ} hybridization image series for $1743$ of the most reliably imaged and annotated genes in the ABA. Individual cells were detected in the cellular resolution ISH images using custom algorithms (detailed in Supplementary Information). 
For each detected cell, we fit ellipses and extract several local features: (a) size and shape represented as the diameters along the principle axes of the ellipse, (b) orientation of the first principle axis, (c) gene intensity level as measured by the intensity of labeling of the cell body, and (d) the number of cells detected with-in a 100 $\mu m$ radius around the cell, which is a measure of the local cell density.
We aligned the ISH images to the ABA reference atlas and, for this paper, focused our attention on cells in the somatosensory cortex, since independent RNA-Seq data exist for this region the can be used to evaluate our approach. We computed joint histograms for the collection of cells found with-in the somatosensory cortex, resulting in a spatial point process feature vector of $N_F = 10010$ histogram bins per gene.

\paragraph{Synthetic experiment: } The vLDA model we proposed is then fit to $N_G \times N_F$ gene point process histogram matrix to estimate the cell-type gene expression profile matrix $\beta$ using the non-negative matrix factorization  (NNMF) algorithm.  The reason why we choose NNMF over Variational Inference (which is a popular approach for LDA) for $\beta$ estimation is that NNMF produces more accurate $\beta$ estimation in simulated data, illustrated in Fig~\ref{fig:synthetic}. In the synthetic experiment, we simulate point process data ( with some predefined golden standard $\beta$) and use the data to  estimate $\widehat{\beta}$. The errors were computed after pairing the estimated columns of $\beta$ with a closest golden standard $\beta$ column via hypothesis testing. Note that the columns of $\beta$ are normalized to 1, so the errors are bounded. 
\begin{figure}[ht]
\subfloat[a][Validate NNMF Method]
{\begin{minipage}{0.49\textwidth}{
\begin{center}
\psfrag{Error Per Type}[Bl]{\tiny{Error Per Type}}
\psfrag{Number of Cell Types}[Bl]{\tiny{Number of Cell Types}}
\psfrag{Permuted beta error}[Bl]{\tiny{Permute $\beta$}}
\psfrag{Variational Inference beta estimation}[Bl]{\tiny{VI estimated $\beta$}}
\psfrag{NNMF beta estimation}[Bl]{\tiny{NNMF estimated $\beta$}}
\psfrag{NNMF beta robust estimation trueK 10}[Bl]{\tiny{NNMF robust}}
\includegraphics[width=\textwidth,height=1.4in]{\fighomeBrain/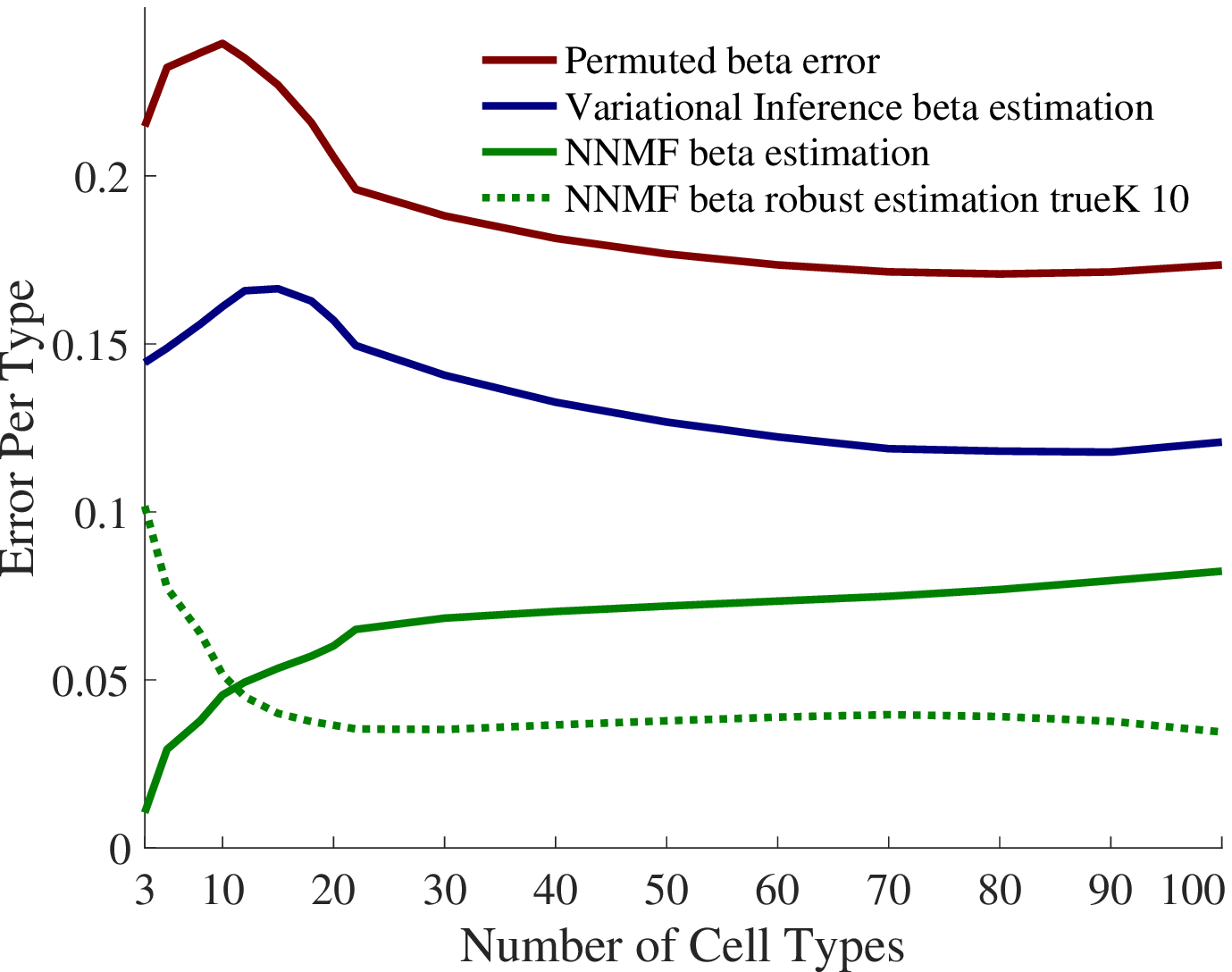}\label{fig:synthetic}
\end{center}
}
\end{minipage}}
\hfil
\subfloat[b][Validate Point Process Data]
{\begin{minipage}{0.49\textwidth}{
\begin{center}
\psfrag{Spatial Point Process Mixture ISH Data}[Bl]{\tiny{Spatial point process (ours)}}
\psfrag{Voxel Data}[Bl]{\tiny{Average expression level (baseline)}}
\psfrag{Spatial Point Process Mixture ISH Data (Permute)}[Bl]{\tiny{Spatial point process (ours, permuted)}}
\psfrag{Voxel Data (Permute)}[Bl]{\tiny{Average expression level (baseline, permuted)}}
\psfrag{Perplexity Score}[Bl]{\tiny{Perplexity Score}}
\psfrag{Number of Hidden Cell Types}[Bl]{\tiny{Number of Cell Types}}
\includegraphics[width=\textwidth]{\fighomeBrain/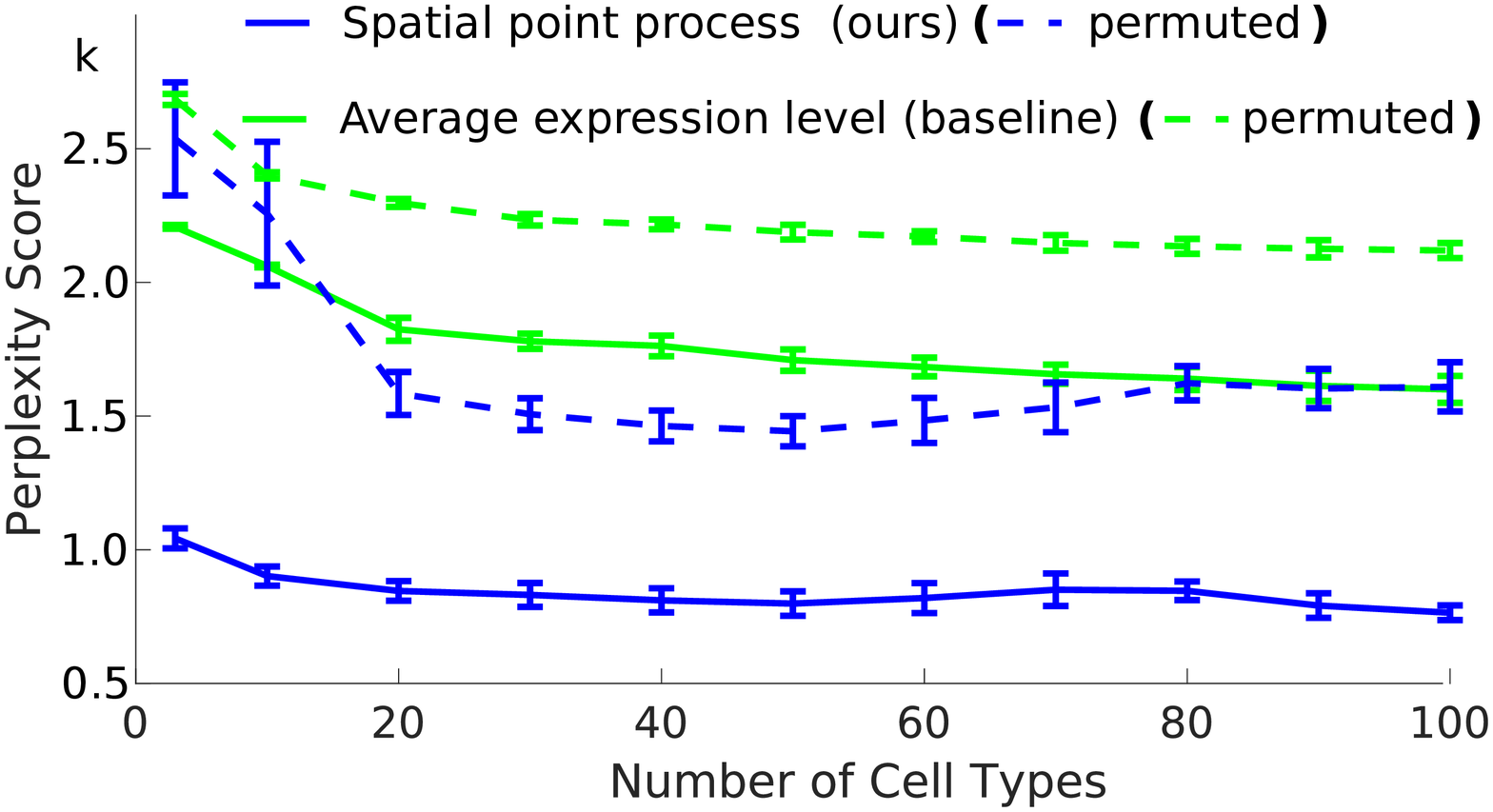}\label{fig:science}
\end{center}}
\end{minipage}}
\vspace{-0.5em}
\caption[Synthetic results and comparison with average gene expression level baseline.]{(a) {Synthetic Experiment : comparison of  Non-negative Matrix Factorization (NNMF) with Variational Inference (VI) on simulated point process cell data using known gene expression profile $\beta$.  An additional robustness test of NNMF is done to see how good the algorithm is when a wrong number of cell types $K$ is input. A permutation test (shuffling the gene expression levels between cell) is done to access statistical significance. Comparing with permute test shows that our cell-types are significantly different from chance. Error per type is computed by pairing the columns of estimated $\widehat{\beta}$ with the columns of the ground-truth $\beta$.  }
(b) {Comparison of gene expression profiles recovered for cell-types in the somatosensory cortex by fitting an LDA model using spatial point process features (ours) vs the standard average gene expression level feature (baseline). Our features provide a significantly better match, with lower perplexity, to ground truth single-cell RNA sequencing derived transcriptomes.  A permutation test is done to access statistical significance. Perplexity is computed by matching to surrogate single-cell RNA transcriptomes by shuffling the gene expression levels between cells. Comparing with permute test shows that our cell-types are significantly different from chance.  }
}
\end{figure}

\subsection{Evaluating Cell-type Gene Expression Profile Predictions}
A recent study performed single-cell RNA sequencing on $1691$ neurons isolated from mouse somatosensory cortex. We use this dataset to evaluate the quality of the cell-types we discover.

The single cell RNA-seq data, $G :=[g^1|g^2|\ldots|g^{N_C}]\in \mathbb{R}^{N_G\times N_C}$,  contains the gene expression profiles for $N_C=1691$ cells. We infer the cell types $h^i$ for these cells using equation~\eqref{eq:inferrence}, and then compute the likelihood $L^i$ of observing each for each cell under our estimated cell-type dependent gene expression profile matrix $\beta$ using equation~\eqref{eq:loglikelihood}. We can then evaluate the perplexity, a commonly used measure of goodness of fit under the vLDA model, of single cell RNA-seq data on the model we learned from our spatial point process data.

The perplexity score is a standard metric, which is defined as the geometric mean per-cell likelihood. It is a monotonically decreasing function of the log-likelihood $\mathcal{L}(G)$ of test data $G$. 
\begin{equation}
\text{perplexity}(G)= \exp(-\frac{\sum_{i=1}^{N_C} \log p({g }^i)}{\sum_{i=1}^{N_C}L^i})
\end{equation}
where the likelihood is evaluated as
\begin{equation}\label{eq:loglikelihood}
p(g^m|h^{m}, \alpha, \beta) =\frac{\Gamma\left(\sum_i\alpha_i\right)}{\prod_i \Gamma\left(\alpha_i\right)} \prod_{i=1}^{k} {(h^{m}_i)}^{\alpha_i-1} \prod\limits_{j=1}^{L^m} \left(\sum_{i=1}^{k}\sum\limits_{n=1}^{N_G} \delta_{g^{m}_j,e^n}{h^{m}_i \beta^i_n}\right).
\end{equation}
where $\delta_{i,j}$ is the Kronecker delta, $\delta_{i,j}=1$ when $i=j$ and $0$ otherwise. $e^n$ is the $n\tha$ basis vector. 

\subsection[Comparison to Standard Average Gene Expression Features]{Comparison to Standard Average Gene Expression Features Baseline and a Permutation Test for Significance}
 \begin{figure}[!htb]
\begin{center}
\begin{minipage}{0.45\textwidth}
\bc 
\psfrag{Gad1}[rC]{\scriptsize{Gad1}}
\psfrag{Sp8}[rC]{\scriptsize{Sp8}}
\psfrag{Tox3}[rC]{\scriptsize{Tox3}}
\psfrag{Nkx2-1}[rC]{\scriptsize{Nkx2-1}}
\psfrag{Lhx6}[rC]{\scriptsize{Lhx6}}
\psfrag{Pax6}[rC]{\scriptsize{Pax6}}
\psfrag{Dlx5}[rC]{\scriptsize{Dlx5}}
\psfrag{Arx}[rC]{\scriptsize{Arx}}
\psfrag{Dlx2}[rC]{\scriptsize{Dlx2}}
\psfrag{Dlx1}[rC]{\scriptsize{Dlx1}}
\psfrag{Elavl2}[rC]{\scriptsize{Elavl2}}
\psfrag{Sp9}[rC]{\scriptsize{Sp9}}
\psfrag{Tbr1}[rC]{\scriptsize{Tbr1}}
\psfrag{Foxp2}[rC]{\scriptsize{Foxp2}}
\psfrag{Tshz2}[rC]{\scriptsize{Tshz2}}
\psfrag{Stat4}[rC]{\scriptsize{Stat4}}
\psfrag{Ascl1}[rC]{\scriptsize{Ascl1}}
\psfrag{Cux2}[rC]{\scriptsize{Cux2}}
\psfrag{Neurod1}[rC]{\scriptsize{Neurod1}}
\psfrag{Mef2c}[rC]{\scriptsize{Mef2c}}
\psfrag{Oligodendrocytes}[cc]{\scriptsize{Oligodendrocytes}}
\psfrag{Interneurons}[cc]{\scriptsize{Interneurons}}
\psfrag{S1Pyramial}[cc]{\scriptsize{S1Pyramidal}}
\psfrag{Astrocytes}[cc]{\scriptsize{Astrocytes}}
\psfrag{CA1Pyramidal}[cc]{\scriptsize{CA1Pyramidal}}
\psfrag{Ependymal}[cc]{\scriptsize{Ependymal}}
\psfrag{Microglia}[cc]{\scriptsize{Microglia}}
\psfrag{Endothelial}[cc]{\scriptsize{Endothelial}}
\psfrag{Mural}[cc]{\scriptsize{Mural}}
\includegraphics[width=\textwidth]{\fighomeBrain/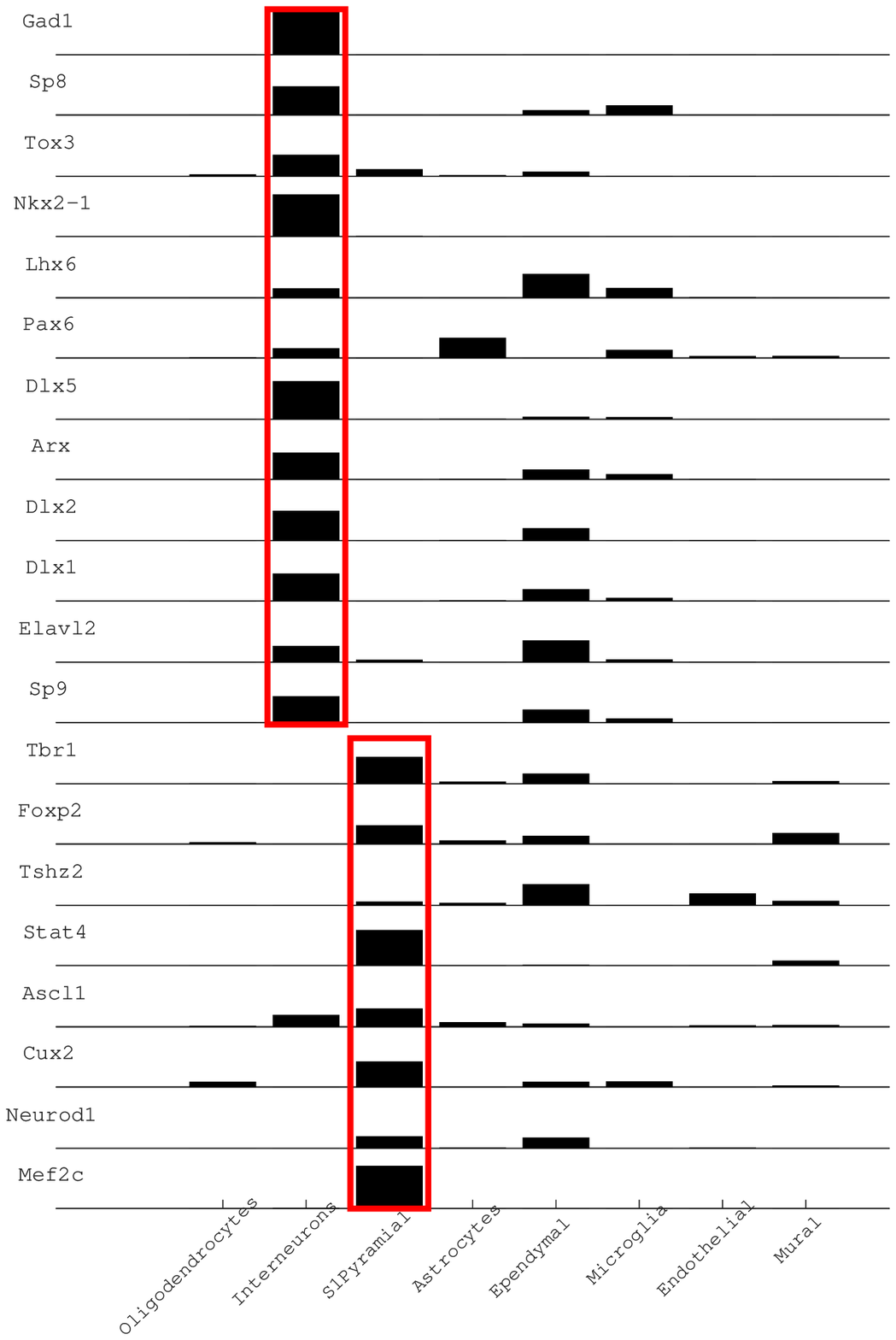}
\ec\end{minipage}
\hfil
\begin{minipage}{0.45\textwidth}
\bc
\psfrag{Spink8}[rC]{\scriptsize{Spink8}}
\psfrag{Lhx9}[rC]{\scriptsize{Lhx9}}
\psfrag{Lmo1}[rC]{\scriptsize{Lmo1}}
\psfrag{Ptrf}[rC]{\scriptsize{Ptrf}}
\psfrag{Cldn5}[rC]{\scriptsize{Cldn5}}
\psfrag{Maf}[rC]{\scriptsize{Maf}}
\psfrag{Hcls1}[rC]{\scriptsize{Hcls1}}
\psfrag{Spi1}[rC]{\scriptsize{Spi1}}
\psfrag{Myb}[rC]{\scriptsize{Myb}}
\psfrag{Fhl1}[rC]{\scriptsize{Fhl1}}
\psfrag{Aldoc}[rC]{\scriptsize{Aldoc}}
\psfrag{Sall3}[rC]{\scriptsize{Sall3}}
\psfrag{Sox21}[rC]{\scriptsize{Sox21}}
\psfrag{Mbp}[rC]{\scriptsize{Mbp}}
\psfrag{Etv6}[rC]{\scriptsize{Etv6}}
\psfrag{Sox10}[rC]{\scriptsize{Sox10}}
\psfrag{St18}[rC]{\scriptsize{St18}}
\psfrag{Olig2}[rC]{\scriptsize{Olig2}}
\psfrag{Oligodendrocytes}[cc]{\scriptsize{Oligodendrocytes}}
\psfrag{Interneurons}[cc]{\scriptsize{Interneurons}}
\psfrag{S1Pyramial}[cc]{\scriptsize{S1 Paramidal}}
\psfrag{Astrocytes}[cc]{\scriptsize{Astrocytes}}
\psfrag{CA1Pyramidal}[cc]{\scriptsize{CA1 Pyramidal}}
\psfrag{Ependymal}[cc]{\scriptsize{Ependymal}}
\psfrag{Microglia}[cc]{\scriptsize{Microglia}}
\psfrag{Endothelial}[cc]{\scriptsize{Endothelial}}
\psfrag{Mural}[cc]{\scriptsize{Mural}}
\includegraphics[width=\textwidth]{\fighomeBrain/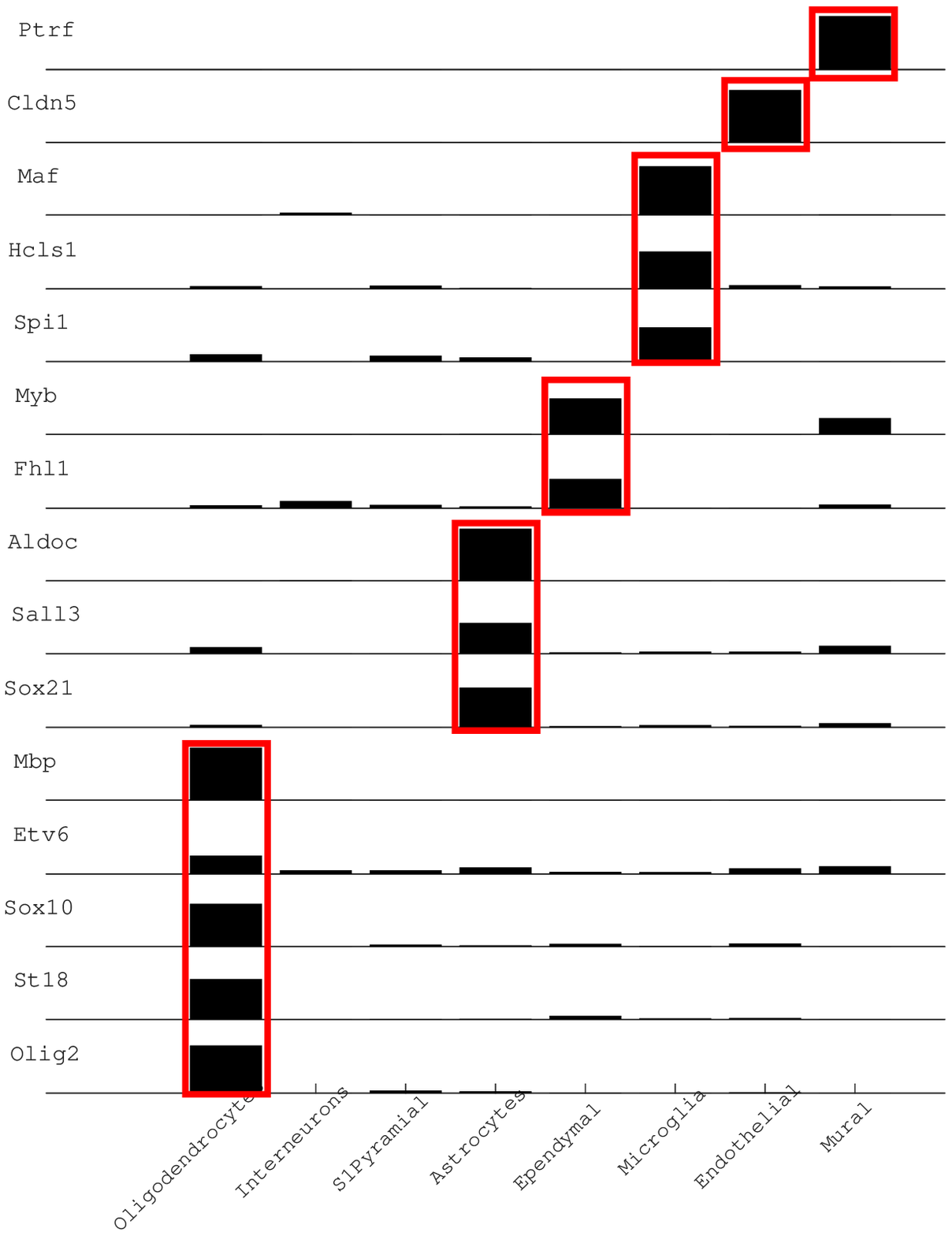}
\ec\end{minipage}\\
\end{center}
\vspace{0.5em}
\caption[Estimated $\beta$ on marker genes for 8 cell types]{Estimated memberships $\beta$ on marker genes for 8 cell types. These marker genes are used to label the columns of the membership matrix.}
%
%
\label{fig:beta}
\end{figure}

Here we demonstrate the superiority of our method and its statistical significance in two ways. First we compared the perplexity of the single-cell RNA seq dataset G under our model (figure~\ref{fig:science}, solid blue) against the perplexity of a surrogate dataset with the same marginal statistics, but whose gene-cell correlations were destroyed (figure~\ref{fig:science}, dashed blue). We generated this surrogate dataset by randomly permuting the gene expression levels for each gene across cells. This permuted dataset had a significantly higher (worse) perplexity than the true single-cell dataset. This demonstrates that our model trained to un-mix the ISH-derived spatial point processes discovered cell-types whose gene expression profiles are significantly better match to single-cells than by chance.

We also compared the predictions of cell-type gene expression profiles derived by un-mixing our spatial point process features against gene expression profiles derived by un-mixing the more standard $250\mu m \times 250\mu m \times 250\mu m$ averaged gene expression level features. We see a very large improvement in perplexity by switching from the standard simple averaging of gene expression, to extracting spatial point process features (figure~\ref{fig:science}). The single-cell RNA seq dataset analysis from figure~\ref{fig:science} shows that the perplexity of our recovered cell-types rapidly flattens after we recover approximately 10 clusters ($K=10$).

\begin{figure}[!htb]
\begin{center}
\subfloat[a][Cell diameter in principal axes]
{\begin{minipage}{0.45\textwidth}
\bc 
\psfrag{[2,2]}[cc]{\tiny{[2,2]}}
\psfrag{[2,3]}[cc]{\tiny{[2,3]}}
\psfrag{[2,4]}[cc]{\tiny{[2,4]}}
\psfrag{[3,3]}[cc]{\tiny{[3,3]}}
\psfrag{[3,4]}[cc]{\tiny{[3,4]}}
\psfrag{[3,5]}[cc]{\tiny{[3,5]}}
\psfrag{[3,6]}[cc]{\tiny{[3,6]}}
\psfrag{[3,7]}[cc]{\tiny{[3,7]}}
\psfrag{[4,4]}[cc]{\tiny{[4,4]}}
\psfrag{[4,5]}[cc]{\tiny{[4,5]}}
\psfrag{[4,6]}[cc]{\tiny{[4,6]}}
\psfrag{[4,7]}[cc]{\tiny{[4,7]}}
\psfrag{[5,7]}[cc]{\tiny{[5,7]}}
\psfrag{[6,6]}[cc]{\tiny{[6,6]}}
\psfrag{[6,7]}[cc]{\tiny{[6,7]}}
\psfrag{[7,7]}[cc]{\tiny{[7,7]}}
\psfrag{[7,8]}[cc]{\tiny{[7,8]}}
\psfrag{[7,9]}[cc]{\tiny{[7,9]}}
\psfrag{[8,8]}[cc]{\tiny{[8,8]}}
\psfrag{[8,9]}[cc]{\tiny{[8,9]}}
\psfrag{[9,9]}[cc]{\tiny{[9,9]}}
\psfrag{Oligodendrocytes}[lB]{\tiny{Oligodendrocytes}}
\psfrag{Interneurons}[lB]{\tiny{Interneurons}}
\psfrag{S1Pyramdial}[lB]{\tiny{S1Pyramdial}}
\psfrag{Astrocytes}[lB]{\tiny{Astrocytes}}
\psfrag{CA1Pyramidal}[lB]{\tiny{CA1Pyramidal}}
\psfrag{Ependymal}[lB]{\tiny{Ependymal}}
\psfrag{Microglia}[lB]{\tiny{Microglia}}
\psfrag{Endothelial}[lB]{\tiny{Endothelial}}
\psfrag{Mural}[lB]{\tiny{Mural}}
\psfrag{Cell Fraction}[cB]{\tiny{Cell Fraction}}
\psfrag{[Diameter 1, Diamter 2]}[cc]{}
\psfrag{Diameter}[cB]{\tiny{}}
\psfrag{2}[cc]{\tiny{2}}
\psfrag{3}[cc]{\tiny{3}}
\psfrag{4}[cc]{\tiny{4}}
\psfrag{5}[cc]{\tiny{5}}
\psfrag{6}[cc]{\tiny{6}}
\psfrag{7}[cc]{\tiny{7}}
\psfrag{8}[cc]{\tiny{8}}
\psfrag{9}[cc]{\tiny{9}}
\psfrag{First principal direction}[lb]{\tiny{Axis 1}}
\psfrag{Second principal direction}[lb]{\tiny{Axis 2}}
\includegraphics[width=\textwidth]
{\fighomeBrain/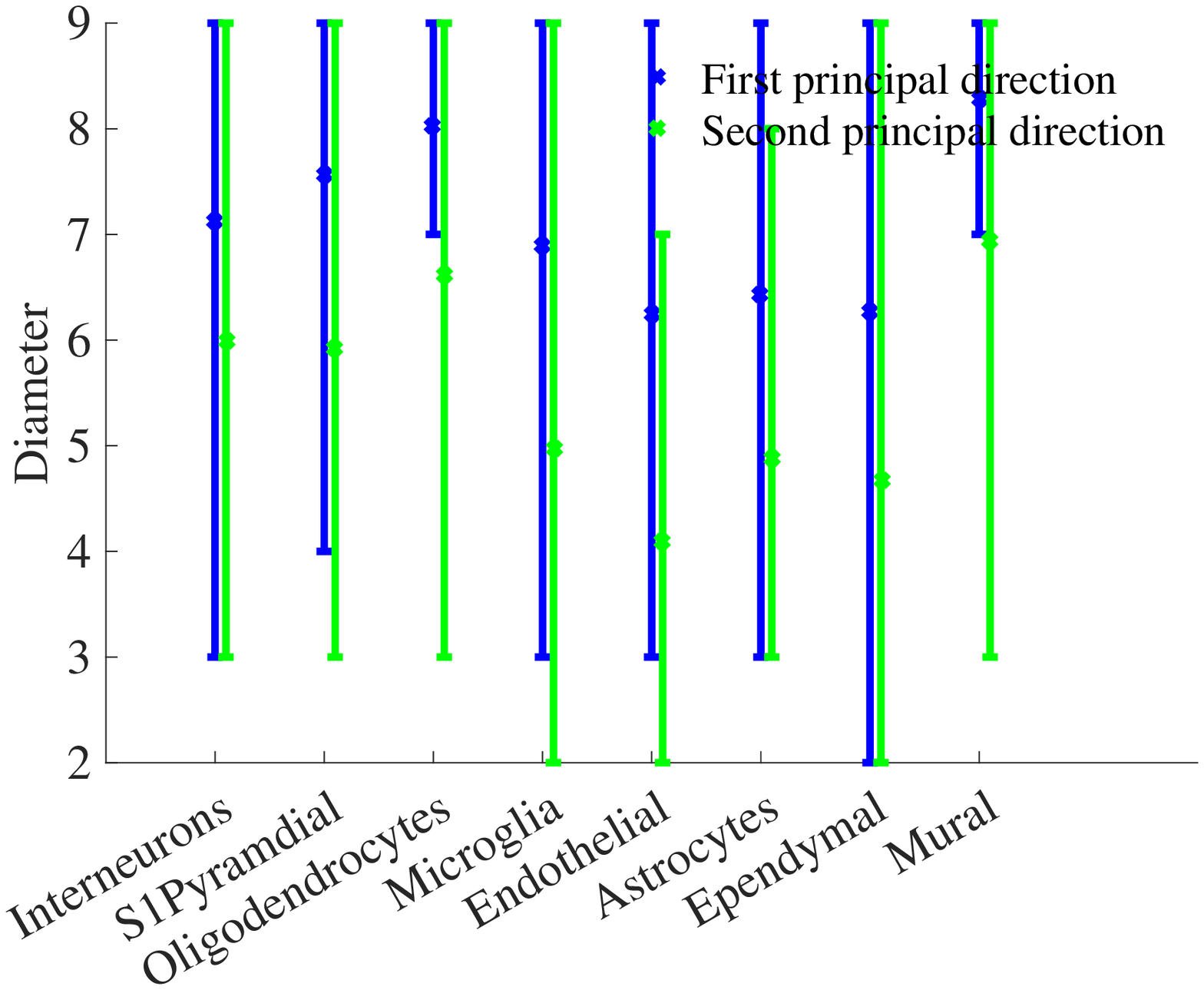}
\ec\end{minipage}}\hfil
\subfloat[b][Orientation]
{\begin{minipage}{0.45\textwidth}\bc
\psfrag{Cell Fraction}[cB]{\tiny{Cell Fraction}}
\psfrag{Orientation}[cc]{\tiny{ }}
\psfrag{0}[cl]{\tiny{0}}
\psfrag{50}[cl]{\tiny{50}}
\psfrag{100}[cl]{\tiny{100}}
\psfrag{150}[cl]{\tiny{150}}
\psfrag{pi/6}[cc]{\tiny{$\frac{\pi}{6}$}}
\psfrag{2pi/6}[cc]{\tiny{$\frac{2\pi}{6}$}}
\psfrag{3pi/6}[cc]{\tiny{$\frac{3\pi}{6}$}}
\psfrag{4pi/6}[cc]{\tiny{$\frac{4\pi}{6}$}}
\psfrag{5pi/6}[cc]{\tiny{$\frac{5\pi}{6}$}}
\psfrag{Oligodendrocytes}[lB]{\tiny{Oligodendrocytes}}
\psfrag{Interneurons}[lB]{\tiny{Interneurons}}
\psfrag{S1Pyramdial}[lB]{\tiny{S1Pyramdial}}
\psfrag{Astrocytes}[lB]{\tiny{Astrocytes}}
\psfrag{CA1Pyramidal}[lB]{\tiny{CA1Pyramidal}}
\psfrag{Ependymal}[lB]{\tiny{Ependymal}}
\psfrag{Microglia}[lB]{\tiny{Microglia}}
\psfrag{Endothelial}[lB]{\tiny{Endothelial}}
\psfrag{Mural}[lB]{\tiny{Mural}}
\includegraphics[width = \textwidth]
{\fighomeBrain/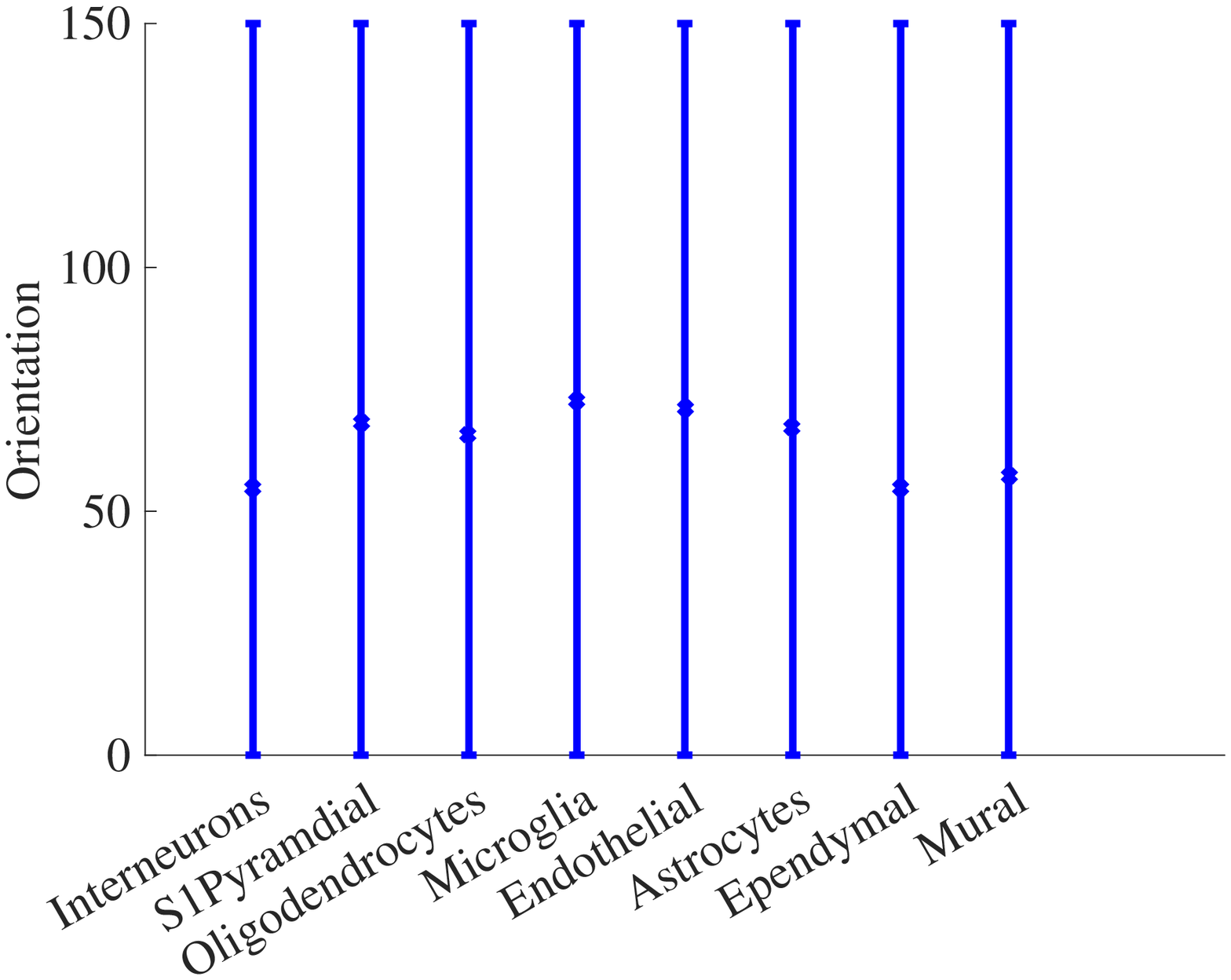}
\ec\end{minipage}}\\
\subfloat[c][Intensity]
{\begin{minipage}{0.45\textwidth}\bc 
\psfrag{Cell Fraction}[cB]{\tiny{Cell Fraction}}
\psfrag{Gene Intensity}[cc]{}
\psfrag{Intensity}[cB]{\tiny{}}
\psfrag{0.2}[lB]{\tiny{0.2}}
\psfrag{0.3}[lB]{\tiny{0.3}}
\psfrag{0.4}[lB]{\tiny{0.4}}
\psfrag{0.5}[lB]{\tiny{0.5}}
\psfrag{0.6}[lB]{\tiny{0.6}}
\psfrag{0.7}[lB]{\tiny{0.7}}
\psfrag{0.8}[lB]{\tiny{0.8}}
\psfrag{0.9}[lB]{\tiny{0.9}}
\psfrag{Oligodendrocytes}[lB]{\tiny{Oligodendrocytes}}
\psfrag{Interneurons}[lB]{\tiny{Interneurons}}
\psfrag{S1Pyramdial}[lB]{\tiny{S1Pyramdial}}
\psfrag{Astrocytes}[lB]{\tiny{Astrocytes}}
\psfrag{CA1Pyramidal}[lB]{\tiny{CA1Pyramidal}}
\psfrag{Ependymal}[lB]{\tiny{Ependymal}}
\psfrag{Microglia}[lB]{\tiny{Microglia}}
\psfrag{Endothelial}[lB]{\tiny{Endothelial}}
\psfrag{Mural}[lB]{\tiny{Mural}}
\includegraphics[width=\textwidth]{\fighomeBrain/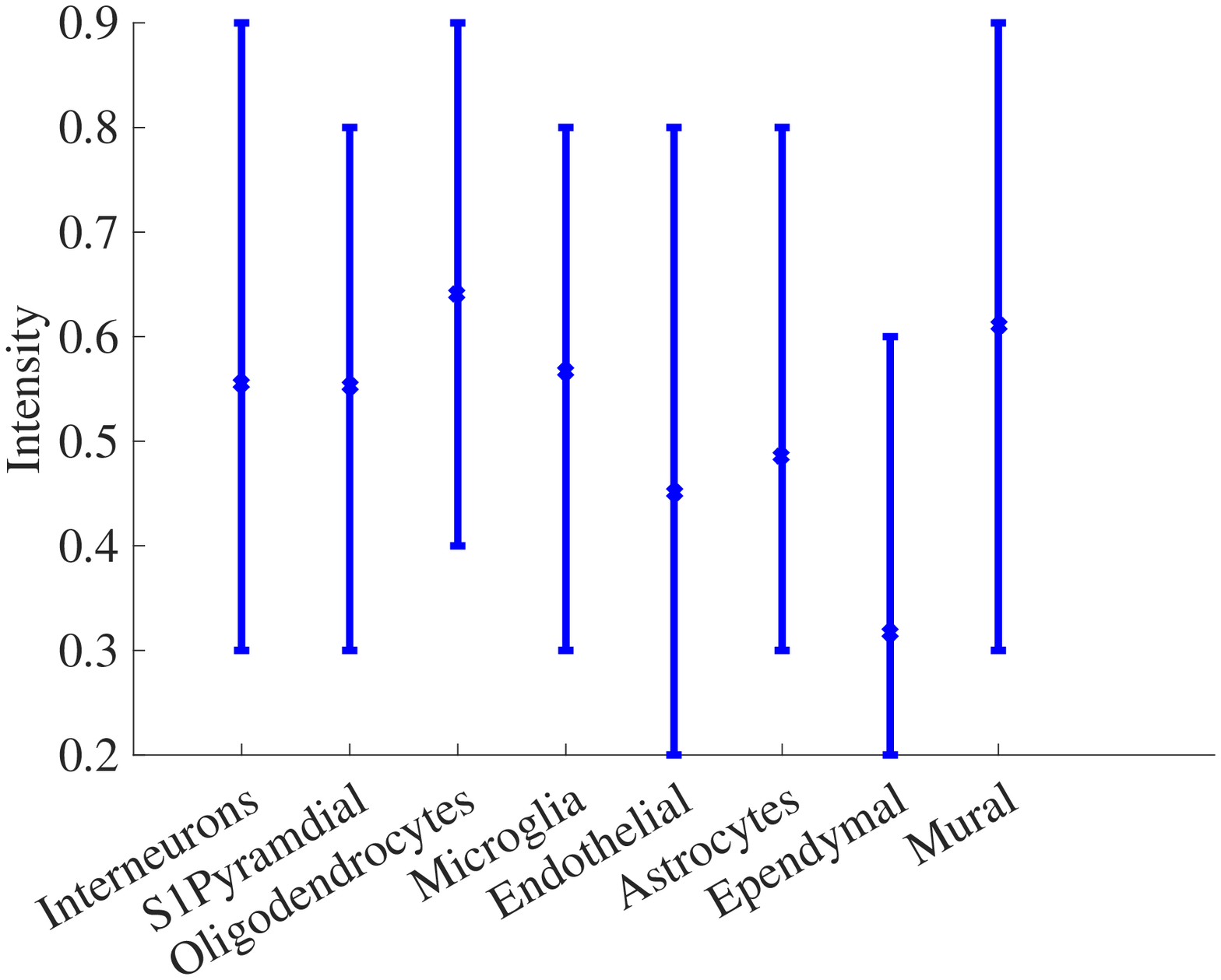}
\ec\end{minipage}}\hfil
\subfloat[d][Cells in \SI{100}{\micro\metre} radius]
{\begin{minipage}{0.45\textwidth}\bc 
\psfrag{Cell Fraction}[cB]{\tiny{Cell Fraction}}
\psfrag{Cells in 100 um}[cc]{}
\psfrag{0}[lB]{\tiny{0}}
\psfrag{10}[lB]{\tiny{10}}
\psfrag{20}[lB]{\tiny{20}}
\psfrag{30}[lB]{\tiny{30}}
\psfrag{40}[lB]{\tiny{40}}
\psfrag{50}[lB]{\tiny{50}}
\psfrag{Oligodendrocytes}[lB]{\tiny{Oligodendrocytes}}
\psfrag{Interneurons}[lB]{\tiny{Interneurons}}
\psfrag{S1Pyramdial}[lB]{\tiny{S1Pyramdial}}
\psfrag{Astrocytes}[lB]{\tiny{Astrocytes}}
\psfrag{CA1Pyramidal}[lB]{\tiny{CA1Pyramidal}}
\psfrag{Ependymal}[lB]{\tiny{Ependymal}}
\psfrag{Microglia}[lB]{\tiny{Microglia}}
\psfrag{Endothelial}[lB]{\tiny{Endothelial}}
\psfrag{Mural}[lB]{\tiny{Mural}}
\includegraphics[width=\textwidth]{\fighomeBrain/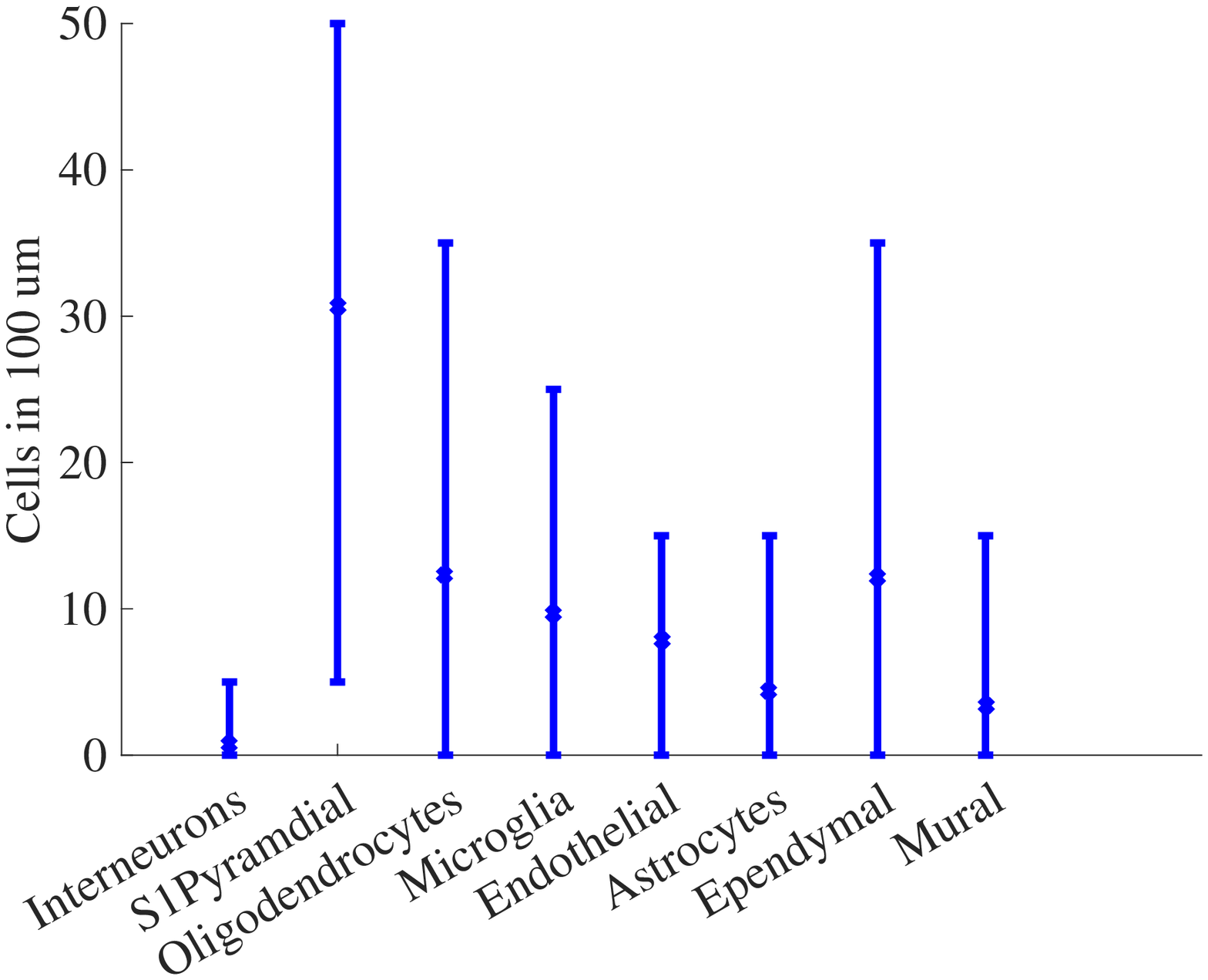}
\ec\end{minipage}}\\
\end{center}
\caption[Detected 8 cell type feature visualization]{Figure of 5\% and 95\% percentile estimated cell features for 8 cell types we detected. Inference is performed on the Spatial point process histograms data we estimated.}
\label{fig:hist_CellType}
\end{figure}

\subsection{A Brief Analysis of Recovered Cell Types in Somatosensory Cortex}

 In this section we describe the representative spatial point process statistics and gene expressions for 8 cell-types we recovered. We attempted to align our 8 clusters to cell-types defined by \cite{zeisel2015cell} in the single-cell RNA sequencing paper. We found high overlap in the gene expression profiles for all 8 clusters with known cell-types defined in~\cite{zeisel2015cell},  \emph{Interneurons}, \emph{S1 Pyramidal}, \emph{Mural}, \emph{Endothelial}, \emph{Microglia}, \emph{Ependymal}, \emph{Astrocytes} and \emph{Oligodendrocytes}, in Figure~\ref{fig:beta}.

The estimate of $\beta$ was combined with MLE to infer the cell-type specific spatial point process representation $h^m_l$. 
 In examining the spatial point process distributions that we predict for each of these cell types, we discover that while the distribution of cell body orientations is quite broad and similar across cell types, the cell count distribution, which is a measure of cell density, varies in a systematic way from one cell type to another. Fig~\ref{fig:hist_CellType}d shows that inhibitory Interneurons are less dense than S1Pyramidal neurons. This is consistent with their known prevalence, roughly 20\% of all neurons are GABAergic interneurons \cite{Markram:2004ek}, while the remaining 80\% are excitatory glutamatergic pyramidal neurons. As expected, this excitatory neuronal category of S1Pyramidal is the most common and hence most dense class of neuronal cells. They also have slightly larger cell bodies, compared to interneurons, as can be seen in Fig~\ref{fig:hist_CellType}a. The remaining 6 cell types correspond to various glial sub-types.
\section{Conclusion}
We developed a computational method for discovering cell types in a brain region by analyzing the high-resolution \emph{in situ} hybridization image series from the Allen Brain Atlas. Under the assumption that cell types have unique spatial distributions and gene expression profiles, we used a varied latent Dirichlet allocation (vLDA) based on spatial point process process mixture model to simultaneously infer the cell feature spatial distribution and gene expression profiles of cell types. By comparing our gene expression profile predictions to a single-cell RNA sequencing dataset, we demonstrated that our model improves significantly on state of the art.

The accuracy of our method relies heavily on the assumption that cell-types differ in their spatial distribution, and that our point process features perform a good job of distinguishing these differences. Thus the performance of our method can be improved by better estimates of better features. We would expect our method to perform better for large brain areas, which can be more accurately aligned, and which have more cells to estimate point process features.

There are several modifications to our vLDA model which might improve the faithfulness of our generative model to the biology. 
We place a symmetric Dirichlet prior over cell-type multinomial distribution $h^m$ for a given histogram bin $m$. This assumes that the number of cell-types expressing each gene is the same for all genes. But since some genes are expressed more commonly and non-specifically than others, we might expect a gene-specific prior to be a better model. Further, the symmetric Dirichlet assumes that all cell-types have equal proportions of cells. But evidence suggests that excitatory neurons are more common than inhibitory neurons in cortex~\cite{harris2013cortical}, and using a non-uniform Dirichlet prior could account for this.

\chapter{Conclusion and Outlook}

\section{Conclusion}
Now that we are at the end of the dissertation, we are convinced that spectral methods including tensor decomposition are good candidates for unsupervised learning. They reveal hidden structure using transformations and extract useful and clean information to characterize the complicated data. Spectral methods are proved to be potential in various application. For instance, text and image processing, social networks, healthcare analytics and neuroscience.  

Spectral methods especially matrix/tensor decomposition framework is versatile. They are straightforward to apply to flat models,  such as exchangeable model, multi-view model, and hidden Markov model, but they are also amendable to learn models with a hierarchy such as a mixture of trees and latent tree model.  Spectral methods not only perform well on traditional multiplicative sparse coding models but also outperforms the state-of-the-art on models with group invariance.  The tensor decomposition framework is efficient and is guaranteed to converge to global optima.

\section{Outlook}

Now the question is what is beyond? Could we further push the boundaries of spectral methods? Can we have a tensor library with optimal hardware support for tensor operations? In the region of high dimensional hidden space, could we develop approximated algorithms that are computational more efficient? Could we have tensor sketching where the decomposition happens in a sketching vector space, and the tensor is never explicitly formed?  Furthermore, could we use tensor decomposition to train models with other invariances  (such as rotation invariance and scaling invariance) or general invariance constraints? 
    
In the real world, we could push our framework further for more challenging tasks. In neuroscience, we would like to understand the brain; that is to systematically model and learn brain neural system and sort out its relationship to body functions. We know that deep neural network system inspired by the architecture of neural circuits have been hugely successful empirically. Could we utilize the neural network techniques to foster understanding of the brain neural circuits? Or could we use our knowledge of the brain neural circuits to understand fundamental reasons for a certain structure of a deep neural network system in machine learning? Even in healthcare analytics, simple usage of the co-occurrence of diseases is not as informative as considering other factors such as symptoms. With more information, the model gets more complicated, but we hope to achieve personalized identification of diseases or curing plans. 

Overall, there are numerous exciting open problems ahead. Graduation is not an end; rather it is a fresh start. I am looking forward to the uncertainty of the future career. Keep curious and continue exploring. May the world be more intelligent! 


\clearpage
\phantomsection

\bibliographystyle{abbrv}
\bibliography{thesis}

\begin{thebibliography}{100}

\bibitem{AMBA}
Website: ©2014 allen institute for brain science. allen mouse brain atlas
  [internet].
\newblock Available from: \url{http://mouse.brain-map.org/}.
\newblock Accessed: 2014-11-06.

\bibitem{AnandkumarEtal:COLT14}
A.~Agarwal, A.~Anandkumar, P.~Jain, P.~Netrapalli, and R.~Tandon.
\newblock {Learning Sparsely Used Overcomplete Dictionaries}.
\newblock In {\em Conference on Learning Theory (COLT)}, June 2014.

\bibitem{agarwal2010fast}
A.~Agarwal, S.~Negahban, and M.~J. Wainwright.
\newblock Fast global convergence rates of gradient methods for
  high-dimensional statistical recovery.
\newblock In {\em Advances in Neural Information Processing Systems}, pages
  37--45, 2010.

\bibitem{ahmed2014blind}
A.~Ahmed, B.~Recht, and J.~Romberg.
\newblock Blind deconvolution using convex programming.
\newblock {\em Information Theory, IEEE Transactions on}, 60(3):1711--1732,
  2014.

\bibitem{ABFX08}
E.~M. Airoldi, D.~M. Blei, S.~E. Fienberg, and E.~P. Xing.
\newblock Mixed membership stochastic blockmodels.
\newblock {\em Journal of Machine Learning Research}, 9:1981--2014, June 2008.

\bibitem{anandkumar2011spectral}
A.~Anandkumar, K.~Chaudhuri, D.~Hsu, S.~M. Kakade, L.~Song, and T.~Zhang.
\newblock Spectral methods for learning multivariate latent tree structure.
\newblock {\em arXiv preprint arXiv:1107.1283}, 2011.

\bibitem{anandkumar2012two}
A.~Anandkumar, D.~P. Foster, D.~Hsu, S.~M. Kakade, and Y.-K. Liu.
\newblock Two svds suffice: Spectral decompositions for probabilistic topic
  modeling and latent dirichlet allocation.
\newblock {\em CoRR, abs/1204.6703}, 1, 2012.

\bibitem{AnandkumarEtal:community12COLT}
A.~Anandkumar, R.~Ge, D.~Hsu, and S.~M. Kakade.
\newblock {A Tensor Spectral Approach to Learning Mixed Membership Community
  Models}.
\newblock In {\em Conference on Learning Theory (COLT)}, June 2013.

\bibitem{AnandkumarEtal:community12}
A.~Anandkumar, R.~Ge, D.~Hsu, and S.~M. Kakade.
\newblock {A Tensor Spectral Approach to Learning Mixed Membership Community
  Models}.
\newblock {\em ArXiv 1302.2684}, Feb. 2013.

\bibitem{AGHKT12}
A.~Anandkumar, R.~Ge, D.~Hsu, S.~M. Kakade, and M.~Telgarsky.
\newblock Tensor decompositions for latent variable models, 2012.

\bibitem{anandkumar2012tensor}
A.~Anandkumar, R.~Ge, D.~Hsu, S.~M. Kakade, and M.~Telgarsky.
\newblock Tensor decompositions for learning latent variable models.
\newblock {\em arXiv preprint arXiv:1210.7559}, 2012.

\bibitem{anandkumar2014tensor}
A.~Anandkumar, R.~Ge, D.~Hsu, S.~M. Kakade, and M.~Telgarsky.
\newblock Tensor decompositions for learning latent variable models.
\newblock {\em The Journal of Machine Learning Research}, 15(1):2773--2832,
  2014.

\bibitem{JMLR:v15:anandkumar14b}
A.~Anandkumar, R.~Ge, D.~Hsu, S.~M. Kakade, and M.~Telgarsky.
\newblock Tensor decompositions for learning latent variable models.
\newblock {\em Journal of Machine Learning Research}, 15:2773--2832, 2014.

\bibitem{anandkumar2014provable}
A.~Anandkumar, R.~Ge, and M.~Janzamin.
\newblock Learning overcomplete latent variable models through tensor methods.
\newblock In {\em Conference on Learning Theory (COLT)}, June 2015.

\bibitem{anandkumar2012method}
A.~Anandkumar, D.~Hsu, and S.~M. Kakade.
\newblock A method of moments for mixture models and hidden markov models.
\newblock {\em arXiv preprint arXiv:1203.0683}, 2012.

\bibitem{AnandkumarTanWillsky:Ising11}
A.~Anandkumar, V.~Y.~F. Tan, F.~Huang, and A.~S. Willsky.
\newblock {High-dimensional structure learning of Ising models: local
  separation criterion}.
\newblock {\em The Annals of Statistics}, 40(3):1346--1375, 2012.

\bibitem{anandkumar2013learning}
A.~Anandkumar, R.~Valluvan, et~al.
\newblock Learning loopy graphical models with latent variables: Efficient
  methods and guarantees.
\newblock {\em The Annals of Statistics}, 41(2):401--435, 2013.

\bibitem{6483308}
R.~Arora, A.~Cotter, K.~Livescu, and N.~Srebro.
\newblock Stochastic optimization for pca and pls.
\newblock In {\em Communication, Control, and Computing (Allerton), 2012 50th
  Annual Allerton Conference on}, pages 861--868, 2012.

\bibitem{arora2013new}
S.~Arora, R.~Ge, and A.~Moitra.
\newblock New algorithms for learning incoherent and overcomplete dictionaries.
\newblock In {\em Conference on Learning Theory (COLT)}, June 2014.

\bibitem{arora2012provable}
S.~Arora, R.~Ge, A.~Moitra, and S.~Sachdeva.
\newblock Provable {ICA} with unknown gaussian noise, with implications for
  gaussian mixtures and autoencoders.
\newblock In {\em Advances in Neural Information Processing Systems}, pages
  2375--2383, 2012.

\bibitem{azuma1967weighted}
K.~Azuma.
\newblock Weighted sums of certain dependent random variables.
\newblock {\em Tohoku Mathematical Journal, Second Series}, 19(3):357--367,
  1967.

\bibitem{Bache+Lichman:2013}
K.~Bache and M.~Lichman.
\newblock {UCI} machine learning repository, 2013.

\bibitem{TTB_Software}
B.~W. Bader, T.~G. Kolda, et~al.
\newblock Matlab tensor toolbox version 2.5.
\newblock Available online, January 2012.

\bibitem{bader2006fast}
D.~A. Bader and G.~Cong.
\newblock Fast shared-memory algorithms for computing the minimum spanning
  forest of sparse graphs.
\newblock {\em Journal of Parallel and Distributed Computing},
  66(11):1366--1378, 2006.

\bibitem{ballard2011efficiently}
G.~Ballard, T.~Kolda, and T.~Plantenga.
\newblock Efficiently computing tensor eigenvalues on a gpu.
\newblock In {\em Parallel and Distributed Processing Workshops and Phd Forum
  (IPDPSW), 2011 IEEE International Symposium on}, pages 1340--1348. IEEE,
  2011.

\bibitem{banerjee2004objective}
A.~Banerjee and J.~Langford.
\newblock An objective evaluation criterion for clustering.
\newblock In {\em Proceedings of the tenth ACM SIGKDD International Conference
  on Knowledge Discovery and Data Mining}, pages 515--520. ACM, 2004.

\bibitem{belanger2015linear}
D.~Belanger and S.~Kakade.
\newblock A linear dynamical system model for text.
\newblock {\em arXiv preprint arXiv:1502.04081}, 2015.

\bibitem{bengio2009learning}
Y.~Bengio.
\newblock Learning deep architectures for {AI}.
\newblock {\em Foundations and trends{\textregistered} in Machine Learning},
  2(1):1--127, 2009.

\bibitem{bengio2006neural}
Y.~Bengio, H.~Schwenk, J.-S. Sen{\'e}cal, F.~Morin, and J.-L. Gauvain.
\newblock Neural probabilistic language models.
\newblock In {\em Innovations in Machine Learning}, pages 137--186. Springer,
  2006.

\bibitem{svdlibc2002}
M.~Berry, T.~Do, G.~O'Brien, V.~Krishna, and S.~Varadhan.
\newblock Svdlibc version 1.4.
\newblock Available online, 2002.

\bibitem{blei2012probabilistic}
D.~M. Blei.
\newblock Probabilistic topic models.
\newblock {\em Communications of the ACM}, 55(4):77--84, 2012.

\bibitem{blei2003latent}
D.~M. Blei, A.~Y. Ng, and M.~I. Jordan.
\newblock Latent dirichlet allocation.
\newblock {\em the Journal of machine Learning research}, 3:993--1022, 2003.

\bibitem{bristow2013fast}
H.~Bristow, A.~Eriksson, and S.~Lucey.
\newblock Fast convolutional sparse coding.
\newblock In {\em Computer Vision and Pattern Recognition (CVPR), 2013 IEEE
  Conference on}, pages 391--398. IEEE, 2013.

\bibitem{bristow2014optimization}
H.~Bristow and S.~Lucey.
\newblock Optimization methods for convolutional sparse coding.
\newblock {\em arXiv preprint arXiv:1406.2407}, 2014.

\bibitem{cardoso1989source}
J.-F. Cardoso.
\newblock Source separation using higher order moments.
\newblock In {\em Acoustics, Speech, and Signal Processing}, pages 2109--2112.
  IEEE, 1989.

\bibitem{cardoso1991super}
J.-F. Cardoso.
\newblock Super-symmetric decomposition of the fourth-order cumulant tensor.
  blind identification of more sources than sensors.
\newblock In {\em Acoustics, Speech, and Signal Processing, 1991. ICASSP-91.,
  1991 International Conference on}, pages 3109--3112. IEEE, 1991.

\bibitem{cattell1944parallel}
R.~B. Cattell.
\newblock Òparallel proportional profilesÓ and other principles for determining
  the choice of factors by rotation.
\newblock {\em Psychometrika}, 9(4):267--283, 1944.

\bibitem{chang1996full}
J.~T. Chang.
\newblock Full reconstruction of markov models on evolutionary trees:
  identifiability and consistency.
\newblock {\em Mathematical biosciences}, 137(1):51--73, 1996.

\bibitem{chen2012clustering}
Y.~Chen, S.~Sanghavi, and H.~Xu.
\newblock Clustering sparse graphs.
\newblock {\em arXiv preprint arXiv:1210.3335}, 2012.

\bibitem{choi2012context}
M.~Choi, A.~Torralba, and A.~Willsky.
\newblock Context models and out-of-context objects.
\newblock {\em Pattern Recognition Letters}, 2012.

\bibitem{choi2011learning}
M.~J. Choi, V.~Y. Tan, A.~Anandkumar, and A.~S. Willsky.
\newblock Learning latent tree graphical models.
\newblock {\em The Journal of Machine Learning Research}, 12:1771--1812, 2011.

\bibitem{choi2012context2}
M.~J. Choi, A.~Torralba, and A.~S. Willsky.
\newblock Context models and out-of-context objects.
\newblock {\em Pattern Recognition Letters}, 33(7):853--862, 2012.

\bibitem{choromanska2014loss}
A.~Choromanska, M.~Henaff, M.~Mathieu, G.~B. Arous, and Y.~LeCun.
\newblock The loss surface of multilayer networks.
\newblock {\em arXiv:1412.0233}, 2014.

\bibitem{choudhary2014sparse}
S.~Choudhary and U.~Mitra.
\newblock Sparse blind deconvolution: What cannot be done.
\newblock In {\em Information Theory (ISIT), 2014 IEEE International Symposium
  on}, pages 3002--3006. IEEE, 2014.

\bibitem{DBLP:journals/corr/abs-1207-6365}
K.~L. Clarkson and D.~P. Woodruff.
\newblock Low rank approximation and regression in input sparsity time.
\newblock {\em CoRR}, abs/1207.6365, 2012.

\bibitem{clarkson2013low}
K.~L. Clarkson and D.~P. Woodruff.
\newblock Low rank approximation and regression in input sparsity time.
\newblock In {\em Proceedings of the 45th annual ACM symposium on Symposium on
  theory of computing}, pages 81--90. ACM, 2013.

\bibitem{collobert2008unified}
R.~Collobert and J.~Weston.
\newblock A unified architecture for natural language processing: Deep neural
  networks with multitask learning.
\newblock In {\em Proceedings of the 25th international conference on Machine
  learning}, pages 160--167. ACM, 2008.

\bibitem{collobert2011natural}
R.~Collobert, J.~Weston, L.~Bottou, M.~Karlen, K.~Kavukcuoglu, and P.~Kuksa.
\newblock Natural language processing (almost) from scratch.
\newblock {\em The Journal of Machine Learning Research}, 12:2493--2537, 2011.

\bibitem{comon2002tensor}
P.~Comon.
\newblock Tensor decompositions.
\newblock {\em Mathematics in Signal Processing V}, pages 1--24, 2002.

\bibitem{comon2009tensor}
P.~Comon, X.~Luciani, and A.~L. De~Almeida.
\newblock Tensor decompositions, alternating least squares and other tales.
\newblock {\em Journal of Chemometrics}, 23(7-8):393--405, 2009.

\bibitem{constantine2011tall}
P.~G. Constantine and D.~F. Gleich.
\newblock Tall and skinny qr factorizations in mapreduce architectures.
\newblock In {\em Proceedings of the Second International Workshop on MapReduce
  and its Applications}, pages 43--50. ACM, 2011.

\bibitem{OMP}
L.~Dagum and R.~Menon.
\newblock Openmp: an industry standard api for shared-memory programming.
\newblock {\em Computational Science \& Engineering, IEEE}, 5(1):46--55, 1998.

\bibitem{dauphin2014identifying}
Y.~N. Dauphin, R.~Pascanu, C.~Gulcehre, K.~Cho, S.~Ganguli, and Y.~Bengio.
\newblock Identifying and attacking the saddle point problem in
  high-dimensional non-convex optimization.
\newblock In {\em Advances in Neural Information Processing Systems}, pages
  2933--2941, 2014.

\bibitem{Denny:pheWAS}
J.~Denny, M.~Ritchie, M.~Basford, J.~Pulley, L.~Bastarache, K.~Brown-Gentry,
  D.~Wang, D.~Masys, R.~DM, and D.~Crawford.
\newblock Phewas: demonstrating the feasibility of a phenome-wide scan to
  discover gene–disease associations.
\newblock {\em Bioinformatics}, 26(9):1205--1210, 2010.

\bibitem{dolan2004unsupervised}
B.~Dolan, C.~Quirk, and C.~Brockett.
\newblock Unsupervised construction of large paraphrase corpora: Exploiting
  massively parallel news sources.
\newblock In {\em Proceedings of the 20th international conference on
  Computational Linguistics}, page 350. Association for Computational
  Linguistics, 2004.

\bibitem{Durbin:book}
R.~Durbin, S.~R. Eddy, A.~Krogh, and G.~Mitchison.
\newblock {\em Biological Sequence Analysis: Probabilistic Models of Proteins
  and Nucleic Acids}.
\newblock Cambridge Univ. Press, 1999.

\bibitem{eberle2003finding}
M.~G. Eberle and M.~C. Maciel.
\newblock Finding the closest toeplitz matrix.
\newblock {\em Computational \& Applied Mathematics}, 22(1):1--18, 2003.

\bibitem{ekanadham2011blind}
C.~Ekanadham, D.~Tranchina, and E.~P. Simoncelli.
\newblock A blind sparse deconvolution method for neural spike identification.
\newblock In {\em Advances in Neural Information Processing Systems}, pages
  1440--1448, 2011.

\bibitem{erdos1999few}
P.~L. Erdos, M.~A. Steel, L.~A. Sz{\'e}kely, and T.~J. Warnow.
\newblock A few logs suffice to build (almost) all trees (i).
\newblock {\em Random Structures and Algorithms}, 14(2):153--184, 1999.

\bibitem{fadem2012high}
B.~Fadem.
\newblock {\em High-yield behavioral science}.
\newblock LWW, 2012.

\bibitem{Falanga:clotCancer}
A.~Falanga, M.~Marchetti, A.~Vignoli, and D.~Balducci.
\newblock Clotting mechanisms and cancer: implications in thrombus formation
  and tumor progression.
\newblock {\em Clinical advances in hematology \& oncology: H\&O},
  1(11):673--678, 2003.

\bibitem{feldman2013turning}
D.~Feldman, M.~Schmidt, and C.~Sohler.
\newblock Turning big data into tiny data: Constant-size coresets for k-means,
  pca and projective clustering.
\newblock In {\em Proceedings of the Twenty-Fourth Annual ACM-SIAM Symposium on
  Discrete Algorithms}, pages 1434--1453. SIAM, 2013.

\bibitem{frieze1996learning}
A.~Frieze, M.~Jerrum, and R.~Kannan.
\newblock Learning linear transformations.
\newblock In {\em 2013 IEEE 54th Annual Symposium on Foundations of Computer
  Science}, pages 359--359, 1996.

\bibitem{GeHuangJinYuan:COLT15}
R.~Ge, F.~Huang, C.~Jin, and Y.~Yuan.
\newblock Escaping from saddle points --- online stochastic gradient for tensor
  decomposition.
\newblock In {\em Proc. of Conf. on Learning Theory}, June 2015.

\bibitem{gerbessiotis1994direct}
A.~V. Gerbessiotis and L.~G. Valiant.
\newblock Direct bulk-synchronous parallel algorithms.
\newblock {\em Journal of parallel and distributed computing}, 22(2):251--267,
  1994.

\bibitem{gittens2013revisiting}
A.~Gittens and M.~W. Mahoney.
\newblock Revisiting the nystrom method for improved large-scale machine
  learning.
\newblock {\em arXiv preprint arXiv:1303.1849}, 2013.

\bibitem{DBLP:journals/corr/abs-1303-1849}
A.~Gittens and M.~W. Mahoney.
\newblock Revisiting the nystrom method for improved large-scale machine
  learning.
\newblock {\em CoRR}, abs/1303.1849, 2013.

\bibitem{golub2012matrix}
G.~H. Golub and C.~F. Van~Loan.
\newblock {\em Matrix computations}, volume~3.
\newblock JHU Press, 2012.

\bibitem{zbMATH06159604}
G.~H. {Golub} and C.~F. {Van Loan}.
\newblock {\em {Matrix computations. 4th ed.}}
\newblock Baltimore, MD: The Johns Hopkins University Press, 4th ed. edition,
  2013.

\bibitem{gopalan2012scalable}
P.~Gopalan, D.~Mimno, S.~Gerrish, M.~Freedman, and D.~Blei.
\newblock Scalable inference of overlapping communities.
\newblock In {\em Advances in Neural Information Processing Systems 25}, pages
  2258--2266, 2012.

\bibitem{gopalan2013efficient}
P.~K. Gopalan and D.~M. Blei.
\newblock Efficient discovery of overlapping communities in massive networks.
\newblock {\em Proceedings of the National Academy of Sciences},
  110(36):14534--14539, 2013.

\bibitem{grange2014cell}
P.~Grange, J.~W. Bohland, B.~W. Okaty, K.~Sugino, H.~Bokil, S.~B. Nelson,
  L.~Ng, M.~Hawrylycz, and P.~P. Mitra.
\newblock Cell-type--based model explaining coexpression patterns of genes in
  the brain.
\newblock {\em Proceedings of the National Academy of Sciences},
  111(14):5397--5402, 2014.

\bibitem{gray2005toeplitz}
R.~M. Gray.
\newblock Toeplitz and circulant matrices: A review.
\newblock {\em Communications and Information Theory}, 2(3):155--239, 2005.

\bibitem{us2014united}
U.~C. S.~W. Group et~al.
\newblock United states cancer statistics: 1999--2010 incidence and mortality
  web-based report.
\newblock {\em Atlanta (GA): Department of Health and Human Services, Centers
  for Disease Control and Prevention, and National Cancer Institute}, 2014.

\bibitem{hanson1999invexity}
M.~A. Hanson.
\newblock Invexity and the kuhn--tucker theorem.
\newblock {\em Journal of mathematical analysis and applications},
  236(2):594--604, 1999.

\bibitem{harris2013cortical}
K.~D. Harris and T.~D. Mrsic-Flogel.
\newblock Cortical connectivity and sensory coding.
\newblock {\em Nature}, 503(7474):51--58, 2013.

\bibitem{harshman1970foundations}
R.~A. Harshman.
\newblock Foundations of the {PARAFAC} procedure: {M}odels and conditions for
  an ``explanatory" multi-modal factor analysis.
\newblock {\em UCLA Working Papers in Phonetics}, 16(1):84, 1970.

\bibitem{hassan2011measuring}
S.~Hassan.
\newblock {\em Measuring semantic relatedness using salient encyclopedic
  concepts}.
\newblock University of North Texas, 2011.

\bibitem{hawrylycz2011multi}
M.~Hawrylycz, L.~Ng, D.~Page, J.~Morris, C.~Lau, S.~Faber, V.~Faber, S.~Sunkin,
  V.~Menon, E.~Lein, et~al.
\newblock Multi-scale correlation structure of gene expression in the brain.
\newblock {\em Neural Networks}, 24(9):933--942, 2011.

\bibitem{hsu2013learning}
D.~Hsu and S.~M. Kakade.
\newblock Learning mixtures of spherical gaussians: moment methods and spectral
  decompositions.
\newblock In {\em Proceedings of the 4th conference on Innovations in
  Theoretical Computer Science}, pages 11--20. ACM, 2013.

\bibitem{hsu2012spectral}
D.~Hsu, S.~M. Kakade, and T.~Zhang.
\newblock A spectral algorithm for learning hidden markov models.
\newblock {\em Journal of Computer and System Sciences}, 78(5):1460--1480,
  2012.

\bibitem{huang15convolutional}
F.~Huang and A.~Anandkumar.
\newblock Convolutional dictionary learning through tensor factorization.
\newblock In {\em Proceedings of The 1st International Workshop on Feature
  Extraction: Modern Questions and Challengesâ NIPS}, pages 116--129, 2015.

\bibitem{huang2016discovering}
F.~Huang, A.~Anandkumar, C.~Borgs, J.~Chayes, E.~Fraenkel, M.~Hawrylycz,
  E.~Lein, A.~Ingrosso, and S.~Turaga.
\newblock Discovering neuronal cell types and their gene expression profiles
  using a spatial point process mixture model.
\newblock {\em arXiv preprint arXiv:1602.01889}, 2016.

\bibitem{huang2014distributed}
F.~Huang, S.~Matusevych, A.~Anandkumar, N.~Karampatziakis, and P.~Mineiro.
\newblock Distributed latent dirichlet allocation via tensor factorization.
\newblock In {\em NIPS Optimization Workshop}, 2014.

\bibitem{DBLP:journals/corr/HuangNHVA13}
F.~Huang, N.~U. N, M.~U. Hakeem, P.~Verma, and A.~Anandkumar.
\newblock Fast detection of overlapping communities via online tensor methods
  on gpus.
\newblock {\em CoRR}, abs/1309.0787, 2013.

\bibitem{huang2014online}
F.~Huang, U.~Niranjan, M.~Hakeem, and A.~Anandkumar.
\newblock Online tensor methods for learning latent variable models, 2014.

\bibitem{huang2013fast}
F.~Huang, U.~Niranjan, M.~U. Hakeem, and A.~Anandkumar.
\newblock Fast detection of overlapping communities via online tensor methods.
\newblock {\em arXiv:1309.0787}, 2013.

\bibitem{huang2014scalable}
F.~Huang, I.~Perros, R.~Chen, J.~Sun, A.~Anandkumar, et~al.
\newblock Scalable latent tree model and its application to health analytics.
\newblock {\em arXiv preprint arXiv:1406.4566}, 2014.

\bibitem{hyvarinen1999fast}
A.~Hyvarinen.
\newblock Fast {ICA} for noisy data using gaussian moments.
\newblock In {\em Circuits and Systems}, volume~5, pages 57--61, 1999.

\bibitem{hyvarinen2004independent}
A.~Hyv{\"a}rinen, J.~Karhunen, and E.~Oja.
\newblock {\em Independent component analysis}, volume~46.
\newblock John Wiley \& Sons, 2004.

\bibitem{icabook}
A.~Hyv{\"a}rinen, J.~Karhunen, and E.~Oja.
\newblock {\em Independent component analysis}, volume~46.
\newblock John Wiley \& Sons, 2004.

\bibitem{inoue2003line}
M.~Inoue, H.~Park, and M.~Okada.
\newblock On-line learning theory of soft committee machines with correlated
  hidden units--steepest gradient descent and natural gradient descent--.
\newblock {\em Journal of the Physical Society of Japan}, 72(4):805--810, 2003.

\bibitem{jain2013low}
P.~Jain, P.~Netrapalli, and S.~Sanghavi.
\newblock Low-rank matrix completion using alternating minimization.
\newblock In {\em Proceedings of the forty-fifth annual ACM symposium on Theory
  of computing}, pages 665--674, 2013.

\bibitem{jaja1992introduction}
J.~J{\'a}J{\'a}.
\newblock {\em An introduction to parallel algorithms}.
\newblock Addison Wesley Longman Publishing Co., Inc., 1992.

\bibitem{ji2009spatial}
C.~Ji, D.~Merl, T.~B. Kepler, and M.~West.
\newblock Spatial mixture modelling for unobserved point processes: Examples in
  immunofluorescence histology.
\newblock {\em Bayesian analysis (Online)}, 4(2):297, 2009.

\bibitem{johnson1988easy}
D.~S. Johnson, C.~H. Papadimitriou, and M.~Yannakakis.
\newblock How easy is local search?
\newblock {\em Journal of computer and system sciences}, 37(1):79--100, 1988.

\bibitem{kalchbrenner2014convolutional}
N.~Kalchbrenner, E.~Grefenstette, and P.~Blunsom.
\newblock A convolutional neural network for modelling sentences.
\newblock {\em arXiv preprint arXiv:1404.2188}, 2014.

\bibitem{DBLP:conf/acl/KalchbrennerGB14}
N.~Kalchbrenner, E.~Grefenstette, and P.~Blunsom.
\newblock A convolutional neural network for modelling sentences.
\newblock In {\em Proceedings of the 52nd Annual Meeting of the Association for
  Computational Linguistics, {ACL} 2014, June 22-27, 2014, Baltimore, MD, USA,
  Volume 1: Long Papers}, pages 655--665. The Association for Computer
  Linguistics, 2014.

\bibitem{kannan2014principal}
R.~Kannan, S.~S. Vempala, and D.~P. Woodruff.
\newblock Principal component analysis and higher correlations for distributed
  data.
\newblock In {\em Proceedings of The 27th Conference on Learning Theory}, pages
  1040--1057, 2014.

\bibitem{karrer2011stochastic}
B.~Karrer and M.~E. Newman.
\newblock Stochastic blockmodels and community structure in networks.
\newblock {\em Physical Review E}, 83(1):016107, 2011.

\bibitem{kavukcuoglu2010learning}
K.~Kavukcuoglu, P.~Sermanet, Y.-L. Boureau, K.~Gregor, M.~Mathieu, and Y.~L.
  Cun.
\newblock Learning convolutional feature hierarchies for visual recognition.
\newblock In {\em Advances in neural information processing systems}, pages
  1090--1098, 2010.

\bibitem{kim2014convolutional}
Y.~Kim.
\newblock Convolutional neural networks for sentence classification.
\newblock {\em arXiv preprint arXiv:1408.5882}, 2014.

\bibitem{kiros2015skip}
R.~Kiros, Y.~Zhu, R.~R. Salakhutdinov, R.~Zemel, R.~Urtasun, A.~Torralba, and
  S.~Fidler.
\newblock Skip-thought vectors.
\newblock In {\em Advances in Neural Information Processing Systems}, pages
  3276--3284, 2015.

\bibitem{kiwiel2001convergence}
K.~C. Kiwiel.
\newblock Convergence and efficiency of subgradient methods for quasiconvex
  minimization.
\newblock {\em Mathematical programming}, 90(1):1--25, 2001.

\bibitem{kolda2001orthogonal}
T.~G. Kolda.
\newblock Orthogonal tensor decompositions.
\newblock {\em SIAM Journal on Matrix Analysis and Applications},
  23(1):243--255, 2001.

\bibitem{kondor2008group}
R.~Kondor.
\newblock Group theoretical methods in machine learning.
\newblock {\em Columbia University}, 2008.

\bibitem{kottas2007bayesian}
A.~Kottas and B.~Sans{\'o}.
\newblock Bayesian mixture modeling for spatial poisson process intensities,
  with applications to extreme value analysis.
\newblock {\em Journal of Statistical Planning and Inference},
  137(10):3151--3163, 2007.

\bibitem{krishnamurthy2012efficient}
A.~Krishnamurthy, S.~Balakrishnan, M.~Xu, and A.~Singh.
\newblock Efficient active algorithms for hierarchical clustering.
\newblock {\em arXiv preprint arXiv:1206.4672}, 2012.

\bibitem{krishnan2013blind}
D.~Krishnan, J.~Bruna, and R.~Fergus.
\newblock Blind deconvolution with non-local sparsity reweighting.
\newblock {\em arXiv preprint arXiv:1311.4029}, 2013.

\bibitem{kulesza2012determinantal}
A.~Kulesza and B.~Taskar.
\newblock Determinantal point processes for machine learning.
\newblock {\em Machine Learning}, 5(2-3):123--286, 2012.

\bibitem{kushner2003stochastic}
H.~Kushner and G.~Yin.
\newblock {\em Stochastic Approximation and Recursive Algorithms and
  Applications}.
\newblock Applications of Mathematics Series. Springer, 2003.

\bibitem{lancichinetti2009community}
A.~Lancichinetti and S.~Fortunato.
\newblock Community detection algorithms: a comparative analysis.
\newblock {\em Physical review E}, 80(5):056117, 2009.

\bibitem{lancichinetti2009detecting}
A.~Lancichinetti, S.~Fortunato, and J.~Kert{\'e}sz.
\newblock Detecting the overlapping and hierarchical community structure in
  complex networks.
\newblock {\em New Journal of Physics}, 11(3):033015, 2009.

\bibitem{le2014distributed}
Q.~V. Le and T.~Mikolov.
\newblock Distributed representations of sentences and documents.
\newblock {\em arXiv preprint arXiv:1405.4053}, 2014.

\bibitem{lein2007genome}
E.~S. Lein, M.~J. Hawrylycz, N.~Ao, M.~Ayres, A.~Bensinger, A.~Bernard, A.~F.
  Boe, M.~S. Boguski, K.~S. Brockway, E.~J. Byrnes, et~al.
\newblock Genome-wide atlas of gene expression in the adult mouse brain.
\newblock {\em Nature}, 445(7124):168--176, 2007.

\bibitem{levin2009understanding}
A.~Levin, Y.~Weiss, F.~Durand, and W.~T. Freeman.
\newblock Understanding and evaluating blind deconvolution algorithms.
\newblock In {\em Computer Vision and Pattern Recognition, 2009. CVPR 2009.
  IEEE Conference on}, pages 1964--1971. IEEE, 2009.

\bibitem{mangasarian1965pseudo}
O.~L. Mangasarian.
\newblock Pseudo-convex functions.
\newblock {\em Journal of the Society for Industrial \& Applied Mathematics,
  Series A: Control}, 3(2):281--290, 1965.

\bibitem{Markram:2004ek}
H.~Markram, M.~Toledo-Rodriguez, Y.~Wang, A.~Gupta, G.~Silberberg, and C.~Wu.
\newblock {Interneurons of the neocortical inhibitory system.}
\newblock {\em Nat Rev Neurosci}, 5(10):793--807, Oct. 2004.

\bibitem{mcpherson2001birds}
M.~McPherson, L.~Smith-Lovin, and J.~Cook.
\newblock Birds of a feather: Homophily in social networks.
\newblock {\em Annual Review of Sociology}, pages 415--444, 2001.

\bibitem{McSherry01}
F.~McSherry.
\newblock Spectral partitioning of random graphs.
\newblock In {\em FOCS}, 2001.

\bibitem{Mei06}
N.~Meinshausen and P.~B\"{u}hlmann.
\newblock {High dimensional graphs and variable selection with the lasso}.
\newblock {\em Annals of Statistics}, 34(3):1436--1462, 2006.

\bibitem{website:Boruvka}
Michael.
\newblock Boruvka algorithm parallel implementation cuda, December 2012.

\bibitem{mihalcea2006corpus}
R.~Mihalcea, C.~Corley, and C.~Strapparava.
\newblock Corpus-based and knowledge-based measures of text semantic
  similarity.
\newblock In {\em AAAI}, volume~6, pages 775--780, 2006.

\bibitem{mikolov2013efficient}
T.~Mikolov, K.~Chen, G.~Corrado, and J.~Dean.
\newblock Efficient estimation of word representations in vector space.
\newblock {\em arXiv preprint arXiv:1301.3781}, 2013.

\bibitem{mitchell2010composition}
J.~Mitchell and M.~Lapata.
\newblock Composition in distributional models of semantics.
\newblock {\em Cognitive science}, 34(8):1388--1429, 2010.

\bibitem{mnih2007probabilistic}
A.~Mnih and R.~Salakhutdinov.
\newblock Probabilistic matrix factorization.
\newblock In {\em Advances in Neural Information Processing Systems}, pages
  1257--1264, 2007.

\bibitem{mossel2007distorted}
E.~Mossel.
\newblock Distorted metrics on trees and phylogenetic forests.
\newblock {\em IEEE/ACM Transactions on Computational Biology and
  Bioinformatics (TCBB)}, 4(1):108--116, 2007.

\bibitem{mossel2005learning}
E.~Mossel and S.~Roch.
\newblock Learning nonsingular phylogenies and hidden markov models.
\newblock In {\em Proceedings of the thirty-seventh annual ACM symposium on
  Theory of computing}, pages 366--375. ACM, 2005.

\bibitem{nepusz2008fuzzy}
T.~Nepusz, A.~Petr{\'o}czi, L.~N{\'e}gyessy, and F.~Bazs{\'o}.
\newblock Fuzzy communities and the concept of bridgeness in complex networks.
\newblock {\em Physical Review E}, 77(1):016107, 2008.

\bibitem{oja1985stochastic}
E.~Oja and J.~Karhunen.
\newblock On stochastic approximation of the eigenvectors and eigenvalues of
  the expectation of a random matrix.
\newblock {\em Journal of Mathematical Analysis and Applications},
  106(1):69--84, 1985.

\bibitem{olshausen2002sparse}
B.~A. Olshausen.
\newblock Sparse codes and spikes.
\newblock {\em Probabilistic models of the brain: Perception and neural
  function}, pages 257--272, 2002.

\bibitem{olshausen1997sparse}
B.~A. Olshausen and D.~J. Field.
\newblock Sparse coding with an overcomplete basis set: {A} strategy employed
  by {V}1?
\newblock {\em Vision research}, 37(23):3311--3325, 1997.

\bibitem{oppenheim1997signals}
A.~V. Oppenheim and A.~S. Willsky.
\newblock {\em Signals and systems}.
\newblock Prentice-Hall, 1997.

\bibitem{pearl1988probabilistic}
J.~Pearl.
\newblock {\em Probabilistic reasoning in intelligent systems: networks of
  plausible inference}.
\newblock Morgan Kaufmann, 1988.

\bibitem{pearson1894contributions}
K.~Pearson.
\newblock Contributions to the mathematical theory of evolution.
\newblock {\em Philosophical Transactions of the Royal Society of London. A},
  185:71--110, 1894.

\bibitem{pennington2014glove}
J.~Pennington, R.~Socher, and C.~D. Manning.
\newblock Glove: Global vectors for word representation.
\newblock In {\em EMNLP}, volume~14, pages 1532--1543, 2014.

\bibitem{quirk2004monolingual}
C.~Quirk, C.~Brockett, and W.~B. Dolan.
\newblock Monolingual machine translation for paraphrase generation.
\newblock In {\em EMNLP}, pages 142--149, 2004.

\bibitem{ICML2012Rakhlin_261}
A.~Rakhlin, O.~Shamir, and K.~Sridharan.
\newblock Making gradient descent optimal for strongly convex stochastic
  optimization.
\newblock In {\em ICML}, pages 449--456, 2012.

\bibitem{rattray1998natural}
M.~Rattray, D.~Saad, and S.-i. Amari.
\newblock Natural gradient descent for on-line learning.
\newblock {\em Physical review letters}, 81(24):5461, 1998.

\bibitem{robinson1981comparison}
D.~Robinson and L.~R. Foulds.
\newblock Comparison of phylogenetic trees.
\newblock {\em Mathematical Biosciences}, 53(1):131--147, 1981.

\bibitem{rumelhart1988learning}
D.~E. Rumelhart, G.~E. Hinton, and R.~J. Williams.
\newblock Learning representations by back-propagating errors.
\newblock {\em Cognitive modeling}, 5, 1988.

\bibitem{rus2008paraphrase}
V.~Rus, P.~M. McCarthy, M.~C. Lintean, D.~S. McNamara, and A.~C. Graesser.
\newblock Paraphrase identification with lexico-syntactic graph subsumption.
\newblock In {\em FLAIRS conference}, pages 201--206, 2008.

\bibitem{saad1995line}
D.~Saad and S.~A. Solla.
\newblock On-line learning in soft committee machines.
\newblock {\em Physical Review E}, 52(4):4225, 1995.

\bibitem{salakhutdinov2008bayesian}
R.~Salakhutdinov and A.~Mnih.
\newblock Bayesian probabilistic matrix factorization using markov chain monte
  carlo.
\newblock In {\em Proceedings of the 25th International Conference on Machine
  learning}, pages 880--887. ACM, 2008.

\bibitem{saxe2013exact}
A.~M. Saxe, J.~L. McClelland, and S.~Ganguli.
\newblock Exact solutions to the nonlinear dynamics of learning in deep linear
  neural networks.
\newblock {\em arXiv:1312.6120}, 2013.

\bibitem{schatz2013exploiting}
M.~D. Schatz, T.~M. Low, R.~A. van~de Geijn, and T.~G. Kolda.
\newblock Exploiting symmetry in tensors for high performance.
\newblock {\em arXiv preprint arXiv:1301.7744}, 2013.

\bibitem{shalev2009stochastic}
S.~Shalev-Shwartz, O.~Shamir, K.~Sridharan, and N.~Srebro.
\newblock Stochastic convex optimization.
\newblock In {\em Proceedings of The 22nd Conference on Learning Theory}, 2009.

\bibitem{smola2010architecture}
A.~Smola and S.~Narayanamurthy.
\newblock An architecture for parallel topic models.
\newblock {\em Proceedings of the VLDB Endowment}, 3(1-2):703--710, 2010.

\bibitem{socher2011parsing}
R.~Socher, C.~C. Lin, C.~Manning, and A.~Y. Ng.
\newblock Parsing natural scenes and natural language with recursive neural
  networks.
\newblock In {\em Proceedings of the 28th international conference on machine
  learning (ICML-11)}, pages 129--136, 2011.

\bibitem{socher2013recursive}
R.~Socher, A.~Perelygin, J.~Y. Wu, J.~Chuang, C.~D. Manning, A.~Y. Ng, and
  C.~Potts.
\newblock Recursive deep models for semantic compositionality over a sentiment
  treebank.
\newblock In {\em Proceedings of the conference on empirical methods in natural
  language processing (EMNLP)}, volume 1631, page 1642. Citeseer, 2013.

\bibitem{sokal1962comparison}
R.~R. Sokal and F.~J. Rohlf.
\newblock The comparison of dendrograms by objective methods.
\newblock {\em Taxon}, 11(2):33--40, 1962.

\bibitem{soman2011fast}
J.~Soman and A.~Narang.
\newblock Fast community detection algorithm with gpus and multicore
  architectures.
\newblock In {\em Parallel \& Distributed Processing Symposium (IPDPS), 2011
  IEEE International}, pages 568--579. IEEE, 2011.

\bibitem{strimmer2008fdrtool}
K.~Strimmer.
\newblock fdrtool: a versatile r package for estimating local and tail
  area-based false discovery rates.
\newblock {\em Bioinformatics}, 24(12):1461--1462, 2008.

\bibitem{tai2015improved}
K.~S. Tai, R.~Socher, and C.~D. Manning.
\newblock Improved semantic representations from tree-structured long
  short-term memory networks.
\newblock {\em arXiv preprint arXiv:1503.00075}, 2015.

\bibitem{facebook}
A.~L. Traud, E.~D. Kelsic, P.~J. Mucha, and M.~A. Porter.
\newblock Comparing community structure to characteristics in online collegiate
  social networks.
\newblock SIAM Review, in press (ar{X}iv:0809.0960), 2010.

\bibitem{vineet2009fast}
V.~Vineet, P.~Harish, S.~Patidar, and P.~Narayanan.
\newblock Fast minimum spanning tree for large graphs on the gpu.
\newblock In {\em Proceedings of the Conference on High Performance Graphics
  2009}, pages 167--171. ACM, 2009.

\bibitem{wang2013beyond}
F.~Wang and Y.~Li.
\newblock Beyond physical connections: Tree models in human pose estimation.
\newblock In {\em Proc. of CVPR}, 2013.

\bibitem{wang2012baselines}
S.~Wang and C.~D. Manning.
\newblock Baselines and bigrams: Simple, good sentiment and topic
  classification.
\newblock In {\em Proceedings of the 50th Annual Meeting of the Association for
  Computational Linguistics: Short Papers-Volume 2}, pages 90--94. Association
  for Computational Linguistics, 2012.

\bibitem{wang2015fast}
Y.~Wang, H.-Y. Tung, A.~Smola, and A.~Anandkumar.
\newblock Fast and guaranteed tensor decomposition via sketching.
\newblock In {\em Proc. of NIPS}, 2015.

\bibitem{2013arXiv1312.7869W}
J.~{Wei}, W.~{Dai}, A.~{Kumar}, X.~{Zheng}, Q.~{Ho}, and E.~P. {Xing}.
\newblock {Consistent Bounded-Asynchronous Parameter Servers for Distributed
  ML}.
\newblock {\em ArXiv e-prints}, Dec. 2013.

\bibitem{wieting2015towards}
J.~Wieting, M.~Bansal, K.~Gimpel, and K.~Livescu.
\newblock Towards universal paraphrastic sentence embeddings.
\newblock {\em arXiv preprint arXiv:1511.08198}, 2015.

\bibitem{paraphraseStateofArt}
A.~Wiki.
\newblock Paraphrase identification (state of the art), 2014.

\bibitem{wipf2013revisiting}
D.~Wipf and H.~Zhang.
\newblock Revisiting bayesian blind deconvolution.
\newblock {\em arXiv preprint arXiv:1305.2362}, 2013.

\bibitem{wright1999numerical}
S.~J. Wright and J.~Nocedal.
\newblock {\em Numerical optimization}, volume~2.
\newblock Springer New York, 1999.

\bibitem{yang2012defining}
J.~Yang and J.~Leskovec.
\newblock Defining and evaluating network communities based on ground-truth.
\newblock In {\em Proceedings of the ACM SIGKDD Workshop on Mining Data
  Semantics}, page~3. ACM, 2012.

\bibitem{yu2015learning}
M.~Yu and M.~Dredze.
\newblock Learning composition models for phrase embeddings.
\newblock {\em Transactions of the Association for Computational Linguistics},
  3:227--242, 2015.

\bibitem{zeiler2010deconvolutional}
M.~D. Zeiler, D.~Krishnan, G.~W. Taylor, and R.~Fergus.
\newblock Deconvolutional networks.
\newblock In {\em Computer Vision and Pattern Recognition (CVPR), 2010 IEEE
  Conference on}, pages 2528--2535. IEEE, 2010.

\bibitem{zeisel2015cell}
A.~Zeisel, A.~B. Mu{\~n}oz-Manchado, S.~Codeluppi, P.~L{\"o}nnerberg,
  G.~La~Manno, A.~Jur{\'e}us, S.~Marques, H.~Munguba, L.~He, C.~Betsholtz,
  et~al.
\newblock Cell types in the mouse cortex and hippocampus revealed by
  single-cell rna-seq.
\newblock {\em Science}, 347(6226):1138--1142, 2015.

\bibitem{Zhang:2012:OCD:2339530.2339629}
Y.~Zhang and D.-Y. Yeung.
\newblock Overlapping community detection via bounded nonnegative matrix
  tri-factorization.
\newblock In {\em Proceedings of the 18th ACM SIGKDD International Conference
  on Knowledge Discovery and Data Mining}, KDD '12, pages 606--614, New York,
  NY, USA, 2012. ACM.

\bibitem{zhao2015self}
H.~Zhao, Z.~Lu, and P.~Poupart.
\newblock Self-adaptive hierarchical sentence model.
\newblock {\em arXiv preprint arXiv:1504.05070}, 2015.

\bibitem{zou2013contrastive}
J.~Y. Zou, D.~Hsu, D.~C. Parkes, and R.~P. Adams.
\newblock Contrastive learning using spectral methods.
\newblock In {\em Advances in Neural Information Processing Systems}, pages
  2238--2246, 2013.

\end{thebibliography}

\captionsetup[figure]{list=no}
\captionsetup[table]{list=no}


\appendix

\chapter{Appendix for Online Stochastic Gradient for Tensor Decomposition}

\section{Detailed Analysis for Section~\ref{sec:sgd} in Unconstrained Case}

\label{sec:unconstrained}

In this section we give detailed analysis for noisy gradient descent, under the assumption that the unconstrained problem satisfies $(\alpha,\gamma,\epsilon,\delta)$-\name~property.

The algorithm we investigate in Algorithm~\ref{algo:sgdwn}, we can combine the randomness in the stochastic gradient oracle and the artificial noise, and rewrite the update equation in form:
\begin{equation} \label{SGD_update}
	w_t = w_{t-1} - \eta (\nabla f(w_{t-1}) + \xi_{t-1})
\end{equation}
where $\eta$ is step size, $\xi = SG(w_{t-1}) - \nabla f(w_{t-1}) + n$ (recall $n$ is a random vector on unit sphere) is the combination of two source of noise.

By assumption, we know $\xi$'s are independent and they satisfying $\E \xi = 0$, 
$\|\xi\| \le Q+1$. Due to the explicitly added noise in Algorithm \ref{algo:sgdwn}, 
we further have   $\E \xi\xi^T \succ \frac{1}{d}I$.
For simplicity, we assume $ \E \xi\xi^T = \sigma^2I$, for some constant $\sigma = \tilde{\Theta}(1)$, 
then the algorithm we are running is exactly the same as Stochastic Gradient Descent (SGD).
Our proof can be very easily extended to the case when $\frac{1}{d} I \preceq\E [\xi\xi^T] \preceq (Q+\frac{1}{d})I$ because both the upper and lower bounds are $\tilde{\Theta}(1)$.


We first restate the main theorem in the context of stochastic gradient descent.

\begin{theorem} [Main Theorem]\label{thm:sgdmain_unconstraint}
Suppose a function $f(w):\R^d\to \R$ that is $(\alpha, \gamma, \epsilon, \delta)$-\name, and has a stochastic gradient oracle where the noise satisfy $\E \xi\xi^T = \sigma^2I$. Further, suppose the function is bounded by $|f(w)| \le B$, is $\beta$-smooth and has $\rho$-Lipschitz Hessian. Then there exists a threshold $\eta_{\max} = \tilde{\Theta}(1)$, so that for any $\zeta>0$, and
for any $\eta \le \eta_{\max} / \max\{1, \log (1/\zeta)\}$,
with probability at least $1-\zeta$ in $t = \tlO(\eta^{-2}\log( 1/\zeta))$ iterations, SGD outputs a point $w_t$ that is $\tlO(\sqrt{\eta\log(1/\eta\zeta)})$-close to some local minimum $w^\star$.
\end{theorem}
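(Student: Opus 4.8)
The plan is to combine the three regime-specific lemmas (Lemmas~\ref{lem:gradient}, \ref{lem:saddle}, \ref{lem:minimum}) via a potential/drift argument on the function value $f(w_t)$, exactly as sketched in the proof sketch of Theorem~\ref{thm:sgdmain}. The key observation is that the robust \name~property (Definition~\ref{def:robustcondition}) partitions $\R^d$ into three overlapping regions, and in each of the first two we can guarantee a strict expected decrease in $f$, while in the third we can guarantee the iterate stays trapped near a local minimum. First I would fix $\eta \le \eta_{\max}/\max\{1,\log(1/\zeta)\}$ small enough that the error terms in all three lemmas are controlled, and introduce the coupled-sequence decomposition $w_t = w_{t-1} - \eta(\nabla f(w_{t-1}) + \xi_{t-1})$ with $\E[\xi]=0$, $\|\xi\|\le Q+1$, and $\E[\xi\xi^\top]=\sigma^2 I$.

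\textbf{Main argument (drift to a local minimum).} The core step is a stopping-time/epoch argument. I would argue that as long as $w_t$ is \emph{not} within $\tlO(\sqrt{\eta})$ of a local minimum $w^\star$, it must fall into case~1 or case~2 of Definition~\ref{def:robustcondition}: either $\|\nabla f(w_t)\|\ge\epsilon$ (large gradient) or $\lambda_{\min}(\Hess f(w_t))\le-\gamma$ (negative curvature at a near-stationary point). In the large-gradient case, Lemma~\ref{lem:gradient} gives $\E[f(w_{t+1})]\le f(w_t)-\tlOmega(\eta^2)$ in one step; in the saddle case, Lemma~\ref{lem:saddle} gives $\E[f(w_{t+T})]\le f(w_t)-\tlOmega(\eta)$ over $T\le T_{\max}=\tilde O(1/\eta)$ steps. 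In either case the expected decrease per iteration is at least $\tlOmega(\eta^2)$. Since $f$ is bounded with $\max f - \min f \le 2B$, this decrease cannot persist for more than $\tlO(1/\eta^2)$ iterations (formally, one defines the potential and applies an Azuma/Doob supermartingale bound to convert the expected decrease into a high-probability statement, using $|f|\le B$ and the bounded noise to control the martingale differences). Hence within $\tlO(1/\eta^2)$ steps the iterate becomes $\tlO(\sqrt{\eta})$-close to some local minimum with at least constant probability.

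\textbf{Amplifying the success probability.} Once $w_t$ enters the $\tlO(\sqrt{\eta})$-neighborhood of $w^\star$, Lemma~\ref{lem:minimum} guarantees that it remains $\tlO(\sqrt{\eta\log(1/\eta\zeta)})$-close for the next $\tlO(\eta^{-2}\log(1/\zeta))$ steps with probability $1-\zeta/2$, using local $\alpha$-strong convexity. To boost the constant success probability to $1-\zeta$, I would run the drift argument over $q = O(\log(1/\zeta))$ independent epochs, each of length $\tlO(1/\eta^2)$: in each epoch there is at least a constant probability of reaching the neighborhood, so the probability of failing in all epochs is $\exp(-\Omega(q)) = \zeta/2$. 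Combining the two failure probabilities via a union bound yields the claimed $1-\zeta$ success probability, with total iteration count $t = \tlO(\eta^{-2}\log(1/\zeta))$, and the final closeness of $\tlO(\sqrt{\eta\log(1/\eta\zeta)})$ comes directly from Lemma~\ref{lem:minimum}.

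\textbf{Main obstacle.} The hardest part is unquestionably the saddle-point escape, Lemma~\ref{lem:saddle}, and threading its guarantee into the global argument. The delicacy is that $T$ is itself a random quantity depending on $w_t$, yet it admits a uniform bound $T_{\max}$; one must carefully set up the martingale so that the variable-length excursions do not break the supermartingale structure, and ensure that the coupling between the true SGD trajectory $w$ and the quadratic-model trajectory $\tilde w$ (used to show movement along negative-curvature directions) remains valid on the whole excursion with high probability. The other subtlety is verifying that the three cases of Definition~\ref{def:robustcondition} genuinely cover all iterates at the chosen scale $\tlO(\sqrt\eta)$, so that no iterate escapes the case analysis; this requires matching the constants $C\sqrt\eta$ in Lemmas~\ref{lem:gradient} and~\ref{lem:saddle} against the thresholds $\epsilon,\gamma,\delta$ in the \name~definition, which is where the constraint $\eta\le\eta_{\max}$ is pinned down. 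Full details are deferred to Appendix~\ref{sec:unconstrained}.
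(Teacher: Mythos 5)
Your proposal is correct and follows essentially the same route as the paper's proof: the same three-regime case analysis, the same drift-plus-boundedness argument yielding constant success probability per epoch of length $\tlO(1/\eta^2)$, amplification over $O(\log(1/\zeta))$ epochs, and trapping near the local minimum via Lemma~\ref{lem:minimum} followed by a union bound. The only cosmetic difference is that the paper handles the variable-length saddle excursions with explicit stopping times $\tau_i$ and converts the expected decrease into a constant-probability statement by a plain expectation-plus-boundedness argument (no Azuma needed for that step), reserving martingale concentration for the trapping lemma.
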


Recall that $\tlO(\cdot)$ ($\tilde{\Omega},\tilde{\Theta}$) hides the factor that has
polynomial dependence on all other parameters, but is independent of $\eta$ and $\zeta$. So it focuses on the dependency on $\eta$ and $\zeta$. Throughout the proof, we interchangeably use both $\mathcal{H}(w)$ and $\Hess f(w)$ to represent the Hessian matrix of $f(w)$.

As we discussed in the proof sketch in Section~\ref{sec:sgd}, we analyze the behavior of the algorithm in three different cases. The first case is when the gradient is large.

\begin{lemma} \label{thm::case1}
Under the assumptions of Theorem~\ref{thm:sgdmain_unconstraint}, for any point with $\|\nabla f(w_0)\| \ge \sqrt{2\eta\sigma^2\beta d}$ where $\sqrt{2\eta\sigma^2\beta d} < \epsilon$, after one iteration we have:
\begin{equation}
	\E f(w_1) - f(w_{0}) \le - \tlOmega(\eta^2)
\end{equation}
\end{lemma}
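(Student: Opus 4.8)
The plan is to establish a one-step descent inequality directly from $\beta$-smoothness together with the variance of the stochastic gradient noise, and then show that the prescribed lower bound on $\|\nabla f(w_0)\|$ is calibrated exactly so as to dominate the noise-induced increase.

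First I would invoke $\beta$-smoothness in its quadratic-upper-bound form,
$$f(w_1) \le f(w_0) + \langle \nabla f(w_0), w_1 - w_0\rangle + \frac{\beta}{2}\|w_1 - w_0\|^2,$$
substitute the update $w_1 - w_0 = -\eta(\nabla f(w_0) + \xi_0)$ from \eqref{SGD_update}, and take the expectation over $\xi_0$ conditioned on $w_0$. Using $\E[\xi_0] = 0$ and $\E[\|\xi_0\|^2] = \Tr(\E[\xi_0\xi_0^\top]) = \sigma^2 d$, the cross term $\langle \nabla f(w_0), \xi_0\rangle$ vanishes in expectation while the second-order term contributes $\|\nabla f(w_0)\|^2 + \sigma^2 d$. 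This gives
$$\E f(w_1) - f(w_0) \le -\eta\left(1 - \frac{\beta\eta}{2}\right)\|\nabla f(w_0)\|^2 + \frac{\beta\eta^2\sigma^2 d}{2}.$$

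Next I would fix the threshold $\eta_{\max}$ so that $\eta \le \eta_{\max}$ forces $\beta\eta/2 \le 1/2$ (for instance $\eta_{\max} \le 1/\beta$, which is $\tilde{\Theta}(1)$ since $\beta$ is a fixed problem constant), so the descent coefficient obeys $1 - \beta\eta/2 \ge 1/2$ and hence $\E f(w_1) - f(w_0) \le -\tfrac{\eta}{2}\|\nabla f(w_0)\|^2 + \tfrac{\beta\eta^2\sigma^2 d}{2}$. Finally I would split the gradient term into two halves and apply the hypothesis $\|\nabla f(w_0)\|^2 \ge 2\eta\sigma^2\beta d$ to one half to cancel the noise term exactly, leaving
$$\E f(w_1) - f(w_0) \le -\frac{\eta}{4}\|\nabla f(w_0)\|^2 \le -\frac{\eta^2\sigma^2\beta d}{2} = -\tlOmega(\eta^2),$$
since $\sigma,\beta,d$ are all independent of $\eta$.

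The computation is routine, so there is no serious obstacle; the only genuine point of care — the ``hard part'' insofar as there is one — is the bookkeeping in the last step, where the threshold $\sqrt{2\eta\sigma^2\beta d}$ on the gradient magnitude is chosen precisely so that the guaranteed decrease from the gradient term outweighs the unavoidable variance-driven increase $\tfrac{\beta\eta^2\sigma^2 d}{2}$ that SGD incurs per step. I would also remark that this lemma uses none of the $(\alpha,\gamma,\epsilon,\delta)$-\name~structure; the side condition $\sqrt{2\eta\sigma^2\beta d} < \epsilon$ serves only to interface with case~1 of Definition~\ref{def:robustcondition} when the lemma is later assembled into the proof of Theorem~\ref{thm:sgdmain_unconstraint}.
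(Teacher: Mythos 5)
Your proposal is correct and follows essentially the same argument as the paper's proof: apply the $\beta$-smoothness quadratic bound to the SGD update, use $\E[\xi_0]=0$ and $\E\|\xi_0\|^2=\sigma^2 d$ to obtain $-\eta\bigl(1-\tfrac{\beta\eta}{2}\bigr)\|\nabla f(w_0)\|^2+\tfrac{\beta\eta^2\sigma^2 d}{2}$, restrict $\eta_{\max}<1/\beta$, and invoke the gradient lower bound to conclude $\E f(w_1)-f(w_0)\le -\tfrac{\eta^2\sigma^2\beta d}{2}$. Your half-splitting bookkeeping in the last step is a cosmetic variant of the paper's direct substitution and yields the identical final constant.
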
 

\begin{proof}
Our assumption can guarantee $\eta_{\max} < \frac{1}{\beta}$, then by update equation Eq.(\ref{SGD_update}), we have:
\begin{align}
	\E f(w_1) -  f(w_{0}) &\le \nabla f(w_{0})^T \E(w_1-w_{0}) + \frac{\beta}{2}\E\|w_1-w_{0}\|^2 \nonumber \\
	& = \nabla f(w_{0})^T \E\left (- \eta (\nabla f(w_{0}) + \xi_{0})\right )
	 + \frac{\beta}{2}\E\left \|- \eta (\nabla f(w_{0}) + \xi_{0})\right \|^2 \nonumber \\	
	& = -(\eta - \frac{\beta\eta^2}{2})\|\nabla f(w_{0})\|^2 + \frac{\eta^2 \sigma^2 \beta d}{2}\nonumber \\
	& \le -\frac{\eta}{2}\|\nabla f(w_{0})\|^2  + \frac{\eta^2\sigma^2 \beta d}{2}
	\le -\frac{\eta^2\sigma^2 \beta d}{2}
\end{align}
which finishes the proof.
\end{proof}

\begin{lemma}\label{thm::case3}
Under the assumptions of Theorem~\ref{thm:sgdmain_unconstraint}, for any initial point $w_0$ that is $\tlO(\sqrt{\eta}) < \delta$ close to a local minimum $w^\star$, 
with probability at least $1-\zeta/2$, we have following holds simultaneously:
\begin{equation}
 \forall t\le \tlO(\frac{1}{\eta^2}\log \frac{1}{\zeta}), \quad \|w_{t} - w^\star\| \le \tlO(\sqrt{\eta\log \frac{1}{\eta\zeta}})<\delta 
\end{equation}
where $w^\star$ is the locally optimal point.
\end{lemma}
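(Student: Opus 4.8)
The plan is to track the squared distance $\|\Delta_t\|^2$ where $\Delta_t := w_t - w^\star$, and to exploit the local strong convexity guaranteed by the third clause of Definition~\ref{def:robustcondition}: inside the $2\delta$-ball around $w^\star$ the function is $\alpha$-strongly convex, so (using $\nabla f(w^\star)=0$) we have $\langle \nabla f(w_{t-1}), \Delta_{t-1}\rangle \ge \alpha\|\Delta_{t-1}\|^2$ whenever $\|\Delta_{t-1}\|\le 2\delta$. First I would substitute the update $w_t = w_{t-1}-\eta(\nabla f(w_{t-1})+\xi_{t-1})$, expand $\|\Delta_t\|^2$, and take conditional expectation with respect to $\mathcal{F}_{t-1}$. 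Using $\E[\xi_{t-1}\mid\mathcal{F}_{t-1}]=0$, $\E[\xi_{t-1}\xi_{t-1}^\top]=\sigma^2 I$, and the smoothness bound $\|\nabla f(w_{t-1})\|\le\beta\|\Delta_{t-1}\|$, this yields the one-step contraction
\begin{equation}
\E[\|\Delta_t\|^2\mid\mathcal{F}_{t-1}] \le (1-\eta\alpha)\|\Delta_{t-1}\|^2 + \eta^2\sigma^2 d,
\end{equation}
valid on the event $\{\|\Delta_{t-1}\|\le 2\delta\}$ once $\eta\le\eta_{\max}$ is small enough to absorb the $\eta^2\beta^2$ term into the $(1-\eta\alpha)$ factor. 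The fixed point of this recursion is $\eta\sigma^2 d/\alpha=\tlO(\eta)$, which already identifies $\sqrt{\eta}$ as the correct fluctuation scale.

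The difficulty is that this controls only the mean, whereas the statement demands that $\|\Delta_t\|\le\tlO(\sqrt{\eta\log(1/\eta\zeta)})$ hold \emph{simultaneously} for all $t\le T:=\tlO(\eta^{-2}\log(1/\zeta))$, and the contraction inequality is only available while the iterate has not yet escaped the strongly convex region. To break this circularity I would introduce the stopping time $\tau$, the first index at which $\|\Delta_t\|>r$ for a target radius $r=\tlO(\sqrt{\eta\log(1/\eta\zeta)})$ fixed below; for small $\eta$ we have $r<\delta<2\delta$, so up to time $\tau$ the contraction above holds at every step. It then suffices to show $\Pbb[\tau\le T]\le\zeta/2$, since on $\{\tau>T\}$ the desired bound holds for the whole horizon, and the initial condition $\|\Delta_0\|=\tlO(\sqrt{\eta})<r$ places us inside the ball at the start.

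Next I would pass to the accumulated-noise martingale. Writing $\|\Delta_t\|^2$ as its expected-contraction value plus a sum of martingale differences, the dominant difference term is the cross term $-2\eta\langle\xi_{t-1},\Delta_{t-1}\rangle$, while the curvature and $\eta^2\|\xi\|^2$ pieces are lower order. On the stopped event these increments are bounded by $\tlO(\eta r)$ (using $\|\xi\|\le Q+1$ and $\|\Delta\|\le r$), and their conditional second moments sum to $\tlO(T\eta^2 r^2)$. Applying a Freedman/Bernstein-type maximal inequality for the stopped martingale over the $T$ steps, with the deterministic shrinkage $(1-\eta\alpha)$ pulling the process back, gives an escape probability that is exponentially small in $r^2/(\eta\log T)$. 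Choosing $r^2=\Theta(\eta\log(1/\eta\zeta))$ makes this at most $\zeta/2$; the $\log(1/\eta\zeta)$ factor is precisely the price of a maximal/union bound over the horizon $T=\tlO(\eta^{-2})$ together with the $\zeta$ failure budget.

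The main obstacle is this last, self-consistent balancing: the radius $r$ must be small enough to sit inside the strongly convex ball (so the contraction is legitimate) yet large enough that the $\tlO(\sqrt{\eta})$-scale Ornstein--Uhlenbeck-type fluctuations, maximized over a polynomially long horizon, do not push the iterate past $r$ except with probability $\zeta/2$. Getting the martingale concentration to yield exactly the $\sqrt{\log(1/\eta\zeta)}$ inflation of the scale, rather than a worse polynomial factor, is the delicate part, and is where the bounded-noise assumption $\|\xi\|\le Q+1$ and the stopping-time truncation are essential. The remaining bookkeeping (absorbing the smoothness term into the contraction, verifying $r<\delta$, and controlling the lower-order increment terms) is routine.
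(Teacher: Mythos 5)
Your proposal follows essentially the same route as the paper's proof: the paper likewise derives the one-step contraction $\E[\|w_t-w^\star\|^2\mid\mathfrak{F}_{t-1}]\le(1-\eta\alpha)\|w_{t-1}-w^\star\|^2+\eta^2\sigma^2 d$ from local strong convexity plus smoothness, truncates by the event that all earlier iterates stay within radius $\mu\sqrt{\eta\log(1/\eta\zeta)}$ (your stopping time, implemented there as an indicator $1_{\mathfrak{E}_{t-1}}$), and finishes with martingale concentration and a union bound over the $\tlO(\eta^{-2}\log(1/\zeta))$ horizon. The only difference is the concentration device: the paper absorbs the geometric shrinkage by rescaling, defining the supermartingale $G_t=\max\{(1-\eta\alpha)^{-t}(\|w_t-w^\star\|^2-\eta d\sigma^2/\alpha),0\}$ and applying Azuma with bounded differences, which is precisely the mechanism you gesture at when you observe that the raw stopped martingale's accumulated variance $\tlO(T\eta^2 r^2)$ is too large and that the $(1-\eta\alpha)$ decay must cut the effective variance down to $\tlO(\eta r^2)$ per time step.
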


\begin{proof}
We shall construct a supermartingale and use Azuma's inequality~\cite{azuma1967weighted} to prove this result.

Let filtration $\mathfrak{F}_t = \sigma\{\xi_0, \cdots \xi_{t-1}\}$, and note $\sigma\{\Delta_0, \cdots, \Delta_t \} \subset \mathfrak{F}_t$, where $\sigma\{\cdot\}$ denotes the sigma field.
Let event $\mathfrak{E}_t = \{\forall \tau \le t, \|w_{\tau} - w^\star\| \le \mu\sqrt{\eta\log \frac{1}{\eta\zeta}} < \delta \}$, where $\mu$ is independent of $(\eta, \zeta)$, and will be specified later. To ensure the correctness of proof, $\tilde{O}$ notation in this proof will never hide any dependence on $\mu$.
Clearly there's always a small enough choice of $\eta_{\max}= \tilde{\Theta}(1)$ to make $\mu\sqrt{\eta\log \frac{1}{\eta\zeta}} < \delta$ holds as long as 
$\eta \le \eta_{\max} / \max\{1, \log (1/\zeta)\}$.
Also note $\mathfrak{E}_t \subset \mathfrak{E}_{t-1}$, that is $1_{\mathfrak{E}_t} \le 1_{\mathfrak{E}_{t-1}}$.

By Definition~\ref{def:robustcondition} of 
$(\alpha, \gamma, \epsilon, \delta)$-\name, we know $f$ is locally $\alpha$-strongly convex
in the $2\delta$-neighborhood of $w^\star$.
Since $\nabla f(w^\star) = 0$, we have
\begin{align}
\nabla f(w_t)^T (w_t - w^\star)1_{\mathfrak{E}_{t}}  \ge \alpha \|w_t - w^\star\|^21_{\mathfrak{E}_{t}}
\end{align}

Furthermore, with  $\eta_{\max} < \frac{\alpha}{\beta^2}$, using $\beta$-smoothness, we have:
\begin{align}
\E[\|w_{t} - w^\star\|^21_{\mathfrak{E}_{t-1}}|\mathfrak{F}_{t-1}] 
= &\E[\|w_{t-1} - \eta (\nabla f(w_{t-1}) + \xi_{t-1}) - w^\star\|^2|\mathfrak{F}_{t-1}]1_{\mathfrak{E}_{t-1}} \nonumber \\
 = &\left[\|w_{t-1}- w^\star\|^2 - 2\eta \nabla f(w_{t-1})^T(w_{t-1}- w^\star)  \right. \nonumber\\
 &\left.+ \eta^2\|\nabla f(w_{t-1})\|^2 +\eta^2d\sigma^2\right]1_{\mathfrak{E}_{t-1}}  \nonumber \\
  \le&[(1-2\eta \alpha + \eta^2 \beta^2)\|w_{t-1}- w^\star\|^2 + \eta^2d\sigma^2]1_{\mathfrak{E}_{t-1}} \nonumber \\
 \le &[(1-\eta \alpha)\|w_{t-1}- w^\star\|^2 + \eta^2d\sigma^2]1_{\mathfrak{E}_{t-1}}
\end{align}
Therefore, we have:
\begin{equation}
\left[\E[\|w_{t} - w^\star\|^2|\mathfrak{F}_{t-1}] - \frac{\eta d\sigma^2}{ \alpha} \right]1_{\mathfrak{E}_{t-1}} \le (1-\eta \alpha) \left[\|w_{t-1} - w^\star\|^2 -\frac{\eta d\sigma^2}{ \alpha} \right]1_{\mathfrak{E}_{t-1}}
\end{equation}

Then, let $G_t = \max\{(1-\eta\alpha)^{-t}(\|w_{t} - w^\star\|^2 - \frac{\eta d\sigma^2}{ \alpha}), 0\}$, we have:
\begin{equation}
\E [G_t1_{\mathfrak{E}_{t-1}} | \mathfrak{F}_{t-1}] \le G_{t-1}1_{\mathfrak{E}_{t-1}} \le G_{t-1}1_{\mathfrak{E}_{t-2}}
\end{equation}
which means $G_t1_{\mathfrak{E}_{t-1}}$ is a supermartingale.

Therefore, with probability 1, we have:
\begin{align}
&|G_t1_{\mathfrak{E}_{t-1}} -\E[G_{t}1_{\mathfrak{E}_{t-1}}|\mathfrak{F}_{t-1}] | \nonumber \\
\le & (1-\eta\alpha)^{-t}
[~ \|w_{t-1} - \eta \nabla f(w_{t-1}) - w^\star\|\cdot \eta\|\xi_{t-1}\| + \eta^2\|\xi_{t-1}\|^2 + \eta^2d\sigma^2~]1_{\mathfrak{E}_{t-1}} \nonumber \\
\le & (1-\eta\alpha)^{-t}\cdot \tlO(\mu\eta^{1.5}\log^{\frac{1}{2}} \frac{1}{\eta\zeta}) = d_t
\end{align}

Let 
\begin{equation}
c_t = \sqrt{\sum_{\tau=1}^t d_\tau^2 } =  \tlO(\mu\eta^{1.5}\log^{\frac{1}{2}}  \frac{1}{\eta\zeta})\sqrt{\sum_{\tau=1}^t(1-\eta\alpha)^{-2\tau} } 
\end{equation}
By Azuma's inequality, with probability less than $\tlO(\eta^3\zeta)$,
we have:
\begin{align}
G_t1_{\mathfrak{E}_{t-1}}  > \tlO(1)c_t \log^{\frac{1}{2}} (\frac{1}{\eta\zeta}) + G_0
\end{align}

We know $G_t  > \tlO(1)c_t \log^{\frac{1}{2}}(\frac{1}{\eta\zeta}) + G_0$ is equivalent to:
\begin{align}
\|w_{t} - w^\star\|^2 > \tlO(\eta)
+ \tlO(1) (1-\eta\alpha)^{t}c_t \log^{\frac{1}{2}} (\frac{1}{\eta\zeta})
\end{align}
We know:
\begin{align}
&(1-\eta\alpha)^{t}c_t\log^{\frac{1}{2}} (\frac{1}{\eta\zeta}) =  \mu\cdot \tlO(\eta^{1.5}\log \frac{1}{\eta\zeta})\sqrt{\sum_{\tau=1}^t(1-\eta\alpha)^{2(t-\tau)} } \nonumber \\
= & \mu\cdot \tlO(\eta^{1.5}\log \frac{1}{\eta\zeta})\sqrt{\sum_{\tau=0}^{t-1}(1-\eta\alpha)^{2\tau} }  
\le   \mu\cdot \tlO(\eta^{1.5}\log \frac{1}{\eta\zeta})\sqrt{\frac{1}{1-(1-\eta\alpha)^2}} \nonumber\\
&= \mu\cdot \tlO(\eta\log \frac{1}{\eta\zeta})
\end{align}
This means Azuma's inequality implies, there exist some $\tilde{C} = \tlO(1)$ so that:
\begin{align}
	P\left(\mathfrak{E}_{t-1} \cap\left\{\|w_{t} - w^\star\|^2 > \mu \cdot \tilde{C}\eta\log\frac{1}{\eta\zeta}) \right\} \right)
	\le \tlO(\eta^3\zeta)
\end{align}
By choosing $\mu > \tilde{C}$, this is equivalent to:
\begin{align}
	P\left(\mathfrak{E}_{t-1} \cap\left\{\|w_{t} - w^\star\|^2 > \mu^2\eta\log\frac{1}{\eta\zeta} \right\} \right)
	\le \tlO(\eta^3\zeta)
\end{align}
Then we have:
\begin{align}
	P(\overline{\mathfrak{E}}_{t} ) = 
	P\left(\mathfrak{E}_{t-1} \cap\left\{\|w_{t} - w^\star\| > \mu\sqrt{\eta\log\frac{1}{\eta\zeta}} \right\} \right) 	+ P(\overline{\mathfrak{E}}_{t-1} ) 
	\le  \tlO(\eta^3\zeta)+ P(\overline{\mathfrak{E}}_{t-1} )
\end{align}
By initialization conditions, we know $P(\overline{\mathfrak{E}}_{0} ) = 0$, and thus
$P(\overline{\mathfrak{E}}_{t} )
\le t \tlO(\eta^3\zeta)$. Take $t=\tlO(\frac{1}{\eta^2}\log\frac{1}{\zeta})$, 
we have $P(\overline{\mathfrak{E}}_{t} ) \le \tlO(\eta \zeta \log \frac{1}{\zeta})$.
When $\eta_{\max} = \tlO(1)$ is chosen small enough, and $\eta \le \eta_{\max}/\log(1/\zeta)$, this finishes the proof.
\end{proof}

\begin{lemma} \label{thm::case2}
Under the assumptions of Theorem~\ref{thm:sgdmain_unconstraint}, for any initial point $w_0$ where $\|\nabla f(w_0)\|$ $ \le \sqrt{2\eta\sigma^2\beta d} < \epsilon$, and $\lambda_{\min}(\mathcal{H}(w_0)) \le -\gamma$, then 
there is a number of steps $T$ that depends on $w_0$ such that:
\begin{equation}
	\E f(w_T) - f(w_0) \le - \tlOmega(\eta)
\end{equation}
The number of steps $T$ has a fixed upper bound $T_{max}$ that is independent of $w_0$ where $T \le T_{max} = O((\log d)/\gamma\eta)$.
\end{lemma}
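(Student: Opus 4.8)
This is the key saddle-point escape lemma (Lemma~\ref{thm::case2}). The plan is to couple the true SGD trajectory with an auxiliary sequence governed by the local quadratic approximation around $w_0$, show the quadratic sequence makes progress along the most negative eigendirection of the Hessian, and then control the coupling error using smoothness of the gradient and the Lipschitz property of the Hessian.

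First I would set up the local second-order model $\tilde{f}(w) = f(w_0) + \nabla f(w_0)^\top(w-w_0) + \frac{1}{2}(w-w_0)^\top \mathcal{H}(w_0)(w-w_0)$, and define a coupled sequence $\tilde{w}_t$ that runs SGD on $\tilde{f}$ using \emph{the same noise samples} $\xi_t$ as the true sequence, with $\tilde{w}_0 = w_0$. Writing $\mathcal{H} := \mathcal{H}(w_0)$, the update for $\tilde{w}$ is linear, so $\tilde{w}_t - w_0$ can be written explicitly as a sum involving powers of $(I - \eta \mathcal{H})$ acting on the initial gradient and on the accumulated noise. The point is that along the eigendirection with eigenvalue $\lambda_{\min}(\mathcal{H}) \le -\gamma$, the factor $(1+\eta\gamma)^t$ grows, and because $\E[\xi\xi^\top] = \sigma^2 I$ guarantees nonnegligible variance in every direction, the expected squared component of $\tilde{w}_t - w_0$ in the negative-curvature subspace grows geometrically. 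I would compute $\E[\tilde{f}(\tilde{w}_t)] - f(w_0)$ directly from this linear recursion and show it decreases by $\tlOmega(\eta)$ once $t$ reaches $T = \Theta((\log d)/(\gamma\eta))$; the $\log d$ arises because the noise must first populate the negative direction to magnitude $\sim \sqrt{\eta/d}$, and then amplify it to constant order, requiring $\Theta(\log d / (\gamma\eta))$ steps of geometric growth.

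Next I would bound the discrepancy between the two sequences. Define $\Delta_t := w_t - \tilde{w}_t$. Subtracting the two update equations, the noise terms cancel (same $\xi_t$), leaving a deterministic drift driven by $\nabla f(w_t) - \nabla \tilde{f}(w_t) = \nabla f(w_t) - \nabla f(w_0) - \mathcal{H}(w_0)(w_t - w_0)$, whose norm is $O(\rho \|w_t - w_0\|^2)$ by the $\rho$-Lipschitz Hessian assumption. So $\Delta_t$ satisfies a recursion $\Delta_{t+1} = (I - \eta\mathcal{H})\Delta_t + O(\eta\rho\|w_t - w_0\|^2)$, and as long as the iterates stay within a ball of radius $\tlO(\sqrt{\eta})$... actually a larger ball of constant-order radius is needed for the amplification, so the bookkeeping here is delicate. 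I would track $\|\Delta_t\|$ and $\|w_t - w_0\|$ jointly, using a Martingale/Azuma argument (as in Lemma~\ref{thm::case3}) to guarantee that, with high probability, the genuine trajectory stays close enough to $w_0$ for the quadratic approximation to remain valid throughout the $T$ steps, so that $\tilde{f}(\tilde{w}_t) \approx f(w_t)$ up to $o(\eta)$ error.

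The main obstacle will be this coupling/confinement step: the quadratic approximation is only accurate in a neighborhood, yet the whole mechanism relies on the iterate being \emph{pushed out} along the negative direction. I must show the escape happens before the approximation breaks down, i.e. that $\|w_t - w_0\|$ reaches constant order (triggering the $\tlOmega(\eta)$ function decrease via the negative curvature) while the cubic coupling error $\sum \rho\|w_\tau - w_0\|^2$ stays controlled. Balancing the geometric growth rate $(1+\eta\gamma)$ against the accumulation of Hessian-Lipschitz error, and choosing $\eta_{\max} = \tilde{\Theta}(1)$ small enough and $T_{\max} = O((\log d)/(\gamma\eta))$ as a uniform (state-independent) cutoff, is where the careful analysis lies; the stopping time $T$ itself depends on how fast the particular $w_0$ escapes, but must admit the fixed upper bound $T_{\max}$. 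Once confinement and the expected decrease of $\tilde{f}$ are established, combining them yields $\E f(w_T) - f(w_0) \le -\tlOmega(\eta)$, completing the proof.
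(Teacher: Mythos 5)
Your overall architecture is the same as the paper's: a coupled sequence $\tilde{w}_t$ run on the local quadratic model $\tilde{f}$ with the same noise, an explicit solution of the resulting linear recursion, control of the coupling error $w_t-\tilde{w}_t$ via the $\rho$-Lipschitz Hessian plus a supermartingale/Azuma argument, and a final Taylor expansion combining the two (this is exactly Lemmas~\ref{lem::case_Gaussian} and~\ref{lem::saddle_and_maximum} and the proof of Lemma~\ref{thm::case2}).

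However, your third paragraph contains a genuine error that, if pursued, would sink the proof: you claim the mechanism requires $\|w_t-w_0\|$ to be ``pushed out'' to \emph{constant order}, and hence that confinement must be established over a constant-order ball. Neither is true, and neither is possible within the allotted horizon. With $T\le T_{\max}=O((\log d)/\gamma\eta)$, the amplification factor is only $(1+\eta\gamma_0)^T=O(\mathrm{poly}(d))$, and the paper proves that with high probability $\|\tilde{w}_t-w_0\|\le \tlO(\sqrt{\eta}\log(1/\eta))$ for all $t\le T$; the iterate never leaves an $\tlO(\sqrt{\eta})$-ball. The $-\tlOmega(\eta)$ decrease instead comes from an \emph{exact} computation of $\E\bigl[\nabla f(w_0)^T\tilde{\delta}+\tfrac{1}{2}\tilde{\delta}^T\mathcal{H}\tilde{\delta}\bigr]$ where $\tilde{\delta}=\tilde{w}_T-w_0$: the gradient term is nonpositive in expectation; the positive-curvature directions contribute at most $(d-1)\eta\sigma^2/2$ since $\sum_{\tau}(1-\eta\lambda_i)^{2\tau}\le 1/(\eta\lambda_i)$; and the negative direction contributes at most $-\tfrac{1}{2}\gamma_0\,\eta^2\sigma^2\sum_{\tau=0}^{T-1}(1+\eta\gamma_0)^{2\tau}\le -d\eta\sigma^2/2$, where $T$ is chosen deterministically (given $\gamma_0=-\lambda_{\min}(\mathcal{H}(w_0))$) as the first time $\sum_{\tau=0}^{T-1}(1+\eta\gamma_0)^{2\tau}\ge d/(\eta\gamma_0)$. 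The net is $\le-\eta\sigma^2/2=-\tlOmega(\eta)$, i.e., the variance amplification along the negative direction beats the positive directions while the displacement stays at scale $\tlO(\sqrt{\eta}\cdot\mathrm{poly}(d))$. This is also why the error terms close: in an $\tlO(\sqrt{\eta})$-ball the cubic error $\rho\|w_T-w_0\|^3=\tlO(\eta^{1.5})$ and the coupling error $\|w_t-\tilde{w}_t\|\le\tlO(\eta\log^2(1/\eta))$ are lower order than $\eta$, whereas over a constant-order ball neither the quadratic approximation nor the coupling could be controlled. Relatedly, $T$ is not a random ``escape time'' of the trajectory; it depends on $w_0$ only through $\gamma_0$ via the deterministic condition above, which is what makes the uniform bound $T\le T_{\max}$ immediate.
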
 

\begin{remark}
In general, if we relax the assumption $\E \xi\xi^T  = \sigma^2I$ to 
$ \sigma_{\min}^2I\preceq\E \xi\xi^T \preceq \sigma_{\max}^2I$, the upper bound $T_{max}$ of number of steps required in 
Lemma \ref{thm::case2} would be increased to $T_{max} = O(\frac{1}{\gamma\eta}(\log d+ \log \frac{\sigma_{\max}}{\sigma_{\min}}))$
\end{remark}
As we described in the proof sketch, the main idea is to consider a coupled update sequence that correspond to the local second-order approximation of $f(x)$ around $w_0$. We characterize this sequence of update in the next lemma.
%
%

%
%
%
%

\begin{lemma} \label{lem::case_Gaussian}
Under the assumptions of Theorem~\ref{thm:sgdmain_unconstraint}.
Let $\tilde{f}$ defined as local second-order approximation of $f(x)$ around $w_0$:
\begin{equation}\label{def_tilde_f}
\tilde{f}(w) \doteq f(w_0) + \nabla f(w_0)^T (w-w_0) + \frac{1}{2}(w-w_0)^T\mathcal{H}(w_0)(w-w_0)
\end{equation}
$\{\tilde{w}_t\}$ be the corresponding sequence generated by running SGD on function $\tilde{f}$, with $\tilde{w}_0 = w_0$.
For simplicity, denote $\mathcal{H} = \mathcal{H}(w_0)= \Hess f(w_0)$,
then we have analytically:
\begin{align}
&\nabla \tilde{f}(\tilde{w}_t)= (1-\eta\mathcal{H})^t\nabla f(w_0) -\eta \mathcal{H}\sum_{\tau=0}^{t-1}(1-\eta\mathcal{H})^{t-\tau-1}\xi_{\tau}\\
 &\tilde{w}_{t} - w_0 = -\eta \sum_{\tau = 0}^{t-1}(1-\eta \mathcal{H})^\tau \nabla f(w_0) -\eta
	\sum_{\tau=0}^{t-1}(1-\eta\mathcal{H})^{t-\tau-1}\xi_{\tau}  \label{dif_x}
\end{align}

Furthermore, for any initial point $w_0$ where $\|\nabla f(w_0)\| \le \tlO(\eta) < \epsilon$, and $\lambda_{\min}(\mathcal{H}(w_0)) = -\gamma_0$. 
Then, there exist a $T \in \mathbb{N}$ satisfying:
\begin{equation}\label{choose_t}
 \frac{d}{\eta\gamma_0} \le \sum_{\tau=0}^{T-1}(1+\eta \gamma_0)^{2\tau} < \frac{3d}{\eta\gamma_0}
\end{equation}
with probability at least $1-\tlO(\eta^3)$, we have
following holds simultaneously for all $t\le T$:
\begin{equation}
\|\tilde{w}_t - w_0\| \le \tlO(\eta^{\frac{1}{2}}\log \frac{1}{\eta}); 
\quad\quad
\|\nabla\tilde{f}(\tilde{w}_t)\| \le \tlO(\eta^{\frac{1}{2}}\log \frac{1}{\eta})
\end{equation}
\end{lemma}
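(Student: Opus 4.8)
The plan is to first establish the two closed-form identities by recognizing that running SGD on the quadratic $\tilde{f}$ produces a \emph{linear} recursion. Since $\nabla\tilde{f}(w) = \nabla f(w_0) + \mathcal{H}(w - w_0)$, setting $u_t := \tilde{w}_t - w_0$ and substituting into the update $\tilde{w}_t = \tilde{w}_{t-1} - \eta(\nabla\tilde{f}(\tilde{w}_{t-1}) + \xi_{t-1})$ gives $u_t = (I - \eta\mathcal{H})u_{t-1} - \eta\nabla f(w_0) - \eta\xi_{t-1}$ with $u_0 = 0$. A one-line induction unrolls this into the stated expression for $\tilde{w}_t - w_0$. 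The gradient formula then follows by plugging $u_t$ into $\nabla\tilde{f}(\tilde{w}_t) = \nabla f(w_0) + \mathcal{H}u_t$ and applying the telescoping identity $\eta\mathcal{H}\sum_{\tau=0}^{t-1}(I-\eta\mathcal{H})^\tau = I - (I-\eta\mathcal{H})^t$, which collapses the deterministic part to $(I-\eta\mathcal{H})^t\nabla f(w_0)$.

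For the existence of $T$, I would treat $S(t) := \sum_{\tau=0}^{t-1}(1+\eta\gamma_0)^{2\tau} = \frac{(1+\eta\gamma_0)^{2t}-1}{(1+\eta\gamma_0)^2 - 1}$ as a strictly increasing sequence with $S(0)=0$ and $S(t)\to\infty$. Letting $T$ be the first index with $S(T) \ge d/(\eta\gamma_0)$, the jump satisfies $S(T) - S(T-1) = (1+\eta\gamma_0)^{2(T-1)} = S(T-1)[(1+\eta\gamma_0)^2-1] + 1 \le \frac{d}{\eta\gamma_0}(2\eta\gamma_0 + \eta^2\gamma_0^2) + 1$, which is $O(d)$ for $\eta$ small; since the target window has width $2d/(\eta\gamma_0) \gg O(d)$, this $T$ lands inside $[d/(\eta\gamma_0), 3d/(\eta\gamma_0))$. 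The same bound $S(T) < 3d/(\eta\gamma_0)$ inverts to $(1+\eta\gamma_0)^{2T} \le 1 + 6d$, yielding $T \le \frac{\log(1+6d)}{2\log(1+\eta\gamma_0)} = O(\log d/(\eta\gamma_0))$, exactly the $T_{max}$ used in Lemma~\ref{thm::case2}.

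Next I would bound the two pieces of $\tilde{w}_t - w_0$ for every $t \le T$. Diagonalizing $\mathcal{H}$ (eigenvalues in $[-\gamma_0,\beta]$, so $\|I - \eta\mathcal{H}\| \le 1 + \eta\gamma_0$ once $\eta\beta \le 1$), the window condition gives $(1+\eta\gamma_0)^t \le (1+\eta\gamma_0)^T \le \sqrt{1+6d} = O(\sqrt{d})$. The deterministic part is then at most $\eta\|\nabla f(w_0)\|\cdot\frac{(1+\eta\gamma_0)^t}{\eta\gamma_0} = \tlO(\eta)$, using $\|\nabla f(w_0)\| \le \tlO(\eta)$. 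For the stochastic part $\Xi_t := -\eta\sum_{\tau=0}^{t-1}(I-\eta\mathcal{H})^{t-\tau-1}\xi_\tau$, independence and $\E\xi\xi^\top = \sigma^2 I$ give $\E\|\Xi_t\|^2 = \eta^2\sigma^2\sum_{\tau=0}^{t-1}\Tr[(I-\eta\mathcal{H})^{2(t-1-\tau)}] \le \eta^2\sigma^2 d\, S(t) \le \eta^2\sigma^2 d\cdot\frac{3d}{\eta\gamma_0} = \tlO(\eta)$. Thus the iterate stays within $\tlO(\eta^{1/2})$ in expectation; an identical eigenvalue computation (now carrying the extra factor $\lambda_j^2$, split into $\lambda_j>0$ with $\sum_s(1-\eta\lambda_j)^{2s}\le 1/(\eta\lambda_j)$ and $\lambda_j<0$ controlled by $S(t)$) shows $\E\|\nabla\tilde{f}(\tilde{w}_t)\|^2 = \tlO(\eta)$ as well.

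The last and most delicate step is upgrading these second-moment bounds into a \emph{uniform-over-$t$} high-probability statement with failure probability $\tlO(\eta^3)$. For each fixed $t$, $\Xi_t$ is the terminal value of the martingale $s \mapsto -\eta\sum_{\tau=0}^{s-1}(I-\eta\mathcal{H})^{t-\tau-1}\xi_\tau$, whose increments are bounded by $\eta(1+\eta\gamma_0)^{t-\tau-1}(Q+1) = \tlO(\eta\sqrt{d})$ and whose summed squared-increment bound is $\eta^2(Q+1)^2 S(t) = \tlO(\eta)$; a vector Azuma--Hoeffding inequality therefore gives $\|\Xi_t\| \le \tlO(\sqrt{\eta\log(1/\eta\zeta')})$ except with probability $\zeta'$. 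Taking $\zeta' = \tlO(\eta^4)$ and union-bounding over the $T = \tlO(1/\eta)$ time steps yields the claimed simultaneous bound $\tlO(\eta^{1/2}\log(1/\eta))$ with total failure probability $\tlO(\eta^3)$, and the gradient bound follows verbatim from its own martingale representation. I expect this to be the main obstacle: the coefficients $(I-\eta\mathcal{H})^{t-\tau-1}$ amplify early noise exponentially along the negative-curvature directions, so the calibration of $T$ through the window $[d/(\eta\gamma_0),3d/(\eta\gamma_0))$ is precisely what keeps the accumulated martingale variance at $\tlO(\eta)$ and hence keeps the iterate inside the region where the quadratic model $\tilde{f}$ is a faithful proxy for $f$ --- while still letting it drift far enough along the most-negative eigendirection to drive the eventual function decrease in Lemma~\ref{thm::case2}.
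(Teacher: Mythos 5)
Your proposal is correct and follows essentially the same route as the paper's proof: unrolling the linear SGD recursion on the quadratic $\tilde{f}$ to get the closed forms, choosing $T$ as the first time the geometric sum crosses $d/(\eta\gamma_0)$ and bounding the single-step jump to land in the window, then controlling the noise sum via the variance bound $\eta^2 Q^2 \sum_\tau (1+\eta\gamma_0)^{2\tau} = \tlO(\eta)$ together with Hoeffding-type concentration and a union bound over $t \le T = \tlO(1/\eta)$ to get failure probability $\tlO(\eta^3)$. The only cosmetic difference is that the paper applies scalar Hoeffding coordinate-wise and union-bounds over the $d$ coordinates, whereas you invoke a vector Azuma--Hoeffding inequality directly; both yield the same bound.
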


\begin{proof}
Denote $\mathcal{H} = \mathcal{H}(w_0)$, since $\tilde{f}$ is quadratic, clearly we have:
\begin{equation}\label{derivative_tilde_recursive}
\nabla \tilde{f}(\tilde{w}_t) = \nabla \tilde{f}(\tilde{w}_{t-1}) + \mathcal{H} (\tilde{w}_t - \tilde{w}_{t-1})
\end{equation}
Substitute the update equation of SGD in Eq.(\ref{derivative_tilde_recursive}), we have:
\begin{align} \label{derivative_tilde}
\nabla \tilde{f}(\tilde{w}_t) &= \nabla \tilde{f}(\tilde{w}_{t-1}) - \eta\mathcal{H} ( \nabla \tilde{f}(\tilde{w}_{t-1}) + \xi_{t-1})\nonumber \\
				&= (1-\eta\mathcal{H})\nabla \tilde{f}(\tilde{w}_{t-1}) -\eta \mathcal{H}\xi_{t-1}  \nonumber\\
				&= (1-\eta\mathcal{H})^2\nabla \tilde{f}(\tilde{w}_{t-2}) -\eta \mathcal{H}\xi_{t-1}
				-\eta  \mathcal{H}(1-\eta\mathcal{H})\xi_{t-2}=\cdots\nonumber\\
				& = (1-\eta\mathcal{H})^t\nabla f(w_0) - \eta \mathcal{H}\sum_{\tau=0}^{t-1}(1-\eta\mathcal{H})^{t-\tau-1}\xi_{\tau}
\end{align}
Therefore, we have:
\begin{align} \label{dif_tilde}
\tilde{w}_{t} - w_0 &= -\eta\sum_{\tau=0}^{t-1} (\nabla \tilde{f}(\tilde{w}_\tau) + \xi_\tau) \nonumber \\
&= -\eta\sum_{\tau=0}^{t-1}\left  (
(1-\eta\mathcal{H})^\tau \nabla f(w_0) - \eta \mathcal{H}\sum_{\tau'=0}^{\tau-1}(1-\eta\mathcal{H})^{\tau -\tau'-1}\xi_{\tau'}
 + \xi_\tau\right ) \nonumber \\
 &= -\eta \sum_{\tau = 0}^{t-1}(1-\eta \mathcal{H})^\tau \nabla f(w_0) -\eta
	\sum_{\tau=0}^{t-1}(1-\eta\mathcal{H})^{t-\tau-1}\xi_{\tau} 
\end{align}

Next, we prove the existence of $T$ in Eq.(\ref{choose_t}).
Since $\sum_{\tau=0}^{t}(1+\eta \gamma_0)^{2\tau}$ is monotonically increasing w.r.t $t$, 
and diverge to infinity as $t \rightarrow \infty$. We know there is always some 
$T \in \mathbb{N}$ gives $\frac{d}{\eta\gamma_0} \le \sum_{\tau=0}^{T-1}(1+\eta \gamma_0)^{2\tau}$.
Let $T$ be the smallest integer satisfying above equation.
By assumption, we know $\gamma \le \gamma_0 \le L$, and
\begin{equation}
\sum_{\tau=0}^{t+1}(1+\eta \gamma_0)^{2\tau}
= 1 + (1+\eta \gamma_0)^2\sum_{\tau=0}^{t}(1+\eta \gamma_0)^{2\tau}
\end{equation}
we can choose $\eta_{\max} < \min\{(\sqrt{2}-1)/L, 2d/\gamma\}$ so that
\begin{equation}
\frac{d}{\eta\gamma_0} \le \sum_{\tau=0}^{T-1}(1+\eta \gamma_0)^{2\tau} 
\le 1 + \frac{2d}{\eta\gamma_0} \le \frac{3d}{\eta\gamma_0} 
\end{equation}

Finally, by Eq.(\ref{choose_t}), we know $T = O(\log d/\gamma_0\eta)$, and $(1+\eta \gamma_0)^T \le \tlO(1)$. Also because $\E\xi = 0$ and $\|\xi\| \le Q= \tlO(1)$ with probability 1, then by Hoeffding inequality, we have for each dimension $i$ and time $t\le T$:
\begin{equation}
P\left( |\eta\sum_{\tau=0}^{t-1}(1-\eta\mathcal{H})^{t-\tau-1}\xi_{\tau, i}|
>  \tlO(\eta^{\frac{1}{2}}\log{\frac{1}{\eta}})\right) \le e^{-\tlOmega(\log^2 \frac{1}{\eta})} \le \tlO(\eta^4)
\end{equation}

then by summing over dimension $d$ and taking union bound over all $t\le T$, we directly have:
\begin{equation}
P\left( \forall t\le T, \|\eta\sum_{\tau=0}^{t-1}(1-\eta\mathcal{H})^{t-\tau-1}\xi_{\tau}\|
>  \tlO(\eta^{\frac{1}{2}}\log \frac{1}{\eta})\right) \le \tlO(\eta^3).
\end{equation}
Combine this fact with Eq.(\ref{derivative_tilde}) and Eq.(\ref{dif_tilde}), we finish the proof.


\end{proof}

Next we need to prove that the two sequences of updates are always close.

\begin{lemma} \label{lem::saddle_and_maximum}
Under the assumptions of Theorem~\ref{thm:sgdmain_unconstraint}.
and let $\{w_t\}$ be the corresponding sequence generated by running SGD on function $f$.  
Also let $\tilde{f}$ and $\{\tilde{w}_t\}$ be defined as in Lemma \ref{lem::case_Gaussian}.
Then, for any initial point $w_0$ where $\|\nabla f(w_0)\| \le \tlO(\eta) < \epsilon$, and $\lambda_{\min}(\Hess f(w_0)) = -\gamma_0$. Given the choice of $T$ as in Eq.(\ref{choose_t}), 
with probability at least $1-\tlO(\eta^2)$, we have following holds simultaneously for all $t\le T$:
\begin{align}
\|w_t - \tilde{w}_t\| \le \tlO( \eta \log^2\frac{1}{\eta});
\quad\quad
\|\nabla f(w_t) - \nabla \tilde{f}(\tilde{w}_t)\| \le \tlO( \eta \log^2\frac{1}{\eta})
\end{align}
\end{lemma}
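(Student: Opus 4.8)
The plan is to track the difference $\Delta_t := w_t - \tilde{w}_t$ directly, exploiting the fact that the \emph{same} noise sequence $\{\xi_\tau\}$ drives both updates so that the randomness cancels in the difference. Subtracting the two SGD updates gives $\Delta_{t+1} = \Delta_t - \eta(\nabla f(w_t) - \nabla \tilde{f}(\tilde{w}_t))$, and since $\tilde{f}$ is quadratic with $\nabla \tilde{f}(w) = \nabla f(w_0) + \mathcal{H}(w - w_0)$ where $\mathcal{H} = \Hess f(w_0)$, I would split $\nabla f(w_t) - \nabla \tilde{f}(\tilde{w}_t) = \mathcal{H}\Delta_t + e_t$, where $e_t := \nabla f(w_t) - \nabla f(w_0) - \mathcal{H}(w_t - w_0)$ is exactly the first-order Taylor remainder of the gradient at $w_t$ about $w_0$. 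This yields the clean linear recursion $\Delta_{t+1} = (I - \eta\mathcal{H})\Delta_t - \eta e_t$ with $\Delta_0 = 0$, whose solution is $\Delta_t = -\eta\sum_{\tau=0}^{t-1}(I - \eta\mathcal{H})^{t-\tau-1}e_\tau$.

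First I would bound the remainder: the $\rho$-Lipschitz Hessian assumption gives $\|e_\tau\| \le \frac{\rho}{2}\|w_\tau - w_0\|^2$. Next, choosing $\eta_{\max} \le 1/\beta$, the spectrum of $\mathcal{H}$ lies in $[-\gamma_0,\beta]$, so the spectrum of $I - \eta\mathcal{H}$ lies in $[1 - \eta\beta, 1 + \eta\gamma_0]$ and hence $\|(I - \eta\mathcal{H})^j\| \le (1 + \eta\gamma_0)^j$; the growth is governed by the negative eigenvalue $-\gamma_0$. The crucial quantitative fact, inherited from the choice of $T$ in \eqref{choose_t} and already established inside Lemma~\ref{lem::case_Gaussian}, is that $(1 + \eta\gamma_0)^T = \tlO(1)$, so the geometric sum $\sum_{j=0}^{t-1}(1+\eta\gamma_0)^j \le \frac{(1+\eta\gamma_0)^T}{\eta\gamma_0} = \tlO(1/\eta)$ for all $t \le T$.

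The argument then runs by induction on $t$, conditioned on the high-probability event of Lemma~\ref{lem::case_Gaussian} that $\|\tilde{w}_\tau - w_0\| \le \tlO(\eta^{1/2}\log\frac{1}{\eta})$ for all $\tau \le T$. Assuming $\|\Delta_\tau\| \le \tlO(\eta\log^2\frac{1}{\eta})$ for $\tau < t$, the triangle inequality gives $\|w_\tau - w_0\| \le \|\Delta_\tau\| + \|\tilde{w}_\tau - w_0\| = \tlO(\eta^{1/2}\log\frac{1}{\eta})$ (the square-root term dominates), hence $\|e_\tau\| \le \frac{\rho}{2}\tlO(\eta\log^2\frac{1}{\eta})$. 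Plugging into the unrolled recursion, $\|\Delta_t\| \le \eta \cdot \tlO(\eta\log^2\frac{1}{\eta}) \cdot \tlO(1/\eta) = \tlO(\eta\log^2\frac{1}{\eta})$, which closes the induction. The gradient bound is then immediate from $\nabla f(w_t) - \nabla\tilde{f}(\tilde{w}_t) = \mathcal{H}\Delta_t + e_t$, yielding $\|\nabla f(w_t) - \nabla\tilde{f}(\tilde{w}_t)\| \le \beta\|\Delta_t\| + \frac{\rho}{2}\|w_t - w_0\|^2 = \tlO(\eta\log^2\frac{1}{\eta})$.

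The main obstacle, and the reason the bound lands at $\tlO(\eta\log^2\frac{1}{\eta})$ rather than exploding, is the interplay of three quantities over the $T = O(\log d/(\gamma_0\eta))$ steps: the geometric amplification $(1+\eta\gamma_0)^{t-\tau-1}$ along the expanding direction contributes a factor $\tlO(1/\eta)$, but this is precisely cancelled by the \emph{quadratic} smallness of the Taylor remainder, $\|e_\tau\| = O(\|w_\tau - w_0\|^2) = O(\eta\log^2\frac{1}{\eta})$, which carries an extra factor of $\eta$ relative to the $O(\sqrt{\eta})$ displacement. I would therefore need to be careful that the induction is self-consistent, i.e. that the constant produced by one step of the recursion does not exceed the constant assumed in the hypothesis, and that conditioning on the event of Lemma~\ref{lem::case_Gaussian} (failure probability $\tlO(\eta^3) \le \tlO(\eta^2)$) suffices, since once the boundedness of $\{\tilde{w}_\tau\}$ is fixed the remaining coupling estimate is entirely deterministic, the stochastic terms having cancelled in $\Delta_t$.
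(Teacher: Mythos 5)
Your proof is correct, but it takes a genuinely different route from the paper's. The paper tracks the \emph{gradient} difference $\Delta_t = \nabla f(w_t) - \nabla \tilde{f}(\tilde{w}_t)$, expanding $\nabla f(w_t)$ via the Hessian at $w_{t-1}$ split as $\Hess f(w_0) + \mathcal{H}'_{t-1}$; in that recursion the noise does \emph{not} fully cancel, because the surviving term $-\eta \mathcal{H}'_{t-1}\xi_{t-1}$ couples the Hessian perturbation to the noise. The paper therefore needs a genuinely stochastic argument: it builds a supermartingale $G_t = (1+\eta\gamma_0)^{-2t}\bigl[\|\Delta_t\|^2 + \alpha\eta^2\log^4(1/\eta)\bigr]$, applies Azuma--Hoeffding at each step, and union-bounds over $t \le T$, with the attendant event bookkeeping. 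You instead track the \emph{iterate} difference $w_t - \tilde{w}_t$, where the identical additive noise in the two coupled updates cancels exactly, leaving the linear recursion $(I-\eta\Hess f(w_0))(w_t-\tilde w_t) - \eta e_t$ with a forcing term $e_t$ bounded pathwise by the Taylor remainder $\frac{\rho}{2}\|w_t - w_0\|^2$; conditioned on the event of Lemma~\ref{lem::case_Gaussian}, the entire coupling estimate becomes a deterministic induction, and your flagged concern about self-consistency of constants resolves cleanly (fix the induction constant $\mu$ as a function of $\rho$, $d$, $\gamma_0$ and the Lemma~\ref{lem::case_Gaussian} constant, then shrink $\eta_{\max}$; no circularity). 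What each approach buys: yours is shorter, avoids martingale machinery entirely, and even yields a slightly better failure probability ($\tlO(\eta^3)$, inherited solely from Lemma~\ref{lem::case_Gaussian}, versus the paper's $\tlO(\eta^2)$). The paper's gradient-difference recursion, on the other hand, is the template that carries over to the equality-constrained setting (Lemma~\ref{lem::saddle_and_maximum_constraint}), where the noise enters the two coupled sequences through \emph{different} projections $P_{\mathcal{T}(w_{t-1})}$ and $P_{\mathcal{T}_0}$ and hence does not cancel in the iterate difference either; there a martingale term ($\Gamma_t$ in the paper's constrained proof) is unavoidable. So your exact-cancellation trick is special to the unconstrained case, while the paper pays extra complexity here to use one uniform technique in both settings.
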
 

\begin{proof}

First, we have update function of gradient by:
\begin{align}
\nabla f(w_t) = & \nabla f(w_{t-1}) + \int_{0}^{1}\mathcal{H}(w_{t-1} + t(w_t - w_{t-1})) \mathrm{d}t \cdot (w_t - w_{t-1}) \nonumber \\
= & \nabla f(w_{t-1}) + \mathcal{H}(w_{t-1}) (w_t - w_{t-1}) + \theta_{t-1}\label{derivative_recursive}
\end{align}
where the remainder:
\begin{equation}
\theta_{t-1} \equiv \int_{0}^{1}\left[\mathcal{H}(w_{t-1} + t(w_t - w_{t-1})) - \mathcal{H}(w_{t-1})\right] \mathrm{d}t \cdot (w_t - w_{t-1})
\end{equation}
Denote $\mathcal{H} = \mathcal{H}(w_0)$, and $\mathcal{H}'_{t-1} = \mathcal{H}(w_{t-1}) - \mathcal{H}(w_0)$.
By Hessian smoothness, we immediately have:
\begin{align} 
&\|\mathcal{H}'_{t-1}\| = \|\mathcal{H}(w_{t-1}) - \mathcal{H}(w_0)\| 
\le \rho \|w_{t-1} - w_0\| \le \rho (\|w_t - \tilde{w}_t\| + \|\tilde{w}_t - w_0\|) 
\label{H'_smooth} \\
&\|\theta_{t-1}\| \le \frac{\rho}{2} \|w_t - w_{t-1}\|^2 \label{theta_smooth}
\end{align}

Substitute the update equation of SGD (Eq.(\ref{SGD_update})) into Eq.(\ref{derivative_recursive}), we have:
\begin{align}
\nabla f(w_t) &= \nabla f(w_{t-1}) -\eta(\mathcal{H}+ \mathcal{H}'_{t-1}) ( \nabla f(w_{t-1}) + \xi_{t-1}) + \theta_{t-1} \nonumber \\
				&= (1-\eta\mathcal{H})\nabla f(w_{t-1}) - \eta \mathcal{H}\xi_{t-1} -\eta \mathcal{H}'_{t-1}
				(\nabla f(w_{t-1}) + \xi_{t-1})+ \theta_{t-1}  \label{derivative}
\end{align}

Let $\Delta_t = \nabla f(w_t) - \nabla \tilde{f}(\tilde{w}_t)$ denote the difference in gradient, then
from Eq.(\ref{derivative_tilde}), Eq.(\ref{derivative}), and Eq.(\ref{SGD_update}), we have:
\begin{align} \label{Delta_recursive}
&\Delta_t =  (1-\eta \mathcal{H}) \Delta_{t-1} -\eta \mathcal{H}'_{t-1} [\Delta_{t-1} + \nabla \tilde{f}(\tilde{w}_{t-1}) + \xi_{t-1}]  + \theta_{t-1}\\
&w_t - \tilde{w}_t =  -\eta \sum_{\tau = 0}^{t-1} \Delta_\tau \label{dif}
\end{align}

Let filtration $\mathfrak{F}_t = \sigma\{\xi_0, \cdots \xi_{t-1}\}$, and note $\sigma\{\Delta_0, \cdots, \Delta_t \} \subset \mathfrak{F}_t$, where $\sigma\{\cdot\}$ denotes the sigma field. Also, let event $\mathfrak{K}_t = \{\forall \tau \le t, ~\|\nabla\tilde{f}(\tilde{w}_\tau)\| \le \tlO(\eta^{\frac{1}{2}}\log \frac{1}{\eta}), ~ 
\|\tilde{w}_\tau - w_0\| \le \tlO(\eta^{\frac{1}{2}}\log \frac{1}{\eta})\}$,
and $\mathfrak{E}_t = \{\forall \tau \le t, ~\|\Delta_{\tau}\| \le \mu \eta\log^2\frac{1}{\eta}\}$, where $\mu$ is independent of $(\eta, \zeta)$, and will be specified later. Again, $\tilde{O}$ notation in this proof will never hide any dependence on $\mu$.
Clearly, we have 
$\mathfrak{K}_t \subset \mathfrak{K}_{t-1}$
($\mathfrak{E}_t \subset \mathfrak{E}_{t-1}$), 
thus $1_{\mathfrak{K}_t} \le 1_{\mathfrak{K}_{t-1}}$
($1_{\mathfrak{E}_t} \le 1_{\mathfrak{E}_{t-1}}$), where $1_\mathfrak{K}$ is the indicator function of event $\mathfrak{K}$.

We first need to carefully bounded all terms in Eq.(\ref{Delta_recursive}), 
conditioned on event $\mathfrak{K}_{t-1}\cap \mathfrak{E}_{t-1}$, by Eq.(\ref{H'_smooth}), Eq.(\ref{theta_smooth})), and Eq.(\ref{dif}),
with probability 1, for all $t\le T\le O(\log d/\gamma_0 \eta)$, we have: 
\begin{align}
\|(1-\eta \mathcal{H}) \Delta_{t-1}\| \le \tlO(\mu\eta\log^2\frac{1}{\eta}) 
&\quad \quad 
\|\eta  \mathcal{H}'_{t-1} (\Delta_{t-1} + \nabla \tilde{f}(\tilde{w}_{t-1}))\|
\le \tlO(\eta^2 \log^2 \frac{1}{\eta}) 
\nonumber \\
\|\eta\mathcal{H}'_{t-1}\xi_{t-1}\| \le \tlO(\eta^{1.5} \log \frac{1}{\eta})
&\quad \quad \|\theta_{t-1}\| \le \tlO(\eta^2) \label{order_of_term}
\end{align}

Since event $\mathfrak{K}_{t-1}\subset \mathfrak{F}_{t-1},  \mathfrak{E}_{t-1}\subset \mathfrak{F}_{t-1}$ thus independent of $\xi_{t-1}$, we also have:
\begin{align}
&\E [((1-\eta \mathcal{H}) \Delta_{t-1})^T\eta\mathcal{H}'_{t-1}\xi_{t-1} 1_{\mathfrak{K}_{t-1}\cap \mathfrak{E}_{t-1}} ~|~ \mathfrak{F}_{t-1}] \nonumber \\
=& 1_{\mathfrak{K}_{t-1}\cap \mathfrak{E}_{t-1}}((1-\eta \mathcal{H}) \Delta_{t-1})^T\eta\mathcal{H}'_{t-1}\E [\xi_{t-1} ~|~ \mathfrak{F}_{t-1}]
= 0
\end{align}

Therefore, from Eq.(\ref{Delta_recursive}) and Eq.(\ref{order_of_term}):
\begin{align}
&\E [\|\Delta_t\|^2_21_{\mathfrak{K}_{t-1}\cap \mathfrak{E}_{t-1}} ~|~ \mathfrak{F}_{t-1}] \nonumber \\
\le &\left[(1+\eta \gamma_0)^2\|\Delta_{t-1}\|^2 
+ (1+\eta \gamma_0) \|\Delta_{t-1}\| \tlO(\eta^2 \log^2 \frac{1}{\eta}) + \tlO(\eta^{3} \log^2 \frac{1}{\eta})\right]1_{\mathfrak{K}_{t-1}\cap \mathfrak{E}_{t-1}}\nonumber \\
\le& \left[(1+\eta \gamma_0)^2\|\Delta_{t-1}\|^2  + \tlO( \mu
\eta^{3}\log^4 \frac{1}{\eta})\right]1_{\mathfrak{K}_{t-1}\cap \mathfrak{E}_{t-1}} 
\end{align}

Define
\begin{align} \label{martingle_case3}
G_t = (1+\eta \gamma_0)^{-2t} [~\|\Delta_t\|^2 + \alpha 
\eta^{2}\log^4 \frac{1}{\eta}~]
\end{align}
Then, when $\eta_{\max}$ is small enough, we have:
\begin{align}
&\E [G_t1_{\mathfrak{K}_{t-1}\cap \mathfrak{E}_{t-1}} ~|~ \mathfrak{F}_{t-1}]  
= (1+\eta \gamma_0)^{-2t}\left[\E [\|\Delta_t\|^2_21_{\mathfrak{K}_{t-1}\cap \mathfrak{E}_{t-1}} ~|~ \mathfrak{F}_{t-1}]  + \alpha \eta^{2}\log^3 \frac{1}{\eta}\right]
1_{\mathfrak{K}_{t-1}\cap \mathfrak{E}_{t-1}} \nonumber \\
&\le  (1+\eta \gamma_0)^{-2t}\left[(1+\eta \gamma_0)^2\|\Delta_{t-1}\|^2  +  \tlO(\mu\eta^{3}\log^4 \frac{1}{\eta})
+ \alpha \eta^{2}\log^4 \frac{1}{\eta}\right]
1_{\mathfrak{K}_{t-1}\cap \mathfrak{E}_{t-1}} \nonumber \\
&\le  (1+\eta \gamma_0)^{-2t}\left[(1+\eta \gamma_0)^2\|\Delta_{t-1}\|^2 
+ (1+\eta \gamma_0)^2\alpha \eta^{2}\log^4 \frac{1}{\eta}
\right]1_{\mathfrak{K}_{t-1}\cap \mathfrak{E}_{t-1}} \nonumber \\
& =  G_{t-1}1_{\mathfrak{K}_{t-1}\cap \mathfrak{E}_{t-1}} \le G_{t-1}
1_{\mathfrak{K}_{t-2}\cap \mathfrak{E}_{t-2}}
\end{align}
Therefore, we have $\E [G_t1_{\mathfrak{K}_{t-1}\cap \mathfrak{E}_{t-1}} ~|~ \mathfrak{F}_{t-1}] \le G_{t-1}1_{\mathfrak{K}_{t-2}\cap \mathfrak{E}_{t-2}}$ which means 
$G_t1_{\mathfrak{K}_{t-1}\cap \mathfrak{E}_{t-1}}$ is a supermartingale.

On the other hand, we have:
\begin{align}
\Delta_t =  (1-\eta H) \Delta_{t-1} -\eta  \mathcal{H}'_{t-1} (\Delta_{t-1} + \nabla \tilde{f}(\tilde{w}_{t-1})) -\eta \mathcal{H}'_{t-1}\xi_{t-1}  + \theta_{t-1}
\end{align}
Once conditional on filtration $\mathfrak{F}_{t-1}$, the first two terms are deterministic, and only the third and fourth term are random.
Therefore, we know, with probability 1:
\begin{equation}
|~\|\Delta_t\|^2_2 - \E[\|\Delta_t\|^2_2|\mathfrak{F}_{t-1}]~|1_{\mathfrak{K}_{t-1}\cap \mathfrak{E}_{t-1}} \le \tlO(\mu\eta^{2.5} \log^3 \frac{1}{\eta}) 
\end{equation}
Where the main contribution comes from the product of the first term and third term.
Then, with probability 1, we have:
\begin{align}
&| G_t1_{\mathfrak{K}_{t-1}\cap \mathfrak{E}_{t-1}} - \E[G_t1_{\mathfrak{K}_{t-1}\cap \mathfrak{E}_{t-1}}~|~\mathfrak{F}_{t-1}] | \nonumber \\
= &(1+2\eta \gamma_0)^{-2t}\cdot|~\|\Delta_t\|^2_2 - \E[\|\Delta_t\|^2_2|\mathfrak{F}_{t-1}]~|\cdot1_{\mathfrak{K}_{t-1}\cap \mathfrak{E}_{t-1}} 
 \le \tlO(\mu\eta^{2.5} \log^3 \frac{1}{\eta}) = c_{t-1}
\end{align}
By Azuma-Hoeffding inequality, with probability less than $\tlO(\eta^3)$, 
for $t\le T\le O(\log d/\gamma_0 \eta)$:
\begin{equation}
G_t1_{\mathfrak{K}_{t-1}\cap \mathfrak{E}_{t-1}} - G_0\cdot1 > \tlO(1)\sqrt{\sum_{\tau=0}^{t-1}{c^2_\tau}}\log (\frac{1}{\eta}) = \tlO(\mu\eta^{2}\log^4 \frac{1}{\eta})
\end{equation}
This means there exist some $\tilde{C} = \tlO(1)$ so that:
\begin{equation}
	P\left(G_t1_{\mathfrak{K}_{t-1}\cap \mathfrak{E}_{t-1}}  \ge \tilde{C}\mu\eta^{2}\log^4 \frac{1}{\eta}\right) \le \tlO(\eta^3)
\end{equation}
By choosing $\mu>\tilde{C}$, this is equivalent to:
\begin{equation}
	P\left(\mathfrak{K}_{t-1}\cap \mathfrak{E}_{t-1} \cap \left\{\|\Delta_t\|^2 \ge \mu^2 \eta^{2}\log^4 \frac{1}{\eta}\right\}\right) \le \tlO(\eta^3)
\end{equation}
Therefore, combined with Lemma \ref{lem::case_Gaussian}, we have:
\begin{align}
&P\left( \mathfrak{E}_{t-1} \cap \left\{\|\Delta_t\| \ge \mu \eta \log^2 \frac{1}{\eta}\right\}\right)  \nonumber \\
= &P\left(\mathfrak{K}_{t-1}\cap \mathfrak{E}_{t-1} \cap \left\{\|\Delta_t\| \ge \mu\eta \log^2 \frac{1}{\eta}\right\}\right) +
P\left(\overline{\mathfrak{K}}_{t-1}\cap \mathfrak{E}_{t-1} \cap \left\{\|\Delta_t\| \ge \mu\eta \log^2 \frac{1}{\eta}\right\}\right) \nonumber\\
\le &\tlO(\eta^3) + P(\overline{\mathfrak{K}}_{t-1}) \le \tlO(\eta^3)
\end{align}
Finally, we know:
\begin{align}
P(\overline{ \mathfrak{E}}_{t}) = P\left( \mathfrak{E}_{t-1} \cap \left\{\|\Delta_t\| \ge \mu\eta \log^2 \frac{1}{\eta}\right\}\right) + P(\overline{ \mathfrak{E}}_{t-1}) 
\le \tlO(\eta^3) + P(\overline{ \mathfrak{E}}_{t-1}) 
\end{align} 
Because $P(\overline{ \mathfrak{E}}_{0}) =0$, and $T\le \tlO(\frac{1}{\eta})$, we have 
$P(\overline{ \mathfrak{E}}_{T}) \le \tlO(\eta^2)$.
Due to Eq.(\ref{dif}), we have $\|w_t - \tilde{w}_t\| \le  \eta \sum_{\tau = 0}^{t-1} \|\Delta_\tau\|$, then by the definition of $\mathfrak{E}_{T}$, we finish the proof.

\end{proof}

Using the two lemmas above we are ready to prove Lemma~\ref{thm::case2}

\begin{proof}[Proof of Lemma \ref{thm::case2}]
Let $\tilde{f}$ and $\{\tilde{w}_t\}$ be defined as in Lemma \ref{lem::case_Gaussian}.
and also let $\lambda_{\min}(\mathcal{H} (w_0))$ $ =-\gamma_0$. Since $\mathcal{H}(w)$ is 
$\rho$-Lipschitz, for any $w, w_0$, we have:
\begin{equation}
f(w) \le  f(w_0) + \nabla f(w_0)^T (w-w_0) + \frac{1}{2}(w-w_0)^T \mathcal{H}(w_0) (w-w_0)
+ \frac{\rho}{6} \|w-w_0\|^3
\end{equation}

Denote $\tilde{\delta} = \tilde{w}_T - w_0$ and $\delta = w_T - \tilde{w}_T$, we have: 
\begin{align}
	f(w_T) -  f(w_0) \le& \left[\nabla f(w_0)^T (w_T-w_0) + \frac{1}{2}(w_T-w_0)^T \mathcal{H}(w_0) (w_T-w_0)+ \frac{\rho}{6} \|w_T-w_0\|^3\right] \nonumber \\
	=&\left[\nabla f(w_0)^T (\tilde{\delta} + \delta) + \frac{1}{2}(\tilde{\delta} + \delta)^T \mathcal{H}(\tilde{\delta} + \delta) + \frac{\rho}{6}\|\tilde{\delta} + \delta\|^3\right]\nonumber \\
	=&\left[\nabla f(w_0)^T \tilde{\delta} + \frac{1}{2}\tilde{\delta}^T \mathcal{H}\tilde{\delta}\right] 
	+ \left[\nabla f(w_0)^T \delta + 
	\tilde{\delta}^T \mathcal{H}\delta + 
	\frac{1}{2}\delta^T \mathcal{H}\delta + 
	\frac{\rho}{6}\|\tilde{\delta} + \delta\|^3\right]
\end{align}
Where $\mathcal{H} = \mathcal{H}(w_0)$. Denote $\tilde{\Lambda} = \nabla f(w_0)^T \tilde{\delta} + \frac{1}{2}\tilde{\delta}^T \mathcal{H}\tilde{\delta}$ be the first term, and $\Lambda = \nabla f(w_0)^T \delta + 
\tilde{\delta}^T \mathcal{H}\delta + 
\frac{1}{2}\delta^T \mathcal{H}\delta + 
\frac{\rho}{6}\|\tilde{\delta} + \delta\|^3$ be the second term.
We have $f(w_T) -  f(w_0) \le \tilde{\Lambda} + \Lambda$.

Let $\mathfrak{E}_t = \{\forall \tau \le t, 
\|\tilde{w}_\tau - w_0\| \le \tlO(\eta^{\frac{1}{2}}\log \frac{1}{\eta}), 
~\|w_t - \tilde{w}_t\| \le \tlO(\eta\log^2 \frac{1}{\eta})\}$, by the result of Lemma \ref{lem::case_Gaussian} and Lemma \ref{lem::saddle_and_maximum}, we know $P(\mathfrak{E}_T)\ge 1-\tlO(\eta^2)$. Then, clearly, we have:

\begin{align}\label{Lambda_decomp}
	\E f(w_T) -  f(w_0) =& \E [f(w_T) -  f(w_0)]1_{\mathfrak{E}_T} + \E [f(w_T) -  f(w_0)]1_{\overline{\mathfrak{E}}_T} \nonumber \\
	\le & \E \tilde{\Lambda}1_{\mathfrak{E}_T} + \E \Lambda1_{\mathfrak{E}_T} + \E [f(w_T) -  f(w_0)]1_{\overline{\mathfrak{E}}_T} \nonumber \\
	= & \E \tilde{\Lambda} + \E \Lambda1_{\mathfrak{E}_T} + \E [f(w_T) -  f(w_0)]1_{\overline{\mathfrak{E}}_T} - \E \tilde{\Lambda}1_{\overline{\mathfrak{E}}_T}
\end{align}
We will carefully caculate $\E\tilde{\Lambda}$ term first, and then bound remaining term as ``perturbation'' to first term.

Let $\lambda_1, \cdots, \lambda_d$ be the eigenvalues of $\mathcal{H}$. By the result of lemma \ref{lem::case_Gaussian} and simple linear algebra, we have:
\begin{align} 
	\E\tilde{\Lambda} &= - \frac{\eta}{2}\sum_{i=1}^d\sum_{\tau=0}^{2T-1}(1-\eta\lambda_i)^\tau |\nabla_i  f(w_0)|^2
	+ \frac{1}{2}\sum_{i=1}^d \lambda_i \sum_{\tau=0}^{T-1}(1-\eta\lambda_i)^{2\tau}\eta^2\sigma^2
	\nonumber \\
	&\le \frac{1}{2}\sum_{i=1}^d \lambda_i \sum_{\tau=0}^{T-1}(1-\eta\lambda_i)^{2\tau}\eta^2\sigma^2
	\nonumber\\
	&\le \frac{\eta^2\sigma^2}{2}\left[\frac{d-1}{\eta} - \gamma_0\sum_{\tau=0}^{T-1}(1+\eta \gamma_0)^{2\tau}\right]
	\le -\frac{\eta\sigma^2}{2}\label{Lambda_tilde}
\end{align}
The last inequality is directly implied by the choice of $T$ as in Eq.(\ref{choose_t}).
Also, by Eq.(\ref{choose_t}), we also immediately have that 
$T = O(\log d/\gamma_0\eta) \le O(\log d/\gamma\eta)$. 
Therefore, by choose $T_{max}=O(\log d/\gamma\eta)$ with large enough constant, we have
$T \le T_{max} = O(\log d/\gamma\eta)$.

For bounding the second term, by definition of $\mathfrak{E}_t$, we have:
\begin{align}\label{Lambda_E}
	\E\Lambda 1_{\mathfrak{E}_T}=\E\left[\nabla f(w_0)^T \delta + 
	\tilde{\delta}^T \mathcal{H}\delta + 
	\frac{1}{2}\delta^T \mathcal{H}\delta + 
	\frac{\rho}{6}\|\tilde{\delta} + \delta\|^3\right]1_{\mathfrak{E}_T}
	\le \tlO(\eta^{1.5}\log^3 \frac{1}{\eta})
\end{align}

On the other hand,
since noise is bounded as $\|\xi\| \le \tlO(1)$, from the results of Lemma \ref{lem::case_Gaussian}, it's easy to show $\|\tilde{w} - w_0\| = \|\tilde{\delta}\| \le \tlO(1)$ is also bounded with probability 1. Recall the assumption that function $f$ is also bounded, then we have:
\begin{align}\label{Lambda_E_bar}
	&\E [f(w_T) -  f(w_0)]1_{\overline{\mathfrak{E}}_T} - \E \tilde{\Lambda}1_{\overline{\mathfrak{E}}_T} \nonumber \\
	= &\E [f(w_T) -  f(w_0)]1_{\overline{\mathfrak{E}}_T} - \E\left[\nabla f(w_0)^T \tilde{\delta} + \frac{1}{2}\tilde{\delta}^T \mathcal{H}\tilde{\delta}\right]1_{\overline{\mathfrak{E}}_T} 
	\le \tlO(1)P(\overline{\mathfrak{E}}_T) \le \tlO(\eta^2)
\end{align}

Finally, substitute Eq.(\ref{Lambda_tilde}), Eq.(\ref{Lambda_E}) and Eq.(\ref{Lambda_E_bar})
into Eq.(\ref{Lambda_decomp}), we finish the proof.
\end{proof}

Finally, we combine three cases to prove the main theorem.

\begin{proof} [Proof of Theorem \ref{thm:sgdmain_unconstraint}]
Let's set $\mathcal{L}_1 = \{w ~|~ \|\nabla f(w)\| \ge \sqrt{2\eta\sigma^2\beta d}\}$, $\mathcal{L}_2 = \{w ~|~ \|\nabla f(w)\| \le \sqrt{2\eta\sigma^2\beta d}$ and $\lambda_{\min} (\mathcal{H}(w) ) \le -\gamma\}$, and $\mathcal{L}_3 = \mathcal{L}^c_1 \cup \mathcal{L}^c_2$.
By choosing small enough $\eta_{\max}$, we could make $\sqrt{2\eta\sigma^2\beta d} < \min\{\epsilon, \alpha \delta\}$. Under this choice, we know from Definition~\ref{def:robustcondition} of ($\alpha, \gamma, \epsilon, \delta$)-\name that $\mathcal{L}_3$ is the locally $\alpha$-strongly convex region which is $\tlO(\sqrt{\eta})$-close to some local minimum.

We shall first prove that within $\tlO(\frac{1}{\eta^2}\log \frac{1}{\zeta})$ steps with probability at least $1-\zeta/2$ one of $w_t$ is in $\mathcal{L}_3$. Then by Lemma~\ref{thm::case3} we know with probability at most $\zeta/2$ there exists a $w_t$ that is in $\mathcal{L}_3$ but the last point is not. By union bound we will get the main result.

To prove within $\tlO(\frac{1}{\eta^2}\log \frac{1}{\zeta})$ steps with probability at least $1-\zeta/2$ one of $w_t$ is in $\mathcal{L}_3$, we first show starting from any point, in $\tlO(\frac{1}{\eta^2})$ steps with probability at least $1/2$ one of $w_t$ is in $\mathcal{L}_3$. Then we can repeat this $\log 1/\zeta$ times to get the high probability result.

Define stochastic process $\{\tau_i\}$ s.t. $\tau_0 = 0$, and 
\begin{equation}
\tau_{i+1} = 
\begin{cases}
	\tau_i + 1 &\mbox{if~} w_{\tau_i} \in \mathcal{L}_1 \cup \mathcal{L}_3\\
	\tau_i + T(w_{\tau_i}) &\mbox{if~} w_{\tau_i} \in \mathcal{L}_2
\end{cases}
\end{equation}
Where $T(w_{\tau_i})$ is defined by Eq.(\ref{choose_t}) with $\gamma_0= \lambda_{\min}(\mathcal{H}(w_{\tau_i}))$and we know $T \le T_{max} = \tlO(\frac{1}{\eta})$.

By Lemma \ref{thm::case1} and Lemma \ref{thm::case2}, we know:
\begin{align}
&\E [f(w_{\tau_{i+1}}) - f(w_{\tau_i})|w_{\tau_i} \in \mathcal{L}_1, \mathfrak{F}_{\tau_i - 1} ] 
= \E [f(w_{\tau_{i+1}}) - f(w_{\tau_i})|w_{\tau_i} \in \mathcal{L}_1] \le -\tlO(\eta^2) \\
&\E [f(w_{\tau_{i+1}}) - f(w_{\tau_i})|w_{\tau_i} \in \mathcal{L}_2, \mathfrak{F}_{\tau_i - 1} ] 
= \E [f(w_{\tau_{i+1}}) - f(w_{\tau_i})|w_{\tau_i} \in \mathcal{L}_2] \le -\tlO(\eta)
\end{align}
Therefore, combine above equation, we have:
\begin{equation}
\E [f(w_{\tau_{i+1}}) - f(w_{\tau_i})|w_{\tau_i} \not\in \mathcal{L}_3, \mathfrak{F}_{\tau_i - 1} ] 
= \E [f(w_{\tau_{i+1}}) - f(w_{\tau_i})|w_{\tau_i} \not\in \mathcal{L}_3] \le -(\tau_{i+1}-\tau_i)\tlO(\eta^2)
\end{equation}

Define event $\mathfrak{E}_i = \{\exists j \le i, ~ w_{\tau_j} \in \mathcal{L}_3\}$, clearly
$\mathfrak{E}_i \subset \mathfrak{E}_{i+1}$, thus $P(\mathfrak{E}_i) \le P(\mathfrak{E}_{i+1}) $. Finally,
consider $f(w_{\tau_{i+1}})1_{\mathfrak{E}_{i}}$, we have:
\begin{align}
	\E f(w_{\tau_{i+1}})1_{\mathfrak{E}_{i}} - \E f(w_{\tau_i})1_{\mathfrak{E}_{i-1}}
	&\le B \cdot P(\mathfrak{E}_{i} - \mathfrak{E}_{i-1}) + \E[f(w_{\tau_{i+1}}) - f(w_{\tau_{i}})|\overline{\mathfrak{E}_{i}}]\cdot P(\overline{\mathfrak{E}_{i}}) \nonumber \\
	&\le B \cdot P(\mathfrak{E}_{i}- \mathfrak{E}_{i-1}) -(\tau_{i+1}-\tau_i)\tlO(\eta^2)P(\overline{\mathfrak{E}_{i}})
\end{align}
Therefore, by summing up over $i$, we have: 
\begin{equation}
	\E f(w_{\tau_{i}})1_{\mathfrak{E}_{i}} -  f(w_{0})
	\le BP(\mathfrak{E}_{i}) -\tau_{i}\tlO(\eta^2)P(\overline{\mathfrak{E}_{i}})
	\le B-\tau_{i}\tlO(\eta^2)P(\overline{\mathfrak{E}_{i}})
\end{equation}
Since $|f(w_{\tau_{i}})1_{\mathfrak{E}_{i}}| < B$ is bounded, as $\tau_i$ grows to as large as $ \frac{6B}{\eta^2}$, we must have $P(\overline{\mathfrak{E}_{i}}) < \frac{1}{2}$.
That is, after $\tlO(\frac{1}{\eta^2})$ steps, with at least probability $1/2$, $\{w_t\}$ have at least enter $\mathcal{L}_3$ once. Since this argument holds for any starting point, we can repeat this $\log 1/\zeta$ times and we know after $\tlO(\frac{1}{\eta^2}\log 1/\zeta)$ steps, with probability at least $1-\zeta/2$, $\{w_t\}$ have at least enter $\mathcal{L}_3$ once.

Combining with Lemma~\ref{thm::case3}, and by union bound we know
 after $\tlO(\frac{1}{\eta^2}\log 1/\zeta)$ steps, with probability at least $1-\zeta$, $w_t$
will be in the $\tlO(\sqrt{\eta \log \frac{1}{\eta\zeta}})$ neigborhood of some local minimum.
\end{proof}


\section{Detailed Analysis for Section~\ref{sec:sgd} in Constrained Case}
\label{sec:constrained}

So far, we have been discussed all about unconstrained problem. In this section we extend our result to equality constraint problems under some mild conditions.

Consider the equality constrained optimization problem:
\begin{align}\label{eq_constraint_problem}
&\min_w \quad \quad  f(w) \\
&\text{s.t.} \quad \quad c_i(w) = 0, \quad \quad i=1, \cdots, m \nonumber
\end{align}
Define the feasible set as the set of points that satisfy all the constraints $\mathcal{W} = \{w~|~c_i(w) = 0; ~ i=1, \cdots, m \}$.

In this case, the algorithm we are running is Projected Noisy Gradient Descent.
Let function $\Pi_{\mathcal{W}}(v)$ to be the projection to the feasible set， where the projection is 
defined as the global solution of $\min_{w \in \mathcal{W}} \|v-w\|^2$.

With same argument as in the unconstrained case, we could slightly simplify and convert it to standard projected stochastic gradient descent (PSGD) with update equation:
\begin{align} \label{PSGD_update}
	&v_t = w_{t-1} - \eta \nabla f(w_{t-1}) + \xi_{t-1} \\
	&w_t = \Pi_{\mathcal{W}}(v_t)
\end{align}
As in unconstrained case, we are interested in noise $\xi$ is i.i.d satisfying $\E \xi = 0$, 
$ \E \xi\xi^T = \sigma^2I$ and $\|\xi\| \le Q$ almost surely.
Our proof can be easily extended to Algorithm \ref{algo:psgdwn} with $\frac{1}{d} I \preceq \E\xi\xi^T\preceq (Q+\frac{1}{d})I$. 
In this section we first introduce basic tools for handling constrained optimization problems (most these materials can be found in~\cite{wright1999numerical}), then we prove some technical lemmas that are useful for dealing with the projection step in PSGD, finally we point out how to modify the previous analysis.

\subsection{Preliminaries}

Often for constrained optimization problems we want the constraints to satisfy some regularity conditions. LICQ (linear independent constraint quantification) is a common assumption in this context.

\begin{definition}[LICQ]
In equality-constraint problem Eq.(\ref{eq_constraint_problem}), given a point $w$, 
we say that the linear independence constraint qualification (LICQ) holds if the set of constraint gradients $\{\nabla c_i(x), i=1, \cdots, m\}$ is linearly independent.
\end{definition}

In constrained optimization, we can locally transform it to an unconstrained problem by introducing Lagrangian multipliers. The Langrangian $\mathcal{L}$ can be written as
\begin{equation}
\mathcal{L}(w, \lambda) =  f(w) - \sum_{i=1}^m \lambda_i c_i(w)
\end{equation}

Then, if LICQ holds for all $w \in \mathcal{W}$, we can properly define function $\lambda^*(\cdot)$ to be:
\begin{equation}
\lambda^*(w) = \arg\min_{\lambda} \|\nabla f(w) - \sum_{i=1}^m \lambda_i \nabla c_i(w)\|
= \arg\min_{\lambda} \|\nabla_w \mathcal{L}(w, \lambda)\|
\end{equation}
where $\lambda^*(\cdot)$ can be calculated analytically:
let matrix $C(w) = (\nabla c_1(w), \cdots, \nabla c_m(w))$, then we have:
\begin{equation} \label{Lambda_star}
\lambda^*(w)  = C(w)^{\dagger}\nabla f(w) = (C(w)^TC(w))^{-1}C(w)^T\nabla f(w)
\end{equation}
where $(\cdot)^{\dagger}$ is Moore-Penrose pseudo-inverse.

In our setting we need a stronger regularity condition which we call robust LICQ (RLICQ).

\begin{definition}[\nameCQ]
In equality-constraint problem Eq.(\ref{eq_constraint_problem}), given a point $w$, 
we say that $\alpha_c$-robust linear independence constraint qualification (\nameCQ) holds if the minimum singular value of matrix $C(w) = (\nabla c_1(w), \cdots, \nabla c_m(w))$ is greater or equal to $\alpha_c$, that is $\sigma_{\min} (C(w)) \ge \alpha_c$.
\end{definition}

\begin{remark}
Given a point $w\in \mathcal{W}$, \nameCQ implies LICQ. 
While LICQ holds for all $w \in \mathcal{W}$ is a necessary condition for
$\lambda^*(w)$ to be well-defined;
it's easy to check that \nameCQ holds for all $w \in \mathcal{W}$ is a necessary condition for
$\lambda^*(w)$ to be bounded. Later, we will also see \nameCQ combined with the smoothness of $\{c_i(w)\}_{i=1}^m$ guarantee the curvature of constraint manifold to be bounded everywhere.
\end{remark}

Note that we require this condition in order to provide a quantitative bound, without this assumption there can be cases that are exponentially close to a function that does not satisfy LICQ.

We can also write down the first-order and second-order partial derivative of Lagrangian $\mathcal{L}$
at point $(w, \lambda^*(w))$:
\begin{align}
&\chi(w) = \nabla_w \mathcal{L}(w, \lambda) |_{(w, \lambda^*(w))} =\nabla f(w) - \sum_{i=1}^m \lambda^*_i(w) \nabla c_i(w)  \label{Lagrangian_1}\\
&\mathfrak{M}(w) = \nabla^2_{ww} \mathcal{L}(w, \lambda) |_{(w, \lambda^*(w))} = \Hess f(w) - \sum_{i=1}^m \lambda^*_i(w) \nabla^2 c_i(w) \label{Lagrangian_2}
\end{align}

\begin{definition}[Tangent Space and Normal Space]
Given a feasible point $w \in \mathcal{W}$, define its corresponding Tangent Space to be $\mathcal{T}(w) = \{v ~|~\nabla c_i(w)^T v = 0; ~ i=1, \cdots, m \}$, and Normal Space to be $\mathcal{T}^c(w) = \text{span}\{\nabla c_1(w) \cdots, \nabla c_m(w) \}$
\end{definition}
If $w \in \mathcal{R}^d$, and we have $m$ constraint satisfying \nameCQ, the tangent space would be
a linear subspace with dimension $d-m$; and the normal space would be a linear subspace with dimension $m$.
We also know immediately that $\chi(w)$ defined in Eq.(\ref{Lagrangian_1}) has another interpretation: it's the component of gradient $\nabla f(w)$ in tangent space.

Also, it's easy to see the normal space $\mathcal{T}^c(w)$ is the orthogonal complement of 
$\mathcal{T}$. We can also define the projection matrix of any vector onto tangent space (or normal space) to be $P_{\mathcal{T}(w)}$ (or $P_{\mathcal{T}^c(w)}$). Then, clearly, both 
$P_{\mathcal{T}(w)}$ and $P_{\mathcal{T}^c(w)}$ are orthoprojector, thus symmetric.
Also by Pythagorean theorem, we have:
\begin{equation}
\|v\|^2 = \|P_{\mathcal{T}(w)} v\|^2 + \|P_{\mathcal{T}^c(w)}v\|^2, \quad \quad \forall
v\in \mathbb{R}^d
\end{equation}

\paragraph{Taylor Expansion} Let $w, w_0 \in \mathcal{W}$, and fix $\lambda^* = \lambda^*(w_0)$ independent of $w$, assume 
$\nabla^2_{ww} \mathcal{L}(w, \lambda^*)$ is $\rho_L$-Lipschitz,
that is $\|\nabla^2_{ww} \mathcal{L}(w_1, \lambda^*) - \nabla^2_{ww} \mathcal{L}(w_2, \lambda^*) \| \le \rho_L \|w_1 - w_2\|$
By Taylor expansion, we have:
\begin{align}
\mathcal{L}(w, \lambda^*) \le &\mathcal{L}(w_0, \lambda^*) + \nabla_w \mathcal{L}(w_0, \lambda^*)^T (w-w_0)  \nonumber \\
& + \frac{1}{2}(w-w_0)^T \nabla^2_{ww} \mathcal{L}(w_0, \lambda^*) (w-w_0) + \frac{\rho_L}{6}\|w-w_0\|^3
\end{align}
Since $w, w_0$ are feasible, we know:
$\mathcal{L}(w, \lambda^*) = f(w)$ and $\mathcal{L}(w_0, \lambda^*) = f(w_0)$, 
this gives:
\begin{align} \label{Taylor_eq_constraint}
f(w)\le f(w_0) + \chi(w_0)^T (w-w_0) + \frac{1}{2}(w-w_0)^T \mathfrak{M}(w_0)(w-w_0) + \frac{\rho_L}{6}\|w-w_0\|^3
\end{align}


\paragraph{Derivative of $\chi(w)$} By taking derative of $\chi(w)$ again, we know the change of this tangent gradient can be characterized by:
\begin{align}
	\nabla \chi(w) = \mathcal{H} - \sum_{i=1}^m \lambda^*_i(w) \nabla^2 c_i(w) 
	-\sum_{i=1}^m  \nabla c_i(w) \nabla \lambda^*_i(w)^T
\end{align}
Denote 
\begin{equation}
\mathfrak{N}(w) =-\sum_{i=1}^m  \nabla c_i(w) \nabla \lambda^*_i(w)^T
\end{equation}
We immediately know that $\nabla \chi(w) = \mathfrak{M}(w) + \mathfrak{N}(w)$.

\begin{remark}\label{N_constraint}
The additional term $\mathfrak{N}(w)$ is not necessary to be even symmetric in general. This is due to the fact that $\chi(w)$ may not be the gradient of any scalar function. However, 
$\mathfrak{N}(w)$ has an important property that is: for any vector $v \in \mathbb{R}^d$, $\mathfrak{N}(w)v \in \mathcal{T}^c(w)$.
\end{remark}

Finally, for completeness, we state here the first/second-order necessary (or sufficient) conditions for optimality. Please refer to \cite{wright1999numerical} for the proof of those theorems.

\begin{theorem}[First-Order Necessary Conditions] \label{thm::first_necessary}
In equality constraint problem Eq.(\ref{eq_constraint_problem}), suppose that $w^\dagger$ is a local solution,
and that the functions $f$ and $c_i$ are continuously differentiable, and that the LICQ holds at $w^\dagger$.
Then there is a Lagrange multiplier vector $\lambda^\dagger$, such that:
\begin{align}
\nabla_w \mathcal{L}(w^\dagger, \lambda^\dagger) &= 0 \\
c_i(w^\dagger) &= 0, \quad \quad \text{for~} i=1, \cdots, m
\end{align}
These conditions are also usually referred as Karush-Kuhn-Tucker (KKT) conditions.
\end{theorem}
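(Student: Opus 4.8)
The plan is to reduce the two KKT conditions to statements that can be checked directly using the tangent/normal space decomposition already set up in the excerpt. The second condition, $c_i(w^\dagger)=0$, is immediate: a local solution of Eq.(\ref{eq_constraint_problem}) is by definition feasible. For the first condition I would take $\lambda^\dagger := \lambda^*(w^\dagger)$ as given by Eq.(\ref{Lambda_star}); this is well-defined because LICQ makes $C(w^\dagger)^T C(w^\dagger)$ invertible. With this choice $\nabla_w \mathcal{L}(w^\dagger,\lambda^\dagger)=\chi(w^\dagger)$, the tangential component of $\nabla f(w^\dagger)$, so the whole theorem collapses to the single claim $\chi(w^\dagger)=0$, i.e. $\nabla f(w^\dagger)\in\mathcal{T}^c(w^\dagger)=\Span\{\nabla c_1(w^\dagger),\dots,\nabla c_m(w^\dagger)\}$.

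To prove $\nabla f(w^\dagger)\perp\mathcal{T}(w^\dagger)$, I would show that every tangent vector is the initial velocity of a feasible curve. Fix any $v\in\mathcal{T}(w^\dagger)$, write $c=(c_1,\dots,c_m)$, and define $R(t,u):=c(w^\dagger+tv+C(w^\dagger)u)$ for $u\in\R^m$. Then $R(0,0)=0$, and $\partial_u R(0,0)=C(w^\dagger)^T C(w^\dagger)$ is invertible under LICQ, so the implicit function theorem yields a $C^1$ map $u(t)$ with $u(0)=0$ and $R(t,u(t))=0$ on a neighborhood of $t=0$. Differentiating this identity and using $\partial_t R(0,0)=C(w^\dagger)^T v=0$ (because $v$ is tangent) forces $u'(0)=0$, so the feasible curve $w(t):=w^\dagger+tv+C(w^\dagger)u(t)$ satisfies $w(0)=w^\dagger$ and $w'(0)=v$.

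Since $w^\dagger$ is a local minimizer and $w(t)$ stays feasible, the scalar map $t\mapsto f(w(t))$ has a local minimum at $t=0$; differentiating gives $\nabla f(w^\dagger)^T v=0$. As $v\in\mathcal{T}(w^\dagger)$ was arbitrary, $\nabla f(w^\dagger)$ is orthogonal to the tangent space and hence lies in the normal space, so $\chi(w^\dagger)=P_{\mathcal{T}(w^\dagger)}\nabla f(w^\dagger)=0$. Unwinding the definitions, $\nabla f(w^\dagger)=C(w^\dagger)\lambda^\dagger=\sum_{i=1}^m\lambda_i^\dagger\nabla c_i(w^\dagger)$, which is exactly $\nabla_w\mathcal{L}(w^\dagger,\lambda^\dagger)=0$.

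The main obstacle is the feasible-curve construction: everything hinges on turning an algebraic tangent vector into an honest path on the constraint manifold, and this is precisely where LICQ is indispensable, since without the full-rank condition on $C(w^\dagger)$ the implicit function theorem fails and the conclusion can genuinely break down. The remaining steps are routine once the curve exists. I would also remark that LICQ makes $\lambda^\dagger$ unique, as the full column rank of $C(w^\dagger)$ gives a unique solution to $\nabla f(w^\dagger)=C(w^\dagger)\lambda^\dagger$.
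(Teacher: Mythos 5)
Your proof is correct, but it is worth noting that the paper itself contains no proof of this statement at all: Theorem~\ref{thm::first_necessary} is quoted as a classical result, and the reader is referred to \cite{wright1999numerical} for its proof. Your argument is the standard, self-contained route for the equality-constrained Lagrange multiplier theorem: reduce everything to $\chi(w^\dagger)=0$ by taking $\lambda^\dagger=\lambda^*(w^\dagger)$, then use the implicit function theorem (which is exactly where LICQ enters, via invertibility of $C(w^\dagger)^T C(w^\dagger)$) to realize each tangent vector $v\in\mathcal{T}(w^\dagger)$ as the velocity of a feasible curve, and conclude $\nabla f(w^\dagger)\perp\mathcal{T}(w^\dagger)$, i.e.\ $\nabla f(w^\dagger)\in\range(C(w^\dagger))$. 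All the individual steps check out: $\partial_u R(0,0)=C(w^\dagger)^TC(w^\dagger)$ and $\partial_t R(0,0)=C(w^\dagger)^Tv=0$ are computed correctly, and the $C^1$ hypothesis on $f$ and $c_i$ is exactly what the implicit function theorem and the final differentiation require. The contrast with the cited source is that \cite{wright1999numerical} proves the more general KKT theorem covering inequality constraints, which forces heavier machinery (characterization of the tangent cone under LICQ, limiting feasible directions, and Farkas' lemma); for the pure equality case treated here, your elementary feasible-curve argument is the cleaner and more direct approach, and it has the additional merit of being phrased entirely in the paper's own notation ($\chi$, $\lambda^*$, $\mathcal{T}$, $\mathcal{T}^c$, $C(w)$), so it slots naturally into the surrounding development, including your closing observation that LICQ makes $\lambda^\dagger$ unique.
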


\begin{theorem}[Second-Order Necessary Conditions] \label{thm::second_necessary}
In equality constraint problem Eq.(\ref{eq_constraint_problem}), suppose that $w^\dagger$ is a local solution,
and that the LICQ holds at $w^\dagger$.
Let $\lambda^\dagger$ Lagrange multiplier vector for which the KKT conditions are satisfied. Then:
\begin{align}
v^T\nabla^2_{xx} \mathcal{L}(w^\dagger, \lambda^\dagger)v \ge 0 \quad \quad 
\text{for all~} v\in \mathcal{T}(w^\dagger)
\end{align}
\end{theorem}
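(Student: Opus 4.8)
The plan is to reduce the constrained second-order condition to an unconstrained one-dimensional minimization along a feasible curve. Fix an arbitrary tangent direction $v\in\mathcal{T}(w^\dagger)$, so that $\nabla c_i(w^\dagger)^T v=0$ for all $i\in[m]$. The first step is to produce a twice-differentiable feasible curve $w(t)$ with $w(0)=w^\dagger$, $w'(0)=v$, and $c_i(w(t))=0$ for all $i$ and all $t$ in a neighborhood of $0$. Because $w^\dagger$ is a \emph{local} minimizer and $w(t)$ stays feasible, the scalar function $\phi(t):=f(w(t))$ attains a local minimum at $t=0$, hence $\phi'(0)=0$ and $\phi''(0)\ge 0$. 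The strategy is then to evaluate $\phi''(0)$ and to recognize it as exactly the quadratic form $v^T\nabla^2_{ww}\mathcal{L}(w^\dagger,\lambda^\dagger)v$.

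The construction of the feasible curve is the main obstacle, and it is where LICQ is essential. I would build it via the implicit function theorem. Let $C=C(w^\dagger)=(\nabla c_1(w^\dagger),\ldots,\nabla c_m(w^\dagger))$ and let $Z\in\mathbb{R}^{d\times(d-m)}$ be a matrix whose columns form a basis of the tangent space $\mathcal{T}(w^\dagger)$. Consider the map
$$R(w,t)=\begin{pmatrix} c(w)\\ Z^T(w-w^\dagger)-tZ^Tv\end{pmatrix},$$
which vanishes at $(w^\dagger,0)$. Under LICQ the rows of $C^T$ span the normal space and the rows of $Z^T$ span the tangent space, so the Jacobian $\partial R/\partial w$ evaluated at $(w^\dagger,0)$, namely $\left(\begin{smallmatrix}C^T\\ Z^T\end{smallmatrix}\right)$, is a nonsingular $d\times d$ matrix. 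The implicit function theorem then yields a smooth $w(t)$ with $R(w(t),t)=0$, i.e. $c(w(t))=0$, so the curve is feasible. Differentiating $R(w(t),t)=0$ at $t=0$ gives $C^Tw'(0)=0$ and $Z^Tw'(0)=Z^Tv$; since $w'(0)\in\mathcal{T}(w^\dagger)$ and $Z^T$ is injective on $\mathcal{T}(w^\dagger)$ (its columns span that subspace), this forces $w'(0)=v$.

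With the curve in hand, the remaining computation is routine. Along the curve feasibility gives $f(w(t))=\mathcal{L}(w(t),\lambda^\dagger)$, so I would differentiate the Lagrangian rather than $f$ directly. The chain rule yields $\phi'(0)=\nabla_w\mathcal{L}(w^\dagger,\lambda^\dagger)^Tv$, which is zero by the KKT stationarity condition supplied by the First-Order Necessary Conditions theorem. Differentiating once more,
$$\phi''(0)=v^T\nabla^2_{ww}\mathcal{L}(w^\dagger,\lambda^\dagger)v+\nabla_w\mathcal{L}(w^\dagger,\lambda^\dagger)^Tw''(0),$$
and the second term again vanishes by KKT stationarity. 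Combining $\phi''(0)\ge 0$ with this identity gives $v^T\nabla^2_{ww}\mathcal{L}(w^\dagger,\lambda^\dagger)v\ge 0$. Since $v\in\mathcal{T}(w^\dagger)$ was arbitrary, this completes the argument. The only subtlety beyond the curve construction is verifying the two chain-rule identities carefully, but these follow directly once $w'(0)=v$ and the KKT conditions are in place.
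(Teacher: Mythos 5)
Your proof is correct. Note that there is nothing in the paper to compare it against: the paper states this theorem (together with the first-order necessary and second-order sufficient conditions) and explicitly defers all three proofs to the standard reference \cite{wright1999numerical}, so the "paper's own proof" is a citation. Your argument is, in substance, the classical one from that reference, with one presentational difference worth noting. Nocedal--Wright prove the result by first showing (their tangent-cone lemma) that under LICQ every direction in the null space of the constraint Jacobian is the limiting direction of a feasible sequence, and then Taylor-expanding the Lagrangian along that sequence; you instead realize each tangent direction $v$ by an explicit twice-differentiable feasible curve obtained from the implicit function theorem applied to $R(w,t)=\bigl(c(w);\,Z^T(w-w^\dagger)-tZ^Tv\bigr)$, and then reduce everything to one-dimensional calculus via $\phi(t)=f(w(t))=\mathcal{L}(w(t),\lambda^\dagger)$. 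The curve construction is sound (the Jacobian $\bigl[C^T;Z^T\bigr]$ is nonsingular precisely because the rows of $C^T$ span the normal space and the rows of $Z^T$ span the tangent space, and injectivity of $Z^T$ on $\mathcal{T}(w^\dagger)$ pins down $w'(0)=v$), and it makes the second-derivative identity $\phi''(0)=v^T\nabla^2_{ww}\mathcal{L}(w^\dagger,\lambda^\dagger)v+\nabla_w\mathcal{L}(w^\dagger,\lambda^\dagger)^Tw''(0)$ immediate. The only hypothesis you use beyond the theorem statement is that the $c_i$ are $C^2$, so that the IFT curve is twice differentiable; this is implicit in the theorem (it refers to $\nabla^2_{ww}\mathcal{L}$) and is anyway assumed by the paper wherever it invokes these conditions, e.g. in Theorem~\ref{thm:constrainedmain} the constraints are assumed smooth with Lipschitz Hessians.
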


\begin{theorem}[Second-Order Sufficient Conditions] \label{thm::second_sufficient}
In equality constraint problem Eq.(\ref{eq_constraint_problem}), suppose that for some feasible point $w^\dagger \in \mathbb{R}^d$, and there's  Lagrange multiplier vector
$\lambda^\dagger$  for which the KKT conditions are satisfied. Suppose also that:
\begin{align}
v^T\nabla^2_{xx} \mathcal{L}(w^\dagger, \lambda^\dagger)v > 0 \quad \quad 
\text{for all~} v\in \mathcal{T}(w^\dagger), v\neq 0
\end{align}
Then $w^\dagger$ is a strict local solution.
\end{theorem}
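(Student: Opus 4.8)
The statement is the classical second-order sufficient condition for equality-constrained optimization, and the plan is to prove it by contradiction, phrased so as to reuse the Lagrangian machinery already set up in this section (a self-contained version of the argument in \cite{wright1999numerical}). The plan is to suppose that $w^\dagger$ is \emph{not} a strict local solution and to extract from this a nonzero tangent direction of nonpositive curvature, contradicting the positive-definiteness hypothesis. Concretely, if $w^\dagger$ is not a strict local minimizer there is a sequence of feasible points $z_k \to w^\dagger$ with $z_k \ne w^\dagger$ and $f(z_k) \le f(w^\dagger)$. I would set $t_k := \|z_k - w^\dagger\| \to 0$ and $d_k := (z_k - w^\dagger)/t_k$, so $\|d_k\| = 1$, and then use compactness of the unit sphere to pass to a subsequence along which $d_k \to d$ with $\|d\| = 1$.

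The first key step is to show $d \in \mathcal{T}(w^\dagger)$. Since both $z_k$ and $w^\dagger$ are feasible, $c_i(z_k) = c_i(w^\dagger) = 0$ for every $i$; a first-order Taylor expansion of each constraint gives $0 = t_k\,\nabla c_i(w^\dagger)^\top d_k + o(t_k)$, and dividing by $t_k$ and letting $k \to \infty$ yields $\nabla c_i(w^\dagger)^\top d = 0$ for all $i$, i.e. $d \in \mathcal{T}(w^\dagger)$. The second, crucial, device is to replace $f$ by the Lagrangian $\mathcal{L}(\cdot,\lambda^\dagger)$: because all constraints vanish at feasible points, $\mathcal{L}(z_k,\lambda^\dagger) = f(z_k)$ and $\mathcal{L}(w^\dagger,\lambda^\dagger) = f(w^\dagger)$. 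Expanding $\mathcal{L}(\cdot,\lambda^\dagger)$ to second order about $w^\dagger$ and using the KKT stationarity $\nabla_w \mathcal{L}(w^\dagger,\lambda^\dagger) = 0$ from Theorem~\ref{thm::first_necessary} gives
$$f(z_k) = f(w^\dagger) + \tfrac12\, t_k^2\, d_k^\top\, \nabla^2_{ww}\mathcal{L}(w^\dagger,\lambda^\dagger)\, d_k + o(t_k^2).$$
Here I can lean on the Taylor bound already recorded in Eq.~(\ref{Taylor_eq_constraint}) (with $\lambda^* = \lambda^\dagger$, which holds at a KKT point under LICQ) together with the matching lower bound supplied by the $\rho_L$-Lipschitz Hessian, so that the cubic remainder is controlled by $o(t_k^2)$.

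Finally, dividing the displayed identity by $t_k^2$, invoking $f(z_k) \le f(w^\dagger)$, and sending $k \to \infty$ (the quadratic form converges by continuity since $d_k \to d$) yields $d^\top\, \nabla^2_{ww}\mathcal{L}(w^\dagger,\lambda^\dagger)\, d \le 0$. But $d \in \mathcal{T}(w^\dagger)$ and $d \ne 0$, which contradicts the hypothesis that the Lagrangian Hessian is strictly positive on the tangent space; hence $w^\dagger$ must be a strict local solution.

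I expect the main obstacle to be the second step: the second-order information must be extracted from the \emph{Lagrangian} rather than from $f$ directly, because the Hessian of $f$ alone need not be sign-definite on $\mathcal{T}(w^\dagger)$, and it is precisely the curvature of the constraints, bundled into $\nabla^2_{ww}\mathcal{L}$, that makes the estimate go through. A secondary technical point is verifying that the limiting direction $d$ genuinely lies in $\mathcal{T}(w^\dagger)$ and that the normal component of the displacement $z_k - w^\dagger$ — which is only $O(t_k^2)$ by the curvature of the feasible manifold — does not contaminate the leading quadratic term; this is where the smoothness of the $c_i$ (and, for a quantitative version, the $\alpha_c$-RLICQ bound on the curvature of the constraint manifold) would be used to justify the $o(t_k^2)$ remainder estimates.
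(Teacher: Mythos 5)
Your proof is correct: it is the classical contradiction argument — a feasible sequence violating strict local optimality, a limiting unit direction shown to lie in $\mathcal{T}(w^\dagger)$, and a second-order expansion of the Lagrangian (using KKT stationarity and $\mathcal{L}(\cdot,\lambda^\dagger)=f$ on the feasible set) yielding a nonpositive curvature direction. The paper does not prove this theorem itself but defers to \cite{wright1999numerical}, and your argument is essentially the standard proof given in that reference, so the two approaches coincide.
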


\begin{remark}
By definition Eq.(\ref{Lambda_star}), we know immediately $\lambda^*(w^\dagger)$ is one of valid Lagrange multipliers $\lambda^\dagger$ for which the KKT conditions are satisfied. This means $\chi(w^\dagger) = 
\nabla_w \mathcal{L}(w^\dagger, \lambda^\dagger)$ and $\mathfrak{M}(w^\dagger) = \mathcal{L}(w^\dagger, \lambda^\dagger)$.
\end{remark}

Therefore, Theorem \ref{thm::first_necessary}, \ref{thm::second_necessary}, \ref{thm::second_sufficient} gives strong implication that $\chi(w)$ and $\mathfrak{M}(w)$ are the right thing to look at, which are in some sense equivalent to $\nabla f(w)$ and $\Hess f(w)$ in unconstrained case.

\subsection{Geometrical Lemmas Regarding Constraint Manifold}

Since in equality constraint problem, at each step of PSGD, we are effectively considering the local manifold around feasible point $w_{t-1}$.
In this section, we provide some technical lemmas relating to the geometry of constraint manifold in preparsion for the proof of main theorem in equality constraint case.

We first show if two points are close, then the projection in the normal space is much smaller than the projection in the tangent space.

\begin{lemma} \label{lem::normal_by_tangent}
Suppose the constraints $\{c_i\}_{i=1}^m$ are $\beta_i$-smooth, 
and \nameCQ holds for all $w\in \mathcal{W}$.
Then, let $\sum_{i=1}^m\frac{\beta_i^2}{ \alpha^2_c} = \frac{1}{R^2}$,
for any $w, w_0 \in \mathcal{W}$, let $\mathcal{T}_0 = \mathcal{T}(w_0)$, then
\begin{equation}
\|P_{\mathcal{T}^c_0} (w-w_0)\| \le \frac{1}{2R} \|w-w_0\|^2
\end{equation}
Furthermore, if $\|w-w_0\| < R$ holds, 
we additionally have:
\begin{equation}
\|P_{\mathcal{T}^c_0} (w-w_0)\| \le \frac{\|P_{\mathcal{T}_0} (w-w_0)\|^2}{R}
\end{equation}
\end{lemma}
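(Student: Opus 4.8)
The plan is to exploit two facts about the two feasible points: both satisfy every constraint exactly, and the constraint functions have Lipschitz gradients. First I would write $v := w - w_0$ and $C_0 := C(w_0) = (\nabla c_1(w_0),\ldots,\nabla c_m(w_0))$, whose column space is precisely the normal space $\mathcal{T}^c_0$. Since $c_i(w) = c_i(w_0) = 0$ by feasibility, the $\beta_i$-smoothness of $c_i$ (Lipschitz gradient, equivalent to a bounded quadratic Taylor remainder) gives
\begin{equation}
|\nabla c_i(w_0)^\top v| = |c_i(w) - c_i(w_0) - \nabla c_i(w_0)^\top v| \le \tfrac{\beta_i}{2}\|v\|^2 .
\end{equation}
Stacking these across $i$ yields $\|C_0^\top v\|^2 = \sum_i |\nabla c_i(w_0)^\top v|^2 \le \tfrac{\|v\|^4}{4}\sum_i \beta_i^2$, hence $\|C_0^\top v\| \le \tfrac{\|v\|^2}{2}\sqrt{\sum_i \beta_i^2}$.

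The main obstacle is converting this control of $\|C_0^\top v\|$ into control of the projection $\|P_{\mathcal{T}^c_0} v\|$ itself, and this is exactly where $\alpha_c$-RLICQ enters. I would first observe $C_0^\top v = C_0^\top P_{\mathcal{T}^c_0} v$, since $C_0^\top$ annihilates the tangent component $P_{\mathcal{T}_0} v$. Writing $u := P_{\mathcal{T}^c_0} v$, which lies in the column space of $C_0$, I would use the SVD $C_0 = U\Sigma V^\top$ (with all singular values $\ge \alpha_c$ by RLICQ) to show that for any $u$ in this column space, $\|C_0^\top u\| \ge \alpha_c \|u\|$: expanding $u = U\Sigma c$ gives $\|u\|^2 = \sum_i \sigma_i^2 c_i^2$ and $\|C_0^\top u\|^2 = \sum_i \sigma_i^4 c_i^2 \ge \alpha_c^2 \sum_i \sigma_i^2 c_i^2$. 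Combining with the previous paragraph,
\begin{equation}
\alpha_c \|P_{\mathcal{T}^c_0} v\| \le \|C_0^\top v\| \le \tfrac{\|v\|^2}{2}\sqrt{\sum_i \beta_i^2},
\end{equation}
and dividing by $\alpha_c$ and recognizing $\tfrac{1}{\alpha_c}\sqrt{\sum_i \beta_i^2} = \sqrt{\sum_i \beta_i^2/\alpha_c^2} = 1/R$ proves the first inequality.

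For the refined bound under $\|v\| < R$, I would set $a := \|P_{\mathcal{T}_0} v\|$ and $b := \|P_{\mathcal{T}^c_0} v\|$, so that the Pythagorean decomposition (stated earlier) gives $\|v\|^2 = a^2 + b^2$. The first bound reads $b \le (a^2+b^2)/(2R)$, i.e. $a^2 \ge b(2R - b)$. Since $b \le \|v\| < R$ we have $2R - b > R$, so for $b > 0$ this forces $a^2 > bR$, i.e. $b < a^2/R$ (and $b = 0$ is trivial). This is the desired $\|P_{\mathcal{T}^c_0}(w-w_0)\| \le \|P_{\mathcal{T}_0}(w-w_0)\|^2/R$, completing the proof. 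I expect the singular-value step to be the only delicate point; everything else is Taylor expansion and a one-line quadratic inequality.
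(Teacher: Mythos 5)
Your proof is correct and follows essentially the same route as the paper's: both bound $\|C(w_0)^\top(w-w_0)\|$ from above via feasibility plus $\beta_i$-smoothness, from below via $\alpha_c$-RLICQ applied to the normal-space component, and then resolve the resulting quadratic inequality in $\|P_{\mathcal{T}^c_0}(w-w_0)\|$ using the Pythagorean identity and $\|w-w_0\|<R$. The only differences are cosmetic: you spell out the SVD argument the paper dismisses as ``simple linear algebra,'' and you factor the quadratic as $a^2 \ge b(2R-b) \ge bR$ rather than enumerating the two roots and discarding the $b \ge R$ branch.
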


\begin{proof}
First, since for any vector $\hat{v} \in \mathcal{T}_0$, we have 
$\|C(w_0)^T \hat{v}\| = 0$, then
by simple linear algebra, it's easy to show:
\begin{align} \label{C1_constraint}
\|C(w_0)^T (w-w_0)\|^2  =&  \|C(w_0)^T P_{\mathcal{T}^c_0} (w-w_0)\|^2 
\ge \sigma^2_{\min} \|P_{\mathcal{T}^c_0} (w-w_0)\|^2 \nonumber \\
\ge & \alpha_c^2 \|P_{\mathcal{T}^c_0} (w-w_0)\|^2
\end{align}
On the other hand, by $\beta_i$-smooth, we have:
\begin{align}
|c_i(w) - c_i(w_0) - \nabla c_i(w_0) ^T (w-w_0)| \le \frac{\beta_i}{2} \|w-w_0\|^2
\end{align}
Since $w, w_0$ are feasible points, we have $c_i(w) = c_i(w_0) = 0$, 
which gives:
\begin{equation} \label{C2_constraint}
\|C(w_0)^T (w-w_0)\|^2 = \sum_{i=1}^m (\nabla c_i(w_0) ^T (w-w_0))^2
\le \sum_{i=1}^m \frac{\beta_i^2}{4} \|w-w_0\|^4
\end{equation}
Combining Eq.(\ref{C1_constraint}) and Eq.(\ref{C2_constraint}), and the definition of $R$, 
we have:
\begin{equation}
\|P_{\mathcal{T}^c_0} (w-w_0)\|^2 \le \frac{1}{4R^2} \|w-w_0\|^4
= \frac{1}{4R^2}(\|P_{\mathcal{T}^c_0} (w-w_0)\|^2 + \|P_{\mathcal{T}_0} (w-w_0)\|^2)^2
\end{equation}
Solving this second-order inequality gives two solution
\begin{equation}
	\|P_{\mathcal{T}^c_0} (w-w_0)\| \le \frac{\|P_{\mathcal{T}_0} (w-w_0)\|^2}{R}
	\quad \text{or} \quad \|P_{\mathcal{T}^c_0} (w-w_0)\| \ge R
\end{equation}
By assumption, we know $\|w-w_0\| < R$ (so the second case cannot be true), which finishes the proof.
\end{proof}

Here, we see the $\sqrt{\sum_{i=1}^m\frac{\beta_i^2}{ \alpha^2_c}} = \frac{1}{R}$ serves as a 
upper bound of the curvatures on the constraint manifold, and equivalently, $R$ serves as a lower bound of the radius of curvature. \nameCQ and smoothness guarantee that the curvature is bounded.

Next we show the normal/tangent space of nearby points are close.

\begin{lemma} \label{lem::normal}
Suppose the constraints $\{c_i\}_{i=1}^m$ are $\beta_i$-smooth, 
and \nameCQ holds for all $w\in \mathcal{W}$.
Let $\sum_{i=1}^m\frac{\beta_i^2}{ \alpha^2_c} = \frac{1}{R^2}$,
for any $w, w_0 \in \mathcal{W}$, let $\mathcal{T}_0 = \mathcal{T}(w_0)$.
Then for all $\hat{v} \in \mathcal{T}(w)$ so that $\|\hat{v}\| = 1$, we have 
\begin{equation}
\|P_{\mathcal{T}^c_0} \cdot \hat{v}\|  \le \frac{\|w-w_0\|}{R}
\end{equation}
\end{lemma}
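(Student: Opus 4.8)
The plan is to mirror the argument already used for Lemma~\ref{lem::normal_by_tangent}, squeezing $\|C(w_0)^\top \hat v\|$ between a lower bound governed by $\alpha_c$ and an upper bound governed by the smoothness of the constraints. The roles of the two ingredients are swapped relative to Lemma~\ref{lem::normal_by_tangent}: there, feasibility ($c_i(w)=c_i(w_0)=0$) was used to control $C(w_0)^\top(w-w_0)$, whereas here it is the tangency condition ($\nabla c_i(w)^\top \hat v = 0$) that will play the analogous role.

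First I would set $C_0 := C(w_0) = (\nabla c_1(w_0), \ldots, \nabla c_m(w_0))$, whose columns span the normal space $\mathcal{T}^c_0$. Since $C_0^\top$ annihilates every tangent vector at $w_0$, for any $u$ we have $C_0^\top u = C_0^\top P_{\mathcal{T}^c_0} u$; and because $P_{\mathcal{T}^c_0} u$ lies in the column span of $C_0$, on which $C_0^\top$ acts injectively with smallest singular value $\sigma_{\min}(C_0) \ge \alpha_c$ by $\alpha_c$-RLICQ, this yields the lower bound $\|C_0^\top \hat v\| \ge \alpha_c \|P_{\mathcal{T}^c_0} \hat v\|$. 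This is exactly the estimate already established in equation~\eqref{C1_constraint}.

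For the upper bound, the key observation is that $\hat v \in \mathcal{T}(w)$ means $\nabla c_i(w)^\top \hat v = 0$, so the $i$-th coordinate of $C_0^\top \hat v$ can be rewritten as $\nabla c_i(w_0)^\top \hat v = (\nabla c_i(w_0) - \nabla c_i(w))^\top \hat v$. Since $c_i$ is $\beta_i$-smooth (equivalently, $\nabla c_i$ is $\beta_i$-Lipschitz) and $\|\hat v\| = 1$, Cauchy--Schwarz gives $|\nabla c_i(w_0)^\top \hat v| \le \beta_i \|w - w_0\|$, and hence $\|C_0^\top \hat v\|^2 = \sum_{i=1}^m (\nabla c_i(w_0)^\top \hat v)^2 \le (\sum_{i=1}^m \beta_i^2)\,\|w - w_0\|^2$.

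Combining the two bounds and dividing by $\alpha_c^2$ gives $\|P_{\mathcal{T}^c_0} \hat v\|^2 \le (\sum_{i=1}^m \beta_i^2/\alpha_c^2)\,\|w - w_0\|^2 = \|w-w_0\|^2/R^2$ by the definition $\sum_{i=1}^m \beta_i^2/\alpha_c^2 = 1/R^2$, and taking square roots finishes the proof. The only point demanding care is the lower-bound step: confirming that $C_0^\top$ restricted to the column span of $C_0$ has smallest singular value at least $\alpha_c$, and that $C_0^\top \hat v$ only sees the normal component of $\hat v$. But this is precisely the computation carried out in~\eqref{C1_constraint}, so I do not expect any genuinely new obstacle; the lemma is essentially a one-line adaptation of the earlier geometric estimate with tangency substituted for feasibility.
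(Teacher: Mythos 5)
Your proposal is correct and follows essentially the same route as the paper's proof: the lower bound $\|C(w_0)^\top \hat v\| \ge \alpha_c \|P_{\mathcal{T}^c_0}\hat v\|$ is the paper's Eq.~\eqref{C3_constraint} (obtained by the same calculation as \eqref{C1_constraint}), and the upper bound via $C(w)^\top \hat v = 0$, gradient Lipschitzness, and Cauchy--Schwarz is exactly the paper's Eq.~\eqref{C4_constraint}. Combining the two and invoking the definition of $R$ is precisely how the paper concludes, so there is nothing to add.
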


\begin{proof}
With similar calculation as Eq.(\ref{C1_constraint}), we immediately have:
\begin{align} \label{C3_constraint}
\|P_{\mathcal{T}^c_0} \cdot \hat{v}\|^2
\le \frac{\|C(w_0)^T \hat{v}\|^2 }{\sigma^2_{\min}(C(w))}
\le \frac{\|C(w_0)^T \hat{v}\|^2 }{\alpha_c^2}
\end{align}
Since $\hat{v} \in \mathcal{T}(w)$ , we have $C(w)^T \hat{v} = 0$, 
combined with the fact that $\hat{v}$ is a unit vector, we have:
\begin{align}
\|C(w_0)^T \hat{v}\|^2 = &\|[C(w_0) - C(w)]^T \hat{v}\|^2
= \sum_{i=1}^m ([\nabla c_i(w_0) - \nabla c_i(w)]^T \hat{v})^2 \nonumber \\
\le & \sum_{i=1}^m \|\nabla c_i(w_0) - \nabla c_i(w)\|^2 \|\hat{v}\|^2
\le \sum_{i=1}^m \beta_i^2\|w_0 - w\|^2 \label{C4_constraint}
\end{align}
Combining Eq.(\ref{C3_constraint}) and Eq.(\ref{C4_constraint}), and the definition of $R$, 
we concludes the proof.
\end{proof}

\begin{lemma} \label{lem::tangent}
Suppose the constraints $\{c_i\}_{i=1}^m$ are $\beta_i$-smooth, 
and \nameCQ holds for all $w\in \mathcal{W}$.
Let $\sum_{i=1}^m\frac{\beta_i^2}{ \alpha^2_c} = \frac{1}{R^2}$,
for any $w, w_0 \in \mathcal{W}$, let $\mathcal{T}_0 = \mathcal{T}(w_0)$.
Then for all $\hat{v} \in \mathcal{T}^c(w)$ so that $\|\hat{v}\| = 1$, we have 
\begin{equation}
\|P_{\mathcal{T}_0} \cdot \hat{v}\| \le \frac{\|w-w_0\|}{R}
\end{equation}
\end{lemma}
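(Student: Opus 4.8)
The plan is to mirror the proof of Lemma~\ref{lem::normal}, but with the roles of the tangent and normal spaces exchanged, exploiting the fact that \nameCQ lets us control a normal vector through its coordinates in the frame $C(w) = (\nabla c_1(w),\dots,\nabla c_m(w))$.

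First I would use that $\hat{v}\in\mathcal{T}^c(w) = \text{span}\{\nabla c_1(w),\dots,\nabla c_m(w)\}$ to write $\hat{v} = C(w)a = \sum_{i=1}^m a_i\nabla c_i(w)$ for some coefficient vector $a$. Because \nameCQ gives $\sigma_{\min}(C(w))\ge\alpha_c$, we obtain $1 = \|\hat{v}\| = \|C(w)a\|\ge\alpha_c\|a\|$, hence $\|a\|\le 1/\alpha_c$. This is the one place where the robustness of the constraint qualification (as opposed to plain LICQ) is essential: without a uniform lower bound on the singular values, the coefficients $a$ could be arbitrarily large and the argument would fail.

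Next I would invoke the variational characterization $\|P_{\mathcal{T}_0}\hat{v}\| = \max_{u\in\mathcal{T}_0,\|u\|=1}\langle\hat{v},u\rangle$, valid since $P_{\mathcal{T}_0}$ is an orthoprojector. Fixing such a $u$, the defining property $u\in\mathcal{T}_0=\mathcal{T}(w_0)$ means $\langle\nabla c_i(w_0),u\rangle = 0$ for every $i$, so the $w_0$-gradients drop out and $\langle\hat{v},u\rangle = \sum_i a_i\langle\nabla c_i(w)-\nabla c_i(w_0),u\rangle$. Then $\beta_i$-smoothness gives $\|\nabla c_i(w)-\nabla c_i(w_0)\|\le\beta_i\|w-w_0\|$, and two applications of Cauchy--Schwarz yield $|\langle\hat{v},u\rangle|\le\big(\sum_i|a_i|\beta_i\big)\|w-w_0\|\le\|a\|\sqrt{\sum_i\beta_i^2}\;\|w-w_0\|$. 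Substituting $\|a\|\le1/\alpha_c$ together with the definition $\sum_i\beta_i^2/\alpha_c^2 = 1/R^2$ gives $|\langle\hat{v},u\rangle|\le\|w-w_0\|/R$; taking the maximum over $u$ closes the proof.

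The computation is short, and the only real subtlety is the coupling in the previous paragraph: one uses $u\in\mathcal{T}_0$ to annihilate the $w_0$-gradient terms so that only the gradient \emph{differences} remain (controlled by smoothness), while \nameCQ simultaneously bounds the representation coefficients $a$. An alternative, which I would note but not pursue, is to invoke the subspace identity $\|P_{\mathcal{T}_0}P_{\mathcal{T}^c(w)}\| = \|P_{\mathcal{T}_0^c}P_{\mathcal{T}(w)}\|$ (equality of the relevant principal angles between the two pairs of complementary subspaces), which would make Lemma~\ref{lem::tangent} follow immediately from Lemma~\ref{lem::normal}; the direct estimate above is preferable since it is self-contained and matches the style of the surrounding geometric lemmas.
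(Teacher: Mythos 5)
Your proof is correct and follows essentially the same route as the paper's: both write $\hat{v}=\sum_{i} a_i\nabla c_i(w)$, bound the coefficients by $\|a\|\le 1/\alpha_c$ via \nameCQ, and then use $\beta_i$-smoothness of the constraints together with Cauchy--Schwarz to pick up the factor $\sqrt{\sum_i\beta_i^2}\,\|w-w_0\|/\alpha_c=\|w-w_0\|/R$. The only cosmetic difference is how the projection norm is controlled: the paper compares $P_{\mathcal{T}^c_0}\hat{v}$ against the surrogate $\tilde{d}=\sum_i a_i\nabla c_i(w_0)\in\mathcal{T}^c_0$ using the closest-point property, while you use the variational characterization $\|P_{\mathcal{T}_0}\hat{v}\|=\max_{u\in\mathcal{T}_0,\|u\|=1}\langle\hat{v},u\rangle$ to annihilate the $w_0$-gradients directly; these are interchangeable.
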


\begin{proof}
By definition of projection, clearly, we have
$P_{\mathcal{T}_0} \cdot \hat{v} + P_{\mathcal{T}^c_0} \cdot \hat{v} = \hat{v}$.
Since $\hat{v} \in \mathcal{T}^c(w)$, without loss of generality, assume 
$\hat{v} = \sum_{i=1}^m \lambda_i \nabla c_i(w)$.
Define $\tilde{d} = \sum_{i=1}^m \lambda_i \nabla c_i(w_0)$, 
clearly $\tilde{d} \in \mathcal{T}^c_0$. Since projection gives the closest point in subspace, 
we have:
\begin{align} \label{C5_constraint}
\|P_{\mathcal{T}_0} \cdot \hat{v}\| = &\| P_{\mathcal{T}^c_0} \cdot \hat{v} - \hat{v}\|
\le  \|\tilde{d} - \hat{v}\| \nonumber \\
\le&  \sum_{i=1}^m \lambda_i \|\nabla c_i(w_0) - \nabla c_i(w)\|
\le \sum_{i=1}^m \lambda_i \beta_i \|w_0 - w\|
\end{align}
On the other hand, let $\lambda = (\lambda_1, \cdots, \lambda_m)^T$, 
we know $C(w) \lambda = \hat{v}$, thus:
\begin{equation}
\lambda  = C(w)^{\dagger}\hat{v} = (C(w)^TC(w))^{-1}C(w)^T\hat{v}
\end{equation}
Therefore, by \nameCQ and the fact $\hat{v}$ is unit vector, we know: $\|\lambda\| \le \frac{1}{\alpha_c}$.
Combined with Eq.(\ref{C5_constraint}), we finished the proof.
\end{proof}

Using the previous lemmas, we can then prove that: starting from any point $w_0$ on constraint manifold, the result of
adding any small vector $v$ and then projected back to feasible set, is not very different from 
the result of adding $P_{\mathcal{T}(w_0)} v$.

\begin{lemma} \label{lem::projection_distance}
Suppose the constraints $\{c_i\}_{i=1}^m$ are $\beta_i$-smooth, 
and \nameCQ holds for all $w\in \mathcal{W}$.
Let $\sum_{i=1}^m\frac{\beta_i^2}{ \alpha^2_c} = \frac{1}{R^2}$,
for any $w_0 \in \mathcal{W}$, let $\mathcal{T}_0 = \mathcal{T}(w_0)$.
Then let $w_1 = w_0 + \eta \hat{v}$, and 
$w_2  = w_0 + \eta P_{\mathcal{T}_0}\cdot \hat{v}$, 
where $\hat{v} \in \mathbb{S}^{d-1}$ is a unit vector.
Then, we have:
\begin{equation}
\|\Pi_{\mathcal{W}}(w_1) - w_2\| \le \frac{4\eta^2}{R}
\end{equation}
Where projection $\Pi_{\mathcal{W}}(w)$ is defined as the closet point to $w$ on feasible set $\mathcal{W}$.
\end{lemma}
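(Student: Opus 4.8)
The plan is to let $p := \Pi_{\mathcal{W}}(w_1)$ and show directly that $p$ is $O(\eta^2/R)$-close to $w_2$. First I would record two elementary metric facts: since $w_0 \in \mathcal{W}$ and $p$ is the \emph{closest} feasible point to $w_1$, we have $\|w_1 - p\| \le \|w_1 - w_0\| = \eta$, and hence $\|p - w_0\| \le \|p - w_1\| + \|w_1 - w_0\| \le 2\eta$ by the triangle inequality. The crucial structural fact is the first-order optimality condition for the projection: because $p$ minimizes $\tfrac12\|w - w_1\|^2$ over $\mathcal{W}$ and $\alpha_c$-RLICQ holds at $p$, the KKT conditions give $w_1 - p = \sum_i \mu_i \nabla c_i(p)$ for some multipliers $\mu_i$, i.e. $w_1 - p \in \mathcal{T}^c(p)$ lies in the normal space at $p$. (Uniqueness of the projection, and hence validity of this condition, holds once $\eta$ is small relative to the radius of curvature $R$.)

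Next I would reduce everything to a single algebraic identity. Writing $p - w_0 = \eta\hat v - (w_1 - p)$ (which is just $w_1 = w_0 + \eta\hat v$) and splitting $p - w_0$ into its tangential and normal parts with respect to $\mathcal{T}_0 = \mathcal{T}(w_0)$, a short computation gives
\begin{equation}
p - w_2 = (p - w_0) - \eta P_{\mathcal{T}_0}\hat v = -\,P_{\mathcal{T}_0}(w_1 - p) + P_{\mathcal{T}^c_0}(p - w_0),
\end{equation}
where the two $\eta P_{\mathcal{T}_0}\hat v$ terms cancel. Thus by the triangle inequality it suffices to bound $\|P_{\mathcal{T}_0}(w_1-p)\|$ and $\|P_{\mathcal{T}^c_0}(p-w_0)\|$ separately, each by $2\eta^2/R$.

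Finally I would invoke the two geometric lemmas proved above. For the normal component of $p - w_0$: since $p, w_0 \in \mathcal{W}$, the first conclusion of Lemma~\ref{lem::normal_by_tangent} gives $\|P_{\mathcal{T}^c_0}(p - w_0)\| \le \tfrac{1}{2R}\|p - w_0\|^2 \le \tfrac{1}{2R}(2\eta)^2 = 2\eta^2/R$. For the tangential projection of $w_1 - p$: since $w_1 - p \in \mathcal{T}^c(p)$ is a normal vector at $p$, Lemma~\ref{lem::tangent} (applied with $w = p$) bounds the projection of the corresponding unit vector by $\|p - w_0\|/R$, so $\|P_{\mathcal{T}_0}(w_1 - p)\| \le \|w_1 - p\|\cdot \tfrac{\|p-w_0\|}{R} \le \eta\cdot\tfrac{2\eta}{R} = 2\eta^2/R$. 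Adding the two bounds yields $\|\Pi_{\mathcal{W}}(w_1) - w_2\| \le 4\eta^2/R$, as claimed. The only delicate point in the whole argument is the justification of the first-order optimality condition $w_1 - p \in \mathcal{T}^c(p)$; everything else is bookkeeping and a direct application of the curvature lemmas, so I expect that step (well-definedness and uniqueness of the projection, together with the RLICQ-based KKT argument) to require the most care.
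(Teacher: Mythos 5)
Your proposal is correct and follows essentially the same route as the paper's proof: the same KKT/LICQ argument showing $w_1 - \Pi_{\mathcal{W}}(w_1) \in \mathcal{T}^c(\Pi_{\mathcal{W}}(w_1))$, the same application of Lemma~\ref{lem::tangent} and Lemma~\ref{lem::normal_by_tangent}, and the same two $2\eta^2/R$ bounds combined by triangle inequality (your single identity for $p - w_2$ is just the paper's intermediate point $v_1 = w_0 + P_{\mathcal{T}_0}(u_1 - w_0)$ written out in one line). The only cosmetic difference is your parenthetical worry about uniqueness of the projection, which is not actually needed, since the KKT necessary conditions apply to any minimizer of the projection problem.
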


\begin{proof}
First, note that $\|w_1 - w_0\| = \eta$, and 
by definition of projection, there must exist a project $\Pi_{\mathcal{W}}(w)$ inside the ball
$\mathbb{B}_\eta (w_1) = \{w~|~ \|w-w_1\| \le \eta \}$.

Denote $u_1 = \Pi_{\mathcal{W}}(w_1)$, and clearly $u_1 \in \mathcal{W}$.
we can formulate $u_1$ as the solution to following constrained optimization problems:
\begin{align}
&\min_u \quad \quad  \|w_1 - u\|^2 \\
&\text{s.t.} \quad \quad c_i(u) = 0, \quad \quad i=1, \cdots, m \nonumber
\end{align}
Since function $f(u) = \|w_1 - u\|^2$ and $c_i(u)$ are continuously differentiable by assumption, 
and the condition \nameCQ holds for all $w\in \mathcal{W}$ implies that LICQ holds for $u_1$. Therefore, by 
Karush-Kuhn-Tucker necessary conditions, 
we immediately know $(w_1 - u_1) \in \mathcal{T}^c(u_1)$.

Since $u_1 \in \mathbb{B}_\eta (w_1)$, we know $\|w_0 - u_1\| \le 2 \eta$, by Lemma \ref{lem::tangent}, 
we immediately have:
\begin{equation}
\|P_{\mathcal{T}_0} (w_1 - u_1)\| 
= \frac{\|P_{\mathcal{T}_0} (w_1 - u_1)\| }{\|w_1 - u_1\| }\|w_1 - u_1\| 
\le \frac{1}{R}\|w_0 - u_1\|\cdot \|w_1 - u_1\| \le \frac{2}{R} \eta^2
\end{equation}

Let $v_1 =  w_0 + P_{\mathcal{T}_0}(u_1 - w_0)$, 
we have:
\begin{align} \label{F1_constraint}
\|v_1 - w_2\|= &\|(v_1 - w_0) - (w_2-w_0)\| = \|P_{\mathcal{T}_0}(u_1 - w_0) - P_{\mathcal{T}_0}(w_1 - w_0)\|
\nonumber \\
= &\|P_{\mathcal{T}_0} (w_1 - u_1)\| \le \frac{2}{R} \eta^2
\end{align}

On the other hand by Lemma \ref{lem::normal_by_tangent}, we have:
\begin{equation} \label{F2_constraint}
\|u_1 - v_1\| = \|P_{\mathcal{T}^c_0} (u_1-w_0)\| \le \frac{1}{2R} \|u_1-w_0\|^2 \le \frac{2}{R} \eta^2
\end{equation}
Combining Eq.(\ref{F1_constraint}) and Eq.(\ref{F2_constraint}), we finished the proof.

\end{proof}

\subsection{Main Theorem}

Now we are ready to prove the main theorems. First we revise the definition of \name~in the constrained case.

\begin{definition}
\label{def:robustcondition_constraint}
A twice differentiable function $f(w)$ with constraints $c_i(w)$ is $(\alpha, \gamma, \epsilon, \delta)$-{\em\name}, if for any point $w$ one of the following is true
\begin{enumerate}
\item $\|\chi(w)\| \ge \epsilon$.
\item $\hat{v}^T  \mathfrak{M}(w) \hat{v}  \le -\gamma$ for some $\hat{v} \in \mathcal{T}(w)$, $\|\hat{v}\| = 1$
\item There is a local minimum $w^\star$ such that $\|w-w^\star\| \le \delta$, and for all $w'$ in the $2\delta$ neighborhood of $w^\star$, we have $\hat{v}^T  \mathfrak{M}(w') \hat{v}  \ge \alpha$ for all $\hat{v} \in \mathcal{T}(w')$, $\|\hat{v}\| = 1$
\end{enumerate}
\end{definition}

Next, we prove a equivalent formulation for PSGD.

\begin{lemma}\label{PSGD_equivalent}
Suppose the constraints $\{c_i\}_{i=1}^m$ are $\beta_i$-smooth, 
and \nameCQ holds for all $w\in \mathcal{W}$. Furthermore, 
if function $f$ is $L$-Lipschitz, and the noise $\xi$ is bounded, then
running PSGD as in Eq.(\ref{PSGD_update}) is equivalent to running:
\begin{equation}\label{PSGD_update_equivalent}
	w_t = w_{t-1} -  \eta \cdot (\chi(w_{t-1}) + P_{\mathcal{T}(w_{t-1})} \xi_{t-1}) + \iota_{t-1}
\end{equation}
where $\iota$ is the correction for projection, and $\|\iota\| \le \tlO(\eta^2)$.
\end{lemma}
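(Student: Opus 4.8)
The plan is to treat a single PSGD step geometrically. Combining the two sources of randomness exactly as in the unconstrained analysis, the pre-projection iterate is $v_t = w_{t-1} - \eta(\nabla f(w_{t-1}) + \xi_{t-1})$, so $v_t$ is simply $w_{t-1}$ displaced by a short vector; I would decompose that displacement into its tangent and normal components at $w_{t-1}$ and argue that the projection $\Pi_{\mathcal{W}}(v_t)$ only ``feels'' the tangent component, up to a second-order correction. First I would control the step length: since $f$ is $L$-Lipschitz we have $\|\nabla f(w_{t-1})\| \le L$, and $\xi$ is bounded, so $\eta' := \|v_t - w_{t-1}\| = \eta\|\nabla f(w_{t-1}) + \xi_{t-1}\| \le \eta(L + Q) = \tlO(\eta)$. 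In particular, for small enough $\eta$ this length is below the radius $R$, so the quadratic branch of Lemma~\ref{lem::projection_distance} is in force.

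The heart of the argument is a direct application of Lemma~\ref{lem::projection_distance} with $w_0 = w_{t-1}$. Writing $v_t = w_{t-1} + \eta' \hat{v}$ with $\hat{v}$ a unit vector and $\eta' = \tlO(\eta)$ the true step length, the lemma produces the point $w_2 = w_{t-1} + \eta' P_{\mathcal{T}(w_{t-1})}\hat{v}$ together with the estimate $\|\Pi_{\mathcal{W}}(v_t) - w_2\| \le 4(\eta')^2/R = \tlO(\eta^2)$. Because the tangent projector $P_{\mathcal{T}(w_{t-1})}$ is linear, $\eta' P_{\mathcal{T}(w_{t-1})}\hat{v} = P_{\mathcal{T}(w_{t-1})}(v_t - w_{t-1}) = -\eta\bigl(P_{\mathcal{T}(w_{t-1})}\nabla f(w_{t-1}) + P_{\mathcal{T}(w_{t-1})}\xi_{t-1}\bigr)$. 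Invoking the interpretation of $\chi$ following \eqref{Lagrangian_1}, namely that $\chi(w) = P_{\mathcal{T}(w)}\nabla f(w)$ is the component of the gradient in the tangent space, this displacement equals $-\eta(\chi(w_{t-1}) + P_{\mathcal{T}(w_{t-1})}\xi_{t-1})$. Hence $w_2 = w_{t-1} - \eta(\chi(w_{t-1}) + P_{\mathcal{T}(w_{t-1})}\xi_{t-1})$ is exactly the reformulated update stripped of its correction term.

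To conclude, I would define $\iota_{t-1} := w_t - w_2 = \Pi_{\mathcal{W}}(v_t) - w_2$, which by the estimate above satisfies $\|\iota_{t-1}\| \le 4(\eta')^2/R = \tlO(\eta^2)$; substituting then yields \eqref{PSGD_update_equivalent} verbatim. The main obstacle I anticipate is the bookkeeping around Lemma~\ref{lem::projection_distance}: it is phrased for a unit direction $\hat{v}$ and a nominal step size, so one must be careful to feed it the genuine step length $\eta'$ (not $\eta$) and to verify $\eta' < R$ so that the quadratic bound applies rather than the degenerate $\|P_{\mathcal{T}^c}\| \ge R$ branch. A secondary point worth stating explicitly is that the lemma's conclusion already absorbs all curvature effects, i.e. the deviation of the constraint manifold from its tangent plane at $w_{t-1}$, so no separate error analysis of the projection is needed; everything beyond the tangent step is collected into the single $\tlO(\eta^2)$ term $\iota_{t-1}$.
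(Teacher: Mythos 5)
Your proposal is correct and follows exactly the paper's own route: the paper proves this lemma in one line as "a direct corollary of Lemma~\ref{lem::projection_distance}," and your argument is precisely that corollary spelled out — bound the true step length $\eta' \le \eta(L+Q)$, apply Lemma~\ref{lem::projection_distance} at $w_0 = w_{t-1}$, use linearity of $P_{\mathcal{T}(w_{t-1})}$ together with the identity $\chi(w) = P_{\mathcal{T}(w)}\nabla f(w)$, and absorb the projection error into $\iota_{t-1}$ with $\|\iota_{t-1}\| \le 4(\eta')^2/R = \tlO(\eta^2)$. (One immaterial nitpick: the conditional "quadratic branch" you worry about belongs to Lemma~\ref{lem::normal_by_tangent}, not Lemma~\ref{lem::projection_distance}, whose proof only invokes the unconditional bound, so no step-length-versus-$R$ check is actually needed.)
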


\begin{proof}
Lemma~\ref{PSGD_equivalent} is a direct corollary of Lemma~\ref{lem::projection_distance}.
\end{proof}

The intuition behind this lemma is that:
when $\{c_i\}_{i=1}^m$ are smooth and \nameCQ holds for all $w\in \mathcal{W}$, 
then the constraint manifold has bounded curvature every where.
Then, if we only care about first order behavior, it's well-approximated
by the local dynamic in tangent plane, up to some second-order correction.

Therefore, by Eq.(\ref{PSGD_update_equivalent}), we see locally it's not much different from the unconstrainted case Eq.(\ref{SGD_update}) up to some negeligable correction.
In the following analysis, we will always use formula Eq.(\ref{PSGD_update_equivalent}) as the update equation for PSGD.

Since most of following proof bears a lot similarity as in unconstrained case, we only pointed out the essential steps in our following proof.

\begin{theorem} [Main Theorem for Equality-Constrained Case]
Suppose a function $f(w):\R^d\to \R$ with constraints $c_i(w):\R^d\to \R$ is $(\alpha, \gamma, \epsilon, \delta)$-\name, and has a stochastic gradient oracle with radius at most $Q$, also satisfying $\E\xi = 0$ and $\E \xi\xi^T = \sigma^2I$. Further, suppose the function 
function $f$ is $B$-bounded, $L$-Lipschitz, $\beta$-smooth, and has $\rho$-Lipschitz Hessian,
and the constraints $\{c_i\}_{i=1}^m$ is $L_i$-Lipschitz, $\beta_i$-smooth, and has $\rho_i$-Lipschitz Hessian.
Then there exists a threshold $\eta_{\max} = \tilde{\Theta}(1)$, so that for any $\zeta>0$, and
for any $\eta \le \eta_{\max} / \max\{1, \log (1/\zeta)\}$,
with probability at least $1-\zeta$ in $t = \tlO(\eta^{-2}\log (1/\zeta))$ iterations, PSGD outputs a point $w_t$ that is $\tlO(\sqrt{\eta\log(1/\eta\zeta)})$-close to some local minimum $w^\star$.
\label{thm:constrainedmain}
\end{theorem}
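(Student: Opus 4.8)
The plan is to mirror, case by case, the unconstrained argument behind Theorem~\ref{thm:sgdmain_unconstraint}, using the reduction already established in Lemma~\ref{PSGD_equivalent}: running PSGD is, up to a second-order projection correction $\iota_{t-1}$ with $\|\iota_{t-1}\|\le\tlO(\eta^2)$, equivalent to the update in \eqref{PSGD_update_equivalent}, namely $w_t = w_{t-1} - \eta(\chi(w_{t-1}) + P_{\mathcal{T}(w_{t-1})}\xi_{t-1}) + \iota_{t-1}$. The guiding principle is that everything in the unconstrained proof goes through after the substitutions $\nabla f \mapsto \chi$ (the tangent gradient), $\Hess f \mapsto \mathfrak{M}$ restricted to the tangent space $\mathcal{T}(w)$, and $\xi \mapsto P_{\mathcal{T}(w)}\xi$, provided the three extra geometric error sources — the correction $\iota$, the drift of the tangent space, and the purely-normal term $\mathfrak{N}(w)$ from Remark~\ref{N_constraint} — are all shown to be higher order relative to the function-value decrease we track. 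The definition of \name~in the constrained setting (Definition~\ref{def:robustcondition_constraint}) is already phrased exactly in terms of $\chi$, $\mathfrak{M}$, and $\mathcal{T}$, so the three-case partition transfers verbatim.

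First I would prove the large-gradient analogue of Lemma~\ref{thm::case1}: if $\|\chi(w_0)\|$ exceeds a $\sqrt{\eta}$ threshold below $\epsilon$, one step decreases $f$ in expectation by $\tlOmega(\eta^2)$. This uses the manifold Taylor expansion \eqref{Taylor_eq_constraint}, $f(w)\le f(w_0)+\chi(w_0)^T(w-w_0)+\tfrac12(w-w_0)^T\mathfrak{M}(w_0)(w-w_0)+\tfrac{\rho_L}{6}\|w-w_0\|^3$, applied to the displacement from \eqref{PSGD_update_equivalent}. The key simplifications are that $\E[P_{\mathcal{T}(w_0)}\xi_{0}]=0$ kills the cross term, that $\chi(w_0)\in\mathcal{T}(w_0)$ is orthogonal to the normal-space correction $\iota_0$ up to $\tlO(\eta^2)$ (so its contribution is absorbed by $L$-Lipschitzness, $|f(w+\iota)-f(w)|\le L\|\iota\|=\tlO(\eta^2)$), and the quadratic and cubic terms are $O(\eta^2)$. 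Case~3, near a local minimum, adapts Lemma~\ref{thm::case3}: I would build the same supermartingale on $\|w_t-w^\star\|^2$ minus its stationary level $\Theta(\eta)$ and invoke Azuma. Here local strong convexity holds only along tangent directions, but since Lemma~\ref{lem::normal_by_tangent} bounds the normal component of $w-w^\star$ by $O(\|P_{\mathcal{T}}(w-w^\star)\|^2/R)$, one still obtains $\chi(w)^T(w-w^\star)\ge \alpha\|w-w^\star\|^2 - (\text{higher order})$, which suffices for contraction.

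The heart of the proof, and the main obstacle, is the saddle-escaping analogue of Lemma~\ref{thm::case2} (with its coupled-sequence Lemmas~\ref{lem::case_Gaussian} and~\ref{lem::saddle_and_maximum}). The strategy is again to couple the true PSGD trajectory with the sequence $\{\tilde w_t\}$ obtained by running the dynamics on the local quadratic model $\tilde f(w)=f(w_0)+\chi(w_0)^T(w-w_0)+\tfrac12(w-w_0)^T\mathfrak{M}(w_0)(w-w_0)$, now constrained to the \emph{fixed} tangent subspace $\mathcal{T}(w_0)$, so that $\tilde w_t - w_0$ admits the same clean closed form as in the unconstrained case and grows exponentially along the negative-curvature direction $\hat v\in\mathcal{T}(w_0)$ with $\hat v^T\mathfrak{M}(w_0)\hat v\le-\gamma$, forcing an $\tlOmega(\eta)$ decrease of $\tilde f$ over the horizon $T=\tlO(1/\eta)$. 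What makes the constrained case genuinely harder is that the true trajectory lives on the curved manifold, where the tangent space $\mathcal{T}(w_t)$ rotates as $w_t$ moves. Over this horizon the iterates stay $\tlO(\sqrt\eta)$-close to $w_0$, so by Lemma~\ref{lem::normal} the subspaces $\mathcal{T}(w_t)$ deviate from $\mathcal{T}(w_0)$ by only $\tlO(\sqrt\eta/R)$, and by Lemma~\ref{lem::projection_distance} each projection step costs a normal correction of size $\tlO(\eta^2)$. I would therefore carry the same supermartingale bound on the gradient gap $\Delta_t=\chi(w_t)-\nabla\tilde f(\tilde w_t)$ as in Lemma~\ref{lem::saddle_and_maximum}, but now the recursion must absorb three error terms rather than one: the Hessian-Lipschitz remainder (as before), the subspace-rotation error, and $\iota_t$. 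The crucial point is that the extra term $\mathfrak{N}(w)$ in $\nabla\chi(w)=\mathfrak{M}(w)+\mathfrak{N}(w)$ maps entirely into the normal space, so its effect on the tangent increments we track enters only through the second-order normal displacement controlled by Lemma~\ref{lem::normal_by_tangent}; consequently all three error sources remain $o(\eta)$ relative to the target decrease. Verifying these bounds stay within the Azuma-Hoeffding budget over $T=\tlO(1/\eta)$ steps is the main technical burden.

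With the three cases established, the conclusion follows exactly as in the proof of Theorem~\ref{thm:sgdmain_unconstraint}: partition the feasible set into $\mathcal{L}_1$ (large $\chi$), $\mathcal{L}_2$ (small $\chi$, negative tangent curvature), and $\mathcal{L}_3$ (near a local minimum); use the per-step $\tlOmega(\eta^2)$ decrease in $\mathcal{L}_1$ and the per-epoch $\tlOmega(\eta)$ decrease in $\mathcal{L}_2$ together with boundedness $|f|\le B$ to force entry into $\mathcal{L}_3$ within $\tlO(\eta^{-2})$ steps with constant probability; amplify to probability $1-\zeta$ over $O(\log(1/\zeta))$ epochs; and apply the constrained Case~3 to certify that the final iterate remains $\tlO(\sqrt{\eta\log(1/\eta\zeta)})$-close to a local minimum $w^\star$. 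The boundedness of $f$ and the fact that the accumulated correction terms are $\tlO(\eta^2)$ per step (hence $\tlO(1)$ over the whole run, negligible against the $\Theta(B)$ total decrease budget) are what make the telescoping argument close.
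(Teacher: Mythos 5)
Your proposal is correct and follows essentially the same route as the paper's own proof: the reduction via Lemma~\ref{PSGD_equivalent}, the three-case analysis with the substitutions $\nabla f\mapsto\chi$, $\Hess f\mapsto\mathfrak{M}$ on $\mathcal{T}(w)$, the coupled quadratic-model sequence confined to the \emph{fixed} tangent space $\mathcal{T}(w_0)$ for saddle escape (with Azuma-type control of the coupling error, the subspace-rotation term, and $\iota_t$, and with $\mathfrak{N}$ rendered higher order because its range lies in the normal space), and the same $\mathcal{L}_1/\mathcal{L}_2/\mathcal{L}_3$ telescoping conclusion. The only step you leave implicit is the paper's preliminary Lemma~\ref{lem:constrainedsmooth}, which uses \nameCQ to show that $\lambda^*(w)$ and $C(w)^\dagger$ are bounded, Lipschitz and smooth, so that the constants $\beta_M,\beta_N,\rho_M,\rho_N,\rho_L$ you invoke (e.g.\ in the Taylor expansion \eqref{Taylor_eq_constraint}) are actually $\tilde{\Theta}(1)$ under the stated assumptions.
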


First, we proof the assumptions in main theorem implies the smoothness conditions for $\mathfrak{M}(w)$, $\mathfrak{N}(w)$ and $\nabla^2_{ww} \mathcal{L}(w, \lambda^*(w'))$.

\begin{lemma}\label{lem:constrainedsmooth}
Under the assumptions of Theorem~\ref{thm:constrainedmain},
there exists $\beta_M, \beta_N, \rho_M, \rho_N, \rho_L$ polynomial related to 
$B, L, \beta, \rho, \frac{1}{\alpha_c}$ and $\{L_i, \beta_i, \rho_i\}_{i=1}^m$ so that:
\begin{enumerate}
\item $\|\mathfrak{M}(w)\| \le \beta_M$ and $\|\mathfrak{N}(w)\| \le \beta_N$ for all $w \in \mathcal{W}$.
\item $\mathfrak{M}(w)$ is $\rho_M$-Lipschitz, and $\mathfrak{N}(w)$ is $\rho_N$-Lipschitz, and
$\nabla^2_{ww} \mathcal{L}(w, \lambda^*(w'))$ is $\rho_L$-Lipschitz for all $w' \in \mathcal{W}$. 
\end{enumerate}
\end{lemma}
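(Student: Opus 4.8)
The plan is to reduce everything to bounds on the Lagrange-multiplier map $\lambda^*(w) = (C(w)^\top C(w))^{-1} C(w)^\top \nabla f(w)$ and on its first and second derivatives, since $\mathfrak{M}$, $\mathfrak{N}$, and $\nabla^2_{ww}\mathcal{L}(\cdot,\lambda^*(w'))$ are all assembled from $\lambda^*$ together with the derivatives of $f$ and the $c_i$, each of which is bounded and Lipschitz by hypothesis. Once $\lambda^*$ is controlled, both parts follow by repeated application of the elementary fact that a product of bounded functions with bounded (and Lipschitz) derivatives is again bounded and Lipschitz, with constants polynomial in the individual bounds.

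First I would bound $\lambda^*$ itself. Since $f$ is $L$-Lipschitz we have $\|\nabla f(w)\| \le L$, and \nameCQ gives $\sigma_{\min}(C(w)) \ge \alpha_c$, hence $\|C(w)^\dagger\| = 1/\sigma_{\min}(C(w)) \le 1/\alpha_c$ and $\|(C(w)^\top C(w))^{-1}\| \le 1/\alpha_c^2$. Therefore $\|\lambda^*(w)\| \le L/\alpha_c$ uniformly on $\mathcal{W}$. Substituting into $\mathfrak{M}(w) = \Hess f(w) - \sum_i \lambda_i^*(w)\nabla^2 c_i(w)$ and using $\|\Hess f\|\le\beta$ and $\|\nabla^2 c_i\|\le\beta_i$ yields $\|\mathfrak{M}(w)\| \le \beta + (L/\alpha_c)\sum_i \beta_i =: \beta_M$.

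The hard part will be bounding the Jacobian $\nabla\lambda^*(w)$, which is needed for $\mathfrak{N}(w) = -\sum_i \nabla c_i(w)\,\nabla\lambda_i^*(w)^\top$. Writing $A(w) = C(w)^\top C(w)$ and differentiating $\lambda^* = A^{-1} C^\top \nabla f$ via the product rule together with $\partial(A^{-1}) = -A^{-1}(\partial A)A^{-1}$ and $\partial A = (\partial C)^\top C + C^\top(\partial C)$, every factor that appears is uniformly bounded: $\|A^{-1}\|\le 1/\alpha_c^2$, $\|C\| \le (\sum_i L_i^2)^{1/2}$, the entries of $\partial C$ are second derivatives of the $c_i$ (bounded by $\beta_i$ via smoothness), $\|\nabla f\|\le L$, and $\|\Hess f\|\le\beta$. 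Collecting terms gives a polynomial bound $\|\nabla\lambda^*(w)\| \le \Lambda = \poly(L,\beta,1/\alpha_c,\{L_i,\beta_i\})$, whence $\|\mathfrak{N}(w)\| \le (\sum_i L_i)\Lambda =: \beta_N$, completing part (1). The only subtlety is that \nameCQ is precisely what keeps $\|A^{-1}\|$ bounded, so that the inverse-derivative terms do not blow up; this is why all constants come out polynomial in $1/\alpha_c$.

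For part (2) I would treat the three objects separately. The map $\nabla^2_{ww}\mathcal{L}(w,\lambda^*(w')) = \Hess f(w) - \sum_i \lambda_i^*(w')\nabla^2 c_i(w)$ is easiest: here $\lambda^*(w')$ is a \emph{fixed} vector of norm at most $L/\alpha_c$, so Lipschitzness in $w$ follows directly from the $\rho$-Lipschitz Hessian of $f$ and the $\rho_i$-Lipschitz Hessians of the $c_i$, giving $\rho_L = \rho + (L/\alpha_c)\sum_i \rho_i$, a constant independent of $w'$ (hence uniform in $w'$). For $\mathfrak{M}$, each summand $\lambda_i^*(w)\nabla^2 c_i(w)$ is a product of the scalar $\lambda_i^*$ (bounded by $L/\alpha_c$, Lipschitz with constant $\le\Lambda$) and the matrix $\nabla^2 c_i$ (bounded by $\beta_i$, Lipschitz with constant $\rho_i$), so the product rule for Lipschitz constants gives $\rho_M = \rho + \sum_i\big((L/\alpha_c)\rho_i + \Lambda\beta_i\big)$. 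Finally $\mathfrak{N}$ requires $\nabla\lambda^*$ to be Lipschitz; I would obtain this by differentiating the Jacobian formula once more, which introduces the $\rho$-Lipschitz Hessian of $f$ and the $\rho_i$-Lipschitz Hessians of the $c_i$ (equivalently a polynomial bound on $\nabla^2\lambda^*$), and then apply the product rule to $\nabla c_i(w)\,\nabla\lambda_i^*(w)^\top$ to obtain $\rho_N$. The whole argument is routine multivariable bookkeeping whose only nontrivial ingredient is the uniform control of $\|(C^\top C)^{-1}\|$ supplied by \nameCQ.
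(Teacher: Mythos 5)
Your proposal is correct and follows essentially the same route as the paper's proof: reduce all three objects to boundedness, Lipschitzness, and smoothness of $\lambda^*(w) = (C(w)^\top C(w))^{-1}C(w)^\top \nabla f(w)$, then control that map using \nameCQ{} to bound the (pseudo-)inverse and the derivative-of-inverse formula to bound its first and second derivatives. Your expansion via $\partial(A^{-1}) = -A^{-1}(\partial A)A^{-1}$ with $A = C^\top C$ is the same computation as the paper's derivative-of-pseudo-inverse formula, and you are in fact slightly more explicit than the paper in assembling the final constants $\beta_M, \beta_N, \rho_M, \rho_N, \rho_L$.
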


\begin{proof}
By definition of $\mathfrak{M}(w)$, $\mathfrak{N}(w)$ and $\nabla^2_{ww} \mathcal{L}(w, \lambda^*(w'))$, 
the above conditions will holds if there exists $B_\lambda, L_\lambda, \beta_\lambda$ bounded by 
$\tlO(1)$, so that $\lambda^*(w)$ is $B_\lambda$-bounded, $L_\lambda$-Lipschitz, and $\beta_\lambda$-smooth.

By definition Eq.(\ref{Lambda_star}), we have:
\begin{equation} 
\lambda^*(w)  = C(w)^{\dagger}\nabla f(w) = (C(w)^TC(w))^{-1}C(w)^T\nabla f(w)
\end{equation}
Because $f$ is $B$-bounded, $L$-Lipschitz, $\beta$-smooth, and its Hessian is $\rho$-Lipschitz, thus, eventually, we only need to prove that there exists $B_c, L_c, \beta_c$ bounded by $\tlO(1)$, so that the pseudo-inverse $C(w)^{\dagger}$ is $B_c$-bounded, $L_c$-Lipschitz, and $\beta_c$-smooth.

Since \nameCQ holds for all feasible points, we immediately have: $\|C(w)^{\dagger}\| \le\frac{1}{\alpha_c}$, thus bounded. For simplicity, in the following context we use $C^\dagger$ to represent $C^\dagger(w)$ without ambiguity.
By some calculation of linear algebra, we have the derivative of pseudo-inverse:
\begin{align}
&\frac{\partial C(w)^{\dagger}}
{\partial w_i}=-C^{\dagger}
\frac{\partial C(w)}{\partial w_i}C^{\dagger}
+C^{\dagger}[C^{\dagger}]^T
\frac{\partial C(w)^T}{\partial w_i}(I-C C^{\dagger} )
\end{align}
Again, \nameCQ holds implies that derivative of pseudo-inverse is well-defined for every feasible point.
Let tensor $E(w), \tilde{E}(w)$ to be the derivative of $C(w), C^\dagger(w)$, which is defined as:
\begin{equation}
[E(w)]_{ijk} =  \frac{\partial [C(w)]_{ik}}{\partial w_j} \quad \quad
[\tilde{E}(w)]_{ijk} =  \frac{\partial [C(w)^\dagger]_{ik}}{\partial w_j}
\end{equation}
Define the transpose of a 3rd order tensor $E^T_{i,j,k} = E_{k,j,i}$, then we have
\begin{equation} \label{derivative_pseudo_inverse}
\tilde{E}(w)
=-[E(w)](C^{\dagger},I,C^{\dagger})
+[E(w)^T](C^{\dagger}[C^{\dagger}]^T, I, (I-C C^{\dagger} ))
\end{equation}
where by calculation $[E(w)](I,I,e_i) = \nabla^2 c_i(w)$.

Finally, since $C(w)^\dagger$ and $\nabla^2 c_i(w)$ are bounded by $\tlO(1)$, 
by Eq.(\ref{derivative_pseudo_inverse}), we know $\tilde{E}(w)$
is bounded, that is $C(w)^{\dagger}$ is Lipschitz. 
Again, since both $C(w)^\dagger$ and $\nabla^2 c_i(w)$ are bounded, Lipschitz, 
by Eq.(\ref{derivative_pseudo_inverse}), we know $\tilde{E}(w)$ is also $\tlO(1)$-Lipschitz.
This finishes the proof.

\end{proof}

From now on,
we can use the same proof strategy as unconstraint case. Below we list the corresponding lemmas and the essential steps that require modifications.

\begin{lemma} \label{thm::case1_constraint}
Under the assumptions of Theorem~\ref{thm:constrainedmain}, with notations in Lemma~\ref{lem:constrainedsmooth}, for any point with $\|\chi(w_0)\| \ge \sqrt{2\eta\sigma^2\beta_{M} (d-m)}$ where $\sqrt{2\eta\sigma^2\beta_{M} (d-m)} < \epsilon$, after one iteration we have:
\begin{equation}
	\E f(w_1) - f(w_{0}) \le - \tlOmega(\eta^2)
\end{equation}
\end{lemma}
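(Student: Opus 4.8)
The plan is to reproduce the one-step descent computation of Lemma~\ref{thm::case1}, substituting the manifold Taylor expansion~\eqref{Taylor_eq_constraint} for ordinary smoothness and the tangential gradient and noise for their unconstrained counterparts. First I would invoke Lemma~\ref{PSGD_equivalent} to write the single PSGD step as $w_1 = w_0 + v$ with $v = -\eta\bigl(\chi(w_0) + P_{\mathcal{T}(w_0)}\xi_0\bigr) + \iota_0$, where $\|\iota_0\| \le \tlO(\eta^2)$ almost surely. Since $w_1 = \Pi_{\mathcal{W}}(v_1) \in \mathcal{W}$ is feasible, expansion~\eqref{Taylor_eq_constraint} applies about $w_0$, giving
$$f(w_1) \le f(w_0) + \chi(w_0)^\top v + \tfrac12 v^\top \mathfrak{M}(w_0) v + \tfrac{\rho_L}{6}\|v\|^3,$$
where $\rho_L = \tlO(1)$ and $\|\mathfrak{M}(w_0)\| \le \beta_M$ by Lemma~\ref{lem:constrainedsmooth}.

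Next I would take the expectation conditioned on $w_0$, term by term. For the linear term, $\E\xi_0 = 0$ and the determinism of $\chi(w_0), P_{\mathcal{T}(w_0)}$ kill the noise contribution, leaving $\E[\chi(w_0)^\top v] = -\eta\|\chi(w_0)\|^2 + \chi(w_0)^\top\E\iota_0$, with the correction bounded by $\|\chi(w_0)\|\,\tlO(\eta^2)$. For the quadratic term I would use $\|\mathfrak{M}(w_0)\| \le \beta_M$ and compute $\E\|v\|^2 = \eta^2\|\chi(w_0)\|^2 + \eta^2\,\E\|P_{\mathcal{T}(w_0)}\xi_0\|^2 + \tlO(\eta^3)$, where the cross terms are $\tlO(\eta^3)$ and, crucially, $\E\|P_{\mathcal{T}(w_0)}\xi_0\|^2 = \sigma^2\,\mathrm{tr}(P_{\mathcal{T}(w_0)}) = \sigma^2(d-m)$, since $P_{\mathcal{T}(w_0)}$ is an orthoprojector onto the $(d-m)$-dimensional tangent space. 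The cubic term is $\tlO(\eta^3)$ because $\|v\| \le \tlO(\eta)$ almost surely. Collecting terms and choosing $\eta_{\max} < 1/\beta_M$ so that $-\eta + \tfrac{\beta_M\eta^2}{2} \le -\tfrac{\eta}{2}$ yields
$$\E f(w_1) - f(w_0) \le -\tfrac{\eta}{2}\|\chi(w_0)\|^2 + \tfrac{\eta^2\sigma^2\beta_M(d-m)}{2} + \|\chi(w_0)\|\,\tlO(\eta^2) + \tlO(\eta^3).$$

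Finally I would feed in the hypothesis $\|\chi(w_0)\|^2 \ge 2\eta\sigma^2\beta_M(d-m)$, which lets me bound the variance term $\tfrac{\eta^2\sigma^2\beta_M(d-m)}{2} \le \tfrac{\eta}{4}\|\chi(w_0)\|^2$, so the first two terms combine to $-\tfrac{\eta}{4}\|\chi(w_0)\|^2$. The hard part, and the one place the constrained case genuinely differs from Lemma~\ref{thm::case1}, is the projection-correction term $\chi(w_0)^\top\E\iota_0$: unlike every other error it is not automatically damped by an extra factor of $\eta$, contributing $\|\chi(w_0)\|\,\tlO(\eta^2)$. The resolution is that the threshold forces $\|\chi(w_0)\| \ge \tilde{\Theta}(\sqrt{\eta}) \gg \eta$, so that $-\tfrac{\eta}{4}\|\chi(w_0)\|^2 + \|\chi(w_0)\|\,\tlO(\eta^2) \le -\tfrac{\eta}{8}\|\chi(w_0)\|^2$ once $\eta$ is small enough. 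This gives $\E f(w_1) - f(w_0) \le -\tfrac18\eta\|\chi(w_0)\|^2 \le -\tfrac14\eta^2\sigma^2\beta_M(d-m) = -\tlOmega(\eta^2)$, completing the proof.
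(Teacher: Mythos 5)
Your proposal is correct and follows essentially the same route as the paper's proof: both rely on the equivalent PSGD update from Lemma~\ref{PSGD_equivalent}, the Lagrangian Taylor expansion~\eqref{Taylor_eq_constraint} with $\|\mathfrak{M}(w_0)\|\le\beta_M$, the tangent-space noise variance $\sigma^2(d-m)$, and the same trick of absorbing the projection-correction term $\tlO(\eta^2)\|\chi(w_0)\|$ into $\tlO(\eta^{1.5})\|\chi(w_0)\|^2$ via the threshold $\|\chi(w_0)\|\ge\tilde{\Theta}(\sqrt{\eta})$. Your write-up is in fact somewhat more explicit than the paper's (spelling out the cross terms and the trace computation), but there is no substantive difference in the argument.
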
 

\begin{proof}
Choose $\eta_{\max} < \frac{1}{\beta_{M}}$, and also small enough, then by update equation Eq.(\ref{PSGD_update_equivalent}), we have:
\begin{align}
	\E f(w_1) -  f(w_{0}) &\le \chi(w_{0})^T \E(w_1-w_{0}) + \frac{\beta_{M}}{2}\E\|w_1-w_{0}\|^2 \nonumber \\
	& \le -(\eta - \frac{\beta_{M}\eta^2}{2})\|\chi(w_{0})\|^2 + \frac{\eta^2\sigma^2 \beta_{M} (d-m)}{2}
	+ \tlO(\eta^{2})\|\chi(w_{0})\| + \tlO(\eta^3)
	\nonumber \\
	& \le -(\eta - \tlO(\eta^{1.5}) - \frac{\beta_{M}\eta^2}{2})\|\chi(w_{0})\|^2 + \frac{\eta^2\sigma^2 \beta_{M} (d-m)}{2} + \tlO(\eta^3)
	\nonumber \\
	&\le -\frac{\eta^2 \sigma^2 \beta_{M} d}{4}
\end{align}
Which finishes the proof.
\end{proof}

\begin{theorem}\label{thm::case3_constraint}
Under the assumptions of Theorem~\ref{thm:constrainedmain}, 
with notations in Lemma~\ref{lem:constrainedsmooth}, 
for any initial point $w_0$ that is $\tlO(\sqrt{\eta}) < \delta$ close to a local minimum $w^\star$, 
with probability at least $1-\zeta/2$, we have following holds simultaneously:
\begin{equation}
 \forall t\le \tlO(\frac{1}{\eta^2}\log \frac{1}{\zeta}), \quad \|w_{t} - w^\star\| \le \tlO(\sqrt{\eta\log \frac{1}{\eta\zeta}})<\delta 
\end{equation}
where $w^\star$ is the locally optimal point.
\end{theorem}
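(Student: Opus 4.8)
The final statement to prove is Theorem~\ref{thm::case3_constraint}, which is the constrained-case analogue of Lemma~\ref{thm::case3} (the ``stays near a local minimum'' lemma). The plan is to mirror the supermartingale argument used in the unconstrained proof of Lemma~\ref{thm::case3}, but carry out all the analysis inside the tangent space of the constraint manifold, using the equivalent PSGD update Eq.(\ref{PSGD_update_equivalent}) in place of the plain SGD update Eq.(\ref{SGD_update}). The key object is again a process of the form $G_t = \max\{(1-\eta\alpha)^{-t}(\|w_t - w^\star\|^2 - \eta(d-m)\sigma^2/\alpha), 0\}$, restricted to the event $\mathfrak{E}_t=\{\forall \tau\le t, \|w_\tau - w^\star\|\le \mu\sqrt{\eta\log(1/\eta\zeta)}<\delta\}$, which I will show is a supermartingale and then control via Azuma's inequality.

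First I would replace the role of $\nabla f$ and $\Hess f$ by $\chi(w)$ and $\mathfrak{M}(w)$ respectively, as justified by Theorems~\ref{thm::first_necessary}--\ref{thm::second_sufficient} and Definition~\ref{def:robustcondition_constraint}. Concretely, in the $2\delta$-neighborhood of $w^\star$ the third branch of the robust \name~property gives $\hat v^T\mathfrak{M}(w')\hat v\ge \alpha$ for every unit $\hat v\in\mathcal{T}(w')$, which is the local strong-convexity substitute. Since $\chi(w^\star)=0$ (the KKT condition) and $w_t - w^\star$ is, up to a second-order correction controlled by Lemma~\ref{lem::normal_by_tangent}, essentially a tangent-space vector, I would establish the analogue of the strong-convexity inequality $\chi(w_t)^T(w_t-w^\star)1_{\mathfrak{E}_t}\ge \alpha\|w_t-w^\star\|^2 1_{\mathfrak{E}_t} - (\text{error})$, where the error term is $\tlO(\|w_t-w^\star\|^3/R)$ coming from the curvature of the manifold. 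Next I would expand $\E[\|w_t-w^\star\|^2 1_{\mathfrak{E}_{t-1}}\mid\mathfrak{F}_{t-1}]$ using Eq.(\ref{PSGD_update_equivalent}); the projected noise $P_{\mathcal{T}(w_{t-1})}\xi_{t-1}$ contributes a variance at most $\eta^2(d-m)\sigma^2$ (a projection can only shrink the noise), and the correction $\iota_{t-1}=\tlO(\eta^2)$ plus the $\mathfrak{M}$-boundedness from Lemma~\ref{lem:constrainedsmooth} feed into a contraction of the form $(1-\eta\alpha+\tlO(\eta^2))\|w_{t-1}-w^\star\|^2 + \eta^2(d-m)\sigma^2$, exactly paralleling the unconstrained computation.

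With the one-step contraction in hand, the supermartingale property of $G_t 1_{\mathfrak{E}_{t-1}}$, the almost-sure bound on the martingale differences $|G_t 1_{\mathfrak{E}_{t-1}} - \E[G_t 1_{\mathfrak{E}_{t-1}}\mid\mathfrak{F}_{t-1}]| \le (1-\eta\alpha)^{-t}\tlO(\mu\eta^{1.5}\log^{1/2}(1/\eta\zeta))$, and the Azuma tail bound all go through verbatim as in the proof of Lemma~\ref{thm::case3}, yielding $P(\overline{\mathfrak{E}}_t)\le t\cdot\tlO(\eta^3\zeta)$ and hence $P(\overline{\mathfrak{E}}_t)\le\tlO(\eta\zeta\log(1/\zeta))$ after $t=\tlO(\eta^{-2}\log(1/\zeta))$ steps, which is below $\zeta/2$ for small enough $\eta_{\max}$.

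The main obstacle I anticipate is handling the extra error terms introduced by the manifold geometry cleanly, so that they do not spoil the supermartingale structure. Two issues deserve care: (i) the displacement $w_t-w^\star$ has a normal-space component, and I must invoke Lemma~\ref{lem::normal_by_tangent} to argue $\|P_{\mathcal{T}^c(w^\star)}(w_t-w^\star)\|\le \|w_t-w^\star\|^2/R$ is higher order, so that the local strong-convexity bound in the tangent space transfers to a bound in the full norm with only an $\tlO(\|w_t-w^\star\|^3)$ penalty; and (ii) the projection correction $\iota_{t-1}$ and the Lipschitzness of $\chi$ (which requires bounding $\mathfrak{N}(w)$ as well as $\mathfrak{M}(w)$ via Lemma~\ref{lem:constrainedsmooth}) must be shown to contribute only $\tlO(\eta^2)$ additive error inside the conditional expectation, since the cubic and correction terms are $O(\eta^{1.5})$ or smaller relative to the leading $\eta\alpha$ contraction and are absorbed whenever $\|w_t-w^\star\|\le \mu\sqrt{\eta\log(1/\eta\zeta)}$ on the event $\mathfrak{E}_t$. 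Once these geometric error terms are verified to be dominated by the contraction, the remainder is a direct transcription of the unconstrained Azuma argument.
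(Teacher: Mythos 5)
Your proposal is correct and takes essentially the same route as the paper's proof: establish $\chi(w_t)^T(w_t-w^\star)1_{\mathfrak{E}_t}\ge \bigl[\alpha\|w_t-w^\star\|^2-\tlO(\|w_t-w^\star\|^3)\bigr]1_{\mathfrak{E}_t}\ge 0.5\alpha\|w_t-w^\star\|^21_{\mathfrak{E}_t}$ by combining $\chi(w^\star)=0$, the restricted strong convexity of $\mathfrak{M}$ from Definition~\ref{def:robustcondition_constraint}, the fact that $\mathfrak{N}(w)$ maps into the normal space (Remark after Eq.~\eqref{Lagrangian_2}), and the second-order bound on the normal component of the displacement from Lemma~\ref{lem::normal_by_tangent}, and then run the supermartingale/Azuma argument of Lemma~\ref{thm::case3} on the reformulated PSGD update Eq.~\eqref{PSGD_update_equivalent}. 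The paper compresses this second half into ``everything else follows almost the same as the proof of Lemma~\ref{thm::case3},'' which is precisely the transcription (projected noise variance $\eta^2(d-m)\sigma^2$, absorption of the $\tlO(\eta^2)$ correction $\iota_{t-1}$) that you carry out explicitly.
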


\begin{proof}
By calculus, we know 
\begin{align}
\chi(w_t) = & \chi(w^\star) + \int_{0}^{1}(\mathfrak{M} + \mathfrak{N})(w^\star + t(w_t - w^\star)) \mathrm{d}t \cdot (w_t - w^\star)
\end{align}

Let filtration $\mathfrak{F}_t = \sigma\{\xi_0, \cdots \xi_{t-1}\}$, and note $\sigma\{\Delta_0, \cdots, \Delta_t \} \subset \mathfrak{F}_t$, where $\sigma\{\cdot\}$ denotes the sigma field.
Let event $\mathfrak{E}_t = \{\forall \tau \le t, \|w_{\tau} - w^\star\| \le \mu\sqrt{\eta\log\frac{1}{\eta\zeta}} < \delta \}$, where $\mu$ is independent of $(\eta, \zeta)$, and will be specified later.

By Definition~\ref{def:robustcondition_constraint} of 
$(\alpha, \gamma, \epsilon, \delta)$-\name, we know $\mathfrak{M}(w)$ is locally $\alpha$-strongly convex restricted to its tangent space $\mathcal{T}(w)$.
in the $2\delta$-neighborhood of $w^\star$. If $\eta_{\max}$ is chosen small enough, by Remark
\ref{N_constraint} and Lemma \ref{lem::normal_by_tangent}, we have in addition:
\begin{align}
\chi(w_t)^T (w_t - w^\star)1_{\mathfrak{E}_t} &= (w_t - w^\star)^T \int_{0}^{1}(\mathfrak{M}+ \mathfrak{N})(w^\star + t(w_t - w^\star)) \mathrm{d}t \cdot (w_t - w^\star)1_{\mathfrak{E}_t} \nonumber \\ 
&\ge [\alpha \|w_t - w^\star\|^2 - \tlO(\|w_t - w^\star\|^3)]1_{\mathfrak{E}_t} \ge 0.5\alpha \|w_t - w^\star\|^21_{\mathfrak{E}_t}
\end{align}
Then, everything else follows almost the same as the proof of Lemma \ref{thm::case3}.
\end{proof}

\begin{lemma} \label{thm::case2_constraint}
Under the assumptions of Theorem~\ref{thm:constrainedmain}, with notations in Lemma~\ref{lem:constrainedsmooth},
for any initial point $w_0$ where $\|\chi(w_0)\| \le \tlO(\eta) < \epsilon$, and 
$\hat{v}^T  \mathfrak{M}(w_0) \hat{v}  \le -\gamma$ for some $\hat{v} \in \mathcal{T}(w)$, $\|\hat{v}\| = 1$, then 
there is a number of steps $T$ that depends on $w_0$ such that: 
\begin{equation}
	\E f(w_T) - f(w_0) \le - \tlOmega(\eta)
\end{equation}
The number of steps $T$ has a fixed upper bound $T_{max}$ that is independent of $w_0$ where $T \le T_{max} = O((\log (d-m))/\gamma\eta)$.
\end{lemma}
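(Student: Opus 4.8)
The plan is to mirror the three-step coupling argument of the unconstrained saddle-point analysis (Lemma~\ref{thm::case2}), but carried out intrinsically on the constraint manifold, using the tangent gradient $\chi$ and the projected Lagrangian Hessian $\mathfrak{M}$ in place of $\nabla f$ and $\Hess f$. Throughout I would work from the equivalent update Eq.(\ref{PSGD_update_equivalent}), which reduces projected noisy gradient descent to a tangent-space step $w_t = w_{t-1} - \eta(\chi(w_{t-1}) + P_{\mathcal{T}(w_{t-1})}\xi_{t-1}) + \iota_{t-1}$ with projection correction $\|\iota_{t-1}\| \le \tlO(\eta^2)$. First I would set $\mathcal{T}_0 := \mathcal{T}(w_0)$, $\mathfrak{M}_0 := \mathfrak{M}(w_0)$, and introduce the coupled sequence $\{\tilde w_t\}$ evolving under the linearized, purely tangential dynamics $\tilde w_t = \tilde w_{t-1} - \eta(\chi(w_0) + P_{\mathcal{T}_0}\mathfrak{M}_0 P_{\mathcal{T}_0}(\tilde w_{t-1}-w_0) + P_{\mathcal{T}_0}\xi_{t-1})$ with $\tilde w_0 = w_0$. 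Because every increment lies in $\mathcal{T}_0$, the displacement $\tilde w_t - w_0$ stays in the $(d-m)$-dimensional subspace $\mathcal{T}_0$, so the recursion is effectively the unconstrained one of Lemma~\ref{lem::case_Gaussian} in dimension $d-m$, with the relevant curvature being the most negative eigenvalue $-\gamma_0 \le -\gamma$ of $P_{\mathcal{T}_0}\mathfrak{M}_0 P_{\mathcal{T}_0}$ restricted to $\mathcal{T}_0$, which exists by the negative-curvature hypothesis of Definition~\ref{def:robustcondition_constraint}.

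Second, I would solve this linear recursion in closed form exactly as in Lemma~\ref{lem::case_Gaussian}, choose $T$ by the analogue of Eq.(\ref{choose_t}) with $d$ replaced by $d-m$ (giving $T \le T_{max} = O((\log(d-m))/\gamma\eta)$), and apply Hoeffding's inequality to show that with probability $1-\tlO(\eta^3)$ both $\|\tilde w_t - w_0\|$ and the tangential gradient stay $\tlO(\sqrt{\eta}\log(1/\eta))$ for all $t \le T$. Third, I would bound the coupling $\|w_t - \tilde w_t\|$ by constructing the same supermartingale as in Lemma~\ref{lem::saddle_and_maximum} and applying Azuma--Hoeffding to obtain $\|w_t - \tilde w_t\| \le \tlO(\eta\log^2(1/\eta))$ with probability $1-\tlO(\eta^2)$. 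The three genuinely new error sources are (a) the projection corrections $\iota_t$ of size $\tlO(\eta^2)$; (b) the drift of the tangent projector, $\|P_{\mathcal{T}(w_{t-1})} - P_{\mathcal{T}_0}\|$, which by Lemmas~\ref{lem::normal} and \ref{lem::tangent} is $\tlO(\|w_{t-1}-w_0\|/R) = \tlO(\sqrt{\eta})$ and hence contributes $\tlO(\eta^{1.5})$ per step when hit against $\xi$; and (c) the fact that $\nabla\chi = \mathfrak{M} + \mathfrak{N}$ carries the extra non-symmetric term $\mathfrak{N}$. The observation retiring (c) is Remark~\ref{N_constraint}: $\mathfrak{N}(w)v$ always lands in the normal space $\mathcal{T}^c(w)$, while by Lemma~\ref{lem::normal_by_tangent} the normal component of $w_t - w_0$ is only $\tlO(\eta)$, so $\mathfrak{N}$'s net effect on the tangential dynamics is higher order. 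Each contribution is at most $\tlO(\eta^{1.5})$ per step, matching the Hessian-Lipschitz term already handled in the unconstrained proof, and the smoothness/boundedness bounds of Lemma~\ref{lem:constrainedsmooth} let me absorb them into the same martingale estimate.

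Finally I would invoke the constrained Taylor expansion Eq.(\ref{Taylor_eq_constraint}): writing $\tilde\delta := \tilde w_T - w_0$ and $\delta := w_T - \tilde w_T$, I split $f(w_T) - f(w_0) \le \tilde\Lambda + \Lambda$ where $\tilde\Lambda = \chi(w_0)^T\tilde\delta + \tfrac12\tilde\delta^T\mathfrak{M}_0\tilde\delta$ and $\Lambda$ collects the cross and remainder terms. Since $\tilde\delta \in \mathcal{T}_0$ and $P_{\mathcal{T}_0}\mathfrak{M}_0 P_{\mathcal{T}_0}$ has an eigenvalue $\le -\gamma$ on $\mathcal{T}_0$, the same eigenvalue computation as Eq.(\ref{Lambda_tilde}), now summed over the $d-m$ tangent eigendirections and using the choice of $T$, gives $\E\tilde\Lambda \le -\tlOmega(\eta)$; the coupling bounds make $\E[\Lambda \mathbf{1}_{\mathfrak{E}_T}] = \tlO(\eta^{1.5}\log^3(1/\eta))$ and the boundedness of $f$ makes the off-event contribution $\tlO(\eta^2)$, so $\E f(w_T) - f(w_0) \le -\tlOmega(\eta)$. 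The main obstacle I anticipate is precisely items (b)--(c): rigorously showing that the manifold curvature --- the varying tangent projector together with the asymmetric normal-space term $\mathfrak{N}$ --- perturbs the intrinsically $(d-m)$-dimensional linear escape dynamics only at order $\tlO(\eta^{1.5})$, so that the $\tlOmega(\eta)$ descent along the negative tangential eigendirection survives. This is exactly where the geometric Lemmas~\ref{lem::normal_by_tangent}--\ref{lem::projection_distance} and the $\alpha_c$-RLICQ boundedness of Lemma~\ref{lem:constrainedsmooth} do the heavy lifting.
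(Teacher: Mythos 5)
Your proposal is correct and follows essentially the same route as the paper: the paper proves this lemma by combining the constrained Taylor expansion (Eq.~\ref{Taylor_eq_constraint}) with exactly the two ingredients you describe --- a coupled sequence running SGD on the tangent-space quadratic approximation $\tilde f(w) = f(w_0) + \chi(w_0)^T(w-w_0) + \tfrac12 (w-w_0)^T P_{\mathcal{T}_0}^T \mathfrak{M}(w_0) P_{\mathcal{T}_0}(w-w_0)$ with projected noise and $T$ chosen via the $(d-m)$-dimensional analogue of Eq.~(\ref{choose_t}) (Lemma~\ref{lem::case_Gaussian_constraint}), and a supermartingale/Azuma coupling bound $\|w_t - \tilde w_t\| \le \tlO(\eta \log^2 (1/\eta))$ that absorbs the projection corrections $\iota_t$, the tangent-projector drift via Lemmas~\ref{lem::normal} and~\ref{lem::tangent}, and the $\mathfrak{N}$ term via Remark~\ref{N_constraint} (Lemma~\ref{lem::saddle_and_maximum_constraint}). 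The error sources you flag as the main obstacles are precisely the terms the paper's coupling lemma controls at order $\tlO(\eta^{1.5})$ per step, so your plan reproduces the paper's argument.
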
 

Similar to the unconstrained case, we show this by a coupling sequence. Here the sequence we construct will only walk on the tangent space, by Lemmas in previous subsection, we know this is not very far from the actual sequence. We first define and characterize the coupled sequence in the following lemma:

\begin{lemma} \label{lem::case_Gaussian_constraint}
Under the assumptions of Theorem~\ref{thm:constrainedmain}, with notations in Lemma~\ref{lem:constrainedsmooth}.
Let $\tilde{f}$ defined as local second-order approximation of $f(x)$ around $w_0$
in tangent space $\mathcal{T}_0 =\mathcal{T}(w_0)$:
\begin{equation}\label{def_tilde_f_constraint}
\tilde{f}(w) \doteq f(w_0) + \chi(w_0)^T (w-w_0) + \frac{1}{2}(w-w_0)^T[P_{\mathcal{T}_0}^T\mathfrak{M}(w_0)P_{\mathcal{T}_0}](w-w_0)
\end{equation}
$\{\tilde{w}_t\}$ be the corresponding sequence generated by running SGD on function $\tilde{f}$, with $\tilde{w}_0 = w_0$, and noise projected to $\mathcal{T}_0$, (i.e. $\tilde{w}_t = \tilde{w}_{t-1} - \eta (\tilde{\chi}(\tilde{w}_{t-1}) + P_{\mathcal{T}_0}\xi_{t-1}$).
For simplicity, denote 
$\tilde{\chi}(w) = \nabla \tilde{f}(w)$, and $\widetilde{\mathfrak{M}} = P_{\mathcal{T}_0}^T\mathfrak{M}(w_0)P_{\mathcal{T}_0}$,
then we have analytically:
\begin{align}
&\tilde{\chi}(\tilde{w}_t)= (1-\eta\widetilde{\mathfrak{M}} )^t\tilde{\chi}(\tilde{w}_0) -\eta \widetilde{\mathfrak{M}} \sum_{\tau=0}^{t-1}(1-\eta\widetilde{\mathfrak{M}} )^{t-\tau-1}P_{\mathcal{T}_0}\xi_{\tau}\\
 &\tilde{w}_{t} - w_0 = -\eta \sum_{\tau = 0}^{t-1}(1-\eta \widetilde{\mathfrak{M}} )^\tau\tilde{\chi}(\tilde{w}_0) -\eta
	\sum_{\tau=0}^{t-1}(1-\eta\widetilde{\mathfrak{M}})^{t-\tau-1}P_{\mathcal{T}_0}\xi_{\tau}  \label{dif_x_constraint}
\end{align}
Further, for any initial point $w_0$ where $\|\chi(w_0)\| \le \tlO(\eta) < \epsilon$, and $\min_{\hat{v} \in \mathcal{T}(w), \|\hat{v}\| = 1} \hat{v}^T  \mathfrak{M}(w_0) \hat{v}$  $= -\gamma_0$. There exist a $T \in \mathbb{N}$ satisfying:
\begin{equation}\label{choose_t_constraint}
 \frac{d-m}{\eta\gamma_0} \le \sum_{\tau=0}^{T-1}(1+\eta \gamma_0)^{2\tau} < \frac{3(d-m)}{\eta\gamma_0}
\end{equation}
with probability at least $1-\tlO(\eta^3)$, we have
following holds simultaneously for all $t\le T$:
\begin{equation}
\|\tilde{w}_t - w_0\| \le \tlO(\eta^{\frac{1}{2}}\log \frac{1}{\eta}); 
\quad\quad
\|\tilde{\chi}(\tilde{w}_t)\| \le \tlO(\eta^{\frac{1}{2}}\log \frac{1}{\eta})
\end{equation}
\end{lemma}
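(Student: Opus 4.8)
The plan is to reduce the constrained dynamics to an effectively unconstrained quadratic SGD living in the $(d-m)$-dimensional tangent space $\mathcal{T}_0 = \mathcal{T}(w_0)$, and then transplant the argument of Lemma~\ref{lem::case_Gaussian} almost verbatim. The first thing I would establish is that the coupled iteration stays inside the affine subspace $w_0 + \mathcal{T}_0$: by construction the noise term $P_{\mathcal{T}_0}\xi_{t-1}$ lies in $\mathcal{T}_0$, the quadratic $\tilde f$ in \eqref{def_tilde_f_constraint} has gradient $\tilde\chi(w) = \chi(w_0) + \widetilde{\mathfrak{M}}(w-w_0)$ with $\widetilde{\mathfrak{M}} = P_{\mathcal{T}_0}^T\mathfrak{M}(w_0)P_{\mathcal{T}_0}$ mapping into $\mathcal{T}_0$, and $\chi(w_0)$ itself lies in $\mathcal{T}_0$ since it is precisely the tangential component of $\nabla f(w_0)$. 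Hence every increment $\tilde w_t - \tilde w_{t-1}$ is in $\mathcal{T}_0$ and the whole trajectory is confined to $w_0+\mathcal{T}_0$; on the complementary normal space both $\widetilde{\mathfrak{M}}$ and the projected noise vanish, so those coordinates never move.

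With this confinement in hand, the analytic formulas follow from unrolling a linear recursion exactly as in the unconstrained case. Since $\tilde f$ is quadratic, $\tilde\chi(\tilde w_t) = \tilde\chi(\tilde w_{t-1}) + \widetilde{\mathfrak{M}}(\tilde w_t - \tilde w_{t-1})$; substituting the update $\tilde w_t = \tilde w_{t-1} - \eta(\tilde\chi(\tilde w_{t-1}) + P_{\mathcal{T}_0}\xi_{t-1})$ gives $\tilde\chi(\tilde w_t) = (1-\eta\widetilde{\mathfrak{M}})\tilde\chi(\tilde w_{t-1}) - \eta\widetilde{\mathfrak{M}}P_{\mathcal{T}_0}\xi_{t-1}$, and iterating from $\tilde\chi(\tilde w_0)=\chi(w_0)$ yields the stated closed forms for $\tilde\chi(\tilde w_t)$ and for $\tilde w_t - w_0$.

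For the existence of $T$ in \eqref{choose_t_constraint}, I would repeat the monotonicity-and-divergence argument of Lemma~\ref{lem::case_Gaussian}, replacing the ambient dimension $d$ by the tangent dimension $d-m$: the partial sums $\sum_{\tau=0}^{t-1}(1+\eta\gamma_0)^{2\tau}$ increase in $t$ and diverge, so a smallest $T$ with $\frac{d-m}{\eta\gamma_0}\le\sum_{\tau=0}^{T-1}(1+\eta\gamma_0)^{2\tau}$ exists, and the one-step recursion together with a sufficiently small $\eta_{\max}$ (using $\gamma\le\gamma_0\le\beta_M$ from Lemma~\ref{lem:constrainedsmooth}) gives the matching upper bound $\frac{3(d-m)}{\eta\gamma_0}$. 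This also yields $T = O((\log(d-m))/\gamma_0\eta)$ and $(1+\eta\gamma_0)^T = \tlO(1)$.

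Finally, for the high-probability norm bounds I would control the noise term $\eta\sum_{\tau=0}^{t-1}(1-\eta\widetilde{\mathfrak{M}})^{t-\tau-1}P_{\mathcal{T}_0}\xi_\tau$. Because the summand lives in $\mathcal{T}_0$, I can fix an orthonormal basis of $\mathcal{T}_0$ and apply Hoeffding's inequality coordinate-wise (using $\E\xi=0$ and $\|\xi\|\le Q$ almost surely), then union bound over the $d-m$ tangent coordinates and over $t\le T$. Combined with $(1+\eta\gamma_0)^T = \tlO(1)$, this gives $\|\eta\sum_\tau(\cdots)P_{\mathcal{T}_0}\xi_\tau\|\le\tlO(\eta^{1/2}\log\frac1\eta)$ with probability $1-\tlO(\eta^3)$, and plugging into the closed-form expressions bounds both $\|\tilde w_t - w_0\|$ and $\|\tilde\chi(\tilde w_t)\|$ as claimed. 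The main obstacle I anticipate is not any single estimate but the bookkeeping to make the tangent-space reduction fully rigorous — verifying $\chi(w_0)\in\mathcal{T}_0$ and that $\widetilde{\mathfrak{M}}$ genuinely acts as the constrained Hessian on $\mathcal{T}_0$, so that its eigenvalue along the guaranteed descent direction $\hat v\in\mathcal{T}(w_0)$ is $\le-\gamma_0$ — after which the concentration analysis is a routine adaptation of the unconstrained proof.
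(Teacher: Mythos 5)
Your proposal is correct and follows essentially the same route as the paper, which proves this lemma by writing down the quadratic recursion $\tilde{\chi}(\tilde{w}_t) = \tilde{\chi}(\tilde{w}_{t-1}) + \widetilde{\mathfrak{M}}(\tilde{w}_t - \tilde{w}_{t-1})$ and invoking the unconstrained Lemma~\ref{lem::case_Gaussian} directly. The extra bookkeeping you supply (confinement of the trajectory to $w_0 + \mathcal{T}_0$, $\chi(w_0) \in \mathcal{T}_0$, replacing $d$ by $d-m$, and coordinate-wise Hoeffding in a basis of $\mathcal{T}_0$) is exactly what makes that ``direct application'' rigorous, so you have simply spelled out what the paper leaves implicit.
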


\begin{proof}
Clearly we have:
\begin{equation}\label{derivative_tilde_recursive_constraint}
\tilde{\chi}(\tilde{w}_t) = \tilde{\chi}(\tilde{w}_{t-1}) + \widetilde{\mathfrak{M}} (\tilde{w}_t - \tilde{w}_{t-1})
\end{equation}
and 
\begin{equation}\label{SGD_tilde_constraint}
\tilde{w}_t = \tilde{w}_{t-1} - \eta (\tilde{\chi}(\tilde{w}_{t-1}) + P_{\mathcal{T}_0}\xi_{t-1} )
\end{equation}
This lemma is then proved by a direct application of Lemma \ref{lem::case_Gaussian}.
\end{proof}

Then we show the sequence constructed is very close to the actual sequence.

\begin{lemma} \label{lem::saddle_and_maximum_constraint}
Under the assumptions of Theorem~\ref{thm:constrainedmain}, with notations in Lemma~\ref{lem:constrainedsmooth}.
Let $\{w_t\}$ be the corresponding sequence generated by running PSGD on function $f$.  
Also let $\tilde{f}$ and $\{\tilde{w}_t\}$ be defined as in Lemma \ref{lem::case_Gaussian_constraint}.
Then, for any initial point $w_0$ where $\|\chi(w_0)\|^2 \le \tlO(\eta) < \epsilon$, and $\min_{\hat{v} \in \mathcal{T}(w), \|\hat{v}\| = 1}\hat{v}^T  \mathfrak{M}(w_0) \hat{v}  = -\gamma_0$. Given the choice of $T$ as in Eq.(\ref{choose_t_constraint}), 
with probability at least $1-\tlO(\eta^2)$, we have following holds simultaneously for all $t\le T$:
\begin{align}
\|w_t - \tilde{w}_t\| \le \tlO( \eta\log^2\frac{1}{\eta});
\end{align}
\end{lemma}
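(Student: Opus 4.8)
The plan is to mirror the proof of the unconstrained coupling bound (Lemma~\ref{lem::saddle_and_maximum}), but to track the discrepancy in \emph{tangent gradients} rather than full gradients, using the geometric lemmas of the previous subsection to control the error introduced by the moving projection operators. Concretely, I would set $\Delta_t = \chi(w_t) - \tilde{\chi}(\tilde{w}_t)$, where $\chi$ is the tangent gradient from \eqref{Lagrangian_1} and $\tilde{\chi} = \nabla\tilde{f}$ is the gradient of the quadratic model of Lemma~\ref{lem::case_Gaussian_constraint}. The first step is to Taylor-expand $\chi(w_t)$ using $\nabla\chi = \mathfrak{M} + \mathfrak{N}$, writing $\chi(w_t) = \chi(w_{t-1}) + (\mathfrak{M}(w_0) + \mathfrak{M}'_{t-1} + \mathfrak{N}(w_{t-1}))(w_t - w_{t-1}) + \theta_{t-1}$, where $\mathfrak{M}'_{t-1} := \mathfrak{M}(w_{t-1}) - \mathfrak{M}(w_0)$ and $\theta_{t-1}$ is the Hessian-smoothness remainder. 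Lemma~\ref{lem:constrainedsmooth} guarantees $\|\mathfrak{M}'_{t-1}\| \le \rho_M\|w_{t-1}-w_0\|$ and $\|\theta_{t-1}\| \le \tlO(\eta^2)$, playing exactly the role of the $\mathcal{H}'$ and $\theta$ terms in the unconstrained proof.

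Next I would substitute the PSGD update in its linearized form \eqref{PSGD_update_equivalent}, namely $w_t - w_{t-1} = -\eta(\chi(w_{t-1}) + P_{\mathcal{T}(w_{t-1})}\xi_{t-1}) + \iota_{t-1}$ with $\|\iota_{t-1}\| \le \tlO(\eta^2)$ from Lemma~\ref{PSGD_equivalent}, and compare with the coupled recursion $\tilde{\chi}(\tilde{w}_t) = (1-\eta\widetilde{\mathfrak{M}})\tilde{\chi}(\tilde{w}_{t-1}) - \eta\widetilde{\mathfrak{M}}P_{\mathcal{T}_0}\xi_{t-1}$ of Lemma~\ref{lem::case_Gaussian_constraint}. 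This yields a recursion of the shape $\Delta_t = (1-\eta\widetilde{\mathfrak{M}})\Delta_{t-1} + E_{t-1}$, where the error $E_{t-1}$ collects four kinds of terms: the Hessian drift $\eta\mathfrak{M}'_{t-1}(\cdots)$, the projection mismatch $\eta\widetilde{\mathfrak{M}}(P_{\mathcal{T}(w_{t-1})} - P_{\mathcal{T}_0})\xi_{t-1}$ together with $(\mathfrak{M}(w_0) - \widetilde{\mathfrak{M}})(w_t-w_{t-1})$, the normal contribution $\eta\mathfrak{N}(w_{t-1})(\cdots)$, and the remainders $\theta_{t-1}$ and $\widetilde{\mathfrak{M}}\iota_{t-1}$. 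The key structural observation (Remark~\ref{N_constraint}) is that $\mathfrak{N}(w_{t-1})$ maps into the normal space $\mathcal{T}^c(w_{t-1})$, and that the normal component of $w_{t-1}-w_0$ is second-order small by Lemma~\ref{lem::normal_by_tangent}, since $\|P_{\mathcal{T}^c_0}(w_{t-1}-w_0)\| \le \frac{1}{2R}\|w_{t-1}-w_0\|^2$; combined with Lemmas~\ref{lem::normal} and~\ref{lem::tangent}, which bound $\|P_{\mathcal{T}(w_{t-1})} - P_{\mathcal{T}_0}\| \le \tlO(\|w_{t-1}-w_0\|/R)$, every such error term is of the same magnitude as in the unconstrained analysis ($\tlO(\eta^2\log^2\frac{1}{\eta})$ deterministically, and $\tlO(\eta^{1.5}\log\frac{1}{\eta})$ for the martingale-difference noise part).

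From here the argument transcribes Lemma~\ref{lem::saddle_and_maximum}: I would define $G_t = (1+\eta\gamma_0)^{-2t}(\|\Delta_t\|^2 + \alpha\eta^2\log^4\frac{1}{\eta})$, verify it is a supermartingale on the good event $\mathfrak{K}_{t-1}\cap\mathfrak{E}_{t-1}$ (the cross term between $(1-\eta\widetilde{\mathfrak{M}})\Delta_{t-1}$ and the noise error has zero conditional mean, since $P_{\mathcal{T}(w_{t-1})}$ is $\mathfrak{F}_{t-1}$-measurable and $\E[\xi_{t-1}\mid\mathfrak{F}_{t-1}]=0$), and apply Azuma--Hoeffding to obtain $\|\Delta_t\| \le \tlO(\eta\log^2\frac{1}{\eta})$ for all $t \le T$ with probability $1-\tlO(\eta^2)$. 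Finally I would recover the claimed bound by summing the per-step differences, $w_t - \tilde{w}_t = -\eta\sum_{\tau<t}\Delta_\tau - \eta\sum_{\tau<t}(P_{\mathcal{T}(w_\tau)}-P_{\mathcal{T}_0})\xi_\tau + \sum_{\tau<t}\iota_\tau$: the first sum is $\tlO(\eta\log^2\frac{1}{\eta})$ since $T = \tlO(1/\eta)$, the $\iota$ sum is $\tlO(\eta)$, and the middle noise sum is a martingale whose increments are $\tlO(\eta^{1.5}\log\frac{1}{\eta})$ on the good event, so a further Azuma bound gives $\tlO(\eta\log^{1.5}\frac{1}{\eta})$, all within the target $\tlO(\eta\log^2\frac{1}{\eta})$.

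The main obstacle I expect is precisely the bookkeeping around the two distinct projection operators $P_{\mathcal{T}(w_{t-1})}$ and $P_{\mathcal{T}_0}$ together with the non-symmetric, normal-valued term $\mathfrak{N}$: unlike the unconstrained case, where the noise direction and the linearized map are fixed, one must repeatedly invoke the curvature bound $R^{-1} = \sqrt{\sum_i \beta_i^2/\alpha_c^2}$ to argue that each projection mismatch and each normal excursion is higher-order in $\|w-w_0\|$, and must confirm that these corrections never break the supermartingale structure. Verifying that the conditional-expectation cross terms still vanish once the noise is pre-multiplied by a \emph{point-dependent} projection is the delicate point; everything else is a direct substitution of the unconstrained estimates under the dictionary $\nabla f \mapsto \chi$, $\mathcal{H} \mapsto \widetilde{\mathfrak{M}}$, and $d \mapsto d-m$.
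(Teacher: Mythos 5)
Your overall architecture matches the paper's: couple $w_t$ to the quadratic-model sequence $\tilde{w}_t$, run a supermartingale/Azuma argument on a gradient-difference recursion, and assemble the position bound from the $\Delta$-sum, a projection-mismatch noise martingale, and the $\iota$-sum. The final telescoping identity and your Azuma treatment of $\eta\sum_{\tau<t}(P_{\mathcal{T}(w_\tau)}-P_{\mathcal{T}_0})\xi_\tau$ are fine (this is the paper's $\Gamma_t$ martingale), as is the measurability argument for the vanishing cross terms. The genuine gap is in the central object: you track the \emph{unprojected} difference $\Delta_t=\chi(w_t)-\tilde{\chi}(\tilde{w}_t)$ and assert that all error terms in its recursion have the unconstrained magnitudes ($\tlO(\eta^2\log^2\frac{1}{\eta})$ deterministic, $\tlO(\eta^{1.5}\log\frac{1}{\eta})$ for zero-mean noise). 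That is false for exactly the two constrained terms you list. For the term $\eta\,\mathfrak{N}(w_{t-1})(w_t-w_{t-1})$: Remark~\ref{N_constraint} says $\mathfrak{N}(w_{t-1})$ maps \emph{into} $\mathcal{T}^c(w_{t-1})$, but that does not make the term small in norm, since $\|\mathfrak{N}\|\le\beta_N=\tlO(1)$; its deterministic part $\eta\,\mathfrak{N}\chi(w_{t-1})$ is $\tlO(\eta^{1.5}\log\frac{1}{\eta})$ and its noise part $\eta\,\mathfrak{N}P_{\mathcal{T}(w_{t-1})}\xi_{t-1}$ is $\tlO(\eta)$. For the term $(\mathfrak{M}(w_0)-\widetilde{\mathfrak{M}})(w_t-w_{t-1})$: writing $\mathfrak{M}(w_0)-\widetilde{\mathfrak{M}}=P_{\mathcal{T}^c_0}\mathfrak{M}(w_0)P_{\mathcal{T}_0}+\mathfrak{M}(w_0)P_{\mathcal{T}^c_0}$, only the second summand is damped by the geometric lemmas (its input is nearly tangent); the first summand is an order-one, normal-valued operator, so its noise part is again $\tlO(\eta)$. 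Feeding an order-$\eta$ zero-mean increment into the supermartingale yields a variance accumulation of $T\cdot\tlO(\eta^2)=\tlO(\eta)$ over $T=\tlO(1/\eta)$ steps, i.e.\ only $\|\Delta_t\|\le\tlO(\sqrt{\eta})$, whence $\eta\sum_{\tau<t}\Delta_\tau=\tlO(\sqrt{\eta})$ --- the same scale as $\|\tilde{w}_t-w_0\|$ itself, so the coupling proves nothing.

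The missing idea, which is what the paper does, is to project the entire gradient recursion onto the fixed tangent space and track $\Delta_t=P_{\mathcal{T}_0}\chi(w_t)-\tilde{\chi}(\tilde{w}_t)$. All of the oversized terms above are normal-valued and are killed or damped by $P_{\mathcal{T}_0}$: the identity $P_{\mathcal{T}_0}\mathfrak{M}(w_0)P_{\mathcal{T}_0}=\widetilde{\mathfrak{M}}$ means no mismatch term ever appears; $P_{\mathcal{T}_0}\mathfrak{N}(w_{t-1})$ has norm $\tlO(\|w_{t-1}-w_0\|/R)=\tlO(\sqrt{\eta}\log\frac{1}{\eta})$ by Lemma~\ref{lem::tangent}, because the outputs of $\mathfrak{N}(w_{t-1})$ lie in $\mathcal{T}^c(w_{t-1})$; and $P_{\mathcal{T}_0}\mathfrak{M}(w_{t-1})P_{\mathcal{T}^c_0}(w_t-w_{t-1})$ is controlled since the normal displacement is second order (Lemma~\ref{lem::normal_by_tangent}). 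With this projection every error fits the budgets you stated and the supermartingale argument goes through. The price is that you then control only the tangent component of the position difference, so the final bound must be assembled as $\|w_t-\tilde{w}_t\|^2=\|P_{\mathcal{T}_0}(w_t-\tilde{w}_t)\|^2+\|P_{\mathcal{T}^c_0}(w_t-w_0)\|^2$ (using $\tilde{w}_t-w_0\in\mathcal{T}_0$), with the normal part bounded statically by curvature, $\|P_{\mathcal{T}^c_0}(w_t-w_0)\|\le\|w_t-w_0\|^2/(2R)=\tlO(\eta\log^2\frac{1}{\eta})$. Equivalently, you could keep your unprojected $\Delta_t$ but split it: its normal component equals $P_{\mathcal{T}^c_0}\chi(w_t)$, which is $\tlO(\eta\log^2\frac{1}{\eta})$ for free by Lemma~\ref{lem::normal} (as $\chi(w_t)\in\mathcal{T}(w_t)$ and $\|w_t-w_0\|\le\tlO(\sqrt{\eta}\log\frac{1}{\eta})$ on the good event), while only its tangent component needs a recursion --- but that recursion is precisely the projected one your write-up skips.
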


\begin{proof}
First, we have update function of tangent gradient by:
\begin{align}
\chi(w_t) = & \chi(w_{t-1}) + \int_{0}^{1}\nabla \chi(w_{t-1} + t(w_t - w_{t-1})) \mathrm{d}t \cdot (w_t - w_{t-1}) \nonumber \\
= & \chi(w_{t-1}) + \mathfrak{M}(w_{t-1}) (w_t - w_{t-1}) + \mathfrak{N}(w_{t-1}) (w_t - w_{t-1})+ \theta_{t-1}
\end{align}
where the remainder:
\begin{equation}
\theta_{t-1} \equiv \int_{0}^{1}\left[\nabla \chi(w_{t-1} + t(w_t - w_{t-1})) - \nabla \chi(w_{t-1})\right] \mathrm{d}t \cdot (w_t - w_{t-1})
\end{equation}
Project it to tangent space $\mathcal{T}_0 =\mathcal{T}(w_0)$.
Denote $\widetilde{\mathfrak{M}} = P_{\mathcal{T}_0}^T\mathfrak{M}(w_0)P_{\mathcal{T}_0}$, 
and $\widetilde{\mathfrak{M}}'_{t-1} = P_{\mathcal{T}_0}^T[~\mathfrak{M}(w_{t_1}) -\mathfrak{M}(w_0)~] P_{\mathcal{T}_0}$. Then, 
we have:
\begin{align}
P_{\mathcal{T}_0}\cdot\chi(w_t)= & P_{\mathcal{T}_0}\cdot\chi(w_{t-1}) + P_{\mathcal{T}_0}(\mathfrak{M}(w_{t-1})+ \mathfrak{N}(w_{t-1})) (w_t - w_{t-1}) + P_{\mathcal{T}_0}\theta_{t-1} \nonumber  \\
= & P_{\mathcal{T}_0}\cdot\chi(w_{t-1}) 
+ P_{\mathcal{T}_0}\mathfrak{M}(w_{t-1})P_{\mathcal{T}_0} (w_t - w_{t-1})  \nonumber \\
&+ P_{\mathcal{T}_0}\mathfrak{M}(w_{t-1}) P_{\mathcal{T}^c_0}(w_t - w_{t-1})
+ P_{\mathcal{T}_0}\mathfrak{N}(w_{t-1}) (w_t - w_{t-1})
+ P_{\mathcal{T}_0}\theta_{t-1} \nonumber \\
= & P_{\mathcal{T}_0}\cdot\chi(w_{t-1}) 
+ \widetilde{\mathfrak{M}}(w_t - w_{t-1}) + \phi_{t-1}\label{derivative_recursive_constraint}
\end{align}
Where 
\begin{equation}
\phi_{t-1} =  [~\widetilde{\mathfrak{M}}'_{t-1}  + P_{\mathcal{T}_0}\mathfrak{M}(w_{t-1}) P_{\mathcal{T}^c_0}
+ P_{\mathcal{T}_0}\mathfrak{N}(w_{t-1})~] (w_t - w_{t-1})
+ P_{\mathcal{T}_0}\theta_{t-1}
\end{equation}
By Hessian smoothness, we immediately have: 
\begin{align} 
&\|\widetilde{\mathfrak{M}}'_{t-1}\| = \|\mathfrak{M}(w_{t_1}) -\mathfrak{M}(w_0)\| 
\le \rho_M \|w_{t-1} - w_0\| \le \rho_M (\|w_t - \tilde{w}_t\| + \|\tilde{w}_t - w_0\|) 
\label{H'_smooth_constraint} \\
&\|\theta_{t-1}\| \le \frac{\rho_M+\rho_N}{2} \|w_t - w_{t-1}\|^2 \label{theta_smooth_constraint}
\end{align}

Substitute the update equation of PSGD (Eq.(\ref{PSGD_update_equivalent})) into Eq.(\ref{derivative_recursive_constraint}), we have:
\begin{align}
&P_{\mathcal{T}_0}\cdot\chi(w_t) = P_{\mathcal{T}_0}\cdot\chi(w_{t-1}) -\eta\widetilde{\mathfrak{M}} (P_{\mathcal{T}_0}\cdot \chi(w_{t-1}) + P_{\mathcal{T}_0}\cdot P_{\mathcal{T}(w_{t-1})}\xi_{t-1} ) +\widetilde{\mathfrak{M}}\cdot\iota_{t-1} + \phi_{t-1} \nonumber \\
	&= (1-\eta\widetilde{\mathfrak{M}})P_{\mathcal{T}_0}\cdot\chi(w_{t-1}) - \eta \widetilde{\mathfrak{M}} P_{\mathcal{T}_0}\xi_{t-1} + \eta \widetilde{\mathfrak{M}}P_{\mathcal{T}_0}\cdot P_{\mathcal{T}^c(w_{t-1})}\xi_{t-1}
	 +\widetilde{\mathfrak{M}}\cdot\iota_{t-1} +\phi_{t-1} 
				\label{derivative_constraint}
\end{align}

Let $\Delta_t = P_{\mathcal{T}_0}\cdot \chi(w_t) - \tilde{\chi}(\tilde{w}_t)$ denote the difference of tangent gradient in $\mathcal{T}(w_0)$, then
from Eq.(\ref{derivative_tilde_recursive_constraint}), Eq.(\ref{SGD_tilde_constraint}), 
and Eq.(\ref{derivative_constraint})
we have:
\begin{align} \label{Delta_recursive_constraint}
&\Delta_t =  (1-\eta H) \Delta_{t-1} + \eta \widetilde{\mathfrak{M}}P_{\mathcal{T}_0}\cdot P_{\mathcal{T}^c(w_{t-1})}\xi_{t-1}
	 +\widetilde{\mathfrak{M}}\cdot\iota_{t-1} +\phi_{t-1} \\
& P_{\mathcal{T}_0} \cdot (w_t-w_0) - (\tilde{w}_t-w_0) =  -\eta \sum_{\tau = 0}^{t-1} \Delta_\tau 
 + \eta \sum_{\tau = 0}^{t-1} P_{\mathcal{T}_0}\cdot P_{\mathcal{T}^c(w_{\tau})}\xi_{\tau}
	 +\sum_{\tau = 0}^{t-1}\iota_{\tau}\label{dif_constraint_tangent} 
\end{align}

By Lemma \ref{lem::normal_by_tangent}, we know if $\sum_{i=1}^m \frac{\beta_i^2}{\alpha_c^2} = \frac{1}{R^2}$, then we have:
\begin{align}
\|P_{\mathcal{T}^c_0} (w_t-w_0)\| \le \frac{\|w_t-w_0\|^2}{2R}
\label{dif_constraint_normal}
\end{align}

Let filtration $\mathfrak{F}_t = \sigma\{\xi_0, \cdots \xi_{t-1}\}$, and note $\sigma\{\Delta_0, \cdots, \Delta_t \} \subset \mathfrak{F}_t$, where $\sigma\{\cdot\}$ denotes the sigma field. Also, let event $\mathfrak{K}_t = \{\forall \tau \le t, ~\|\tilde{\chi}(\tilde{w}_\tau)\| \le \tlO(\eta^{\frac{1}{2}}\log \frac{1}{\eta}), ~ 
\|\tilde{w}_\tau - w_0\| \le \tlO(\eta^{\frac{1}{2}}\log \frac{1}{\eta})\}$,
and denote $\Gamma_t = \eta \sum_{\tau = 0}^{t-1} P_{\mathcal{T}_0}\cdot P_{\mathcal{T}^c(w_{\tau})}\xi_{\tau}$, let
 $\mathfrak{E}_t = \{\forall \tau \le t, ~\|\Delta_{\tau}\| \le \mu_1 \eta\log^2\frac{1}{\eta}, 
\|\Gamma_\tau\| \le \mu_2 \eta\log^2\frac{1}{\eta}, 
\|w_{\tau} - \tilde{w}_{\tau}\| \le \mu_3 \eta\log^2\frac{1}{\eta}\}$ where $(\mu_1, \mu_2, \mu_3)$ are is independent of $(\eta, \zeta)$, and will be determined later. To prevent ambiguity in the proof, $\tilde{O}$ notation will not hide any dependence on $\mu$.
Clearly event $\mathfrak{K}_{t-1}\subset \mathfrak{F}_{t-1},  \mathfrak{E}_{t-1}\subset \mathfrak{F}_{t-1}$ thus independent of $\xi_{t-1}$.

Then, conditioned on event $\mathfrak{K}_{t-1} \cap \mathfrak{E}_{t-1}$, 
by triangle inequality, we have $\| w_\tau - w_0\| \le \tlO(\eta^{\frac{1}{2}}\log \frac{1}{\eta})$, for all $\tau \le t-1 \le T-1$. 
We then need to carefully bound the following bound each term in Eq.(\ref{Delta_recursive_constraint}).
We know $w_t - w_{t-1} = -  \eta \cdot (\chi(w_{t-1}) + P_{\mathcal{T}(w_{t-1})} \xi_{t-1}) + \iota_{t-1}$, and then
by Lemma \ref{lem::tangent} and Lemma \ref{lem::normal}, we have:
\begin{align}
\|\eta \widetilde{\mathfrak{M}}P_{\mathcal{T}_0}\cdot P_{\mathcal{T}^c(w_{t-1})}\xi_{t-1}\| 
&\le \tlO(\eta^{1.5} \log \frac{1}{\eta}) \nonumber \\
\|\widetilde{\mathfrak{M}}\cdot\iota_{t-1} \| 
&\le \tlO(\eta^2) \nonumber \\
\|[~\widetilde{\mathfrak{M}}'_{t-1}  + P_{\mathcal{T}_0}\mathfrak{M}(w_{t-1}) P_{\mathcal{T}^c_0}
+ P_{\mathcal{T}_0}\mathfrak{N}(w_{t-1})~] ( -  \eta \cdot \chi(w_{t-1}))\|
&\le \tlO(\eta^2\log^2\frac{1}{\eta}) \nonumber \\
\|[~\widetilde{\mathfrak{M}}'_{t-1}  + P_{\mathcal{T}_0}\mathfrak{M}(w_{t-1}) P_{\mathcal{T}^c_0}
+ P_{\mathcal{T}_0}\mathfrak{N}(w_{t-1})~] ( -  \eta P_{\mathcal{T}(w_{t-1})} \xi_{t-1})\|
&\le \tlO(\eta^{1.5}\log\frac{1}{\eta}) \nonumber \\
\|[~\widetilde{\mathfrak{M}}'_{t-1}  + P_{\mathcal{T}_0}\mathfrak{M}(w_{t-1}) P_{\mathcal{T}^c_0}
+ P_{\mathcal{T}_0}\mathfrak{N}(w_{t-1})~] \iota_{t-1}\|
&\le \tlO(\eta^{2}) \nonumber \\
\|P_{\mathcal{T}_0}\theta_{t-1}\|  &\le \tlO(\eta^2)
\end{align}


Therefore, abstractly,  conditioned on event $\mathfrak{K}_{t-1} \cap \mathfrak{E}_{t-1}$, we could write down
the recursive equation as:
\begin{equation}
	\Delta_t =  (1-\eta H) \Delta_{t-1} + A + B
\end{equation}
where $\|A\| \le \tlO(\eta^{1.5} \log \frac{1}{\eta})$ and $\|B\| \le \tlO(\eta^{2} \log^2 \frac{1}{\eta})$, 
and in addition, by independence, easy to check we also have $\E [(1-\eta H) \Delta_{t-1} A |\mathfrak{F}_{t-1}] = 0$. This is exactly the same case as in the proof of Lemma \ref{lem::saddle_and_maximum}. By the same argument of martingale and Azuma-Hoeffding, and by choosing $\mu_1$ large enough, we can prove
\begin{align}\label{EE_1}
&P\left( \mathfrak{E}_{t-1} \cap \left\{\|\Delta_t\| \ge \mu_1 \eta\log^2\frac{1}{\eta}\right\}\right)  \le \tlO(\eta^3)
\end{align}

On the other hand, for $\Gamma_t = \eta \sum_{\tau = 0}^{t-1} P_{\mathcal{T}_0}\cdot P_{\mathcal{T}^c(w_{\tau})}\xi_{\tau}$,
we have:
\begin{align}
\E[\Gamma_t 1_{\mathfrak{K}_{t-1}\cap \mathfrak{E}_{t-1}} |\mathfrak{F}_{t-1}]
&= \left[\Gamma_{t-1}+\eta\E[P_{\mathcal{T}_0}\cdot P_{\mathcal{T}^c(w_{t-1})}\xi_{t-1}|\mathfrak{F}_{t-1}]\right]1_{\mathfrak{K}_{t-1}\cap \mathfrak{E}_{t-1}} \nonumber \\
&= \Gamma_{t-1}1_{\mathfrak{K}_{t-1}\cap \mathfrak{E}_{t-1}} \le \Gamma_{t-1}1_{\mathfrak{K}_{t-2}\cap \mathfrak{E}_{t-2}}
\end{align}

Therefore, we have $\E [\Gamma_t1_{\mathfrak{K}_{t-1}\cap \mathfrak{E}_{t-1}} ~|~ \mathfrak{F}_{t-1}] \le \Gamma_{t-1}1_{\mathfrak{K}_{t-2}\cap \mathfrak{E}_{t-2}}$ which means 
$\Gamma_t1_{\mathfrak{K}_{t-1}\cap \mathfrak{E}_{t-1}}$ is a supermartingale.

We also know by Lemma \ref{lem::tangent}, with probability 1:
\begin{align}
&| \Gamma_t1_{\mathfrak{K}_{t-1}\cap \mathfrak{E}_{t-1}} - \E[\Gamma_t1_{\mathfrak{K}_{t-1}\cap \mathfrak{E}_{t-1}}~|~\mathfrak{F}_{t-1}] |
= |\eta P_{\mathcal{T}_0}\cdot P_{\mathcal{T}^c(w_{t-1})}\xi_{t-1}|\cdot 1_{\mathfrak{K}_{t-1}\cap \mathfrak{E}_{t-1}} \nonumber \\
\le & \tlO(\eta)\|w_{t-1} - w_0\|1_{\mathfrak{K}_{t-1}\cap \mathfrak{E}_{t-1}}
\le \tlO(\eta^{1.5}\log \frac{1}{\eta}) = c_{t-1}
\end{align}
By Azuma-Hoeffding inequality, with probability less than $\tlO(\eta^3)$, 
for $t\le T\le O(\log (d-m)/\gamma_0\eta)$:
\begin{equation}
\Gamma_t1_{\mathfrak{K}_{t-1}\cap \mathfrak{E}_{t-1}} - \Gamma_0\cdot1 > \tlO(1)\sqrt{\sum_{\tau=0}^{t-1}{c^2_\tau}}\log (\frac{1}{\eta}) = \tlO(\eta\log^2 \frac{1}{\eta})
\end{equation}
This means there exists some $\tilde{C}_2 = \tlO(1)$ so that:
\begin{equation}
	P\left(\mathfrak{K}_{t-1}\cap \mathfrak{E}_{t-1} \cap \left\{\|\Gamma_t\| \ge \tilde{C}_2\eta\log^2\frac{1}{\eta}\right\}\right) \le \tlO(\eta^3)
\end{equation}
by choosing $\mu_2>\tilde{C}_2$, we have:
\begin{equation}
	P\left(\mathfrak{K}_{t-1}\cap \mathfrak{E}_{t-1} \cap \left\{\|\Gamma_t\| \ge \mu_2 \eta\log^2\frac{1}{\eta}\right\}\right) \le \tlO(\eta^3)
\end{equation}
Therefore, combined with Lemma \ref{lem::case_Gaussian_constraint}, we have:
\begin{align} \label{EE_2}
P\left( \mathfrak{E}_{t-1} \cap \left\{\|\Gamma_t\| \ge \mu_2 \eta\log^2\frac{1}{\eta}\right\}\right) 
\le \tlO(\eta^3) + P(\overline{\mathfrak{K}}_{t-1}) \le \tlO(\eta^3)
\end{align}

Finally, conditioned on event $\mathfrak{K}_{t-1} \cap \mathfrak{E}_{t-1}$, if we have $\|\Gamma_t\| \le \mu_2 \eta\log^2\frac{1}{\eta}$, then by Eq.(\ref{dif_constraint_tangent}):
\begin{equation}
\|P_{\mathcal{T}_0} \cdot (w_t-w_0) - (\tilde{w}_t-w_0)\| \le \tlO\left  ((\mu_1 + \mu_2)\eta\log^2\frac{1}{\eta}\right  )
\end{equation}
Since $\|w_{t-1} - w_0\| \le \tlO(\eta^{\frac{1}{2}}\log \frac{1}{\eta})$, 
and $\|w_{t} - w_{t-1}\| \le \tlO(\eta)$, by Eq.(\ref{dif_constraint_normal}):
\begin{equation}
\|P_{\mathcal{T}^c_0} (w_t-w_0)\| \le \frac{\|w_t-w_0\|^2}{2R}
\le \tlO(\eta\log^2 \frac{1}{\eta})
\end{equation}
Thus:
\begin{align}
\|w_t - \tilde{w}_t\|^2 = &\|P_{\mathcal{T}_0} \cdot (w_t - \tilde{w}_t)\|^2 + \|P_{\mathcal{T}^c_0} \cdot (w_t - \tilde{w}_t)\|^2 \nonumber \\
=& \|P_{\mathcal{T}_0} \cdot (w_t-w_0) - (\tilde{w}_t-w_0)\|^2 + \|P_{\mathcal{T}^c_0} (w_t-w_0)\|^2
\le \tlO((\mu_1 + \mu_2)^2\eta^{2}\log^4\frac{1}{\eta})
\end{align}
That is there exist some $\tilde{C}_3=\tlO(1)$ so that $\|w_t - \tilde{w}_t\|
\le \tilde{C}_3(\mu_1 + \mu_2)\eta\log^2\frac{1}{\eta}$
Therefore, conditioned on event $\mathfrak{K}_{t-1} \cap \mathfrak{E}_{t-1}$, we have proved that if choose
$\mu_3>\tilde{C}_3(\mu_1 + \mu_2)$, then event
$\{\|w_t - \tilde{w}_t\| \ge \mu_3 \eta\log^2\frac{1}{\eta} \} \subset \{\|\Gamma_t\| \ge \mu_2 \eta\log^2\frac{1}{\eta}\}$. Then, combined this fact with Eq.(\ref{EE_1}), Eq.(\ref{EE_2}), we have proved:
\begin{equation}
P\left( \mathfrak{E}_{t-1} \cap \overline{ \mathfrak{E}}_{t}\right) \le \tlO(\eta^3)
\end{equation}
Because $P(\overline{ \mathfrak{E}}_{0}) =0$, and $T\le \tlO(\frac{1}{\eta})$, we have 
$P(\overline{ \mathfrak{E}}_{T}) \le \tlO(\eta^2)$, which concludes the proof.

\end{proof}

These two lemmas allow us to prove the result when the initial point is very close to a saddle point.

\begin{proof}[Proof of Lemma \ref{thm::case2_constraint}]
Combine Talyor expansion Eq.\ref{Taylor_eq_constraint} with Lemma \ref{lem::case_Gaussian_constraint}, Lemma \ref{lem::saddle_and_maximum_constraint}, we prove this Lemma by the same argument as in the proof of Lemma \ref{thm::case2}.
\end{proof}

Finally the main theorem follows.

\begin{proof} [Proof of Theorem \ref{thm:constrainedmain}]
By Lemma \ref{thm::case1_constraint}, Lemma \ref{thm::case2_constraint}, and Lemma \ref{thm::case3_constraint}, with the same argument as in the proof Theorem \ref{thm:sgdmain_unconstraint}, we easily concludes this proof.
\end{proof}

\section{Detailed Proofs for Section~\ref{sec:tensors}}

In this section we show two optimization problems (\ref{eq:findone}) and (\ref{eq:hardprob}) satisfy the $(\alpha,\gamma,\epsilon,\delta)$-\name~propery.

\subsection{Warm Up: Maximum Eigenvalue Formulation}
\label{sec:warmup}
Recall that we are trying to solve the optimization (\ref{eq:findone}), which we restate here.
\begin{align}
\max & \quad T(u,u,u,u), \\ 
\|u\|^2 &= 1. \nonumber
\end{align}
Here the tensor $T$ has orthogonal decomposition $T = \sum_{i=1}^d a_i^{\otimes 4}$. We first do a change of coordinates to work in the coordinate system specified by $(a_i)$'s (this does not change the dynamics of the algorithm). In particular, let $u = \sum_{i=1}^d x_i a_i$ (where $x\in \R^d$), then we can see $T(u,u,u,u) = \sum_{i=1}^d x_i^4$. Therefore let $f(x) = -\|x\|_4^4$, the optimization problem is equivalent to
\begin{align} \label{problem1_transformed}
\min &~~~~ f(x)\\
\text{s.t.} & ~~~~\|x\|^2_2 = 1 \nonumber
\end{align}

This is a constrained optimization, so we apply the framework developed in Section~\ref{sec:constrainedproblem}.

Let $c(x) = \|x\|_2^2 -1$. We first compute the Lagrangian
\begin{equation}
\mathcal{L}(x, \lambda) = f(x) -\lambda c(x) = -\|x\|_4^4 - \lambda (\|x\|_2^2 -1).
\end{equation}

Since there is only one constraint, and the gradient when $\|x\| = 1$ always have norm $2$, we know the set of constraints satisfy $2$-RLICQ. In particular, we can compute the correct value of Lagrangian multiplier $\lambda$, 

\begin{equation}
\lambda^*(x) = \arg\min_{\lambda} \|\nabla_x \mathcal{L}(x, \lambda)\| 
= \arg\min_{\lambda} \sum_{i=1}^d (2 x_i^3 + \lambda x_i)^2 = -2\|x\|_4^4
\end{equation}

Therefore, the gradient in the tangent space is equal to
\begin{align} \label{chi_1}
\chi(x) & = \nabla_x \mathcal{L}(x, \lambda) |_{(x, \lambda^*(x))} = \nabla f(x) -\lambda^*(x) \nabla c(x) \nonumber \\
&= -4(x_1^3, \cdots, x_d^3)^T -2 \lambda^*(x)( x_1,\cdots, x_d)^T\nonumber \\
&=4\left((x_1^2-\|x\|_4^4) x_1, \cdots, (x_d^2-\|x\|_4^4) x_d\right)
\end{align}

The second-order partial derivative of Lagrangian is equal to
\begin{align} \label{frakM_1}
\mathfrak{M}(x)
& = \nabla^2_{xx} \mathcal{L}(x, \lambda)|_{(x, \lambda^*(x))}=
\nabla^2 f(x) -\lambda^* (x)\nabla^2 c(x) \nonumber \\
&= -12 \text{diag}(x_1^2, \cdots, x_d^2)	-2 \lambda^*(x) I_d \nonumber \\
&= -12 \text{diag}(x_1^2, \cdots, x_d^2)	+ 4\|x\|_4^4 I_d
\end{align}

Since the variable $x$ has bounded norm, and the function is a polynomial, it's clear that the function itself is bounded and all its derivatives are bounded. Moreover, all the derivatives of the constraint are bounded. We summarize this in the following lemma.
\begin{lemma}
The objective function (\ref{eq:findone}) is bounded by $1$, its $p$-th order derivative is bounded by $O(\sqrt{d})$ for $p = 1,2,3$.
The constraint's $p$-th order derivative is bounded by $2$, for $p=1,2,3$. 
\end{lemma}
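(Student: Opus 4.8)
The plan is to work entirely in the transformed coordinate system of Problem~(\ref{problem1_transformed}), where the objective becomes the separable quartic $f(x) = -\|x\|_4^4 = -\sum_{i=1}^d x_i^4$ and the constraint is $c(x) = \|x\|_2^2 - 1$. The first thing I would do is fix the convention that ``bound on the $p$-th order derivative'' means a bound on the operator (injective) norm of the corresponding symmetric derivative tensor, i.e. $\sup_{\|u\|_2=1}|\nabla^p f(x)(u,\ldots,u)|$; this is the norm implicit in the definitions of $\beta$-smoothness and $\rho$-Lipschitz Hessian in Section~\ref{sec:sgd}. All estimates will be carried out on (a thin shell around) the feasible sphere $\{\|x\|_2 = 1\}$, which is where the iterates and pre-projection points of Algorithm~\ref{algo:psgdwn} live.

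For the boundedness of the objective I would simply invoke the elementary power-mean inequality: on the constraint set $\sum_i x_i^4 \le \big(\sum_i x_i^2\big)^2 = 1$, so $|f(x)| = |T(u,u,u,u)| \le 1$, which is the first claim. For the derivative bounds the key structural fact is that $f$ is separable, hence all its derivative tensors are diagonal and can be written down explicitly: $\nabla f(x) = -4(x_i^3)_i$, $\nabla^2 f(x) = -12\,\diag(x_i^2)$, and $\nabla^3 f(x)$ is the diagonal third-order tensor with entries $-24 x_i$. I would then bound each operator norm using the single elementary estimate that on the unit sphere $\sum_i x_i^{2p} \le \big(\sum_i x_i^2\big)^p = 1$ for every $p\ge 1$, together with Cauchy--Schwarz for the third-order contraction, $|\sum_i x_i u_i^3| \le \|x\|_2\sqrt{\sum_i u_i^6} \le \|x\|_2\|u\|_2^3$. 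This yields uniform constant bounds ($4$, $12$, $24$), all trivially $O(\sqrt d)$; I would explicitly remark that the $O(\sqrt d)$ in the statement is a deliberately loose bound and that the true dependence is dimension-free.

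For the constraint, $c$ is quadratic, so $\nabla c(x) = 2x$ with $\|\nabla c(x)\|_2 = 2$ on the sphere, $\nabla^2 c(x) = 2 I_d$ with operator norm $2$, and $\nabla^3 c \equiv 0$; each is bounded by $2$, giving the last claim. The last bookkeeping point I would address is that extending the estimates from the exact sphere to the $O(\eta(Q+1))$-shell around it (where the intermediate point $v_{t+1}$ before projection may lie) only inflates the effective radius from $1$ to $O(1)$, leaving every bound $O(1)$.

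Since the underlying computation is a routine explicit differentiation, there is no genuine analytic obstacle; the only thing requiring care is stating the norm convention consistently, so that $\nabla^2 c = 2I_d$ is counted as ``bounded by $2$'' in operator norm rather than by its Frobenius norm $2\sqrt d$. That conventional choice, rather than any estimate, is the single point where the argument could otherwise appear to fail, and so it is the part I would be most careful to make precise.
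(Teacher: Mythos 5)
Your proposal is correct, and it in fact supplies more than the paper does: the paper never writes out a proof of this lemma, justifying it only with the one-line remark that $x$ has bounded norm and $f$ is a polynomial, so all quantities are ``clearly'' bounded. Your explicit route --- computing $\nabla f(x) = -4(x_i^3)_i$, $\nabla^2 f(x) = -12\,\diag(x_i^2)$, the diagonal third-derivative tensor with entries $-24x_i$, and bounding each injective norm on the sphere via $\sum_i x_i^{2p} \le \bigl(\sum_i x_i^2\bigr)^p = 1$ and Cauchy--Schwarz --- is a valid instantiation of that remark, and it additionally shows the bounds are the dimension-free constants $4$, $12$, $24$, strictly stronger than the stated $O(\sqrt{d})$ (which, as you note, the authors did not try to optimize). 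Your two bookkeeping points, fixing the operator-norm convention so that $\nabla^2 c = 2I_d$ counts as bounded by $2$, and extending the estimates to a thin shell around the sphere where pre-projection iterates live, are both sensible and consistent with how the smoothness constants are used in the constrained analysis of Theorem~\ref{thm:constrainedmain}.
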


Therefore the function satisfy all the smoothness condition we need. Finally we show the gradient and Hessian of Lagrangian satisfy the $(\alpha,\gamma, \epsilon,\delta)$-\name~property. Note that we did not try to optimize the dependency with respect to $d$.

\begin{theorem} \label{thm:problem_1_strict_saddle}
The only local minima of 
optimization problem (\ref{eq:findone}) are $\pm a_i ~(i\in[d])$. Further it satisfy $(\alpha,\gamma, \epsilon,\delta)$-\name~for $\gamma = 7/d$, $\alpha = 3$ and $\epsilon,\delta = 1/\mbox{poly}(d)$.
\end{theorem}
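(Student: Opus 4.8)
The plan is to work entirely in the transformed coordinates already set up in the excerpt: after writing $u=\sum_i x_i a_i$, problem \eqref{eq:findone} becomes minimization of $f(x)=-\|x\|_4^4$ subject to $c(x)=\|x\|_2^2-1=0$, and the claimed local minima $\pm a_i$ correspond to $\pm e_i$. Since $\|\nabla c(x)\|=2$ on the sphere, $2$-RLICQ holds everywhere, so the constrained \name~framework of Definition~\ref{def:robustcondition_constraint} applies with the Lagrangian gradient $\chi(x)$ and Hessian $\mathfrak{M}(x)$ already computed in \eqref{chi_1} and \eqref{frakM_1}; boundedness and Lipschitzness of the relevant derivatives are supplied by the preceding lemma. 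Throughout I write $s:=\|x\|_4^4=\sum_i x_i^4$ and record the elementary bounds $1/d\le s\le 1$ (lower bound from Cauchy--Schwarz, $s\ge(\sum_i x_i^2)^2/d$; upper bound from $s\le\max_i x_i^2\le 1$).

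First I would classify the stationary points. Setting $\chi(x)=0$ in \eqref{chi_1} forces, coordinatewise, $x_i=0$ or $x_i^2=s$; hence every nonzero coordinate of a stationary point has the same magnitude, and if the support has size $k$ then each nonzero entry is $\pm1/\sqrt{k}$ with $s=1/k$. For $k=1$ these are exactly $\pm e_i$, where $\mathfrak{M}(x)=-12\,\mathrm{diag}(e_i)+4I$ restricts to $4I$ on the tangent space $\{v:v_i=0\}$, so they are local minima with tangent curvature $4>\alpha=3$. For $k\ge 2$, picking two support coordinates $i,j$ gives the unit tangent vector $\hat v=(e_i-e_j)/\sqrt2$ (orthogonal to $x$ since $x_i=x_j$) with $\hat v^\top\mathfrak{M}(x)\hat v=-8/k\le-8/d<-7/d$, so these are strict saddles exhibiting the required negative curvature.

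The substance of the proof is to show the three regions of Definition~\ref{def:robustcondition_constraint} cover the whole sphere, i.e.\ that a point with small tangent gradient that is far from every $\pm e_i$ must have a strongly negative tangent eigenvalue. Fix $x$ with $\|\chi(x)\|<\epsilon$ and split the coordinates into a ``large'' set $L=\{i:x_i^2\ge s/2\}$ and the rest. For $i\notin L$ the gradient component has size $\ge 2s|x_i|$, so smallness of $\chi$ bounds the $\ell_2$-mass of the small coordinates by $O(\epsilon^2/s^2)$; for $i\in L$ the factor $|x_i|\ge\sqrt{s/2}$ is bounded below, so $\chi_i$ small forces $x_i^2=s\pm O(\epsilon/\sqrt s)$. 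If $|L|=1$ the unique large coordinate carries $\ell_2$-mass $1-O(\epsilon^2/s^2)$, so $x$ lies within $\delta$ of some $\pm e_i$ and condition~3 holds, with strong convexity $\alpha=3$ persisting over the $2\delta$-ball by Lipschitzness of $\mathfrak{M}$. If $|L|\ge 2$, take the top two large coordinates and the tangent direction $\hat v\propto x_j e_i-x_i e_j$; a direct computation gives $\hat v^\top\mathfrak{M}(x)\hat v=4s-24\,x_i^2x_j^2/(x_i^2+x_j^2)$, which equals $-8s+O(\epsilon)\le-8/d+O(\epsilon)$ using $x_i^2,x_j^2=s\pm O(\epsilon/\sqrt s)$, hence is $\le-7/d$ once $\epsilon=1/\mathrm{poly}(d)$ is small enough.

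The hard part will be the quantitative perturbation bookkeeping in the last paragraph: tracking how a gradient of norm $<\epsilon$ constrains the large coordinates to concentrate near $\pm\sqrt s$, propagating the resulting $O(\epsilon)$ errors through the curvature expression $4s-24\,x_i^2x_j^2/(x_i^2+x_j^2)$, and simultaneously fixing $\epsilon$ and $\delta$ as explicit $1/\mathrm{poly}(d)$ quantities so that the three regions genuinely tile the sphere with the target $\gamma=7/d$. The classification of stationary points and the signs of curvature are the conceptually easy skeleton; converting them into the robust, everywhere-valid statement with uniform polynomial parameters is where the real work lies.
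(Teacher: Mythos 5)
Your proposal is correct and takes essentially the same approach as the paper's proof: the paper likewise works in the transformed coordinates with $\chi$ and $\mathfrak{M}$ from \eqref{chi_1}--\eqref{frakM_1}, splits coordinates into large and small sets (using an absolute threshold $|x_i|>\epsilon_0$ with $\epsilon_0=(10d)^{-4}$ rather than your relative threshold $x_i^2\ge s/2$), obtains curvature $\le -7/d$ along a rotation direction supported on two large coordinates, and shows that a single large coordinate forces $\delta$-proximity to $\pm e_i$ with $\alpha=3$ strong convexity nearby, recovering the classification of local minima by letting $\epsilon_0\to 0$ rather than by your explicit stationary-point analysis. The one step needing more care is your closing claim that Lipschitzness of $\mathfrak{M}$ alone gives $\alpha=3$ on the $2\delta$-ball: since $\mathfrak{M}(\pm e_i)$ has eigenvalue $-8$ in the normal direction, one must also control how the tangent space tilts as $x'$ moves away from $\pm e_i$ (the paper invokes Lemma~\ref{lem::normal} to get $\|P_{\mathcal{T}^c(e_1)}\hat v\|\le \|x'-e_1\|/R$ before applying the bound $\|\mathfrak{M}(e_1)-\mathfrak{M}(x')\|\le 64d\delta$).
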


In order to prove this theorem, we consider the transformed version
Eq.\ref{problem1_transformed}. We first need following two lemma for points around saddle point and local minimum respectively. We choose 
\begin{equation}\label{choice_1}
\epsilon_0=(10d)^{-4}, ~~\epsilon= 4\epsilon_0^2, ~~\delta = 2d\epsilon_0, ~~\BigC(x) = \{ i ~|  ~|x_i| >  \epsilon_0\}
\end{equation}
Where by intuition, $\BigC(x)$ is the set of coordinates whose value is relative large.

\begin{lemma}\label{lem:Problem1_case2}
Under the choice of parameters in Eq.(\ref{choice_1}),
suppose $\|\chi(x)\| \le \epsilon$, and $|\BigC(x)| \ge 2$. Then,
there exists $\hat{v} \in \mathcal{T}(x)$ and $\|\hat{v}\| = 1$, so that
$\hat{v}^T  \mathfrak{M}(x) \hat{v}  \le -7/d$.
\end{lemma}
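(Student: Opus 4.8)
The plan is to exploit the fully diagonal structure of $\mathfrak{M}(x)$ in (\ref{frakM_1}) together with the fact that the near-stationarity condition $\|\chi(x)\|\le \epsilon$ forces all ``large'' coordinates to have nearly equal squares. Write $\mu := \|x\|_4^4$ throughout, and recall that the tangent space is $\mathcal{T}(x) = \{v : \langle x, v\rangle = 0\}$ since the single constraint gradient is $2x$. From the coordinatewise form $\chi_i(x) = 4(x_i^2 - \mu)x_i$ read off from (\ref{chi_1}), the bound $|\chi_i(x)| \le \|\chi(x)\| \le \epsilon = 4\epsilon_0^2$ gives, for every $i \in \BigC(x)$ (i.e. with $|x_i| > \epsilon_0$), the estimate $|x_i^2 - \mu| \le \epsilon/(4|x_i|) < \epsilon/(4\epsilon_0) = \epsilon_0$. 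Thus every large coordinate satisfies $x_i^2 \in (\mu - \epsilon_0, \mu + \epsilon_0)$.

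First I would lower bound $\mu$. Since $\sum_i x_i^2 = 1$, splitting the sum over $\BigC(x)$ (at most $d$ terms, each with $x_i^2 \le \mu + \epsilon_0$) and its complement (each with $x_i^2 \le \epsilon_0^2$) yields $1 \le |\BigC(x)|\,(\mu + \epsilon_0) + d\epsilon_0^2$, hence
$$\mu \ge \frac{1 - d\epsilon_0 - d\epsilon_0^2}{d},$$
which under the parameter choices (\ref{choice_1}) with $\epsilon_0 = (10d)^{-4}$ is $\ge (1 - o(1))/d$, and in particular $\mu \gg \epsilon_0$.

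The core of the argument is the choice of test direction. Since $|\BigC(x)| \ge 2$, pick two indices $i, j \in \BigC(x)$ and set $v := x_j e_i - x_i e_j$. Then $\langle x, v\rangle = x_i x_j - x_j x_i = 0$, so $v \in \mathcal{T}(x)$, and $\|v\|^2 = x_i^2 + x_j^2$. Because $\mathfrak{M}(x)$ is diagonal with entries $\mathfrak{M}_{ii} = -12 x_i^2 + 4\mu$, a one-line computation gives
$$v^{\top} \mathfrak{M}(x) v = x_j^2(-12 x_i^2 + 4\mu) + x_i^2(-12 x_j^2 + 4\mu) = -24 x_i^2 x_j^2 + 4\mu(x_i^2 + x_j^2).$$
Substituting $x_i^2, x_j^2 \in (\mu - \epsilon_0, \mu + \epsilon_0)$ and normalizing, I expect $\hat{v}^{\top} \mathfrak{M}(x)\hat{v} = v^{\top}\mathfrak{M}(x)v/\|v\|^2$ to equal $-8\mu$ up to $O(\epsilon_0)$ corrections; combined with $\mu \ge (1-o(1))/d$ this gives a value $\le -8/d + O(\epsilon_0) \le -7/d$.

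The main obstacle is purely the bookkeeping of the error terms, i.e. turning the heuristic $\hat{v}^{\top}\mathfrak{M}(x)\hat{v}\approx -8\mu$ into the clean inequality $\le -7/d$. Concretely I would upper bound the numerator by $-16\mu^2 + 56\mu\epsilon_0$ (using $x_i^2 x_j^2 \ge (\mu-\epsilon_0)^2$ and $x_i^2 + x_j^2 \le 2(\mu+\epsilon_0)$) and the denominator by $2(\mu + \epsilon_0)$, factor out $-8\mu$, and use $\mu \gg \epsilon_0$ to conclude $\hat{v}^{\top}\mathfrak{M}(x)\hat{v} \le -8\mu + 36\epsilon_0$, and then $-8\mu \le -8/d + 16\epsilon_0$ from the lower bound on $\mu$. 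The two places that genuinely require the quantitative relations between $\epsilon_0,\epsilon,\delta$ and $d$ are: (a) deducing $|x_i^2-\mu|<\epsilon_0$ from the gradient bound, where the calibration $\epsilon = 4\epsilon_0^2$ is exactly what makes $\epsilon/(4\epsilon_0) = \epsilon_0$; and (b) verifying $52\epsilon_0 \le 1/d$ so that the slack between $8/d$ and $7/d$ absorbs all accumulated error. Both hold comfortably since $\epsilon_0 = (10d)^{-4}$ makes $52\epsilon_0 = 52\cdot 10^{-4} d^{-4} \le d^{-1}$ for all $d \ge 1$.
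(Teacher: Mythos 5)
Your proposal is correct and follows essentially the same route as the paper's proof: the same deduction $|x_i^2-\|x\|_4^4|\le\epsilon_0$ for large coordinates from the gradient bound, the same two-coordinate rotation $v=x_j e_i - x_i e_j$ as the tangent test direction, and the same diagonal-Hessian computation, with only a cosmetic difference in bookkeeping (you lower-bound $\mu=\|x\|_4^4$ directly by roughly $1/d$, while the paper pins $\mu$ and the large coordinates to $1/p$ with $p=|\BigC(x)|$ before using $1/p\ge 1/d$). Both yield $\hat{v}^{\top}\mathfrak{M}(x)\hat{v}\le -8/d+O(\epsilon_0)\le -7/d$ under the parameter choice $\epsilon_0=(10d)^{-4}$.
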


\begin{proof}
Suppose $|\BigC(x)| = p$, and $2\le p \le d$.
Since $\|\chi(x)\| \le \epsilon = 4\epsilon_0^2$, by Eq.(\ref{chi_1}), we have for each $i \in [d]$, $|[\chi(x)]_i|  = 4|(x_i^2-\|x\|_4^4)x_i|\le 4\epsilon_0^2$.
Therefore, we have:
\begin{equation}\label{pp_1}
	\forall i \in \BigC(x), \quad \quad \quad |x_i^2-\|x\|_4^4| \le \epsilon_0
\end{equation}
and thus:
\begin{align}
&|\|x\|_4^4 - \frac{1}{p}| = |\|x\|_4^4 - \frac{1}{p}\sum_{i} x_i^2| \nonumber \\
\le &|\|x\|_4^4 - \frac{1}{p}\sum_{i \in \BigC(x)} x_i^2| + |\frac{1}{p}\sum_{i \in [d]-\BigC(x)} x_i^2|
\le \epsilon_0 + \frac{d-p}{p} \epsilon_0^2 \le 2 \epsilon_0
\end{align}
Combined with Eq.\ref{pp_1}, this means:
\begin{equation}
	\forall i \in \BigC(x), \quad \quad \quad |x_i^2-\frac{1}{p}| \le 3\epsilon_0
\end{equation}

Because of symmetry, WLOG we assume 
$\BigC(x) = \{1, \cdots, p\}$. Since $|\BigC(x)| \ge 2$, we can pick 
$\hat{v}=(a, b, 0, \cdots, 0)$. 
Here $a>0, b<0$, and $a^2+b^2=1$. 
We pick $a$ such that $a x_1+ b x_2=0$. The solution is the intersection of a radius $1$ circle and a line which passes $(0,0)$, which always exists. 
For this $\hat{v}$, we know $\|\hat{v}\| = 1$, and $\hat{v}^T x=0$ thus $\hat{v} \in \mathcal{T}(x)$.
We have:
\begin{align}
&\hat{v}^T  \mathfrak{M}(x) \hat{v}  
= -(12x_1^2+4\|x\|_4^4) a^2- (12x_2^2+4\|x\|_4^4)b^2 \nonumber \\
=& -8 x_1^2 a^2 - 8x_2^2 b^2 - 4(x_1^2-\|x\|_4^4))a^2 - 4(x_2^2-\|x\|_4^4))b^2
\nonumber \\
\le & -\frac{8}{p} + 24 \epsilon_0 + 4 \epsilon_0
\le -7/d
\end{align}
Which finishes the proof.
\end{proof}

\begin{lemma}\label{lem:Problem1_case1}
Under the choice of parameters in Eq.(\ref{choice_1}),
suppose $\|\chi(x)\| \le \epsilon$, and $|\BigC(x)| = 1$. Then,
there is a local minimum $x^\star$ such that $\|x-x^\star\| \le \delta$, and for all $x'$ in the $2\delta$ neighborhood of $x^\star$, we have $\hat{v}^T  \mathfrak{M}(x') \hat{v}  \ge 3$ for all $\hat{v} \in \mathcal{T}(x')$, $\|\hat{v}\| = 1$
\end{lemma}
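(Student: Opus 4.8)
The plan is to show that the two hypotheses $\|\chi(x)\|\le\epsilon$ and $|\BigC(x)|=1$ already force $x$ into a tiny neighborhood of a coordinate vector, and that the Lagrangian Hessian $\mathfrak{M}$ from \eqref{frakM_1} stays uniformly positive definite on the tangent space throughout a slightly larger ball. First I would name the unique large coordinate: let $\BigC(x)=\{j\}$, so $|x_j|>\epsilon_0$ while $|x_i|\le\epsilon_0$ for all $i\ne j$, and set $x^\star:=\sign(x_j)\,e_j$ as the candidate local minimum (corresponding to $\pm a_j$ in the original coordinates). Using $\|x\|_2^2=1$ together with $\sum_{i\ne j}x_i^2\le(d-1)\epsilon_0^2$ gives $x_j^2\ge 1-(d-1)\epsilon_0^2$, so both $(|x_j|-1)^2$ and $\sum_{i\ne j}x_i^2$ are $O(d\epsilon_0^2)$; this yields $\|x-x^\star\|\le\sqrt d\,\epsilon_0\le 2d\epsilon_0=\delta$ from the parameter choices in \eqref{choice_1}. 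Note this distance bound needs only the norm constraint and the single-large-coordinate hypothesis, not the gradient bound.

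Next I would confirm that $x^\star$ is genuinely a local minimum and fix the curvature constant. At $x^\star$ one has $\|x^\star\|_4^4=1$, so \eqref{frakM_1} gives $\mathfrak{M}(x^\star)=4I_d-12\,e_je_j^\top$; since the tangent space $\mathcal T(x^\star)$ is exactly $\{v:v_j=0\}$, every unit $\hat v\in\mathcal T(x^\star)$ satisfies $\hat v^\top\mathfrak{M}(x^\star)\hat v=4$, establishing a strict local minimum with tangent curvature $4>\alpha=3$ and identifying the local minima of \eqref{eq:findone} as $\pm a_i$.

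The core step is the uniform bound over the $2\delta$-ball, verifying the third clause of Definition~\ref{def:robustcondition_constraint}. Take any $x'$ with $\|x'-x^\star\|\le 2\delta=4d\epsilon_0$; then $(x_j')^2\ge 1-O(d\epsilon_0)$, $\sum_{i\ne j}(x_i')^2\le(4d\epsilon_0)^2$, and $\|x'\|_4^4\ge(x_j')^4\ge 1-O(d\epsilon_0)$. For a unit tangent vector $\hat v$, \eqref{frakM_1} gives $\hat v^\top\mathfrak{M}(x')\hat v=4\|x'\|_4^4-12\sum_i(x_i')^2\hat v_i^2\ge(4-O(d\epsilon_0))-12\sum_i(x_i')^2\hat v_i^2$. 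The terms with $i\ne j$ contribute at most $12\cdot 16d^2\epsilon_0^2$ in absolute value because each $(x_i')^2$ is tiny. The dangerous term is $-12\,(x_j')^2\hat v_j^2$, whose naive magnitude is $12$; controlling it is where I expect the real work.

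I would defeat that term using tangency: $\hat v\in\mathcal T(x')$ means $\langle\hat v,x'\rangle=0$, so $|\hat v_j|\,|x_j'|=\bigl|\sum_{i\ne j}\hat v_i x_i'\bigr|\le\|\hat v\|_2\,(\sum_{i\ne j}(x_i')^2)^{1/2}\le 4d\epsilon_0$ by Cauchy--Schwarz, whence $|\hat v_j|\le 4d\epsilon_0/|x_j'|=O(d\epsilon_0)$ and $\hat v_j^2=O(d^2\epsilon_0^2)$. Thus $-12\,(x_j')^2\hat v_j^2=O(d^2\epsilon_0^2)$ is negligible, and combining all estimates gives $\hat v^\top\mathfrak{M}(x')\hat v\ge 4-O(d\epsilon_0)-O(d^2\epsilon_0^2)\ge 3$ once $\epsilon_0=(10d)^{-4}$ is substituted. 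The only bookkeeping subtlety is that every $O(\cdot)$ constant is absolute and that the chosen $\epsilon_0$ beats them with room to spare; the one conceptual point — that orthogonality of $\hat v$ to $x'\approx\pm e_j$ forces $\hat v$ to carry negligible mass on the single coordinate where the Hessian is strongly negative — is precisely the mechanism that makes the coordinate vectors local minima rather than saddle points.
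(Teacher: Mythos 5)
Your proof is correct, and its skeleton matches the paper's: the same candidate minimum $x^\star=\pm e_j$, the same $O(\sqrt{d}\,\epsilon_0)\le\delta$ distance bound obtained from the norm constraint alone, and the same verification that $\mathfrak{M}(x^\star)=4I_d-12e_je_j^\top$ has curvature exactly $4$ on the tangent hyperplane $\{v:v_j=0\}$ (hence local minimality via the second-order sufficient conditions). Where you genuinely diverge is the uniform bound over the $2\delta$-ball. The paper treats it as a perturbation problem: it writes $\hat v^\top\mathfrak{M}(x')\hat v\ge 4-12\|P_{\mathcal{T}^c_1}\hat v\|^2-\|\mathfrak{M}(e_1)-\mathfrak{M}(x')\|$, invokes its general geometric Lemma~\ref{lem::normal} (valid for any constraint manifold satisfying \nameCQ) to get $\|P_{\mathcal{T}^c_1}\hat v\|\le\|x'-e_1\|\le 2\delta$, and bounds the Hessian perturbation entrywise by $64d\delta$. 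You instead expand the quadratic form directly from \eqref{frakM_1} and neutralize the dangerous $-12(x_j')^2\hat v_j^2$ term by Cauchy--Schwarz applied to the tangency relation $\langle\hat v,x'\rangle=0$; this inequality is precisely the content of Lemma~\ref{lem::normal} specialized to the sphere (where $\beta=\alpha_c=2$, so $R=1$), and your direct expansion replaces the matrix perturbation bound. So the mechanism is identical, but your execution is self-contained and more elementary, at the cost of being specific to this constraint; the paper's version makes the step an instance of general machinery it reuses elsewhere in the constrained analysis (e.g., for the harder objective \eqref{eq:hardprob}). One point worth making explicit in your write-up: points $x'$ in the $2\delta$-neighborhood need not lie on the sphere, but your argument never uses feasibility of $x'$ — only its distance to $x^\star$ and orthogonality of $\hat v$ to $x'$ — so this is not a gap, merely something to state.
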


\begin{proof}
WLOG, we assume $\BigC(x) = \{1\}$. Then, we immediately have for all $i>1$, 
$|x_i| \le \epsilon_0$, and thus:
\begin{equation}
1 \ge x_1^2 = 1-\sum_{i>1}x_i^2 \ge 1- d\epsilon_0^2 	
\end{equation} 
Therefore $x_1 \ge \sqrt{1-d\epsilon_0^2}$ or $x_1 \le -\sqrt{1-d\epsilon_0^2}$.
Which means $x_1$ is either close to $1$ or close to $-1$. By symmetry, we know WLOG, 
we can assume the case $x_1 \ge \sqrt{1-d\epsilon_0^2}$. Let $e_1 = (1,0,\cdots, 0)$, 
then we know:
\begin{equation}
\|x-e_1\|^2 \le (x_1-1)^2 + \sum_{i>1} x_i^2
\le 2d \epsilon_0^2 \le \delta^2
\end{equation}

Next, we show $e_1$ is a local minimum. According to Eq.\ref{frakM_1}, we know $\mathfrak{M}(e_1)$ is a diagonal matrix with $4$ on the diagonals except for the first diagonal entry (which is equal to $-8$), since $\mathcal{T}(e_1) = \text{span}\{e_2, \cdots, e_d\}$, we have:
\begin{align}
v^T\mathfrak{M}(e_1) v \ge 4 \|v\|^2 >0 \quad \quad 
\text{for all~} v\in \mathcal{T}(e_1), v\neq 0
\end{align}
Which by Theorem \ref{thm::second_sufficient} means $e_1$ is a local minimum.

Finally, 
denote $\mathcal{T}_1 = \mathcal{T}(e_1)$ be the tangent space of constraint manifold at $e_1$.
We know for all $x'$ in the $2\delta$ neighborhood of $e_1$, 
and for all $\hat{v} \in \mathcal{T}(x')$, $\|\hat{v}\| = 1$:
\begin{align}
 \hat{v}^T  \mathfrak{M}(x') \hat{v}  
 \ge &  \hat{v}^T  \mathfrak{M}(e_1) \hat{v}  - 
 |\hat{v}^T  \mathfrak{M}(e_1) \hat{v} - \hat{v}^T  \mathfrak{M}(x') \hat{v} | \nonumber\\
 = & 4\|P_{\mathcal{T}_1}\hat{v} \|^2 - 8\|P_{\mathcal{T}^c_1}\hat{v} \|^2
 - \|\mathfrak{M}(e_1) - \mathfrak{M}(x')\|\|\hat{v}\|^2 \nonumber \\
 = & 4 - 12\|P_{\mathcal{T}^c_1}\hat{v} \|^2 - \|\mathfrak{M}(e_1) - \mathfrak{M}(x')\|
\end{align}
By lemma \ref{lem::normal}, we know $\|P_{\mathcal{T}^c_1}\hat{v} \|^2 \le \|x'-e_1\|^2
\le 4\delta^2$. By Eq.(\ref{frakM_1}), we have:
\begin{align}
	&\|\mathfrak{M}(e_1) - \mathfrak{M}(x')\| \le  \|\mathfrak{M}(e_1) - \mathfrak{M}(x')\| 
	\le \sum_{(i,j)} |[\mathfrak{M}(e_1)]_{ij} - [\mathfrak{M}(x')]_{ij}| \nonumber \\
	\le&  \sum_{i} \left|-12 [e_1]^2_{i}+ 4\|e_1\|_4^4 - 12x^2_{i} + 4\|x\|_4^4\right|
	\le 64 d\delta
\end{align}
In conclusion, we have $\hat{v}^T  \mathfrak{M}(x') \hat{v} \ge 4- 48\delta^2-64 d\delta\ge 3$
which finishs the proof.
\end{proof}

Finally, we are ready to prove Theorem \ref{thm:problem_1_strict_saddle}.
\begin{proof}[Proof of Theorem \ref{thm:problem_1_strict_saddle}]

According to Lemma \ref{lem:Problem1_case2} and Lemma \ref{lem:Problem1_case1}, we immediately know the optimization problem satisfies $(\alpha,\gamma, \epsilon,\delta)$-\name.

The only thing remains to show is that the only local minima of 
optimization problem (\ref{eq:findone}) are $\pm a_i ~(i\in[d])$.
Which is equivalent to show that the only local minima of the transformed problem
is $\pm e_i ~(i\in [d])$, where $e_i = (0, \cdots, 0, 1, 0, \cdots, 0)$, where $1$ is on $i$-th coordinate.

By investigating the proof of Lemma \ref{lem:Problem1_case2} and Lemma \ref{lem:Problem1_case1}, we know these two lemmas actually hold for any small enough choice of $\epsilon_0$ satisfying $\epsilon_0 \le (10d)^{-4}$, by pushing $\epsilon_0 \rightarrow 0$, we know for any point satisfying $|\chi(x)| \le \epsilon \rightarrow 0$, 
if it is close to some local minimum, it must satisfy $1=|\BigC(x)| \rightarrow \supp(x)$. Therefore, we know the only possible local minima are $\pm e_i ~(i\in [d])$. In Lemma \ref{lem:Problem1_case1}, we proved $e_1$ is local minimum, by symmetry, we finishes the proof.
\end{proof}

\subsection{New Formulation}
\label{sec:hardcase}
In this section we consider our new formulation (\ref{eq:hardprob}). We first restate the optimization problem here:

\begin{align}
\min  \quad &\sum_{i\ne j} T(u^{(i)},u^{(i)},u^{(j)},u^{(j)}),\\
\forall i\quad & \|u^{(i)}\|^2  = 1 . \nonumber
\end{align}
Note that we changed the notation for the variables from $u_i$ to $u^{(i)}$, because in later proofs we will often refer to the particular coordinates of these vectors.

Similar to the previous section, we perform a change of basis. The effect is equivalent to making $a_i$'s equal to basis vectors $e_i$ (and hence the tensor is equal to $T = \sum_{i=1}^d e_i^{\otimes 4}$.
After the transformation the equations become
\begin{align}\label{problem2_transformed}
\min &~~~~ \sum_{(i,j):i\neq j}h(u^{(i)}, u^{(j)} )\\
\text{s.t.} & ~~~~\|u^{(i)}\|^2 = 1 \quad\quad \forall i \in [d]\nonumber
\end{align}
Here $h(u^{(i)}, u^{(j)}) = \sum_{k=1}^d (u^{(i)}_k u^{(j)}_k)^2$, $(i,j) \in [d]^2$. We divided the objective function by $2$ to simplify the calculation.

Let $U\in \R^{d^2}$ be the concatenation of $\{u^{(i)}\}$ such that $U_{ij}=u^{(i)}_j$.
Let $c_i(U) = \|u^{(i)}\|^2 -1 $ and $f(U) = \frac{1}{2}\sum_{(i,j):i\neq j}h(u^{(i)}, u^{(j)})$.
We can then compute the Lagrangian
\begin{equation}
\mathcal{L}(U, \lambda) = f(U) -\sum_{i=1}^d\lambda_i c_i(U)
=\frac{1}{2}\sum_{(i,j):i\neq j}h (u^{(i)}, u^{(j)}) - \sum_{i=1}^d\lambda_i (\|u^{(i)}\|^2 -1 )
\end{equation}

The gradients of $c_i(U)$'s are equal to $(0, \cdots, 0, 2u^{(i)}, 0, \cdots, 0)^T$, all of these vectors are orthogonal to each other (because they have disjoint supports) and have norm $2$. Therefore the set of constraints satisfy $2$-RLICQ. We can then compute the Lagrangian multipiers $\lambda^*$ as follows

\begin{equation}
\lambda^*(U) = \arg\min_{\lambda} \|\nabla_U \mathcal{L}(U, \lambda)\| 
= \arg\min_{\lambda} 
4\sum_{i}\sum_k (\sum_{j:j\neq i}U^2_{jk}U_{ik} -  \lambda_i U_{ik})^2
\end{equation}
which gives:
\begin{equation}
\lambda_i^*(U) = \arg\min_{\lambda}\sum_k (\sum_{j:j\neq i}U^2_{jk}U_{ik} -  \lambda_i U_{ik})^2
= \sum_{j:j\neq i} h (u^{(j)}, u^{(i)} )\label{eq:computelambdastar}
\end{equation}

Therefore, gradient in the tangent space is equal to
\begin{align}
\chi(U) & = \nabla_U \mathcal{L}(U, \lambda) |_{(U, \lambda^*(U))} = \nabla f(U) -\sum_{i=1}^n\lambda_i^*(U) \nabla c_i(U).
\end{align}

The gradient is a $d^2$ dimensional vector (which can be viewed as a $d\times d$ matrix corresponding to entries of $U$), and we express this in a coordinate-by-coordinate way.
For simplicity of later proof, denote:
\begin{equation}
\psi_{ik}(U) = \sum_{j:j\neq i} [U^2_{jk} -  h (u^{(j)}, u^{(i)} ) ] = 
\sum_{j: j\neq i} [U_{jk}^2-\sum_{l=1}^d U_{il}^2 U_{jl}^2] 
\end{equation}
Then we have:
\begin{align}
[\chi(U)]_{ik} & = 2 (\sum_{j:j\neq i}U^2_{jk} -  \lambda^*_i(U)  )U_{ik}  \nonumber
 \\&= 2U_{ik}\sum_{j:j\neq i} (U^2_{jk} -  h (u^{(j)}, u^{(i)} )  )\nonumber
\\& =2 U_{ik}  \psi_{ik}(U)\label{chi_2}
\end{align}

Similarly we can compute the second-order partial derivative of Lagrangian as
\begin{align}
\mathfrak{M}(U)
=\nabla^2 f(U) -\sum_{i=1}^d\lambda_i^* \nabla^2 c_i(U).
\end{align}
The Hessian is a $d^2\times d^2$ matrix, we index it by $4$ indices in $[d]$. The entries are summarized below:
\begin{align}
[\mathfrak{M}(U)]_{ik,i'k'}
= & \left.\frac{\partial}{\partial U_{i'k'}} [\nabla_U \mathcal{L}(U, \lambda)]_{ik} \right|_{(U, \lambda^*(U))}
= \left.\frac{\partial}{\partial U_{i'k'}}  [2(\sum_{j:j\neq i}U^2_{jk} -  \lambda )U_{ik}]  \right|_{(U, \lambda^*(U))}\nonumber \\
= &
\begin{cases}
	2(\sum_{j:j\neq i}U^2_{jk} -  \lambda^*_i (U)) &\mbox{~if~} k=k', i=i'\\
	4 U_{i'k} U_{ik} & \mbox{~if~} k=k', i\neq i' \\
	0 &\mbox{~if~}  k \neq k' 
\end{cases} \nonumber \\
= &
\begin{cases}
	2\psi_{ik}(U) &\mbox{~if~} k=k', i=i' \\
	4 U_{i'k} U_{ik} & \mbox{~if~} k=k', i\neq i' \\
	0 &\mbox{~if~}  k \neq k' 
\end{cases} \label{frakM_2}
\end{align}

Similar to the previous case, it is easy to bound the function value and derivatives of the function and the constraints.
\begin{lemma}
The objective function (\ref{eq:hardprob}) and  $p$-th order derivative are all bounded by $\mbox{poly}(d)$ for $p = 1,2,3$. Each constraint's $p$-th order derivative is bounded by $2$, for $p=1,2,3$. 
\end{lemma}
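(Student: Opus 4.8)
The plan is to exploit the fact that on the feasible set every coordinate of $U$ is bounded, so that the quartic objective and the quadratic constraints --- together with their low-order derivatives --- are polynomials with polynomially many bounded terms. The single structural observation that drives everything is that any feasible $U$ satisfies $\|u^{(i)}\| = 1$ for each $i$, and hence $|U_{ik}| \le \|u^{(i)}\| = 1$ for all $i,k\in[d]$. First I would bound the objective value itself: writing the objective \eqref{eq:hardprob} in the transformed coordinates as $f(U) = \tfrac12\sum_{i\neq j} h(u^{(i)},u^{(j)})$ with $h(u^{(i)},u^{(j)}) = \sum_k U_{ik}^2 U_{jk}^2$, a Cauchy--Schwarz step gives $h(u^{(i)},u^{(j)}) \le \sqrt{\sum_k U_{ik}^4}\,\sqrt{\sum_k U_{jk}^4} \le \left(\sum_k U_{ik}^2\right)\left(\sum_k U_{jk}^2\right) = 1$, so that $f(U) \le \tfrac12 d(d-1) = O(d^2) = \poly(d)$.

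Next I would bound the derivatives of $f$ for $p=1,2,3$ by differentiating coordinate-wise and counting nonzero entries. The gradient has entries $\partial f/\partial U_{ik} = 2U_{ik}\sum_{j\neq i} U_{jk}^2$, each bounded in absolute value by $2(d-1)$ using $|U_{ik}|\le 1$; summing the squares over the $d^2$ coordinates bounds $\|\nabla f\|$ by $\poly(d)$. The Hessian entries are exactly the $\nabla^2 f$ part appearing in \eqref{frakM_2}: they equal $2\sum_{j\neq i}U_{jk}^2$ on the diagonal, $4U_{ik}U_{i'k}$ for $i\neq i'$ with $k=k'$, and $0$ whenever $k\neq k'$; every entry is therefore bounded by $2d$, giving a $\poly(d)$ bound on the spectral (and Frobenius) norm. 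Finally, since $f$ is a quartic polynomial, its third-order derivatives are degree-one in the coordinates, each nonzero entry being a constant multiple of a single $U_{i''k''}$ and hence bounded by a constant; only polynomially many such entries are nonzero, so the third-derivative tensor again has $\poly(d)$ norm.

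For the constraints $c_i(U) = \|u^{(i)}\|^2 - 1$ the bounds are explicit and dimension-free. The gradient $\nabla c_i$ is supported on the $i$-th block with entries $2U_{ik}$, so $\|\nabla c_i\| = 2\|u^{(i)}\| = 2$; the Hessian $\nabla^2 c_i$ is $2I$ on the $i$-th block and zero elsewhere, so its operator norm is $2$; and because $c_i$ is quadratic its third derivative vanishes, trivially bounded by $2$. I do not expect a genuine obstacle here: the content is entirely bookkeeping. The one point that requires care --- and where I would be most deliberate --- is keeping track of exactly which norm is being bounded (entrywise versus spectral versus tensor operator norm) and producing an honest count of the number of nonzero entries of each derivative, so that the claimed $\poly(d)$ factors, and the sharp constant $2$ for the constraint derivatives, are actually justified rather than merely asserted.
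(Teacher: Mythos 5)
Your proof is correct and follows exactly the route the paper intends: the paper itself dismisses this lemma as immediate ("since the variable has bounded norm and the function is a polynomial") and never writes out the computation, so your coordinate-wise bookkeeping with $|U_{ik}|\le 1$, the explicit gradient/Hessian/third-derivative entries, and the exact constants $2$ for the constraints $c_i(U)=\|u^{(i)}\|^2-1$ is precisely the intended (omitted) argument. No gaps; the only caveat worth noting is that, as in the paper, the bounds implicitly use feasibility of $U$, which is the regime where they are needed.
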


Therefore the function satisfy all the smoothness condition we need. Finally we show the gradient and Hessian of Lagrangian satisfy the $(\alpha,\gamma,\epsilon,\delta)$-\name~property. Again we did not try to optimize the dependency with respect to $d$.

\begin{theorem}\label{thm:problem_2_strict_saddle}
Optimization problem (\ref{eq:hardprob}) has exactly $2^d \cdot d!$ local minimum that corresponds to permutation and sign flips of $a_i$'s. Further, it satisfy $(\alpha,\gamma,\epsilon,\delta)$-\name~for $\alpha = 1$ and $\gamma,\epsilon,\delta = 1/\mbox{poly}(d)$.
\end{theorem}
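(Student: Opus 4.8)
The plan is to follow exactly the template of the warm-up analysis (Theorem~\ref{thm:problem_1_strict_saddle}), now in the $d^2$-dimensional variable $U$. Since the change of basis has already reduced the tensor to $T=\sum_i e_i^{\otimes4}$, the constraints $c_i(U)=\|u^{(i)}\|^2-1$ satisfy $2$-RLICQ, and the tangent gradient $\chi(U)$ and Lagrangian Hessian $\mathfrak{M}(U)$ are available in closed form from \eqref{chi_2} and \eqref{frakM_2}, with $\lambda_i^\ast(U)$ from \eqref{eq:computelambdastar}. Boundedness and Lipschitzness of all relevant derivatives are supplied by the already-stated smoothness lemma, so the entire content is to verify the three-way dichotomy of Definition~\ref{def:robustcondition_constraint}. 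I would fix thresholds $\epsilon_0=1/\mathrm{poly}(d)$, $\epsilon=4\epsilon_0^2$, $\delta=\mathrm{poly}(d)\cdot\epsilon_0$ as in the warm-up, and set $\BigC(U):=\{(i,k):|U_{ik}|>\epsilon_0\}$. The basic leverage is identical to the single-vector case: at any approximate stationary point with $\|\chi(U)\|\le\epsilon$, the entrywise bound $|[\chi(U)]_{ik}|=2|U_{ik}|\,|\psi_{ik}(U)|\le\epsilon$ forces $|\psi_{ik}(U)|<2\epsilon_0$ for every $(i,k)\in\BigC(U)$.

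The key new ingredient is a combinatorial dichotomy replacing the simple split $|\BigC(x)|\gtrless 2$ of the single-vector case. Because each $u^{(i)}$ has unit norm, it possesses at least one coordinate with $U_{ik}^2\ge 1/d>\epsilon_0^2$, so every vector owns at least one element of $\BigC(U)$. I would then run a pigeonhole argument: if no coordinate $k$ is large for two distinct vectors, the coordinate-projections of $\BigC(U)$ are pairwise-disjoint nonempty subsets of $[d]$, forcing each vector to have \emph{exactly one} large coordinate and the assignment $i\mapsto k(i)$ to be a permutation $\pi$. Thus at a near-stationary point, either (a) some coordinate $k$ is a large coordinate of two vectors $i\neq i'$, or (b) $\BigC(U)$ is the graph of a permutation and each $u^{(i)}$ is concentrated on a single signed basis vector.

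In case (a) I would exhibit a tangent direction of strong negative curvature. The crucial structural observation from \eqref{frakM_2} is that $\mathfrak{M}(U)$ is block-diagonal across the coordinate index $k$, and inside the block for the shared coordinate the $2\times2$ principal submatrix on rows $(i,k),(i',k)$ has off-diagonal $4U_{ik}U_{i'k}$ and diagonals $2\psi_{ik},2\psi_{i'k}$, both of which are $O(\epsilon_0)$ since the corresponding entries lie in $\BigC(U)$. Hence this submatrix has an eigenvalue $\approx-4|U_{ik}U_{i'k}|$; taking $k$ to be a shared \emph{dominant} coordinate, so that $U_{ik}^2,U_{i'k}^2\ge 1/d$, makes this at most $-\gamma$ with $\gamma=1/\mathrm{poly}(d)$. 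I would then lift the negative eigenvector of the $2\times2$ block to a genuine unit vector in $\mathcal{T}(U)$, correcting it to be orthogonal to the constraint gradients $u^{(i)},u^{(i')}$ exactly as the choice $\hat v=(a,b,0,\dots)$ with $ax_1+bx_2=0$ in Lemma~\ref{lem:Problem1_case2}, and invoke Lemma~\ref{lem::normal} to bound the perturbation of the quadratic form by $O(\epsilon_0)$, preserving $\hat v^{\top}\mathfrak{M}(U)\hat v\le-\gamma$.

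In case (b), $U$ lies within $\delta$ of the point $U^\star$ with $u^{(i)\star}=\mathrm{sign}(U_{i\pi(i)})e_{\pi(i)}$; a direct computation gives $\lambda_i^\star=0$ and $\psi_{ik}(U^\star)=0$ for $k=\pi(i)$ but $\psi_{ik}(U^\star)=1$ otherwise, while the off-diagonals $4U_{ik}U_{i'k}$ all vanish. Thus $\mathfrak{M}(U^\star)$ is diagonal with a zero exactly on each on-support direction $(i,\pi(i))$ and a $2$ on every off-support direction; since $\mathcal{T}(U^\star)$ kills precisely the on-support directions, the restricted Hessian equals $2I\succeq\alpha I$ with $\alpha=1$, so Theorem~\ref{thm::second_sufficient} certifies a strict local minimum, and the Lipschitz argument of Lemma~\ref{lem:Problem1_case1} extends strong convexity to the $2\delta$-ball. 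Counting permutations and signs gives exactly $2^d\,d!$ such minima, and, as in the warm-up, pushing $\epsilon_0\to0$ shows these are the only local minima. I expect the main obstacle to be the concentration step underlying case (b) — ruling out a permutation-type assignment whose vectors are not yet concentrated — because the quantities $\psi_{ik}(U)=\sum_{j\neq i}U_{jk}^2-\lambda_i^\ast(U)$ couple all $d$ vectors simultaneously, unlike the decoupled single-vector warm-up, so showing that residual off-support mass forces either $\|\chi(U)\|\ge\epsilon$ or a negative eigenvalue, while keeping all thresholds at $1/\mathrm{poly}(d)$, is the delicate heart of the argument.
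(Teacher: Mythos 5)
Your overall architecture matches the paper's: the pigeonhole dichotomy between ``two vectors share a large coordinate'' and ``the large-coordinate sets are pairwise disjoint, hence singletons forming a permutation'' is exactly the split into Lemma~\ref{lem:Problem2_case2} and Lemma~\ref{lem:Problem2_case1}, and your case (b) is correct as written. In fact the step you flag at the end as the ``delicate heart'' is vacuous: once the sets $\BigC(u^{(i)})$ are disjoint singletons, every off-support entry is at most $\epsilon_0$ in magnitude, so the unit-norm constraint alone gives $(u^{(i)}_{\pi(i)})^2\ge 1-(d-1)\epsilon_0^2$; there is no non-concentrated permutation-type configuration to rule out, and the rest of your case (b) (diagonal $\mathfrak{M}(U^\star)$, restricted Hessian $2I$, Lipschitz extension to the $2\delta$-ball, the count $2^d\,d!$) is precisely the paper's argument.

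The genuine gap is in case (a). You take the negative eigenvector of the $2\times2$ principal submatrix of $\mathfrak{M}(U)$ on rows $(i,k),(i',k)$ and claim it can be corrected into $\mathcal{T}(U)$ at an $O(\epsilon_0)$ cost to the quadratic form. This fails exactly when one of the two sharing vectors, say $u^{(i)}$, has $|\BigC(u^{(i)})|=1$: then $u^{(i)}$ is concentrated on the shared coordinate $k$, the $2\times2$ eigenvector has inner product $\approx 1/\sqrt{2}$ with the constraint gradient $u^{(i)}$, and any unit vector in $\mathcal{T}(U)$ must place its block-$i$ mass essentially on the coordinates $l\notin\BigC(u^{(i)})$ --- where the entrywise gradient bound from Eq.~(\ref{chi_2}) controls only the product $|U_{il}\psi_{il}(U)|$ and says nothing about $\psi_{il}(U)$ itself, so the diagonal contribution $2\psi_{il}(U)v_{il}^2$ can be positive and dominate. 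The warm-up trick of Lemma~\ref{lem:Problem1_case2} that you invoke needs \emph{two} large coordinates inside the same block, and Lemma~\ref{lem::normal} compares tangent spaces at two nearby feasible points rather than controlling the projection of an arbitrary direction into $\mathcal{T}(U)$ at one point, so neither repairs this. (Separately, your assumption that the shared coordinate can be taken dominant, $U_{ik}^2,U_{i'k}^2\ge 1/d$, does not follow from case (a), which only guarantees $\epsilon_0$-largeness; but that alone would merely worsen $\gamma$ by a polynomial factor.) The paper closes precisely this hole with a different mechanism (Case 3 of Lemma~\ref{lem:Problem2_case2}): if $u^{(i)}$ is concentrated on the shared coordinate $k$, then $\sum_{i'\neq j}U_{i'k}^2\approx 1$ together with $|\psi_{jk}(U)|\le\epsilon_0^5$ forces the Lagrange multiplier $\lambda_j^*(U)=\sum_{i'\neq j}h(u^{(i')},u^{(j)})$ of the \emph{other} vector to be $\approx 1$, whence $\sum_{k'}\psi_{jk'}(U)\le -1+d^2\epsilon_0^2$ and some diagonal entry satisfies $\psi_{jk'}(U)\lesssim -1/d$; the negative-curvature direction is then built entirely inside block $j$ on coordinates $k$ and $k'$, so the curvature comes from a large negative \emph{diagonal} entry of $\mathfrak{M}$ in Eq.~(\ref{frakM_2}), not from the cross term. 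Your construction does cover the configurations where both sharing vectors have at least two large coordinates (the paper's Cases 1 and 2), but without this third argument the dichotomy is incomplete and the proof does not go through.
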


Again, in order to prove this theorem, we follow the same strategy: we consider the transformed version
Eq.\ref{problem2_transformed}. and first prove the following lemmas for points around saddle point and local minimum respectively. We choose 
\begin{equation}\label{choice_2}
\epsilon_0 = (10d)^{-6}
, ~~\epsilon= 2\epsilon_0^6, ~~\delta = 2d\epsilon_0, ~~\gamma=\epsilon_0^4/4, ~~\BigC(u) = \{ k ~|  ~|u_k| >  \epsilon_0\}
\end{equation}
Where by intuition, $\BigC(u)$ is the set of coordinates whose value is relative large.

\begin{lemma}\label{lem:Problem2_case2}
Under the choice of parameters in Eq.(\ref{choice_2}),
suppose $\|\chi(U)\| \le \epsilon$, and there exists $(i,j) \in [d]^2$ so that $\BigC(u^{(i)})
\cap \BigC(u^{(j)}) \neq \emptyset$. Then,
there exists $\hat{v} \in \mathcal{T}(U)$ and $\|\hat{v}\| = 1$, so that
$\hat{v}^T  \mathfrak{M}(U) \hat{v}  \le -\gamma$.
\end{lemma}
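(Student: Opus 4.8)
The plan is to exploit the block structure of the Hessian of the Lagrangian. By \eqref{frakM_2}, $\mathfrak{M}(U)$ is block diagonal with respect to the coordinate index, since $[\mathfrak{M}(U)]_{ik,i'k'}=0$ whenever $k\neq k'$. Hence for any $\hat v$ the quadratic form splits as $\hat v^\top \mathfrak{M}(U)\hat v=\sum_{k'}\big(2\sum_a \psi_{ak'}(U)\,\hat v_{ak'}^2+8\sum_{a<b}U_{ak'}U_{bk'}\hat v_{ak'}\hat v_{bk'}\big)$. I would restrict $\hat v$ to be supported on the two colliding blocks $i$ and $j$, so that only the pair $(i,j)$ contributes cross terms, and note that $\hat v\in\mathcal{T}(U)$ is exactly the two linear constraints $\langle \hat v^{(i)},u^{(i)}\rangle=0$ and $\langle \hat v^{(j)},u^{(j)}\rangle=0$. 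First I would record the consequence of the small-gradient hypothesis: by \eqref{chi_2}, $|U_{ak'}\psi_{ak'}(U)|=\tfrac12|[\chi(U)]_{ak'}|\le\tfrac12\|\chi(U)\|\le\tfrac12\epsilon$, so on every large coordinate $k'\in\BigC(u^{(a)})$ (where $|U_{ak'}|>\epsilon_0$) we get $|\psi_{ak'}(U)|\le \epsilon/(2\epsilon_0)=\epsilon_0^{5}$, negligible compared with $\gamma=\epsilon_0^4/4$ from \eqref{choice_2}.

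The core mechanism is the $2\times2$ principal submatrix of $\mathfrak{M}(U)$ indexed by $(i,k)$ and $(j,k)$, where $k\in\BigC(u^{(i)})\cap\BigC(u^{(j)})$ is the shared large coordinate: it equals $\left(\begin{smallmatrix}2\psi_{ik}&4U_{ik}U_{jk}\\4U_{ik}U_{jk}&2\psi_{jk}\end{smallmatrix}\right)$, whose smallest eigenvalue is at most $\psi_{ik}+\psi_{jk}-4|U_{ik}U_{jk}|\le 2\epsilon_0^5-4\epsilon_0^2<0$, because $|U_{ik}|,|U_{jk}|>\epsilon_0$ force the off-diagonal to dominate the tiny diagonal. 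The associated ambient eigenvector $w$, placing weight $\pm1/\sqrt2$ on $(i,k)$ and $(j,k)$, is unit-norm but generally leaves the constraint manifold. I would obtain a genuine tangent vector by projecting each block onto the orthogonal complement of its constraint gradient, $\hat v^{(a)}\propto w^{(a)}-\langle w^{(a)},u^{(a)}\rangle u^{(a)}$, and then argue—using $\|\mathfrak{M}(U)\|=\mathrm{poly}(d)$, the bound $|\psi_{ak'}|\le\epsilon_0^5$ on large coordinates, and the explicit form of the correction—that the negative curvature survives the projection up to lower-order terms, giving $\hat v^\top \mathfrak{M}(U)\hat v\le-\gamma$ after normalization.

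The hard part is exactly this projection step, which degenerates when $u^{(i)}$ or $u^{(j)}$ is highly concentrated on $k$ (that is, $U_{ik}^2$ close to $1$): there $w^{(i)}$ is nearly parallel to $u^{(i)}$, the projected block $e_k-U_{ik}u^{(i)}$ has norm $\sqrt{1-U_{ik}^2}\to0$, and the off-diagonal curvature is suppressed by the factor $(1-U_{ik}^2)(1-U_{jk}^2)$. I would handle this regime by a complementary, diagonal-driven argument. When $u^{(i)}\approx e_k$ one checks $\psi_{ik}(U)\approx0$, while the collision with $u^{(j)}$ contributes $U_{jk'}^2-h(u^{(j)},u^{(i)})\approx-1$ to $\psi_{ik'}(U)$ for off-$k$ coordinates, forcing $\sum_{k'}\psi_{ik'}(U)$ to be strictly negative; hence some coordinate $k^\ast\neq k$ has $\psi_{ik^\ast}(U)\le-\Omega(1/d)$, and taking $\hat v^{(i)}=e_{k^\ast}$ (automatically orthogonal to $u^{(i)}$) makes the diagonal term $2\psi_{ik^\ast}(U)$ the dominant negative contribution. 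The remaining work is to fix the concentration threshold so the two regimes overlap, to control the contributions of the other components $u^{(l)}$ to $\psi_{ik'}(U)$ in the intermediate regime, and—if several pairs collide—to select the pair and coordinate for which these estimates close; combining the two regimes then produces a tangent $\hat v$ with $\hat v^\top\mathfrak{M}(U)\hat v\le-\gamma$ in all cases.
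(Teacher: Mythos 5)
Your mechanism for the non-degenerate regime---extracting negative curvature from the $2\times 2$ principal block $\bigl(\begin{smallmatrix}2\psi_{ik} & 4U_{ik}U_{jk}\\ 4U_{ik}U_{jk} & 2\psi_{jk}\end{smallmatrix}\bigr)$ and projecting its eigenvector onto the tangent space---is a legitimate alternative to the paper's argument, which instead builds \emph{exact} tangent ``rotation'' vectors from pairs of large coordinates (the paper splits into: the two vectors share at least two large coordinates; they share exactly one but each has another; one vector has a single large coordinate) and therefore never needs a projection. Be aware, though, that your ``up to lower-order terms'' claim hides real work: after projection the quadratic form picks up diagonal contributions $2\psi_{il}U_{ik}^2U_{il}^2$ over the small coordinates $l\notin\BigC(u^{(i)})$, and these $\psi_{il}$ are \emph{not} controlled by the gradient bound (they can be of order $d$); they are harmless only because of the weight $U_{ik}^2U_{il}^2\le \epsilon_0^2U_{ik}^2$, and the bookkeeping shows the negative term $4|U_{ik}U_{jk}|(1-U_{ik}^2)(1-U_{jk}^2)$ dominates the pollution $O\bigl(d\epsilon_0^2(U_{ik}^2+U_{jk}^2)\bigr)$ only outside the regime where $U_{ik}^2$ or $U_{jk}^2$ is within $O(d\epsilon_0)$ of $1$; so the fallback regime is unavoidable, as you anticipated.

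The genuine gap is in that fallback: you run the diagonal argument on the concentrated vector's own $\psi_i$, claiming that when $u^{(i)}\approx e_k$ the collision contributes $U_{jk'}^2-h(u^{(j)},u^{(i)})\approx -1$ to $\psi_{ik'}(U)$, hence $\sum_{k'}\psi_{ik'}(U)<0$. This is false in general. Since $h(u^{(j)},u^{(i)})=\sum_l U_{il}^2U_{jl}^2\approx U_{jk}^2$ when $u^{(i)}\approx e_k$, that contribution is $\approx U_{jk'}^2-U_{jk}^2$, and it is $\approx -1$ only if $u^{(j)}$ is \emph{also} concentrated on $k$; the hypothesis only gives $|U_{jk}|>\epsilon_0$. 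Concretely, $\sum_{k'}\psi_{ik'}(U)=(d-1)-d\sum_{j'\ne i}h(u^{(j')},u^{(i)})\approx(d-1)-d\sum_{j'\ne i}U_{j'k}^2$, which is strictly \emph{positive} when the other components place only threshold mass $\sim\epsilon_0^2$ on coordinate $k$---and this is exactly a configuration where your projection mechanism also degenerates, so neither of your two regimes covers it. The repair is to swap the roles of $i$ and $j$, which is what the paper's third case does: the gradient condition at the shared large coordinate of the \emph{partner} gives $|\psi_{jk}(U)|\le\epsilon_0^5$, hence $\sum_{i'\ne j}h(u^{(i')},u^{(j)})\ge\sum_{i'\ne j}U_{i'k}^2-\epsilon_0^5\ge U_{ik}^2-\epsilon_0^5\approx 1$, so $\sum_{k'}\psi_{jk'}(U)\le-1+O(d^2\epsilon_0)$; some $k^*$ then has $\psi_{jk^*}\le-1/d+O(d\epsilon_0)$, necessarily with $|U_{jk^*}|\le\epsilon_0$, and the exact tangent vector supported on block $j$ with $(v_{jk},v_{jk^*})\propto(U_{jk^*},-U_{jk})$ yields a quadratic form $\le-\Omega(\epsilon_0^2/d)\le-\gamma$. (Relatedly, your $\hat v^{(i)}=e_{k^*}$ is not ``automatically orthogonal'' to $u^{(i)}$, since $U_{ik^*}$ is small but generally nonzero; even in the doubly-concentrated sub-case you need this rotation trick, or a projection, to land exactly in $\mathcal{T}(U)$.)
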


\begin{proof}
	Again, since $\|\chi(x)\| \le \epsilon = 2\epsilon_0^6$, by Eq.(\ref{chi_2}), we have for each $i \in [d]$, $|[\chi(x)]_{ik}|  = 2| U_{ik}  \psi_{ik}(U)|\le 2\epsilon_0^6$.
Therefore, have:
\begin{equation}\label{pp_2}
	\forall k \in \BigC(u^{(i)}), \quad \quad \quad |\psi_{ik}(U)| \le \epsilon^5_0
\end{equation}

Then, we prove this lemma by dividing it into three cases. Note in order to prove that there exists $\hat{v} \in \mathcal{T}(U)$ and $\|\hat{v}\| = 1$, so that
$\hat{v}^T  \mathfrak{M}(U) \hat{v}  \le -\gamma$; it suffices to find a vector $v \in \mathcal{T}(U)$ and $\|v\| \le 1$, so that
$v^T \mathfrak{M}(U) v  \le -\gamma$.

\paragraph{Case 1}: $|\BigC(u^{(i)})| \ge 2$, $|\BigC(u^{(j)})|\ge 2$, and 
$|\BigC(u^{(i)})\cap \BigC(u^{(j)})| \ge 2$.

WLOG, assume $\{1, 2\} \in \BigC(u^{(i)})\cap \BigC(u^{(j)})$, choose $v$ to be $v_{i1} = \frac{U_{i2}}{4}$, $v_{i2} = -\frac{U_{i1}}{4}$, $v_{j1} = \frac{U_{j2}}4$ and $v_{j2} = - \frac{U_{j1}}4$. All other entries of $v$ are zero.
Clearly $v \in \mathcal{T}(U)$, and $\|v\|\le 1$. On the other hand, we know $\mathfrak{M}(U)$ restricted to these 4 coordinates $(i1, i2, j1, j2)$ is

\begin{equation}
\left(\begin{array}{cccc}
2\psi_{i1}(U) & 0 & 4U_{i1}U_{j1} & 0 \\ 
0 & 2\psi_{i2}(U) & 0 &  4U_{i2}U_{j2} \\ 
 4U_{i1}U_{j1} & 0 & 2\psi_{j1}(U) & 0 \\ 
0 &  4U_{i2}U_{j2} & 0 & 2\psi_{j2}(U)
\end{array} \right)
\end{equation}
By Eq.(\ref{pp_2}), we know all diagonal entries are $\le 2\epsilon_0^5$. 

If $U_{i1}U_{j1}U_{i2}U_{j2}$ is negative, we have the
quadratic form:
\begin{align}
	v^T\mathfrak{M}(U) v = & U_{i1}U_{j1}U_{i2}U_{j2}+\frac{1}{8}[U_{i2}^2\psi_{i1}(U)+
U_{i1}^2\psi_{i2}(U)
+U_{j2}^2\psi_{j1}(U)+
U_{j1}^2\psi_{j2}(U)] \nonumber \\
\le & -\epsilon_0^4 + \epsilon_0^5 \le -\frac{1}{4}\epsilon^4_0 = -\gamma
\end{align}
If $U_{i1}U_{j1}U_{i2}U_{j2}$ is positive we just swap the sign of the first two coordinates $v_{i1} = -\frac{U_{i2}}2$, $v_{i2} = \frac{U_{i1}}2$ and the above argument would still holds.

\paragraph{Case 2}: $|\BigC(u^{(i)})| \ge 2$, $|\BigC(u^{(j)})|\ge 2$, and 
$|\BigC(u^{(i)})\cap \BigC(u^{(j)})| = 1$.

WLOG, assume $\{1, 2\} \in \BigC(u^{(i)})$ and $ \{1, 3\}\in \BigC(u^{(j)})$, choose $v$ to be 
$v_{i1} = \frac{U_{i2}}{4}$, $v_{i2} = -\frac{U_{i1}}{4}$, $v_{j1} = \frac{U_{j3}}{4}$ and $v_{j3} = - \frac{U_{j1}}{4}$.
All other entries of $v$ are zero.
Clearly $v \in \mathcal{T}(U)$ and $\|v\|\le 1$. On the other hand, we know $\mathfrak{M}(U)$ restricted to these 4 coordinates $(i1, i2, j1, j3)$ is

\begin{equation}
\left(\begin{array}{cccc}
2\psi_{i1}(U) & 0 & 4U_{i1}U_{j1} & 0 \\ 
0 & 2\psi_{i2}(U) & 0 & 0 \\ 
4U_{i1}U_{j1} & 0 & 2\psi_{j1}(U) & 0 \\ 
0 & 0 & 0 & 2\psi_{j3}(U)
\end{array} \right)
\end{equation}
By Eq.(\ref{pp_2}), we know all diagonal entries are $\le 2\epsilon_0^5$. 
If $U_{i1}U_{j1}U_{i2}U_{j3}$ is negative, we have the
quadratic form:
\begin{align}
	v^T\mathfrak{M}(U) v = & \frac{1}{2}U_{i1}U_{j1}U_{i2}U_{j3}+\frac{1}{8}[U_{i2}^2\psi_{i1}(U)+
U_{i1}^2\psi_{i2}(U)
+U_{j3}^2\psi_{j1}(U)+
U_{j1}^2\psi_{j3}(U)] \nonumber \\
\le & -\frac{1}{2}\epsilon_0^4 + \epsilon_0^5 \le -\frac{1}{4}\epsilon^4_0 = -\gamma
\end{align}
If $U_{i1}U_{j1}U_{i2}U_{j3}$ is positive we just swap the sign of the first two coordinates $v_{i1} = -\frac{U_{i2}}2$, $v_{i2} = \frac{U_{i1}}2$ and the above argument would still holds.

\paragraph{Case 3}: Either $|\BigC(u^{(i)})| =1 $ or $|\BigC(u^{(j)})| =1 $.

WLOG, suppose $|\BigC(u^{(i)})| =1$, and $\{1\}= \BigC(u^{(i)}) $, we know:
\begin{equation}
	| (u^{(i)}_1)^2 -1| \le (d-1)\epsilon_0^2
\end{equation}
On the other hand, since $\BigC(u^{(i)})\cap \BigC(u^{(j)})\neq \emptyset$, 
we have $\BigC(u^{(i)})\cap \BigC(u^{(j)}) = \{1\}$, and thus:
\begin{equation}
|\psi_{j1}(U)| = |\sum_{i': i'\neq j} U^2_{i'1} -  \sum_{i':i'\neq j} h (u^{(i')}, u^{(j)} ) |
\le \epsilon_0^5
\end{equation}
Therefore, we have:
\begin{align}
\sum_{i':i'\neq j} h (u^{(i')}, u^{(j)} ) \ge \sum_{i': i'\neq j} U^2_{i'1} - \epsilon_0^5
\ge U^2_{i1} - \epsilon_0^5 \ge 1-d\epsilon_0^2
\end{align}
and
\begin{align}
	\sum_{k=1}^d \psi_{jk}(U) = &\sum_{i': i'\neq j}\sum^d_{k=1} U^2_{i'k} -  d\sum_{i':i'\neq j} h (u^{(i')}, u^{(j)} ) \nonumber \\
	\le & d-1 - d(1-d\epsilon_0^2)  = -1 + d^2 \epsilon_0^2
\end{align}
Thus, we know, there must exist some $ k' \in [d]$, so that $\psi_{jk'}(U) \le -\frac{1}{d}
+ d\epsilon_0^2$. 
This means we have ``large'' negative entry on the diagonal of $\mathfrak{M}$. 
Since $|\psi_{j1}(U)| \le \epsilon_0^5$, we know $k'\neq 1$. WLOG, suppose $k'=2$, we have $|\psi_{j2}(U)| > \epsilon_0^5$, thus $|U_{j2}| \le \epsilon_0$.

Choose $v$ to be 
$v_{j1} = \frac{U_{j2}}{2}$, $v_{j2} = -\frac{U_{j1}}{2}$.
All other entries of $v$ are zero.
Clearly $v \in \mathcal{T}(U)$ and $\|v\|\le 1$. On the other hand, we know $\mathfrak{M}(U)$ restricted to these 2 coordinates $(j1, j2)$ is
\begin{equation}
\left(\begin{array}{cc}
2\psi_{j1}(U) & 0 \\
0 & 2\psi_{j2}(U) \\
\end{array} \right)
\end{equation}
We know $|U_{j1}| > \epsilon_0$, $|U_{j2}| \le \epsilon_0$, $|\psi_{j1}(U)|
\le \epsilon_0^5$, and $\psi_{j2}(U) \le -\frac{1}{d} + d\epsilon_0^2$. 
Thus:
\begin{align}
	v^T\mathfrak{M}(U) v = & \frac{1}{2}\psi_{j1}(U)U_{j2}^2+ \frac{1}{2}\psi_{j2}(U)U_{j1}^2 \nonumber \\
\le & \epsilon_0^7 - (\frac{1}{d} - d\epsilon_0^2)\epsilon_0^2 
\le -\frac{1}{2d}\epsilon_0^2  \le -\gamma
\end{align}
Since by our choice of $v$, we have $\|v\|\le 1$, we can choose $\hat{v} = v/\|v\|$, and immediately have $\hat{v} \in \mathcal{T}(U)$ and $\|\hat{v}\| = 1$, and
$\hat{v}^T  \mathfrak{M}(U) \hat{v}  \le -\gamma$.
\end{proof}

\begin{lemma}\label{lem:Problem2_case1}
Under the choice of parameters in Eq.(\ref{choice_2}),
suppose $\|\chi(U)\| \le \epsilon$, and for any $(i,j) \in [d]^2$ we have $\BigC(u^{(i)})
\cap \BigC(u^{(j)}) = \emptyset$. Then,
there is a local minimum $U^\star$ such that $\|U-U^\star\| \le \delta$,
and for 
all $U'$ in the $2\delta$ neighborhood of $U^\star$, we have $\hat{v}^T  \mathfrak{M}(U') \hat{v}  \ge 1$ for all $\hat{v} \in \mathcal{T}(U')$, $\|\hat{v}\| = 1$
\end{lemma}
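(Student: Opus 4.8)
The plan is to exploit the disjointness hypothesis to pin down the exact combinatorial structure of $U$, read off the nearby local minimum explicitly as a signed permutation matrix, and then run a Hessian perturbation argument entirely parallel to the one behind Lemma~\ref{lem:Problem1_case1}. The conceptual heart is a counting step; the rest is bookkeeping against the tiny parameters in (\ref{choice_2}).

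First I would show each row $u^{(i)}$ has exactly one large coordinate. Since $\|u^{(i)}\|^2 = 1$ while $\sum_k U_{ik}^2 \le d\epsilon_0^2 < 1$ would follow from $\BigC(u^{(i)}) = \emptyset$, every $\BigC(u^{(i)})$ is non-empty. The sets $\{\BigC(u^{(i)})\}_{i\in[d]}$ are pairwise disjoint subsets of $[d]$ (reading the hypothesis for distinct $i\ne j$), so $\sum_i |\BigC(u^{(i)})| \le d$; combined with non-emptiness this forces $|\BigC(u^{(i)})| = 1$ for every $i$, and the unique elements $\pi(i) := \BigC(u^{(i)})$ form a permutation of $[d]$. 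Because the off-diagonal entries of row $i$ are all at most $\epsilon_0$ in magnitude, $U_{i\pi(i)}^2 = 1 - \sum_{k\ne\pi(i)} U_{ik}^2 \ge 1 - (d-1)\epsilon_0^2$, so $|U_{i\pi(i)}|$ sits within $O(d\epsilon_0^2)$ of $1$.

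Then I would set $U^\star_{ik} = \kappa_i\,\delta_{k,\pi(i)}$ with $\kappa_i = \sign(U_{i\pi(i)})$; this is one of the $2^d d!$ signed permutations of the basis, i.e.\ the transformed version of $\pm a_{\pi(i)}$. Summing per-row contributions gives $\|U - U^\star\|^2 \le d\big[(d\epsilon_0^2)^2 + (d-1)\epsilon_0^2\big] \le 2d^2\epsilon_0^2$, hence $\|U - U^\star\| \le \delta$. Evaluating (\ref{frakM_2}) at $U^\star$ yields $h(u^{(j)\star},u^{(i)\star}) = \delta_{ij}$ and $\psi_{ik}(U^\star) = \mathbb{I}\{k\ne\pi(i)\}$, so $\mathfrak{M}(U^\star)$ is diagonal with a $2$ on each coordinate $(i,k)$ with $k\ne\pi(i)$ and a $0$ on each $(i,\pi(i))$; the off-diagonal terms $4U^\star_{i'k}U^\star_{ik}$ vanish because $\pi(i)\ne\pi(i')$ for $i\ne i'$. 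Since the normal space $\mathcal{T}^c(U^\star)$ is exactly the span of the coordinate vectors $e_{(i,\pi(i))}$, this reads $\mathfrak{M}(U^\star) = 2P_{\mathcal{T}(U^\star)}$, positive definite on the tangent space, so Theorem~\ref{thm::second_sufficient} certifies $U^\star$ as a strict local minimum.

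Finally, for any $U'$ in the $2\delta$-neighborhood and any unit $\hat v\in\mathcal{T}(U')$ I would write $\hat v^T\mathfrak{M}(U')\hat v \ge \hat v^T\mathfrak{M}(U^\star)\hat v - \|\mathfrak{M}(U')-\mathfrak{M}(U^\star)\|$. Using $\mathfrak{M}(U^\star)=2P_{\mathcal{T}(U^\star)}$ together with Lemma~\ref{lem::normal} (with $1/R^2 = \sum_i \beta_i^2/\alpha_c^2 = d$, since each constraint is $2$-smooth and RLICQ holds with constant $2$) gives $\hat v^T\mathfrak{M}(U^\star)\hat v = 2\big(1-\|P_{\mathcal{T}^c(U^\star)}\hat v\|^2\big) \ge 2 - O(d^3\epsilon_0^2)$, and the Lipschitzness of $\mathfrak{M}$ bounds $\|\mathfrak{M}(U')-\mathfrak{M}(U^\star)\| \le \mathrm{poly}(d)\,\delta$. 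With the choices in (\ref{choice_2}) every error term is far below $1$, so $\hat v^T\mathfrak{M}(U')\hat v \ge 1$ and $\alpha = 1$. The counting argument does all the real work cleanly; the main obstacle I expect is this last step, where I must carefully absorb the mismatch between $\mathcal{T}(U')$ and $\mathcal{T}(U^\star)$ through Lemma~\ref{lem::normal} and verify that each $\mathrm{poly}(d)$ factor stays dominated by $\epsilon_0$ and $\delta$, exactly as in the warm-up Lemma~\ref{lem:Problem1_case1}.
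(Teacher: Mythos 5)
Your proposal is correct and follows essentially the same route as the paper's proof: identify that the disjoint $\BigC$ sets force a signed-permutation structure, take $U^\star$ to be that signed permutation, verify $\|U-U^\star\|\le\delta$ and that $\mathfrak{M}(U^\star)$ is $2$ on tangent directions and $0$ on normal directions (hence a strict local minimum by Theorem~\ref{thm::second_sufficient}), then extend to the $2\delta$-neighborhood via Lemma~\ref{lem::normal} plus an entry-wise Lipschitz bound on $\mathfrak{M}$. If anything, you are slightly more careful than the paper, which hides the counting step behind a ``WLOG'' and drops the $1/R^2=d$ factor when invoking Lemma~\ref{lem::normal}; both refinements are harmless given the parameter choices in Eq.~(\ref{choice_2}).
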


\begin{proof}
WLOG, we assume $\BigC(u^{(i)}) = \{i\}$ for $i=1, \cdots, d$. Then, we immediately have:
\begin{equation}
	|u^{(i)}_j| \le \epsilon_0, \quad\quad | (u^{(i)}_i)^2 -1| \le (d-1)\epsilon_0^2, \quad\quad
	\forall (i, j)\in[d]^2, j\neq i
\end{equation} 
Then $u^{(i)}_i \ge \sqrt{1-d\epsilon_0^2}$ or $u^{(i)}_i \le -\sqrt{1-d\epsilon_0^2}$.
Which means $u^{(i)}_i$ is either close to $1$ or close to $-1$. By symmetry, we know WLOG, 
we can assume the case $u^{(i)}_i \ge \sqrt{1-d\epsilon_0^2}$ for all $i\in[d]$.

Let $V\in \mathbb{R}^{d^2}$ be the concatenation of $\{e_1, e_2, \cdots, e_d\}$, then we have:
\begin{equation}
	\|U-V\|^2 = \sum_{i=1}^d \|u^{(i)} - e_i\|^2\le 2 d^2 \epsilon_0^2 \le \delta^2
\end{equation}

Next, we show $V$ is a local minimum. According to Eq.\ref{frakM_2}, we know $\mathfrak{M}(V)$ is a diagonal matrix with $d^2$ entries: 
\begin{align}
	[\mathfrak{M}(V)]_{ik,ik} = 2\psi_{ik}(V)  = 2\sum_{j: j\neq i} [V_{jk}^2-\sum_{l=1}^d V_{il}^2 V_{jl}^2] =
	\begin{cases}
		2  &\mbox{~if~} i\neq k\\
		0  &\mbox{~if~} i=k
	\end{cases}
\end{align}
We know the unit vector in the direction that corresponds to $[\mathfrak{M}(V)]_{ii,ii}$ is 
not in the tangent space $\mathcal{T}(V)$ for all $i\in[d]$. Therefore, for any $v \in \mathcal{T}(V)$, we have
\begin{align}
v^T\mathfrak{M}(e_1) v \ge 2 \|v\|^2 >0 \quad \quad 
\text{for all~} v\in \mathcal{T}(V), v\neq 0
\end{align}
Which by Theorem \ref{thm::second_sufficient} means $V$ is a local minimum.

Finally, 
denote $\mathcal{T}_V = \mathcal{T}(V)$ be the tangent space of constraint manifold at $V$.
We know for all $U'$ in the $2\delta$ neighborhood of $V$, 
and for all $\hat{v} \in \mathcal{T}(x')$, $\|\hat{v}\| = 1$:
\begin{align}
 \hat{v}^T  \mathfrak{M}(U') \hat{v}  
 \ge &  \hat{v}^T  \mathfrak{M}(V) \hat{v}  - 
 |\hat{v}^T  \mathfrak{M}(V) \hat{v} - \hat{v}^T  \mathfrak{M}(U') \hat{v} | \nonumber\\
 = & 2\|P_{\mathcal{T}_V}\hat{v} \|^2 
 - \|\mathfrak{M}(V) - \mathfrak{M}(U')\|\|\hat{v}\|^2 \nonumber \\
 = & 2 - 2\|P_{\mathcal{T}^c_V}\hat{v} \|^2 - \|\mathfrak{M}(V) - \mathfrak{M}(U')\|
\end{align}
By lemma \ref{lem::normal}, we know $\|P_{\mathcal{T}^c_V}\hat{v} \|^2
\le \|U'-V\|^2 \le 4\delta^2$. By Eq.(\ref{frakM_2}), we have:
\begin{align}
	&\|\mathfrak{M}(V) - \mathfrak{M}(U')\| \le  \|\mathfrak{M}(V) - \mathfrak{M}(U')\| 
	\le \sum_{(i,j,k)} |[\mathfrak{M}(V)]_{ik,jk} - [\mathfrak{M}(U')]_{ik,jk}| \le  100d^3\delta
\end{align}
In conclusion, we have $\hat{v}^T  \mathfrak{M}(U') \hat{v} \ge 2- 8\delta^2-100d^3\delta\ge 1$
which finishs the proof.
\end{proof}

Finally, we are ready to prove Theorem \ref{thm:problem_2_strict_saddle}.
\begin{proof}[Proof of Theorem \ref{thm:problem_2_strict_saddle}]

Similarly, $(\alpha,\gamma, \epsilon,\delta)$-\name immediately follows from Lemma \ref{lem:Problem2_case2} and Lemma \ref{lem:Problem2_case1}.

The only thing remains to show is that Optimization problem (\ref{eq:hardprob}) has exactly $2^d \cdot d!$ local minimum that corresponds to permutation and sign flips of $a_i$'s.
This can be easily proved by the same argument as in the proof of Theorem \ref{thm:problem_1_strict_saddle}.
\end{proof}

\subsection{Extending to Tensors of Different Order}

\label{app:tensorextension}

In this section we show how to generalize our algorithm to tensors of different orders. As a $8^\tha$ order tensor (and more generally, $4p^\tha$ order tensor for $p \in \mathcal{N}^+$) can always be considered to be a $4^\tha$ order tensor with components $a_i^\otimes a_i$ ( $a_i^{\otimes p}$ in general), so it is trivial to generalize our algorithm to $8^\tha$ order or any $4p^\tha$ order.

For tensors of other orders, we need to apply some transformation. As a concrete example, we show how to transform an orthogonal 3rd order tensor into an orthogonal $4^\tha$ order tensor.

We first need to define a few notations. For third order tensors $A,B\in \R^{d^3}$, we define $(A\otimes B)_{i_1,i_2,...,i_6} = A_{i_1,i_2,i_3}B_{i_4,i_5,i_6} (i_1,...,i_6\in [d])$. We also define the {\em partial trace} operation that maps a $6$-th order tensor $T\in \R^{d^6}$ to a $4$-th order tensor in $\R^{d^4}$:
$$
ptrace(T)_{i_1,i_2,i_3,i_4} = \sum_{i=1}^d T(i, i_1, i_2, i, i_3, i_4).
$$
Basically, the operation views the tensor as a $d^3\times d^3$ matrix with $d^2\times d^2$ $d\times d$ matrix blocks, then takes the trace of each matrix block. Now given a random variable $X \in \R^{d^3}$ whose expectation is an orthogonal third order tensor, we can use these operations to construct an orthogonal $4$-th order tensor:

\begin{lemma}
Suppose the expectation of random variable $X\in \R^{d^3}$ is an orthogonal 3rd order tensor:
$$
\E[X] = \sum_{i=1}^d a_i^{\otimes 3},
$$
where $a_i$'s are orthonormal vectors. Let $X'$ be an independent sample of $X$, then we know
$$
\E[ptrace(X\otimes X')] = \sum_{i=1}^d a_i^{\otimes 4}.
$$
In other words, we can construct random samples whose expectation is equal to a 4-th order orthogonal tensor.
\end{lemma}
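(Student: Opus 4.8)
The plan is to exploit the fact that the map $(X,X')\mapsto \mathrm{ptrace}(X\otimes X')$ is \emph{bilinear}: the tensor product $\otimes$ is bilinear in its two arguments, and $\mathrm{ptrace}$ is a linear contraction, so their composition is bilinear. Combined with the independence of $X$ and $X'$, this lets me pull the expectation all the way inside. Concretely, the first step is to argue
\[
\E[\mathrm{ptrace}(X\otimes X')] = \mathrm{ptrace}\big(\E[X\otimes X']\big) = \mathrm{ptrace}\big(\E[X]\otimes \E[X']\big),
\]
where the first equality is linearity of expectation applied to the fixed linear map $\mathrm{ptrace}$, and the second is independence (which gives $\E[X\otimes X']=\E[X]\otimes\E[X']$, since each entry of $X\otimes X'$ is a product of one coordinate of $X$ with one coordinate of the independent copy $X'$). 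Substituting the assumed orthogonal decomposition then reduces the problem to the deterministic identity
\[
\mathrm{ptrace}\Big(\Big(\sum_{i=1}^d a_i^{\otimes 3}\Big)\otimes\Big(\sum_{j=1}^d a_j^{\otimes 3}\Big)\Big)
= \sum_{i,j} \mathrm{ptrace}\big(a_i^{\otimes 3}\otimes a_j^{\otimes 3}\big),
\]
again by bilinearity.

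The second step is to compute a single summand directly from the definitions. Writing out the entries, $[a_i^{\otimes 3}\otimes a_j^{\otimes 3}]_{k_1,\dots,k_6} = a_i(k_1)a_i(k_2)a_i(k_3)\,a_j(k_4)a_j(k_5)a_j(k_6)$, and applying the partial-trace definition $\mathrm{ptrace}(T)_{i_1,i_2,i_3,i_4}=\sum_{k} T(k,i_1,i_2,k,i_3,i_4)$, the contracted index $k$ lands on the first slot of the $a_i$-block and the first slot of the $a_j$-block. This produces the scalar factor $\sum_{k} a_i(k)a_j(k) = \langle a_i,a_j\rangle = \delta_{ij}$ by orthonormality of the $a_i$'s, leaving
\[
\mathrm{ptrace}\big(a_i^{\otimes 3}\otimes a_j^{\otimes 3}\big)_{i_1,i_2,i_3,i_4}
= \delta_{ij}\, a_i(i_1)a_i(i_2)a_j(i_3)a_j(i_4).
\]
When $i=j$ this is exactly $[a_i^{\otimes 4}]_{i_1,i_2,i_3,i_4}$, and when $i\neq j$ it vanishes, so summing over $i,j$ collapses the double sum to $\sum_{i=1}^d a_i^{\otimes 4}$, completing the argument.

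I do not expect a genuine obstacle here: the only delicate point is bookkeeping in the partial-trace contraction, namely verifying that the summed index $k$ pairs the leading coordinate of the $a_i$-factor with the leading coordinate of the $a_j$-factor (so that the cross term becomes the inner product $\langle a_i,a_j\rangle$ rather than some uncontracted product). Thus the ``hard part'' is purely the careful index alignment in the definition of $\mathrm{ptrace}$; once the contraction is correctly identified as an inner product of the two components, orthonormality does all the remaining work and the independence/bilinearity reduction is routine.
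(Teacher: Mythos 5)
Your proposal is correct and follows essentially the same route as the paper's own proof: pull the expectation inside via linearity and independence, expand the product bilinearly, and observe that the partial-trace contraction yields the inner product $\langle a_i, a_j\rangle = \delta_{ij}$, killing the cross terms and leaving $\sum_{i=1}^d a_i^{\otimes 4}$. Your explicit index computation is just a more detailed rendering of the paper's statement $ptrace(a_i^{\otimes 3}\otimes a_j^{\otimes 3}) = \langle a_i,a_j\rangle\, a_i^{\otimes 2}\otimes a_j^{\otimes 2}$, and both are correct.
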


\begin{proof}
Since $ptrace$ and $\otimes$ are all linear operations, by linearity of expectation we know 
$$
\E[ptrace(X\otimes X')] = ptrace(\E[X]\otimes \E[X']) = ptrace((\sum_{i=1}^d a_i^{\otimes 3})\otimes (\sum_{i=1}^d a_i^{\otimes 3})).
$$
We can then expand out the product:
$$
(\sum_{i=1}^d a_i^{\otimes 3})\otimes (\sum_{i=1}^d a_i^{\otimes 3})
= \sum_{i=1}^d a_i^{\otimes 6} + \sum_{i\ne j} a_i^{\otimes 3}\otimes a_j^{\otimes 3}.
$$
For the diagonal terms, we know $ptrace(a_i^\otimes 6) = \|a_i\|^2 a_i^\otimes 4 = a_i^\otimes 4$. For the $i\ne j$ terms, we know $ptrace(a_i^{\otimes 3}\otimes a_j^{\otimes 3}) = \inner{a_i,a_j} a_i^\otimes 2\otimes a_j^\otimes 2 = 0$ (since $a_i,a_j$ are orthogonal). Therefore we must have
$$
ptrace((\sum_{i=1}^d a_i^{\otimes 3})\otimes (\sum_{i=1}^d a_i^{\otimes 3}))
=\sum_{i=1}^d ptrace(a_i^{\otimes 6}) + \sum_{i\ne j} ptrace(a_i^{\otimes 3}\otimes a_j^{\otimes 3}) = \sum_{i=1}^d a_i^{\otimes 4}.
$$
This gives the result.
\end{proof}

Using similar operations we can easily convert all odd-order tensors into order $4p (p\in \N^+)$. For tensors of order $4p+2 (p\in \N^+)$, we can simply apply the partial trace and get a tensor of order $4p$ with desirable properties. Therefore our results applies for all orders of tensors.

\chapter[Appendix for Applying Online Tensor Methods for Learning LVMs]{Appendix for Applying Online Tensor Methods for Learning Latent Variable Models}
\section{Stochastic Updates}
\label{sec:apdx_update}
After obtaining the whitening matrix, we whiten the data $G^\top_{x,A}$, $G^\top_{x,B}$ and $G^\top_{x,C}$ by linear operations to get $y^t_A$, $y^t_B$ and $y^t_C\in \mathbb{R}^{k}$:
\begin{align*}
y^t_A : = \left<G^\top_{x,A}, W \right>,
\; 
y^t_B := \left<Z_B G^\top_{x,B},W \right>,
\; 
y^t_C& : = \left<Z_C G^\top_{x,C},W \right>.
\end{align*}
where $x\in X$ and $t$ denotes the index of the online data.

The stochastic gradient descent algorithm is obtained by taking the derivative of the loss function $\frac{\partial L^t(\mathbf{v})}{\partial v_i}$:
\begin{align*}
\frac{\partial L^t(\mathbf{v})}{\partial v_i}=&
\theta\sum\limits_{j=1}^{k} \left<v_j,v_i\right>^2 v_j
- \frac{(\alpha_0+1)(\alpha_0+2)}{2} \left<v_i, y_A^t\right> \left<v_i, y_B^t\right> y_C^t
- \alpha_0^2 \left<\phi_i^t,\bar{y}_A\right>\left<\phi_i^t,\bar{y}_B^t\right>\bar{y}_C \\
&+ \frac{\alpha_0(\alpha_0+1)}{2}\left<\phi_i^t, y_A^t\right>\left<\phi_i^t, y_B^t\right>\bar{y}_C
+\frac{\alpha_0(\alpha_0+1)}{2}\left<\phi_i^t, y_A^t\right>\left<\phi_i^t, \bar{y}_B\right>y_C
\\
&+\frac{\alpha_0(\alpha_0+1)}{2}\left<\phi_i^t,\bar{y}_A\right>\left<\phi_i^t,y_B^t\right>y_C
\end{align*}
for $i \in [k]$, where $y_A^t$, $y_B^t$ and $y_C^t$ are the online whitened data points as discussed in the whitening step and $\theta$ is a constant factor that we can set.

The iterative updating equation for the stochastic gradient update is given by
\begin{equation}
\phi_i^{t+1} \leftarrow \phi_i^t - \beta^t \frac{\partial L^t}{\partial v_i}\llvert_{\phi_i^t}
\end{equation}
for $i \in [k]$, where $\beta^t$ is the learning rate, $\phi^t_i$ is the last iteration eigenvector and $\phi^t_i$ is the updated eigenvector.
We update eigenvectors through
\begin{align}
\phi_i^{t+1} \leftarrow  \phi_i^t  & - \theta\beta^t \sum\limits_{j=1}^{k} \left[\left<\phi_j^t,\phi_i^t\right>^2 \phi_j^t\right]
+ \text{shift} [ \beta^t \left<\phi_i^t, y_A^t\right> \left<\phi_i^t, y_B^t\right> y_C^t ]\label{eq:term2_1}
\end{align}

Now we shift the updating steps so that they correspond to the centered Dirichlet moment forms, i.e.,
\begin{align}
& \text{shift}[ \beta^t \left<\phi_i^t, y_A^t\right> \left<\phi_i^t, y_B^t\right> y_C^t ]
 :=
\beta^t \frac{(\alpha_0+1)(\alpha_0+2)}{2} \left<\phi_i^t,y_A^t\right> \left<\phi_i^t, y_B^t\right> y_C^t \nonumber 
\\
& +  \beta^t {\alpha_0^2}\left<\phi_i^t,\bar{y}_A\right> \left<\phi_i^t, \bar{y}_B\right>\bar{y}_C \nonumber 
- \beta^t \frac{\alpha_0(\alpha_0+1)}{2}\left<\phi_i^t, y_A^t\right>\left<\phi_i^t, y_B^t\right>\bar{y}_C \nonumber 
\\
&-  \beta^t \frac{\alpha_0(\alpha_0+1)}{2}\left<\phi_i^t, y_A^t\right>\left<\phi_i^t, \bar{y}_B\right>y_C 
-  \beta^t \frac{\alpha_0(\alpha_0+1)}{2}\left<\phi_i^t,\bar{y}_A\right>\left<\phi_i^t,y_B^t\right>y_C,
\end{align}
where $\bar{y}_A:= \mathbb{E}_t [y_A^t]$ and similarly for $\bar{y}_B$ and $\bar{y}_C$.

\section{Proof of Algorithm Correctness}
We now prove the correctness of our algorithm.

First, we compute $M_2$ as just $$\mathbb{E}_x\left[ \tilde{G}_{x,C}^\top \otimes \tilde{G}_{x,B}^\top| \Pi_A, \Pi_B, \Pi_C\right]$$
where we define
\begin{align*}
\tilde{G}_{x,B}^\top & := \mathbb{E}_x\left[G_{x,A}^\top \otimes G_{x,C}^\top\llvert \Pi_A, \Pi_C\right] \left(\mathbb{E}_x\left[G_{x,B}^\top \otimes G_{x,C}^\top \llvert \Pi_B, \Pi_C\right]\right)^\dag G_{x,B}^\top
\\
\tilde{G}_{x,C}^\top & :=\mathbb{E}_x\left[G_{x,A}^\top \otimes G_{x,B}^\top\llvert \Pi_A, \Pi_B \right]\left(\mathbb{E}_x\left[G_{x,C}^\top \otimes G_{x,B}^\top\llvert \Pi_B, \Pi_C \right]\right)^{\dagger} G_{x,C}^\top.
\end{align*}
Define $F_A$ as $F_A := \Pi_A^\top P^\top$, we obtain $M_2$ $=$ $\mathbb{E}\left[G^\top_{x,A} \otimes G^\top_{x,A}\right]$ $ =$ $ \Pi_A^\top P^\top \left(\mathbb{E}_x[\pi_x \pi_x^\top]\right) P \Pi_A$ $ =$ $ F_A\left(\mathbb{E}_x[\pi_x \pi_x^\top]\right)F_A^\top$. Note that $P$ is the community connectivity matrix defined as $P\in [0,1]^{k\times k}$.
Now that we know $M_2$, $\mathbb{E}\left[\pi_i^2\right]= \frac{\alpha_i(\alpha_i+1)}{\alpha_0(\alpha_0+1)}$, and $\mathbb{E}\left[\pi_i \pi_j\right]= \frac{\alpha_i \alpha_j}{\alpha_0(\alpha_0+1)} \forall i\neq j$, we can get the centered second order moments $\Pairs^{\community}$ as

\begin{align}
\Pairs^{\community} & := F_A \text{ diag}\left(\left[\frac{\alpha_1 \alpha_1+1}{\alpha_0(\alpha_0+1)},\ldots,\frac{\alpha_k \alpha_k+1}{\alpha_0(\alpha_0+1)}\right]\right)F_A^\top
\\
& = M_2 - \frac{\alpha_0}{\alpha_0+1} F_A \left(\hat{\alpha} \hat{\alpha}^\top - \text{ diag}\left(\hat{\alpha} \hat{\alpha}^\top	 \right)\right)F_A^\top
\\
& = \frac{1}{n_X} \sum\limits_{x\in X} Z_C G_{x,C}^\top G_{x,B} Z_B^\top  -\frac{\alpha_0}{\alpha_0+1} \left(\mu_{A} \mu_{A}^\top- \text{ diag}\left(\mu_{A} \mu_{X\rightarrow A}^\top\right) \right)
\end{align}
Thus, our whitening matrix is computed. Now, our whitened tensor is $\mathcal{T} $ is given by
\begin{align*}
\mathcal{T} & = \mathcal{T}^{\community} (W,W,W) = \frac{1}{n_X} \sum_{x}\left[(W^\top F_A \pi^{\alpha_0}_x) \otimes (W^\top F_A \pi^{\alpha_0}_x) \otimes (W^\top F_A \pi^{\alpha_0}_x)\right],
\end{align*}
where $\pi^{\alpha_0}_x$ is the centered vector so that $\Ebb[\pi_x^{\alpha_0}\otimes \pi_x^{\alpha_0}\otimes \pi_x^{\alpha_0}]$ is diagonal.
We then apply the stochastic gradient descent technique to decompose the third order moment.

\section{GPU Architecture}
\label{sec:apdx_arch}

The algorithm we propose is very amenable to parallelization and is scalable which makes it suitable to implement on processors with multiple cores in it. Our method consists of simple linear algebraic operations, thus enabling us to utilize \emph{Basic Linear Algebra Subprograms} (BLAS) routines such as BLAS I (vector operations), BLAS II (matrix-vector operations), BLAS III (matrix-matrix operations), Singular Value Decomposition (SVD),  and iterative operations such as stochastic gradient descent for tensor decomposition that can easily take advantage of Single Instruction Multiple Data (SIMD)  hardware units present in the GPUs. As such, our method is amenable to parallelization and is  ideal for GPU-based implementation.

\paragraph{Overview of code design: }From a higher level point of view, a typical GPU based computation is a three step process involving data transfer from CPU memory to GPU global memory, operations on the data now present in GPU memory and finally, the result transfer from the GPU memory back to the CPU memory. We use the CULA library for implementing the linear algebraic operations. 

\paragraph{GPU compute architecture: } The GPUs achieve massive parallelism by having hundreds of homogeneous processing cores integrated on-chip. Massive replication of these cores provides the parallelism needed by the applications that run on the GPUs. These cores, for the Nvidia GPUs, are known as \emph{CUDA cores}, where each core has fully pipelined floating-point and integer arithmetic logic units. In Nvidia's Kepler architecture based GPUs
, these CUDA cores are bunched together to form a \emph{Streaming Multiprocessor} (SMX). These SMX units act as the basic building block for Nvidia Kepler GPUs. Each GPU contains multiple SMX units where each SMX unit has 192 single-precision CUDA cores, 64 double-precision units, 32 special function units, and 32 load/store units for data movement between cores and memory.

Each SMX has L$1$, shared memory and a read-only data cache that are common to all the CUDA cores in that SMX unit. Moreover, the programmer can choose between different configurations of the shared memory and L$1$ cache. Kepler GPUs also have an L$2$ cache memory of about $1.5$MB that is common to all the on-chip SMXs. Apart from the above mentioned memories, Kepler based GPU cards come with a large DRAM memory, also known as the global memory, whose size is usually in gigabytes. This global memory is also visible to all the cores. The GPU cards usually do not exist as standalone devices. Rather they are part of a CPU based system, where the CPU and GPU interact with each other via PCI (or PCI Express) bus.

In order to program these massively parallel GPUs, Nvidia provides a framework known as \emph{CUDA} that enables the developers to write programs in languages like C, C++, and Fortran etc. A CUDA program constitutes of functions called \emph{CUDA kernels} that execute across many parallel software threads, where each thread runs on a CUDA core. Thus the GPU's performance and scalability is exploited by the simple partitioning of the algorithm into fixed sized blocks of parallel threads that run on hundreds of CUDA cores. The threads running on an SMX can synchronize and cooperate with each other via the shared memory of that SMX unit and can access the Global memory. Note that the CUDA kernels are launched by the CPU but they get executed on the GPU. Thus compute architecture of the GPU requires CPU to initiate the CUDA kernels.

CUDA enables the programming of Nvidia GPUs by exposing low level API. Apart from CUDA framework, Nvidia provides a wide variety of other tools and also supports third party libraries that can be used to program Nvidia GPUs. Since a major chunk of the scientific computing algorithms is linear algebra based, it is not surprising that the standard linear algebraic solver libraries like BLAS and \emph{Linear Algebra PACKage} (LAPACK) also have their equivalents for Nvidia GPUs in one form or another. Unlike CUDA APIs, such libraries expose APIs at a much higher-level and mask the architectural details of the underlying GPU hardware to some extent thus enabling relatively faster development time.

Considering the tradeoffs between the algorithm's computational requirements, design flexibility, execution speed and development time, we choose \emph{CULA-Dense} as our main implementation library. CULA-Dense provides GPU based implementations of the LAPACK and BLAS libraries for dense linear algebra and contains routines for systems solvers, singular value decompositions, and eigen-problems. Along with the rich set of functions that it offers, CULA provides the flexibility needed by the programmer to rapidly implement the algorithm while maintaining the performance. It hides most of the GPU architecture dependent programming details thus making it possible for rapid prototyping of GPU intensive routines.

The data transfers between the CPU memory and the GPU memory are usually explicitly initiated by CPU and are carried out via the PCI (or PCI Express) bus interconnecting the CPU and the GPU. The movement of data buffers between CPU and GPU is the most taxing in terms of time. The buffer transaction time is shown in the plot in Figure~\ref{tran_time_expt}. Newer GPUs, like Kepler based GPUs, also support useful features like GPU-GPU direct data transfers without CPU intervention.

\begin{figure}[H]
	\centering
	\psfrag{Logarithm of the buffer size divided by 8}[l]{\scriptsize{$\log\left(\frac{\text{buffer size}}{8}\right)$}}
	\psfrag{Time (in seconds)}[c]{\scriptsize{Time (s)}}
	\psfrag{CPU-GPU buffer round-trip truncation time}[c]{}
	\includegraphics[width=0.4\textwidth]{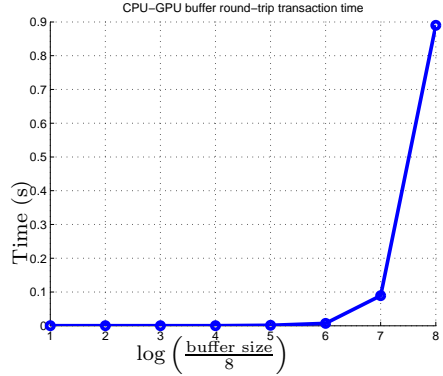}
	\caption{Experimentally measured time taken for buffer transfer between the CPU and the GPU memory in our system.}
	\label{tran_time_expt}
\end{figure}

CULA exposes two important interfaces for GPU programming namely, \emph{standard} and \emph{device}.
Using the standard interface, the developer can program without worrying about the underlying architectural details of the GPU as the standard interface takes care of all the data movements, memory allocations in the GPU and synchronization issues. This however comes at a cost.
For every standard interface function call the data is moved in and out of the GPU even if the output result of one operation is directly required by the subsequent operation.
This unnecessary movement of intermediate data can dramatically impact the performance of the program.
In order to avoid this, CULA provides the device interface.
We  use the  device interface for STGD in which the programmer is responsible for data buffer allocations in the GPU memory, the required data movements between the CPU and GPU, and operates only on the data in the GPU. Thus the subroutines of the program that are iterative in nature are good candidates for device implementation. 

\paragraph{Pre-processing and post-processing: }
The pre-processing involves matrices whose leading dimension is of the order of number of nodes. These are implemented using the CULA standard interface BLAS II and BLAS III routines.

Pre-processing requires SVD computations for the Moore-Penrose pseudoinverse calculations. We use CULA SVD routines since  these SVD operations are carried out on matrices of moderate size.
We further replaced the CULA SVD routines with more scalable SVD and pseudo inverse routines using random projections~\cite{gittens2013revisiting} to handle larger datasets such as DBLP dataset in our experiment.

After STGD, the community membership matrix estimates are obtained using BLAS III routines provided by the CULA standard interface. The matrices are then used for hypothesis testing to evaluate the algorithm against the ground truth.

\section{Results on Synthetic Datasets}
\label{sec:apdx_synth}
{\em Homophily} is an important factor in social interactions~\cite{mcpherson2001birds}; the term {\em homophily} refers to the tendency that actors in the same community interact more than across different communities. Therefore, we assume diagonal dominated community connectivity matrix $P$ with diagonal elements equal to $0.9$ and off-diagonal elements equal to $0.1$.
Note that $P$ need neither be stochastic nor symmetric. Our algorithm allows for randomly generated community connectivity matrix $P$ with support $[0,1]$. In this way, we look at general directed social ties among communities.

\begin{table}
   \centering
   \begin{tabular}{@{} llllr @{}}
      \toprule
      $n$  & $k$ & $\alpha_0$ &  Error& Time (secs)\\
      \midrule
       1e2 	&10	&	0	& 0.1200	   & 0.5\\
       1e3 	&10	&	0	&  0.1010  & 1.2\\
       1e4 	&10	&	0 	&   0.0841 & 43.2\\
       1e2 	&10	&	1	&   0.1455	& 0.5\\
       1e3 	&10	&	1	&   0.1452	& 1.2\\
       1e4 	&10	&	1 	&   0.1259	& 42.2\\
      \bottomrule
   \end{tabular}
   \caption{Synthetic simulation results for different configurations. Running time is the time taken to run to convergence.}
   \label{tab:synResult}
\end{table}

We perform experiments for both the stochastic block model ($\alpha_0=0$) and the mixed membership model. For the mixed membership model, we set the concentration parameter $\alpha_0 =1$. We note that the error is around $8\% - 14\%$ and the running times are under a minute, when $n \leq 10000$ and $n \gg k$.

The results are given in Table~\ref{tab:synResult}.
We observe that more samples result in a more accurate recovery of memberships which matches intuition and theory. Overall, our learning algorithm performs better in the stochastic block model case than in the mixed membership model case although we note that the accuracy is quite high for practical purposes. Theoretically, this is expected since smaller concentration parameter $\alpha_0$ is easier for our algorithm to learn~\cite{AnandkumarEtal:community12COLT}. Also, our algorithm is scalable to an order of magnitude more in $n$ as illustrated by experiments on real-world large-scale datasets.

\section{Comparison of Error Scores}
\label{sec:otherscores} 
Normalized Mutual Information (NMI) score~\cite{lancichinetti2009detecting} is another popular score which is defined differently for overlapping and non-overlapping community models. For non-overlapping block model, ground truth membership for node $i$ is a discrete $k$-state categorical variable $\Pi_{\text{block}} \in [k]$ and the estimated membership is a discrete $\widehat{k}$-state categorical variable $\widehat{\Pi}_{\text{block}} \in [\widehat{k}]$. 
The empirical distribution of ground truth membership categorical variable  $\Pi_{\text{block}}$ is easy to obtain. Similarly is the empirical distribution of the estimated membership categorical variable $\widehat{\Pi}_{\text{block}}$.  NMI for block model is defined as
\begin{align*}
& N_{\text{block}}(\widehat{\Pi}_{\text{block}}:\Pi_{\text{block}}) :=  \frac{H(\Pi_{\text{block}})+H(\widehat{\Pi}_{\text{block}})-H(\Pi_{\text{block}},\widehat{\Pi}_{\text{block}})}{\left(H(\Pi_{\text{block}})+H(\widehat{\Pi}_{\text{block}})\right)/2} .
\end{align*}

The NMI for overlapping communities is  a binary vector instead of a categorical variable~\cite{lancichinetti2009detecting}.  The ground truth membership for node $i$ is a binary vector of length $k$, $\mathbf{\Pi}_{{\text{mix}}}$, while the estimated membership for node $i$ is a binary vector of length $\widehat{k}$, $\mathbf{\widehat{\Pi}}_{{\text{mix}}}$. This notion coincides with one column of our membership matrices $\Pi\in\mathbb{R}^{k\times n}$ and $\widehat{\Pi}\in \mathbb{R}^{\widehat{k} \times n}$ except that our membership matrices are stochastic. In other words, we consider all the nonzero entries of $\Pi$ as 1's, then each column of our $\Pi$ is a sample for $\Pi_{{\text{mix}}}$. The $m$-th entry of this binary vector is the realization of a random variable $\Pi_{{\text{mix}}_m} = (\mathbf{\Pi}_{{\text{mix}}})_m$, whose probability distribution is
\[
P(\Pi_{{\text{mix}}_m}=1)= \frac{n_m}{n}, \quad P(\Pi_{{\text{mix}}_m}=0)= 1-\frac{n_m}{n},
\]
where $n_m$ is the number of nodes in community $m$. The same holds for $\widehat{\Pi}_{\text{mix}_m}$. The normalized conditional entropy between $\mathbf{\Pi}_{{\text{mix}}}$ and $\mathbf{\widehat{\Pi}}_{{\text{mix}}}$ is defined as
\begin{equation}\label{eqn:nmi}
H(\mathbf{\widehat{\Pi}}_{{\text{mix}}} \lvert \mathbf{{\Pi}}_{{\text{mix}}})_{\text{norm}}  :=\frac{1}{k} \sum_{j \in [k]} \min_{i \in [\widehat{k}]}
\frac{H\left(\widehat{\Pi}_{{\text{mix}}_i}  \lvert  \Pi_{{\text{mix}}_j} \right)}{H(\Pi_{{\text{mix}}_j} )}
\end{equation}
where $\Pi_{\text{mix}_j}$ denotes the $j^{th}$ entry of $\mathbf{\Pi}_{\text{mix}}$ and similarly for $\widehat{\Pi}_{\text{mix}_i}$.
The NMI for overlapping community is
\begin{align*}
& N_{\text{mix}}(\mathbf{\widehat{\Pi}}_{\text{mix}} : \mathbf{\Pi} _{\text{mix}}):= 1-\frac{1}{2}\left[H(\mathbf{\Pi}_{\text{mix}} \lvert \mathbf{\widehat{\Pi}}_{\text{mix}})_{\text{norm}}+H(\mathbf{\widehat{\Pi}}_{\text{mix}} \lvert \mathbf{\Pi}_{\text{mix}})_{\text{norm}}\right].
\end{align*}

There are two aspects in evaluating the error.
The first aspect is the $l_1$ norm error. According to Equation~\eqref{eqn:nmi},  the error function used in NMI score is  $\frac{H\left(\widehat{\Pi}_{{\text{mix}}_i}  \lvert  \Pi_{{\text{mix}}_j} \right)}{H(\Pi_{{\text{mix}}_j} )}$.
NMI is not suitable for evaluating recovery of different sized communities. In the special case of a pair of  extremely sparse and dense membership vectors, depicted in Figure~\ref{fig:NMI}, ${H(\Pi_{\text{mix}_j})}$ is the same for both the dense and the sparse vectors since they are flipped versions of each other (0s flipped to 1s and vice versa).  However, the smaller sized community (i.e. the sparser community vector),  shown in red in Figure~\ref{fig:NMI}, is significantly more difficult to recover than the larger sized community shown in blue in Figure~\ref{fig:NMI}. Although this example is an extreme scenario that is not seen in practice, it justifies the drawbacks of the NMI. Thus, NMI is not suitable for evaluating recovery of different sized communities.
\begin{figure*}[h]
   \centering
   \psfrag{dense pi1}[r]{dense $\Pi_1$ }
   \psfrag{sparse pi2}[r]{sparse $\Pi_2$ }
   \psfrag{length n vector}[c]{length $n$ membership vector}
   \psfrag{zeros}[l]{$0$}
   \psfrag{ones}[l]{$1$}
   \psfrag{dense community}[l]{\small{large sized community}}
   \psfrag{sparse community}[l]{\small{small sized community}}
   \includegraphics[width=0.68\textwidth]{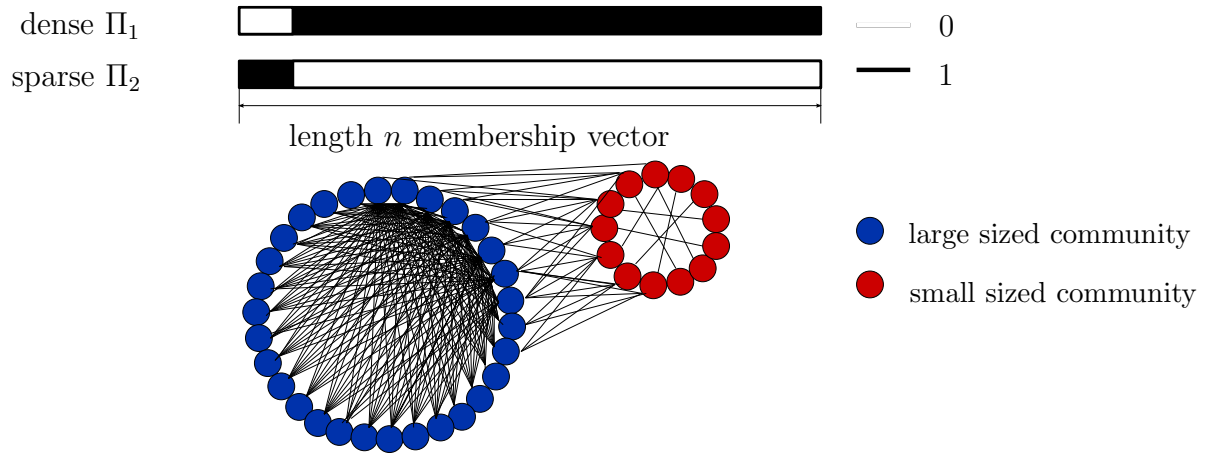}
   \caption{ A special case of a pair of extremely dense and sparse communities. Theoretically, the sparse community is more difficult to recover than the dense one. However, the NMI score penalizes both of them equally. Note that for dense $\Pi_1$, $P(\Pi_{\text{mix}_1}=0)=\frac{\text{\# of 0s in } \Pi_1}{n}$ which is equal to $P(\Pi_{\text{mix}_2}=1)=\frac{\text{\# of 1s in } \Pi_2}{n}$. Similarly, $P(\Pi_{\text{mix}_1}=1)=\frac{\text{\# of 1s in } \Pi_1}{n}$ which is equal to $P(\Pi_{\text{mix}_2}=0)=\frac{\text{\# of 0s in } \Pi_2}{n}$. Therefore, $H(\Pi_{\text{mix}_1})=H(\Pi_{\text{mix}_2})$.}
      \label{fig:NMI}
\end{figure*}
In contrast,   our error function employs a normalized $l_1$ norm error which penalizes more for larger sized communities than smaller ones.

The second aspect is the error induced by false pairings of estimated and ground-truth communities. NMI score selects only the closest estimated community through normalized conditional entropy minimization and it does not account for statistically significant dependence between an estimated community and multiple ground truth communities and vice-versa, and therefore it underestimates error.  However, our error score does not limit to a matching between the estimated and ground truth communities:  if an estimated community is found to have statistically significant correlation with multiple ground truth communities (as evaluated by the $p$-value), we penalize for the error over all such ground truth communities. Thus, our error score is a harsher measure of evaluation than NMI.
This notion of  ``soft-matching'' between ground-truth and estimated communities also enables validation of recovery of a combinatorial union of communities instead of single ones.

A number of other scores such as ``separability'', ``density'', ``cohesiveness'' and ``clustering coefficient''~\cite{yang2012defining} are non-statistical measures of faithful community recovery. 
 The scores of~\cite{yang2012defining}  intrinsically aim to evaluate the level of clustering within a community.  However our goal is to measure the accuracy of recovery of the communities and not how well-clustered the communities are. 

Banerjee and Langford~\cite{banerjee2004objective} proposed an objective evaluation criterion for clustering which use classification performance as the evaluation measure. In contrast, we look at how well the method performs in recovering the hidden communities, and we are not evaluating predictive performance. Therefore, this measure is not used in our evaluation.

Finally, we note that cophenetic correlation is another statistical score used for evaluating clustering methods, but note that it is only valid for hierarchical clustering and it is a measure of how faithfully a dendrogram preserves the pairwise distances between the original unmodeled data points~\cite{sokal1962comparison}. Hence, it is not employed in this paper.


\chapter{Appendix for Dictionary Learning via Convolutional Tensor Method}
\section{Cumulant Form}\label{appdx:momentForm}
 In~\cite{anandkumar2014tensor}, it is proved that in ICA model, the cumulant of observation $x$ is decomposed into multi-linear transform of a diagonal cumulant of $h$. Therefore, we aim to find the third order cumulant for input $x$. 
 
  As we know that the $r^\tha$ order moments for variable $x$ is defined as 
\begin{equation}
\mu_r : = \mathbb{E}[x^r]\in \mathbb{R}^{n\times n\times n}
\end{equation} 
Let us use $[\mu_3]_{i,j,k}$ to denote the $(i,j,k)^\tha$ entry of the third order moment. The relationship between $3^\tha$ order cumulant $\kappa_3$and $3^\tha$ order moment $\mu_3$is 
\begin{align}
[\kappa_3]_{i,j,k} = [\mu_3]_{i,j,k} -  [\mu_2]_{i,j}[\mu_1]_{k} -  [\mu_2]_{i,k}[\mu_1]_{j} -  [\mu_2]_{j,k}[\mu_1]_{i}+ 2 [\mu_1]_{i}[\mu_1]_{j}[\mu_1]_{k}
\end{align}

Therefore the shift tensor is in this format: 
We know that the shift term
\begin{align}
[Z]_{a,b,c} := \Ebb[x^i_{a}]\Ebb[x^i_{b}x^i_{c}] +  \Ebb[x_{b}]\Ebb[x_{a}x^i_{c}] +  \Ebb[x_{c}]\Ebb[x_{a}x_{b}] -2 \Ebb[x_{a}]\Ebb[x_{b}]\Ebb[x_{c}], \quad a,b,c \in [n]
 \end{align}

It is known from~\cite{anandkumar2014tensor} that cumulant decomposition in the 3 order tensor format is 
\begin{equation}
\Ebb[x\otimes x \otimes x] - Z = \sum_{j\in[nL]}\lambda_j^* \CCir_j^* \otimes \CCir_j^* \otimes \CCir_j^*
\end{equation}

Therefore using the Khatri-Rao product property,
\begin{equation}
\flatten(\sum_{j\in[nL]}\lambda_j^* \CCir_j^* \otimes \CCir_j^* \otimes \CCir_j^*) = \sum_{j\in [nL]}\lambda_j^* \CToep^*_j (\CToep^*_j \odot\CToep^*_j)^\top ={\mathcal{F}^*} \Lambda^* \left({\mathcal{F}^*}\odot {\mathcal{F}^*}\right)^\top
\end{equation}

Therefore the unfolded third order cumulant is decomposed as 
$\Cum ={\mathcal{F}^*} \Lambda^* \left({\mathcal{F}^*}\odot {\mathcal{F}^*}\right)^\top$.

\section{Proof for Main Theorem~\ref{theorem:main}}\label{appdx:maintheorem}
{Our optimization problem is
\begin{equation}
\min\limits_{ {\modeA}} \quad  \lVert \Cum-  {\modeA} \Lambda \left({\modeC}\odot {\modeB}\right)^\top  \rVert^2_{F}\,\,
 \mbox{s.t. }   \block_l(\modeA) 
= U \cdot\Diag(\DFT(f_{l}))\cdot U\hermconj, \ \lVert f_{l} \rVert_2^2=1,  \forall  l\in[L],
\end{equation}
where we denote $D: = \Lambda \left({\modeC}\odot {\modeB}\right)^\top$ for simplicity.  Therefore the objective is to minimize $\lVert \Cum-  {\modeA} D  \rVert^2_{F}$. Let the SVD of $D$ be $D = P\Sigma Q^\top$. Since the Frobenius norm remains invariant under orthogonal transformations and full rank diagonal matrix~\cite{eberle2003finding}, it is obtained that 
\begin{equation}
\lVert \Cum - \modeA D\rVert_F^2 
=\lVert \Cum - \modeA P\Sigma Q^\top\rVert_F^2 
=\lVert \Cum Q\Sigma^\dag - \modeA P \rVert_F^2 
=\lVert \Cum Q\Sigma^\dag P^\top - \modeA \rVert_F^2  \end{equation}}

{Therefore the optimization problem in \eqref{eqn:modeAopt}
is equivalent to \begin{equation}\label{eqn:ls}
\min\limits_{ {\modeA}}    \lVert \Cum (\left({\modeC}\odot {\modeB}\right)^\top)^\dag \Lambda^\dag-  {\modeA}    \rVert^2_{F}\,\,
 \mbox{s.t. }   \block_l(\modeA) 
= U \cdot\Diag(\DFT(f_{l}))\cdot U\hermconj, \ \lVert f_{l} \rVert_2^2=1,  \forall  l\in[L]\eeq
when  $(\modeC\odot \modeB)$ and $\Lambda$ are full column rank.}

{The full rank condition requires $nL<n^2$ or $L <n$,  and it is a reasonable assumption since otherwise the filter estimates are redundant.  Since \eqref{eqn:ls} has block constraints, it can be broken down in to solving $L$ independent sub-problems
\begin{equation}\label{eqn:newCumopt}
  \min_{f_{l}}
\left\lVert
 \block_l(\newCum)\cdot \block_l(\Lambda)^\dag - U \cdot\Diag(\DFT(f_l))\cdot U\hermconj
  \right\rVert^2_{F} \\
\quad  s.t. \quad \lVert f_l \rVert_2^2=1, \forall l\in[L].
\end{equation}}

\section{Parallel Inversion of $\mathbf{\Psi}$}
\label{appdx:parallelBlockInversion}
We propose an efficient iterative algorithm to compute $\mathbf{\Psi}^\dag$ via block matrix inversion theorem\cite{golub2012matrix}.

\begin{lemma}(Parallel Inversion of row and column stacked diagonal matrix)\label{lemma:matrix_inverse}
	Let $ J^L = \mathbf{\Psi} $ be partitioned into a block form:
	\begin{equation}
	J^L = \left[
	\begin{array}{cc}
	J^{L-1} & O \\
	R & \block_L^L(\mathbf{\Psi}) \\
	\end{array}
	\right],
	\end{equation}
	where  $O : =  \left[
	\begin{array}{c}
	\block_L^1(\mathbf{\Psi}) \\
	 \vdots\\
	 \block_L^{L-1}(\mathbf{\Psi})
	\end{array}
	\right]$, and $R : = \left[ \block_{L-1}^1(\mathbf{\Psi}),\ldots,\block_{L-1}^{L}(\mathbf{\Psi})\right]$. After inverting $\block_L^L(\mathbf{\Psi})$ which takes $O(1)$ time using $O(n)$ processors, there inverse of  $\mathbf{\Psi}$ is achieved by 
	\begin{equation}\label{eq:recursiveInverseBlock}
	\mathbf{\Psi}^\dag = \left[
	\begin{array}{ll}
	(J^{L-1}-O{\block_L^L(\mathbf{\Psi})}^{-1}R)^{-1} & -{(J^{L-1})}^{-1}O({\block_L^L(\mathbf{\Psi})}-R{(J^{L-1})}^{-1}O)^{-1} \\
	-{\block_L^L(\mathbf{\Psi})}^{-1}R(J^{L-1}-O{\block_L^L(\mathbf{\Psi})}^{-1}R)^{-1} & ({\block_L^L(\mathbf{\Psi})}-R{(J^{L-1})}^{-1}O)^{-1}
	\end{array}
	\right]
	\end{equation}
	 assuming that $J^{L-1}$ and $\block_L^{L}\mathbf{\Psi}$ are invertible. 
	 \end{lemma}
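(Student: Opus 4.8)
The plan is to recognize Eq.~\eqref{eq:recursiveInverseBlock} as the classical $2\times 2$ block matrix inversion formula (Schur-complement inversion, \cite{golub2012matrix}) specialized to the partition $A = J^{L-1}$, $B = O$, $C = R$, $D = \block_L^L(\mathbf{\Psi})$, and then to exploit the diagonal-block structure both to justify the recursion and to account for the parallel cost. First I would record the block $LDU$ factorization $J^L = \mathcal{L}\,\mathcal{D}\,\mathcal{U}$, where $\mathcal{L}$ is block-lower-unitriangular with off-diagonal block $R(J^{L-1})^{-1}$, $\mathcal{U}$ is block-upper-unitriangular with off-diagonal block $(J^{L-1})^{-1}O$, and $\mathcal{D}$ is block-diagonal with entries $J^{L-1}$ and the Schur complement $S := \block_L^L(\mathbf{\Psi}) - R(J^{L-1})^{-1}O$. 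Inverting each unitriangular factor in closed form and multiplying $\mathcal{U}^{-1}\mathcal{D}^{-1}\mathcal{L}^{-1}$ then yields exactly the four blocks of \eqref{eq:recursiveInverseBlock}, using the dual identity $(J^{L-1})^{-1} + (J^{L-1})^{-1}O\,S^{-1}R(J^{L-1})^{-1} = (J^{L-1} - O\,\block_L^L(\mathbf{\Psi})^{-1}R)^{-1}$ to rewrite the top-left block via the complementary Schur complement. This step is routine linear algebra.

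Next I would settle the well-definedness of the recursion. By hypothesis $\mathbf{\Psi} = J^L$, $J^{L-1}$, and $\block_L^L(\mathbf{\Psi})$ are invertible; the determinant identities $\det J^L = \det J^{L-1}\cdot\det S = \det\block_L^L(\mathbf{\Psi})\cdot\det\!\big(J^{L-1} - O\,\block_L^L(\mathbf{\Psi})^{-1}R\big)$ then force both Schur complements appearing in \eqref{eq:recursiveInverseBlock} to be invertible, so every inverse written there exists. The crucial structural observation is that the class of row-and-column-stacked diagonal-block matrices is closed under the operations in \eqref{eq:recursiveInverseBlock}: since $O$, $R$, and $\block_L^L(\mathbf{\Psi})$ are built from $n\times n$ diagonal blocks, the correction $O\,\block_L^L(\mathbf{\Psi})^{-1}R$ has $(i,j)$ block $\block_L^i(\mathbf{\Psi})\,\block_L^L(\mathbf{\Psi})^{-1}\,\block_j^L(\mathbf{\Psi})$, a product of diagonal matrices and hence diagonal. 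Therefore the reduced matrix $J^{L-1} - O\,\block_L^L(\mathbf{\Psi})^{-1}R$ is again of the same form, and Lemma~\ref{lemma:matrix_inverse} can be applied to it recursively, peeling off one block row and column at a time down to the scalar-diagonal base case.

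Finally I would tally the cost. Inverting a single $n\times n$ diagonal block is just entrywise reciprocation, $O(1)$ time with $O(n)$ processors; each block product or block sum of diagonal matrices is likewise $O(1)$ with $O(n)$ processors, and the several independent block products assembling a Schur complement across the $L$ block positions parallelize over $O(n^2L^2)$ coordinates. Unrolling the recursion with a balanced (divide-and-conquer) schedule over the $L$ block indices, rather than the naive peel-one sequential order, collapses the depth to $O(\log L)$ levels, each contributing $O(\log n)$ for the intra-block reductions, giving the advertised $O(\log L + \log n)$ time with $O(n^2L^2/(\log n + \log L))$ processors.

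I expect the main obstacle to be precisely this complexity bookkeeping: the literal recursion in \eqref{eq:recursiveInverseBlock} is depth-$L$ sequential, so the claimed logarithmic depth requires reorganizing the elimination into a parallel (e.g.\ cyclic-reduction-style) scheme and verifying that the diagonal-block invariant is preserved under that reorganization. Establishing \eqref{eq:recursiveInverseBlock} itself is a direct invocation of the textbook Schur-complement identity, so the only genuinely nontrivial point is confirming that the structural invariant survives the parallel reschedule and that the intermediate Schur complements stay invertible at every level of the balanced recursion.
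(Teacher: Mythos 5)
Your proposal is correct and follows essentially the same route as the paper: the paper's entire argument for Eq.~\eqref{eq:recursiveInverseBlock} is an invocation of the block matrix inversion (Schur-complement) theorem of \cite{golub2012matrix}, combined with exactly the observation you make explicit, namely that the correction $O\,\block_L^L(\mathbf{\Psi})^{-1}R$ has diagonal blocks so the reduced matrix stays in the row-and-column-stacked diagonal class and the recursion bottoms out at $O(1)$-time, $O(n)$-processor diagonal inversions. Your flagged obstacle about parallel depth is well placed, since the paper asserts $O(\log n + \log L)$ total time after describing an $L$-step sequential recursion with $O(\log nL)$ cost per step (an accounting that is not arithmetically reconciled there), so your LDU derivation, Schur-complement invertibility check, and closure argument are all strictly more careful than what the paper records.
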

	 This again requires inverting $R$, $O$ and $J^{L-1}$. Recursively applying these block matrix inversion theorem, the inversion problem is reduced to  inverting $L^2$ number of $n$ by $n$ diagonal matrices with additional matrix multiplications as indicated in equation~\eqref{eq:recursiveInverseBlock}.

Inverting a diagonal matrix results in another diagonal one, and the complexity of inverting $n\times n$ diagonal matrix is $O(1)$ with $O(n)$ processors. We can simultaneous invert all blocks. Therefore with $O(nL^2)$ processors, we invert all the diagonal matrices in $O(1)$ time. The recursion takes $L$ steps, for step $i \in[L]$ matrix multiplication  cost is O($\log nL$) with $O(n^2 L/\log (nL))$ processors. With $L$ iteration, one achieves $O(\log n + \log L)$ running time with $O(n^2L^2/(\log L + \log n))$ processors.

\chapter{Appendix for Latent Tree Learning via Hierarchical Tensor Method}
\section{Additivity of the Multivariate Information Distance}
\label{apdx:additive}
Recall that the additive information distance between nodes two categorical variables $x_i$ and $x_j$ was defined in \cite{choi2011learning}. We extend the notation of information distance to high dimensional variables via Definition~\ref{def:info_dist} and present the proof of its additivity in Lemma~\ref{lem:additive} here.
\begin{proof}
\[
\mathbb{E}[x_a x_c^\top]= \mathbb{E}[\mathbb{E}[x_a x_c^\top | x_b]] = A \mathbb{E}[x_b x_b^\top] B^\top
\]
Consider three nodes $a, b, c$ such that there are edges between $a$ and $b$, and $b$ and $c$. Let the $A = \mathbb{E}(x_a | x_b)$ and $B=\mathbb{E}(x_c|x_b)$. From Definition~\ref{def:info_dist}, we have, assuming that $\mathbb{E}(x_a x_a^\top)$, $\mathbb{E}(x_b x_b^\top)$ and $\mathbb{E}(x_c x_c^\top)$ are full rank.
\begin{align*}
\dist(v_a,v_c) & = -\log \frac{  \prod\limits_{i=1}^{k}\sigma_i(\mathbb{E}(x_a x_c^\top))    }{    \sqrt{ \det (\mathbb{E}(x_a x_a^\top ) )  \det( \mathbb{E}(x_cx_c^\top) )}  }\\
e^{-\dist(v_a,v_c)} & = \det\left(  \mathbb{E}(x_ax_a^\top)^{-1/2} U^\top \mathbb{E}(x_a x_c^\top)  V \mathbb{E}(x_c x_c^\top)^{-1/2}\right)
\end{align*}
where $k$-SVD$((\mathbb{E}(x_ax_c^\top))= U \Sigma V^\top)$.
Similarly,
\begin{align*}
e^{-\dist(v_a,v_b)} & = \det\left(  \mathbb{E}(x_ax_a^\top)^{-1/2} U^\top \mathbb{E}(x_a x_b^\top)  W \mathbb{E}(x_b x_b^\top)^{-1/2}\right)\\
e^{-\dist(v_b,v_c)} & = \det\left(  \mathbb{E}(x_bx_b^\top)^{-1/2} W^\top \mathbb{E}(x_b x_c^\top)  V \mathbb{E}(x_c x_c^\top)^{-1/2}\right)
\end{align*} where $k$-SVD$((\mathbb{E}(x_ax_b^\top))= U \Sigma W^\top)$ and $k$-SVD$((\mathbb{E}(x_bx_c^\top))= W \Sigma V^\top)$.

Therefore,
\begin{align*}
e^{-(\dist(a,b)+\dist(b,c))} & = \det(  \mathbb{E}(x_ax_a^\top)^{-1/2}  U^\top \mathbb{E}(x_a x_b^\top)   \mathbb{E}(x_b x_b^\top)^{-1/2 -1/2} \mathbb{E}(x_b x_c^\top) V   \mathbb{E}(x_c x_c^\top)^{-1/2}     )\\
& = \det (\mathbb{E}(x_a x_a^\top)^{-1/2}  U^\top A \mathbb{E}(x_b x_b^\top) B^\top V \mathbb{E}(x_c x_c^\top)^{-1/2} ) = e^{-\dist(v_a,v_c)} 
\end{align*}
We conclude that the multivariate information distance is additive. Note that $\mathbb{E}\left[x_a x_b^\top\right] = \mathbb{E}\left(\mathbb{E}\left( x_a x_b^\top \lvert x_b\right)\right)=\mathbb{E}\left(A x_b x_b^\top\right)= A \mathbb{E}(x_b x_b^\top)$.
\end{proof}
We note that when the second moments are not full rank, the above distance can be extended as follows:
\[
\dist(v_a,v_c) = -\log \frac{  \prod\limits_{i=1}^{k}\sigma_i(\mathbb{E}(x_a x_c^\top))    }{    \sqrt{  \prod\limits_{i=1}^{k}\sigma_i(\mathbb{E}(x_a x_a^\top))    \prod\limits_{i=1}^{k}\sigma_i(\mathbb{E}(x_c x_c^\top))     }  }.
\]

\section{Local Recursive Grouping}
\label{apdx:plrg}
The Local Recursive Grouping (LRG) algorithm is a local divide and conquer procedure for learning the structure and parameter of the latent tree (Algorithm~\ref{algo:plrg}).
We perform recursive grouping simultaneously on the sub-trees of the MST. 
Each of the sub-tree consists of an internal node and its neighborhood nodes. 
We keep track of the internal nodes of the MST, and their neighbors. 
The resultant latent sub-trees after LRG can be merged easily to recover the final latent tree. 
Consider a pair of neighboring sub-trees in the MST. 
They have two common nodes (the internal nodes) which are neighbors on MST. 
Firstly we identify the path from one internal node to the other in the trees to be merged, then compute the multivariate information distances between the internal nodes and the introduced hidden nodes. 
We recover the path between the two internal nodes in the merged tree by inserting the hidden nodes closely to their surrogate node. Secondly, we merge all the leaves which are not in this path by attaching them to their parent. 
Hence, the recursive grouping can be done in parallel and we can recover the latent tree structure via this merging method.

\begin{lemma}\label{lem:surrogate}
If an observable node $v_j$ is the surrogate node of a hidden node $h_i$, then the hidden node $h_i$ can be discovered using $v_j$ and the neighbors of $v_j$ in the MST.
\end{lemma}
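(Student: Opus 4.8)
The plan is to reduce the lemma to a single structural statement about how surrogacy interacts with the MST, and then to invoke the correctness of Local Recursive Grouping (LRG) on one group. Recall that the surrogate of a node is the observed node closest to it in information distance, $\text{Sg}(h_i) := \arg\min_{v\in\mathcal{X}} \dist(v,h_i)$ (an observed node being its own surrogate), and that by Lemma~\ref{lem:additive} the multivariate information distances are additive along paths of the latent tree $\mathcal{T}$. Since the group on which LRG is run for $v_j$ is exactly the closed MST-neighborhood $\Nb[v_j;\text{MST}]$, it suffices to show that every latent-tree neighbor of $h_i$ contributes an observed ``witness'' lying inside $\Nb[v_j;\text{MST}]$, after which the recursive-grouping sibling/parent test based on $\Phi(v_a,v_b;v_c) = \dist(v_a,v_c) - \dist(v_b,v_c)$ can detect $h_i$ and localize it correctly.

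First I would establish the key claim: if $v_j = \text{Sg}(h_i)$, then for every neighbor $u$ of $h_i$ in $\mathcal{T}$, the surrogate $\text{Sg}(u)$ (or $u$ itself, if $u$ is observed) belongs to $\Nb[v_j;\text{MST}]$. The argument is an exchange argument on the MST. Suppose, for contradiction, that $v_k := \text{Sg}(u)$ is not adjacent to $v_j$, so the MST path between $v_j$ and $v_k$ contains at least two edges. Using additivity I would bound $\dist(v_j,v_k)$ from above by the length of the $\mathcal{T}$-path $v_j \rightarrow h_i \rightarrow u \rightarrow v_k$, and combine this with the minimality defining the surrogates $v_j$ and $v_k$ to show that $\dist(v_j,v_k)$ is strictly smaller than the weight of the longest edge on the MST path joining them. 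Replacing that heaviest edge by the edge $(v_j,v_k)$ then yields a spanning tree of strictly smaller total weight, contradicting minimality of the MST. This exchange argument is the combinatorial heart of the proof, and also the main obstacle: one must control the gap between the true latent-tree distances and the realized MST edge weights carefully enough to guarantee the strict inequality, which is where the model's non-degeneracy (additivity together with bounded edge distances, as in Lemma~\ref{lem:MST}) is needed.

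Granting the claim, the discovery of $h_i$ follows from the correctness of recursive grouping applied to the group $\Nb[v_j;\text{MST}]$. Because all observed witnesses of the neighbors of $h_i$ lie within this group, the test quantities $\Phi(v_a,v_b;v_c)$ are computable entirely from distances among group members, and by additivity they take precisely the values that certify these witnesses as having a common hidden ancestor at the location of $h_i$. Hence the LRG step of Procedure~\ref{algo:plrg} introduces a new hidden node at $h_i$'s position and estimates the associated transition matrices via the tensor-decomposition step. Thus $h_i$ is recovered using only $v_j$ and its MST-neighbors, as claimed; this locality is exactly what justifies the divide-and-conquer parallelism of the overall algorithm.
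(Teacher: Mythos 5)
Your high-level reduction is the same one the paper uses: the lemma follows from (i) the surrogate-neighborhood-preservation property — for every latent-tree neighbor $u$ of $h_i$, the witness $\text{Sg}(u)$ lies in $\Nb[v_j;\text{MST}]$ — together with (ii) correctness of recursive grouping run on that closed neighborhood. The difference is that the paper does not prove (i); it recalls it as Property~\ref{prop:MST1} from \cite{choi2011learning} and combines it with the path-consistency Property~\ref{prop:MST2}, whereas you attempt a self-contained proof of (i) by an MST exchange argument. That attempt is where the gap is.

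The step ``combine additivity with the minimality defining the surrogates to show $\dist(v_j,v_k)$ is strictly smaller than the weight of the longest edge on the MST path'' does not go through as stated: surrogate minimality by itself gives no handle on an arbitrary edge of the MST path, and the triangle inequality only yields that the \emph{sum} of the path-edge weights is at least $\dist(v_j,v_k)$, which is useless for an exchange. The missing idea is the \emph{cut} induced by the latent edge: delete $(h_i,u)$ from $\mathcal{T}$, splitting it into components $A\ni h_i$ and $B\ni u$; first show $v_j\in A$ and $v_k\in B$ (this needs its own short argument from additivity, minimality, and $v_j\neq v_k$). The MST path from $v_j$ to $v_k$ must then contain some edge $(v_a,v_b)$ with $v_a\in A$, $v_b\in B$, and only for such a cut-crossing edge does the weight decompose through the latent edge:
\begin{align*}
\dist(v_a,v_b) &= \dist(v_a,h_i)+\dist(h_i,u)+\dist(u,v_b)\\
&\ge \dist(v_j,h_i)+\dist(h_i,u)+\dist(u,v_k) = \dist(v_j,v_k),
\end{align*}
by Lemma~\ref{lem:additive} and the definition of $v_j,v_k$ as minimizers; the inequality is strict because a path with at least two edges cannot contain the edge $(v_j,v_k)$ itself, provided surrogates are unique minimizers. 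Exchanging this crossing edge (not necessarily the heaviest one) for $(v_j,v_k)$ gives the contradiction. Note also that your diagnosis of the obstacle is off target: under exact moments the MST edge weights \emph{are} exact tree distances, so no quantitative control of the kind in Lemma~\ref{lem:MST} is needed — only the cut decomposition plus a genericity (uniqueness of surrogate) assumption. Finally, ``correctness of RG on the group'' requires one more fact you should make explicit: $h_i$ must have degree at least $3$ in the subtree of $\mathcal{T}$ spanned by $\Nb[v_j;\text{MST}]$, which follows from minimality of the latent tree together with the observation (via Property~\ref{prop:MST2}) that distinct neighbors of $h_i$ outside $\text{Sg}^{-1}(v_j)$ have distinct surrogates.
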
  
This is due to the additive property of the multivariate information distance on the tree and the definition of a surrogate node. 
This observation is crucial for a completely local and parallel structure and parameter estimation.  It is also easy to see that all internal nodes in the MST are surrogate nodes. 

After the parallel construction of the MST, we look at all the internal nodes $\mathcal{X}_\text{int}$. For $v_i \in \mathcal{X}_\text{int}$, we denote the neighborhood of $v_i$ on MST as $\Nb\text{sub}(v_i;\text{MST})$ which is a small sub-tree. Note that the number of such sub-trees is equal to the number of internal nodes in MST.

For any pair of sub-trees, $\Nb_{\text{sub}}(v_i;\text{MST})$ and $\Nb_{\text{sub}}(v_j;\text{MST})$, there are two topological relationships, namely overlapping (i.e., when the sub-trees share at least one node in common) and non-overlapping (i.e., when the sub-trees do not share any nodes).

Since we define a neighborhood centered at $v_i$ as only its immediate neighbors and itself on MST, the overlapping neighborhood pair $\Nb_{\text{sub}}(v_i;\text{MST})$ and $\Nb_{\text{sub}}(v_j;\text{MST})$ can only have conflicting paths, namely path$(v_i,v_j; \Adj_i)$ and path$(v_i,v_j;\Adj_j)$, if $v_i$ and $v_j$ are neighbors in MST.

With this in mind, we locally estimate all the latent sub-trees, denoted as $\Adj_i$, by applying Recursive Grouping~\cite{choi2011learning} in a parallel manner on $\Nb\text{sub}(v_i;\text{MST}) ,\ \forall v_i \in \mathcal{X}_\text{int}$. Note that the latent nodes automatically introduced by $\RG(v_i)$ have $v_i$ as their surrogate.  We update the tree structure by joining each level in a bottom-up manner. The testing of the relationship among nodes~\cite{choi2011learning} uses the additive multivariate information distance metric (Appendix~\ref{apdx:additive}) $\Phi(v_i,v_j;k)=\dist(v_i,v_k)-\dist(v_i,v_k)$ to decide whether the nodes $v_i$ and $v_j$ are parent-child or siblings. If they are siblings, they should be joined by a hidden parent. If they are parent and child, the child node is placed as a lower level node and we add the other node as the single parent node, which is then joined in the next level.

Finally, for each internal edge of MST connecting two internal nodes $v_i$ and $v_j$, we consider merging the latent sub-trees. In the example of two local estimated latent sub-trees in Figure~\ref{Fig:StructureLearning}, we illustrate the complete local merging algorithm that we propose.  

\section{Proof Sketch for Theorem~\ref{theorem:main_LT}}\label{appen:guarantee}
We argue for the correctness of the method under exact moments. The sample complexity follows from the previous works.
In order to clarify the proof ideas, we define the notion of \emph{surrogate node}~\cite{choi2011learning} as follows.
 \begin{definition}
 \label{def:surrogate}
 Surrogate node for hidden node $h_i$ on the latent tree $\mathcal{T}=(\mathcal{V},\mathcal{E})$ is defined as
 $\text{Sg}(h_i;\mathcal{T}):= \arg \min\limits_{v_j\in \mathcal{X}} \dist(v_i,v_j)$.
 \end{definition}

 In other words, the surrogate for a hidden node is an observable node which has the minimum multivariate information distance from the hidden node. See Figure~\ref{Fig:StructureLearning}(a), the surrogate node of $h_1$, $\text{Sg}(h_1;\mathcal{T})$, is $v_3$, $\text{Sg}(h_2;\mathcal{T}) = \text{Sg}(h_3;\mathcal{T}) = v_5$. Note that the notion of the  surrogate node is only required for analysis, and our algorithm does not need to know this information.  
 
The notion of surrogacy allows us to relate the constructed MST (over observed nodes) with the underlying latent tree. It can be easily shown that contracting the hidden nodes to their surrogates on latent tree leads to MST. Local recursive grouping procedure can be viewed as reversing these contractions, and hence, we obtain consistent local sub-trees. 

We now argue the correctness of the structure union procedure, which merges the local sub-trees. In each reconstructed sub-tree $\mathcal{N}_i$, where $v_i$ is the group leader, the discovered hidden nodes $\{h^i\}$ form a  surrogate relationship with $v_i$, i.e. $\text{Sg}(h^i;\tree) = v_i$. Our merging approach maintains these surrogate relationships. For example in Figure~\ref{Fig:StructureLearning}(d1,d2), we have the path $v_3-h_1-v_5$ in $\Adj_3$ and path $v_3-h_3-h_2-v_5$ in $\Adj_5$.
The resulting path is $v_3-h_1-h_3-h_2-v_5$, as seen in Figure~\ref{Fig:StructureLearning}(e). We now argue why this is correct. As discussed before, $\text{Sg}(h_1;\tree)=v_3$ and $\text{Sg}(h_2;\tree)=\text{Sg}(h_3;\tree)=v_5$. When we merge the two subtrees, we want to preserve the paths from the group leaders to the added hidden nodes, and this ensures that the surrogate relationships are preserved in the resulting merged tree. Thus, we obtain a global consistent tree structure by merging the local structures. The correctness of parameter learning comes from the consistency of the tensor decomposition techniques and careful alignments of the hidden labels across different decompositions.
Refer to Appendix~\ref{appen:correctness},~\ref{appen:samplecomp} for proof details and the sample complexity.

 \section{Proof of Correctness for LRG}\label{appen:correctness}

\begin{definition}
A latent tree $\mathcal{T}_{\ge3}$ is defined to be a minimal (or identifiable) latent tree if it satisfies that each latent variable has at least 3 neighbors.
\end{definition}

\begin{definition}
Surrogate node for hidden node $h_i$ in  latent tree $\mathcal{T}=(\mathcal{V},\mathcal{E})$ is defined as
\[
\text{Sg}(h_i;\mathcal{T}):= \arg \min\limits_{v_j\in \mathcal{X}} \dist(v_i,v_j).
\]
\end{definition}

There are some useful observations about the MST in~\cite{choi2011learning} which we recall here. 
\begin{property}[MST $-$ surrogate neighborhood preservation]\label{prop:MST1}
The surrogate nodes of any two neighboring nodes in $\mathcal{E}$ are also neighbors in the MST. I.e.,
\[
(h_i,h_j)\in \mathcal{E} \Rightarrow (\text{Sg}(h_i),\text{Sg}(h_j)) \in \text{MST}.
\]
\end{property}

\begin{property}[MST $-$ surrogate consistency along path]\label{prop:MST2}
If $v_j\in\mathcal{X}$ and $v_h\in \text{Sg}^{-1}(v_j)$, then every node along the path connecting $v_j$ and $v_h$ belongs to the inverse surrogate set $\text{Sg}^{-1}(v_j)$, i.e.,
\[
v_i \in \text{Sg}^{-1}(v_j) ,\ \forall v_i\in \text{Path}(v_j,v_h)
\]
if 
\[
v_h\in \text{Sg}^{-1}(v_j).
\]
\end{property}
 
 The MST properties observed connect the MST over observable nodes with the original latent tree $\mathcal{T}$.  We obtain MST by contracting all the latent nodes to its surrogate node.

 Given that the correctness of CLRG algorithm is proved in~\cite{choi2011learning}, we prove the equivalence between the CLRG and PLRG.
 
 \begin{lemma}
 For any sub-tree pairs $\Nb[v_i;\text{MST}]$ and $\Nb[v_i;\text{MST}]$, there is at most one overlapping edge. The overlapping edge exists if and only if $v_i \in \Nb(v_j;\text{MST})$.
 \end{lemma}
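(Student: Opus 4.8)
The plan is to argue purely combinatorially from the star structure of the closed neighborhoods, using only that the MST is a simple tree. First I would fix the evident typo and read the statement as concerning the pair $\Nb[v_i;\text{MST}]$ and $\Nb[v_j;\text{MST}]$ for two distinct group leaders $v_i \neq v_j$. Recall that each group $\Nb[v_i;\text{MST}]$ is the star centered at $v_i$: since the MST is acyclic, the neighbors of $v_i$ are pairwise non-adjacent, so the subtree induced by $\Nb[v_i;\text{MST}]$ has edge set exactly $\{(v_i,w) : w \in \Nb(v_i;\text{MST})\}$, and every edge of this subtree is incident to $v_i$. Symmetrically, every edge of $\Nb[v_j;\text{MST}]$ is incident to $v_j$.

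The central observation is then that any edge common to the two subtrees must be incident to both $v_i$ and $v_j$. Since an edge has exactly two endpoints and $v_i \neq v_j$, such a common edge must have endpoint set precisely $\{v_i,v_j\}$. In the MST, being a simple graph, there is at most one edge with a given unordered pair of endpoints, which immediately yields the ``at most one overlapping edge'' claim, independently of adjacency.

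For the equivalence, I would observe that the edge with endpoints $\{v_i,v_j\}$ lies in the MST exactly when $v_i$ and $v_j$ are adjacent, i.e. $v_i \in \Nb(v_j;\text{MST})$ (equivalently $v_j \in \Nb(v_i;\text{MST})$). If they are adjacent, that edge lies in $\Nb[v_i;\text{MST}]$ (as $v_j \in \Nb(v_i;\text{MST})$) and in $\Nb[v_j;\text{MST}]$, so it is the unique overlapping edge; if they are not adjacent, no edge is incident to both centers, so the two subtrees share no edge. I would also point out, for clarity, that the two neighborhoods may still share a \emph{vertex} (for instance a common one-hop neighbor $w$ when $v_i$ and $v_j$ are at MST-distance two), but then the corresponding edges $(v_i,w)$ and $(v_j,w)$ are distinct, so vertex overlap does not produce edge overlap.

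There is essentially no hard step here; the only thing to be careful about is not to conflate vertex overlap with edge overlap, and to invoke explicitly that the MST is a simple graph so that an edge is determined by its unordered pair of endpoints. This lemma is precisely what guarantees that when merging two neighboring local sub-trees, the only conflict is confined to the single shared edge (the path between $v_i$ and $v_j$), which is exactly the situation resolved by the structure-union step in Procedure~\ref{algo:pmac}.
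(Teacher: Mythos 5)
Your proof is correct. The paper offers no argument for this lemma at all---its entire ``proof'' is the sentence ``This is easy to see''---and your star-structure reasoning (since the MST is acyclic, the sub-tree on $\Nb[v_i;\text{MST}]$ is the star at $v_i$, so every one of its edges is incident to $v_i$; hence any shared edge must have endpoint set $\{v_i,v_j\}$, of which a simple tree has at most one, and that edge is present iff $v_i$ and $v_j$ are adjacent) is precisely the elementary argument the authors leave implicit. Your closing remark separating vertex overlap from edge overlap is also a worthwhile clarification, since the paper defines ``overlapping'' sub-tree pairs in terms of shared \emph{nodes}, and conflating the two notions is the one place a reader could stumble.
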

 
 This is easy to see. 
 
 \begin{lemma}
 Denote the latent tree recovered from $\Nb[v_i;\text{MST}]$ as $\Adj_i$ and similarly for $\Nb[v_j;\text{MST}]$.  
 The inconsistency, if any, between $\Adj_i$  and  $\Adj_j$ occurs in the overlapping path$(v_i,v_j;\Adj_i)$ in and  path$(v_i,v_j;\Adj_j)$ after LRG implementation on each subtrees. 
 \end{lemma}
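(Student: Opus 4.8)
The plan is to reduce the statement to a structural analysis of how the two locally reconstructed subtrees overlap, and then invoke the correctness of local recursive grouping to show that everything \emph{outside} the shared path is reconstructed identically in both subtrees, so that no inconsistency can arise there.

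First I would pin down the overlap using the preceding lemma. By that lemma, $\Nb[v_i;\text{MST}]$ and $\Nb[v_j;\text{MST}]$ share an edge precisely when $v_i\in\Nb(v_j;\text{MST})$, i.e.\ when $v_i$ and $v_j$ are adjacent on the MST. Since the MST is a tree, two adjacent nodes cannot have a common neighbor (such a node would close a cycle), so the two closed neighborhoods intersect in exactly the observed set $\{v_i,v_j\}$. Hence the observed nodes attached to $v_i$ in $\Nb[v_i;\text{MST}]$ other than $v_j$ are disjoint from those attached to $v_j$ in $\Nb[v_j;\text{MST}]$ other than $v_i$. Next I would decompose each reconstructed subtree: write $\Adj_i$ as the union of $\text{path}(v_i,v_j;\Adj_i)$ together with the branches hanging off $v_i$ and off the surrogate-$v_i$ hidden nodes along that path. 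By the correctness of LRG (inherited from the correctness of CLRG recalled above, which in turn rests on the additivity of the multivariate information distance and the MST surrogate-preservation properties), every hidden node introduced inside $\Nb[v_i;\text{MST}]$ has $v_i$ as its surrogate, and the subtree spanning $\Nb[v_i;\text{MST}]$ is recovered exactly; symmetrically for $\Adj_j$ with surrogate $v_j$.

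Then I would establish branch consistency. The off-path branches of $\Adj_i$ involve observed nodes from $\Nb[v_i;\text{MST}]\setminus\{v_j\}$, which by the disjointness above never appear in $\Adj_j$; moreover their hidden nodes carry surrogate $v_i$, whereas the hidden nodes of the off-path branches of $\Adj_j$ carry surrogate $v_j$. Thus the off-path portions of the two subtrees operate on disjoint sets of observed and hidden nodes and are each recovered exactly, so they cannot disagree. This leaves only the shared endpoints $v_i,v_j$ and the hidden material each subtree inserts between them as a possible site of disagreement. Finally I would localize the inconsistency there: $\Adj_i$ correctly resolves the hidden nodes on the true $v_i$--$v_j$ path whose surrogate is $v_i$ (using surrogate consistency along the path, Property~\ref{prop:MST2}), but it lacks the neighbors of $v_j$ needed to resolve the hidden nodes whose surrogate is $v_j$ and may therefore collapse or mislocate them, and symmetrically for $\Adj_j$. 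Hence $\text{path}(v_i,v_j;\Adj_i)$ and $\text{path}(v_i,v_j;\Adj_j)$ are the unique admissible locus of inconsistency, which is exactly the conflict resolved by the union rule of Procedure~\ref{algo:pmac}.

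The main obstacle I anticipate is making the branch-consistency step fully rigorous. One must verify that the local recursive grouping test, comparing $\Phi(v_a,v_b;v_c)=\dist(v_a,v_c)-\dist(v_b,v_c)$ within the one-hop neighborhood $\Nb[v_i;\text{MST}]$, still recovers every surrogate-$v_i$ hidden node and its attachments correctly even though the tree has been truncated to that neighborhood. This requires showing that restricting the additive information distances to the neighborhood does not perturb the sibling/parent-child decisions for the off-path nodes, which is where the additivity lemma and the surrogate-neighborhood-preservation property (Property~\ref{prop:MST1}) must be deployed with care.
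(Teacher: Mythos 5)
Your proposal is correct, but it is worth noting that the paper never actually writes a proof of this lemma: it is asserted between the ``at most one overlapping edge'' lemma (dismissed with ``this is easy to see'') and the induction argument that establishes $T_{\text{CLRG}}(S_i)=T_{\text{LRG}}(S_i)$, so the localization of conflicts to the two $v_i$--$v_j$ paths is justified only implicitly, by the fact that the subsequent merge rule resolves exactly that conflict. Your route is a direct structural argument: adjacency of $v_i,v_j$ on a tree forces $\Nb[v_i;\text{MST}]\cap\Nb[v_j;\text{MST}]=\{v_i,v_j\}$, hence the only structure both subtrees make claims about is the relation between $v_i$ and $v_j$, i.e.\ the two paths; everything off-path concerns disjoint observed variables and so cannot conflict. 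This buys a self-contained justification for the merge rule of Procedure~\ref{algo:pmac}, whereas the paper's induction buys global correctness of the merged tree in one sweep but leaves this intermediate lemma informal. One refinement to your argument: the obstacle you flag (proving that LRG on the truncated neighborhood recovers the off-path branches exactly) is not actually needed for \emph{this} lemma. Inconsistency requires two conflicting claims about the same nodes, and since hidden nodes are unlabeled and can only be identified across the two subtrees through their positions relative to the shared observed nodes $v_i,v_j$, disjointness of the off-path observed sets already rules out any off-path conflict, whether or not those branches are correct. Exact recovery of the branches (via the surrogate properties and distance additivity) is needed for the overall correctness theorem, which the paper handles in its induction, but the localization claim itself follows from the overlap structure alone.
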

 
 We now prove the correctness of LRG. Let us denote the latent tree resulting from merging a subset of small latent trees as $T_{\text{LRG}}(S)$, where $S$ is the set of center of subtrees that are merged pair-wisely. CLRG algorithm in~\cite{choi2011learning} implements the RG in a serial manner. Let us denote the latent tree learned at iteration $i$ from CLRG is $T_{\text{CLRG}}(S)$, where $S$ is the set of internal nodes visited by CLRG at current iteration . We prove the correctness of LRG by induction on the iterations.
 
 At the initial step $S = \emptyset$:  $T_{\text{CLRG}}= MST$ and $T_{\text{LRG}}=MST$, thus $T_{\text{CLRG}}=T_{\text{LRG}}$.
 
 Now we assume that for the same set $S_{i-1}$, $T_{\text{CLRG}}=T_{\text{LRG}}$ is true for $r=1,\ldots,i-1$. 
 At iteration $r=i$ where CLRG employs RG on the immediate neighborhood of node $v_i$ on $T_{\text{CLRG}}(S_{i-1})$, let us assume that $H_i$ is the set of hidden nodes who are immediate neighbors of $i-1$.   The CLRG algorithm thus considers all the neighbors and implements the RG. We know that the surrogate nodes of every latent node in $H_i$ belong to previously visited nodes $S_{i-1}$. According to Property~\ref{prop:MST1} and~\ref{prop:MST2}, if we contract all the hidden node neighbors to their surrogate nodes, CLRG thus is a RG on neighborhood of $i$ on MST.  
 
 As for our LRG algorithm at this step, $T_{\text{LRG}}(S_i)$ is the merging between $T_{\text{LRG}}(S_{i-1})$and $\Adj_i$.   
 The latent nodes whose surrogate node is $j$ are introduced between the edge $(i-1,i)$.   Now that we know $\Adj_i$ is the RG output from immediate neighborhood of $i$ on MST.
 Therefore, we proved that $T_{\text{CLRG}}(S_i)= T_{\text{LRG}}(S_i)$.

\section{Cross Group Alignment Correction}\label{appen:alignment}
In order to achieve cross group alignments,  tensor decompositions on two cross group triplets have to be computed. The first triplet is formed by three nodes: reference node in group 1, $x_1$, non-reference node in group 1, $x_2$, and reference node in group 2, $x_3$.  The second triplet is formed by three nodes as well: reference node in group 2, $x_3$, non-reference node in group 2, $x_4$ and reference node in group 1, $x_1$. Let us use $h_1$ to denote the parent node in group 1, and $h_2$ the parent node in group 2. 
 
 From $\text{Trip}(x_1,x_2,x_3)$
, we obtain $P(h_1|x_1) = \tilde{A}$,
$P(x_2|h_1) = B$ and 
$P(x_3|h_1)$ $ = $ $P(x_3|h_2) P(h_2|h_1)$ $ =$ $ DE$.
From $\text{Trip}(x_3,x_4,x_1)$, we know
$P(x_3|h_2) = D\Pi$, $P(x_4|h_2) = C\Pi$ and $P(h_2|x_1) = P(h_2|h_1)P(h_1|x_1) = \Pi E\tilde{A}$, where $\Pi$ is a permutation matrix.
We compute $\Pi$ as $\Pi = \sqrt{ (\Pi E\tilde{A})  (\tilde{A})^{\dag}  (DE)^{\dag} (D\Pi)}$ so that $D =(D\Pi) \Pi^{\dag}$ is aligned with group 1. Thus, when all the parameters in the two groups are aligned by permute group 2 parameters using $\Pi$, thus the alignment is completed.

 Similarly, the alignment correction can be done by calculating the permutation matrices while merging different threads.
 
 Overall, we merge the local structures and align the parameters from LRG locla sub-trees using Procedure~\ref{algo:pmac} and~\ref{algo:alignment}.

\section{Computational Complexity}\label{appen:compuComplex}
We recall some notations here: $d$ is the observable node dimension, $k$ is the hidden node dimension ($k \ll d$), $N$ is the number of samples, $p$ is the number of observable nodes, and $z$ is the number of non-zero elements in each sample.

Multivariate information distance estimation involves sparse matrix multiplications to compute the pairwise second moments. Each observable node has a $d \times N$ sample matrix with $z$ non-zeros per column. Computing the product $x_1 x_2^T$ from a single sample for nodes $1$ and $2$ requires $O(z)$ time and there are $N$ such sample pair products leading to $O(Nz)$ time. There are $O(p^2)$ node pairs and hence the degree of parallelism is $O(p^2)$. Next, we perform the $k$-rank SVD of each of these matrices. Each SVD takes $O(d^2 k)$ time using classical methods. Using randomized methods~\cite{gittens2013revisiting}, this can be improved to $O(d+k^3)$. 

Next on, we construct the MST in $O(\log p)$ time per worker with $p^2$ workers. The structure learning can be done in $O(\Gamma^3)$ per sub-tree and the local neighborhood of each node can be processed completely in parallel. We assume that the group sizes $\Gamma$ are constant (the sizes are determined by the degree of nodes in the latent tree and homogeneity of parameters across different edges of the tree. The parameter estimation of each triplet of nodes consists of implicit stochastic updates involving products of $k \times k$ and $d \times k$ matrices. Note that we do not need to consider all possible triplets in groups but each node must be take care by a triplet and hence there are $O(p)$ triplets. This leads to a factor of $O(\Gamma k^3 + \Gamma d k^2)$ time per worker with $p/\Gamma$ degree of parallelism.

At last, the merging step consists of products of $k \times k$ and $d \times k$ matrices for each edge in the latent tree leading to $O(d k^2)$ time per worker with $p/\Gamma$ degree of parallelism.

\section{Sample Complexity}\label{appen:samplecomp}

From~\cite{anandkumar2011spectral}, we recall the number of samples required for the recovery of the tree structure that is consistent with the ground truth (for a precise definition of consistency, refer to Definition 2 of \cite{choi2011learning}).

\begin{lemma}
If
\begin{equation}
N > \frac{    200 k^2 B^2 t       }       {   \left(  \frac{ \gamma_{\min}^2	}  {\gamma_{\max}}       (1-\dist_{\max}) \right)^2} + \frac{7 kM^2 t} {           \frac{ \gamma_{\min}^2	}  {\gamma_{\max}}     (1-\dist_{\max})     },
\end{equation}
then with probability at least $1-\eta$, proposed algorithm returns $\widehat{\mathcal{T}} =\mathcal{T}$, where
$$B:=\max_{x_i,x_j\in \mathcal{X}} \left\{           \sqrt{  \max\{ \lVert \mathbb{E} [\lVert x_i\rVert^2 x_jx_j^\top]\rVert \},  \max\{ \lVert \mathbb{E} [\lVert x_j\rVert^2 x_ix_i^\top]\rVert \}          }        \right\},$$
$$M:= \max_{x_i\in \mathcal{X}} \left\{     \lVert x_i\rVert   \right\},$$
$$t:= \max_{x_i,x_j\in \mathcal{X}} \left\{              	4\ln(4                       	\frac{\mathbb{E}[\lVert x_i\rVert^2 \lVert x_j\rVert^2] -\Tr(\mathbb{E}[x_ix_j^\top]\mathbb{E}[x_jx_i^\top])}{\max\{  \lVert \mathbb{E}[\lVert x_j\rVert^2x_ix_i^\top]\rVert, \lVert \mathbb{E}[\lVert x_i\rVert^2x_jx_j^\top]\rVert \}}			  n/\eta)					   \right\}.$$
$$\gamma_{\min}:= \min\limits_{\{x_1,x_2\}} {\{\sigma\left(\mathbb{E}[x_1x_2^\top] \right)\}}
$$
$$\gamma_{\max}:= \max\limits_{\{x_1,x_2\}} {\{\sigma\left(\mathbb{E}[x_1x_2^\top] \right)\}}
$$
\end{lemma}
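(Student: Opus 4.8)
The plan is to reduce the probabilistic structure-recovery guarantee to a deterministic question about how accurately the empirical information distances must be estimated, and then to control that estimation error by matrix concentration. The starting observation is that every quantity the algorithm uses for structure learning---the pairwise distances $\dist(v_a,v_b)$, the minimum spanning tree, and the sibling/parent tests $\Phi(v_a,v_b;v_c)$ of the local recursive grouping---is a deterministic function of the empirical second-moment matrices $\hat{M}_{ab}:=\frac{1}{N}\sum_{t}x_a^{(t)}(x_b^{(t)})^\top$. By the correctness of LRG established earlier (and the consistency of CLRG in \cite{choi2011learning}), there is a structural threshold $\tau$, determined by the smallest edge distance and the effective depth of $\mathcal{T}$, such that if every estimated distance satisfies $|\widehat{\dist}(v_a,v_b)-\dist(v_a,v_b)|<\tau$ simultaneously over all observed pairs, then the MST and all local grouping tests return the correct answers, and hence $\widehat{\mathcal{T}}=\mathcal{T}$. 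The first step is therefore to make this threshold explicit; it will scale like $\frac{\gamma_{\min}^2}{\gamma_{\max}}(1-\dist_{\max})$, which already matches the denominators appearing in the claimed bound.

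Next I would transfer a target distance accuracy into a target accuracy on the moment matrices themselves. Since $\dist(v_a,v_b)=-\log\big(\prod_{i\le k}\sigma_i(M_{ab})\big/\sqrt{\det M_{aa}\det M_{bb}}\big)$ is a smooth function of the top-$k$ singular values of $M_{ab}$, $M_{aa}$ and $M_{bb}$, Weyl's inequality lets me bound the perturbation of each singular value by the spectral norm $\norm{\hat{M}_{ab}-M_{ab}}$ of the perturbation. Because the distance is a logarithm of a product of $k$ singular values, controlling it to within $\tau$ requires controlling each singular value to a relative accuracy of order $\tau/k$; the lower bound $\gamma_{\min}$ keeps the logarithm Lipschitz and produces the normalization $\gamma_{\min}^2/\gamma_{\max}$, while the accumulation over $k$ factors is the source of the $k$ (respectively $k^2$) multipliers in the two terms of the final bound.

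The third step is the concentration argument itself. Each $\hat{M}_{ab}$ is an average of $N$ i.i.d.\ bounded matrices $x_a^{(t)}(x_b^{(t)})^\top$, so a matrix Bernstein inequality applies: the variance proxy is governed by $\norm{\mathbb{E}[\norm{x_a}^2 x_b x_b^\top]}$ and its transpose, i.e.\ by $B^2$, while the almost-sure norm bound is governed by $M^2$. Setting the target accuracy to $\epsilon\sim\frac{1}{k}\frac{\gamma_{\min}^2}{\gamma_{\max}}(1-\dist_{\max})$ obtained in the previous step and inverting the Bernstein tail gives a sample size of the form $(\text{variance})/\epsilon^2$ plus $(\text{norm bound})/\epsilon$; this reproduces the two summands with their $k^2 B^2$ and $kM^2$ factors, and the explicit logarithmic term $t$ is exactly what drives the per-pair failure probability down and absorbs the union bound over all $O(p^2)$ pairs so that the total failure probability is at most $\eta$. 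Collecting the constants then yields the stated expression.

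The main obstacle I anticipate is the middle step: turning a uniform spectral-norm bound on the moment-matrix errors into a uniform additive bound on the information distances with explicit constants. The distance is defined through a ratio of products of singular values inside a logarithm, so one must simultaneously (i) lower-bound the relevant singular values by $\gamma_{\min}$ to keep the logarithm from blowing up, (ii) control the accumulation of $k$ relative errors in the product, and (iii) handle the normalizing determinant terms from $M_{aa}$ and $M_{bb}$ consistently, all while tracking constants carefully enough to land on the precise coefficients $200$ and $7$. Once this deterministic sensitivity analysis is in place, the concentration and union-bound steps are routine.
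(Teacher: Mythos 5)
The first thing to observe is that the thesis itself contains no proof of this lemma: it is explicitly \emph{recalled} from \cite{anandkumar2011spectral} (with consistency in the sense of Definition 2 of \cite{choi2011learning}), so there is no in-paper argument to compare yours against. Your outline is, in structure, a faithful reconstruction of how that cited line of work proceeds: (i) a deterministic reduction stating that if all empirical multivariate information distances lie uniformly within a threshold $\tau$ of their population values, then the MST and the recursive-grouping tests $\Phi$ all answer correctly and hence $\widehat{\mathcal{T}}=\mathcal{T}$; (ii) a perturbation analysis (Weyl's inequality plus Lipschitzness of $\log$ on a spectrum bounded below by $\gamma_{\min}$) converting spectral-norm error of the empirical second moments into distance error; (iii) matrix Bernstein concentration for the empirical moments, with variance proxy governed by $B^2$, almost-sure bound governed by $M^2$, and the union bound over pairs absorbed into the logarithmic factor $t$.

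That said, as a proof your proposal has a genuine gap, which you yourself flag: step (ii), the deterministic sensitivity analysis, is never executed, and it is precisely the step that produces the stated bound. Without it you have not exhibited the recovery threshold $\tau$ in terms of tree quantities (this is where $1-\dist_{\max}$ enters, and where one must argue that the sibling/parent tests of local recursive grouping tolerate a uniform distance error $\tau$), and you cannot derive the normalization $\gamma_{\min}^2/\gamma_{\max}$, the $k$ versus $k^2$ multipliers, or the constants $200$ and $7$. Two technical points inside that step also need care. First, the distance is $-\log$ of a ratio, so singular-value errors in the cross-moment numerator and in the determinant terms from $\mathbb{E}[x_ax_a^\top]$ and $\mathbb{E}[x_bx_b^\top]$ enter with opposite signs and must be controlled jointly; a relative accuracy of order $\tau/k$ per singular value is necessary but you must check the auto-moment terms do not degrade it. Second, your union bound over $O(p^2)$ pairs has to be reconciled with the stated definition of $t$, which carries only $n/\eta$ (not $n^2/\eta$) inside the logarithm; the prefactor $4$ is what absorbs the squaring, and that accounting should be made explicit. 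As it stands, your write-up is an accurate plan that matches the cited literature rather than a self-contained proof of the lemma.
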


From~\cite{anandkumar2012two}, we recall the sample complexity for the faithful recovery of parameters via tensor decomposition methods.

We define $\epsilon_P$ to be the noise raised between empirical estimation of the second order moments and exact second order moments, and $\epsilon_T$ to be the noise raised between empirical estimation of the third order moments and the exact third order moments.
\begin{lemma}
Consider positive constants $C$, $C'$, $c$ and $c'$, the following holds. If
\begin{align*}
 \epsilon_P &\le c \frac{\frac{\lambda_k}{\lambda_1}}{k}, \quad \quad \epsilon_T\le c' \frac{\lambda_k \sigma_k^{3/2}}{k}\\
 N &\ge C\left(\log(k) + \log\left(\log\left(	\frac{\lambda_1 \sigma_k^{3/2}}{\epsilon_T}   +\frac{1}{\epsilon_P}		\right)\right)\right)\\
 L&\ge \poly(k)\log(1/\delta),
 \end{align*} then with probability at least $1-\delta$, tensor decomposition returns $(\widehat{v_i},\lambda_i): i\in[k]$ satisfying, after appropriate reordering,

 \begin{align*}
 \lVert \widehat{v_i} - v_i\rVert_2 &\le C' \left( \frac{1}{\lambda_i} \frac{1}{\sigma_k^2} \epsilon^T  + \left(\frac{\lambda_1}{\lambda_i}\frac{1}{\sqrt{\sigma_k}}+1\right)\epsilon_P		\right)\\
 \lvert \widehat{\lambda_i} - \lambda_i\rvert &\le C' \left(\frac{1}{\sigma_k^{3/2}}\epsilon_T+\lambda_1\epsilon_P\right)
 \end{align*}
for all $i\in[k]$.
\end{lemma}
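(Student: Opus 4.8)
The statement is the verbatim robust recovery guarantee for orthogonal tensor decomposition recalled from \cite{anandkumar2012two}, so the plan is to reduce it to the established whitening-plus-robust-tensor-power-method pipeline rather than re-derive everything from scratch. Parameter recovery proceeds in three stages: (i) whitening the empirical second moment $\widehat{M}_2$ to produce a whitening matrix $\widehat{W}$ with $\widehat{W}^\top \widehat{M}_2 \widehat{W} = I$; (ii) forming the whitened third-moment tensor $\widehat{\tilde T} = \widehat{M}_3(\widehat W,\widehat W,\widehat W)$, which in the noiseless limit is a symmetric orthogonally decomposable tensor $\tilde T = \sum_i \tilde\lambda_i \tilde v_i^{\otimes 3}$ with orthonormal $\tilde v_i$; and (iii) running the robust tensor power method with deflation on $\widehat{\tilde T}$ and un-whitening the recovered components. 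The quantities $\epsilon_P$ and $\epsilon_T$ are exactly the operator-norm perturbations $\|\widehat M_2 - M_2\|$ and $\|\widehat M_3 - M_3\|$, and $\sigma_k$ is the $k$-th (smallest nonzero) singular value of $M_2$ controlling the conditioning of the whitening step.

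First I would control the whitening perturbation. Since $W$ is built from the top-$k$ SVD of $M_2$, classical matrix perturbation (Weyl for singular values, Davis--Kahan for singular subspaces) bounds $\|\widehat W - W R\|$ for some orthogonal $R$ by a quantity of order $\epsilon_P/\sigma_k^{3/2}$, because $W$ itself scales like $\sigma_k^{-1/2}$. The side condition $\epsilon_P \le c(\lambda_k/\lambda_1)/k$ is precisely what keeps $\widehat M_2$ strictly rank-$k$-dominant so that this subspace bound is valid and the whitened eigenvalues stay bounded away from zero.

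Next I would bound the whitened-tensor perturbation $\|\widehat{\tilde T} - \tilde T\|$, which has two sources: the direct third-moment error, contributing a term of order $\epsilon_T\, \sigma_k^{-3/2}$ (each of the three $\widehat W$ factors scales like $\sigma_k^{-1/2}$), and the indirect error from the imperfect whitening matrix applied to the true tensor, contributing a term of order $\lambda_1 \epsilon_P\, \sigma_k^{-1/2}$ (the tensor magnitude is governed by $\lambda_1$). Combining, the net whitened perturbation $\tilde\epsilon$ falls below the threshold required by the robust tensor power method exactly when both side conditions on $\epsilon_P,\epsilon_T$ hold. I would then invoke the robust power-method guarantee: with $N = O(\log k + \log\log(\cdots))$ iterations per restart and $L = \poly(k)\log(1/\delta)$ random restarts, with probability at least $1-\delta$ the algorithm returns, after reordering, estimates $\widehat{\tilde v}_i,\widehat{\tilde\lambda}_i$ with $\|\widehat{\tilde v}_i - \tilde v_i\| = O(\tilde\epsilon)$ and $|\widehat{\tilde\lambda}_i - \tilde\lambda_i| = O(\tilde\epsilon)$; this is the stage that produces the $N$, $L$, and $\delta$ dependencies in the hypotheses. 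Finally, un-whitening via $\mu_i \propto (\widehat W^\top)^\dagger \widehat{\tilde v}_i$ and propagating both the power-method error and the whitening error through the pseudo-inverse (which injects another $\sigma_k^{-1/2}$ and a factor $1/\lambda_i$ from renormalizing by the recovered eigenvalue) yields the stated bounds, with the $\frac{1}{\lambda_i\sigma_k^2}\epsilon_T$ and $(\frac{\lambda_1}{\lambda_i\sqrt{\sigma_k}}+1)\epsilon_P$ terms.

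The main obstacle is the careful bookkeeping of how $\sigma_k$ propagates through each stage so that the final exponents match exactly: the whitening contributes $\sigma_k^{-1/2}$ per factor, un-whitening contributes the compensating inverse powers, and the two perturbation sources must be disentangled so that $\epsilon_T$ and $\epsilon_P$ appear with their correct coefficients rather than being crudely lumped into a single $\tilde\epsilon$. A secondary subtlety is verifying that the threshold conditions on $\epsilon_P$ and $\epsilon_T$ are simultaneously strong enough to (a) preserve the rank-$k$ structure of $\widehat M_2$, (b) keep $\tilde\epsilon$ within the basin guaranteed for the robust power method, and (c) ensure the recovered $\widehat{\tilde\lambda}_i$ stay bounded away from zero so that the un-whitening normalization is well-conditioned.
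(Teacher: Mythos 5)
The paper does not actually prove this lemma: it is recalled verbatim from \cite{anandkumar2012two}, whose proof proceeds exactly by the whitening / robust-tensor-power-method / un-whitening pipeline you describe. Your sketch correctly reproduces the structure of that argument (perturbation of the whitening step via Weyl and Davis--Kahan, the two sources of whitened-tensor error in $\epsilon_P$ and $\epsilon_T$, the power-method guarantee supplying the $N$, $L$, and $\delta$ dependencies, and the un-whitening propagation of $\sigma_k$ factors), so it is essentially the same approach as the cited source.
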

We note that $\sigma_1\ge \sigma_2\ge \ldots \sigma_k >0$ are the non-zero singular values of the second order moments, 
$\lambda_1\ge\lambda_2\ge \ldots \ge \lambda_k>0$ are the ground-truth eigenvalues of the third order moments, and $v_i$ are the corresponding eigenvectors for all $i\in[k]$.

\section{Efficient SVD Using Sparsity and Dimensionality Reduction}\label{apdx:svd}
Without loss of generality, we assume that a matrix whose SVD we aim to compute has no row or column which is fully zeros, since, if it does have zero entries, such row and columns can be dropped.

Let $A \in \mathbb{R}^{n\times n}$ be the  matrix to do SVD. Let $\Phi \in R^{d \times \tilde{k}}$, where $\tilde{k} = \alpha k$ with $\alpha$ is a scalar, usually, in the range $[2,3]$. For the $i^{th}$ row of $\Phi$, if $\sum_i |\Phi|(i,:)\neq 0$ and $\sum_i |\Phi|(:,i)\neq 0$, then there is only one non-zero entry and that entry is uniformly chosen from $[\tilde{k}]$. If either $\sum_i |\Phi|(i,:)= 0$ or $\sum_i |\Phi|(:,i)=0$, we leave that row blank. Let $D\in \mathbb{R}^{d \times d}$ be a diagonal matrix with iid Rademacher entries, i.e., each non-zero entry is $1$ or $-1$ with probability $\frac{1}{2}$. Now, our embedding matrix~\cite{clarkson2013low} is $S = D \Phi $, i.e., we find $AS$ and then proceed with the Nystrom~\cite{DBLP:journals/corr/HuangNHVA13} method. Unlike the usual Nystrom method~\cite{DBLP:journals/corr/abs-1303-1849} which uses a random matrix for computing the embedding, we improve upon this by using a sparse matrix for the embedding since the sparsity improves the running time and the memory requirements of the algorithm.

\chapter{Appendix for Spatial Point Process Mixture model Learning}
\section{Morphological Basis Extraction}\label{sec:cellExtraction}
We aim to characterize the morphological basis for all cells with different size, orientation, expression profiles and spatial distribution. The traditional sparse coding introduces too many free parameters and is not suitable for compact morphological basis learning. We instead propose Gaussian prior convolutional  sparse coding (GPCSC).  The intuition for using convolution is due to the frequent replication of cells of similar shapes and the translation invariance property. 
Traditional sparse coding would learn both the shape of the cell and the location of the cell. But the convolutional sparse coding would only learn the shape here. We  characterize cell spatial distribution via decoding the sparse activation map. 

To formulate the problem formally: let $\image$ be the image observed, then the convolutional sparse coding model generates observed image $\image$ using filters (resembling cell shapes)$\filter$ superposed at locations indicated by the activation map $\map$ (whose sparsity pattern indicates cell spatial distribution and activation amplitude indicates gene expression profiles. )

Our goals of segmenting cells, extracting cell basis, and estimating gene profiles and cell locations are reduced to this optimization learning problem: 
\begin{align}\label{eq:cSparseCode}
&\min\limits_{\filter_m, \map_m^{n}}   \left\| \sum\limits_{n} \image^{n} - \sum\limits_{m=1}^{k}  \filter_m \star \map_m^{n} \right\|_\fro^2 + \sum\limits_{n} \sum\limits_{m}\lambda \left\| \map_m^{n}\right\|_0, \nonumber \\
&\text{s.t.  } \filter_m(x,y)\ge 0,  \left\|\filter_m\right\|_\filter^2 = 1, \map_m^{(n)}(x,y)\ge 0. 
\end{align}
where $\image^{n}$ is the $n^\tha$ image associated with the gene we are interested in with $D_x\times D_y$ pixels, i.e., $\image^{n}\in \mathbb{R}^{D\times D}$. 
		
We call the $F_m \in \mathbb{R}^{d\times d}$ filter, where $d$ is set to capture the local cell morphological information. The spatial coefficient for image $\image^n$ is denoted as $H_m^{(n)}\in \mathbb{R}^{(D-d+1) \times (D-d+1)}$ which represents the position of the filter $F_m $ being active on  image $\image^n$. More precisely, if $H_m^{n}(x,y) = 1$, then $F_m$ is active at $\image^{n}(x:x+d-1, y:y+d-1) $.

\subsection{Gaussian Prior Convolutional Sparse Coding}
The popular alternating approach between matching pursuit to learn activation map $\map$ and k-SVD to learn $\filter$  is general applicable to any object detection problem in image processing. However, this approach causes inexact cell number estimation as filters with multi-modality (i.e., multiple cells) are learnt.  We resolve this issue by proposing an Gaussian probability density function prior on the filters to guarantee single cell detection and achieve accurate cell number estimation. The support of $\map$ is also limited to the local maxima indicating cell centers. Note that our cell are not donut shaped, and it is reasonable to assume the darkest point being the cell center. 

Therefore, we optimize over the  objective 
$\min \left\| \sum_{n} \image^{n} - \sum_{m}  \filter_m\star \map_m^{n}\right\|_2^2+ \sum_{n} \sum_{m}\lambda \left\| \map_m^{n}\right\|_0$ such that $\filter_m$ are $2-D$ Gaussian densities with priori set top 2 principal radius and orientation. 
Alternating Minimization is used to solving the optimization problem. If we define the residual as $\sum_n\image^n -  \sum_n\sum_{m} \widehat{\filter}_m\star \widehat{\map}_m^{n}$, the gradient of the objective reduced to an iterative approach of updating filters, compute residual, optimizing activation map based on residual, compute residual and updating filters again. 
	It is easy to see that both $\frac{\partial L}{\partial F_m}(i,j)$ and $\frac{\partial L}{\partial H_m}(i,j)$ are convolution of the residual and the other variable rotated by angle $\pi$.

\subsection{Image Registration/Alignment}
A structure represents a neuronanatomical region of interest. Structures are grouped into ontologies and organized in a hierarchy or structure graph. 
We are interested in the somatosensory cortex area. So we use the affine transform from Allen Brain Institute~\cite{AMBA, lein2007genome} to align all the in-situ hybridization images with the Atlas brain to extract the correct region. 

\end{document}